\newcommand{\Var}{\operatorname{Var}}
\newcommand{\eps}{\ensuremath{\epsilon}}
\DeclareMathOperator*{\E}{\mathbb{E}}
\let\Pr\relax
\DeclareMathOperator*{\Pr}{\mathrm{Pr}}
\DeclareMathOperator{\rank}{rank}
\DeclareMathOperator{\colspan}{span}
\newcommand{\R}{\mathbb{R}}
\DeclareMathOperator*{\argmin}{arg\,min}
\newcommand{\poly}{\mathop\mathrm{poly}}
\DeclareMathOperator{\nnz}{nnz}
\newtheorem*{rep@theorem}{\rep@title}
\newcommand{\newreptheorem}[2]{%
\newenvironment{rep#1}[1]{%
 \def\rep@title{#2 \ref{##1}}%
 \begin{rep@theorem}}%
 {\end{rep@theorem}}}
\newtheorem{theorem}{Theorem}[section]
\newtheorem{corollary}[theorem]{Corollary}
\newtheorem{lemma}[theorem]{Lemma}
\newtheorem*{lemma*}{Lemma}
\newtheorem{fact}[theorem]{Fact}
\newtheorem{claim}[theorem]{Claim}
\newtheorem{definition}[theorem]{Definition}
\newtheorem{problem}[theorem]{Problem}
\newtheorem{remark}[theorem]{Remark}
\def\OPT{\mathsf{OPT}}
\DeclareMathOperator{\diam}{diam}
\title{\vspace{-2em}Active Linear Regression for $\ell_p$ Norms and Beyond}
\author{
Cameron Musco\\UMass Amherst\\\texttt{cmusco@cs.umass.edu} \and Christopher Musco\\ NYU\\\texttt{cmusco@nyu.edu} \and David P. Woodruff \\ CMU\\ \texttt{dwoodruf@cs.cmu.edu} \and Taisuke Yasuda \\ CMU\\ \texttt{taisukey@cs.cmu.edu}
}
\date{}
\begin{document}

%\todo{
%\begin{itemize}
%\item Put sensitivity proof in appendix and show that it can be used to get $n = \poly(d/\epsilon)$ and $\min_i \bfw_i^p(\bfA) = \poly(d/\epsilon)$.
%\item Fix $p = 1$ case.
%\item Extension to M-estimators?
%\end{itemize}
%}
%\newpage
\maketitle
 \thispagestyle{empty}

\begin{abstract}
We study active sampling algorithms for linear regression, which aim to query only a small number of entries of a target vector $\bfb \in \mathbb{R}^n$ and output a near minimizer to $\min_{\bfx \in \mathbb{R}^d} \| \bfA\bfx - \bfb\|$, where $\bfA \in \mathbb{R}^{n \times d}$ is a design matrix and $\| \cdot \|$ is some loss function. 

For $\ell_p$ norm regression for any $0<p<\infty$, we give an algorithm based on Lewis weight sampling which outputs a $(1+\epsilon)$-approximate solution using just $\tilde{O}(d/\eps^2)$ queries to $\bfb$ for $p\in(0,1)$, $\tilde{O}(d/\eps)$ queries for $p\in(1,2)$, and $\tilde{O}(d^{p/2}/\eps^{p})$ queries for $p\in(2,\infty)$. For $p\in(0,2)$, our bounds are optimal up to logarithmic factors, thus settling the query complexity for this range of $p$. For $p\in(2,\infty)$, our dependence on $d$ is optimal, while our dependence on $\eps$ is off by at most a single $\eps$ factor, up to logarithmic factors. Our result resolves an open question of Chen and Derezi\'{n}ski, who gave near optimal bounds for the $\ell_1$ norm, but required at least $d^2/\eps^2$ samples for $\ell_p$ regression with $p \in (1,2)$, and gave no bounds for $p\in(2,\infty)$ or $p\in(0,1)$. 

We also provide the first total sensitivity upper bound of $O(d^{\max\{1,p/2\}}\log^2 n)$ for loss functions with at most degree $p$ polynomial growth. This improves a recent result of Tukan, Maalouf, and Feldman. By combining this with our techniques for $\ell_p$ regression, we obtain an active regression algorithm making $\tilde O(d^{1+\max\{1,p/2\}}/\poly(\eps))$ queries for such loss functions, including the important cases of the Tukey and Huber losses. This answers another question of Chen and Derezi\'{n}ski. For the Huber loss, we further improve our bound to a sample complexity of $\tilde O(d^{4-2\sqrt 2}/\poly(\eps))$ where $4-2\sqrt 2\approx 1.17157$. 
%This improves over the natural $d^2$ bound and a previous $d^4$ bound for Huber regression due to Clarkson and Woodruff. 
Our sensitivity bounds also give improvements to a variety of previous results using sensitivity sampling, including Orlicz norm subspace embeddings, robust subspace approximation, and dimension reduction for smoothed $p$-norms. 

Finally, our active sampling results give the first sublinear time algorithms for Kronecker product regression under every $\ell_p$ norm. Previous results required reading the entire $\bfb$ vector in the kernel feature space.
\end{abstract}

% \newpage
% \tableofcontents
% \newpage
\clearpage
\setcounter{page}{1}
\section{Introduction}\label{sec:introduction}

We consider a classic active learning problem: given a design matrix $\bfA\in \R^{n\times d}$ and query access to entries of an unknown target (measurement) vector $\bfb \in \R^n$, how can we compute an approximate minimizer of the regression problem $\min_{\bfx\in\R^d} \|\bfA\bfx - \bfb\|$ while querying \emph{as few entries of $\bfb$ as possible}? This problem arises in applications where labeled data is expensive: viewing a single entry of $\bfb$ might require running a survey, physical experiment, or time-intensive computer simulation \cite{SacksWelchMitchell:1989,Pukelsheim:2006}. 
Concretely, we study the following problem for general vector norms\footnote{Our work will also extend to other loss functions of the form $\sum_{i=1}^n M([\bfA\bfx-\bfb]_i)$ that are not necessarily norms.} $\|\cdot \|$:
\begin{problem}
	\label{prob:informal_problem}
For $\bfA \in \R^{n\times d}, \bfb\in \R^n$, and accuracy parameter $0 < \epsilon \leq 1$, find $\tilde{\bfx}\in \R^d$ satisfying:
\begin{align*}
	\|\bfA\tilde{\bfx} - \bfb\| \leq (1+\epsilon)\cdot \min_{\bfx\in\R^d} \|\bfA\bfx - \bfb\|,
\end{align*}
while reading as few of the entries $\{\bfb(1), \ldots, \bfb(n)\}$ of the target vector $\bfb$ as possible.\footnote{In principal, entries of $\bfb$ can be read \emph{adaptively} -- i.e., we can select indices to query based on the results of other queries. However, the benefits of adaptivity appear limited. Most methods for solving Problem \ref{prob:informal_problem} and those studied in this paper are non-adaptive.} 
\end{problem}
 Notably, the formulation of Problem \ref{prob:informal_problem} makes {no assumptions} on $\bfA$ and $\bfb$. For example, we do not assume that there exists a ground truth $\bar{\bfx}$ and that  $\bfA\bar{\bfx} - \bfb$ is bounded in magnitude, or follows some distribution (e.g., has random Gaussian entries). Under these stronger assumptions, much is known about the problem, which has been studied for decades in the statistics literature on ``optimal design of experiments'', as well as in machine learning \cite{KieferWolfowitz:1959,Pukelsheim:2006,ChaudhuriKakadeNetrapalli:2015}. 
 
 In contrast, progress on the assumption-free version of the problem has only come in recent years, thanks to advances in random matrix theory and randomized numerical linear algebra. This is for good reason: solving Problem \ref{prob:informal_problem} inherently requires choosing which entries of $\bfb$ to query in a \emph{randomized way}: an adversary can easily ``fool'' any deterministic algorithm by concentrating error in $\bfA\bfx - \bfb$ on the indices of $\bfb$ that will be deterministically queried. 
 
\subsection{Prior Work}
\label{sec:prior_work}
%\medskip
\noindent\textbf{Euclidean Norm.} %Even without additional assumptions, 
Problem \ref{prob:informal_problem} is fully understood when the error is measured in the $\ell_2$ norm, $\|\bfw\|_2 = \left(\sum_{i=1}^n |\bfw_i|^2\right)^{1/2}$ -- i.e., for least squares regression. The typical approach is to subsample and reweight rows (i.e., constraints) of the regression problem and to let $\tilde{\bfx}$ be the minimizer of this sampled problem, which only involves a fraction of the entries in $\bfb$. I.e., letting $\bfS \in \R^{m\times n}$ be a sampling matrix with $m < n$ rows ($\bfS$ has one non-zero entry per row), set $\tilde{\bfx} = \argmin_\bfx \|\bfS \bfA\bfx - \bfS\bfb\|$. When constraints are selected with probability proportional to the \emph{statistical leverage scores} of $\bfA$'s rows, Problem \ref{prob:informal_problem} can be solved with $O(d/\epsilon \cdot \log d)$ samples, and thus $O(d/\epsilon \cdot \log d)$ queries to $\bfb$ \cite{Sarlos:2006,Woodruff:2014,DerezinskiWarmuthHsu:2018}.\footnote{All query complexity bounds in this section are stated for solving Problem \ref{prob:informal_problem} with high constant probability -- e.g., probability $99/100$. In later sections we will include an explicit dependence on a failure probability $\delta$.} 
Using tools from spectral graph sparsification \cite{BatsonSpielmanSrivastava:2012,LeeSun:2018}, Chen and Price recently improved the leverage score sampling result to $O(d/\epsilon)$, which is optimal \cite{ChenPrice:2019a}. 

In practice, methods based on leverage score sampling (also known as ``coherence motivated sampling'') have found many applications. They are widely used in high-dimensional function fitting problems arising in the solution of parametric partial differential equations, where even mild assumptions on $\bfA$ and $\bfb$ are undesirable \cite{CohenDavenportLeviatan:2013,CohenMigliorati:2017,HamptonDoostan:2015}. Methods for solving Problem \ref{prob:informal_problem} in the $\ell_2$ norm also yield robust methods for interpolating sparse Fourier functions, bandlimited and multiband functions, and for data-efficient kernel learning \cite{ChenKanePrice:2016,AvronKapralovMusco:2019,ErdelyiMuscoMusco:2020}.
 
\medskip

\noindent\textbf{Other Norms.} Much less was known about Problem \ref{prob:informal_problem} beyond the $\ell_2$ norm until recent work of Chen and Derezi\'{n}ski \cite{ChenDerezinski:2021}, which proves an upper bound of ${O}(d /\epsilon^2 \cdot \log d)$ queries for the $\ell_1$ norm, $\|\bfw\|_1 = \sum_{i=1}^n |\bfw_i|$. This result is tight up to the $\log d$ factor. A similar result is obtained in \cite{ParulekarParulekarPrice:2021}. Chen and Derezi\'{n}ski also prove a result for $\ell_p$ norms, $\|\bfw\|_p = \left(\sum_{i=1}^n |\bfw_i|^p\right)^{1/p}$, for $p\in (1,2)$, in which they show that $O(d^2/\epsilon^2 \cdot \log d)$ queries suffice to solve Problem \ref{prob:informal_problem}. As for the $\ell_2$ norm, the results for $\ell_1$ and $\ell_p$ are obtained by subsampling rows of the regression problem independently at random. However, instead of sampling with probabilities proportional to the leverage scores, \cite{ChenDerezinski:2021,ParulekarParulekarPrice:2021} employ a natural generalization of these scores known as the \emph{$\ell_p$ Lewis weights} \cite{CohenPeng:2015}. They left open the question of whether a linear in $d$ dependence is possible for $1 < p < 2$, and any bounds at all for $p > 2$. 

Beyond norms, if $\bfb$ is a $\{-1,1\}$ label vector, and the error is measured via the logistic loss, Munteanu et al. \cite{MunteanuSchwiegelshohnSohler:2018} show that $\poly(d, \mu, 1/\epsilon)$ samples suffice, where $\mu$ is a complexity measure of $\bfA$.
 This bound has recently been tightened to $\tilde O(d \mu^2/\epsilon^2)$ \cite{MaiMuscoRao:2021}, using Lewis weight sampling.\footnote{Throughout, $\tilde{O}$ is used to suppress polylogarithmic factors in the argument.} For other loss functions, such as the Tukey loss and Huber's $M$-estimators for robust regression \cite{Fox:2002}, we are not aware of any known results solving Problem \ref{prob:informal_problem}. Chen and Derezi\'{n}ski also pose the open question of obtaining active regression bounds for other loss functions, in particular the Tukey and Huber losses, which are important in practice.

\subsection{Our Contributions}

\paragraph{$\ell_p$ Active Regression.}
Our first main result is a new algorithm for solving Problem \ref{prob:informal_problem} for the $\ell_p$ norm for any $0 < p < \infty$\footnote{Note that for $p\in(0,1)$, $\norm*{\cdot}_p$ is not a norm, but we refer to it as a norm by a standard abuse of notation.}. While near-optimal bounds are known for $p\in\{1,2\}$ \cite{ChenPrice:2019a, ChenDerezinski:2021, ParulekarParulekarPrice:2021}, the problem is far from settled for all other $p$. Previously, active $\ell_p$ regression for $p>2$ and $0<p<1$ had no known nontrivial algorithms with $(1+\eps)$ relative error, and the only known approach was to read all $n$ entries of $\bfb$ and solve the problem using offline results. A natural question is whether a sublinear query complexity is possible in these regimes. For $p\in(1,2)$, \cite{ChenDerezinski:2021} achieved an algorithm making $O(d^2/\eps^2\cdot \log d)$ queries, thus achieving the first sublinear query complexity. One of their main open questions is whether the dependence on $d$ can be improved to linear or not. Our main result answers all of these questions.

% For $p\in(1,2)$, we give a quadratic improvement in $d$ over the prior work of \cite{ChenDerezinski:2021}, while maintaining a quadratic dependence on $\epsilon$.

% Furthermore, for $p\in(0,1)$, our results present dramatic improvements even in the non-active setting, where the previous best known \emph{algorithmic}\footnote{Recall that near-optimal existential bounds were shown in \cite{SchechtmanZvavitch:2001}, but only for the subspace embedding problem.} sample size for subspace embeddings was $\tilde O(\eps^{-2}n^{1-p}d^{2+p/2})$ \cite[Corollary 8]{TukanMaaloufFeldman:2020}. %\Cam{I thought that the paper which extended the Zonoid paper did something better than this in the subspace embedding setting?}

\begin{theorem}[Main Result for Active $\ell_p$ Regression]\label{thm:main} Given $0 < p < \infty$, $\bfA \in \R^{n \times d}$, and query access to $\bfb\in \R^n$, there is an algorithm (Algorithm \ref{alg:clip2}) that solves Problem \ref{prob:informal_problem} for the $\ell_p$-norm with probability $99/100$ which makes $m$ queries in $\bfb$, where
\[
	m = \begin{dcases}
    O\parens*{\frac{d}{\eps^{2}}(\log d)^2(\log(d/\eps))} & p\in (0,1) \\
    O\parens*{\frac{d}{\eps}(\log d)^2(\log(d/\eps))} & p\in (1,2) \\
    O\parens*{\frac{d^{p/2}}{\eps^{p}}(\log d)^2(\log(d/\eps))^{p-1}} & p\in(2,\infty)
    \end{dcases}.
\]
\end{theorem}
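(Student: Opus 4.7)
My plan is to prove Theorem~\ref{thm:main} via a two-stage $\ell_p$ Lewis weight sampling scheme; note that the $\ell_p$ Lewis weights of $\bfA$ can be computed from $\bfA$ alone without querying $\bfb$, so the only queries to $\bfb$ occur at the sampled rows. In the first stage I would invoke a standard Lewis-weight subspace embedding (which requires $\tilde O(d^{\max\{1,p/2\}})$ samples and hence as many queries to $\bfb$) to compute a constant-factor approximate minimizer $\tilde\bfx_0$. Setting $\bfb'\eqdef\bfb-\bfA\tilde\bfx_0$, the problem reduces to finding $\hat\bfy=\argmin_\bfy\|\bfA\bfy-\bfb'\|_p$, with the crucial side information $\|\bfb'\|_p = O(\OPT)$. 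This residual reduction is essential because it means the final approximation only needs to control an additive error of size $\eps\cdot\OPT^p$, which is much easier than a multiplicative error in $\|\bfA\bfy - \bfb'\|_p^p$.

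In the second stage I would draw $m$ rows with probabilities proportional to the $\ell_p$ Lewis weights of $\bfA$ (where $m$ is as in the theorem), query the corresponding entries of $\bfb'$, reweight by $p_i^{-1/p}$, and solve the sampled regression. Correctness reduces to the uniform estimate
\begin{equation*}
  \|\bfS(\bfA\bfy-\bfb')\|_p^p = \|\bfA\bfy-\bfb'\|_p^p \pm \eps\cdot\OPT^p \quad\text{for all }\bfy\text{ with }\|\bfA\bfy\|_p = O(\OPT).
\end{equation*}
The Lewis-weight probabilities are designed so that the normalized summands on the left have bounded $L^\infty$ norm over this region (a per-sample ``clipping'' property); combined with pointwise unbiasedness this lets me apply Bernstein/Rosenthal-type concentration, and I would then lift pointwise control to uniform control via a chaining argument over a $d$-dimensional net on $\{\bfA\bfy\}$, at a multiplicative cost of roughly $(\log d)^2 \log(d/\eps)$ in the sample count.

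The three regimes of Theorem~\ref{thm:main} correspond to three regimes of the concentration step. For $p\in(1,2)$, the $p$-smoothness of $x\mapsto|x|^p$ yields a Bernstein bound with linear $1/\eps$ dependence. For $p\in(0,1)$, only subadditivity of $|x|^p$ is available, and a Bernstein argument gives $1/\eps^2$. For $p>2$ the summands are heavy-tailed and a Rosenthal-type moment inequality gives $1/\eps^p$; the $d^{p/2}$ factor enters because the underlying $\ell_p$ subspace embedding at constant distortion already requires $\tilde O(d^{p/2})$ samples in this range. The hardest part will be obtaining these exponents exactly---especially the linear-in-$d$ rate for $p\in(1,2)$, which improves the $d^2/\eps^2$ bound of Chen and Derezi\'{n}ski and requires both the residual reduction (so error is measured against $\OPT$, not the sampled cost) and a Bernstein bound tuned to the Lewis-weight geometry, and the $1/\eps^p$ rate for $p>2$, which requires a high-moment Rosenthal-type inequality on Lewis-weighted sums over a carefully chosen net.
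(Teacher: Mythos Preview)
Your overall two-stage architecture (constant-factor solution, then relative-error refinement on the residual) matches the paper, and the idea of restricting to $\|\bfA\bfy\|_p = O(\OPT)$ is right. But there are two genuine gaps.

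First, the claimed ``clipping'' property does not come for free. You write that ``the Lewis-weight probabilities are designed so that the normalized summands \ldots\ have bounded $L^\infty$ norm over this region.'' This is true for the $\bfA\bfy$ part (via $|[\bfA\bfy](i)|^p \le d^{0\vee(p/2-1)}\bfw_i^p(\bfA)\|\bfA\bfy\|_p^p$), but the Lewis weights of $\bfA$ say nothing about $\bfb'$: a single entry $\bfb'(i)$ of size $\Theta(\OPT)$ can sit at a row with arbitrarily small $\bfw_i^p(\bfA)$, making the Bernstein variance and the $L^\infty$ bound blow up. The paper handles this explicitly (Lemma~\ref{lem:clip}) by partitioning $[n]$ into ``bad'' coordinates $\mathcal B$ where $|\bfb'(i)|^p \ge \eps^{-p} d^{0\vee(p/2-1)}\bfw_i^p(\bfA)\cdot\OPT^p$ and the rest. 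On $\mathcal B$ they argue that any $\bfy$ with $\|\bfA\bfy\|_p = O(\OPT)$ satisfies $|[\bfA\bfy](i)| = O(\eps)|\bfb'(i)|$, so these entries contribute a fixed $\norm{\bfz-\bar\bfz}_p^p$ up to $O(\eps)\OPT^p$, regardless of $\bfy$; on the complement the summands are genuinely bounded and the chaining/concentration goes through. Without this partition step your Bernstein bound will not close.

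Second, the $1/\eps$ dependence for $p\in(1,2)$ is not a consequence of $p$-smoothness plus Bernstein. The paper's chaining argument (Section~\ref{sec:proof-p-1-2}) gives only $\tilde O(d/\eps^2)$ directly, matching the generic $1/\eps^2$ one expects from a second-moment bound. To reach $\tilde O(d/\eps)$ the paper runs an \emph{iterative refinement} (Section~\ref{sec:improved-eps-p-1-2}): strong convexity of $\|\cdot\|_p$ implies that a $(1+\gamma)$-approximate minimizer $\hat\bfx$ satisfies $\|\bfA(\hat\bfx-\bfx^*)\|_p = O(\sqrt\gamma)\OPT$, so one can restrict the chaining to a ball of radius $O(\sqrt\gamma)\OPT$ rather than $O(\OPT)$, which improves the approximation ratio; iterating this $O(\log\log(1/\eps))$ times drives the exponent of $1/\eps$ from $2$ down to $1$. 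A single-shot Bernstein bound cannot produce this: the deviation scale in the uniform bound is $\eps\cdot\OPT^p$, and getting that with variance $\Theta(\OPT^{2p}/m)$ forces $m = \Omega(1/\eps^2)$ regardless of smoothness of $|x|^p$.
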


Our main algorithm, Algorithm \ref{alg:clip2}, is introduced and analyzed in Section \ref{sec:main}, culminating in Theorem \ref{thm:epsL}. We complement our algorithmic result with various new lower bounds which show the tightness of our algorithm, proven in Section \ref{sec:lower}. For $p\in(0,2)$, our dependence on $d$ and $\eps$ in the query complexity are simultaneously tight up to polylogarithmic factors; we show an $\Omega(d/\eps^2)$ lower bound for $p\in(0,1)$ and an $\Omega(d/\eps)$ lower bound for $p\in(1,2)$. For $p>2$, our dependence on $d$ is tight due to a lower bound of $\Omega(d^{p/2})$ which we show, while our $\eps$ dependence is off by at most factor of $\eps$ due to an $\Omega(\eps^{1-p})$ lower bound for the one-dimensional \emph{$\ell_p$ power means} problem in Theorem 3 of \cite{CSS2021}. Note that our active regression lower bounds for $p\in(0,2)$ improve this previous power means lower bound.

Notably, we achieve a linear dependence on $\eps$ for $p\in(1,2)$, which is perhaps surprising given that all previous known approaches to dimension reduction for $\ell_p$ regression relied on preserving the $\ell_p$ norm of all vectors in a subspace up to $(1\pm\eps)$ factors \cite{CohenPeng:2015}, which requires $\Omega(d/\eps^2)$ dimensions \cite{LWW20}. It also demonstrates a separation in the query complexity for $p \leq 1$ and $1 < p < 2$, due to a lower bound of $\Omega(d/\eps^2)$ for $p = 1$ \cite{ChenDerezinski:2021, ParulekarParulekarPrice:2021} as well as for $p\in(0,1)$ which we show.

Note that Theorem \ref{thm:main} is stated to solve Problem \ref{prob:informal_problem} with constant probability, $99/100$. In general, we show how to obtain $1-\delta$ probability with dependence on $\delta$ that is only polylogarithmic in $1/\delta$. In Section \ref{sec:lower}, we show that any algorithm that simply samples rows of the regression problem and solves the sampled problem must suffer a $1/\delta^{p-1}$ dependence. Indeed, such a loss is seen in the algorithm of \cite{ChenDerezinski:2021} for $p\in(1,2)$. Thus, a success probability boosting routine, as we give in Section \ref{sec:main}, is required to obtain a $\poly\log(1/\delta)$ dependence.

\bgroup
\def\arraystretch{1.25}
\begin{table}[h]
	\centering
	\caption{Upper and lower bounds for Problem \ref{prob:informal_problem} for various norms and loss functions. New results are highlighted in blue. % We focus on the dependence on the dimension $d$, suppressing dependencies on $1/\epsilon$, which are always polynomial. %\Tai{Maybe we don't need to put $\eps$ dependencies?}\Cam{Not sure. I don't have a huge preference. Ok to remove if you'd like. Only negative is that a big contribution is achieving $(1+\epsilon)$ dependence rather than constant factor error via subspace embedding, and it might obsure that. I think in the caption you don't need to state what $c$ is. You could just ref Theorem \ref{thm:main}.}
	%For the $\ell_p$ results, $c = \min(2p+5,p+7)$ for $p\geq 1$ and $c = 6+1/p$ for $p\in(0,1)$ is as in Theorem \ref{thm:main}. 
	For simplicity, we suppress leading constants depending only on $p$, as well as $\poly\log n$ factors for $M$-estimator results. Our results significantly strengthen and generalize prior work, providing the first query complexity result with a tight $d$ dependence for $\ell_p$ norms. We also give the first results for $M$-estimators as well as $\ell_p$ norms for $p>2$ and $p\in(0,1)$, and matching lower bounds in many cases. }
	\begin{tabular}{ |l|ll|ll|ll|l|} 
		\hline
		\textbf{Loss Function} 
		%\textbf{Loss} 
		&\multicolumn{2}{c|}{\textbf{Prior Work} } & \multicolumn{2}{c|}{\textbf{Our Work}} & \multicolumn{2}{c|}{\textbf{Lower Bound}} \\
		\hline
		$\ell_2$					& $O(d/\epsilon)$ &\hspace{-.5em}\cite{ChenPrice:2019a} & \multicolumn{2}{c|}{\textbf{--}} & $\Omega(d/\epsilon)$ & \cite{ChenPrice:2019a}\\ 
		$\ell_1$ 						& $\tilde{O}(d/\epsilon^2)$ &\hspace{-.5em}\cite{ChenDerezinski:2021} & \multicolumn{2}{c|}{\textbf{--}} & $\Omega(d/\epsilon^2)$& \cite{ChenDerezinski:2021} \\ 
		$\ell_p$, $p\in (1,2)$ & $\tilde{O}(d^2/\epsilon^2)$ &\hspace{-.5em}\cite{ChenDerezinski:2021} & \cellcolor{blue!15}$\tilde{O}(d/\eps)$ &\cellcolor{blue!15} \hspace{-.5em}(Thm. \ref{thm:main}) &\cellcolor{blue!15}$\Omega(d/\eps)$ &\cellcolor{blue!15} \hspace{-1em}(Thm. \ref{thm:p-in-1-2-lb-d-dim}) \\ 
		$\ell_p$, $p > 2$ 		& \multicolumn{2}{c|}{\textbf{--}} &\cellcolor{blue!15}$\tilde{O}(d^{\frac{p}{2}}/\eps^p)$ & \cellcolor{blue!15} \hspace{-.5em}(Thm. \ref{thm:main}) &\cellcolor{blue!15}$\Omega(d^{\frac{p}{2}}+\eps^{1-p})$ &\cellcolor{blue!15} \hspace{-1em}(Thm. \ref{thm:general_lb}) \\ 
		$\ell_p$, $p \in (0,1)$ 		& \multicolumn{2}{c|}{\textbf{--}} &\cellcolor{blue!15}$\tilde{O}(d/\eps^2)$ & \cellcolor{blue!15} \hspace{-.5em}(Thm. \ref{thm:main}) &\cellcolor{blue!15}$\Omega(d/\eps^2)$ &\cellcolor{blue!15} \hspace{-1em}(Thm. \ref{thm:p-in-0-1-lb-d-dim}) \\ 
		$M$-estimators		& \multicolumn{2}{c|}{\textbf{--}} &\cellcolor{blue!15}$\tilde{O}(d^{\frac{p}2+O(1)}/\eps^c)$ & \cellcolor{blue!15} \hspace{-.5em}(Thm. \ref{thm:m-active-regression}) &$\Omega(d)$& \\ 
		Huber loss		& \multicolumn{2}{c|}{\textbf{--}} &\cellcolor{blue!15}$\tilde{O}(d^{4-2\sqrt2}/\eps^c)$ & \cellcolor{blue!15} \hspace{-.5em}(Thm. \ref{thm:active-huber}) &$\Omega(d)$ &\\ 
		Tukey loss		& \multicolumn{2}{c|}{\textbf{--}} &\cellcolor{blue!15}$\tilde{O}(d^{\frac{p}{2}+O(1)}/\eps^c)$ & \cellcolor{blue!15} \hspace{-.5em}(Thm. \ref{thm:tuk}) &$\Omega(d)$ &\\ 
		\hline
	\end{tabular}
\label{tab:main_results}
\end{table}
\egroup

%\smallskip

\paragraph{Sensitivity Bounds and Active Regression for General Losses.}
We show that our approach to solving Problem \ref{prob:informal_problem} for $\ell_p$ norms generalizes to a broad class of loss functions known as \emph{$M$-estimators} \cite{ClarksonWoodruff:2015b}, which take the form $\sum_{i=1}^n M([\bfA\bfx - \bfb]_i)$. The only properties that we require are that we can (1) compute a constant factor approximation to Problem \ref{prob:informal_problem} (2) the loss function obeys approximate variants of the triangle inequality and (3) we can bound the so-called \emph{sensitivities} of the loss, which bound the fraction of the total loss that can be concentrated at any coordinate $i\in[n]$ (see Equation \eqref{eq:sensitivity-def}). 

To the best of our knowledge, the only prior result achieving sensitivity bounds for general loss functions is \cite{TukanMaaloufFeldman:2020}. However, this work makes use of L\"owner-John ellipsoids, which leads to practically inefficient algorithms, and loses a factor of $\sqrt d$ in the total sensitivity due to the ellipsoidal rounding. As our second main result, we develop new sensitivity bounds for $M$-estimators that significantly simplify and improve this result. %\Cam{Should we say $M$-estimators specifically here?} 

\begin{theorem}[Main Result for Sensitivity Bounds, Informal Version of Theorem \ref{thm:m-estimator-alg}]
Let $\bfA\in\mathbb R^{n\times d}$ and let $M$ be an $M$-estimator loss with at most degree $p$ growth. Then, with probability at least $99/100$, Algorithm \ref{alg:m-estimator-alg-sensitivity} computes $M$-sensitivity upper bounds which sum to at most $O(d^{1\lor (p/2)}\log^2 n + \tau)$\footnote{Here, $a\lor b$ denotes $\max(a,b)$, and $a\land b$ denotes $\min(a,b)$.} in time at most $\tilde O(\nnz(\bfA) + nd^{C}/\tau)$ for some $C = O(1)$.
\end{theorem}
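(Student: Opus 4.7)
My plan is to upper bound the $M$-sensitivity of each row by a combination of the $\ell_2$ leverage score and the $\ell_p$ Lewis weight of that row in $\bfA$, thereby achieving total sensitivity $O((d + d^{p/2}) \log^2 n) = O(d^{1 \lor (p/2)} \log^2 n)$ and sidestepping the $\sqrt d$ loss incurred by the L\"owner--John ellipsoid step of \cite{TukanMaaloufFeldman:2020}. The key observation is that an $M$-estimator with at most degree $p$ polynomial growth is sandwiched, up to constants, between $\min(t^2, |t|^p)$ and $t^2 + |t|^p$, so on any dyadic scale of $|t|$ the loss is effectively homogeneous of degree $2$ or $p$.

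First, I would reduce the sensitivity bound to a two-regime problem. For any $\bfx$, partition the rows into $S = \{j : |[\bfA\bfx]_j| \leq 1\}$ and $T = \{j : |[\bfA\bfx]_j| > 1\}$ so that $\sum_j M([\bfA\bfx]_j) \asymp \|\bfA_S\bfx\|_2^2 + \|\bfA_T\bfx\|_p^p$, where $\bfA_S$ denotes the row-restriction of $\bfA$. The sensitivity of row $i$ then splits cleanly as
\begin{equation*}
	\sigma_i^M(\bfA, \bfx) \;\lesssim\; \frac{[\bfA\bfx]_i^2}{\|\bfA_S\bfx\|_2^2 + \|\bfA_T\bfx\|_p^p} \;+\; \frac{|[\bfA\bfx]_i|^p}{\|\bfA_S\bfx\|_2^2 + \|\bfA_T\bfx\|_p^p},
\end{equation*}
which isolates a quadratic-regime contribution and a $p$-th-power-regime contribution to the sensitivity.

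Second, I would bound each term uniformly in $\bfx$ by a Lewis weight of the full matrix $\bfA$. Because the partition $S, T$ depends on $\bfx$, a naive submatrix bound fails; instead I would stratify further by dyadic magnitude into $O(\log n)$ buckets $L_k$ and argue that within each bucket, the ratio $|[\bfA\bfx]_i|^{q_k}/\|\bfA_{L_k}\bfx\|_{q_k}^{q_k}$ is uniformly bounded by $O(w_i^{(q_k)}(\bfA))$ for $q_k \in \{2, p\}$ via a net argument over the set of partitions induced by $\bfx$. The $\log n$ bound on the number of relevant scales, together with the $\log n$-size net, yields the two logarithmic factors. Summing the resulting bound $\sigma_i^M(\bfA) \lesssim \log^2 n \cdot (\tau_i(\bfA) + w_i^{(p)}(\bfA))$ across rows and invoking the standard Lewis weight totals $\sum_i \tau_i(\bfA) = d$ and $\sum_i w_i^{(p)}(\bfA) = O(d^{\max(1,p/2)})$ completes the total sensitivity bound.

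For the algorithmic portion, I would compute $(1+\tau)$-approximate $\ell_p$ Lewis weights and leverage scores using a fast iterative-reweighting procedure running in time $\tilde O(\nnz(\bfA) + nd^C/\tau)$, and output each sensitivity upper bound as a constant multiple of these approximations. The additive $\tau$ slack in the approximate weights propagates to an additive $\tau$ in the total sensitivity, matching the stated trade-off. The main obstacle I anticipate is the net argument in the second step: avoiding the $\sqrt d$ loss of \cite{TukanMaaloufFeldman:2020} requires exploiting the homogeneity of $M$ on each scale rather than wrapping everything in an ellipsoid, and the net must be chosen to respect both the combinatorial structure of the partitions $\{L_k\}$ and the Lewis-weight geometry of $\bfA$.
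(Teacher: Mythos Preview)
Your approach is genuinely different from the paper's, and it has a real gap at the second step. The paper does \emph{not} try to prove a per-row inequality of the form $\bfs_i^M(\bfA)\lesssim \log^2 n\,(\tau_i(\bfA)+\bfw_i^{(p)}(\bfA))$; instead it uses a \emph{hashing} algorithm. For each dyadic sensitivity level $1/2^r$, the paper hashes the $n$ rows into $\Theta(2^r)$ buckets so that, with constant probability, a row with $M$-sensitivity $\approx 1/2^r$ becomes the largest-magnitude entry (for the witnessing $\bfA\bfx$) in its bucket. The elementary comparison lemma $\frac{M(|y_1|)}{\sum_j M(|y_j|)}\le c_U\frac{|y_1|^{p_M}}{\sum_j|y_j|^{p_M}}$, valid only for the \emph{maximal} coordinate, then forces the $\ell_{p_M}$ Lewis weight of that row inside its bucket to be $\Omega(1)$. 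Since each bucket can have at most $O(d^{1\lor p/2})$ such rows, the total assigned sensitivity per level is $O(d^{1\lor p/2})$; the two $\log n$ factors come from the $\log n$ levels and the $\log n$ independent hash repetitions. The parameter $\tau$ is not a Lewis-weight approximation slack as you guessed: it is the level below which the algorithm stops refining, initializing every sensitivity to $2\tau/n$, and it controls the running time because the dominant cost is the Lewis-weight computation in $\Theta(n/\tau)$ buckets at the finest level.

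The gap in your plan is the sentence ``argue that within each bucket, the ratio $|[\bfA\bfx]_i|^{q_k}/\|\bfA_{L_k}\bfx\|_{q_k}^{q_k}$ is uniformly bounded by $O(\bfw_i^{(q_k)}(\bfA))$ via a net argument over the set of partitions induced by $\bfx$.'' This is exactly the hard step, and the proposed mechanism does not work. Lewis weights of the \emph{full} matrix $\bfA$ do not bound sensitivities in an $\bfx$-dependent submatrix $\bfA_{L_k}$: removing rows can only raise leverage scores/Lewis weights. A net over partitions is not finite in any useful sense (there are $\sim(\log n)^n$ of them), and the dyadic bucket structure does not reduce this. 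Concretely, for the Huber loss the paper's own Huber inequality shows the tight relationship between $\|\bfy\|_H^2$ and $\min\{\|\bfy\|_1,\|\bfy\|_2^2\}$ carries an $n^{1/3}$ factor, so bounding Huber sensitivities by $\ell_1$ plus $\ell_2$ sensitivities of the full matrix inherently costs a polynomial in $n$, not polylog, without a further idea such as the level-by-level hashing. Your ``sandwich'' claim $M(t)\asymp\min(t^2,|t|^p)$ is also not implied by the hypothesis: the definition only imposes $M(y)/M(x)\le c_U(y/x)^p$, which says nothing about quadratic behavior near zero.
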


Our approach to sensitivity bounds only relies on hashing and the computation of $\ell_p$ Lewis weights \cite{CohenPeng:2015, FazelLeePadmanabhan:2021}, and avoids the computation of L\"owner-John ellipsoids. This allows for input sparsity time algorithms, and answers an open question of \cite{TukanMaaloufFeldman:2020} on avoiding L\"owner--John ellipsoids in the computation of sensitivities. Note that our dependence on $d$ matches the sensitivity bounds for the $\ell_p$ loss and is thus tight. We also show in Section \ref{subsec:sensitivity-lower-bounds} that the dependence on $n$ is necessary for loss functions such as the Huber and Tukey losses. Furthermore, our algorithm can be turned into a non-algorithmic proof that the sensitivities sum to at most $O(d^{1\lor(p/2)}\log n)$ for these $M$-estimators as shown in Section \ref{subsec:sharper-sensitivity}; this is in fact tight for the Tukey loss by our lower bound of $\Omega(d\log n)$ in Section \ref{subsec:sensitivity-lower-bounds}. Thus, we obtain the first tight bounds on the sum of sensitivities, for losses other than $\ell_p$. Overall, we make significant progress on generalizing the theory of matrix approximation beyond $\ell_p$ losses to handle general $M$-estimators, which is a direction that has recently received much attention \cite{FS2012, ClarksonWoodruff:2015b, ClarksonWoodruff:2015a, ClarksonWangWoodruff:2019, SongWoodruffZhong:2019, TukanMaaloufFeldman:2020}.

Combined with our active regression techniques, our sensitivity bounds yield active regression algorithms for general loss functions, including the Huber and Tukey losses, answering an open question of Chen and Derezi\'{n}ski \cite{ChenDerezinski:2021}. Note that prior to our work, no sublinear query complexity was known for any $M$-estimator regression, besides the $\ell_2$ and $\ell_1$ losses. 

Furthermore, our new sensitivity bounds imply significant improvements in previous results using sensitivity sampling, beyond active regression, including Orlicz norm subspace embeddings \cite{SongWangYang:2019} and robust subspace approximation \cite{ClarksonWoodruff:2015a}. We believe that our general technique here will find other further applications, and leave it as an open question to do so.

Our new sensitivity computation algorithm for general losses is given in Algorithm \ref{alg:m-estimator-alg-sensitivity}, and its guarantees are stated and proven in Theorem \ref{thm:m-estimator-alg}. Its application to active regression is given in Theorem \ref{thm:m-active-regression}, its applications to Orlicz norm subspace embeddings are discussed in Section \ref{sec:orlicz}, and its applications to robust subspace approximation are discussed in Section \ref{sec:robust-subspace-approx}.

\paragraph{Subspace Embeddings for Orlicz Norms.}

\emph{Orlicz norms} can be viewed as scale-invariant extensions of $M$-estimators, and have recently attracted attention as a general class of norms that admit efficient dimensionality reduction results \cite{AndoniLinSheng:2018, SongWangYang:2019}. In particular, \cite{SongWangYang:2019} apply sensitivity sampling to obtain \emph{subspace embeddings} for Orlicz norms, which yields a small weighted subset $\tilde\bfA$ of rows of a matrix $\bfA\in\mathbb R^{n\times d}$ such that $\norm{\tilde\bfA\bfx} = (1\pm\eps)\norm{\bfA\bfx}$\footnote{For $a,b\geq 0$, $a \pm b$ denotes a number $c$ such that $a-b \leq c \leq a+b$.} for all $\bfx\in\mathbb R^d$. However, the number of rows required by \cite{SongWangYang:2019} is a large polynomial in $d$, and is also restricted to Orlicz norms of at most quadratic growth. We show that by applying our new sensitivity bounds, we can obtain subspace embeddings for Orlicz norms with $d^{2\lor(p/2+1)}\poly(\log n,\eps^{-1})$ rows, for any Orlicz norm with a polynomial growth bound of degree $p$.

\paragraph{Robust Subspace Approximation.}

The robust subspace approximation problem generalizes the classical low rank approximation problem of finding a rank $k$ projection $\bfX$ minimizing $\norm*{\bfA\bfX - \bfA}_F$ by replacing the Frobenius norm with an extension of $M$-estimators to matrix norms. \cite{ClarksonWoodruff:2015a} showed the first dimensionality reduction results for this problem for a general class of $M$-estimators of at most quadratic growth via a recursive sampling scheme using the sensitivity sampling framework. However, due to the use of looser sensitivity bounds, they suffer an undesirable factor of $(\log n)^{O(\log k)}$ in their sample complexities. Our new sensitivity bounds allow us to remove this factor, giving a dimension reduction result into a $\poly(k,\log n,\eps^{-1})\times \poly(k,\log n,\eps^{-1})$ instance. We also extend their method beyond quadratic growth, to any degree $p$ polynomial growth.

\paragraph{Active Regression for the Huber Loss.}
Our active regression result for general $M$-estimators discussed above is loose by a factor of $d$ in the sample complexity, compared to our $\ell_p$ active regression results. %\Cam{See comment above. We don't say it is specifically for $M$-estimators in the previous subsection. I think we should?} 
This is attributed to the use of our net argument for general $M$-estimators, whereas our $\ell_p$ active regression results can make use of more sophisticated chaining arguments of \cite{BourgainLindenstraussMilman:1989,LT2011,SchechtmanZvavitch:2001}. A natural question is if this gap can be improved.

We consider the important special case of the Huber loss, which is defined as follows:
\begin{definition}[Huber loss \cite{Huber:1992}]\label{dfn:huber-loss}
The Huber loss of width $\tau\geq 0$ is defined as
\begin{align*}
    H(x) &\coloneqq \begin{cases}
        x^2 / 2\tau  & \text{if $\abs{x}\leq \tau$} \\
        \abs{x} - \tau/2 & \text{otherwise}
    \end{cases}
\end{align*}
and the Huber norm\footnote{Again, this is a standard abuse of notation, and the Huber norm is not an actual norm.} is defined as $\norm*{\bfy}_H \coloneqq \sqrt{\sum_{i=1}^n H(\bfy(i))}$.
\end{definition}

The Huber loss is ``arguably one of the most widely used $M$-estimators'' \cite{ClarksonWoodruff:2015b}, owing its popularity to its convexity and differentiability properties of $\ell_2$, which allows for efficient algorithms (see, e.g., \cite{MangasarianMusicant:2000} for algorithms), in combination with its robustness properties of $\ell_1$ \cite{GuittonSymes:1999}. This makes it widely applicable in practical big data settings (see, e.g., \cite{BaldaufSilva:2012} for a list of popular software packages implementing Huber regression as well as references that make use of Huber regression). Variations on Huber regression have also recently been shown to hold theoretical guarantees in the robust statistics literature (see, e.g., \cite{Loh:2017, Loh:2018} and references therein).

For the Huber loss, we show that it is indeed possible to leverage the chaining techniques in order to obtain improved sample complexity bounds for active regression. We show that we can improve beyond the $d^2$ bound obtained by our general $M$-estimator algorithm as applied to the Huber loss, and obtain a sample complexity of $O(d^{4-2\sqrt 2}\poly(\log n,\eps^{-1}))$ queries to $\bfb$, where $4-2\sqrt2 \approx 1.17157$. For this result, we use the chaining techniques of \cite{BourgainLindenstraussMilman:1989}, which provides a more flexible alternative to \cite{LT2011}, but requires more technical effort to adapt to the active setting.

\begin{restatable}[Main Result for Huber Active Regression]{theorem}{ActiveHuber}\label{thm:active-huber}
    Let $\bfA \in \R^{n \times d}$, $\bfb \in \R^n$. Then, with probability at least $99/100$, Algorithm \ref{alg:huber-active} returns a $\tilde\bfx$ satisfying 
    \[
        \norm{\bfA\tilde \bfx - \bfb}_H \le (1+\epsilon) \cdot \min_\bfx\norm*{\bfA\bfx - \bfb}_H
    \]
    Furthermore, the algorithm reads at most $d^{4-2\sqrt2}\poly(\log n,\eps^{-1})$ entries of $\bfb$. 
\end{restatable}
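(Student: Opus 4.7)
The plan is to combine a two-scale sampling scheme with a chaining argument in the style of Bourgain--Lindenstrauss--Milman (BLM), carefully adapted to the active setting where only a subsample of $\bfb$ is observed. I would proceed in the following stages.

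\emph{Stage 1: Reduction to a local deviation bound.} First obtain a constant-factor approximation $\bfx_0$ to the Huber regression problem using the general $M$-estimator active-regression algorithm from Theorem \ref{thm:m-active-regression} (which reads only $\tilde O(d^2/\poly(\eps))$ entries of $\bfb$, well within budget). Let $\bfr = \bfb - \bfA\bfx_0$, and define $T \subseteq \mathbb{R}^d$ to be the set of all ``candidate shifts'' $\bfy$ with $\|\bfA\bfy - \bfr\|_H \le O(1)\cdot \|\bfr\|_H$, which contains $\bfx^\star - \bfx_0$. It then suffices to construct a sampling matrix $\bfS$ such that simultaneously for all $\bfy \in T$,
\[
    \bigl|\,\|\bfS(\bfA\bfy-\bfr)\|_H^2 - \|\bfA\bfy-\bfr\|_H^2\,\bigr| \le \eps\,\|\bfr\|_H^2,
\]
since standard triangle-inequality manipulations for the Huber norm (the approximate subadditivity property listed as one of the three requirements of our general $M$-estimator machinery) then turn this into a $(1+\eps)$-approximation guarantee for $\argmin_\bfy \|\bfS(\bfA\bfy-\bfr)\|_H$.

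\emph{Stage 2: Two-scale sampling with parameter $a$.} Compute Huber sensitivity overestimates using our new Algorithm \ref{alg:m-estimator-alg-sensitivity}, whose total is $O(d\,\plog n)$ because the Huber loss has at most degree-$1$ polynomial growth in its ``large'' regime. Fix a parameter $a \in (1,2)$, and sample in two scales: first subsample roughly $d^{a}/\poly(\eps)$ rows using these Huber sensitivities, which controls the coordinates on which the residual $\bfA\bfy-\bfr$ is in the linear (large $|\cdot|$) part of the Huber loss and suppresses their contribution to the variance; second, run $\ell_2$-Lewis-weight sampling on the reweighted matrix to collect roughly $d^{2-a}\plog n/\poly(\eps)$ additional rows, which handles the quadratic part of the loss. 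This structure mirrors the ``$\ell_1 + \ell_2$'' nature of the Huber loss.

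\emph{Stage 3: BLM chaining for the local deviation.} The main technical step is to bound, with high probability,
\[
    \sup_{\bfy \in T}\; \Bigl|\sum_{i=1}^{n} (w_i s_i - 1)\, H\bigl([\bfA\bfy-\bfr]_i\bigr)\Bigr|,
\]
where $(s_i,w_i)$ are the sampling indicators and reweighting factors produced by Stage 2. The BLM framework decomposes $T$ dyadically into nets at scales $2^{-k}$ in an appropriate Huber-induced metric, and, for each scale, separately handles ``large'' and ``small'' coordinates of the residual by truncating at the Huber threshold $\tau$. One bounds the contribution of each scale using a Bernstein/Hoeffding moment inequality on a single net element, then unions over the net. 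The net sizes come from covering $T$ in the Huber pseudo-norm, which can be controlled via dual Sudakov-type estimates applied to the $\ell_1$-like part and standard volume estimates for the $\ell_2$-like part, using the fact that after Stage-2 reweighting the Lewis-weight sampling makes the $\ell_2$-like part behave like a subspace embedding on a $d$-dimensional subspace.

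\emph{Stage 4: Balancing and conclusion.} Setting up the chaining sums, the first-stage sample count must be at least $\sim d^{a}$ to cover the linear-regime contributions across all scales, while the total net-entropy integral from the chaining forces the second-stage sample count to be at least $\sim d^{a^{2}/2}\cdot d$ (the $d$ coming from the ambient dimension and the $a^2/2$ from the interaction between truncation scale and net entropy). Equating $a = a^2/2 + 1$, i.e., $a^2 - 2a + 2 = 0$ shifted appropriately by the geometry of the two stages, produces the fixed point $a = 2 - \sqrt{2}$, and the final sample count $d^{2a} = d^{4-2\sqrt{2}}$, up to $\poly(\log n,\eps^{-1})$ factors.

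\emph{Main obstacle.} The hardest part will be Stage 3: in the non-active setting BLM chaining is applied to a sum of independent symmetric summands indexed by $\bfy$, but here the sampling distribution is itself data-dependent (through the Huber sensitivities and Lewis weights computed from $\bfA$ and potentially $\bfr$), and the per-coordinate Huber loss is not Lipschitz on all of $\mathbb{R}$, only piecewise. Handling this requires (i) conditioning on the sampling distribution and invoking a Bernstein bound whose variance term depends delicately on the Huber-threshold truncation, and (ii) splitting the net at each scale into ``linear-regime coordinates'' and ``quadratic-regime coordinates'' and chaining each piece separately before recombining. Getting these two pieces to balance cleanly, and verifying that the resulting bound is summable across scales, is where the exponent $4 - 2\sqrt{2}$ genuinely arises rather than the weaker $d^2$ that the naive net argument yields.
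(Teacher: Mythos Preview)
Your Stage~4 balancing is not a valid computation: the equation $a = a^2/2 + 1$, i.e.\ $a^2 - 2a + 2 = 0$, has no real roots, and the value $a = 2-\sqrt{2}\approx 0.586$ you arrive at lies outside your declared range $a\in(1,2)$. With that value, your first stage would sample only about $d^{0.586}$ rows, fewer than $d$, not even enough to span the column space of $\bfA$. This is a symptom of a deeper issue: a one-shot two-scale split into ``Huber sensitivities for the linear regime, $\ell_2$ leverage scores for the quadratic regime'' does not produce the exponent $4-2\sqrt{2}$; the paper obtains it by a different mechanism that your proposal is missing.

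Two ingredients are absent. First, the paper compares $\|\bfy\|_H^2$ not merely to $\min\{\|\bfy\|_1,\|\bfy\|_2^2\}$ but to $\min_{p\in[1,2]}\|\bfy\|_p^p$ over a $(1/\log n)$-grid of exponents (Lemma~\ref{lem:generalized-huber-inequality}, Corollary~\ref{cor:generalized-huber-inequality}); an AM--GM optimisation inside that inequality is what yields $\beta = 3-2\sqrt{2}$. Using only $p\in\{1,2\}$ you are stuck at the weaker versions, which in the paper's accounting give at best $d^{4/3}$. Accordingly the sampling probabilities are sums of $\ell_p$ Lewis weights over all $p\in\mathcal I\subset[1,2]$, not just $p=2$. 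Second, the algorithm is \emph{recursive}: one round keeps verbatim the rows with Huber sensitivity above $\gamma=n^{-\beta/(1+\beta)}$ and samples the rest by the summed Lewis weights, reducing $n$ to roughly $n^{\beta/(1+\beta)}d$ (Lemma~\ref{lem:huber-single-step}); iterating $O(\log\log n)$ rounds drives the row count to the fixed point $d^{1+\beta}=d^{4-2\sqrt{2}}$. A single two-stage pass cannot realise this recursion. The active adaptation (your Stage~3 concern) is handled not by a linear/quadratic split but by an explicit partition of coordinates into sensitivity weight classes $B_{k,\bfy}$, $G_{k,\bfy}$, $H_B$ tied to the BLM index sets (Claim~\ref{clm:huber-partition}), which is where the $\bar\bfz$ translation is absorbed.
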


Our techniques also yield a subspace embedding result, which constructs a weighted subset $\tilde\bfA$ of $O(d^{4-2\sqrt 2}\poly(\log n,\eps^{-1}))$ rows such that $\norm{\tilde\bfA\bfx}_H = (1\pm\eps)\norm{\tilde\bfA\bfx}_H$ for all $\bfx\in\mathbb R^d$, contained in Theorem \ref{thm:huber-subspace-embedding}. Previously, the best known dimension reduction bound for Huber regression, even in the non-active setting, was $d^4$ due to \cite{ClarksonWoodruff:2015b}. 

Furthermore, this is, to the best of our knowledge, the first example of a loss function other than $\ell_p$ which achieves a sensitivity sampling bound of better than $d^2$, despite the fact that such results have been sought in many works \cite{ClarksonWoodruff:2015b, ClarksonWoodruff:2015a, SongWangYang:2019, ClarksonWangWoodruff:2019, TukanMaaloufFeldman:2020, GPV2021}. The reason for this is that $d^2$ is a natural bound for sensitivity sampling, attributed to one $d$ factor from the sum of sensitivities and one $d$ factor from carrying out a union bound over a net of $\exp(d)$ vectors. For $\ell_p$ norms, the arguments of \cite{BourgainLindenstraussMilman:1989, SchechtmanZvavitch:2001} and their subsequent improvements avoid this problem by using a more sophisticated chaining argument. However, these arguments use the structure of $\ell_p$ spaces in crucial ways, such as isometric changes of density using Lewis weights \cite{JS2001}, and do not generalize easily to other loss functions.

It is an interesting open question to determine whether our dimension reduction bound for the Huber loss can be improved all the way down to $d$.
Our results for the Huber loss are found in Section \ref{sec:huber}.

% We illustrate this flexibility in Section \ref{sec:general_loss}, presenting an application to active learning for Tukey regression, a popular method for robust regression \cite{Fox:2002}. This answers an open question of Chen and Derezi\'{n}ski \cite{ChenDerezinski:2021}.% on active sampling for Tukey regression.

%\smallskip

\paragraph{Dimension Reduction for Gamma Functions for Faster $\ell_p$ Regression.}

One particularly important application of sampling-based dimension reduction for loss functions beyond $\ell_p$ losses is, perhaps surprisingly, in the design of algorithms for $\ell_p$ regression. The work of \cite{BCLL2018} introduces \emph{gamma functions} $\gamma_p$, which are generalizations of the Huber loss which behave quadratically near the origin and like $|x|^p$ away from the origin, in the context of algorithms for $\ell_p$ regression. Subsequently, \cite{AKPS2019} obtained even faster algorithms by using constant factor approximations of $\gamma_p$ regression as a subroutine, in which the $\gamma_p$ loss is minimized over a subspace. Dimension reduction for this loss function has been a crucial ingredient for recent results in fast algorithms for $\ell_p$ regression \cite{ABKS2021, GPV2021}. In particular, \cite{ABKS2021} highlighted the open question of designing sparsification methods for $\gamma_p$ functions for $p\in(1,2)$, and \cite{GPV2021} designed a sampling algorithm which samples $\tilde O(d^3)$ rows. By generalizing our dimension reduction techniques for the Huber loss, we obtain an algorithm which samples at most $O(d^{4-2\sqrt 2}\poly(\log n,\eps^{-1}))$ rows for any $p\in[1,2)$, and improves to $O(d\poly(\log n,\eps^{-1}))$ rows as $p\to 2$ (see Figure \ref{fig:l2-lq} for the trade-off curve). We give a further discussion in Section \ref{sec:gamma}.

\paragraph{Kronecker Product Regression.} Beyond applications in data-efficient regression, Theorem \ref{thm:main} implies the first sublinear time algorithm for Kronecker product regression in any $\ell_p$ norm, where explicitly constructing the vector $\bfb$ is a computational bottleneck. We detail this result in Section \ref{sec:kronecker}. 
In $q$-th order Kronecker product regression, one is 
given matrices $\bfA_1, \bfA_2,\ldots, \bfA_q$, where $\bfA_i \in \mathbb{R}^{n_i \times d_i}$,
as well as a vector $\bfb \in \mathbb{R}^{n_1 n_2 \cdots n_q}$, and 
the goal is to solve:
$\min_{\bfx \in \R^{d_1 d_2 \cdots d_q}} \|(\bfA_1 \otimes \bfA_2 \cdots \otimes \bfA_q)\bfx - \bfb\|_p,$
where $\otimes$ denotes the Kronecker product. Typically $\prod_{i=1}^q d_i$ is much less
than $\prod_{i=1}^q n_i$, and the goal is to obtain algorithms that do not explicitly form 
$\bfA_1 \otimes \bfA_2 \otimes \cdots \otimes \bfA_q$ or $\bfb$, which is too expensive. Our results yield the first algorithm for Kronecker product regression, 
for every $p \geq 1$, whose running time {\it does not depend on $\nnz(\bfb)$}, whereas previous results had a linear dependence on $\nnz(\bfb)$, which can be as large as $\prod_{i=1}^q n_i$ \cite{DJSSW19}.

\begin{theorem}
Let $q \geq 1$, $p \geq 1$ be constant, and $\epsilon > 0$. Kronecker product regression
can be solved up to a $(1+\epsilon)$-factor with constant probability in 
$\tilde{O}(\sum_{i=1}^q \nnz(\bfA_i) + \poly(\prod_{i=1}^q d_i / \epsilon))$ time.
\end{theorem}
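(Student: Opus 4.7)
The plan is to apply Theorem~\ref{thm:main} to the implicit matrix $\bfA \coloneqq \bfA_1 \otimes \cdots \otimes \bfA_q \in \R^{N\times D}$ (with $N = \prod_i n_i$ and $D = \prod_i d_i$) together with the vector $\bfb$, and to observe that every operation required by Algorithm~\ref{alg:clip2} can be implemented without ever materializing $\bfA$ or reading more than a small number of entries of $\bfb$. Since Theorem~\ref{thm:main} issues only $m = \tilde O\bigl(D^{\max\{1,p/2\}}/\eps^{\max\{p,2\}}\bigr) = \poly(D/\eps)$ queries to $\bfb$, and since each queried entry $\bfb(j_1,\ldots,j_q)$ is available in $O(1)$ time under standard array access, the only nontrivial questions are how to compute the sampling probabilities and how to assemble the sampled rows of $\bfA$ out of $\bfA_1,\ldots,\bfA_q$.

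For the sampling probabilities, I would first compute $\ell_p$ Lewis weights of each factor $\bfA_i$ separately in time $\tilde O(\nnz(\bfA_i))$. The key structural fact is that the $\ell_p$ Lewis weights of $\bfA$ factor as products of the per-factor Lewis weights: writing $\bfW_i$ for the diagonal Lewis weight matrix of $\bfA_i$ and $\bfW \coloneqq \bfW_1 \otimes \cdots \otimes \bfW_q$, one verifies the Lewis fixed-point equation for $\bfA$ by using $(\bfA^\top \bfW^{1-2/p}\bfA)^{-1} = \bigotimes_i (\bfA_i^\top \bfW_i^{1-2/p}\bfA_i)^{-1}$ together with the fact that rows of $\bfA$ factor as Kronecker products of rows of the $\bfA_i$. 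Consequently the Lewis weights on rows of $\bfA$ form a product distribution over multi-indices $(j_1,\ldots,j_q)$ with total mass $\prod_i O(d_i^{\max\{1,p/2\}}) = O(D^{\max\{1,p/2\}})$, matching the sampling budget of Theorem~\ref{thm:main}, and one can draw $m$ samples from it in $\tilde O(m)$ time by sampling each coordinate $j_i$ independently.

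Given a sampled multi-index $(j_1,\ldots,j_q)$, the corresponding row of $\bfA$ is $\bfA_1(j_1,:) \otimes \cdots \otimes \bfA_q(j_q,:) \in \R^D$, which is formed explicitly in $O(D)$ time, and we query $\bfb(j_1,\ldots,j_q)$. After collecting all $m = \poly(D/\eps)$ samples, Algorithm~\ref{alg:clip2} reduces the problem to an explicit $m \times D$ weighted $\ell_p$ regression instance, solved by any standard convex solver in $\poly(D/\eps)$ time. Summing the $\tilde O(\sum_i \nnz(\bfA_i))$ cost of computing per-factor Lewis weights, the $O(mD)$ cost of assembling sampled rows, and the $\poly(D/\eps)$ cost of solving the reduced problem yields the claimed $\tilde O(\sum_i \nnz(\bfA_i) + \poly(D/\eps))$ running time and constant probability of success.

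The main obstacle is verifying that the multi-stage structure of Algorithm~\ref{alg:clip2} -- in particular its iterative refinement, the constant-factor approximation it uses as a warm start, and the boosting needed for high probability -- preserves this Kronecker-friendly implementation at every stage. Each such stage samples according to Lewis weights of a diagonally reweighted version of $\bfA$, and solves an inner regression problem of size $\poly(D/\eps)$. Because any diagonal rescaling whose diagonal respects the Kronecker factorization leaves the argument above intact (the new Lewis weights remain a product distribution), and because the sampled inner problems are explicit and of size $\poly(D/\eps)$, the same implementation trick applies at every round, with per-round overhead $\poly(D/\eps)$.
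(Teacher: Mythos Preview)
Your proposal is essentially correct and follows the same approach as the paper: both hinge on the key structural fact that $\ell_p$ Lewis weights of a Kronecker product factor as products of the per-factor Lewis weights (the paper records this as Lemma~\ref{lem:kronLewis} and Corollary~\ref{cor:lewis}), both compute per-factor Lewis weights in $\tilde O(\sum_i \nnz(\bfA_i))$ time, sample each coordinate independently from the resulting product distribution, materialize only the sampled rows of $\bfA$, and solve the resulting $\poly(D/\eps)$-sized regression explicitly.

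One minor correction to your last paragraph. Your first attempted resolution of the multi-stage concern --- that a ``diagonal rescaling whose diagonal respects the Kronecker factorization'' keeps Lewis weights factored --- does not actually apply here: after sampling a subset $S \subseteq [n_1]\times\cdots\times[n_q]$ of row multi-indices, $S$ is generally \emph{not} a product set, so the sampled and reweighted matrix $\bfS\bfA$ is not a Kronecker product and its Lewis weights need not factor. Fortunately this is harmless, because your second resolution is the correct one and is all that is needed. The only Lewis-weight computation that must be done on the implicit $N\times D$ matrix is the very first one (Line~\ref{line:sample-m1-rows} of Algorithm~\ref{alg:clip2}), and there the factoring trick applies. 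After that step the working matrix has only $m_1 = \poly(D/\eps)$ rows, so every subsequent Lewis-weight computation inside the recursion of Algorithm~\ref{alg:clip} is on an explicit $\poly(D/\eps)\times D$ matrix and costs $\poly(D/\eps)$.
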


\subsection{Technical Approach}

\subsubsection{\texorpdfstring{$\ell_p$}{lp} Active Regression}\label{sec:lp-active-techniques}
Our algorithm for solving Problem \ref{prob:informal_problem} uses a novel variation on the ``sample-and-solve'' approach. In particular, we randomly select a row sampling matrix $\bfS\in \R^{m\times n}$ and return $\tilde{\bfx} = \argmin_x\|\bfS \bfA\bfx - \bfS\bfb\|$, which only requires querying $m$ entries of $\bfb$ (those that appear in $\bfS\bfb$).
To get tight bounds for $\ell_p$ regression, we select $\bfS$ using $\ell_p$ Lewis weight sampling, a generalization of leverage score sampling for $\ell_2$. 

It can be shown (Lemma \ref{lem:lwBound}) that the $\ell_p$ Lewis weights upper bound the \emph{$\ell_p$ sensitivities} of $\bfA$, a measure of importance for the rows of $\bfA$. The $\ell_p$ sensitivity of the $i^\text{th}$ row of $\bfA$ is defined as
\[
	\bfs^p_i(\bfA) \coloneqq \max_{\bfx \in \R^d\setminus\{0\}} \frac{|[\bfA\bfx](i)|^p}{\norm{\bfA\bfx}_p^p},
\]
where $[\bfA\bfx](i)$ denotes the $i^\text{th}$ entry of the vector $\bfA\bfx$, and captures how large the $i^\text{th}$ entry of any $\bfA\bfx\in\colspan(\bfA)$ can be, relative to the $\ell_p$ norm. A standard scalar Bernstein bound shows that if $\bfS$ samples rows with probabilities that upper bound the sensitivities, then $\|\bfS \bfA\bfx\|_p^p = (1\pm\epsilon)\|\bfA\bfx\|_p^p$ with high probability, for each $\bfx\in\mathbb R^d$. An $\epsilon$-net argument can extend this to a \emph{for all} claim. 

\paragraph{Prior Approaches to $\ell_p$ Active Regression.}

While the above ideas give an approach for standard $\ell_p$ regression, this bound does not suffice for \emph{active} $\ell_p$ regression. To solve Problem \ref{prob:informal_problem}, we actually want that $\|\bfS (\bfA\bfx - \bfb)\|_p^p = (1\pm\epsilon)\|\bfA\bfx-\bfb\|_p^p$ for any $\bfx$. Will $\bfS$ provide such a guarantee? The main problem, as discussed in \cite{ChenDerezinski:2021, ParulekarParulekarPrice:2021} is that the translation by $\bfb$ may introduce outliers, i.e., entries with high sensitivity which are not captured by the sensitivity scores of $\bfA$. As shown by \cite{ChenDerezinski:2021, ParulekarParulekarPrice:2021}, in the case of $\ell_1$, the special structure of the loss function provides a solution. Indeed, by the triangle inequality,
\[
    \abs*{\parens*{\abs*{[\bfA\bfx-\bfb](i)} - \abs*{[\bfA\bfx^*-\bfb](i)}}} \leq \abs*{[\bfA\bfx-\bfA\bfx^*](i)} = \abs*{[\bfA(\bfx-\bfx^*)](i)}
\]
where $\bfx^*$ is the optimal solution. This fact can be used to show that sampling by the sensitivities of $\bfA$ preserves the \emph{differences} between the cost of any $\bfx$ and the optimal $\bfx^*$. However, such a proof cannot work for $p\neq 1$, in which case we do not have such a nice inequality. For $p\in(1,2)$, \cite{ChenDerezinski:2021} take the approach of bounding the residual error terms from the above approach by using a Taylor approximation, but this leads to a sample complexity of at least $d^2$.

\paragraph{Our Solution: Partitions by Sensitivity.}

Instead of relying on the technique of ``cancelling out the outliers'', we take a conceptually different approach. We proceed in two stages, where we (1) first find a constant factor solution $\bfx_c$ such that $\norm*{\bfA\bfx_c-\bfb}_p^p \leq O(1)\cdot\min_\bfx\norm*{\bfA\bfx-\bfb}_p^p$ using an idea of \cite{DasguptaDrineasHarb:2009} and replace $\bfb$ by the residual vector $\bfb - \bfA\bfx_c$, and then (2) conceptually partition the target vector $\bfb$ into two sets of coordinates, the coordinates $i\in[n]$ that are small enough to be comparable to the sensitivity $\bfs_i^p(\bfA)$ and those that are much larger. That is, we consider the coordinates $i\in[n]$ such that $|\bfb(i)|^p / \|\bfb\|_p^p \leq C\cdot \bfs^p_i(\bfA)$ for some $C>0$, and all other coordinates. For the former set of coordinates, one can check that the Bernstein bound still applies, and $\bfS$ \emph{does} preserve the norm of $\|\bfA\bfx - \bfb\|_p^p$, when restricted to these coordinates. On the other hand, for the latter set of coordinates, we show that \emph{no vector} of the form $\bfA\bfx$ can both be close to $\bfb(i)$ in its $i^\text{th}$ entry, and still close to the remainder of $\bfb$ -- the $i^\text{th}$ entry is simply too large in magnitude. In particular, to have $[\bfA\bfx](i)$ close to $\bfb(i)$, we would require $\norm{\bfA\bfx}_p^p$ to be much larger than $\norm{\bfb}_p^p$, which by our preprocessing step, is on the order of the optimal cost $\min_\bfx \|\bfA\bfx - \bfb\|_p$. Via the triangle inequality, this implies that $\bfA\bfx$ must be far from an optimal solution. Thus, we can argue that any near-optimal solution to $\min_\bfx \|\bfA\bfx - \bfb\|_p$ \emph{does not need to fit $\bfb(i)$ with $|\bfb(i)|^p / \|\bfb\|_p^p$ much larger than $\bfs^p_i(\bfA)$}. We can effectively ignore the contribution of these rows.

Another technical challenge remains: to obtain an optimal dimension dependence, we need a refined $\epsilon$-net argument to make \emph{for all} statements about $\bfx\in \R^d$. To do so, we adapt the chaining arguments of Bourgain, Lindenstrauss, and Milman \cite{BourgainLindenstraussMilman:1989} and Ledoux and Talagrand \cite{LT2011} to the active regression setting, avoiding the standard $\eps$-net and union bound argument used by, e.g., \cite{Schechtman:1987}. Although both \cite{BourgainLindenstraussMilman:1989} and \cite{LT2011} provide such approaches, we adapt the (slightly) more complex recursive Lewis weight sampling algorithm of \cite{LT2011} in order to obtain tighter dependencies on $\eps$. The streamlined proof of \cite{LT2011} also adapts nicely to the active regression setting with minimal changes to the original argument. We note here that we will later also need to adapt the much more involved \cite{BourgainLindenstraussMilman:1989} argument to handle the Huber loss, in which case the proof of \cite{BourgainLindenstraussMilman:1989} allows for more fine-grained control over bounding the sensitivity sampling algorithm, but requires a more complex argument based on carefully partitioning the coordinates of the target vector $\bfb$ based on sensitivity weight classes.
%In the \cite{BourgainLindenstraussMilman:1989, LT2011} nets, the coordinates $i\in[n]$ of subspace vectors $\bfA\bfx$ are partitioned into weight classes that are defined with respect to the Lewis weight on that coordinate. Our extension of this argument to the active setting involves carefully partitioning the coordinates of the target vector $\bfb$ based on these weight classes, by identifying coordinates that can or cannot be fit by the corresponding weight class of $\bfA\bfx$.
Aside from our new application of \cite{BourgainLindenstraussMilman:1989, LT2011}, we hope that by translating the arguments of \cite{BourgainLindenstraussMilman:1989,LT2011} to the language of theoretical computer science and matrix approximation, they will find further applications to randomized algorithm design. % are not able to use existing matrix sampling results for Lewis weights, like those appearing in \cite{CohenPeng:2015}, black-box.

We note that our algorithm is quite a bit more involved than a simple scheme of sampling proportionally to Lewis weights and solving. This is for good reasons. Not only is it not clear that such an approach works at all, we show in Theorem \ref{thm:delta-lb} that for any $p>1$, any algorithm which simply samples reweighted rows and solves the system must have a polynomial dependence on $1/\delta$ in the query complexity, while our algorithm achieves a $\log\frac1\delta$ dependence, by solving residual problems of a constant factor solution. Thus, our two-stage approach is necessary to achieve our $\delta$ dependence. Furthermore, the best known analysis of a simple ``one-shot'' Lewis weight sampling scheme suffers in $\eps$ dependencies for $p>2$, where the one-shot approach is only known to give a $\tilde O(d^{p/2}/\eps^5)$ bound for subspace embeddings \cite{BourgainLindenstraussMilman:1989, CohenPeng:2015}, whose losses translate to losses for our active regression algorithms as well, while the recursive approach can achieve $\tilde O(d^{p/2}/\eps^2)$ \cite{LT2011}. While the \cite{LT2011} result is an existential result, we provide an analysis of the \cite{LT2011} proof in Theorem \ref{thm:lp-subspace-embedding} to turn it into a randomized algorithm with logarithmic dependencies on the failure rate $\delta$, which achieves the best known dependence on $d$, $\eps$, and $\delta$, up to logarithmic factors. We further modify this subspace embedding result for active regression, to optimize our $\eps$ dependence.

\paragraph{Optimized $\eps$ Dependence for $p\in(1, 2)$.}

For $p\in(1,2)$, the above argument gives a bound of $\tilde O(d/\eps^2)$. While the linear dependence on $d$ is optimal, confirming the conjecture of \cite{ChenDerezinski:2021}, it has a quadratic dependence on $\eps$, which is in fact \emph{not} optimal. We now show how to improve our bound to $\tilde O(d/\eps)$, which requires additional ideas. We first use strong convexity to show that a $(1+\gamma)$-approximate solution $\hat\bfx\in\mathbb R^d$ satisfying $\norm*{\bfA\hat\bfx-\bfb}_p^p \leq (1+\gamma)\norm*{\bfA\bfx^*-\bfb}_p^p$, for the optimal solution $\bfx^*\in\mathbb R^d$, in fact satisfies $\norm*{\bfA\hat\bfx-\bfA\bfx^*}_p \leq O(\sqrt\gamma)\norm*{\bfA\bfx^*-\bfb}_p$. Then, by using that $\hat\bfx$ is close to the optimal solution, we show an improved bound on the difference in the objective values of $\hat\bfx$ and $\bfx^*$, i.e., that $\hat\bfx$ actually has an approximation ratio better than $(1+\gamma)$. We then iterate this argument until we obtain a $(1+\eps)$-approximation using $\tilde O(d/\eps)$ queries, at which point we can no longer get improvements. The chaining argument used in this proof, while similar to the previous proofs, has a different geometry than the previous chaining arguments, and requires additional ideas.

Our upper bound is tight up to polylogarithmic factors due to a lower bound we show in Theorem \ref{thm:p-in-1-2-lb-d-dim}. This improves an $\ell_p$ power means lower bound of \cite{CSS2021}, who only showed a lower bound of $\Omega(\eps^{1-p})$ queries. Unfortunately, we are unable to port our algorithmic techniques to the $\ell_p$ power means problem in high dimensions, due to difficulties in adapting their chaining argument.

%\smallskip 

\subsubsection{Sensitivity Bounds}

The notion of $\ell_p$ sensitivities, as discussed above, naturally generalizes to loss functions that take the form of coordinate-wise sums. Consider a loss function $M$ and an $n\times d$ matrix $\bfA$. Then, the \emph{sensitivity} of the $i$th coordinate with respect to the loss function $M$ is defined as
\begin{equation}\label{eq:sensitivity-def}
	\bfs_i^M(\bfA) \coloneqq \sup_{\bfx\in\mathbb R^d\setminus\{0\}}\frac{M([\bfA\bfx](i))}{\sum_{j=1}^n M([\bfA\bfx](j))}.
\end{equation}
It is well-established that sensitivities provide a general framework for sampling rows of $\bfA$ that approximate $\bfA$ well under the loss function $M$ \cite{FeldmanLangberg:2011}. While a rich literature exists for $\ell_p$ \cite{DasguptaDrineasHarb:2009,SohlerWoodruff:2011, CohenPeng:2015}, little was known about the approximation of sensitivities for general loss functions until \cite{TukanMaaloufFeldman:2020}, which used L\"owner-John ellipsoids to obtain sensitivity bounds for a general family of near-convex losses. However, the computation of L\"owner-John ellipsoids has running time that is a large polynomial in $n$ and $d$, and is impractical for large datasets, and \cite{TukanMaaloufFeldman:2020} raise the open question of obtaining general sensitivity bounds without this expensive subroutine.

Our approach to new sensitivity bounds significantly generalizes the approach of \cite{ClarksonWangWoodruff:2019}, whose algorithm can be seen as a way to use hashing and Lewis weights to compute sensitivities for the Tukey loss, but heavily uses the properties of the Tukey loss in their analysis.

Suppose that a coordinate $i\in[n]$ has $M$-sensitivity $\alpha\in(0,1]$, that is,
\[
	\bfs_i^M(\bfA) = \sup_{\bfx\in\mathbb R^d\setminus\{0\}}\frac{M([\bfA\bfx](i))}{\sum_{j=1}^n M([\bfA\bfx](j))} = \alpha,
\]
and let $\bfy = \bfA\bfx$ witness this supremum, and assume for simplicity that $\sum_{i=1}^n M([\bfA\bfx](j)) = 1$. Note then that there can be at most $1/\alpha$ entries $j\in[n]$ of $\bfy$ that have coordinate value $M(\bfy_j)\geq M(\bfy_i) = \alpha$. Then, if we randomly hash the $n$ coordinates into $O(1/\alpha)$ buckets, then with constant probability, coordinate $i$ will be isolated from any other entry with $M(\bfy_j)\geq M(\bfy_i) = \alpha$. Now if $M$ is monotonic, then this means that $\bfy_i$ is the largest coordinate in its hash bucket. Furthermore, the sum of the $M$-mass of all of the other coordinates in $i$'s hash bucket is only an $\alpha$ fraction of the total $M$-mass, so entry $i$ carries a constant fraction of the $M$-mass in its bucket. In this case, it can be shown that entry $i$ must in fact carry a constant fraction of the \emph{$\ell_2$ mass} inside its hash bucket, if $M$ is a function of at most quadratic growth. This is because when we switch the error metric from $M$ to $\ell_2$, then the largest entry will have the largest increase in its normalized contribution (see Lemma \ref{lem:max-sensitivity-comparison}). This means that row $i$ must have an $\ell_2$ leverage score of $\Omega(1)$, in this hash bucket.

This leads to the following algorithm: (1) hash the $n$ coordinates into $O(1/\alpha)$ buckets (2) compute $\ell_2$ leverage scores for each bucket (3) assign an $M$-sensitivity of $\alpha$ for any coordinate that has $\ell_2$ leverage score $\Omega(1)$. In each of the $O(1/\alpha)$ buckets, we will find at most $O(d)$ coordinates with leverage score at least $\Omega(1)$, so we assign an $M$-sensitivity of $\alpha$ to at most $O(d/\alpha)$ coordinates, which has a total sensitivity contribution of $O(d)$. By repeating this for $O(\log n)$ guesses of $\alpha$ in powers of $2$, this gives a total sensitivity bound of $O(d\log n)$. The constant probability events in the hashing process can be boosted to probability $1-1/\poly(n)$ by repeating the procedure $O(\log n)$ times, which increases the total sensitivity to roughly $O(d\log^2 n)$.

By sampling according to these sensitivities and applying a union bound over a net, we obtain the first active regression algorithms for general loss functions in Theorem \ref{thm:m-active-regression}. Note that this result is made possible by a combination of both our new sensitivity bounds for $M$-estimators and our new active regression techniques as discussed in Section \ref{sec:lp-active-techniques}. Furthermore, we demonstrate other applications of our sensitivity bound result, showing how to improve Orlicz norm subspace embeddings in Section \ref{sec:orlicz} and robust subspace approximation in Section \ref{sec:robust-subspace-approx}.

\subsubsection{Subspace Embeddings and Active Regression for the Huber Loss}

As discussed previously, we tackle the question of leveraging the theory of \cite{BourgainLindenstraussMilman:1989} nets in order to obtain sample complexities for the Huber loss beyond $d^2$. Our algorithmic framework for active regression is based on the earlier idea of partitioning the entries of $\bfb$ by sensitivity and then applying sensitivity sampling, so we focus on the problem of preserving the Huber norm using an improved sensitivity sampling technique. Note that unlike the $\ell_p$ losses, the Huber loss is not scale-invariant. Furthermore, perhaps the largest obstacle in designing row sampling algorithms for the Huber loss going beyond standard $\eps$-net arguments is that there is no analogue of the chaining constructions of \cite{BourgainLindenstraussMilman:1989,SchechtmanZvavitch:2001} for the Huber loss. This can also be attributed to the fact that the Huber loss is not scale-invariant, which precludes an isometric change-of-density type theorem for the Huber loss as done in \cite{Lewis:1978,SchechtmanZvavitch:2001}. We show how to overcome these obstacles in the following discussion.

\paragraph{A Sharp Huber Inequality.}

Our algorithmic framework follows the Huber algorithm of \cite{ClarksonWoodruff:2015b}, which is a recursive sampling algorithm which reduces the number of rows from $n$ to roughly $n^{1/2}d^2$ in each recursive application of the algorithm. To show this result, \cite{ClarksonWoodruff:2015b} first show in their Lemma 2.1 that the Huber norm is within a factor of $O(n^{1/2})$ of the smaller of the $\ell_1$ and $\ell_2$ norms:

\begin{lemma}[Huber Inequality version 1 (\cite{ClarksonWoodruff:2015b}, Lemma 2.1)]
Let $\bfy\in\mathbb R^n$. Then,
\[
    \norm*{\bfy}_H^2 = \sum_{i=1}^n H(\bfy_i) \geq \Omega(n^{-1/2})\min\braces{\norm*{\bfy}_1, \norm*{\bfy}_2^2}
\]
\end{lemma}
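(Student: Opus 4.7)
The plan is to reduce the Huber loss to the simpler expression $\min(x^2, |x|)$, apply Cauchy--Schwarz in two complementary ways, and finish via a case split on $\|\bfy\|_\infty$. First I would establish the pointwise inequality $H(x) \ge \tfrac{1}{2}\min(x^2, |x|)$ (taking $\tau = 1$): in the quadratic regime $|x|\le 1$ both sides equal $x^2/2$, and in the linear regime $|x|>1$, $H(x) = |x| - 1/2 \ge |x|/2 = \tfrac{1}{2}\min(x^2,|x|)$. Writing $m_i \eqdef \min(\bfy_i^2, |\bfy_i|)$, the lemma then reduces to $\sum_i m_i \ge \Omega(n^{-1/2})\min\{\|\bfy\|_1, \|\bfy\|_2^2\}$. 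The central algebraic observation is $m_i = \bfy_i^2/\max(1, |\bfy_i|)$, which can be rewritten as $|\bfy_i| = \sqrt{m_i}\sqrt{\max(1, |\bfy_i|)}$ or $\bfy_i^2 = m_i\max(1, |\bfy_i|)$; these two factorizations couple $m_i$ to $\ell_1$ and $\ell_2^2$ respectively.

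Next I would extract two complementary lower bounds on $\sum_i m_i$. Applying Cauchy--Schwarz to the first factorization gives
\[
    \|\bfy\|_1^2 \;=\; \Bigl(\sum_i \sqrt{m_i}\sqrt{\max(1,|\bfy_i|)}\Bigr)^2 \;\le\; \Bigl(\sum_i m_i\Bigr)(n + \|\bfy\|_1),
\]
so $\sum_i m_i \ge \|\bfy\|_1^2/(n + \|\bfy\|_1)$. The second factorization yields $\bfy_i^2 = m_i \max(1,|\bfy_i|) \le m_i \max(1, \|\bfy\|_\infty)$, which summed gives the companion bound $\sum_i m_i \ge \|\bfy\|_2^2/\max(1, \|\bfy\|_\infty)$.

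Finally I would case-split on $\|\bfy\|_\infty$. If $\|\bfy\|_\infty \le \sqrt n$, the second bound alone gives $\sum_i m_i \ge \|\bfy\|_2^2/\sqrt n \ge n^{-1/2}\min\{\|\bfy\|_1, \|\bfy\|_2^2\}$, finishing this case. If $\|\bfy\|_\infty > \sqrt n$, the single coordinate attaining the maximum already contributes $\sum_i m_i \ge \|\bfy\|_\infty > \sqrt n$; a further sub-split on $\|\bfy\|_1$ concludes the argument: when $\|\bfy\|_1 \le n$, $n^{-1/2}\min \le n^{-1/2}\|\bfy\|_1 \le \sqrt n \le \sum_i m_i$, and when $\|\bfy\|_1 > n$, the first Cauchy--Schwarz bound yields $\sum_i m_i \ge \|\bfy\|_1/2 \ge n^{-1/2}\min$ (valid for $n \ge 4$, absorbed into the hidden constant). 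The main obstacle is setting up this case analysis so that every regime is covered without a gap; the delicate subcase is $\|\bfy\|_\infty > \sqrt n$ with $\min = \|\bfy\|_2^2 > n$, where the crude bound $\sum_i m_i > \sqrt n$ alone is insufficient, but the implicit inequality $\|\bfy\|_2^2 \le \|\bfy\|_1$ forces $\|\bfy\|_1 > n$ so that the first bound rescues the argument. The remainder is routine bookkeeping on top of the standard quadratic/linear decomposition of the Huber function.
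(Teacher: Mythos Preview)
Your proof is correct. The paper does not prove this statement itself—it is quoted from \cite{ClarksonWoodruff:2015b}—so there is no paper proof to compare against directly. The paper does, however, prove the sharper variants (versions 2 through 4, via Lemma~\ref{lem:l2-lq-inequality} in Appendix~\ref{sec:huber-appendix}), and the method there is quite different from yours: rather than Cauchy--Schwarz plus a case split on $\norm{\bfy}_\infty$, the paper first reduces to extremal ``spiked'' vectors having only two distinct coordinate magnitudes, then explicitly optimizes the ratio over a one-parameter family. Your argument is cleaner and more self-contained for the $n^{-1/2}$ bound. It is worth noting that your two intermediate bounds $\sum_i m_i \ge \norm{\bfy}_1^2/(n+\norm{\bfy}_1)$ and $\sum_i m_i \ge \norm{\bfy}_2^2/\max(1,\norm{\bfy}_\infty)$ are each individually tight on the extremal instance (one coordinate equal to $n^{1/3}$, the rest equal to $n^{-1/3}$); the loss from $n^{-1/3}$ to $n^{-1/2}$ in your write-up comes solely from the coarse threshold $\norm{\bfy}_\infty \lessgtr \sqrt n$ in the final case analysis, so a finer combination of the same two bounds would recover the sharp exponent of version~2.
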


It can be shown that the above lemma implies that the Huber sensitivities are within a factor of $O(n^{1/2})$ of the sum of the $\ell_1$ and $\ell_2$ sensitivities. 
% That is,
% \begin{align*}
%     \frac{H(\bfy_i)}{\norm*{\bfy}_H^2} &\leq O(n^{1/2})\frac{H(\bfy_i)}{\min\{\norm*{\bfy}_1, \norm*{\bfy}_2^2\}} \\
%     &\leq O(n^{1/2})\frac{\min\{\abs{\bfy_i}, \abs{\bfy_i}^2\}}{\min\{\norm*{\bfy}_1, \norm*{\bfy}_2^2\}} \\
%     &\leq O(n^{1/2})\bracks*{\frac{\abs{\bfy_i}}{\norm*{\bfy}_1} + \frac{\abs{\bfy_i}^2}{\norm*{\bfy}_2^2}}.
% \end{align*}
This motivates the idea of sampling the rows of $\bfA$ with probability proportional to the sum of the $\ell_1$ and $\ell_2$ Lewis weights, oversampled by a factor of $O(n^{1/2})$. This is indeed how \cite{ClarksonWoodruff:2015b} proceeds.

The recursion $n\to n^{1/2}d^2$ solves to a final row count of around $d^4$, which is quadratically worse than our general loss function result of $d^2$ using our new sensitivity upper bounds and our general framework. To improve this further, first note that two improvements can be made to the above argument. First, by using the Huber inequality in a different way, we can use it in conjunction with the \cite{BourgainLindenstraussMilman:1989} net bounds, which reduces the row count in one recursive application to roughly $n^{1/2}d$ rather than $n^{1/2}d^2$ (see Lemma \ref{lem:blm-net-for-huber}). This reduces the overall row count to $d^2$ after solving for the recursion, but this still does not beat our general purpose sensitivity sampling algorithm, despite the use of the \cite{BourgainLindenstraussMilman:1989} nets. The second improvement is that the Huber inequality as proved in \cite{ClarksonWoodruff:2015b} is in fact loose by a polynomial factor in $n$, and can be improved to the following:

\begin{lemma}[Huber Inequality version 2]
Let $\bfy\in\mathbb R^n$. Then,
\[
    \norm*{\bfy}_H^2 = \sum_{i=1}^n H(\bfy_i) \geq \Omega(n^{-1/3})\min\braces{\norm*{\bfy}_1, \norm*{\bfy}_2^2}
\]
\end{lemma}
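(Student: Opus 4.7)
The plan is to upper bound $\norm*{\bfy}_1$ and $\norm*{\bfy}_2^2$ separately by polynomials in $h := \norm*{\bfy}_H$, and then balance the two bounds by a case split at threshold $h = n^{1/6}$. Set $\tau = 1$ without loss of generality (the general case follows by rescaling $\bfy \mapsto \bfy/\tau$) and write $a_i = \abs*{\bfy_i}$. Partition $[n]$ into ``big'' coordinates $B = \{i : a_i > 1\}$ and ``small'' coordinates $S = \{i : a_i \leq 1\}$; on $B$ we have $H(a_i) \geq a_i/2$, equivalently $a_i \leq 2 H(a_i)$, while on $S$ we have $H(a_i) = a_i^2/2$.

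The first bound, $\norm*{\bfy}_1 \leq 2h^2 + \sqrt{2n}\, h$, follows by (i) summing $\sum_B a_i \leq 2 \sum_B H(a_i) \leq 2 h^2$ on the big part, and (ii) Cauchy--Schwarz on the small part, $\sum_S a_i = \sum_S \sqrt{2 H(a_i)} \leq \sqrt{2 \abs*{S}}\, h \leq \sqrt{2n}\, h$. The second bound, $\norm*{\bfy}_2^2 \leq 4h^4 + 2h^2$, follows by (i) $\sum_S a_i^2 = 2\sum_S H(a_i) \leq 2 h^2$ on the small part, and (ii) the elementary inequality $\sum_B a_i^2 \leq \left(\sum_B a_i\right)^2$ (valid for nonneg entries) combined with the big-part bound from step (i) of the first bound, giving $\sum_B a_i^2 \leq (2h^2)^2 = 4 h^4$. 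Now a case split finishes the job: if $h \geq n^{1/6}$ then $\sqrt n\, h \leq n^{1/3} h^2$, so $\norm*{\bfy}_1 = O(n^{1/3} h^2)$; if $h \leq n^{1/6}$ then $h^4 \leq n^{1/3} h^2$, so $\norm*{\bfy}_2^2 = O(n^{1/3} h^2)$. In either case $\min\{\norm*{\bfy}_1, \norm*{\bfy}_2^2\} \leq O(n^{1/3}) \norm*{\bfy}_H^2$, proving the lemma after rearrangement.

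The main obstacle---and the source of the improvement over Version 1---is identifying the complementary quartic control $\norm*{\bfy}_2^2 \lesssim h^4$ on the big coordinates. Indeed, the $\norm*{\bfy}_1$ bound alone already yields the earlier $n^{-1/2}$ estimate (since $\sqrt n\, h \leq n^{1/2} h^2$ whenever $h \geq 1$, and the regime $h \leq 1$ is handled trivially by $\norm*{\bfy}_2^2 \leq O(h^2)$), so no insight about $\norm*{\bfy}_2^2$ is needed there. The new ingredient is the observation that on $B$, the crude inequality $\sum_B a_i^2 \leq (\sum_B a_i)^2$ upgrades a linear $\ell_1$ bound into a quartic $\ell_2^2$ bound, implicitly exploiting $\max_{i\in B} a_i \leq 2h^2$. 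This shifts the crossover point from $h = n^{1/4}$ down to $h = n^{1/6}$, yielding the sharper $n^{-1/3}$ exponent. Sharpness is witnessed by a single big coordinate of magnitude $\Theta(n^{1/3})$ and $\Theta(n)$ small coordinates of magnitude $\Theta(n^{-1/3})$, for which $\norm*{\bfy}_1 \asymp \norm*{\bfy}_2^2 \asymp n^{2/3}$ while $\norm*{\bfy}_H^2 \asymp n^{1/3}$.
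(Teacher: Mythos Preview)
Your proof is correct. The paper does not give a standalone proof of Version~2; it is a special case (take $\gamma=1$, $q=1$) of the general $\ell_2$--$\ell_q$ inequality proved in the appendix. That proof proceeds quite differently: after handling edge cases, it reduces to a canonical ``spiked'' vector form (a few large coordinates of size $(\gamma\norm{\bfy}_H^2)^{1/q}$ and many small coordinates of size $\norm{\bfy}_H/\sqrt n$), parameterizes by a single scale $r$ with $\norm{\bfy}_H=n^r$, and then optimizes the exponent explicitly.

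Your argument is more elementary and self-contained for this particular statement: a big/small partition, Cauchy--Schwarz on the small part, the crude bound $\sum_B a_i^2\le(\sum_B a_i)^2$ on the big part, and a clean case split at $h=n^{1/6}$. The paper's structural reduction buys generality---it handles the sensitivity-restricted version (Version~3), the optimization over all $p\in[q,2]$ (Version~4), and arbitrary $q\in(0,2)$---whereas your approach is tailored to $p\in\{1,2\}$ and $\gamma=1$. For Version~2 alone, your route is shorter; to recover the $(\gamma n)^{-\beta}$ rates with $\beta=3-2\sqrt2$ one would need the paper's parametric optimization.
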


This lemma is tight up to constant factors\footnote{Consider the vector with one coordinate with $n^{1/3}$ and $(n-1)$ coordinates with $n^{-1/3}$.}, and gives a recursion of roughly $n\to n^{1/3}d$, giving
\[
    O(d^{3/2}\poly(\eps^{-1},\log n))
\]
rows, which shaves a factor of approximately $\sqrt d$ over the na\"ive Bernstein bound over a net. 

\paragraph{Storing Large Huber Sensitivities.}

In order to further improve upon this bound, we crucially make use of our improved sensitivity bounds from Section \ref{sec:sensitivity-bounds} and a generalized version of the above Huber inequality lemma that is parameterized by an upper bound on the size of the entries of $\bfy$. 
\begin{lemma}[Huber Inequality version 3]
    Let $\bfy\in\mathbb R^n$ and let $0 < \gamma \leq 1$. Let
    \[
        T \supseteq \braces*{i \in [n] :H(\bfy_i) \leq \gamma\norm*{\bfy}_H^2}.
    \]
    Then, for some constant $c>0$, at least one of the following bounds holds:
    \begin{align*}
        \norm*{\bfy\mid_T}_H^2 = \sum_{i\in T} H(\bfy_i) &\geq c\frac{1}{(\gamma n)^{1/3}}\min\braces{\norm*{\bfy\mid_T}_1, \norm*{\bfy\mid_T}_2^2} \\
        \norm*{\bfy}_H^2 = \sum_{i=1}^n H(\bfy_i) &\geq c\gamma\min\braces{\norm*{\bfy}_1, \norm*{\bfy}_2^2}.
    \end{align*}
\end{lemma}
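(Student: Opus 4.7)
The plan is to generalize the proof of Huber Inequality version~2 by a case analysis that exploits the upper bound on the Huber contributions of entries in $T$. Throughout, set $K := \norm*{\bfy}_H^2$, $K_T := \norm*{\bfy\mid_T}_H^2$, and take $\tau = 1$ without loss of generality. Partition $[n]$ into $S := \{i : |\bfy_i| \leq 1\}$ and $L := \{i : |\bfy_i| > 1\}$, so that on $S$ one has $\bfy_i^2 \leq 2H(\bfy_i)$, and on $L$ one has $|\bfy_i| \leq 2H(\bfy_i)$. Writing $\bar T := [n] \setminus T$, the defining condition of $T$ forces every $i \in \bar T$ to satisfy $H(\bfy_i) > \gamma K$, and hence $|\bar T| \leq 1/\gamma$ by pigeonhole. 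For any $i \in T \cap L$ whose Huber contribution is at most $\gamma K$, one has $|\bfy_i| \leq 2\gamma K$, so the quantity $M_T^L := \max_{i \in T \cap L} |\bfy_i|$ is bounded by $2\gamma K$.

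The first step is to derive two generic bounds on the restricted norms. By decomposing $\bfy\mid_T$ into its $S$ and $L$ parts and applying Cauchy--Schwarz to the small-entry part, one obtains
\[
\norm*{\bfy\mid_T}_2^2 \leq 2K_T(1 + M_T^L) \qquad \text{and} \qquad \norm*{\bfy\mid_T}_1 \leq \sqrt{2nK_T} + 2K_T.
\]
The proof then splits on $K$ using two thresholds. If $K \leq C_1 (n/\gamma^2)^{1/3}$, then $M_T^L \leq 2\gamma K = O((\gamma n)^{1/3})$, so the $\ell_2$ bound above yields $\norm*{\bfy\mid_T}_2^2 = O(K_T (\gamma n)^{1/3})$, establishing the first conclusion. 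If $K \geq C_2 n\gamma^2$, applying the $\ell_1$ bound above to all of $\bfy$ gives $\norm*{\bfy}_1 \leq \sqrt{2nK} + 2K = O(K/\gamma)$, establishing the second conclusion.

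In the intermediate regime (which can only arise when $\gamma > n^{-1/4}$), I would further split on $K_T$. If $K_T \geq C_3 (n/\gamma^2)^{1/3}$, then $\sqrt{2nK_T} = O(K_T (\gamma n)^{1/3})$, so the $\ell_1$ version of the first step again yields the first conclusion. Otherwise most of the Huber mass lies in $\bar T$, and invoking the already-established Huber Inequality version~2 on the short vector $\bfy\mid_{\bar T}$ (whose length is at most $1/\gamma$) gives
\[
K_{\bar T} \geq c\gamma^{1/3}\min\bigl(\norm*{\bfy\mid_{\bar T}}_1, \norm*{\bfy\mid_{\bar T}}_2^2\bigr),
\]
which, combined with the Step~1 bounds on $\norm*{\bfy\mid_T}_p$ (using whichever of $\norm*{\bfy\mid_{\bar T}}_1$ or $\norm*{\bfy\mid_{\bar T}}_2^2$ achieves the minimum on the right-hand side), yields the second conclusion.

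The technical heart of the proof is this intermediate case: the constants $C_1, C_2, C_3$ must be chosen consistently so that the dominant terms of the combined bounds line up with constant prefactors that are independent of $\gamma$ and $n$, and the $\ell_p$ sub-case of the Huber Inequality version~2 applied to $\bfy\mid_{\bar T}$ must be matched with the corresponding Step~1 bound on $\bfy\mid_T$ (an $\ell_1$ bound on one side pairing with an $\ell_1$ bound on the other, and similarly for $\ell_2$). This careful bookkeeping is the main obstacle.
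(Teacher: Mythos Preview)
There is a genuine gap in your intermediate sub-case (b). You claim that applying version~2 to the short vector $\bfy\mid_{\bar T}$ and pairing it with the Step~1 bounds on $\bfy\mid_T$ yields the second conclusion $K \geq c\gamma\min\{\|\bfy\|_1,\|\bfy\|_2^2\}$. But the Step~1 bound on $\|\bfy\mid_T\|_1$ is too weak for this: in sub-case (b) you only know $K_T \lesssim (n/\gamma^2)^{1/3}$, so $\gamma\|\bfy\mid_T\|_1$ can be of order $\gamma\sqrt{nK_T} \approx (\gamma n)^{2/3}$, and there is no reason this is $O(K)$ when $K$ is merely $\gtrsim (n/\gamma^2)^{1/3}$. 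Concretely, take $\gamma = 1/2$ and let $\bfy$ have one entry equal to $An^{1/3}$ together with $n-1$ entries equal to $n^{-1/3}$. Then $K \approx (A+1)n^{1/3}$, $K_T \approx n^{1/3}$, $T = \{2,\ldots,n\}$, and $\min\{\|\bfy\|_1,\|\bfy\|_2^2\} \approx n^{2/3}$; the second conclusion would require $n^{1/3} \gtrsim n^{2/3}$, which fails for large $n$. By taking $A$ just above $4^{1/3}C_1$ this example lands in your intermediate regime, and since $K_T \approx n^{1/3}$ is independent of $A$ it falls into sub-case (b) whenever $C_3 > 4^{-1/3}$. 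The \emph{first} conclusion does hold here (via $\ell_2$, as $\|\bfy\mid_T\|_2^2 \approx n^{1/3} \approx K_T$), so your case analysis has routed this instance to the wrong disjunct; and no consistent tuning of $C_1,C_2,C_3$ avoids the problem, since $A$ can always be pushed just above whatever threshold you fix.

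The paper does not prove version~3 in isolation; its argument (for the more general $\ell_2$--$\ell_q$ inequality, Lemma~\ref{lem:l2-lq-inequality}, in the appendix) is structured quite differently. It splits once on whether $K > 2K_T$; when $K \leq 2K_T$ it restricts to $T$ (so every surviving entry satisfies $M(\bfy_i) \leq 2\gamma K_T$) and then \emph{reduces to an explicit one-parameter family of spiked extremal vectors}, computing the worst-case $\ell_p$--to--$M$ ratio by a direct optimization. That extremal reduction is the main technical device, and your proposal has no analogue of it. As a side remark, the paper's branch $K > 2K_T$ also targets the second conclusion and appears to run into the same obstruction on the example above, so that part of the casework seems to require equal care.
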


By directly including the rows of $\bfA$ with Huber sensitivity at least $\gamma$, we exactly preserve the Huber norm inside $\overline T = [n]\setminus T$ for every $\bfy$. On the remaining coordinates inside $T$, we then have an improved Huber inequality, which implies an improved sampling bound. By balancing the number of rows which we directly include, which is roughly $d/\gamma$, and the sampling bound inside $T$, which is roughly $(1/\gamma + (\gamma n)^{1/3})d$, we obtain a bound of roughly $n^{1/4}d$ rows by choosing $\gamma = n^{-1/4}$ at each step. By recursively applying this result, we obtain an improved sampling bound of
\[
    O(d^{4/3}\poly(\eps^{-1},\log n)).
\]

\paragraph{Comparing to Every $p\in[1,2]$.}

Finally, to achieve our final optimization, we further drive down the ratio between $\norm*{\bfy}_H^2$ and $\norm*{\bfy}_p^p$ \emph{by choosing the best $p\in[1,2]$ for each $\bfy$}:

\begin{lemma}[Huber Inequality ver.\ 4]\label{lem:generalized-huber-inequality}
    Let $\bfy\in\mathbb R^n$, $\alpha \in [0,1/2]$, $\gamma = n^{-\alpha}$ with $2/n \leq \gamma \leq 1$. Let
    \[
        T \supseteq \braces*{i\in[n] : H(\bfy_i) \leq \gamma \norm*{\bfy}_H^2}.
    \]
    Then, for some $c>0$ and $\beta = 3-2\sqrt 2\approx 0.17157$, at least one of the following bounds holds:
    \begin{align*}
        \norm*{\bfy\mid_T}_H^2 &\geq c\frac1{(\gamma n)^{\beta}}\min_{p\in[1,2]}\norm*{\bfy\mid_T}_p^p \\
        \norm*{\bfy}_H^2 &\geq c\gamma\min_{p\in\{1,2\}}\norm*{\bfy}_p^p
    \end{align*}
\end{lemma}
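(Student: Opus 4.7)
I would prove this lemma by generalizing the proof of the preceding Huber inequality (version~3, with exponent $1/3$) to take advantage of the continuous range $p \in [1,2]$. The improved exponent $\beta = 3 - 2\sqrt 2$ emerges from an optimization over $p$ that balances the $\ell_p^p$ contributions of large- and small-magnitude entries of $\bfy\mid_T$.

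\textbf{Setup.} Normalize the Huber threshold to $\tau = 1$ and set $B \coloneqq \norm*{\bfy}_H^2$. Decompose $T = T_l \sqcup T_s$ where $T_l \coloneqq \{i \in T : |\bfy_i| > 1\}$ (linear regime of $H$) and $T_s \coloneqq T \setminus T_l$ (quadratic regime). For each $i$ in the mandated subset $\{i : H(\bfy_i) \leq \gamma B\} \subseteq T$, the constraint yields $|\bfy_i| \leq M_l = O(\gamma B)$ on $T_l$ and $|\bfy_i| \leq M_s = O(\sqrt{\gamma B})$ on $T_s$. Residual indices of $T$ outside this subset number at most $1/\gamma$ and are handled by falling back on the second inequality of the statement.

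For each $p \in [1,2]$, I would bound
\begin{align*}
\sum_{i \in T_l} |\bfy_i|^p &\leq M_l^{p-1} \norm*{\bfy\mid_{T_l}}_1 \leq 2\, M_l^{p-1} \norm*{\bfy\mid_T}_H^2, \\
\sum_{i \in T_s} |\bfy_i|^p &\leq |T_s|^{1-p/2} \norm*{\bfy\mid_{T_s}}_2^p \leq n^{1-p/2}\bigl(2 \norm*{\bfy\mid_T}_H^2\bigr)^{p/2},
\end{align*}
using $|\bfy_i|/2 \leq H(\bfy_i)$ on $T_l$, $\bfy_i^2/2 = H(\bfy_i)$ on $T_s$, and Jensen's inequality for the concave map $x \mapsto x^{p/2}$. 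The first term grows in $p$ while the second decays, so the sum is minimized at some $p^* \in [1,2]$. Parametrizing a worst-case two-scale vector by $v_1 = n^a$ (large) and $v_2 = n^{-b}$ (small) with scale exponents $a,b \geq 0$, the constraints $v_1 \leq O(\gamma B)$, $v_2 \leq O(\sqrt{\gamma B})$, and $k_1 + k_2 \leq n$ together yield $a + 2b \leq 1 - \alpha$. The balanced choice $p^* = (a+2b)/(a+b) \in [1,2]$ gives ratio $\norm*{\bfy\mid_T}_p^p / \norm*{\bfy\mid_T}_H^2 \leq n^{ab/(a+b)}$, and the extremal problem $\max\{ab/(a+b) : a,b \geq 0,\ a+2b \leq 1-\alpha\}$ is solved by setting $s \coloneqq a+2b$ and differentiating $(s-2b)b/(s-b)$ in $b$, producing the quadratic $2b^2 - 4bs + s^2 = 0$ with relevant root $b = s(2-\sqrt 2)/2$ and maximum value $s(3-2\sqrt 2) \leq (1-\alpha)(3-2\sqrt 2)$. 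Since $(\gamma n)^\beta = n^{(1-\alpha)\beta}$, this yields the first inequality with $\beta = 3 - 2\sqrt 2$.

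The main obstacles are (i) reducing the analysis for general $\bfy$ to the two-scale extremal case, which I would handle via a dyadic level-set decomposition of $|\bfy|$ together with a convexity argument showing that the worst-case ratio is attained on two-scale vectors; and (ii) the algebraic optimization above, particularly the boundary cases $a = 0$ (no large entries, reducing to the $p = 2$ bound of version~3) and $b = 0$ (no small entries, reducing to the $p = 1$ bound), as well as the superset case for $T$, which is absorbed into the second inequality.
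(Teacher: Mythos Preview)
Your proposal is correct and is essentially the paper's proof reparametrized: both split into $|\bfy_i|\gtrless 1$ parts, bound each part's $\ell_p^p$ mass by $M_l^{p-1}\norm{\cdot}_1$ and $n^{1-p/2}\norm{\cdot}_2^p$ respectively, reduce to a two-scale spiked extremal vector, and optimize over $p$ to get the exponent $(1-\alpha)(3-2\sqrt 2)$. The paper parametrizes the spiked vector by a single $r$ via $\norm{\bfy}_H=n^r$ (large coordinates $\gamma n^{2r}$, small coordinates $n^{r-1/2}$) and closes the resulting one-variable optimization with an AM--GM substitution rather than your $(a,b)$ calculus---the two are related by $a=2r-\alpha$, $b=\tfrac12-r$, and your objective $ab/(a+b)$ is exactly the paper's exponent---while the second-inequality case is triggered by testing $\norm{\bfy}_H^2>2\norm{\bfy\mid_T}_H^2$ rather than by counting residual indices.
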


In fact, we prove a generalized bound for the $\ell_2$-$\ell_q$ loss for any $q\in(0,2)$ in Lemma \ref{lem:l2-lq-inequality}. The interval $p\in[1,2]$ can be discretized in increments of $\frac1{\log n}$, so with $O(\log n)$ applications of \cite{BourgainLindenstraussMilman:1989} nets, we can always find a $p$ within an additive $\frac1{\log n}$ of the optimal $p$ for every net vector $\bfy$, which only affects Lemma \ref{lem:generalized-huber-inequality} by constant factors whenever $\bfy$ has entries bounded by $\poly(n)$. This is formalized in Corollary \ref{cor:generalized-huber-inequality}. By proceeding as previously discussed, we arrive at our final bound of
\[
    O(d^{4-2\sqrt 2}\poly(\eps^{-1},\log n)).
\]

\paragraph{Extensions to $\ell_2$-$\ell_q$ Loss.}

We generalize our results to the $\ell_2$-$\ell_q$ loss for $q\in(0,2)$. As $q$ ranges from $0$ to $1$ to $2$, the $\ell_2$-$\ell_q$ interpolates between the Tukey, Huber, and $\ell_2$ losses up to constant factors, and provides a natural generalization of these loss function. 

\begin{lemma}[$\ell_2$-$\ell_q$ Inequality]\label{lem:l2-lq-inequality}
    Let $q\in(0,2)$ and define
	\[
		M(x) = \begin{cases}
			\abs*{x}^2 & \text{if $\abs{x} \leq 1$} \\
			\abs*{x}^q & \text{if $\abs{x} > 1$}
		\end{cases}.
	\]
	Let $\bfy\in\mathbb R^n$ and let $\alpha \in [0,q/2]$ and $\gamma = n^{-\alpha}$ with $2/n \leq \gamma \leq 1$. Let
    \[
        T \supseteq \braces*{i\in[n] : M(\bfy_i) \leq \gamma \norm*{\bfy}_M^2}.
    \]
    Then, for some constant $c>0$, at least one of the following bounds holds:
    \begin{equation}\label{eq:l2-lq-whole-vector}
    \begin{aligned}
        \norm*{\bfy}_M^2 &\geq c\gamma^{2/q - 1}\min_{p\in\{q,2\}}\norm*{\bfy}_p^p \\
        \norm*{\bfy\mid_T}_M^2 &\geq c\frac1{(\gamma n)^{\beta}}\min_{p\in[q,2]}\norm*{\bfy\mid_T}_p^p
    \end{aligned}
	\end{equation}
	where
	\[
		\beta = \frac1{2/q - 1}\bracks*{(2/q + 1) - 2\sqrt{2/q}}.
	\]
\end{lemma}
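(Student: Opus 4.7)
The plan is to prove this lemma via a three-stage argument: a decomposition of $T$ by magnitude, an interpolation step that optimizes the exponent $p\in[q,2]$, and a dichotomy that transfers to bound (a) when the argument loses control. First I would partition $T = T_S \sqcup T_L$ where $T_S = \{i \in T : |\bfy_i| \leq 1\}$ and $T_L = \{i \in T : |\bfy_i| > 1\}$, so that $M(\bfy_i) = |\bfy_i|^2$ on $T_S$ and $M(\bfy_i) = |\bfy_i|^q$ on $T_L$. Writing $R = \|\bfy\|_M^2$, the hypothesis on $T$ gives pointwise bounds $|\bfy_i| \leq \sqrt{\gamma R}$ on $T_S$ and $|\bfy_i| \leq (\gamma R)^{1/q}$ on $T_L$; note that if $\gamma R < 1$ then $T_L$ is empty and taking $p=2$ makes bound (b) trivial, so I would reduce to $\gamma R \geq 1$.

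Next I would bound $\|\bfy|_T\|_p^p$ for arbitrary $p \in [q,2]$ by applying H\"older's inequality with exponent $2/p \geq 1$ on $T_S$ to get $\sum_{T_S} |\bfy_i|^p \leq M_{T_S}^{p/2} |T_S|^{1-p/2}$, and by the elementary pointwise bound $|\bfy_i|^{p-q} \leq (\gamma R)^{(p-q)/q}$ on $T_L$ to get $\sum_{T_L} |\bfy_i|^p \leq (\gamma R)^{(p-q)/q} M_{T_L}$. Combining with $M_{T_S}, M_{T_L} \leq R_T := \|\bfy|_T\|_M^2$ and $|T_S| \leq n$ yields
\[
\|\bfy|_T\|_p^p \;\leq\; R_T \Bigl[(n/R_T)^{1-p/2} + (\gamma R)^{(p-q)/q}\Bigr].
\]
Now I would optimize over $p\in[q,2]$. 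Setting $X = \log(n/R_T)$, $Y = \log(\gamma R)$, the two terms equilibrate at a unique $p^* = 2q(X+Y)/(qX+2Y)$; up to a factor of $2$, the optimal value is $R_T \exp\bigl(XY(2-q)/(qX+2Y)\bigr)$. The key calculation is that the ratio $\frac{XY(2-q)}{(qX+2Y)(X+Y)}$, viewed as a function of $s = X/Y$, has derivative proportional to $(2-qs^2)$, hence is maximized at $s = \sqrt{2/q}$ with maximum value $(\sqrt{2/q}-1)/(\sqrt{2/q}+1) = \beta$. This gives $\|\bfy|_T\|_{p^*}^{p^*} \leq 2 R_T (\gamma n R/R_T)^\beta$.

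The final hurdle is eliminating the stray factor $(R/R_T)^\beta$, which is the main obstacle. I would split into cases: if $R_T \geq c_0 R$ for a constant $c_0$ chosen relative to $c$, then $(R/R_T)^\beta = O(1)$ and bound (b) is immediate. Otherwise $R_T < c_0 R$, i.e., a $(1-c_0)$ fraction of the $M$-mass lives in $\bar T$, where every coordinate has $M(\bfy_i) > \gamma R$, and so $|\bar T| \leq 1/\gamma$ and (under $\gamma R \geq 1$) every $\bar T$-coordinate lies in the $|\cdot|^q$ regime. I would then verify bound (a) directly: the contribution $\|\bfy|_{\bar T}\|_q^q = \sum_{\bar T} M(\bfy_i) \leq R$ handles the heavy part, while $\|\bfy|_T\|_q^q$ is controlled by a H\"older step ($\leq R_T^{q/2} n^{1-q/2}$), and the hypothesis $\gamma = n^{-\alpha}$ with $\alpha \leq q/2$ is exactly what is needed for the $\gamma^{2/q-1}$ factor in (a) to absorb the resulting $n^{1-q/2}$ overhead. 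Corner cases where $p^* \notin [q,2]$ correspond to degenerate regimes (one of $T_S, T_L$ empty, or $\gamma R$ on the boundary) and reduce to taking $p\in\{q,2\}$ in the bound of Step 2.
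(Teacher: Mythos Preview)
Your approach is structurally the same as the paper's: a dichotomy on whether $R_T$ is comparable to $R$, and in the main branch an optimization over $p\in[q,2]$ that yields the exponent $\beta$. You do this via a direct H\"older bound and equilibration of the two terms, while the paper instead reduces to an explicit two-scale ``spiked'' vector and applies AM--GM; the two routes are equivalent, and your Steps 2--4 (including the derivative computation giving $\beta=(\sqrt{2/q}-1)/(\sqrt{2/q}+1)$) are correct and match the paper's calculation.

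The gap is in Step 5, case $R_T<c_0R$. Your claim that ``$\alpha\le q/2$ is exactly what is needed for the $\gamma^{2/q-1}$ factor in (a) to absorb the $n^{1-q/2}$ overhead'' has the inequality reversed. Absorbing $n^{1-q/2}$ by $\gamma^{2/q-1}$ means $\gamma^{2/q-1}n^{1-q/2}=n^{-\alpha(2/q-1)+(1-q/2)}=O(1)$, which forces $\alpha\ge q/2$, not $\alpha\le q/2$. Concretely, take $q=1$, $\alpha=0.01$, one coordinate equal to $n^{0.3}$ and $n-1$ coordinates equal to $n^{-1/2}$: then $T=T_0$ is the $n-1$ small coordinates, $R\approx n^{0.3}$, $R_T\approx 1$, $\gamma^{2/q-1}=n^{-0.01}$, and bound (a) would require $n^{0.3}\ge c\,n^{-0.01}\min(n^{0.5},n^{0.6})$, which fails. (In this example $T_L=\emptyset$, so bound (b) \emph{does} hold trivially at $p=2$; your case split never exploits this.) The paper's own proof makes the same mistake at this point, asserting $\|\bfy\|_M^2=\Theta(\|\bfy\|_q^q)$ whenever $\gamma R\ge 1$ and $R>2R_T$, which the same example refutes since $\|\bfy\|_1\approx n^{1/2}\gg R$. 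So you have faithfully replicated the paper's argument, including its flaw; a correct proof needs a finer case analysis in the regime $R_T\ll R$ (for instance, separately handling $T_L=\emptyset$ before attempting bound (a)).
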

\begin{proof}
    A proof of this bound can be found in Appendix \ref{sec:huber-appendix}.
\end{proof}

For $q\in[1,2)$, the $1/\gamma^{2/q-1}$ distortion in Equation \eqref{eq:l2-lq-whole-vector} is smaller than the $1/\gamma$ factor incurred from keeping $M$-sensitivities at least $\gamma$, so we can balance the parameters as $1/\gamma = (\gamma n)^\beta$ as before, which leads to a recursion that gives us a bound of $n = d^{1+\beta} \poly((\log n)/\eps)$. For $q\in(0,1)$, the $1/\gamma^{2/q-1}$ distortion in Equation \eqref{eq:l2-lq-whole-vector} is worse than $1/\gamma$, which means we must balance $1/\gamma^{2/q-1} = (\gamma n)^{\beta}$, or $\gamma = n^{-\beta/(2/q-1+\beta)}$, which gives a worse bound of $n = d^{\gamma}\poly((\log n)/\eps)$ for
\[
	\gamma = \frac{2/q-1+\beta}{2/q-1+\beta(2-2/q)}.
\]
This is better than a $d^2$ bound as long as $q \geq (\sqrt 5-1)^2/8 \approx 0.19098$.

\begin{figure}[ht]
	\centering
	\begin{tikzpicture}[scale=3]
		\newcommand{\offset}{0.6}

	  \draw[->] (-0.5,0) -- (2.2,0) node[right] {$q$};
	  \draw[->] (0,-0.5) -- (0,2.2 - \offset);
	  \draw[domain=1:1.995,smooth,variable=\q,thick,line width=0.9mm,samples=100,blue] plot ({\q},{1 + ((1/(2/\q-1) * ((2/\q+1)-2 * sqrt(2/\q)))) - \offset});
	  \draw[domain=0.191:1,smooth,variable=\q,thick,line width=0.9mm,samples=100,red] plot ({\q},{(2/\q - 1 + ((1/(2/\q-1) * ((2/\q+1)-2 * sqrt(2/\q))))) / (2/\q - 1 + ((1/(2/\q-1) * ((2/\q+1)-2 * sqrt(2/\q)))) * (2 - 2/\q)) - \offset});
	  \draw[thick,line width=0.9mm] (0,2- \offset)--(0.2,2- \offset);
	  
	  \node[below] (a) at (0.6,1.1) {\color{red}$d^{\gamma}$};
	  \node[below] (a) at (1.5,0.9) {\color{blue}$d^{1+\beta}$};

	% y axis
	\draw[gray, dashed] (-0.5,1 - \offset)--(2.2,1 - \offset);
	\node[left] (a) at (-0.5,1 - \offset) {$d$};

	\draw[gray, dashed] (-0.5,2 - \offset)--(2.2,2 - \offset);
	\node[left] (a) at (-0.5,2 - \offset) {$d^2$};

	\draw[gray, dashed] (-0.5,1.172 - \offset)--(2.2,1.172 - \offset);
	\node[left] (a) at (-0.5,1.172 - \offset) {$d^{4-2\sqrt 2}$};

	% x axis 
	\draw[gray, dashed] (1,-0.5)--(1,1.2 - \offset);
	\node[below] (a) at (1,-0.5) {$1$};

	\draw[gray, dashed] (2,-0.5)--(2,1 - \offset);
	\node[below] (a) at (2,-0.5) {$2$};

	\draw[gray, dashed] (0.191,-0.5)--(0.191,2 - \offset);
	\node[below] (a) at (0.191,-0.5) {$\frac{(\sqrt 5-1)^2}{8}\approx 0.191$};

	\end{tikzpicture}
	\caption{Dependence on $d$ for the active regression sample complexity for the $\ell_2$-$\ell_q$ loss. Similar bounds apply to subspace embeddings as well.}
	\label{fig:l2-lq}
\end{figure}
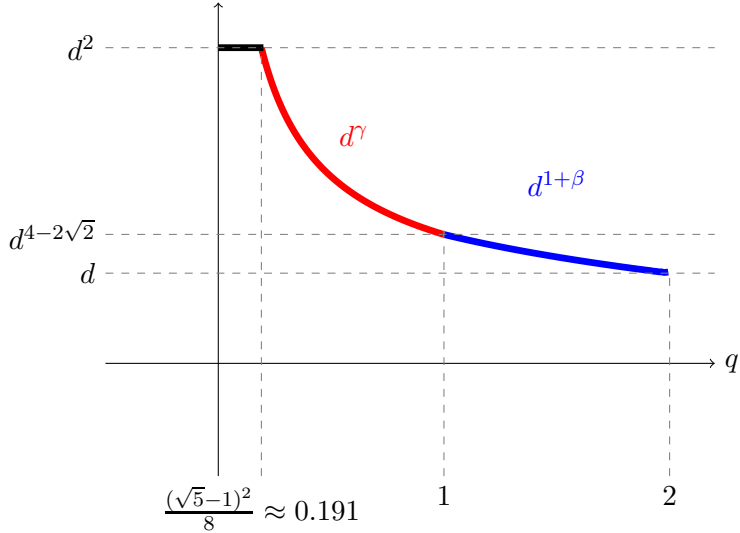

\subsection{Conclusions and Future Directions}

In this work, we study the sample complexity of active linear regression for both the $\ell_p$ norm as well as general $M$-estimator losses. 

For the $\ell_p$ norm, we provide optimal algorithms and lower bounds for $p\in(0, 2)$, with $\tilde\Theta(d/\eps^2)$ samples for $p\in(0,1)$ and $\tilde\Theta(d/\eps)$ samples for $p\in(1,2)$. For $p>2$, we provide an upper bound of $\tilde O(d^{p/2}/\eps^p)$, which is optimal in the $d$ dependence and off by a single $\eps$ factor in the $\eps$ dependence, up to polylogarithmic factors. Our algorithms provide the first nontrivial bounds, i.e., sample complexity less than $n$, for $p\in(0,1)\cup(2,\infty)$, while for $p\in(1,2)$, we significantly improve upon the $\tilde O(d^2/\eps^2)$ upper bound of \cite{ChenDerezinski:2021} and answer their main open question. We obtain these results via a two-stage algorithm and a novel sensitivity partitioning technique for every $p$, as well as an iterative improvement argument via strong convexity and Lewis bases to improve the $\eps$ dependence for $p\in(1,2)$. Our result is the first to achieve a linear dependence on $\eps$ for dimension reduction for $\ell_p$ regression for $p\in(1,2)$.

Next, we obtain a new sensitivity bound which achieves optimal total sensitivity bounds for $M$-estimators of at most polynomial growth, which runs in input sparsity time and avoids the use of L\"owner--John ellipsoids. This answers an open question of \cite{TukanMaaloufFeldman:2020} and makes significant progress in the general direction of matrix approximation beyond $\ell_p$ losses. By combining this with our new active regression techniques, we obtain active regression algorithms for general $M$-estimator losses, including the Tukey and Huber losses, which answers an open question of \cite{ChenDerezinski:2021}. 

For the important special case of the Huber loss, we introduce new techniques which bound Huber sensitivities by the sum of $\ell_p$ Lewis weights, which allows us to take advantage of chaining arguments for $\ell_p$ in order to obtain an active regression algorithm making at most $O(d^{4-2\sqrt 2}\poly(\log n,\eps^{-1}))$ queries. Our techniques also give subspace embeddings with the same number of rows. This is the first dimension reduction result for losses other than $\ell_p$ to approximate a $d$-dimensional subspace with fewer than $d^2$ dimensions. This improves over a previous bound of $d^4$ for the Huber loss, which held only for subspace embeddings, and not active regression, in \cite{ClarksonWoodruff:2015b}. 

Finally, our results and techniques give many applications in a wide variety of related problems. Our lower bounds for active regression give improved lower bounds for the sublinear power means problem \cite{CSS2021}; our new sensitivity bounding techniques sharpen and generalize previous results on Orlicz norm subspace embeddings \cite{SongWoodruffZhong:2019} and robust subspace approximation \cite{ClarksonWoodruff:2015a}; our techniques for dimension reduction for the Huber loss gives improved bounds for sparsification for $\gamma_p$ functions for applications in fast algorithms for $\ell_p$ regression \cite{GPV2021, ABKS2021}. We believe that our techniques will be applicable much further, and hope to see more uses in future work.

We conclude with questions that are still left open by our work. Perhaps the most pressing is to resolve the query complexity of active $\ell_p$ regression for $p>2$: our upper bound is $\tilde O(d^{p/2}/\eps^p)$, while the lower bound is $\Omega(d^{p/2}+\eps^{1-p})$. Closing this gap would be interesting. Our bounds are also loose by a factor of $\log\frac1\delta$ for all $p>0$, while we can get an optimal dependence on $\delta$ if we assume knowledge of $\OPT$ (Section \ref{sec:optimal-delta-dep}) and sacrifice a factor of $\eps$. A natural question if one can achieve a simultaneously optimal dependence on $d$, $\eps$, and $\delta$, up to logarithmic factors, and without assumptions. Another gap to close is the query complexity of Huber regression, or more generally $M$-estimator regression, even for just the $d$ dependence: our upper bound is $\tilde O(d^{4-2\sqrt 2}\poly\log n)$ for constant $\eps$, while only a trivial lower bound of $\Omega(d)$ is known.

\section{Preliminaries}\label{sec:prelim}

Throughout, we assume that $p$ is a fixed constant, and thus do not include constants depending on $p$ (e.g. $p$, $2^p$) in our big-O notation. We do include precise constants involving $p$ that appear in the exponents of other terms like $d$ and $1/\epsilon$. For simplicity, we also assume that our design matrix $\bfA\in \R^{n\times d}$ is overdetermined ($n > d$) and has full-rank (i.e., rank $d$). This is without loss of generality, as if $\bfA$ had rank $r < d$ we could replace it with an $n \times r$ basis for its column span and solve the regression problem with that basis as the design matrix. 

Throughout, we will use the following fact, which follows from the convexity of $x^p$ for $x \ge 0$.
\begin{fact}\label{fact:tri} 
For any $p \ge 1$ and any $a,b \in \R$, $|a+b|^p \le 2^{p-1} (|a|^p + |b|^p) = O(|a|^p + |b|^p)$.
\end{fact}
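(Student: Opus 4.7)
The plan is to establish the inequality by combining two standard tools: the triangle inequality for absolute values, followed by convexity of $t \mapsto t^p$ on $[0,\infty)$ for $p \ge 1$. There is no real obstacle here; the only thing to be careful about is applying the convexity inequality in the correct direction and tracking the constant $2^{p-1}$ exactly.

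First I would apply the ordinary triangle inequality $|a+b| \le |a| + |b|$ and raise both sides to the $p$-th power. Since $p \ge 1$ and both sides are nonnegative, the map $t \mapsto t^p$ is monotone nondecreasing on $[0,\infty)$, so this yields $|a+b|^p \le (|a| + |b|)^p$. This reduces the problem to showing $(|a|+|b|)^p \le 2^{p-1}(|a|^p + |b|^p)$, i.e., to a statement purely about nonnegative reals.

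Next I would invoke convexity of $f(t) = t^p$ on $[0,\infty)$, which holds for $p \ge 1$ since $f''(t) = p(p-1)t^{p-2} \ge 0$. By the definition of convexity applied at the midpoint,
\[
\left(\frac{|a| + |b|}{2}\right)^p \;=\; f\!\left(\frac{|a|+|b|}{2}\right) \;\le\; \frac{f(|a|) + f(|b|)}{2} \;=\; \frac{|a|^p + |b|^p}{2}.
\]
Multiplying through by $2^p$ gives $(|a|+|b|)^p \le 2^{p-1}(|a|^p + |b|^p)$, and chaining with the first step yields $|a+b|^p \le 2^{p-1}(|a|^p + |b|^p)$.

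Finally, since the paper treats $p$ as a fixed constant (as noted at the start of Section~\ref{sec:prelim}), the leading factor $2^{p-1}$ is absorbed into the big-O, giving $|a+b|^p = O(|a|^p + |b|^p)$ and completing the proof.
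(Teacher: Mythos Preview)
Your proof is correct and follows essentially the same approach as the paper, which simply notes that the fact follows from the convexity of $x^p$ for $x \ge 0$. Your write-up just makes explicit the two standard steps (triangle inequality, then midpoint convexity) that yield the constant $2^{p-1}$.
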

%\begin{proof}
%By triangle inequality $|a+b|^p \le (|a| + |b|)^p$. Additionally, since $\bfx^p$ is convex for $\bfx \ge 0$,
%\begin{align*}
%(|a| + |b|)^p = \frac{1}{2} \cdot (|2a| + |2b|)^p \le \frac{1}{2}( |2a|^p + |2b|^p) = 2^{p-1} (|a|^p + |b|^p).
%\end{align*}
%\end{proof}

%We have the following triangle-like inequality for $p$ powers, proven in Appendix \ref{app:omitted}.
%\begin{fact}\label{fact:tri} For any $p \ge 1$ and $a,b \in \R$, 
%$$\left | |a - \bfb|^p - |a|^p \right | \le \max \left (p |b| \cdot (|a|+|b|)^{p-1},p |a| \cdot |b|^{p-1} \right )$$.
%\end{fact}
%\begin{proof}
%By triangle inequality, we have $|a + b|^p \le (|a|+|b|)^p$ via triangle inequality, and then using that $\bfx^p$ is convex with derivative $p\cdot\bfx^{p-1}$ we have 
%$$|a - \bfb|^p \le |a|^p + p |b| \cdot (|a|+|b|)^{p-1}.$$
%Similarly, if $|a| \ge |b|$, then $|a-b| \ge |a|-|b| \ge 0$ and so by the convexity of $\bfx^p$:
%\begin{align*}
%|a-b|^p \ge (|a|-|b|)^p \ge |a|^p - p |b| \cdot |a|^{p-1} \ge |a|^p - p |b| \cdot (|a|+|b|)^{p-1}.
%\end{align*}
%If $|a| < |b|$ then $|a-b| \ge |b|-|a| \ge 0$ and so again by the convexity of $\bfx^p$:
%\begin{align*}
%|a-b|^p \ge (|b|-|a|)^p \ge |b|^p - p |a| |b|^{p-1} \ge |a|^p - p |a| \cdot |b|^{p-1}.
%\end{align*}
%
%
%\end{proof}

We denote the maximum of two real numbers $a,b$ by $a\lor b$ and the minimum as $a\land b$. We access the coordinates of vectors as $\bfv_i$ or $\bfv(i)$, which means the $i$th coordinate of $\bfv\in\mathbb R^n$.

\subsection{Sensitivities and Lewis Weights}
We use two central and closely related notions of matrix row importance for $\ell_p$ linear regression.

\begin{definition}[$\ell_p$ sensitivity]
For any $\bfA \in \R^{n \times d}$ and $0 < p < \infty$ define the $\ell_p$ sensitivity of row $i$ for $i\in[n]$ as:
\begin{align*}
\bfs^p_i(\bfA) \coloneqq \max_{\bfx \in \R^d\setminus\{0\}} \frac{|[\bfA\bfx](i)|^p}{\norm{\bfA\bfx}_p^p}.
\end{align*}
\end{definition}
$[\bfA\bfx](i)$ denotes the $i^\text{th}$ entry of the vector $\bfA\bfx$. The $\ell_p$ sensitivity measures how large an entry of a vector in the column span of $\bfA$ can be compared to the total $\ell_p$ norm of that vector. When $p = 2$, $\bfs^2_i(A)$ is equivalent to the \emph{statistical leverage score} of row $i$.
%When this value is large, it is important to consider the row corresponding to that entry when solving $\ell_p$ regression problems, as missing it could lead to a very large deviation from the optimum. 
A related, but not equivalent, generalization of the leverage scores, are the Lewis weights:
\begin{definition}[$\ell_p$ Lewis weight \cite{CohenPeng:2015}]\label{def:lewis}
For any $\bfA \in \R^{n \times d}$ and $0<p<\infty$ the $\ell_p$ Lewis weights $\{\bfw^p_1(\bfA), \ldots, \bfw^p_n(\bfA) \}$ are the unique set of weights such that, if we let $\bfW \in \R^{n \times n}$ be the diagonal matrix with $\bfW(i,i) = \bfw_i^p(\bfA)$, then for all $i$,
\begin{align*}
 \bfw_i^p(\bfA) = \bfs^2_i (\bfW^{1/2-1/p} \bfA).
\end{align*}
Here $\bfs^2_i (\bfW^{1/2-1/p} \bfA)$ is the $i^\text{th}$ leverage score of $\bfW^{1/2-1/p} \bfA$. Note that the leverage scores of a full-rank matrix always sum to $d$, so we have $\sum_{i=1}^n \bfw^p_i(\bfA) = d$. 
\end{definition}

\begin{remark}
	Note that \cite{CohenPeng:2015} only (explicitly) define $\ell_p$ Lewis weights for $p\geq 1$. However, the definition in fact makes sense for the range $p\in(0,1)$ as well, as shown by \cite{SchechtmanZvavitch:2001}.
\end{remark}

From Definition \ref{def:lewis} we can see that for $p = 2$, we have $\bfs^2_i(\bfA) = \bfw^2_i(\bfA)$, so the Lewis weights also correspond to the statistical leverage scores. While not equivalent for other values of $p$, the Lewis weights do upper bound the sensitivities, up to a polynomial in $d$ factor. %it is possible to relate the sensitivities to the Lewis weights.

\begin{lemma}[Lewis weights bound sensitivities, Lemma 3.8 of \cite{ClarksonWangWoodruff:2019}]\label{lem:lwBound}
For $\bfA \in \R^{n \times d}$ and $0<p<\infty$, $$\bfs_i^p(\bfA) \le d^{\max(0,p/2-1)} \cdot \bfw_i^p(\bfA).$$
\end{lemma}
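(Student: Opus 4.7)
The plan is to reduce the sensitivity of row $i$ to the $\ell_2$ leverage score of the reweighted matrix $\bfW^{1/2-1/p}\bfA$, where $\bfW = \operatorname{diag}(\bfw_1^p(\bfA),\ldots,\bfw_n^p(\bfA))$, and then control how $\|\bfW^{1/2-1/p}\bfA\bfx\|_2$ compares to $\|\bfA\bfx\|_p$. Writing $w_j \coloneqq \bfw_j^p(\bfA)$ and $\bfy = \bfA\bfx$ for brevity, Definition \ref{def:lewis} together with the standard $\ell_2$ leverage-score inequality give
\[
w_i^{1-2/p}\,|\bfy_i|^2 \;=\; |[\bfW^{1/2-1/p}\bfA\bfx](i)|^2 \;\leq\; w_i \cdot \|\bfW^{1/2-1/p}\bfA\bfx\|_2^2 \;=\; w_i \cdot T,
\]
where $T \coloneqq \sum_j w_j^{1-2/p}\,|\bfy_j|^2$. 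Rearranging yields $|\bfy_i|^p \leq w_i \cdot T^{p/2}$, so it suffices to show $T \leq d^{\max(0,\,1-2/p)}\,\|\bfy\|_p^2$ and then take a supremum over $\bfx$.

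For $p \geq 2$, the exponent $1 - 2/p$ is nonnegative and I would apply H\"older's inequality to $T$ with conjugate exponents $\tfrac{p}{p-2}$ and $\tfrac{p}{2}$. Since $(1-2/p)\cdot\tfrac{p}{p-2} = 1$ and $\sum_j w_j = d$ (Definition \ref{def:lewis}), this cleanly gives $T \leq d^{(p-2)/p}\|\bfy\|_p^2$, producing exactly the desired $d^{p/2-1}$ factor after raising to the $p/2$ power.

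The main obstacle is the range $p \in (0,2)$, where $1 - 2/p < 0$ and the direct H\"older step above fails. My plan here is a self-bootstrapping argument. First, I would apply the same leverage-score inequality coordinate-wise at every $j$ to obtain the uniform pointwise bound $|\bfy_j| \leq w_j^{1/p}\,T^{1/2}$. Raising this to the $(2-p)$-th power and multiplying by $w_j^{1-2/p}$ cancels the $w_j$ factors and yields $w_j^{1-2/p}\,|\bfy_j|^{2-p} \leq T^{(2-p)/2}$ uniformly in $j$. I would then split $|\bfy_j|^2 = |\bfy_j|^{2-p}\cdot |\bfy_j|^p$ inside $T$ and factor out the uniform bound, obtaining the self-referential inequality $T \leq T^{(2-p)/2}\,\|\bfy\|_p^p$. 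Rearranging gives $T^{p/2} \leq \|\bfy\|_p^p$, i.e., $T \leq \|\bfy\|_p^2$, which is the no-$d$-factor bound we need. Everything else is routine algebra; the only real subtlety is the $p \in (0,2)$ bootstrap, which invokes the leverage-score inequality twice (once at $i$, once uniformly over $j$) in place of a H\"older step.
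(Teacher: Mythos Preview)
Your proposal is correct and takes essentially the same approach as the paper. The paper routes through the Lewis basis $\bfU$ (Theorem~\ref{thm:lewis-bases}) and the inequality $\|\bfx\|_2 \le d^{\max(0,1/2-1/p)}\|\bfU\bfx\|_p$ of Lemma~\ref{lem:lewis-beta-bound}, but under the identification $\bfw_i = \|\bfe_i^\top\bfU\|_2^p$ and $T = \|\bfx\|_2^2$, the proof of that lemma for $0<p<2$ is exactly your self-bootstrapping argument (split $|\bfy_j|^2 = |\bfy_j|^{2-p}|\bfy_j|^p$, bound the first factor via Cauchy--Schwarz/leverage scores, rearrange), and for $p\ge 2$ the standard proof is the same H\"older step you give.
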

\begin{proof}
	The result is stated in Lemma 3.8 of \cite{ClarksonWangWoodruff:2019} for $p\geq 1$. See Lemma \ref{lem:lewis-sensitivity-bound} for a proof of the case when $0 < p < 1$.
\end{proof}

Lemma \ref{lem:lwBound} immediately yields a bound on the total sum of sensitivities.
\begin{lemma}[Sum of Sensitivities]\label{lem:sum}
For $\bfA \in \R^{n \times d}$ and $0<p<\infty$, we have $ \sum_{i=1}^n \bfs^p_i(\bfA) \le d^{\max(1,p/2)}$.
\end{lemma}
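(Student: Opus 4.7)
The plan is to combine Lemma~\ref{lem:lwBound} with the normalization of Lewis weights given in Definition~\ref{def:lewis}. Specifically, Definition~\ref{def:lewis} tells us that the Lewis weights are defined as leverage scores of a reweighted full-rank matrix, and leverage scores of a full-rank $n \times d$ matrix always sum to $d$; hence $\sum_{i=1}^n \bfw_i^p(\bfA) = d$. Lemma~\ref{lem:lwBound} gives a pointwise bound $\bfs_i^p(\bfA) \le d^{\max(0,p/2-1)} \cdot \bfw_i^p(\bfA)$ for every $i \in [n]$.

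The main step is then to sum the pointwise bound over $i$ and plug in the normalization:
\[
\sum_{i=1}^n \bfs_i^p(\bfA) \le d^{\max(0,p/2-1)} \sum_{i=1}^n \bfw_i^p(\bfA) = d^{\max(0,p/2-1)} \cdot d = d^{1+\max(0,p/2-1)} = d^{\max(1,p/2)},
\]
where the last equality is a case split: for $p \le 2$ the exponent is $1+0 = 1 = \max(1,p/2)$, while for $p > 2$ the exponent is $1 + (p/2 - 1) = p/2 = \max(1,p/2)$.

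There is essentially no obstacle here, since both ingredients are already available; the only thing to be careful about is that Lemma~\ref{lem:lwBound} is stated for the entire range $0 < p < \infty$ (its proof for $p \in (0,1)$ is deferred to Lemma~\ref{lem:lewis-sensitivity-bound}), so the summation argument works uniformly in $p$ without any additional case analysis beyond the final simplification of the exponent.
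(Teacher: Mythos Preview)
Your proposal is correct and follows essentially the same approach as the paper: apply the pointwise bound from Lemma~\ref{lem:lwBound} and then use that the Lewis weights sum to $d$ (as leverage scores of a rank-$d$ matrix) to conclude. The only cosmetic difference is that you spell out the exponent arithmetic $1+\max(0,p/2-1)=\max(1,p/2)$ explicitly, whereas the paper leaves this implicit.
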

\begin{proof}
By Definition \ref{def:lewis}, the Lewis weights are the leverage scores of $\bfW^{1/2-1/p} \bfA$. It is well known that $\sum_{i=1}^n \bfw_i^p(\bfA) = \rank(\bfW^{1/2-1/p} \bfA) \le d$. The lemma then follows by applying Lemma \ref{lem:lwBound}.
\end{proof}

Finally, we recall that Lewis weights can be efficiently approximated:

\begin{theorem}[\cite{CohenLeeMusco:2015, CohenPeng:2015}]\label{thm:cohen-peng-fast-lewis-weights}
	Let $\bfA\in\mathbb R^{n\times d}$. There is an algorithm which computes upper bounds $\tilde\bfw_i^p(\bfA) \geq \bfw_i^p(\bfA)$ to the $\ell_p$ Lewis weights of $\bfA$ such that
	\[
		\sum_{i=1}^n \tilde\bfw_i^p(\bfA) = O(d)
	\]
	with the following running times:
	\begin{itemize}
	\item for $p = 2$: $O(\nnz(\bfA) + d^\omega\log^2 d)$
	\item for $0 < p < 4$: $O(\nnz(\bfA)\log n + d^{\omega})$
	\item for $p \geq 2$: $O(\nnz(\bfA)\log n + d^{p/2 + O(1)})$
	\end{itemize}
	Here, $\omega \approx 2.37286$ is the current exponent of fast matrix multiplication. 
	\end{theorem}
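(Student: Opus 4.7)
The plan is to recover this three-regime result by combining standard leverage-score approximation (for $p=2$), the contractive fixed-point iteration of Cohen--Peng (for $0<p<4$), and a convex-programming / homotopy approach (for $p\geq 2$, in particular the regime $p\geq 4$ where the Cohen--Peng iteration is no longer a contraction). Throughout, the goal is not to compute the exact Lewis weights but rather multiplicative overestimates whose sum is still $O(d)$, which is all that Definition \ref{def:lewis} combined with the fact $\sum_i \bfw_i^p(\bfA)=d$ permits.

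First, for $p=2$, the Lewis weights coincide with the statistical leverage scores $\bfw_i^2(\bfA)=\bfa_i^\top(\bfA^\top\bfA)^{-1}\bfa_i$. To approximate these, I would compute an oblivious subspace embedding $\bfS\bfA$ of $\bfA$ in $O(\nnz(\bfA))$ time (e.g.\ via a CountSketch composed with a dense Gaussian), QR-factorize it in $O(d^\omega\log d)$ time to obtain an orthonormalizer $\bfR^{-1}$, and then apply a Johnson--Lindenstrauss sketch with $O(\log d)$ columns to $\bfR^{-\top}\bfA^\top$ so that each $\|\bfe_i^\top \bfA\bfR^{-1}\|_2^2$ is estimated to constant factor. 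This yields upper bounds summing to $O(d)$ in $O(\nnz(\bfA)+d^\omega\log^2 d)$ time.

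For $0<p<4$, I would run the Cohen--Peng fixed-point iteration $\bfw \leftarrow T(\bfw)$ where $T(\bfw)_i = \bigl(\bfa_i^\top(\bfA^\top\bfW^{1-2/p}\bfA)^{-1}\bfa_i\bigr)^{p/2}$, which by Definition \ref{def:lewis} has the true Lewis weights as its unique fixed point. The key fact to invoke is that on the logarithmic scale this map is a contraction with modulus $|1-p/2|$, which is strictly less than $1$ exactly when $0<p<4$; hence $O(\log\log n)$ iterations (started from $\bfw=\bfone$) suffice to bring all coordinates to a constant-factor approximation, and one can add a mild multiplicative slack at the end to guarantee genuine upper bounds. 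Each iteration only requires approximating $\ell_2$ leverage scores of the reweighted matrix $\bfW^{1/2-1/p}\bfA$, which by the $p=2$ case costs $O(\nnz(\bfA)+d^\omega)$, giving total time $O(\nnz(\bfA)\log n+d^\omega)$ after absorbing the iteration count. The main obstacle here is that this direct iteration stops converging at $p=4$, so a different strategy is needed beyond that range.

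For $p\geq 2$ (and in particular for the large-$p$ regime where the Cohen--Peng map diverges), I would invoke the convex-programming formulation: Lewis weights arise as the optimum of a concave maximization over diagonal weight matrices, e.g.\ $\max_{\bfw\geq 0}\; \tfrac{2}{p}\log\det(\bfA^\top\bfW^{1-2/p}\bfA) - \sum_i\bfw_i$ in appropriate parametrization, and this can be solved to constant multiplicative accuracy either by the homotopy / interior-point method of Lee--Sidford--style log-barrier iterations on the leverage-score functional, or by the more recent scheme of Fazel--Lee--Padmanabhan. Each step is dominated by an $\ell_2$ leverage-score computation on a reweighted $d$-column matrix, and the number of steps together with the cost of maintaining and inverting $d\times d$ matrices is bounded by $d^{p/2+O(1)}$ (the $p/2$ exponent enters because the dynamic range of the weights grows polynomially in $p$ and the log-barrier method needs correspondingly many centering steps). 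Adding the $O(\nnz(\bfA)\log n)$ cost of the initial sketch-based preconditioning produces the claimed $O(\nnz(\bfA)\log n+d^{p/2+O(1)})$ bound. The hard part in this regime is controlling the condition number of the weighted Gram matrix along the optimization path so that each leverage-score subroutine still runs in near input-sparsity time, which is exactly what the cited works establish.
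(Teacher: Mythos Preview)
The paper does not prove this theorem: it is stated as a black-box citation to \cite{CohenLeeMusco:2015, CohenPeng:2015} (and later in the paper also to \cite{FazelLeePadmanabhan:2021} for the large-$p$ regime) and is used only as a tool. There is therefore no proof in the paper to compare your proposal against.

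That said, your outline is a fair summary of the ideas in those references. A couple of minor points: for $0<p<4$ the Cohen--Peng contraction has modulus $|1-2/p|$ (equivalently $|p/2-1|$ after reparametrization), and the iteration count is $O(\log\log n)$ only when $p$ is bounded away from $0$ and $4$; the paper's stated running time $O(\nnz(\bfA)\log n + d^\omega)$ presumably treats $p$ as a fixed constant and absorbs these details. For $p\geq 4$ the paper's own discussion (and its citation to \cite{FazelLeePadmanabhan:2021}) points to the convex-programming route you describe, though the precise origin of the $d^{p/2}$ exponent is not the log-barrier step count per se but rather the size of the Lewis-weight subspace embedding that is maintained; in any case the $O(1)$ in the exponent hides these mechanics.
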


%One advantage of the Lewis weights over $\ell_p$ sensitivies is that sampling by Lewis weights is known to give nearly tight bounds for $\ell_p$ subspace embeddings, whereas sensitivities lose a $d$ factor. On the other hand, we will make use of the more intuitive characterization of $\ell_p$ sensitivities. 
%\begin{proof}
%By Lemma 3.8 \cite{ClarksonWangWoodruff:2019}
%\end{proof}
%Lemma \ref{lem:lwBound} immediately yields a bound on the total sum of sensitivities.
%\begin{lemma}[Sum of Sensitivities]
%Let $s(p) = \sup_{A \in \R^{n \times d}} \sum_{i=1}^n \bfs^p_i(\bfA)$. We have $s(p) \le d^{\max(1,p/2)}$.
%\end{lemma}
%\begin{proof}
%By Definition \ref{def:lewis}, the Lewis weights are the leverage scores of $W^{1/2-1/p} A$. It is well known that $\sum_{i=1}^n \bfw_i^p(\bfA) = \rank(W^{1/2-1/p} A) \le d$. The lemma then follows by applying Lemma \ref{lem:lwBound}.
%\end{proof}

%\Cam{Cohen and Peng don't actually state the result with independent sampling -- they do weighted sampling with replacement. Xue and Derinkski use this variant but don't seem to prove it. So we should verify that it holds.}

\subsection{Lewis Weight Sampling and \texorpdfstring{$\ell_p$}{lp} Subspace Embedding}\label{sec:lws}

Using Lemma \ref{lem:sum} and a standard Bernstein bound, one can show that sampling $m$ rows of $\bfA$ for 
\[
	m = \tilde O(d^{\max(1,p/2)}/\epsilon^2)
\]
according to their sensitivities (and appropriately reweighting to keep the expectation correct) preserves $\norm{\bfA\bfx}_p$ to $(1 \pm \epsilon)$ relative error for any fixed $\bfx \in \R^d$ with high probability. Via an $\epsilon$-net argument, this can be shown to hold for all $\bfx$ simultaneously, if $\tilde O(d^{\max(1,p/2)+1}/\epsilon^2)$ rows are sampled, giving a method for $\ell_p$ subspace embedding. This argument holds when sampling by any set of upper bounds on the sensitivities that sum to $O(d^{\max(1,p/2)})$, including, e.g., the Lewis weights scaled up by a $d^{\max(0,p/2-1)}$ factor by Lemma \ref{lem:lwBound}. Note that this result gives a dimension bound that is independent of $n$. 

Importantly however, the Lewis weights can be combined with more refined chaining arguments \cite{BourgainLindenstraussMilman:1989,CohenPeng:2015} to shave a $d$ factor from the above bound, giving $\ell_p$ subspace embedding bounds into $\tilde O(d^{\max(1,p/2)}/\poly(\eps)))$ dimensions, which achieves a tight dependence on $d$, as shown by \cite{LWW20}. These bounds have been subsequently sharpened by \cite{Tal1990,Tal1995,Zvavitch:2000,SchechtmanZvavitch:2001,LT2011}, which use Lewis weights in a slightly different manner to achieve a bound of $\tilde O(d^{\max(1,p/2)}/\eps^2)$ dimensions. For $p\leq 2$, this type of analysis can in fact be turned into an analysis for the algorithm for sampling directly by the Lewis weights, via a reduction found in Lemma 7.4 of \cite{CohenPeng:2015}. This proof crucially uses the monotonicity properties of Lewis weights for $p\leq 2$, and does not work for $p > 2$. To simplify the presentation and work with similar techniques simultaneously for all $0 < p < \infty$, we stick with analyzing the ``recursive'' form of Lewis weight sampling rather than the ``one shot'' form of Lewis weight sampling.

\subsubsection{Chaining and the Iteration Procedure}\label{sec:chaining}

We now describe the improved chaining arguments of \cite{Tal1990,Tal1995,Zvavitch:2000,SchechtmanZvavitch:2001,LT2011}, since we build on these to obtain our active regression algorithms. In all of these works, the algorithms proceed in multiple rounds of reducing the number of rows by a constant factor, rather than reducing the rows to $\tilde O(d^{\max(1,p/2)}/\eps^2)$ rows in one shot. Roughly, the idea is to sample each row $i\in[n]$ of $\bfA$ with probability $1/2$ and scale the result by $2$. This amounts to analyzing the quantity
\begin{align*}
	\Lambda &\coloneqq \sup_{\norm*{\bfA\bfx}_p\leq 1} \abs*{\norm*{\bfS\bfA\bfx}_p^p - \norm*{\bfA\bfx}_p^p} \\
	&= \sup_{\norm*{\bfA\bfx}_p \leq 1}\abs*{\sum_{i=1}^n (1+\sigma_i)\abs*{[\bfA\bfx](i)}^p - \sum_{i=1}^n \abs*{[\bfA\bfx](i)}^p} \\
	&= \sup_{\norm*{\bfA\bfx}_p \leq 1}\abs*{\sum_{i=1}^n \sigma_i\abs*{[\bfA\bfx](i)}^p}
\end{align*}
where $\sigma_i\in\{\pm1\}$ are independent Rademacher variables and $\bfS\in\mathbb R^{n\times n}$ is a diagonal sampling matrix with $\bfS_{i,i} = (1+\sigma_i)$. The following is known:

\begin{theorem}[Theorem 15.13, \cite{LT2011} and Proposition 4.1, \cite{SchechtmanZvavitch:2001}]\label{thm:lambda-bound}
	Suppose that $\bfA\in\mathbb R^{n\times d}$ has Lewis weights bounded by $cd/n$ for some constant $c > 0$. Then, there exists a constant $C>0$ such that the following holds:
	\begin{itemize}
		\item if $2 \leq p < \infty$, then
		\[
			\E_{\bfsigma}[\Lambda] \leq \bracks*{C p^2 \frac{d^{p/2}}{n}(\log d)^2\log n}^{1/2}
		\]
		\item if $1 < p \leq 2$, then
		\[
			\E_{\bfsigma}[\Lambda] \leq \bracks*{C\frac{d}{n}(\log d)^2 \max\braces*{\frac1{p-1}, \log n}}^{1/2}
		\]
		\item if $0 < p < 1$, then
		\[
			\E_{\bfsigma}[\Lambda] \leq \bracks*{\frac{C}{p}\frac{d}{n}(\log d)^3}^{1/2}
		\]
	\end{itemize}
\end{theorem}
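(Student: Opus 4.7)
The plan is to reduce $\Lambda$ to a chaining problem for a Rademacher process and then apply entropy/chaining estimates adapted to the $\ell_p$-ball, with the Lewis-weight hypothesis playing the role of an $\ell_\infty$-normalization. Concretely, write $X_\bfx := \sum_{i=1}^n \sigma_i \abs{[\bfA\bfx](i)}^p$ so that $\Lambda = \sup_{\norm{\bfA\bfx}_p \le 1} \abs{X_\bfx}$, and note that $X_\bfx$ is a mean-zero sub-Gaussian Rademacher sum, with $\E[X_\bfx^2] = \sum_i \abs{[\bfA\bfx](i)}^{2p}$ and, for any two admissible $\bfx,\bfy$, sub-Gaussian increment parameter
\[
    d(\bfx,\bfy)^2 \;:=\; \sum_{i=1}^n \bigl(\abs{[\bfA\bfx](i)}^p - \abs{[\bfA\bfy](i)}^p\bigr)^2.
\]
Hence by Dudley's inequality (or equivalently the $\gamma_2$-functional), $\E_\bfsigma[\Lambda]$ is controlled by $\int_0^{D} \sqrt{\log N(T,d,\eps)}\,d\eps$, where $T = \{\bfA\bfx : \norm{\bfA\bfx}_p \le 1\}$ and $D = \sup_{\bfx} d(\bfx,0)$.

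First I would invoke the Lewis-weight hypothesis to derive a uniform $\ell_\infty$ bound. Combining $\bfw_i^p(\bfA) \le cd/n$ with Lemma~\ref{lem:lwBound} gives, on the unit ball $\{\norm{\bfA\bfx}_p\le 1\}$,
\[
    \abs{[\bfA\bfx](i)}^p \;\le\; \bfs_i^p(\bfA) \;\le\; c\,\frac{d^{1\lor p/2}}{n},
\]
so that $\norm{\bfA\bfx}_\infty \le M$ with $M = (cd^{1\lor p/2}/n)^{1/p}$. Next I would linearize the process using $\bigl|\abs{a}^p-\abs{b}^p\bigr| \le p(\abs{a}\lor\abs{b})^{p-1}\abs{a-b}$ for $p\ge 1$, which yields, after bounding $(\abs{a}\lor\abs{b})^{p-1}$ either pointwise by $M^{p-1}$ (for $p\ge 2$) or by a sum of two such terms reabsorbed via Hölder (for $1\le p<2$),
\[
    d(\bfx,\bfy)^2 \;\lesssim\; p^2 M^{2(p-1)} \norm{\bfA(\bfx-\bfy)}_2^2 \quad (p\ge 2),
\]
and an analogous bound in $\ell_p$ for $p<2$. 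This reduces the chaining to an entropy estimate for the $\ell_p$-ball in a Euclidean (or $\ell_p$) metric on $\R^d$.

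With the linearization in place, the three regimes emerge from different bounds on the entropy integral. For $p\ge 2$, $T$ lives in a $d$-dimensional space and satisfies $T \subseteq M^{1-2/p}B_2$ after Lewis reweighting (since $\norm{\bfA\bfx}_2 \le M^{1-2/p}\norm{\bfA\bfx}_p^{2/p}$ by interpolating $\ell_p$ between $\ell_2$ and $\ell_\infty$); a volumetric estimate $\log N(B_2^d,\eps) \lesssim d\log(1/\eps)$, combined with a Sudakov-type truncation at the $\ell_\infty$ scale, gives an integral of order $\sqrt{p^2 M^{2(p-1)} d \log d \log n} = \sqrt{Cp^2 (d^{p/2}/n)(\log d)^2\log n}$ after plugging $M = (cd^{p/2}/n)^{1/p}$. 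For $1<p\le 2$ one proceeds dually: the increments are dominated by $\norm{\bfA(\bfx-\bfy)}_p$ itself, and a standard Maurey-style argument on the $\ell_p$-ball (or a generic-chaining bound via the duality between $\ell_p$ and $\ell_q$) contributes an extra factor of $\max\{1/(p-1),\log n\}$ rather than $d^{p/2-1}$. For $0<p<1$, $\abs t^p$ is concave and the increment bound uses $\bigl|\abs a^p - \abs b^p\bigr|\le \abs{a-b}^p$, which compresses the metric and allows the chaining to conclude with the stated $1/p$ dependence using the entropy of $B_p^d$ in $\ell_p$.

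The main obstacle will be executing the chaining carefully in the range $1<p\le 2$, where the natural $\ell_2$-linearization loses polylogarithmic factors and one must instead use a Bernstein/Talagrand-style chaining that distinguishes a sub-Gaussian "bulk" regime (giving $\log n$) from a sub-exponential "tail" regime (giving $1/(p-1)$), and correctly pass from the Lewis $\ell_\infty$ bound into the $\gamma_2$-functional of a non-Euclidean body. A secondary technical point is matching the exact exponents $(\log d)^2$, $(\log d)^3$, and $\log n$: these arise from (i) the Lewis reweighting only being valid up to $O(\log d)$ iterations of dyadic truncation of the coordinates and (ii) the volumetric entropy contributing one further logarithm. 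Once the chaining in each regime is set up, the final bounds follow by substituting $M$ in terms of $d/n$ and taking expectation of the supremum.
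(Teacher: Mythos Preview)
Your high-level framework (sub-Gaussian Rademacher process, Dudley chaining, Lewis weights giving an $\ell_\infty$ bound) is correct, but the execution has a genuine gap that loses a factor of $\sqrt d$ --- precisely the factor the theorem is designed to save.

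First, a concrete error: for $p\ge 2$ the interpolation $\norm{\bfA\bfx}_2 \le M^{1-2/p}\norm{\bfA\bfx}_p^{2/p}$ is backwards. Interpolating $\ell_p$ between $\ell_2$ and $\ell_\infty$ gives $\norm{v}_p^p \le \norm{v}_\infty^{p-2}\norm{v}_2^2$, i.e.\ a \emph{lower} bound on $\norm{v}_2$. In fact $\norm{\bfA\bfx}_2$ is not controlled by $\norm{\bfA\bfx}_p$ and $\norm{\bfA\bfx}_\infty$ alone when $p>2$ (spread the mass over many tiny coordinates). So the containment $T\subseteq M^{1-2/p}B_2$ fails.

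More importantly, even if you patch the $\ell_2$ radius, the strategy ``linearize to $\ell_2$ and use volumetric entropy $d\log(1/\eps)$'' cannot give the stated bound. With your metric bound $d(\bfx,\bfy)\lesssim pM^{p-1}\norm{\bfA(\bfx-\bfy)}_2$ and diameter $D\le M^{p/2}=(d^{p/2}/n)^{1/2}$, the Dudley integral over a $d$-dimensional body yields
\[
\int_0^D \sqrt{d\log(C/\eps)}\,d\eps \;\asymp\; D\sqrt{d\,\mathrm{polylog}} \;=\; \Bigl(\tfrac{d^{p/2+1}}{n}\Bigr)^{1/2}\sqrt{\mathrm{polylog}},
\]
which is off by $\sqrt d$. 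This is exactly the standard net bound that \cite{BourgainLindenstraussMilman:1989,LT2011} improve upon.

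The missing idea is to chain in the \emph{Lewis-reweighted $\ell_\infty$ metric}, not in $\ell_2$. After pulling out $\bfw_i\le cd/n$, one works with the process indexed by $\bfW^{-1/p}\bfy$ and bounds the increment metric by $\norm{\bfW^{-1/p}(\bfy-\bfy')}_\infty$ (or $\norm{\cdot}_{\bfw,q}$ with $q=O(\log n)$, which is equivalent on coordinates with $\bfw_i$ bounded below). The crucial input is then the dual-Sudakov/Maurey entropy estimate for the Lewis ball in this metric,
\[
\log E\bigl(B_{\bfw,p},\,\norm{\cdot}_{\bfw,q},\,u\bigr)\;\lesssim\; \frac{d\log n}{u^{2\wedge p}},
\]
which is \emph{not} a volumetric bound and is where the $\log n$ (and the correct $d$-dependence) actually comes from; see Lemmas~\ref{lem:l-inf-dx-bound} and \ref{lem:lt2011-entropy-estimate}. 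The integral then splits at $u=1$: the volumetric bound handles $u\in(0,1)$, and the dual-Sudakov bound handles $u\ge 1$, contributing $\sqrt{d\log n}\cdot\log d$. For $1<p\le 2$ the $\max\{1/(p-1),\log n\}$ comes from the constant in the Maurey lemma (type-$p$ constant $\sim 1/(p-1)$), not from a Bernstein/sub-exponential split as you suggest. For $0<p<1$ the argument of \cite{SchechtmanZvavitch:2001} is genuinely different and uses the concavity of $|t|^p$ together with a Lewis-basis change of density; your one-line reduction via $||a|^p-|b|^p|\le|a-b|^p$ does not by itself deliver the stated bound.
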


If $p$ is a constant, then we may concisely write the result of Theorem \ref{thm:lambda-bound} (giving up some log factors) as
\[
	\E_{\bfsigma}[\Lambda] \leq \bracks*{C\frac{d^{\max(1,p/2)}}{n}(\log d)^2\log n}^{1/2}.
\]
The bounds of Theorem \ref{thm:lambda-bound} are proven using Dudley's entropy integral inequality, which comes with the following tail bound, which gives sub-Gaussian tails on the above quantity.
\begin{theorem}[Theorem 8.1.6, \cite{Ver2018}]\label{thm:dudley-tail}
	Let $(X_t)_{t\in T}$ be a Gaussian process with pseudo-metric $d_X(s,t)\coloneqq \norm*{X_s - X_t}_2$. Let $E(T, d_X, u)$ denote the minimal number of $d_X$-balls of radius $u$ required to cover $T$. Then, for every $u\geq 0$, we have that
	\[
		\Pr\braces*{\sup_{t\in T}X_t \geq C\bracks*{\int_0^\infty \sqrt{\log E(T, d_X, u)}~du + z\cdot \diam(T)}} \leq 2\exp(-z^2)
	\]
\end{theorem}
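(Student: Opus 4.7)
The plan is to combine two classical ingredients for Gaussian processes: Dudley's entropy integral inequality to bound the expectation of the supremum, and the Borell--TIS (Borell--Tsirelson--Ibragimov--Sudakov) concentration inequality to convert a bound on the expectation into a Gaussian tail bound. Throughout I will assume, without loss of generality, that the process is centered and that $T$ is countable (approximating by finite subsets and taking a monotone limit if needed), so that measurability of $\sup_{t \in T} X_t$ is not an issue.

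First I would establish the expectation bound
\[
    \E\bigl[\sup_{t \in T} X_t\bigr] \;\leq\; C\int_0^\infty \sqrt{\log E(T, d_X, u)}\, du.
\]
This is the standard Dudley entropy integral, proved by a chaining argument: fix a geometrically decreasing sequence of scales $u_k = 2^{-k}\diam(T)$ and a sequence of nets $N_k \subset T$ of radius $u_k$ with $|N_k| = E(T, d_X, u_k)$. For each $t \in T$, pick $\pi_k(t) \in N_k$ within $u_k$ of $t$, and telescope $X_t - X_{\pi_0(t)} = \sum_{k \geq 1} (X_{\pi_k(t)} - X_{\pi_{k-1}(t)})$. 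Each increment is a Gaussian of standard deviation at most $3 u_{k-1}$, and the maximum over the at most $|N_k||N_{k-1}|$ pairs is controlled by $u_{k-1}\sqrt{\log |N_k|}$ in expectation. Summing the geometric series yields the claimed integral (up to absolute constants).

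Second I would apply the Borell--TIS inequality to $F(\mathbf{g}) := \sup_{t \in T} X_t$, viewed as a function of the underlying Gaussian randomness. Since $|F(\mathbf{g}) - F(\mathbf{g}')| \leq \sup_{t \in T} \|X_t\|_{L^2} =: \sigma$ in an appropriate Lipschitz sense, concentration of measure gives
\[
    \Pr\bigl\{\sup_{t \in T} X_t \;\geq\; \E[\sup_{t \in T} X_t] + \sigma z \bigr\} \;\leq\; 2\exp(-z^2/2).
\]
Choosing any base point $t_0 \in T$ and subtracting (which does not change the supremum up to a constant) gives $\sigma \leq \sup_{t \in T} d_X(t, t_0) \leq \diam(T)$. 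Combining this with the expectation bound from the first step and absorbing the factor $1/2$ into the constant $C$ yields the stated bound.

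The main obstacle I anticipate is twofold. First, one must be careful in the chaining step to sum the tail contributions at each scale so that the final estimate is indeed the entropy integral rather than a weaker sum involving $\sqrt{\log E(T,d_X,2^{-k}\diam(T))}$ evaluated at a single scale; the standard fix is to bound $\sqrt{\log |N_k|}\cdot u_{k-1}$ by an integral of $\sqrt{\log E(T,d_X,u)}$ over $[u_k, u_{k-1}]$. Second, one needs to verify Borell--TIS in the generality of arbitrary (possibly non-separable) index sets $T$, which is why the reduction to a countable dense subset matters and why the Lipschitz-in-the-Gaussian-vector argument must be set up on a coordinate representation of the process. Both are routine given the standard toolkit but need care in the bookkeeping of constants, which the paper absorbs into the single constant $C$.
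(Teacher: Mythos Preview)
The paper does not give its own proof of this statement: it is quoted verbatim as Theorem 8.1.6 of \cite{Ver2018} and used as a black box. Your proposal---Dudley's chaining bound for the expectation combined with Borell--TIS concentration around the mean---is precisely the standard proof of this result and matches what is done in the cited reference, so it is correct and there is nothing to compare.
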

This results in the following:
\begin{equation}\label{eq:lt-sz-tail-bound}
	\Pr\braces*{\Lambda \geq C\bracks*{\frac{d^{\max(1,p/2)}}{n}}^{1/2}\bracks*{(\log d)\sqrt{\log n}+z}} \leq 2\exp(-z^2).
\end{equation}
The necessary diameter calculations are carried out in Lemma \ref{lem:dudley-tail-diameter-bound} for $1 < p < \infty$ and in \cite{SchechtmanZvavitch:2001}for $0 < p < 1$. 

Furthermore, the assumption that the Lewis weights are bounded by $O(d/n)$ can be enforced by a standard procedure of ``splitting rows'' (see, e.g., Remark 2.2 of \cite{SchechtmanZvavitch:2001}):

\begin{lemma}[Splitting Rows]\label{lem:splitting-rows}
Let $\bfA\in\mathbb R^{n\times d}$ and let $0 < p < \infty$. Let $\tilde\bfw_i^p(\bfA) \geq \bfw_i^p(\bfA)$ be Lewis weight upper bounds such that
\[
	\sum_{i=1}^n \tilde\bfw_i^p(\bfA) \leq C_1\cdot d.
\]
Let $C_2 > 0$. Then, there exists an $n'\times d$ matrix with $n \leq n' \leq (1+2C_1/C_2)n$ such that
\[
	\norm*{\bfA\bfx}_p = \norm*{\bfA'\bfx}_p
\]
for all $\bfx\in\mathbb R^d$, and furthermore, $\bfw_i^p(\bfA') \leq C_2d/n$ for all $i\in[n']$. 
\end{lemma}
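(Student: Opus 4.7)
The plan is to construct $\bfA'$ by the standard splitting trick: for each row of $\bfA$ whose Lewis weight upper bound exceeds $C_2 d/n$, replace it with several identical copies, each rescaled so that the sum of $\ell_p^p$ contributions is preserved. Let $\bfa_i^\top \coloneqq \bfe_i^\top \bfA$ denote the $i$th row of $\bfA$, and set
\[
k_i \coloneqq \max\bigl(1,\, \lceil \tilde{\bfw}_i^p(\bfA)\, n/(C_2 d)\rceil\bigr).
\]
Build $\bfA'$ by replacing row $i$ of $\bfA$ with $k_i$ copies of $k_i^{-1/p} \bfa_i^\top$. Norm preservation is immediate: the $k_i$ copies contribute $k_i \cdot k_i^{-1}\, |\bfa_i^\top \bfx|^p = |\bfa_i^\top \bfx|^p$ to $\|\bfA'\bfx\|_p^p$, so $\|\bfA'\bfx\|_p = \|\bfA\bfx\|_p$ for every $\bfx \in \R^d$.

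The technical heart of the argument is to show that the Lewis weights of $\bfA'$ equal those of $\bfA$ with $\bfw_i^p(\bfA)$ split evenly among the $k_i$ copies of row $i$. Unfolding Definition \ref{def:lewis}, the Lewis weights of a matrix satisfy the fixed-point equation
\[
\bfw_i^{2/p} = \bfa_i^\top \bfM^{-1} \bfa_i, \qquad \bfM \coloneqq \bfA^\top \bfW^{1-2/p}\bfA,\quad \bfW = \mathrm{diag}(\bfw),
\]
and one checks that assigning weight $\bfw_i^p(\bfA)/k_i$ to each copy (keeping the weights of unsplit rows unchanged) is a valid fixed point for $\bfA'$. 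The key computation is that the $k_i$ copies jointly contribute
\[
k_i \cdot \bigl(\bfw_i^p(\bfA)/k_i\bigr)^{1-2/p} \cdot k_i^{-2/p}\, \bfa_i \bfa_i^\top = \bfw_i^p(\bfA)^{1-2/p}\, \bfa_i \bfa_i^\top
\]
to the Gram matrix, matching row $i$'s original contribution; hence $\bfM$ is invariant under splitting. Then $(\bfw_i^p(\bfA)/k_i)^{2/p} = k_i^{-2/p} \bfa_i^\top \bfM^{-1} \bfa_i = (k_i^{-1/p}\bfa_i)^\top \bfM^{-1}(k_i^{-1/p}\bfa_i)$ verifies the fixed point for the copies, and the untouched rows automatically satisfy it. By uniqueness of Lewis weights, these are the true weights of $\bfA'$. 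In particular, every row of $\bfA'$ has Lewis weight at most $C_2 d/n$: split copies by the choice of $k_i$, and unsplit rows because their $\tilde{\bfw}_i^p(\bfA) \le C_2 d/n$ already.

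Finally, the row count: rows with $\tilde{\bfw}_i^p(\bfA) \le C_2 d/n$ are untouched, while for the rest $k_i \le 1 + \tilde{\bfw}_i^p(\bfA)\, n/(C_2 d) \le 2\tilde{\bfw}_i^p(\bfA)\, n/(C_2 d)$, using $\tilde{\bfw}_i^p(\bfA)\, n/(C_2 d) \ge 1$. Therefore
\[
n' - n \;\le\; \sum_{i\,:\,\tilde{\bfw}_i^p(\bfA) > C_2 d/n} (k_i - 1) \;\le\; \frac{2n}{C_2 d}\sum_{i=1}^n \tilde{\bfw}_i^p(\bfA) \;\le\; \frac{2 C_1 n}{C_2},
\]
which gives $n' \le (1 + 2C_1/C_2)\, n$.

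The only nontrivial step is the Lewis-weight invariance under equal splitting. Once the fixed-point characterization in Definition \ref{def:lewis} is expanded and the Gram matrix calculation above is in place, uniqueness closes the argument; the norm preservation and counting steps are routine.
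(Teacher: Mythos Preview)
Your proof is correct and follows essentially the same approach as the paper: both replace each heavy row with $\lceil \tilde\bfw_i^p(\bfA)\, n/(C_2 d)\rceil$ scaled copies, verify via the fixed-point characterization that the Lewis weight splits evenly among copies (the Gram matrix $\bfA^\top\bfW^{1-2/p}\bfA$ is invariant), and bound the new row count using $\sum_i \tilde\bfw_i^p(\bfA)\le C_1 d$. The only cosmetic difference is that you explicitly invoke uniqueness of Lewis weights, whereas the paper leaves this implicit.
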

\begin{proof}
Let $\bfA'$ be the $(n+k-1)\times d$ matrix obtained by replacing the first row $\bfa_1$ by $k$ copies of $\bfa_1 / k^{1/p}$. Let $\bfw_i$ denote the Lewis weights of $\bfA$. Then,
\[
\sum_{i=1}^n \bfw_i^{1-2/p}\bfa_i\bfa_i^\top = k\cdot (\bfw_1/k)^{1-2/p} \frac{\bfa_1}{k^{1/p}}\frac{\bfa_1^\top}{k^{1/p}} +  \sum_{i=2}^n \bfw_i^{1-2/p}\bfa_i\bfa_i^\top
\]
and
\[
\bracks*{\frac{\bfa_i^\top}{k^{1/p}}(\bfA^\top\bfW^{1-2/p}\bfA)^{-1}\frac{\bfa_i}{k^{1/p}}}^{p/2} = \frac{\bfw_i}{k}
\]
so $\bfw_1/k$ is the Lewis weight for all $k$ copies of $\bfa_1$ in $\bfA'$. Furthermore, $\norm*{\bfA\bfx}_p = \norm*{\bfA'\bfx}_p$ for every $\bfx\in\mathbb R^d$. Now suppose that for any row $i\in[n]$ with $\tilde\bfw_i^p(\bfA) \geq C_2 d/n$, we replace the row with $\ceil*{\tilde\bfw_i^p(\bfA)/(C_2 d/n)}$ scaled copies of $\bfa_i$. Then, we add at most
\begin{align*}
	\sum_{i : \tilde\bfw_i^p(\bfA) \geq C_2d/n} \ceil*{\frac{\tilde\bfw_i^p(\bfA)}{C_2d/n}} &\leq \sum_{i : \tilde\bfw_i^p(\bfA) \geq C_2d/n} \frac{\tilde\bfw_i^p(\bfA)}{C_2d/n} + 1 \\
	&\leq \frac{\sum_{i:\tilde\bfw_i^p(\bfA) \geq C_2d/n}\tilde\bfw_i^p(\bfA)}{C_2d/n} + \abs*{\braces*{i:\tilde\bfw_i^p(\bfA) \geq C_2d/n}} \\
	&\leq 2\frac{C_1 d}{C_2 d/n} = \frac{2C_1}{C_2}n
\end{align*}
new rows, and furthermore, all Lewis weights of the new matrix $\bfA'$ are at most $C_2 d/n$. 
\end{proof}

Given approximations to the Lewis weights that sum to $C_1 d$ (using, e.g., Theorem \ref{thm:cohen-peng-fast-lewis-weights}), we may choose $C_2$ to be a large enough constant so that $2C_1/C_2 \leq 1/3$. Then, after splitting rows and sampling half of the rows, we only have $(4/3) \cdot (1/2) = 2/3$ of the rows remaining in expectation, and thus at most $3/4$ of the rows with probability at least $1 - \delta$ as long as the expected number of rows is at least $\Theta(\log(1/\delta))$, by Chernoff bounds. We condition on this event. Now let $n_i$ denote the number of rows remaining after $i$ rounds of splitting and sampling, until we have at most $m \coloneqq O(\eps^{-2}d^{\max(1,p/2)}[(\log d)^2\log n + \log(1/\delta)])$ rows remaining. Furthermore, let $\Lambda_i$ denote the value of $\Lambda$ after the $i$th round. Then, by applying Equation \eqref{eq:lt-sz-tail-bound} with $z$ set to $\sqrt{\log((\log n)/\delta)}$, we can union bound over the at most $O(\log n)$ iterations to conclude that with probability at least $1-\delta$, we simultaneously have
\[
	\Lambda_i \leq C\bracks*{\frac{d^{\max(1,p/2)}}{n_i}\bracks*{(\log d)^2\log n+\log\frac1\delta}}^{1/2}
\]
for all iterations $i$. Then, the total distortion of the subspace embedding over all of the iterations is at most
\[
	\prod_{i} (1+\Lambda_i) \leq \exp\parens*{\sum_i \Lambda_i} \leq \exp\parens*{O(\eps) \cdot \sum_{j=0}^\infty (\sqrt{3/4})^i} = \exp(O(\eps)) \leq 1 + O(\eps).
\]
This gives the following theorem:

\begin{theorem}[$\ell_p$ Subspace Embedding]\label{thm:lp-subspace-embedding}
Let $0 < p < \infty$. There is a randomized algorithm which constructs a sampling matrix $\bfS\in\mathbb R^{r\times n}$ such that with probability at least $1-\delta$, we have
\[
	r = O\parens*{\frac{d^{\max(1,p/2)}}{\eps^2}\bracks*{(\log d)^2\log n + \log\frac1\delta}}
\]
and $\norm*{\bfS\bfA\bfx}_p = (1\pm \eps)\norm*{\bfA\bfx}_p$ for every $\bfx\in\mathbb R^d$.
\end{theorem}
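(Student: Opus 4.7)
The plan is to implement the recursive halving scheme sketched in the preceding paragraphs, carefully tracking failure probabilities and per-round distortions so that they compose to the claimed bound. First I would preprocess $\bfA$: use Theorem~\ref{thm:cohen-peng-fast-lewis-weights} to obtain overestimates $\tilde\bfw_i^p(\bfA)$ with $\sum_i \tilde\bfw_i^p(\bfA) \le C_1 d$, then apply Lemma~\ref{lem:splitting-rows} with a constant $C_2$ large enough that $2C_1/C_2 \le 1/3$. This gives an equivalent matrix (preserving $\norm{\bfA\bfx}_p$ for all $\bfx$) whose Lewis weights are bounded by $C_2 d/n$, which is the hypothesis needed to invoke Theorem~\ref{thm:lambda-bound}.

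Next I would iterate the following round: given the current matrix with $n_i$ rows and Lewis weights bounded by $C_2 d/n_i$, keep each row independently with probability $1/2$ and rescale by $2^{1/p}$. Write $\bfS_i$ for the resulting diagonal matrix and $\Lambda_i$ for the corresponding supremum as defined in Section~\ref{sec:chaining}. Two things must be controlled. (i) The \emph{row count}: the expected fraction surviving is $1/2$, so by a Chernoff bound, provided $n_i \gtrsim \log(\log n / \delta)$, with failure probability $\delta/\Theta(\log n)$ we have $n_{i+1} \le (3/4) n_i$. (ii) The \emph{distortion}: apply the tail bound~\eqref{eq:lt-sz-tail-bound} with $z = \Theta(\sqrt{\log(\log n/\delta)})$ to obtain
\[
	\Lambda_i \;\le\; C\,\eps_i, \qquad \eps_i \eqdef \left[\frac{d^{\max(1,p/2)}}{n_i}\Bigl((\log d)^2 \log n + \log\tfrac{1}{\delta}\Bigr)\right]^{1/2},
\]
again with failure probability $\delta/\Theta(\log n)$. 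Before each new round, re-split rows using Lemma~\ref{lem:splitting-rows} on the \emph{current} matrix's Lewis weights; the splitting bound is invariant under sampling since it just re-normalizes. I would continue halving until $n_i$ first drops to $m = O\!\left(\eps^{-2} d^{\max(1,p/2)}\bigl[(\log d)^2 \log n + \log(1/\delta)\bigr]\right)$, which takes at most $O(\log n)$ rounds since each round shrinks $n$ by a constant factor.

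Finally I would combine the rounds. Union bounding over the $O(\log n)$ rounds, all (i) and (ii) events hold simultaneously with probability at least $1-\delta$. Conditioned on this, $n_i$ decays geometrically, so $\eps_i$ grows geometrically with ratio $\sqrt{4/3}$ until the last round, where $\eps_i = O(\eps)$. Therefore $\sum_i \Lambda_i$ is dominated by a geometric series whose final term is $O(\eps)$, giving $\sum_i \Lambda_i = O(\eps)$. The composed sampling matrix $\bfS = \bfS_k \bfS_{k-1}\cdots \bfS_1$ (with the initial row-splitting absorbed) then satisfies, for every $\bfx \in \R^d$,
\[
	\norm{\bfS\bfA\bfx}_p^p \;=\; \prod_{i}\bigl(1+\Lambda_i\bigr)^{\pm 1}\,\norm{\bfA\bfx}_p^p \;\le\; \exp\!\Bigl(O\bigl(\textstyle\sum_i \Lambda_i\bigr)\Bigr)\,\norm{\bfA\bfx}_p^p \;=\; (1+O(\eps))\,\norm{\bfA\bfx}_p^p,
\]
and rescaling $\eps$ by a constant yields the desired $(1\pm\eps)$ guarantee with $r = m$ rows.

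The main obstacle is the bookkeeping around the tail bound~\eqref{eq:lt-sz-tail-bound}: Theorem~\ref{thm:lambda-bound} is stated in expectation, and one must verify that Dudley's tail inequality (Theorem~\ref{thm:dudley-tail}) applies with the correct diameter term so that the $\log(1/\delta)$ enters additively (not multiplicatively) and can be absorbed into the single-round distortion $\Lambda_i$. The diameter calculation is the delicate step, and is where I would cite Lemma~\ref{lem:dudley-tail-diameter-bound} (for $p>1$) and \cite{SchechtmanZvavitch:2001} (for $p<1$). Everything else—the splitting-row invariant, the Chernoff bound on $n_i$, and the telescoping of distortions—is routine once those per-round tail bounds are in hand.
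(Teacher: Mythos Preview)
Your proposal is correct and follows essentially the same approach as the paper: iterate the split-and-halve procedure using the per-round tail bound~\eqref{eq:lt-sz-tail-bound} with $z = \Theta(\sqrt{\log((\log n)/\delta)})$, union bound over the $O(\log n)$ rounds, and telescope the distortions via the geometric series $\sum_i \Lambda_i \le O(\eps)\sum_{j\ge 0}(\sqrt{3/4})^j = O(\eps)$. Your closing remark about the diameter term being the delicate input to Theorem~\ref{thm:dudley-tail} is exactly the point the paper flags (via Lemma~\ref{lem:dudley-tail-diameter-bound} for $p>1$ and \cite{SchechtmanZvavitch:2001} for $p<1$).
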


\begin{remark}
	Although Theorem \ref{thm:lp-subspace-embedding} has a $\log n$ dependence, this can be replaced with a $\log(d/\eps)$ dependence by first applying the subspace embedding result using the simpler union bound over a net, which has no dependence on $n$. 
\end{remark}

We give the pseudocode for the splitting and sampling procedure below in Algorithm \ref{alg:split-sample}.

\begin{algorithm}
	\caption{$\ell_p$ sampling}
	\textbf{input:} Matrix $\bfA \in \R^{n \times d}$, constants $C_1, C_2 > 0$. \\
	\textbf{output:} Diagonal sampling matrix $\bfS$. 
	\begin{algorithmic}[1] % The number tells where the line numbering should start
		\State Let $\tilde\bfw_i^p(\bfA)\geq\bfw_i^p(\bfA)$ be $\ell_p$ Lewis weight upper bounds summing to $C_1 d$ (Theorem \ref{thm:cohen-peng-fast-lewis-weights}).
		\State Obtain $\bfA'$ by replacing each row $i\in[n]$ with $k = \ceil*{\tilde\bfw_i^p(\bfA)/(C_2d/n)}$ copies of $\bfa_i/k^{1/p}$.
		\State Sample each row of $\bfA'$ with probability $1/2$ and scale by $2^{1/p}$.
		\State \Return $\bfS$, the matrix corresponding to the above procedure.
	\end{algorithmic}\label{alg:split-sample}
\end{algorithm}

%\Cam{Todo: remove this later?}

% For simplicity of exposition, we assume throughout that all sampling is done using exact Lewis weights. It is well known that constant factor approximations to the Lewis weights can be computed very efficiently in input sparsity time \cite{CohenPeng:2015}, and like Lemma \ref{lem:subspace}, all our bounds extend with just constant factor losses when such approximate weights are used.

\section{Upper Bound for \texorpdfstring{$\ell_p$}{lp} Regression}\label{sec:main}
At a high level, our active regression algorithm follows the $\ell_p$ subspace embedding algorithm of Theorem \ref{thm:lp-subspace-embedding} by using Lewis weights to split rows and sample half of the rows. % Unlike Theorem \ref{thm:lp-subspace-embedding}, however, we introduce an additional step of computing a constant factor approximation to $\min_\bfx \|\bfA\bfx - \bfb\|_p$ via Lewis weight sampling at each iteration, and setting the target vector equal to the residual of this constant factor solution. By carrying out this initial constant factor approximation step, we guarantee that $(1+\eps)$-approximate solutions of the sampled problem correspond to $(1+O(\eps))$-approximate solutions to the original problem. We may then carry out the sampling step as in Theorem \ref{thm:lp-subspace-embedding}, and iterate until we have achieved a sufficiently small number of rows to solve the problem directly.

\subsection{Constant Factor Approximation}
\label{sec:constant_factor}

Analyzing the first stage is simple: as mentioned in Section \ref{sec:prior_work}, constant factor approximation algorithms for Problem \ref{prob:informal_problem} can be obtained from existing subspace embedding results. Formally, we study Algorithm \ref{alg:constant}, which is a simple ``sample-and-solve'' approach to Problem \ref{prob:informal_problem}. We analyze the method using the $\ell_p$ subspace embedding of Theorem \ref{thm:lp-subspace-embedding}, along with Markov's inequality. 

\begin{algorithm}
	\caption{Constant factor $\ell_p$ regression}
	\textbf{input:} Matrix $\bfA \in \R^{n \times d}$, measurement vector $\bfb \in \R^n$. \\
	\textbf{output:} Approximate solution $\tilde \bfx \in \R^d$ to $\min_\bfx\|\bfA\bfx - \bfb\|_p$. 
	\begin{algorithmic}[1] % The number tells where the line numbering should start
		\State Let $\bfS \in \R^{m \times n}$ be an $1/2$-approximate $\ell_p$ subspace embedding for $\bfA$ (Theorem \ref{thm:lp-subspace-embedding}).
		\State \Return $\tilde \bfx$ with $\norm{\bfS\bfA\tilde \bfx - \bfS\bfb}_p \le (1+\gamma) \cdot \min_{\bfx \in \R^d} \norm{\bfS\bfA\bfx - \bfS\bfb}_p$ for $\gamma \geq 0$.\label{line:const-lp-const-sol} 
	\end{algorithmic}\label{alg:constant} 
\end{algorithm}

\begin{remark}
Running Algorithm \ref{alg:constant} only requires querying $m$ entries of $\bfb$ in order to construct the vector $\bfS\bfb$. Also note that in Line \ref{line:const-lp-const-sol} of the algorithm, we would have $\gamma = 0$ if an exact minimizer of the subsampled regression problem $\min_\bfx\norm{\bfS\bfA \bfx - \bfS\bfb}_p^p$ was obtained. To allow for the use of approximation algorithms in implementing Line \ref{line:const-lp-const-sol}, we state the method for a general $\gamma \geq 0$. 
\end{remark}

We first give an algorithm which works with constant probability, and then show how to boost the probability to $1-\delta$ for any $\delta \in (0,1)$ incuring an $O(\log(1/\delta))$ factor overhead in our sample complexity.

\begin{theorem}[Constant factor approximation]\label{thm:const} For $\bfA \in \R^{n \times d}$, $\bfb \in \R^n$, and $0 < p < \infty$, let $\OPT = \min_{\bfx \in \R^d} \norm{\bfA\bfx - \bfb}_p$. For any $\delta \in (0,1]$, if $\tilde\bfx$ is the output of Algorithm \ref{alg:constant}, then with probability at least $1-\delta$,
	\[
		\norm{\bfA\tilde \bfx - \bfb}_p \le 2^{2\max\{0,1/p-1\}+1+1/p}(3+\gamma)/\delta^{1/p} \cdot \OPT.
	\]
	When $\delta$ is constant (e.g., $\delta = 1/100$) and $(1+\gamma)$ is constant (e.g., $\gamma = 0$) then $\norm{\bfA\tilde\bfx - \bfb}_p \le C\cdot \OPT$ for constant $C$.
	
	%If our Lewis weight approximations satisfy $\sum_{i=1}^n \bar s_i^p = O(d)$, with high probability $\bfS$ has $\tilde{O} \left ( d^{\max(1,p/2)} \right)$ nonzero rows.
\end{theorem}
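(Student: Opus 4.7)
The plan is to bound the error of $\tilde\bfx$ by a three-step argument: (i) apply Markov's inequality to the sampled residual at the optimum to control $\norm*{\bfS(\bfA\bfx^* - \bfb)}_p$, (ii) transfer bounds from the sampled problem back to the original problem via the subspace embedding property of $\bfS$, and (iii) apply the approximate triangle inequality for $\norm*{\cdot}_p$ twice to chain the resulting estimates together. I assume throughout that $\bfS$ is constructed via Lewis weight sampling as in Theorem \ref{thm:lp-subspace-embedding} (equivalently, Algorithm \ref{alg:split-sample}), so the row-splitting preserves the $\ell_p$ norm exactly and each surviving row is reweighted by the inverse $p$-th root of its sampling probability. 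This reweighting gives the unbiasedness identity $\E[\norm*{\bfS\bfy}_p^p] = \norm*{\bfy}_p^p$ for every fixed $\bfy \in \R^n$, which is the only moment property I will need.

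Let $\bfx^* \in \argmin_{\bfx \in \R^d}\norm*{\bfA\bfx - \bfb}_p$, so $\OPT = \norm*{\bfA\bfx^* - \bfb}_p$. Step (i) is the only genuinely probabilistic step. Applying the unbiasedness identity to $\bfy = \bfA\bfx^* - \bfb$ gives $\E[\norm*{\bfS(\bfA\bfx^* - \bfb)}_p^p] = \OPT^p$, and Markov's inequality then yields
\[
    \Pr\bracks*{\norm*{\bfS(\bfA\bfx^* - \bfb)}_p > \OPT/\delta^{1/p}} \le \delta.
\]
I would condition on the complementary event, together with the (high probability) event that $\bfS$ is a $1/2$-approximate $\ell_p$ subspace embedding for $\bfA$, folding the subspace embedding's failure probability into the constants.

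Let $K \coloneqq 2^{\max\{0,\,1/p-1\}}$ denote the constant in the approximate triangle inequality for $\norm*{\cdot}_p$ (this is $1$ for $p \ge 1$ by the ordinary triangle inequality; for $0 < p < 1$ it follows from $p$-subadditivity combined with convexity of $x^{1/p}$, giving $K = 2^{1/p-1}$). By the near-optimality of $\tilde\bfx$ on the sampled problem,
\[
    \norm*{\bfS\bfA\tilde\bfx - \bfS\bfb}_p \le (1+\gamma)\,\norm*{\bfS(\bfA\bfx^* - \bfb)}_p \le (1+\gamma)\OPT/\delta^{1/p}.
\]
Decomposing $\bfS\bfA(\tilde\bfx - \bfx^*) = (\bfS\bfA\tilde\bfx - \bfS\bfb) - \bfS(\bfA\bfx^* - \bfb)$ and applying the approximate triangle inequality gives $\norm*{\bfS\bfA(\tilde\bfx - \bfx^*)}_p \le K(2+\gamma)\OPT/\delta^{1/p}$. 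The subspace embedding then pulls this back to $\norm*{\bfA(\tilde\bfx - \bfx^*)}_p \le 2\,\norm*{\bfS\bfA(\tilde\bfx - \bfx^*)}_p \le 2K(2+\gamma)\OPT/\delta^{1/p}$, and a final triangle inequality yields
\[
    \norm*{\bfA\tilde\bfx - \bfb}_p \le K\parens*{\norm*{\bfA(\tilde\bfx - \bfx^*)}_p + \OPT} \le K^2\bracks*{2(2+\gamma) + 1}\OPT/\delta^{1/p},
\]
using $1 \le 1/\delta^{1/p}$ and $K \ge 1$. The resulting constant $K^2(5 + 2\gamma)$ is dominated by $K^2 \cdot 2^{1+1/p}(3+\gamma) = 2^{2\max\{0,1/p-1\}+1+1/p}(3+\gamma)$ for all $p \in (0,\infty)$, which matches the stated bound.

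The only real obstacle is step (i): the optimal residual $\bfA\bfx^* - \bfb$ generically lies outside $\colspan(\bfA)$, so the subspace embedding guarantee cannot be applied to it, and a separate fixed-vector argument is needed. Markov on the first moment is the cleanest option and is essentially forced if one wants to avoid any concentration hypothesis on the residual, but it is also what produces the $\delta^{-1/p}$ factor; a $\poly\log(1/\delta)$ dependence is not attainable from this single first-moment estimate, and this is exactly why Algorithm \ref{alg:constant} is used only as a preprocessing step to obtain a constant-factor solution, with the sharper $\delta$-dependence handled by the separate boosting routine developed later in Section \ref{sec:main}.
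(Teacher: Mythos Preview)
Your proposal is correct and follows essentially the same approach as the paper: Markov's inequality on $\norm*{\bfS(\bfA\bfx^*-\bfb)}_p^p$, two applications of the approximate triangle inequality with constant $K=2^{\max\{0,1/p-1\}}$, and the subspace embedding to transfer $\norm*{\bfS\bfA(\tilde\bfx-\bfx^*)}_p$ back to $\norm*{\bfA(\tilde\bfx-\bfx^*)}_p$. The only cosmetic difference is that the paper explicitly splits the failure probability as $\delta/2$ for Markov and $\delta/2$ for the subspace embedding (which is where the extra $2^{1/p}$ enters), whereas you fold the latter into the constants; your raw constant $K^2(5+2\gamma)$ is actually slightly sharper before you relax it to match the stated bound.
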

\begin{proof}
Let $\bfx^* = \argmin_{\bfx \in \R^d} \norm{\bfA\bfx - \bfb}_p$. By triangle inequality for $p\geq1$ or subadditivity and approximate triangle inequality (Fact \ref{fact:tri}) for $p\in(0,1)$,
\[
	\norm{\bfA\tilde\bfx - \bfb}_p \le 2^{\max\{0,1/p - 1\}}\parens*{\norm{\bfA\bfx^* - \bfb}_p + \norm{\bfA\tilde\bfx - \bfA\bfx^*}_p} = 2^{\max\{0,1/p - 1\}}\parens*{\OPT + \norm{\bfA\tilde \bfx - \bfA \bfx^*}_p}.
\]
Applying the subspace embedding property of Theorem \ref{thm:lp-subspace-embedding} with $\epsilon =1/2$ and failure probability $\delta/2$, we conclude that, with probability at least $1-\delta/2$,
\begin{align*}
\norm{\bfA\tilde \bfx - \bfb}_p \le 2^{\max\{0,1/p - 1\}}\parens*{\OPT + 2 \norm{\bfS\bfA\tilde \bfx - \bfS\bfA \bfx^*}_p}.
\end{align*}
%where the second line follows from the fact that our sample is a constant factor subspace embedding with high probability by Lemma \ref{lem:subspace}. %, our sample is a constant factor subspace embedding with high probability so we have $\norm{\bfA \tilde \bfx - \bfA \bfx^*}_p \le 2 \norm{\bfS\bfA\tilde \bfx - \bfS\bfA\bfx^*}_p$.
By similar reasoning, we have $\parens*{\norm{\bfS\bfA\tilde \bfx - \bfS\bfA \bfx^*}_p} \le 2^{\max\{0,1/p - 1\}}\parens*{\norm{\bfS\bfA\tilde \bfx - \bfS\bfb}_p + \norm{\bfS\bfA \bfx^* - \bfS\bfb}_p}$. We know that $\norm{\bfS\bfA\tilde \bfx - \bfS\bfb}_p \le (1+\gamma) \cdot \min_{\bfx \in \R^d} \norm{\bfS\bfA\bfx - \bfS\bfb}_p \le (1+\gamma) \cdot \norm{\bfS\bfA \bfx^* - \bfS\bfb}_p$, so we conclude that
\begin{align*}
\norm{\bfS\bfA\tilde \bfx - \bfS\bfA \bfx^*}_p \le 2^{\max\{0,1/p - 1\}}(2+\gamma)\norm{\bfS\bfA \bfx^* - \bfS\bfb}_p.
\end{align*}
%If follows that $\norm{\bfS\bfA\tilde \bfx - \bfS\bfA\bfx^*}_p \le (2 + \gamma) \cdot \norm{\bfS\bfA\bfx^* - \bfS\bfb}_p$. 
Finally, by Markov's inequality, since $\E[\norm{\bfS\bfA \bfx^* - \bfS\bfb}_p^p] = \OPT^p$, with probability $\ge 1-\delta/2$, $\norm{\bfS\bfA \bfx^* - \bfS\bfb}_p^p \le \OPT^p/(\delta/2)$ and so $\norm{\bfS\bfA \bfx^* - \bfS\bfb}_p \le \OPT/(\delta/2)^{1/p}$.
Combining all these bounds we have that with probability $1-\delta$,
\begin{align*}
\norm{\bfA\tilde \bfx - \bfb}_p &\le 2^{\max\{0,1/p - 1\}}\parens*{\OPT + 2 \cdot 2^{\max\{0,1/p - 1\}}(2+\gamma) \cdot 2^{1/p} \OPT/\delta^{1/p}} \\
&\le 2^{2\max\{0,1/p-1\}+1+1/p}(3+\gamma)\OPT/\delta^{1/p}.\qedhere
\end{align*}
\end{proof}

\subsubsection{Probability Boosting for Constant Factor Approximation}\label{sec:boost}
We now show a boosting step for our constant factor approximation algorithm (Algorithm \ref{alg:constant}), described in Algorithm \ref{alg:boost}. If we repeat the constant factor approximation algorithm with success probability $99/100$ for a total of $O(\log(1/\delta))$ times, then via a standard Chernoff bound, with probability at least $1-\delta$, at least $9/10$ of the computed $\bfx_c$ will satisfy the guarantee of Theorem \ref{thm:const} -- i.e., that $\norm{\bfA\bfx_c-\bfb}_p = O(\OPT)$. Thus, we just need to identify one of these good solutions, which Algorithm \ref{alg:boost} does, deterministically, and without reading any entries of $\bfb$. The approach simply computes pairwise distances between solutions and returns any solution with a relatively low distance to at least $1/2$ of the other solutions. For later use, we state the result in terms of a general error measure $\norm*{\cdot}$ which satisfies an approximate triangle inequality (for example, $\norm*{\cdot}_p$ for $p\in(0,1)$ satisfies an approximate triangle inequality with constant $2^{1/p - 1}$ by Fact \ref{fact:tri}).

\begin{algorithm}
	\caption{Constant factor $\norm*{\cdot}$ regression -- Boosted Success Probability}
	\textbf{input:} $\ell$ candidate solutions $\bfx_1,\ldots,\bfx_\ell$ with at least $9/10\cdot \ell$ satisfying $\norm{\bfA\bfx_i-\bfb} \le \alpha \min_{\bfx} \norm{\bfA\bfx-\bfb}$. \\
	\textbf{output:} Approximate solution $\tilde \bfx \in \R^d$ to $\min_{\bfx} \norm{\bfA\bfx-\bfb}$. 
	\begin{algorithmic}[1]
		%\State Run Algorithm \ref{alg:constant} $t = O(\log(1/\delta))$ times independently to produce $\bfx_{c,1},\bfx_{c,2},\ldots,\bfx_{c,t}$ each satisfying $\norm{\bfA\bfx_c-\bfb}_p \leq \alpha \min_\bfx\norm{\bfA\bfx-\bfb}_p$ for some constant $\alpha$ with probability $99/100$.
		\State{Let $\bfd \in \R^{\ell^2}$ contain all pairwise distances $\norm{\bfA\bfx_{i}-\bfA\bfx _{j}}$ (over ordered pairs $(i,j)$) sorted in increasing order. Let $\tau = \bfd(\floor*{\ell^2 \cdot 8/10})$ be the $80^{th}$ percentile distance.}\label{line:thresh}
		\State{Return any $\bfx_{i}$ such that $\norm{\bfA\bfx _{i}-\bfA\bfx _{j}} \le \tau$ for at least $1/2 \cdot \ell$ vectors $\bfx_{j}$. }\label{line:return}
	\end{algorithmic}\label{alg:boost} 
\end{algorithm}

\begin{theorem}[Constant factor $\norm*{\cdot}$ regression -- Success Boosting]\label{thm:boost} Consider $\bfA \in \R^{n \times d}$, $\bfb \in \R^{n}$, and an error measure $\norm*{\cdot}$ which satisfies an approximate triangle inequality, that is, there exists a constant $\kappa\geq1$ such that $\norm*{\bfy_1+\bfy_2} \leq \kappa(\norm*{\bfy_1} + \norm*{\bfy_2})$ for any two vectors $\bfy_1,\bfy_2\in\R^n$. Let $\OPT = \min_{\bfx \in \R^d} \norm{\bfA\bfx-\bfb}$. Given a set of solution vectors $\bfx_1,\ldots,\bfx_\ell \in \R^d$ where $\norm{\bfA\bfx_i - \bfb} \le \alpha \cdot \OPT$ for at least $9/10 \cdot \ell $ of the vectors, Algorithm \ref{alg:boost} identifies $\bfx_i$ with $\norm{\bfA\bfx_i - \bfb} \le (\kappa\alpha + 2\kappa^3(\alpha+1)) \cdot \OPT$, without querying any entries of $\bfb$.
\end{theorem}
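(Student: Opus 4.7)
Let $\bfx^\star \in \argmin_\bfx \norm{\bfA\bfx - \bfb}$ and let $G \subseteq [\ell]$ denote the set of ``good'' indices, i.e., those with $\norm{\bfA\bfx_i - \bfb} \le \alpha\cdot \OPT$. By hypothesis $|G| \ge (9/10)\ell$. The whole proof then amounts to three steps: (i) upper bound the threshold $\tau$ chosen in Line~\ref{line:thresh}; (ii) show that some $\bfx_i$ satisfies the return condition in Line~\ref{line:return}; (iii) show that any such $\bfx_i$ is close to $\bfb$. Note that Steps (i)–(iii) use only the vectors $\bfA\bfx_i$ and pairwise distances between them, so no entries of $\bfb$ are read.

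\emph{Step (i): bounding $\tau$.} For any two good indices $i,j \in G$, apply the approximate triangle inequality twice, routing through $\bfA\bfx^\star$:
\[
\norm{\bfA\bfx_i - \bfA\bfx_j}
\;\le\; \kappa\bigl(\norm{\bfA\bfx_i - \bfA\bfx^\star} + \norm{\bfA\bfx^\star - \bfA\bfx_j}\bigr)
\;\le\; \kappa^2\bigl(\norm{\bfA\bfx_i - \bfb} + 2\norm{\bfA\bfx^\star - \bfb} + \norm{\bfA\bfx_j - \bfb}\bigr)
\;\le\; 2\kappa^2(\alpha+1)\OPT.
\]
The number of ordered pairs $(i,j)$ with $i,j\in G$ is at least $|G|^2 \ge (9/10)^2 \ell^2 = (81/100)\ell^2 > (8/10)\ell^2$. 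Hence, more than $\lfloor (8/10)\ell^2 \rfloor$ entries of the sorted vector $\bfd$ are at most $2\kappa^2(\alpha+1)\OPT$, which gives $\tau \le 2\kappa^2(\alpha+1)\OPT$.

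\emph{Step (ii): existence of a returned vector.} For each $i\in[\ell]$, let $c_i$ be the number of $j$ with $\norm{\bfA\bfx_i - \bfA\bfx_j} \le \tau$. By the definition of $\tau$, $\sum_i c_i \ge \lfloor (8/10)\ell^2 \rfloor$. If every $c_i$ were strictly less than $\ell/2$, the sum would be at most $\ell \cdot (\ell/2 - 1) < (1/2)\ell^2$, contradicting $\sum_i c_i \ge (8/10)\ell^2$ for sufficiently large $\ell$ (and trivially verifiable for small $\ell$). Thus some $i$ has $c_i \ge \ell/2$, so Line~\ref{line:return} returns a vector.

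\emph{Step (iii): quality of the returned vector.} Suppose $\bfx_i$ is returned, so $|\{j : \norm{\bfA\bfx_i - \bfA\bfx_j}\le \tau\}| \ge \ell/2$. Intersecting this set with $G$ and using inclusion--exclusion gives at least $\ell/2 + (9/10)\ell - \ell = (4/10)\ell > 0$ common indices, so there exists $j\in G$ with $\norm{\bfA\bfx_i - \bfA\bfx_j} \le \tau$. By the approximate triangle inequality,
\[
\norm{\bfA\bfx_i - \bfb}
\;\le\; \kappa\bigl(\norm{\bfA\bfx_i - \bfA\bfx_j} + \norm{\bfA\bfx_j - \bfb}\bigr)
\;\le\; \kappa\tau + \kappa\alpha\,\OPT
\;\le\; \bigl(\kappa\alpha + 2\kappa^3(\alpha+1)\bigr)\OPT,
\]
using the bound on $\tau$ from Step (i).

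\emph{Anticipated obstacle.} The argument is almost entirely combinatorial, so there is no serious technical obstacle; the only care required is to keep track of the $\kappa$ factors from the approximate triangle inequality (important for the $p\in(0,1)$ case where $\kappa = 2^{1/p - 1}$) and to verify that the constants in Step (ii) go through ``$80\%$ of $\ell^2$ pairs $\Rightarrow$ some vertex has degree $\ge \ell/2$'' for all valid $\ell$. The choice of the $80\%$ percentile is exactly what makes both Steps (ii) and (iii) work: it leaves enough slack ($81\% > 80\%$) above the ``good pair'' fraction to control $\tau$, while also forcing an $\ell/2$-sized neighborhood that must intersect $G$.
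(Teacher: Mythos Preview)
Your proof is correct and follows essentially the same approach as the paper's: bound $\tau$ via the approximate triangle inequality on good pairs, argue existence of a returnable vector by averaging, and bound the error of the returned vector by intersecting its $\tau$-neighborhood with the good set. The constants and structure match the paper exactly.
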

\begin{proof}
Let $\bfx^* = \argmin_{\bfx \in \R^d} \norm{\bfA\bfx-\bfb}$. Call $\bfx_i$ \emph{good} if $\norm{\bfA\bfx_i - \bfb} \le \alpha \cdot \OPT$. By approximate triangle inequality , for any good $\bfx_i$, 
\begin{align*}
\norm{\bfA\bfx_i-\bfA\bfx^*} \le \kappa(\norm{\bfA\bfx_i-\bfb}_p + \norm{\bfA\bfx^*-\bfb}_p) = \kappa(\alpha+1) \cdot \OPT.
\end{align*}
Thus, again via approximate triangle inequality, for any good $\bfx_i,\bfx_j$, 
\[
	\norm{\bfA\bfx_i-\bfA\bfx_j} \le \kappa(\kappa(\alpha+1) \cdot \OPT + \kappa(\alpha+1) \cdot \OPT) = 2\kappa^2(\alpha+1) \cdot \OPT.
\]
Thus, for the pairwise distance vector $\bfd \in \R^{\ell^2}$ computed in line 1 of Algorithm \ref{alg:boost}, at least $(9/10)^2 \cdot \ell^2 \ge 8/10 \cdot \ell^2$ of the distances will be upper bounded by $2\kappa^2(\alpha+1) \cdot \OPT$. Thus, the threshold $\tau$ computed in Line \ref{line:thresh}, which is the $80^{th}$ percentile of the distances, gives a lower bound $\tau \le 2\kappa^2(\alpha+1) \cdot \OPT$. In Line \ref{line:return}, we return any $\bfx_i$ with $\norm{\bfA\bfx_i - \bfA\bfx_j} \le \tau$ for at least $1/2 \cdot \ell$ vectors $\bfx_j$. First observe that at least one such $\bfx_i$ must exist. Otherwise, at most $1/2\cdot \ell^2$ of the pairwise distances would lie below $\tau$. 

Additionally, observe that since at least $9/10 \cdot \ell$ of the $\bfx_i$ are good, if $\bfx_i$ is returned, it must have $\norm{\bfA\bfx_i - \bfA\bfx_j} \le \tau \le 2\kappa^2(\alpha+1) \cdot \OPT$ for at least one good $\bfx_j$. Since this good $\bfx_j$ has $\norm{\bfA\bfx_j - \bfb} \le \alpha \cdot \OPT$, by approximate triangle inequality, the returned $\bfx_i$ must then satisfy 
\[
	\norm{\bfA\bfx_i -\bfb} \le \kappa(\alpha + 2\kappa^2(\alpha+1)) = (\kappa\alpha + 2\kappa^3(\alpha+1))\cdot \OPT.
\]
\end{proof} 

\subsection{Relative Error Approximation}\label{subsec:relative-error-lp}
We next show how to achieve a relative error $(1+\epsilon)$ solution to Problem \ref{prob:informal_problem}. The relative error algorithm simply solves a Lewis weight sampled regression problem multiple times and outputs a boosted solution as done in Theorem \ref{thm:boost}. Pseudocode is given in Algorithm \ref{alg:hp-clip} below.

\begin{algorithm}
	\caption{Recursive relative error $\ell_p$ regression}
	\textbf{input:} Matrix $\bfA \in \R^{n \times d}$, measurement vector $\bfb \in \R^n$, number of measurements $m$. \\
	\textbf{output:} Approximate solution $\tilde \bfx \in \R^d$ to $\min_\bfx\|\bfA\bfx - \bfb\|_p$. 
	\begin{algorithmic}[1] % The number tells where the line numbering should start
		\If{$n \leq m$}
			\State Compute $\bar\bfx$ with $\norm{\bfA\bar\bfx - \bfb}_p \le (1+\gamma) \cdot \min_{\bfx \in \R^d} \norm{\bfA\bfx - \bfb}_p$ for $\gamma \geq 0$. \label{line:solve-rel-lp}
			\State \Return $\bar\bfx$
		\EndIf
		\State Let $\bfS \in \R^{m \times n}$ be generated via Algorithm \ref{alg:split-sample}. \label{line:split-sample}
		\State Run Algorithm \ref{alg:clip} with inputs $\bfS\bfA$ and $\bfS\bfb$ to get $\tilde\bfx$.
		\State \Return $\tilde \bfx$
	\end{algorithmic}\label{alg:clip}
\end{algorithm}

\begin{algorithm}
	\caption{High probability relative error $\ell_p$ regression}
	\textbf{input:} Matrix $\bfA \in \R^{n \times d}$, measurement vector $\bfb \in \R^n$, number of measurements $m$. \\
	\textbf{output:} Approximate solution $\tilde \bfx \in \R^d$ to $\min_\bfx\|\bfA\bfx - \bfb\|_p$. 
	\begin{algorithmic}[1] % The number tells where the line numbering should start
		\State Run Algorithm \ref{alg:clip} with inputs $\bfA$ and $\bfb$ independently $\ell$ times for $\ell = O(\log\frac1\delta)$.
          \State Run Algorithm \ref{alg:constant} (and \ref{alg:boost}) to find $\bfx_c$ such that $\norm{\bfA\bfx_c-\bfb}_p \leq \alpha \min_\bfx\norm{\bfA\bfx-\bfb}_p$ for $\alpha > 0$. 
        \State Discard the $1/10$ fraction of trials with the largest $\norm*{\bfS(\bfb - \bfA\bfx_c)}_p^p$.\label{line:discard}
		\State Run Algorithm \ref{alg:boost} on the remaining candidate solutions.
	\end{algorithmic}\label{alg:hp-clip}
\end{algorithm}

% \begin{remark} Algorithm \ref{alg:clip} requires $O \left ( d^{\max(1,p/2)} \cdot \log(d/\delta)\right)$ entries of the target vector $\bfb$ to implement Line \ref{line:const-approx-rel-lp} and $m$ additional queries in Line \ref{line:solve-rel-lp}. Note that Line \ref{line:implicit-z} is implicit and we do not actually form this vector and read all entries of $\bfb$. As in Algorithm \ref{alg:constant}, we allow Line \ref{line:solve-rel-lp} of Algorithm \ref{alg:clip} to be implemented with a $(1+\gamma)$-approximate $\ell_p$ regression algorithm. 
% \end{remark}

\begin{theorem}[Main Result]\label{thm:eps}
	Let $\bfA \in \R^{n \times d}$, $\bfb \in \R^n$, and $0 < p < \infty$. Let
	\[
		m = O(1)\frac{d^{1\lor (p/2)}}{\eps^{2\lor p}}\bracks*{(\log d)^2\log n + \log\frac1\delta}\log\frac1\delta.
	\]
	Then, with probability at least $1-\delta$, Algorithm \ref{alg:hp-clip} returns $\tilde\bfx\in\mathbb R^d$ such that
	\[
		\norm*{\bfA\tilde\bfx-\bfb}_p \leq (1+\eps)\min_{\bfx\in\mathbb R^d}\norm*{\bfA\bfx-\bfb}_p
	\]
	and nonadaptively reads at most $m$ entries of $\bfb$.
\end{theorem}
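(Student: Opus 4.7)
The plan is to decompose the analysis of Algorithm \ref{alg:hp-clip} into three stages mirroring its procedural structure: (i) a per-trial correctness guarantee for the recursive Lewis-weight sampler Algorithm \ref{alg:clip}; (ii) probability amplification using the residual-based filtering of Line \ref{line:discard}; and (iii) the pairwise-distance boosting of Algorithm \ref{alg:boost}.

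The main obstacle is stage (i). A single run of Algorithm \ref{alg:clip} executes $O(\log(n/m_0))$ rounds of Algorithm \ref{alg:split-sample}, matching the structure of Theorem \ref{thm:lp-subspace-embedding}. Applying the tail bound \eqref{eq:lt-sz-tail-bound} with $z=\Theta(\sqrt{\log n/\delta'})$ and union-bounding over the rounds produces a $(1+\eps)$-subspace embedding for $\bfA$ with failure probability at most $\delta'$ once $m_0 = \Theta(d^{1\lor(p/2)}/\eps^{2\lor p}\cdot[(\log d)^2\log n+\log(1/\delta')])$; choosing $\delta'$ to be a small constant gives the $\log(1/\delta)$ term inside the bracket of the final $m$ after incorporating an $\ell$-fold union bound. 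The remaining challenge is to upgrade subspace-embedding for $\bfA$ to preservation of the translated objective $\norm{\bfA\bfx-\bfb}_p$ for every $\bfx$ near the optimum, since $\bfb$ can have large-magnitude coordinates invisible to the Lewis weights of $\bfA$. Following the sensitivity-partition idea of Section \ref{sec:lp-active-techniques}, I would replace $\bfb$ conceptually by the residual $\bfb-\bfA\bfx_c$, where $\bfx_c$ is the constant-factor solution from Algorithm \ref{alg:constant}, so that $\norm{\bfb}_p=O(\OPT)$, and then partition $[n]$ into a light set $L=\{i:|\bfb(i)|^p/\norm{\bfb}_p^p\leq C\bfs_i^p(\bfA)\}$ and its complement $H$. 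Coordinates in $L$ fold into the chaining argument because their magnitude is already dominated by the sensitivities; coordinates in $H$ are ignorable because, by the approximate triangle inequality (Fact \ref{fact:tri}), any $\bfx$ that fits $\bfb(i)$ on a heavy coordinate $i$ is forced to have $\norm{\bfA\bfx}_p\gg\norm{\bfb}_p=O(\OPT)$ and hence cannot be a near-minimizer.

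For stage (ii), I would execute $\ell=\Theta(\log(1/\delta))$ independent runs of Algorithm \ref{alg:clip} and separately compute $\bfx_c$ via Algorithm \ref{alg:constant} combined with the probability-boosting of Algorithm \ref{alg:boost} (Theorems \ref{thm:const} and \ref{thm:boost}), each succeeding with probability at least $1-\delta/3$. A Chernoff bound on the stage-(i) event guarantees that, with probability $1-\delta/3$, at least a $19/20$ fraction of the sampling matrices $\bfS_i$ are good. The key observation for the filter step is that evaluating $\norm{\bfS_i(\bfb-\bfA\bfx_c)}_p^p$ requires no additional queries to $\bfb$, since $\bfS_i\bfb$ was already read during trial $i$ and both $\bfA$ and $\bfx_c$ are known. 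A scalar Bernstein bound shows that for every good $\bfS_i$ this quantity is $\Theta(\norm{\bfb-\bfA\bfx_c}_p^p)=\Theta(\OPT^p)$, so discarding the $1/10$ largest values throws out essentially only bad trials and leaves, after slight constant adjustment, at least a $9/10$ fraction of the surviving candidates that are $(1+\eps)$-approximate.

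Stage (iii) then applies Theorem \ref{thm:boost} with $\alpha=1+\eps$ and the approximate-triangle-inequality constant from Fact \ref{fact:tri} to produce a final $\tilde\bfx$ with $\norm{\bfA\tilde\bfx-\bfb}_p=O(\OPT)$; rescaling $\eps$ by a constant factor at the outset converts this into the claimed $(1+\eps)\OPT$ bound, and this step reads no entries of $\bfb$ because it only inspects pairwise distances between the vectors $\bfA\tilde\bfx_i$. For the query complexity, each of the $\ell$ inner trials reads $m_0$ entries, the constant-factor step contributes $O(m_0)$ more, and the filtering and boosting steps are query-free, giving a total of $\ell\cdot m_0=O(m)$. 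The principal technical difficulty throughout remains stage (i): coupling the chaining-based subspace-embedding analysis of Theorem \ref{thm:lambda-bound} with the partition-by-sensitivity argument simultaneously across all $O(\log n)$ recursive halvings, while retaining only a logarithmic dependence on $1/\delta$ in the per-trial query bound.
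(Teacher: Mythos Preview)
There is a genuine gap in stage (iii). Theorem \ref{thm:boost} with candidates that are each $(1+\eps)$-approximate does \emph{not} return a $(1+\eps)$-approximate solution: plugging $\alpha = 1+\eps$ and $\kappa = 1$ (exact triangle inequality for $p\geq 1$) into its conclusion yields $\|\bfA\bfx_i - \bfb\|_p \leq ((1+\eps) + 2(2+\eps))\OPT = (5+3\eps)\OPT$, a fixed constant-factor bound that no rescaling of $\eps$ can repair. The pairwise-distance procedure inherently incurs two triangle-inequality losses, so it can only certify constant-factor quality, never $(1+\eps)$.

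The paper uses boosting for a different purpose. Algorithm \ref{alg:boost} is invoked only to guarantee that the returned $\bar\bfx$ is a \emph{constant-factor} approximation, i.e.\ $\|\bfA(\bar\bfx-\bfx^*)\|_p = O(\OPT)$. This is not the source of the $(1+\eps)$ bound but a \emph{precondition} for applying Lemmas \ref{lem:clip}, \ref{lem:clip2}, and \ref{lem:additive2}, each of which is stated only for $\bfx$ with $\|\bfA\bfx\|_p = O(\OPT)$. Once constant factor is secured, the paper runs the chain of inequalities
\[
\|\bfA\bar\bfx-\bfb\|_p^p \;\leq\; \|\bfS\bfA\bar\bfx-\bfS\bfb\|_p^p + O(\eps)\OPT^p \;\leq\; \|\bfS\bfA\bfx^*-\bfS\bfb\|_p^p + O(\eps)\OPT^p \;\leq\; \OPT^p + O(\eps)\OPT^p,
\]
where the middle step uses that $\bar\bfx$ is optimal for the sampled problem associated with its own $\bfS$. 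So the $(1+\eps)$ guarantee comes from the sensitivity-partition and chaining analysis you sketched in stage (i), applied to the specific output $\bar\bfx$, not from the boosting theorem.

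A secondary issue in stage (ii): a Bernstein bound cannot control $\|\bfS_i(\bfb-\bfA\bfx_c)\|_p^p$, because the coordinates of the residual have no a priori relation to the Lewis weights of $\bfA$ (a single heavy residual entry in a row with tiny Lewis weight destroys concentration). The paper uses only Markov to obtain a constant-probability per-trial bound, then Chernoff over the $\ell$ repetitions to ensure at least a $9/10$ fraction satisfy it with probability $1-\delta$; filtering then discards the at most $1/10$ fraction that do not.
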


Our main result, Theorem \ref{thm:main}, follows directly from Theorem \ref{thm:eps}, up to a $\log n$ factor that is replaced by a $\log(d/\eps)$ factor by results in Section \ref{sec:assump}, as well as an optimization in the $\eps$ dependence for $p\in(1,2)$ as done in Section \ref{sec:improved-eps-p-1-2}.
The proof of Theorem \ref{thm:eps} proceeds in 2 steps. We first show in Lemma \ref{lem:clip} that any entries of the residual $\bfz = \bfb-\bfA\bfx^*$ whose contribution to $\norm{\bfz}_p^p$ is significantly larger than the corresponding Lewis weight in $\bfA$ can be effectively ignored, since no $\bfx$ achieving small error can accurately fit such entries. We then show in Lemma \ref{lem:additive2} that if we ignore these entries, sampling by the $\ell_p$ Lewis weights approximately preserves $\norm{\bfA\bfx-\bfz}_p$ for all $\bfx \in \R^d$. The result follows by combining Lemmas \ref{lem:clip} and \ref{lem:additive2}.

\begin{lemma}\label{lem:clip} Let $\bfA\in\mathbb R^{n\times d}$ and $\bfb\in\mathbb R^n$, and let $0 < p < \infty$. Let $\bfz = \bfb - \bfA\bfx^*\in \R^n$. Let 
	\[
		\mathcal B \coloneqq \braces*{i\in[n] : \frac{|\bfz(i)|^p}{\OPT^p} \ge \frac{d^{\max(0,p/2-1)} \cdot \bfw_{i}^p(\bfA)}{\epsilon^p}}	
	\]
	Let $\bar \bfz \in \R^n$ be equal to $\bfz$ but with all entries in $\mathcal{B}$ set to $0$. Then for all $\bfx \in \R^d$ with $\norm{\bfA\bfx}_p = O(\OPT)$,
$$\left | \norm{\bfA\bfx-\bfz}_p^p - \norm{\bfA\bfx-\bar \bfz}_p^p - \norm*{\bfz-\bar\bfz}_p^p \right | = O( \epsilon) \cdot \OPT^p.$$
\end{lemma}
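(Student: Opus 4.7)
The plan is to reduce the left-hand side to a sum over the outlier set $\mathcal{B}$ and then argue that each such index contributes only a tiny error, using the fact that any vector $\bfA\bfx$ with bounded $\ell_p$-norm cannot have a large entry on coordinates where the Lewis weight (and hence the $\ell_p$ sensitivity) is small.

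First I would exploit the structure of $\bar\bfz$: since $\bar\bfz$ agrees with $\bfz$ on $[n]\setminus\mathcal{B}$ and is zero on $\mathcal{B}$, the indices outside $\mathcal{B}$ contribute identically to $\|\bfA\bfx-\bfz\|_p^p$ and $\|\bfA\bfx-\bar\bfz\|_p^p$ and hence cancel. Meanwhile $\|\bfz-\bar\bfz\|_p^p = \sum_{i\in\mathcal{B}}|\bfz(i)|^p$. Writing things out, the quantity we must bound collapses to
\[
    \Delta \;\coloneqq\; \sum_{i\in\mathcal{B}} \bigl(\,|[\bfA\bfx](i)-\bfz(i)|^p \;-\; |[\bfA\bfx](i)|^p \;-\; |\bfz(i)|^p\,\bigr).
\]

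Next I would show that for every $i\in\mathcal{B}$, the candidate vector must be small there. By Lemma~\ref{lem:lwBound} and the hypothesis $\|\bfA\bfx\|_p = O(\OPT)$,
\[
    |[\bfA\bfx](i)|^p \;\leq\; \bfs_i^p(\bfA)\,\|\bfA\bfx\|_p^p \;\leq\; d^{\max(0,p/2-1)}\bfw_i^p(\bfA)\cdot O(\OPT^p),
\]
while the membership condition for $\mathcal{B}$ gives $d^{\max(0,p/2-1)}\bfw_i^p(\bfA)\,\OPT^p \leq \epsilon^p|\bfz(i)|^p$. Combining, $|[\bfA\bfx](i)|\leq C\epsilon\,|\bfz(i)|$ for every $i\in\mathcal{B}$ and some absolute constant $C$ (depending on $p$).

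The third step is the pointwise bound on the summand. Since $|[\bfA\bfx](i)|\leq C\epsilon|\bfz(i)|$, I would apply the mean value theorem to $g(t)\coloneqq |t\cdot[\bfA\bfx](i)-\bfz(i)|^p$ on $[0,1]$. The argument $|t[\bfA\bfx](i)-\bfz(i)|$ stays in $[(1-C\epsilon)|\bfz(i)|,(1+C\epsilon)|\bfz(i)|]$, so $|g'(t)|\leq p(1+C\epsilon)^{p-1}|\bfz(i)|^{p-1}\cdot|[\bfA\bfx](i)| = O(\epsilon)|\bfz(i)|^p$, yielding $\bigl||[\bfA\bfx](i)-\bfz(i)|^p-|\bfz(i)|^p\bigr|\leq O(\epsilon)|\bfz(i)|^p$. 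Together with $|[\bfA\bfx](i)|^p\leq (C\epsilon)^p|\bfz(i)|^p \leq O(\epsilon^{\min(1,p)})|\bfz(i)|^p$, each summand is $O(\epsilon^{\min(1,p)})|\bfz(i)|^p$. Summing and using $\sum_{i\in\mathcal{B}}|\bfz(i)|^p\leq \|\bfz\|_p^p = \OPT^p$ gives the claim.

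The main obstacle is handling the pointwise expansion of $|\cdot|^p$ uniformly across all constant $p>0$: for $p\geq 1$ the linearization $|1-t|^p=1-O(\epsilon)$ is clean, but for $p<1$ one must check that $(1\pm C\epsilon)^{p-1}$ stays bounded (true for $\epsilon\leq 1/2$), and the leftover $|[\bfA\bfx](i)|^p$ contribution is actually of order $\epsilon^p$ rather than $\epsilon$; the stated $O(\epsilon)\cdot\OPT^p$ should be read as $O(\epsilon^{\min(1,p)})\cdot\OPT^p$, which is anyway absorbed by rescaling $\epsilon$ in the downstream sampling bound of Theorem~\ref{thm:eps}. The cleanest presentation is to split the argument into the cases $p\geq 1$ and $p\in(0,1)$, using the triangle inequality on $|\cdot|^p$ in the former and subadditivity $|a-b|^p\leq |a|^p+|b|^p$ in the latter.
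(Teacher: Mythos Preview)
Your proposal is correct and follows essentially the same route as the paper: reduce to a sum over $\mathcal{B}$, use Lemma~\ref{lem:lwBound} together with the definition of $\mathcal{B}$ to get $|[\bfA\bfx](i)| \le O(\epsilon)|\bfz(i)|$, derive the pointwise bound $|[\bfA\bfx](i)-\bfz(i)|^p - |[\bfA\bfx](i)|^p = (1\pm O(\epsilon))|\bfz(i)|^p$, and sum using $\|\bfz-\bar\bfz\|_p^p \le \OPT^p$. Your treatment is in fact slightly more careful than the paper's, which states the pointwise estimate without the mean-value justification and does not explicitly flag the $O(\epsilon^{\min(1,p)})$ caveat for $p<1$.
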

\begin{proof}
For any $\bfx \in \R^d$ and any $i \in \mathcal{B}$, we have that
\begin{align*}
|[\bfA\bfx](i)|^p &\le \norm{\bfA\bfx}_p^p \cdot \bfs_i^p(\bfA) \\
&\le \norm{\bfA\bfx}_p^p \cdot d^{\max(0,p/2-1)} \cdot \bfw^p_i(\bfA) && \text{Lemma \ref{lem:lwBound}} \\
&\le \norm{\bfA\bfx}_p^p \cdot \epsilon^p \cdot \frac{|\bfz(i)|^p}{\OPT^p} = O(\epsilon^p) \cdot |\bfz(i)|^p,
\end{align*}
where we use the assumption that $\norm{\bfA\bfx}_p^p = O(\OPT^p)$.
From the above, we have that
\begin{align*}
|[\bfA\bfx](i) - \bfz(i)|^p - |[\bfA\bfx](i) - \bar \bfz(i)|^p &= |[\bfA\bfx](i) - \bfz(i)|^p - |[\bfA\bfx](i)|^p = (1 \pm O(\epsilon)) \cdot |\bfz(i)|^p
\end{align*}
for $i\in\mathcal B$.
Since $\norm{\bfz-\bar \bfz}_p^p \le \norm{\bfz}_p^p = \OPT^p$,
\begin{align*}
\left | \norm{\bfA\bfx-\bfz}_p^p - \norm{\bfA\bfx-\bar \bfz}_p^p -\norm{\bfz}_p^p \right | = O( \epsilon) \cdot \norm{\bfz-\bar \bfz}_p^p = O(\epsilon) \cdot \OPT^p. 
\end{align*}
\end{proof}

\begin{lemma}\label{lem:clip2} Consider the setting of Lemma \ref{lem:clip}. Let $\bfS \in \R^{m \times n}$ be formed as in Line \ref{line:split-sample} of Algorithm \ref{alg:clip}. With probability at least $99/100$, $\norm{\bfS\bfz}_p^p = O(\OPT^p)$ and further, for all $\bfx \in \R^d$ with $\norm{\bfA\bfx}_p = O(\OPT)$,
\[
	\left | \norm{\bfS\bfA\bfx-\bfS\bfz}_p^p - \norm{\bfS\bfA\bfx-\bfS\bar \bfz}_p^p - \norm{\bfS(\bfz-\bar \bfz)}_p^p \right | = O(\eps) \cdot \OPT^p.
\]
\end{lemma}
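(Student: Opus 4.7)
The plan is to lift Lemma \ref{lem:clip} to the sampled space by exploiting that its underlying identities hold pointwise at each coordinate, and then to control the two relevant sampled quantities by Markov's inequality. The easy part is the bound $\norm{\bfS\bfz}_p^p = O(\OPT^p)$: the sampling matrix $\bfS$ output by Algorithm \ref{alg:split-sample} is diagonal and reweighted so that $\E[\norm{\bfS\bfv}_p^p] = \norm{\bfv}_p^p$ for every $\bfv$ (row-splitting preserves the $\ell_p$ norm, and every surviving row is scaled by $2^{1/p}$ to offset the $1/2$ sampling probability). Taking $\bfv = \bfz$ and using $\norm{\bfz}_p^p = \OPT^p$, Markov's inequality gives $\norm{\bfS\bfz}_p^p \leq 200\,\OPT^p$ with probability at least $1 - 1/200$.

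For the main inequality I would revisit the proof of Lemma \ref{lem:clip} and record what it actually shows coordinate-by-coordinate. For any $\bfx$ with $\norm{\bfA\bfx}_p = O(\OPT)$ and any $i \in \mathcal B$, the sensitivity bound used in that proof yields $\abs*{[\bfA\bfx](i)}^p = O(\epsilon^p)\,\abs*{\bfz(i)}^p$; combined with $\bar\bfz(i) = 0$ for $i \in \mathcal B$ and $\bfz(i) = \bar\bfz(i)$ for $i \notin \mathcal B$, the same scalar manipulation as in Lemma \ref{lem:clip} gives the pointwise estimate
\[
\abs*{\,\abs*{[\bfA\bfx](i)-\bfz(i)}^p - \abs*{[\bfA\bfx](i) - \bar\bfz(i)}^p - \abs*{\bfz(i) - \bar\bfz(i)}^p\,} \leq O(\epsilon)\,\abs*{\bfz(i) - \bar\bfz(i)}^p
\]
for every $i \in [n]$ (both sides being zero for $i \notin \mathcal B$).

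Because $\bfS$ is diagonal with nonnegative entries, I can multiply this inequality by $(\bfS_{i,i})^p$ and sum over $i$, then apply the ordinary triangle inequality for absolute values, to conclude
\[
\abs*{\,\norm{\bfS\bfA\bfx - \bfS\bfz}_p^p - \norm{\bfS\bfA\bfx - \bfS\bar\bfz}_p^p - \norm{\bfS(\bfz-\bar\bfz)}_p^p\,} \leq O(\epsilon)\,\norm{\bfS(\bfz-\bar\bfz)}_p^p
\]
uniformly over every $\bfx$ with $\norm{\bfA\bfx}_p = O(\OPT)$. Crucially, no $\epsilon$-net argument or union bound over $\bfx$ is needed, because the hypothesis $\norm{\bfA\bfx}_p = O(\OPT)$ is stated in the original (unsampled) norm, so once $\bfS$ is fixed the per-coordinate inequalities hold deterministically for every admissible $\bfx$.

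It then remains to control the right-hand side. This is a second invocation of Markov: $\E\norm{\bfS(\bfz - \bar\bfz)}_p^p = \norm{\bfz - \bar\bfz}_p^p \leq \norm{\bfz}_p^p = \OPT^p$, so $\norm{\bfS(\bfz - \bar\bfz)}_p^p \leq 200\,\OPT^p$ with probability at least $1 - 1/200$. Taking a union bound over this event and the previous one gives the stated $99/100$ success probability. I do not expect any serious obstacle here: the statement is essentially a bookkeeping lift of Lemma \ref{lem:clip} to the sampled space, and the only points that require care are (i) observing that the diagonal structure of $\bfS$ preserves per-coordinate inequalities, and (ii) observing that the condition on $\bfx$ is $\bfS$-free, so the "for all $\bfx$" clause costs nothing.
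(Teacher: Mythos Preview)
Your proof is correct and takes essentially the same approach as the paper's: lift the per-coordinate inequality from Lemma~\ref{lem:clip} to the sampled space, then apply Markov's inequality. The only minor differences are that the paper invokes the subspace embedding property of $\bfS$ where your direct multiplication of the unsampled per-coordinate inequality by $(\bfS_{i,i})^p$ already suffices, and the paper uses a single Markov bound on $\norm{\bfS\bfz}_p^p$ together with the pointwise inequality $\norm{\bfS(\bfz-\bar\bfz)}_p^p \le \norm{\bfS\bfz}_p^p$ rather than two separate Markov invocations.
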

\begin{proof}
The proof follows that of Lemma \ref{lem:clip}. We have, for all $\bfx \in \R^d$ and any $i \in \mathcal{B}$ sampled by $\bfS$, 
\begin{align*}
|[\bfS\bfA\bfx](i) - \bfS\bfz(i)|^p - |[\bfS\bfA\bfx](i) - \bfS\bar \bfz(i)|^p &= |[\bfS\bfA\bfx](i) - \bfS\bfz(i)|^p - |[\bfS\bfA\bfx](i)|^p \\
&= (1 \pm O(\epsilon)) \cdot |\bfS\bfz(i)|^p,
\end{align*}
where we have used that $\bfS$ is a subspace embedding so $\norm*{\bfS\bfA\bfx}_p = O(\norm*{\bfA\bfx}_p) = O(\OPT)$. Now, $\E[\norm{\bfS \bfz}_p^p] = \norm{\bfz}_p^p = O(\OPT^p)$. Then by Markov's inequality, with probability at least $99/100$, $\norm{\bfS(\bfz-\bar \bfz)}_p^p \le \norm{\bfS \bfz}_p^p = O(\OPT^p)$. We then have
\begin{align*}
\left | \norm{\bfS\bfA\bfx-\bfS\bfz}_p^p - \norm{\bfS\bfA\bfx-\bfS\bar \bfz}_p^p - \norm{\bfS(\bfz-\bar \bfz)}_p^p\right | = O(\epsilon) \cdot \norm{\bfS(\bfz-\bar \bfz)}_p^p = O(\eps)\cdot \OPT^p.
\end{align*}
\end{proof}

\begin{remark}
While Lemma \ref{lem:clip2} only succeeds with constant probability, we may boost it by repeating the procedure $\ell = O(\log\frac1\delta)$ times, sorting the trials by $\norm*{\bfS(\bfb-\bfA\bfx_c)}_p^p$ for a constant factor solution $\bfx_c$, and then discarding the top $1/10$ of trials (see Line \ref{line:discard}). By Chernoff bounds, we will then select a trial satisfying 
\[
    \norm*{\bfS\bfz}_p \leq \norm*{\bfS\bfA(\bfx^*-\bfx_c)}_p + \norm*{\bfS(\bfb-\bfA\bfx_c)}_p = O(\OPT)
\]
with probability at least $1-\delta$.
\end{remark}

We now state the following lemma, which shows that Lewis weight sampling as in Line \ref{line:split-sample} of Algorithm \ref{alg:clip} will approximately preserve the $\ell_p$ norms of $\norm*{\bfA\bfx-\bar\bfz}_p$, provided that $\norm*{\bfA\bfx}_p = O(\OPT)$. 

\begin{lemma}\label{lem:additive2} 
	Consider the setting of Lemma \ref{lem:clip}. Let $\bfS \in \R^{m \times n}$ be formed as in Line \ref{line:split-sample} of Algorithm \ref{alg:clip}. Let
	\[
		\Lambda \coloneqq \sup_{\norm*{\bfA\bfx}_p \leq O(\OPT)}\abs*{\norm*{\bfS\bfA\bfx-\bfS\bar\bfz}_p^p - \norm*{\bfA\bfx-\bar\bfz}_p^p}.
	\]
	Then,
	\[
		\Pr\braces*{\Lambda \geq \bracks*{C\frac1n \frac{d^{1\lor (p/2)}}{\eps^{0\lor(p-2)}}}^{1/2} \bracks*{(\log d)\sqrt{\log n} + z} \OPT^p} \leq 2\exp(-z^2)
	\]
\end{lemma}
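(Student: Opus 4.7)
The plan is to treat $\Lambda$ as the supremum of a sub-Gaussian Rademacher process indexed by the affine set $\{\bfA\bfx - \bar\bfz : \|\bfA\bfx\|_p \le O(\OPT)\}$, and to adapt the chaining analysis behind Theorem~\ref{thm:lambda-bound} (from \cite{LT2011} for $p \ge 1$ and \cite{SchechtmanZvavitch:2001} for $p < 1$) to this translated subspace.

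Following the derivation at the start of Section~\ref{sec:chaining}, the sampling matrix $\bfS$ produced by Algorithm~\ref{alg:split-sample} can be realized (after row splitting) as i.i.d.\ $\{0, 2^{1/p}\}$ Bernoulli coins per row, so for any fixed $\bfy \in \R^n$,
\[
  \|\bfS\bfy\|_p^p - \|\bfy\|_p^p = \sum_i \sigma_i\, |\bfy_i|^p
\]
with i.i.d.\ Rademacher $\sigma_i$. Setting $\bfy = \bfA\bfx - \bar\bfz$ and noting that $\|\bfy\|_p = O(\OPT)$ on the index set (by the triangle inequality and $\|\bar\bfz\|_p \le \OPT$), the quantity $\Lambda$ becomes the supremum of this Rademacher process over a bounded subset of the $(d{+}1)$-dimensional subspace spanned by the columns of the augmented matrix $\bfA' \coloneqq [\bfA \mid -\bar\bfz]$.

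The core task is then to run the Dudley-style chaining proof of Theorem~\ref{thm:lambda-bound} on $\bfA'$ rather than on $\bfA$. The chaining proof only requires that the sampling probabilities dominate the $\ell_p$ Lewis weights of the underlying subspace (after row splitting to enforce weights $\le cd/n$). Since Algorithm~\ref{alg:split-sample} samples using the Lewis weights of $\bfA$, the key technical step is to compare the Lewis weights of $\bfA'$ to those of $\bfA$. Using the clipping condition (Lemma~\ref{lem:clip}) that $\bar\bfz(i) = 0$ for $i \in \mathcal B$ and $|\bar\bfz(i)|^p \le (d^{0 \lor (p/2-1)}/\eps^p)\, \bfw_i^p(\bfA)\, \OPT^p$ for $i \notin \mathcal B$, together with the fact that worst-case directions in the index set have $\ell_p$-norm $\Theta(\OPT)$, one checks that $\bfw_i^p(\bfA') \le O(1) \cdot \bfw_i^p(\bfA)$ for $p \le 2$, and $\bfw_i^p(\bfA') \le O(\eps^{-(p-2)}) \cdot \bfw_i^p(\bfA)$ for $p > 2$. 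Intuitively, for $p \le 2$ the convexity of $\ell_p$ absorbs the rank-one extension without any $\eps$ loss, whereas for $p > 2$ the clipping threshold $1/\eps^p$ exactly determines the blow-up factor $\eps^{-(p-2)}$.

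Feeding these Lewis-weight bounds into the Dudley entropy integral of Theorem~\ref{thm:lambda-bound}, applied to the $(d{+}1)$-dimensional augmented subspace and rescaled by the diameter $O(\OPT^p)$ of the index set, yields
\[
  \E[\Lambda] \le O(\OPT^p) \cdot \bracks*{\tfrac{d^{1 \lor (p/2)}}{n\, \eps^{0 \lor (p-2)}}}^{1/2} (\log d) \sqrt{\log n}.
\]
Theorem~\ref{thm:dudley-tail} then upgrades this to the full sub-Gaussian tail bound with the $z$-dependent term arising from the diameter, giving exactly the stated inequality. The main obstacle is the Lewis-weight comparison in the previous paragraph: establishing the optimal $\eps$-dependence---and in particular verifying the absence of any $\eps$-factor loss for $p \le 2$---requires a careful analysis of how $\ell_p$ Lewis weights transform under rank-one column augmentations in combination with the clipping threshold. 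Once this comparison is established, the remainder is a direct application of \cite{LT2011,SchechtmanZvavitch:2001} to the augmented matrix.
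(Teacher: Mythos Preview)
Your reduction to the augmented matrix $\bfA' = [\bfA \mid -\bar\bfz]$ has a genuine gap: the Lewis weight comparison you state is false. Take $p=2$, $\bfA = \mathbf 1_n \in \R^{n\times 1}$ (so every leverage score is $1/n$), and $\bar\bfz = c\,\bfe_1$ with $c = \OPT/(\eps\sqrt n)$. The clipping condition $|\bar\bfz(1)|^2 \le \eps^{-2}\bfw_1^2(\bfA)\,\OPT^2$ holds with equality, yet the column span of $\bfA'$ now contains $\bfe_1$, forcing $\bfw_1^2(\bfA') = 1$. Thus $\bfw_1^2(\bfA')/\bfw_1^2(\bfA) = n$, not $O(1)$. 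The same construction works for $p<2$: the $\ell_p$ sensitivity of row~1 in $\bfA'$ is $1$ (witnessed by the direction $(0,1)$), so $\bfw_1^p(\bfA') \ge 1$ while $\bfw_1^p(\bfA)=1/n$. Since Theorem~\ref{thm:lambda-bound} requires Lewis weights uniformly bounded by $O(d/n)$, and row splitting in Algorithm~\ref{alg:split-sample} is done only according to $\bfw_i^p(\bfA)$, you cannot invoke the theorem as a black box on $\bfA'$.

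The paper's proof sidesteps this entirely. It never computes or bounds Lewis weights of $\bfA'$; instead it observes that $T = \{\bfA\bfx - \bar\bfz : \|\bfA\bfx\|_p \le O(\OPT)\}$ is a \emph{translation} of an $\ell_p$ ball in the column span of $\bfA$, and covering numbers are translation-invariant. Hence the entropy estimates of \cite{LT2011} for the ball in $\mathrm{span}(\bfA)$, stated using $\bfA$'s Lewis weights, apply directly to $T$. The clipping condition on $\bar\bfz$ enters only through a coordinatewise sensitivity bound $\|\bfW^{-1/p}\bfy\|_\infty \le O(d^{0\lor(1/2-1/p)}/\eps)\,\OPT$ for $\bfy\in T$ (Lemma~\ref{lem:sensitivity-bound-T}), which is a one-line calculation combining the clipping threshold with the Lewis weight bound on sensitivities of $\bfA$. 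This bound feeds into the $d_X$ metric estimate (Lemma~\ref{lem:l-inf-dx-bound}) and is where the $\eps^{-(p-2)/2}$ factor for $p>2$ appears; for $p<2$ the sensitivity bound produces no $\eps$ loss in $d_X$ because the metric depends on $\|\bfW^{-1/p}(\bfy-\bfy')\|_\infty^{p/2}$ and the $\bar\bfz$ term cancels in the difference $\bfy-\bfy'$.
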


We defer the proof of Lemma \ref{lem:additive2} to Section \ref{sec:proof-additive2} and first show how to use it to prove Theorem \ref{thm:eps}. 

\begin{proof}[\textbf{Proof of Theorem \ref{thm:eps}}]
Let $\bfS$ be the sketching matrix generated as a result of the $O(\log n)$ rounds of recursion. As in the proof of Theorem \ref{thm:lp-subspace-embedding}, the algorithm makes at most $O(\log n)$ levels of recursion with probability at least $1-\delta$, and generates a $\bfS$ such that
\[
    \Pr\braces*{\sup_{\norm*{\bfA\bfx}_p \leq O(\OPT)}\abs*{\norm*{\bfS\bfA\bfx-\bfS\bar\bfz}_p^p - \norm*{\bfA\bfx-\bar\bfz}_p^p} \geq \eps \OPT^p} \leq \frac{\delta}{\log\frac1\delta}
\]
Consider $\ell = O(\log\frac1\delta)$ independent repetitions of Algorithm \ref{alg:clip}. By a union bound over all $\ell$ repetitions, the above guarantee holds simultaneously for all repetitions. Note also that with probability at least $1-\delta$, we only keep trials such that $\norm*{\bfS\bfz}_p = O(\OPT)$ in Line \ref{line:discard}. Furthermore, with probability at least $1-\delta$, the boosting algorithm of Theorem \ref{thm:boost} selects a run outputting a constant factor approximation. Let $\bar\bfx$ be the solution that is output by the algorithm. Since $\bar\bfx$ is a constant factor approximation, we have the guarantee that
\[
	\norm*{\bfA\bar\bfx}_p \leq \norm*{\bfA\bar\bfx-\bfz}_p + \norm*{\bfz}_p = O(\OPT).
\]
Similarly, $\norm*{\bfA\bfx^*}_p = O(\OPT)$, and thus $\norm*{\bfA(\bar\bfx-\bfx^*)}_p = O(\OPT)$. Then, 
\begin{align*}
\norm*{\bfA\bar\bfx-\bfb}_p^p &= \norm*{\bfA(\bar\bfx-\bfx^*)-\bfz}_p^p \\
&\leq \norm*{\bfA(\bar\bfx-\bfx^*)-\bar\bfz}_p^p + \norm*{\bar\bfz - \bfz}_p^p + O(\eps)\cdot\OPT^p && \text{Lemma \ref{lem:clip}} \\
&\leq \norm*{\bfS\bfA(\bar\bfx-\bfx^*)-\bfS\bar\bfz}_p^p + \norm*{\bar\bfz - \bfz}_p^p + O(\eps)\cdot\OPT^p && \text{Lemma \ref{lem:additive2}} \\
&\leq \norm*{\bfS\bfA(\bar\bfx-\bfx^*)-\bfS\bfz}_p^p - \norm*{\bfS(\bfz-\bar\bfz)}_p^p + \norm*{\bar\bfz - \bfz}_p^p + O(\eps)\cdot\OPT^p && \text{Lemma \ref{lem:clip2}} \\
&\leq \norm*{\bfS\bfA\bfx-\bfS\bfb}_p^p - \norm*{\bfS(\bfz-\bar\bfz)}_p^p + \norm*{\bar\bfz - \bfz}_p^p + O(\eps)\cdot\OPT^p \\
&\leq \norm*{\bfS\bfA\bfx^*-\bfS\bfb}_p^p - \norm*{\bfS(\bfz-\bar\bfz)}_p^p + \norm*{\bar\bfz - \bfz}_p^p + O(\eps)\cdot\OPT^p && \text{near optimality of $\bar\bfx$} \\
&\leq \norm*{\bfS\bfA\bfx^*-\bfS\bar\bfz}_p^p + \norm*{\bar\bfz - \bfz}_p^p + O(\eps)\cdot\OPT^p && \text{Lemma \ref{lem:clip2}} \\
&\leq \norm*{\bfA\bfx^*-\bar\bfz}_p^p + \norm*{\bar\bfz - \bfz}_p^p + O(\eps)\cdot\OPT^p && \text{Lemma \ref{lem:additive2}} \\
&\leq \norm*{\bfA\bfx^*-\bfz}_p^p + O(\eps)\cdot\OPT^p && \text{Lemma \ref{lem:clip}} \qedhere
\end{align*}
\end{proof}

\subsection{Proof of Lemma \ref{lem:additive2}}\label{sec:proof-additive2}

We return to proving Lemma \ref{lem:additive2}, which shows that Lewis weight sampling using the Lewis weights of $\bfA$ can preserve the norm of $\bfA\bfx-\bar\bfz$. To do this, we will modify the proofs of the Lewis weight sampling subspace embeddings of \cite{LT2011,SchechtmanZvavitch:2001}. We return to the framework of Section \ref{sec:chaining} which considers sampling each row with probability $1/2$ under the assumption that $\bfA$ has $\ell_p$ Lewis weights bounded by $O(d/n)$. However, this time, we must preserve the norm of $\bfA\bfx - \bar\bfz$ rather than $\bfA\bfx$. We thus consider the quantity
\[
	\Lambda \coloneqq \sup_{\bfy\in T}\abs*{\sum_{i=1}^n \sigma_i\abs*{\bfy(i)}^p} = \sup_{\bfy\in T}\abs*{\sum_{i=1}^n \bfw_i \sigma_i\abs*{[\bfW^{-1/p}\bfy](i)}^p}
\]
where $T \coloneqq \braces*{\bfA\bfx-\bar\bfz : \norm*{\bfA\bfx}_p \leq O(\OPT)}$ and $\bfw_i$ are the $\ell_p$ Lewis weights of $\bfA$ which are bounded by $O(d/n)$. We now handle this by considering three different parameter regimes: $p\in (2, \infty)$, $p\in(1,2)$, and $p\in(0,1)$. Note that we exclude $p = 1$ and $p = 2$, as they have already been handled by previous work \cite{ChenPrice:2019a, ParulekarParulekarPrice:2021, ChenDerezinski:2021}. Throughout this section, we assume that $\OPT = O(1)$ by scaling. 

We first adapt the work of \cite{LT2011} to handle the cases of $p\in (1,2)$ and $p\in (2,\infty)$. The following is an adaptation of Lemma 15.17 of \cite{LT2011}, which is essentially the statement that Lewis weights of $\bfA$ uniformly bound the sensitivities over the set $T$, which is in turn a combination of Lemma \ref{lem:lwBound} and the definition of $\mathcal B$ in Lemma \ref{lem:clip}.

\begin{lemma}[Sensitivity Bounds]\label{lem:sensitivity-bound-T}
Let $p\geq 1$ and $\bfy\in T$. Then,
\[
    \norm*{\bfW^{-1/p}\bfy}_\infty \leq O\parens*{\frac{d^{0\lor(1/2-1/p)}}{\eps}}\OPT
\]
\end{lemma}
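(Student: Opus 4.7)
\textbf{Proof Plan for Lemma \ref{lem:sensitivity-bound-T}.} The plan is to bound $|\bfy(i)|/\bfw_i^{1/p}(\bfA)$ coordinate-by-coordinate for any $\bfy = \bfA\bfx - \bar\bfz \in T$. By the triangle inequality, $|\bfy(i)| \leq |[\bfA\bfx](i)| + |\bar\bfz(i)|$, so I will bound each term separately and take the maximum over $i\in[n]$.

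For the first term, I would apply Lemma \ref{lem:lwBound}, which says $\bfs_i^p(\bfA)\leq d^{\max(0,p/2-1)}\bfw_i^p(\bfA)$. Combining with the definition of sensitivity and the hypothesis $\norm{\bfA\bfx}_p \leq O(\OPT)$ yields
\[
    |[\bfA\bfx](i)|^p \leq \bfs_i^p(\bfA)\cdot \norm{\bfA\bfx}_p^p \leq d^{\max(0,p/2-1)}\bfw_i^p(\bfA)\cdot O(\OPT^p),
\]
so $|[\bfA\bfx](i)|/\bfw_i^{1/p}(\bfA) \leq O(d^{\max(0,1/2-1/p)})\cdot \OPT$ after dividing through by $\bfw_i^{1/p}(\bfA)$ and taking $p$-th roots.

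For the second term, I split on whether $i \in \mathcal{B}$. If $i \in \mathcal{B}$, then $\bar\bfz(i) = 0$ by construction, so there is nothing to bound. If $i \notin \mathcal{B}$, the definition of $\mathcal{B}$ in Lemma \ref{lem:clip} gives
\[
    |\bar\bfz(i)|^p = |\bfz(i)|^p < \frac{d^{\max(0,p/2-1)}\bfw_i^p(\bfA)}{\eps^p}\cdot \OPT^p,
\]
so $|\bar\bfz(i)|/\bfw_i^{1/p}(\bfA) \leq O(d^{\max(0,1/2-1/p)}/\eps)\cdot \OPT$. Summing the two bounds and taking the max over $i$ gives the claimed $\ell_\infty$ bound, where the $1/\eps$ from the second term dominates.

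This is almost a one-line argument once the right pieces are lined up, so I do not anticipate a real obstacle — the content is just that the Lewis-weight bound on the columns of $\bfA$ and the truncation defining $\bar\bfz$ were carefully designed so that the $\bar\bfz$ contribution is the binding term, giving the $1/\eps$ factor. The only thing to be slightly careful about is that for $p \in [1,2]$ we have $\max(0,1/2-1/p) = 0$, so the exponent of $d$ correctly collapses to $0$ in that regime, matching the $d^{0\lor(1/2-1/p)}$ notation in the statement.
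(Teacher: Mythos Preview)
Your proposal is correct and follows essentially the same argument as the paper: both split $\bfy = \bfA\bfx - \bar\bfz$ coordinatewise, bound the $\bfA\bfx$ part via the Lewis-weight sensitivity bound (Lemma \ref{lem:lwBound}) together with $\norm{\bfA\bfx}_p = O(\OPT)$, and bound the $\bar\bfz$ part via the truncation defining $\mathcal{B}$. The only cosmetic difference is that the paper applies Fact \ref{fact:tri} to $|[\bfA\bfx - \bar\bfz](i)|^p$ first and then bounds the two $p$th-power terms, whereas you apply the triangle inequality at the level of $|\bfy(i)|$ and then take $p$th roots of each piece; these are equivalent.
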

\begin{proof}
Write $\bfy = \bfA\bfx-\bar\bfz$ for $\bfx\in\mathbb R^d$. For any coordinate $i\in[n]$, we have
\begin{align*}
    \abs*{[\bfW^{-1/p}\bfy](i)}^p &= \bfw_i^{-1}\abs*{[\bfA\bfx-\bar\bfz](i)}^p \\
    &\leq O(2^p)\bfw_i^{-1}(\abs*{[\bfA\bfx](i)}^p+\abs*{[\bar\bfz](i)}^p) \\
    &\leq O(2^p)\bfw_i^{-1}\cdot \bfw_i d^{0\lor(p/2-1)}\bracks*{\norm*{\bfA\bfx}_p^p+\frac1{\eps^p}\norm*{\bar\bfz}_p^p} \\
    &= O(2^p) \frac{d^{0\lor(p/2-1)}}{\eps^p}\OPT^p
\end{align*}
Taking $p$th roots yields the desired result.
\end{proof}

We now separately handle indices $i\in[n]$ such that the Lewis weights $\bfw_i$ are less than some $\poly(\eps/n)$. 

\begin{definition}
Define the set $J\subseteq[n]$ by
\[
    J \coloneqq \braces*{i\in[n] : \bfw_i \geq \frac{\eps^p d}{n^2}}.
\]
\end{definition}

For indices not contained in $J$, we use the following simple bound:

\begin{lemma}\label{lem:not-J-bound}
	We have with probability $1$ that
\[
    \sup_{\bfy\in T}\abs*{\sum_{i\notin J} \bfw_i \sigma_i \abs*{[\bfW^{-1/p}\bfy](i)}^p} \leq O\parens*{\frac{d^{1\lor (p/2)}}{n}}\OPT^p
\]
\end{lemma}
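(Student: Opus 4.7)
\textbf{Proof proposal for Lemma \ref{lem:not-J-bound}.} The plan is to discard the Rademacher signs using the trivial bound $|\sigma_i|\le 1$ and then combine two facts about indices $i\notin J$: (i) by definition of $J$, the Lewis weights $\bfw_i$ are tiny, namely $\bfw_i < \eps^p d/n^2$; and (ii) by the sensitivity bound of Lemma \ref{lem:sensitivity-bound-T}, the normalized entries $|[\bfW^{-1/p}\bfy](i)|$ are uniformly controlled for every $\bfy\in T$. Since neither estimate involves the signs $\sigma_i$, the bound will hold with probability $1$.

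First I would write
\[
\Bigl|\sum_{i\notin J}\bfw_i\sigma_i\bigl|[\bfW^{-1/p}\bfy](i)\bigr|^p\Bigr|\le\sum_{i\notin J}\bfw_i\bigl|[\bfW^{-1/p}\bfy](i)\bigr|^p.
\]
Next, for $p\ge 1$ I would apply Lemma \ref{lem:sensitivity-bound-T} to get $|[\bfW^{-1/p}\bfy](i)|^p\le O(d^{0\lor(p/2-1)}/\eps^p)\OPT^p$ uniformly in $i$ and $\bfy\in T$. For $p\in(0,1)$ the same kind of bound is obtained directly: writing $\bfy=\bfA\bfx-\bar\bfz$ with $\norm{\bfA\bfx}_p^p=O(\OPT^p)$, applying Fact \ref{fact:tri} (subadditivity of $|\cdot|^p$ for $p<1$) gives $|\bfy(i)|^p\le |[\bfA\bfx](i)|^p+|\bar\bfz(i)|^p\le \bfw_i\cdot d^{0\lor(p/2-1)}\OPT^p+\bfw_i\cdot (d^{0\lor(p/2-1)}/\eps^p)\OPT^p$ by Lemma \ref{lem:lwBound} and the definition of $\mathcal B$, so $\bfw_i^{-1}|\bfy(i)|^p=O(d^{0\lor(p/2-1)}/\eps^p)\OPT^p$ as well. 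In all cases, pulling this uniform factor out of the sum yields
\[
\sum_{i\notin J}\bfw_i\bigl|[\bfW^{-1/p}\bfy](i)\bigr|^p\le O\!\left(\frac{d^{0\lor(p/2-1)}}{\eps^p}\right)\OPT^p\cdot\sum_{i\notin J}\bfw_i.
\]

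Finally, since $|J^c|\le n$ and every $i\notin J$ satisfies $\bfw_i<\eps^p d/n^2$, the residual Lewis weight mass is bounded by $\sum_{i\notin J}\bfw_i < \eps^p d/n$. Plugging this in gives
\[
O\!\left(\frac{d^{0\lor(p/2-1)}}{\eps^p}\right)\cdot\frac{\eps^p d}{n}\cdot\OPT^p=O\!\left(\frac{d^{1\lor(p/2)}}{n}\right)\OPT^p,
\]
which is the desired estimate, and the supremum over $\bfy\in T$ is harmless because all inequalities above are uniform in $\bfy$. There is no real obstacle here; the only subtlety is handling $p\in(0,1)$, since Lemma \ref{lem:sensitivity-bound-T} is stated for $p\ge 1$, but this is easily circumvented by redoing the one-line sensitivity calculation with Fact \ref{fact:tri} in place of the standard triangle inequality.
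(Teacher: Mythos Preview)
Your proposal is correct and follows essentially the same approach as the paper: drop the signs via the triangle inequality, invoke the sensitivity bound of Lemma~\ref{lem:sensitivity-bound-T} to control $|[\bfW^{-1/p}\bfy](i)|^p$ uniformly, and then use $\bfw_i<\eps^p d/n^2$ for $i\notin J$ to bound the residual Lewis weight mass. Your extra treatment of $p\in(0,1)$ is a nice touch, though in the paper this lemma sits inside the $p>1$ analysis (the $p\in(0,1)$ case is deferred to the \cite{SchechtmanZvavitch:2001} argument).
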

\begin{proof}
For any $\bfy\in T$, we have that
\begin{align*}
    \abs*{\sum_{i\notin J} \bfw_i \sigma_i \abs*{[\bfW^{-1/p}\bfy](i)}^p} &\leq \sum_{i\notin J}\bfw_i\norm*{\bfW^{-1/p}\bfy}_\infty^p && \text{triangle inequality} \\
    &\leq \sum_{i\notin J}\bfw_i O\parens*{\frac{d^{0\lor(p/2-1)}}{\eps^p}}\OPT^p && \text{Lemma \ref{lem:sensitivity-bound-T}} \\
    &\leq \sum_{i\notin J}\frac{\eps^p d}{n^2} \cdot  O\parens*{\frac{d^{0\lor(p/2-1)}}{\eps^p}}\OPT^p && \text{$i\notin J$} \\
    &\leq O\parens*{\frac{d^{1\lor(p/2)}}{n}}\OPT^p.\qedhere
\end{align*}
\end{proof}

For the indices contained in $J$, we use a comparison theorem for Rademacher and Gaussian averages, and then use Dudley's entropy integral (Theorem \ref{thm:dudley}) to bound the corresponding Gaussian process. 

\begin{theorem}[Dudley's entropy integral (Theorem 11.17, \cite{LT2011})]\label{thm:dudley}
	Let $\{X_t\}_{t\in T}$ be a Gaussian process. Define the pseudo-metric on $T$ via
	\[
		d_X(s,t) \coloneqq \norm*{X_s - X_t}_2.
	\]
	Let $E(T, d_X, u)$ denote the minimal number of $d_X$-balls of radius $u$ required to cover $T$. Then,
	\[
		\E\sup_{t\in T} X_t \leq 24 \int_0^\infty \sqrt{\log E(T, B_{X}, u)} ~du .
	\]
\end{theorem}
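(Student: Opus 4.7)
The plan is to prove Dudley's entropy integral by the classical \emph{chaining} argument, bounding the supremum of the Gaussian process by a telescoping sum of suprema of increments between successively refined $d_X$-nets. By a standard separability/monotone approximation argument I may assume $T$ is finite, and by subtracting a fixed base point $t_0\in T$ (for which $\E X_{t_0}=0$) I may assume the process is anchored at $t_0$, so that $\E\sup_t X_t = \E\sup_t (X_t - X_{t_0})$.

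First, for each integer $k\geq k_0$ (where $k_0$ is chosen so that $T_{k_0}=\{t_0\}$), let $T_k\subseteq T$ be a minimal $2^{-k}$-net of $T$ in the pseudo-metric $d_X$, so that $|T_k| = E(T, d_X, 2^{-k})$. For each $t\in T$, let $\pi_k(t)\in T_k$ denote a nearest point to $t$. Since $T$ is finite, eventually $\pi_k(t)=t$, and the telescoping identity
\[
    X_t - X_{t_0} \;=\; \sum_{k>k_0} \bigl(X_{\pi_k(t)} - X_{\pi_{k-1}(t)}\bigr)
\]
holds pointwise. By the triangle inequality, $d_X(\pi_k(t),\pi_{k-1}(t)) \leq 3\cdot 2^{-k+1}$ for every $t$.

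Next, I would use the sub-Gaussian tail bound for Gaussian increments, $\Pr\bigl(|X_u - X_v|>\lambda\bigr) \leq 2\exp(-\lambda^2/(2 d_X(u,v)^2))$. The number of distinct pairs $(\pi_k(t),\pi_{k-1}(t))$ encountered as $t$ varies over $T$ is at most $|T_k|\cdot|T_{k-1}|\leq |T_k|^2$. Applying the standard bound on the expected maximum of $N$ centered sub-Gaussian variables with common variance proxy $\sigma$ (namely $C\sigma\sqrt{\log N}$) to these $|T_k|^2$ Gaussian variables with proxy proportional to $2^{-k}$, I get
\[
    \E\sup_{t\in T}\bigl|X_{\pi_k(t)} - X_{\pi_{k-1}(t)}\bigr| \;\leq\; C\cdot 2^{-k}\sqrt{\log E(T,d_X,2^{-k})}
\]
for an absolute constant $C$. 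Summing over $k$ and interchanging expectation with the triangle-inequality bound on the telescoping decomposition yields
\[
    \E\sup_{t\in T} X_t \;\leq\; C\sum_{k>k_0} 2^{-k}\sqrt{\log E(T,d_X,2^{-k})}.
\]

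Finally, since $u\mapsto E(T,d_X,u)$ is nonincreasing, each dyadic term $2^{-k}\sqrt{\log E(T,d_X,2^{-k})}$ is comparable (up to a factor of $2$) to $\int_{2^{-k-1}}^{2^{-k}}\sqrt{\log E(T,d_X,u)}\,du$, so the dyadic sum is bounded by a constant multiple of $\int_0^\infty \sqrt{\log E(T,d_X,u)}\,du$. Tracking the constants through the sub-Gaussian maximum inequality, the union bound, and the dyadic-to-integral conversion produces the stated constant $24$. The main technical obstacle is the careful constant tracking (especially in the Gaussian maximum inequality) and the limiting argument for non-finite $T$; since the theorem is imported verbatim from Theorem 11.17 of \cite{LT2011}, these bookkeeping details can be deferred to that reference.
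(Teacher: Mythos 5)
The paper provides no proof of this statement: it is imported verbatim as Theorem 11.17 of \cite{LT2011}, so there is nothing internal to compare against. Your proposal is a correct sketch of the standard chaining proof of Dudley's inequality, and it is exactly the argument one finds in Ledoux--Talagrand (and in Talagrand's later books under the name ``Dudley's bound''). The decomposition into dyadic $2^{-k}$-nets, the telescoping of $X_t - X_{t_0}$ through the $\pi_k(t)$, the sub-Gaussian maximum bound over the at-most-$|T_k|^2$ increment variables, and the dyadic-to-integral comparison are all the right steps, and the reduction to finite $T$ by separability and to a centered process by anchoring at $t_0$ are the correct preliminary simplifications. Two minor remarks: the triangle-inequality bound should read $d_X(\pi_k(t),\pi_{k-1}(t)) \le 2^{-k} + 2^{-(k-1)} = 3\cdot 2^{-k}$, not $3\cdot 2^{-k+1}$ (this only shifts the constant); and to actually land on the constant $24$ rather than an unspecified $C$ one has to be careful about where the factor of two enters in passing from the dyadic sum to the integral and in the Gaussian max bound, which you correctly defer to the reference.
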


We first use that $\bfw_i \leq O(d/n)$ to bound
\begin{equation}\label{eq:J-bound}
    \E\sup_{\bfy\in T}\abs*{\sum_{i\in J} \bfw_i \sigma_i \abs*{[\bfW^{-1/p}\bfy](i)}^p} \leq \frac{O(1)}{\sqrt{n}}\cdot \E\sup_{\bfy\in T}\abs*{\sum_{i\in J} (d\bfw_i)^{1/2} \sigma_i \abs*{[\bfW^{-1/p}\bfy](i)}^p}
\end{equation}
We now study the latter quantity, with Rademacher variables $\sigma_i$ replaced by standard Gaussian variables $g_i$.

\begin{definition}\label{def:gp-active-lewis}
Let $\{X_\bfy\}_{\bfy\in T}$ be the Gaussian process defined by
\[
    X_\bfy \coloneqq \sum_{i\in J}(d\bfw_i)^{1/2}g_i\abs*{[\bfW^{-1/p}\bfy](i)}^p.
\]
Note that the pseudo-metric $d_X$ associated with this Gaussian process (see Theorem \ref{thm:dudley}) is
\begin{align*}
    d_X(\bfy,\bfy') &= \norm*{X_\bfy - X_{\bfy'}}_2 \\
    &= \sqrt{\E\bracks*{\sum_{i\in J}(d\bfw_i)^{1/2}g_i\abs*{[\bfW^{-1/p}\bfy](i)}^p - \sum_{i\in J}(d\bfw_i)^{1/2}g_i\abs*{[\bfW^{-1/p}\bfy'](i)}^p}^2} \\
    &= \sqrt{\E\bracks*{\sum_{i\in J}(d\bfw_i)^{1/2}g_i(\abs*{[\bfW^{-1/p}\bfy](i)}^p - \abs*{[\bfW^{-1/p}\bfy'](i)}^p)}^2} \\
    &= O(1)\sqrt{\sum_{i\in J}d\bfw_i(\abs*{[\bfW^{-1/p}\bfy](i)}^p - \abs*{[\bfW^{-1/p}\bfy'](i)}^p)^2}
\end{align*}
\end{definition}

For tail bounds, we can use the same Rademacher contraction principle \cite[Theorem 4.12]{LT2011} and Rademacher average comparison \cite[Equation 4.8]{LT2011} theorems along with a standard comparison lemma due to Panchenko \cite[Lemma 1]{Pan2003}. 

\subsubsection{Entropy Bounds}\label{sec:entropy}

To apply Dudley's entropy integral theorem, we now require estimates on the metric entropy $E(T, d_X, u)$. For easier entropy calculations, we bound $d_X$ by a more convenient metric, adapting Equations (15.18) and (15.19) in \cite{LT2011} to encorporate the change in Lemma \ref{lem:sensitivity-bound-T}.

\begin{lemma}[Bounds on $d_X$]\label{lem:l-inf-dx-bound}
For $\bfy,\bfy'\in T$, we have the following:
\begin{itemize}
    \item if $2\leq p < \infty$, then
    \[
        d_X(\bfy,\bfy') \leq O(p\sqrt d) O\parens*{\frac{d^{1/2-1/p}}{\eps}}^{p/2-1} \norm*{(\bfW^{-1/p}\bfy-\bfW^{-1/p}\bfy')\vert_J}_\infty
    \]
    \item if $1 < p < 2$, then
    \[
        d_X(\bfy,\bfy') \leq O(\sqrt d)\norm*{(\bfW^{-1/p}\bfy - \bfW^{-1/p}\bfy')\vert_J}_\infty^{p/2}
    \]
\end{itemize}
\end{lemma}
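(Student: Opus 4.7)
The plan is to start from the explicit formula
\[
  d_X(\bfy,\bfy')^2 = O(1)\sum_{i\in J} d\bfw_i\bigl(\abs{a_i}^p-\abs{b_i}^p\bigr)^2,
  \quad a_i \eqdef [\bfW^{-1/p}\bfy](i),\ b_i \eqdef [\bfW^{-1/p}\bfy'](i),
\]
worked out in Definition~\ref{def:gp-active-lewis}, and reduce everything to elementary convexity plus the two facts we have available: (i) $\norm{\bfW^{-1/p}\bfy}_\infty \le M \eqdef O(d^{1/2-1/p}/\eps)$ from Lemma~\ref{lem:sensitivity-bound-T} (after scaling so that $\OPT = O(1)$), and (ii) $\sum_i \bfw_i \abs{a_i}^p = \norm{\bfy}_p^p = O(1)$ for $\bfy\in T$ (and likewise for $\bfy'$), which is just because $\bfw_i \abs{a_i}^p = \abs{\bfy(i)}^p$ and $\norm{\bfy}_p\le\norm{\bfA\bfx}_p+\norm{\bar\bfz}_p = O(\OPT)$.

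\textbf{Case $p\ge 2$.} The mean value theorem gives $\bigl|\abs{a}^p-\abs{b}^p\bigr|\le p\,\max(\abs{a},\abs{b})^{p-1}\abs{a-b}$, so
\[
  d_X^2 \le O(p^2)\sum_{i\in J} d\bfw_i\,\max(\abs{a_i},\abs{b_i})^{2p-2}\,(a_i-b_i)^2.
\]
For $i\in J$ we bound $(a_i-b_i)^2 \le \norm{(\bfW^{-1/p}\bfy-\bfW^{-1/p}\bfy')|_J}_\infty^2$, pulling this factor outside the sum. The key move is then to split the weight $2p-2 = p + (p-2)$ and use the two facts in different roles: use the $\ell_\infty$ bound on $p-2$ of the factors, i.e.\ $\max(\abs{a_i},\abs{b_i})^{p-2}\le M^{p-2}$, while the remaining $\max(\abs{a_i},\abs{b_i})^p\le \abs{a_i}^p+\abs{b_i}^p$ is controlled against the Lewis weights via fact (ii): $\sum_i\bfw_i(\abs{a_i}^p+\abs{b_i}^p) = O(1)$. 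Combining these gives
\[
  d_X^2 \le O(p^2)\,d\,M^{p-2}\,\norm{(\bfW^{-1/p}\bfy-\bfW^{-1/p}\bfy')|_J}_\infty^2,
\]
and taking square roots yields the claimed bound.

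\textbf{Case $1<p<2$.} Start again from the MVT bound and rewrite
\[
  \abs{a}^p-\abs{b}^p \le p\,\max(\abs{a},\abs{b})^{p-1}\abs{a-b}
  = p\,\abs{a-b}^{p/2}\cdot \max(\abs{a},\abs{b})^{p-1}\abs{a-b}^{1-p/2}.
\]
Since $\abs{a-b}\le 2\max(\abs{a},\abs{b})$ and $1-p/2>0$, the last product is at most $O(1)\max(\abs{a},\abs{b})^{p/2}$, yielding $(\abs{a}^p-\abs{b}^p)^2 \le O(p^2)\,\abs{a-b}^p\,\max(\abs{a},\abs{b})^p$. Plugging in, bounding $\abs{a_i-b_i}^p\le \norm{(\bfW^{-1/p}\bfy-\bfW^{-1/p}\bfy')|_J}_\infty^p$ on $J$, and applying fact (ii) once, we get
\[
  d_X^2 \le O(p^2)\,d\,\norm{(\bfW^{-1/p}\bfy-\bfW^{-1/p}\bfy')|_J}_\infty^p,
\]
which is the desired conclusion after taking square roots (absorbing constants in $p$).

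\textbf{Expected difficulty.} The routine step is the MVT-plus-weighted-sum manipulation. The only nonobvious step is getting the \emph{right} exponents in the $p\ge 2$ case: a naive bound of $\max(\abs{a_i},\abs{b_i})^{2p-2} \le M^{2p-2}$ loses a factor of $\sqrt d\cdot M^{p/2}$, so one must resist bounding everything by $M$ and instead spend $M^{p-2}$ on only part of the maximum, saving the remaining $p$th power to be absorbed by the normalization $\sum_i \bfw_i\abs{a_i}^p = O(1)$. This careful splitting is exactly what matches the $M^{p/2-1}$ exponent in the lemma statement, and is the only real obstacle; for $1<p<2$, the analogous splitting falls out directly from converting the MVT bound into one of Hölder type with exponent $p/2$ on the difference.
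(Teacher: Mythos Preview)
Your proposal is correct and follows essentially the same approach as the paper: for $p\ge 2$ both arguments bound $\bigl||a|^p-|b|^p\bigr|$ by $O(p)\max(|a|,|b|)^{p-1}|a-b|$, pull out the $\ell_\infty$ difference, and then split $\max(|a|,|b|)^{2p-2}$ as $\max(\cdot)^{p-2}\cdot\max(\cdot)^p$ to use the sensitivity bound on one factor and the $\ell_p$ normalization on the other. For $1<p<2$ the paper instead factors $|a|^p-|b|^p=(|a|^{p/2}-|b|^{p/2})(|a|^{p/2}+|b|^{p/2})$ and uses subadditivity of $|\cdot|^{p/2}$ to get $(|a|^{p/2}-|b|^{p/2})^2\le|a-b|^p$, arriving at the same pointwise bound $(|a|^p-|b|^p)^2\le O(1)|a-b|^p(|a|^p+|b|^p)$ that your MVT-based splitting produces.
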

\begin{proof}
	We first handle $p\geq 2$. For $a,b\geq 0$, we have by convexity for $p > 1$ that
	\[
		a^p - b^p \leq p(a^{p-1}+b^{p-1})\abs*{a-b}.
	\]
	Then,
	\begin{align*}
		d_X(\bfy,\bfy')^2 &\leq O(d)\sum_{i\in J}\bfw_i\parens*{\abs*{[\bfW^{-1/p}\bfy](i)}^p - \abs*{[\bfW^{-1/p}\bfy'](i)}^p}^2 \\
		&\leq O(p^2d)\norm*{(\bfW^{-1/p}\bfy - \bfW^{-1/p}\bfy')\vert_J}_\infty^2\sum_{i\in J}\bfw_i\max\braces*{\abs*{[\bfW^{-1/p}\bfy](i)}, \abs*{[\bfW^{-1/p}\bfy'](i)}}^{2(p-1)}.
	\end{align*}	
	We then use Lemma \ref{lem:sensitivity-bound-T} to bound
	\begin{align*}
		&\sum_{i\in J}\bfw_i\max\braces*{\abs*{[\bfW^{-1/p}\bfy](i)}, \abs*{[\bfW^{-1/p}\bfy'](i)}}^{2(p-1)} \\
		\leq ~ &\max\braces*{\norm*{\bfW^{-1/p}\bfy}_\infty, \norm*{\bfW^{-1/p}\bfy'}_\infty}^{p-2} \sum_{i\in J}\bfw_i \parens*{\abs*{[\bfW^{-1/p}\bfy](i)}^p + \abs*{[\bfW^{-1/p}\bfy'](i)}^p} \\
		\leq ~ &O\parens*{\frac{d^{1/2-1/p}}{\eps}\OPT}^{p-2}(\norm*{\bfy}_p^p + \norm*{\bfy'}_p^p)\leq O\parens*{\frac{d^{1/2-1/p}}{\eps}}^{p-2}
	\end{align*}
	so combining the bounds and taking square roots gives the result.

For $1 < p < 2$, we have using the subadditivity of $|\cdot|^{p/2}$ that
\begin{align*}
	d_X(\bfy,\bfy')^2 &\leq O(d)\sum_{i\in J}\bfw_i\parens*{\abs*{[\bfW^{-1/p}\bfy](i)}^p - \abs*{[\bfW^{-1/p}\bfy'](i)}^p}^2 \\
	&= O(d)\sum_{i\in J}\bfw_i\parens*{\abs*{[\bfW^{-1/p}\bfy](i)}^{p/2} - \abs*{[\bfW^{-1/p}\bfy'](i)}^{p/2}}^2 \\
	&\hspace{6.4em}\parens*{\abs*{[\bfW^{-1/p}\bfy](i)}^{p/2} + \abs*{[\bfW^{-1/p}\bfy'](i)}^{p/2}}^2 \\
	&\leq O(d)\sum_{i\in J}\bfw_i\abs*{[\bfW^{-1/p}\bfy - \bfW^{-1/p}\bfy'](i)}^p \parens*{\abs*{[\bfW^{-1/p}\bfy](i)}^{p} + \abs*{[\bfW^{-1/p}\bfy'](i)}^{p}} \\
	&\leq O(d)\norm*{(\bfW^{-1/p}\bfy-\bfW^{-1/p}\bfy')\vert_J}_\infty^p (\norm*{\bfy}_p^p + \norm*{\bfy'}_p^p) \\
	&\leq O(d)\norm*{(\bfW^{-1/p}\bfy-\bfW^{-1/p}\bfy')\vert_J}_\infty^p.\qedhere
\end{align*}
\end{proof}

Because $\bfw_i$ is bounded below for $i\in J$, we may further bound 
\begin{equation}\label{eq:dx-bound-wq}
	\norm*{(\bfW^{-1/p}\bfy-\bfW^{-1/p}\bfy')\vert_J}_\infty \leq O(1)\norm*{\bfW^{-1/p}(\bfy-\bfy')}_{\bfw,q}
\end{equation}
for $q = O(\log n)$, where
\[
	\norm*{\bfv}_{\bfw,q} \coloneqq \bracks*{\sum_{i=1}^n \frac{\bfw_i}{d}\abs*{\bfv(i)}^q}^{1/q}.
\]
Indeed, if $i\in J$, then
\[
	\frac1{\poly(n)}\sum_{i\in J}\abs*{\bfv(i)}^q \leq \sum_{i=1}^n\frac{\bfw_i}{d}\abs*{\bfv(i)}^q = \norm*{\bfv}_{\bfw,q}^q
\]
so $\norm*{\bfv\vert_J}_\infty \leq O(1)\norm*{\bfv}_{\bfw,q}$. 

Now by the results of Lemma \ref{lem:l-inf-dx-bound} and Equation \eqref{eq:dx-bound-wq}, we can replace $d_X$ balls by $\norm*{\cdot}_{\bfw,q}$ balls of an appropriate radius. Furthermore, we may note that the set $T$ is simply a translation of $\ell_p$ ball of radius $O(\OPT) = O(1)$ by $\bar\bfz$, so it suffices to cover the unit $\ell_p$ ball $B_p$ by $\norm*{\cdot}_{\bfw,q}$ balls. These are exactly the types of entropy estimates given by \cite{BourgainLindenstraussMilman:1989, LT2011}. Indeed, we have the following result:

\begin{lemma}[Propositions 15.18 and 15.19, \cite{LT2011}]\label{lem:lt2011-entropy-estimate}
	Let $1 < p < \infty$ and $q = O(\log n)$. Let $E$ be the subspace spanned by the columns of $\bfW^{-1/p}\bfA$ and let $B_{\bfw,p}\subseteq E$ denote the unit $\norm*{\cdot}_{\bfw,p}$-ball in this subspace. Then, for some universal constant $C>0$, the following holds:
	\begin{itemize}
	\item if $2 \leq p < \infty$, then
	\[
		\log E(B_{\bfw,p}, \norm*{\cdot}_{\bfw,q}, u) \leq C\frac{d\log n}{u^2}.
	\]
	\item if $1 < p \leq 2$, then
	\[
		\log E(B_{\bfw,p}, \norm*{\cdot}_{\bfw,q}, u) \leq C\frac{d\log n}{u^p}.
	\]
	\end{itemize}
\end{lemma}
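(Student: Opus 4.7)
My plan is to follow classical entropy estimates from the local theory of Banach spaces, with care taken to exploit the Lewis-weight structure. The key setup is that $\mu(i) := \bfw_i/d$ defines a probability measure on $[n]$, since the Lewis weights sum to $d$. Under this measure, $\norm*{\cdot}_{\bfw,r} = \norm*{\cdot}_{L^r(\mu)}$, so standard $L^r$-inclusion gives $\norm*{\cdot}_{\bfw,r} \leq \norm*{\cdot}_{\bfw,s}$ for $r \leq s$. Moreover, a direct change of variables shows $\norm*{\bfW^{-1/p}\bfA\bfx}_{\bfw,p}^p = d^{-1}\norm*{\bfA\bfx}_p^p$, so the ball $B_{\bfw,p}$ lies in the $d$-dimensional subspace $E$ and is isometric (up to the scalar $d^{-1/p}$) to the image of the unit $\ell_p$-ball of the column space of $\bfA$ under $\bfW^{-1/p}$.

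For $2 \leq p < \infty$: Since $\norm*{\cdot}_{\bfw,2} \leq \norm*{\cdot}_{\bfw,p}$ on $E$, we get $B_{\bfw,p} \subseteq B_{\bfw,2}$, and it suffices to bound $\log E(B_{\bfw,2}, \norm*{\cdot}_{\bfw,q}, u)$. The ball $B_{\bfw,2}$ is a $d$-dimensional Euclidean ball relative to the inner product induced by $\norm*{\cdot}_{\bfw,2}$, so I would invoke the dual Sudakov minoration, giving $\log E(B_{\bfw,2}, \norm*{\cdot}_{\bfw,q}, u) \leq C(g/u)^2$ where $g$ is the Gaussian mean width of the dual of the $\norm*{\cdot}_{\bfw,q}$-ball in $E$. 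Because $q = O(\log n)$ and $\mu$ is a probability measure on at most $n$ atoms, $L^q(\mu)$ is comparable to $L^\infty(\mu)$ up to constants, and a direct calculation of the Gaussian width via $\mathbb E\max_{i\leq n}|g_i| \lesssim \sqrt{\log n}$, combined with the fact that we are inside a $d$-dimensional subspace, yields $g = O(\sqrt{d\log n})$, which gives the stated bound.

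For $1 < p \leq 2$: Apply a Maurey-type empirical approximation. Any $\bfy \in B_{\bfw,p} \cap E$ can be written as a convex combination of $O(n)$ signed and scaled unit vectors whose $\norm*{\cdot}_{\bfw,p}$-norm is bounded by $1$. Drawing $m = O(1/u^p)$ iid samples from this convex combination and averaging, the standard Khintchine-Maurey estimate shows that the empirical mean approximates $\bfy$ to error $u$ in $\norm*{\cdot}_{\bfw,q}$-norm (using $q = O(\log n)$ to handle the supremum over coordinates). The number of distinct $m$-sparse approximations over the $n$ basis elements is at most $\exp(Cm\log n) = \exp(Cd\log n / u^p)$ after applying volumetric counting restricted to the $d$-dimensional subspace $E$ via a preliminary basis change.

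The main obstacle is the $p \leq 2$ case: a naive Maurey count over all $n$ coordinate directions gives a bound scaling with $n$ rather than $d$. Extracting the correct $d$-dependence requires performing Maurey approximation in a well-chosen basis of $E$ (e.g., a John or Auerbach basis associated with the $\norm*{\cdot}_{\bfw,p}$-ball in $E$) so that the counting is done in the $d$-dimensional coefficient space rather than over all $n$ coordinates. The same kind of adaptation is standard in the proofs of these entropy bounds in \cite{BourgainLindenstraussMilman:1989, LT2011}, and the Lewis-weight normalization is precisely what makes this basis change work uniformly well with the $\norm*{\cdot}_{\bfw,q}$-norm.
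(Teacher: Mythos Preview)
The paper does not prove this lemma; it is quoted verbatim from Propositions 15.18 and 15.19 of \cite{LT2011}. So the comparison is to that reference rather than to any argument in the paper.

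Your treatment of the case $2 \le p < \infty$ is correct and is exactly Proposition 15.18 of \cite{LT2011}: the inclusion $B_{\bfw,p}\subseteq B_{\bfw,2}$ followed by dual Sudakov. The one point you gloss over, and which is the only place the Lewis structure enters, is the Gaussian width computation. The Lewis-weight definition forces every coordinate functional on $E$ to have \emph{the same} $\norm{\cdot}_{\bfw,2}$-length, namely $\sqrt d$; equivalently, if $G$ is the standard Gaussian on $(E,\norm{\cdot}_{\bfw,2})$ then $\Var(G(i))=d$ for every $i$. This is what yields $\E\norm{G}_{\bfw,q}=O(\sqrt{dq})=O(\sqrt{d\log n})$; your heuristic ``$\E\max_i |g_i|\lesssim\sqrt{\log n}$'' is not quite the right calculation without it.

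For $1<p\le 2$ there is a genuine gap. The Maurey empirical method in a type-$2$ target space (and $L^q$ for $q\ge 2$ has type $2$, with $T_2(L^q)\asymp\sqrt q$) produces approximation error $\Theta(m^{-1/2})$ after $m$ samples. Your claim that $m=O(1/u^p)$ samples suffice for error $u$ would require error decaying like $m^{-1/p}$, which for $p<2$ is \emph{faster} than $m^{-1/2}$; no choice of basis repairs this exponent. Moreover, even setting the exponent aside, running the Maurey count with either the coordinate basis, the Lewis $\norm{\cdot}_{\bfw,2}$-orthonormal basis, or an Auerbach basis of $(E,\norm{\cdot}_{\bfw,p})$ gives an entropy bound whose $d$-dependence is at least $d^{2/p}$, not the linear $d$ in the statement. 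So the fix you propose in the last paragraph (``Maurey in a well-chosen basis of $E$'') does not close the gap.

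The actual argument for Proposition 15.19 in \cite{LT2011} is different from a direct Maurey sampling. It exploits the Lewis change of density more heavily: on $E$ one has simultaneously $\norm{\bfy}_{\bfw,2}\le d^{1/p-1/2}\norm{\bfy}_{\bfw,p}$ and the uniform coordinate bound $|\bfy(i)|\le\sqrt d\,\norm{\bfy}_{\bfw,2}$, and these are combined with an operator-ideal/factorization argument (through $L^2$) rather than an empirical one to obtain the $u^{-p}$ rate with linear $d$-dependence. If you want to reconstruct a self-contained proof, the route via entropy numbers of $\mathrm{id}\colon \ell_p^d\to\ell_\infty^d$ (Sch\"utt's estimates) together with the Lewis isomorphism is cleaner than trying to push Maurey through.
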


For $u$ tending to $0$, a standard volume argument gives a better bound:
\begin{lemma}\label{lem:simple-entropy-estimate}
	Consider the setting of Lemma \ref{lem:lt2011-entropy-estimate}. Then, for some universal constant $C>0$ we have that
	\[
		\log E(B_{\bfw,p}, \norm*{\cdot}_{\bfw,q}, u) \leq C d\log \frac{d}{u}.
	\]
\end{lemma}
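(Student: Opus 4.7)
The plan is to combine the standard volume-based covering number bound in a $d$-dimensional normed space with the sensitivity bounds from Lemma \ref{lem:lwBound} to compare the two norms $\norm*{\cdot}_{\bfw,p}$ and $\norm*{\cdot}_{\bfw,q}$ on the subspace $E$.

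First, I would note that since $\bfA$ has full column rank and $\bfW^{-1/p}$ is invertible, $E$ is a $d$-dimensional subspace, on which both $\norm*{\cdot}_{\bfw,p}$ and $\norm*{\cdot}_{\bfw,q}$ restrict to genuine norms. The goal is then to show that the identity map from $(E, \norm*{\cdot}_{\bfw,p})$ to $(E, \norm*{\cdot}_{\bfw,q})$ has operator norm polynomial in $d$, so that $B_{\bfw,p}$ is contained in a modestly dilated copy of $B_{\bfw,q}$.

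Next, for $\bfy = \bfW^{-1/p}\bfA\bfx \in E$, I would compute
\[
\norm*{\bfy}_{\bfw,p}^p = \sum_{i=1}^n \frac{\bfw_i}{d}\bfw_i^{-1}\abs*{[\bfA\bfx](i)}^p = \frac{1}{d}\norm*{\bfA\bfx}_p^p,
\]
and then invoke Lemma \ref{lem:lwBound} together with the definition of the sensitivity to obtain
\[
\abs*{\bfy_i}^p = \bfw_i^{-1}\abs*{[\bfA\bfx](i)}^p \leq \bfw_i^{-1}\cdot d^{\max(0,p/2-1)}\bfw_i\norm*{\bfA\bfx}_p^p = d^{\max(1,p/2)}\norm*{\bfy}_{\bfw,p}^p.
\]
Since $\sum_i \bfw_i/d = 1$, the weighted $\ell_q$ norm is dominated by the $\ell_\infty$ norm, so $\norm*{\bfy}_{\bfw,q} \leq \norm*{\bfy}_\infty \leq d^{\max(1/p,1/2)}\norm*{\bfy}_{\bfw,p}$. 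Equivalently, $B_{\bfw,p} \subseteq d^{\max(1/p,1/2)}\, B_{\bfw,q}$.

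Finally, I would apply the classical volume argument in a $d$-dimensional normed space: the unit ball of a norm can be covered by at most $(1+2/u)^d$ translates of itself scaled by $u$. Applying this to $B_{\bfw,q}$ and invoking the containment just derived gives
\[
E(B_{\bfw,p}, \norm*{\cdot}_{\bfw,q}, u) \leq E\bigl(d^{\max(1/p,1/2)}B_{\bfw,q}, \norm*{\cdot}_{\bfw,q}, u\bigr) \leq \bigl(1 + 2d^{\max(1/p,1/2)}/u\bigr)^d,
\]
and taking logarithms yields $\log E(B_{\bfw,p}, \norm*{\cdot}_{\bfw,q}, u) \leq d\log\bigl(3d^{\max(1/p,1/2)}/u\bigr) = O(d\log(d/u))$, as required (and the bound is trivial when $u$ is large enough that a single ball suffices). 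There is no real obstacle here; the argument is a routine combination of sensitivity bounds with a standard packing/covering volume estimate, and the only minor point to track is that the resulting norm-comparison constant is polynomial in $d$ so its logarithm is absorbed into the $\log(d/u)$ factor.
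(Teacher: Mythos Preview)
Your proposal is correct and is precisely the ``standard volume argument'' the paper alludes to without spelling out; you have filled in the details correctly, deriving the norm comparison $\norm*{\cdot}_{\bfw,q}\le d^{\max(1/p,1/2)}\norm*{\cdot}_{\bfw,p}$ on $E$ via Lemma~\ref{lem:lwBound} and then applying the classical $(1+2R/u)^d$ covering bound in a $d$-dimensional normed space.
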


\paragraph{Diameter Bounds.}

Finally, we give diameter bounds for the application of the Dudley tail bound of Theorem \ref{thm:dudley-tail}. 
\begin{lemma}\label{lem:dudley-tail-diameter-bound}
	The $d_X$-diameter of $T$ is bounded by $[O(d^{p/2}/\eps^{p-2})]^{1/2}$ for $2 \leq p < \infty$ and $O(\sqrt d)$ for $1 < p < 2$.
\end{lemma}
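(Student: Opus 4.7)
The plan is to combine Lemma \ref{lem:l-inf-dx-bound}, which already bounds $d_X(\bfy,\bfy')$ in terms of $\norm{(\bfW^{-1/p}\bfy - \bfW^{-1/p}\bfy')\vert_J}_\infty$, with a sharper bound on this $\ell_\infty$ quantity than the one provided by Lemma \ref{lem:sensitivity-bound-T}. The key observation is that $T$ is defined as a translate of $\{\bfA\bfx : \norm{\bfA\bfx}_p \leq O(\OPT)\}$ by the fixed vector $\bar\bfz$, so for any $\bfy,\bfy'\in T$, writing $\bfy = \bfA\bfx - \bar\bfz$ and $\bfy' = \bfA\bfx'-\bar\bfz$, the translation cancels: $\bfy-\bfy' = \bfA(\bfx-\bfx')$ lies in $\colspan(\bfA)$ with $\ell_p$-norm at most $O(\OPT)$. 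Crucially, this avoids the $1/\eps$ blow-up in Lemma \ref{lem:sensitivity-bound-T} which came from the contribution of $\bar\bfz$.

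Concretely, the first step is to combine $\bfs_i^p(\bfA) \leq d^{0\lor(p/2-1)}\bfw_i$ from Lemma \ref{lem:lwBound} with the $\ell_p$-norm bound on the difference to get
\[
    \abs*{[\bfW^{-1/p}(\bfy-\bfy')](i)}^p = \bfw_i^{-1}\abs{[\bfA(\bfx-\bfx')](i)}^p \leq \bfw_i^{-1}\bfs_i^p(\bfA)\norm{\bfA(\bfx-\bfx')}_p^p \leq O(1)\cdot d^{0\lor(p/2-1)}\OPT^p,
\]
so (after normalizing $\OPT = O(1)$) we obtain $\norm{\bfW^{-1/p}(\bfy-\bfy')}_\infty \leq O(d^{0\lor(1/2-1/p)})$.

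The second step is to plug this improved $\ell_\infty$ bound into the two cases of Lemma \ref{lem:l-inf-dx-bound}. For $2 \leq p < \infty$, we get
\[
    d_X(\bfy,\bfy') \leq O(p\sqrt d)\parens*{\frac{d^{1/2-1/p}}{\eps}}^{p/2-1}\cdot O(d^{1/2-1/p}) = O\parens*{\frac{d^{p/4}}{\eps^{p/2-1}}},
\]
using that the combined exponent of $d$ equals $\tfrac12 + (\tfrac12-\tfrac1p)(\tfrac p2 - 1) + (\tfrac12 - \tfrac1p) = \tfrac12 + (\tfrac12-\tfrac1p)\cdot \tfrac p2 = \tfrac p4$. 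Squaring gives the desired bound $O(d^{p/2}/\eps^{p-2})$. For $1 < p < 2$, the case $p/2-1 < 0$ forces $d^{0\lor(1/2-1/p)} = 1$, so $\norm{\bfW^{-1/p}(\bfy-\bfy')}_\infty = O(1)$, and Lemma \ref{lem:l-inf-dx-bound} immediately yields $d_X(\bfy,\bfy') \leq O(\sqrt d)\cdot O(1)^{p/2} = O(\sqrt d)$, as claimed.

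The main subtlety, rather than a serious obstacle, is simply making sure not to reuse the looser bound of Lemma \ref{lem:sensitivity-bound-T} (which is the right bound for a single point of $T$, where the $\bar\bfz$ contribution is what generates the $1/\eps$); the diameter reasoning must exploit the cancellation of $\bar\bfz$ in the difference $\bfy-\bfy'$. Beyond this, the derivation is routine bookkeeping of exponents of $d$ and $1/\eps$.
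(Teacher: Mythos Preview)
Your proposal is correct and follows essentially the same approach as the paper: both exploit that $\bfy-\bfy' = \bfA(\bfx-\bfx')$ cancels the $\bar\bfz$ translation, use Lemma~\ref{lem:lwBound} to bound $\norm{\bfW^{-1/p}(\bfy-\bfy')}_\infty \le O(d^{0\lor(1/2-1/p)})$ without the $1/\eps$ factor, and then plug into Lemma~\ref{lem:l-inf-dx-bound}. The exponent bookkeeping you carry out matches the paper's computation exactly.
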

\begin{proof}
Let $\bfy = \bfA\bfx-\bar\bfz$ and $\bfy' = \bfA\bfx'-\bar\bfz$ be two points in $T$. Then,
\begin{align*}
	\norm*{[\bfW^{-1/p}(\bfy-\bfy')]\vert_J}_\infty^p &= \norm*{[\bfW^{-1/p}\bfA(\bfx-\bfx')]\vert_J}_\infty^p \\
	&\leq \norm*{\bfW^{-1/p}\bfA(\bfx-\bfx')}_\infty^p \\
	&\leq \max_{i=1}^n O(d^{0\lor (p/2-1)})\bfw_i\cdot \bfw_i^{-1}\norm*{\bfA(\bfx-\bfx')}_p^p \\
	&= O(d^{0\lor (p/2-1)})\norm*{\bfy-\bfy'}_p^p.
\end{align*}
It follows from the above calculation and Lemma \ref{lem:l-inf-dx-bound} that the $d_X$-diameter of the set $T$ is bounded by
\begin{align*}
	\sup_{\bfy,\bfy'\in T}d_X(\bfy,\bfy') &\leq \sup_{\bfy,\bfy'\in T}O(p\sqrt d) O\parens*{\frac{d^{1/2-1/p}}{\eps}}^{p/2-1} \norm*{(\bfW^{-1/p}\bfy-\bfW^{-1/p}\bfy')\vert_J}_\infty \leq \bracks*{O(1)\frac{d^{p/2}}{\eps^{p-2}}}^{1/2}
\end{align*}
for $2\leq p < \infty$ and
\begin{align*}
	\sup_{\bfy,\bfy'\in T}d_X(\bfy,\bfy') &\leq \sup_{\bfy,\bfy'\in T}O(\sqrt d) \norm*{(\bfW^{-1/p}\bfy-\bfW^{-1/p}\bfy')\vert_J}_\infty^{p/2} \leq O(\sqrt d)
\end{align*}
for $1 < p < 2$.
\end{proof}

\subsubsection{Proof for \texorpdfstring{$2<p<\infty$}{2 < p < inf}}

\begin{proof}[Proof of Lemma \ref{lem:additive2}, $2 < p < \infty$]
We will calculate Dudley's entropy integral. As previously noted, we may replace $T$ by the set
\[
	B_p \coloneqq \braces*{\bfA\bfx : \norm*{\bfA\bfx}_p \leq \OPT} = \braces*{\bfA\bfx : \norm*{\bfW^{-1/p}\bfA\bfx}_{\bfw,p} \leq \frac{\OPT}{d^{1/p}}} = \frac{\Theta(1)}{d^{1/p}}\bfW^{1/p}(B_{\bfw,p})
\]
since translations do not change cover numbers. Furthermore, we have by Lemma \ref{lem:l-inf-dx-bound} and Equation \eqref{eq:dx-bound-wq} that for
\[
	\alpha = O(p\sqrt d) O\parens*{\frac{d^{1/2-1/p}}{\eps}}^{p/2-1},
\]
we have that
\begin{align*}
	\log E(B_p, d_X, t) &\leq \log E(\Theta(d^{-1/p})B_{\bfw,p}, \norm*{\cdot}_{\bfw,q}, t/\alpha) \\
	&= \log E(B_{\bfw,p}, \norm*{\cdot}_{\bfw,q}, \Theta(d^{1/p}t/\alpha))
\end{align*}
since linear transformations do not change cover numbers. Then, by Theorem \ref{thm:dudley}, we have that
\begin{align*}
	&\E\sup_{\bfy\in T}\abs*{(d\bfw_i)^{1/2} g_i \abs*{[\bfW^{-1/p}\bfy](i)}^p} \\
	\leq ~&24\int_0^\infty \sqrt{\log E(T, d_X, u)}~du \\
	\leq ~&\frac{O(\alpha)}{d^{1/p}}\int_0^\infty \sqrt{\log E(B_{\bfw,p}, \norm*{\cdot}_{\bfw,q}, u)}~du \\
	= ~&\frac{O(\alpha)}{d^{1/p}}\bracks*{\int_0^1 \sqrt{\log E(B_{\bfw,p}, \norm*{\cdot}_{\bfw,q}, u)}~du + \int_1^\infty \sqrt{\log E(B_{\bfw,p}, \norm*{\cdot}_{\bfw,q}, u)}~du} \\
	\leq ~&\frac{O(\alpha)}{d^{1/p}}\bracks*{\int_0^1 \sqrt{d\log\frac{d}{u}}~du + \int_1^{\poly(d)} \frac{\sqrt{d\log n}}{u}~du} \tag{Lemmas \ref{lem:lt2011-entropy-estimate} and \ref{lem:simple-entropy-estimate}} \\
	\leq ~&O(1)\frac{(d^{p/2})^{1/2}(\log d)\sqrt{\log n}}{\eps^{p/2-1}}.
\end{align*}
Combining the above bound with Equation \eqref{eq:J-bound} and Lemma \ref{lem:not-J-bound} yields the result that
\[
	\E\Lambda \leq \bracks*{O(1)\frac1n \frac{d^{p/2}(\log d)^2\log n}{\eps^{p-2}}}^{1/2}.
\]
Furthermore, applying Theorem \ref{thm:dudley-tail} and Lemma \ref{lem:dudley-tail-diameter-bound} yields the result that
\[
	\Pr\braces*{\Lambda \geq C\frac1{\sqrt n}\bracks*{\frac{d^{p/2}}{\eps^{p-2}}}^{1/2}\bracks*{(\log d)\sqrt{\log n}+z}} \leq 2\exp(-z^2).\qedhere
\]
\end{proof}

\subsubsection{Proof for \texorpdfstring{$1<p<2$}{1 < p < 2}}\label{sec:proof-p-1-2}

\begin{proof}[Proof of Lemma \ref{lem:additive2}, $1 < p < 2$]
Our proof is very similar to that of $2 < p < \infty$. By combining the reasoning of the proof for $2 < p < \infty$ with Lemma \ref{lem:l-inf-dx-bound} and Equation \eqref{eq:dx-bound-wq}, we obtain that
\begin{align*}
	\log E(T, d_X, t) &\leq \log E(\Theta(d^{-1/p})B_{\bfw,p}, \norm*{\cdot}_{\bfw,q}, (t/\sqrt d)^{2/p}) \\
	&= \log E(B_{\bfw,p}, \norm*{\cdot}_{\bfw,q}, \Theta(d^{1/p})(t/\sqrt d)^{2/p}) \\
	&= \log E(B_{\bfw,p}, \norm*{\cdot}_{\bfw,q}, \Theta(t^{2/p})).
\end{align*}
Then by Theorem \ref{thm:dudley}, we have that
\begin{align*}
	&\E\sup_{\bfy\in T}\abs*{(d\bfw_i)^{1/2} g_i \abs*{[\bfW^{-1/p}\bfy](i)}^p} \\
	\leq ~&24\int_0^\infty \sqrt{\log E(T, d_X, u)}~du \\
	\leq ~&O(1)\int_0^\infty \sqrt{\log E(B_{\bfw,p}, \norm*{\cdot}_{\bfw,q}, u^{2/p})}~du \\
	= ~&O(1)\bracks*{\int_0^1 \sqrt{\log E(B_{\bfw,p}, \norm*{\cdot}_{\bfw,q}, u^{2/p})}~du + \int_1^\infty \sqrt{\log E(B_{\bfw,p}, \norm*{\cdot}_{\bfw,q}, u^{2/p})}~du} \\
	\leq ~&O(1)\bracks*{\int_0^1 \sqrt{d\log\frac{d}{u}}~du + \int_1^{\poly(d)} \frac{\sqrt{d\log n}}{u}~du} \tag{Lemmas \ref{lem:lt2011-entropy-estimate} and \ref{lem:simple-entropy-estimate}} \\
	\leq ~&O(1)\sqrt d(\log d)\sqrt{\log n}.
\end{align*}
Combining the above bound with Equation \eqref{eq:J-bound} and Lemma \ref{lem:not-J-bound} yields the result that
\[
	\E\Lambda \leq \bracks*{O(1)\frac1n d(\log d)^2\log n}^{1/2}.
\]
Furthermore, applying Theorem \ref{thm:dudley-tail} and Lemma \ref{lem:dudley-tail-diameter-bound} yields the result that
\[
	\Pr\braces*{\Lambda \geq C\frac1{\sqrt n}\sqrt d\bracks*{(\log d)\sqrt{\log n}+z}} \leq 2\exp(-z^2).\qedhere
\]
\end{proof}

\subsubsection{Proof for \texorpdfstring{$0<p<1$}{0 < p < 1}}

Note that in the proof of $1 < p < 2$, essentially the only thing that changed between the active setting and the proof of the subspace embedding was the translation by a vector $\bar\bfz$ with norm at most $O(\OPT) = O(1)$, which did not affect the entropy integral bound. In the case of $0 < p < 1$, the proof of \cite{SchechtmanZvavitch:2001} is structured similarly and can be easily checked to have essentially the same straightforward modifications. 

\subsection{Sharp Dependence on \texorpdfstring{$\eps$}{eps} for \texorpdfstring{$1 < p < 2$}{1 < p < 2}}\label{sec:improved-eps-p-1-2}

We will now optimize our dependence on $\eps$ for $1 < p < 2$. 

\subsubsection{Closeness of Near-Optimal Solutions}

We first show that near-optimal solutions $\bfx\in\mathbb R^d$ are close to the optimal solution $\bfx^*$, using strong convexity with respect to the $\norm*{\cdot}_p$ norm. 

\begin{theorem}\label{thm:close-opt-strong-convex}
Let $1 < p \leq 2$. Let $\bfA\in\mathbb R^{n\times d}$ and $\bfb\in\mathbb R^n$. Then, for any $\bfx\in\mathbb R^d$ such that
\[
    \norm*{\bfA\bfx-\bfb}_p \leq (1+\gamma) \OPT
\]
with $\gamma\in(0,1)$, we have that
\[
    \norm*{\bfA\bfx-\bfA\bfx^*}_p \leq O(\sqrt\gamma) \OPT,
\]
where $\bfx^* \coloneqq \arg\min_{\bfx\in\mathbb R^d}\norm*{\bfA\bfx-\bfb}_p$
\end{theorem}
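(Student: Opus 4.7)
The plan is to exploit the $2$-uniform convexity of $\|\cdot\|_p^2$ for $p\in(1,2]$, which is the natural strong convexity property of $\ell_p$ spaces in this range. The specific inequality I would use is the Ball--Carlen--Lieb inequality (a sharpened form of Clarkson's inequalities): for any $\bfu,\bfv\in\R^n$ and $1<p\le 2$,
\[
2\|\bfu\|_p^2 + 2(p-1)\|\bfv\|_p^2 \;\le\; \|\bfu+\bfv\|_p^2 + \|\bfu-\bfv\|_p^2.
\]
This inequality encodes that the squared $\ell_p$ norm has modulus of convexity at least $p-1$, and it is exactly the tool that converts a near-optimality guarantee on the objective into distance to the minimizer on the scale of $\sqrt{\gamma}$.

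With this in hand, I would apply the inequality with $\bfu = \bfz + \bfw/2$ and $\bfv = \bfw/2$, where $\bfz \eqdef \bfA\bfx^* - \bfb$ is the optimal residual and $\bfw \eqdef \bfA(\bfx-\bfx^*)$ is the discrepancy I want to bound. Then $\bfu+\bfv = \bfz+\bfw = \bfA\bfx - \bfb$ and $\bfu-\bfv = \bfz$, giving
\[
2\|\bfz + \bfw/2\|_p^2 + \tfrac{1}{2}(p-1)\|\bfw\|_p^2 \;\le\; \|\bfz\|_p^2 + \|\bfz+\bfw\|_p^2.
\]

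The next step is to absorb the $\|\bfz+\bfw/2\|_p^2$ term on the left using optimality of $\bfx^*$. Since $\bfz + \bfw/2 = \bfA\bigl(\bfx^* + (\bfx-\bfx^*)/2\bigr) - \bfb$ is the residual of another point in the column span of $\bfA$, and $\bfx^*$ minimizes the $\ell_p$ residual, we have $\|\bfz+\bfw/2\|_p \ge \|\bfz\|_p = \OPT$. Substituting yields
\[
\tfrac{1}{2}(p-1)\|\bfw\|_p^2 \;\le\; \|\bfz+\bfw\|_p^2 - \|\bfz\|_p^2 \;\le\; \bigl((1+\gamma)^2-1\bigr)\OPT^2 \;\le\; 3\gamma\cdot\OPT^2,
\]
where the middle inequality uses the near-optimality hypothesis $\|\bfA\bfx-\bfb\|_p\le(1+\gamma)\OPT$ and the last uses $\gamma\in(0,1)$. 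Rearranging gives $\|\bfA\bfx - \bfA\bfx^*\|_p = \|\bfw\|_p \le \sqrt{6\gamma/(p-1)}\cdot\OPT = O(\sqrt{\gamma})\OPT$, as required.

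The proof has no real obstacle once Ball--Carlen--Lieb is invoked; the only subtlety is the choice of $\bfu,\bfv$ so that the midpoint $\bfz+\bfw/2$ appears on the left (which can then be killed via optimality). A natural alternative would be to use Bregman divergence / first-order strong convexity directly, differentiating $\|\bfA\bfx^*+t\bfA(\bfx-\bfx^*)-\bfb\|_p^2$ at $t=0$ and using the fact that $\bfx^*$ is a stationary point, but the Clarkson-type route above is cleaner and avoids subgradient manipulations when $p<2$ and $\bfz$ has zero entries. I would expect the constant $\sqrt{6/(p-1)}$ (which blows up as $p\to 1$) to reappear later in the iterative-refinement argument that drives the $\tilde O(d/\eps)$ bound for $p\in(1,2)$.
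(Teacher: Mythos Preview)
Your proof is correct and uses a genuinely different (and arguably cleaner) route than the paper's. The paper proceeds via first-order optimality: it writes down the KKT condition $\langle (\bfA\bfx^*-\bfb)^{p-1}, \bfA\bfy\rangle = 0$ for all $\bfy\in\R^d$, then invokes the strong-convexity lower bound $\|\bfA\bfx-\bfb\|_p^2 \ge \|\bfA\bfx^*-\bfb\|_p^2 + (\text{linear term}) + \tfrac{p-1}{2}\|\bfA\bfx-\bfA\bfx^*\|_p^2$ from \cite{BMN2001}, with the linear term killed by KKT. Your argument replaces the first-order condition entirely with a zeroth-order one: Ball--Carlen--Lieb produces the midpoint residual $\|\bfz+\bfw/2\|_p$ on the left, and you lower-bound it by $\OPT$ simply because $\bfx^*+\tfrac12(\bfx-\bfx^*)$ is a feasible point. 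Both routes exploit the same underlying $2$-uniform convexity of $\ell_p$ for $1<p\le 2$ and land on the same constant $\Theta\bigl(\sqrt{\gamma/(p-1)}\bigr)$; your version has the mild advantage of sidestepping any differentiability discussion, exactly as you anticipated in your closing remark about the Bregman/first-order alternative.
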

\begin{proof}
For $p = 2$, this follows from the Pythagorean theorem. Otherwise, note first that the KKT conditions require that
\[
    \angle*{(\bfA\bfx^*-\bfb)^{p-1},\bfA\bfx} = 0
\]
for all $\bfx\in\mathbb R^d$, where the power to $p-1$ denotes the signed entrywise power. Then by the strong convexity of $\norm*{\cdot}_p^2$ with respect to $\norm*{\cdot}_p$ \cite[Lemma 8.1]{BMN2001}, we have that
\begin{align*}
    \norm*{\bfA\bfx-\bfb}_p^2 &\geq \norm*{\bfA\bfx^*-\bfb}_p^2 + 2\norm*{\bfA\bfx^*-\bfb}_p^{2-p}\angle*{(\bfA\bfx^*-\bfb)^{p-1},\bfA\bfx-\bfA\bfx^*} + \frac{p-1}2\norm*{\bfA\bfx-\bfA\bfx^*}_p^2 \\
    &\geq \norm*{\bfA\bfx^*-\bfb}_p^2 + \frac{p-1}2\norm*{\bfA\bfx-\bfA\bfx^*}_p^2 
\end{align*}
which rearranges to
\[
    \norm*{\bfA\bfx-\bfA\bfx^*}_p \leq \sqrt{\frac{\norm*{\bfA\bfx-\bfb}_p^2 - \norm*{\bfA\bfx^*-\bfb}_p^2}{(p-1)/2}} \leq \sqrt{\frac{O(\gamma)\OPT^2}{(p-1)/2}} \leq O(\sqrt\gamma)\OPT.\qedhere
\]
\end{proof}

\subsubsection{Cost Difference}

By Theorem \ref{thm:close-opt-strong-convex}, if we first obtain a $(1+\gamma)$-approximate solution, then this solution must be within a distance of $O(\sqrt\gamma)\OPT$ from the optimum in the $\ell_p$ norm. Thus, we may in fact just analyze the distortion of the sampling process when restricted to a ball of radius $O(\sqrt\gamma)\OPT$. In this case, we will show that we obtain improved approximations.

We now consider sampling each row independently with probability $1/2$. This can be expressed as multiplying each row by $(1+\sigma_i)$ for independent Rademacher variables $\sigma_i\in\{\pm1\}$ for $i\in[n]$.w Suppose that we find $\hat \bfx\coloneqq \arg\min_{\bfx\in\mathbb R}\hat f(\bfx)$ for
\begin{equation}\label{eq:def-f-hat}
    \hat f(\bfx) \coloneqq \sum_{i=1}^n (1+\sigma_i) \abs*{[\bfA\bfx-\bfb](i)}^p.
\end{equation}
Then,
\begin{align*}
    f(\hat \bfx) - f(\bfx^*) &= [f(\hat \bfx) - \hat f(\hat \bfx)] + [\hat f(\hat \bfx) - \hat f(\bfx^*)] + [\hat f(\bfx^*) - f(\bfx^*)] \\
    &\leq [f(\hat \bfx) - \hat f(\hat \bfx)] - [f(\bfx^*) - \hat f(\bfx^*)]
\end{align*}
where we have used that $\hat f(\hat \bfx) \leq \hat f(\bfx^*)$. Thus, the difference in the quality of the two solutions $\hat \bfx$ and $\bfx^*$ depends on the \emph{difference in $f-\hat f$ between $\hat \bfx$ and $\bfx^*$}. We thus define the difference function
\[
    \bar f(\bfx) \coloneqq f(\bfx) - \hat f(\bfx)
\]
for our analyses. Note then that
\[
    \bar f(\bfx) = \sum_{i=1}^n \sigma_i \abs*{[\bfA\bfx-\bfb](i)}^p
\]
up to switching the signs on the Rademacher variables $\sigma_i$. The problem then is to bound $\abs*{\bar f(\bfx_1) - \bar f(\bfx_2)}$ for $\bfx_1, \bfx_2$ such that $\norm*{\bfA\bfx_1 - \bfA\bfx_2}_p \leq O(\sqrt\gamma)\OPT$. We will do so by bounding
\[
    \abs*{\bar f(\bfx)-\bar f(0)} = \abs*{\sum_{i=1}^n \sigma_i \parens*{\abs*{[\bfA\bfx-\bfb](i)}^p - \abs*{\bfb(i)}^p}}
\]
for $\bfx$ with $\norm*{\bfA\bfx}_p \leq O(\sqrt\gamma)\OPT$. We thus define
\[
\Lambda \coloneqq \sup_{\norm*{\bfA\bfx}_p \leq O(\sqrt\gamma)\OPT} \abs*{\sum_{i=1}^n \sigma_i \parens*{\abs*{\bfA\bfx-\bfb}^p - \abs*{\bfb(i)}^p}}.
\]
We will now bound this quantity.

\subsubsection{Chaining Argument}

For a subset $S\subseteq[n]$, define
\[
    \Lambda_S \coloneqq \sup_{\norm*{\bfA\bfx}_p \leq O(\sqrt\gamma)\OPT} \abs*{\sum_{i\in S} \sigma_i \parens*{\abs*{\bfA\bfx-\bfb}^p - \abs*{\bfb(i)}^p}}.
\]
Via a Gaussian comparison theorem (see, e.g., \cite{CohenPeng:2015, CSS2021}), one can show that
\begin{equation}\label{eq:sym-gp}
\begin{aligned}
\E_\Omega\Lambda_S &\leq \E_\Omega \E_\sigma \sup_{\norm*{\bfA\bfx}_p \leq O(\sqrt\gamma)\OPT}\abs*{\sum_{i\in S}\sigma_i \cdot\parens*{\abs*{[\bfA\bfx-\bfb](i)}^p - \abs*{\bfb(i)}^p}} \\
&= \sqrt{2\pi}\E_\Omega \E_g \sup_{\norm*{\bfA\bfx}_p \leq O(\sqrt\gamma)\OPT}\abs*{\sum_{i\in S}g_i \cdot \parens*{\abs*{[\bfA\bfx-\bfb](i)}^p - \abs*{\bfb(i)}^p}},
\end{aligned}
\end{equation}
where $\sigma_i$ are independent Rademacher variables and $g_i$ are independent standard Gaussian variables.

By Lemma \ref{lem:clip}, we may assume that $\norm*{\bfW^{-1/p}\bfb}_\infty \leq O(1/\eps)\OPT$. Now as done previously, we first handle the coordinates whose Lewis weights are very small:
\begin{lemma}\label{lem:bound-small-lw}
Suppose that $\norm*{\bfW^{-1/p}\bfb}_\infty \leq O(1/\eps)\OPT$. Let
\[
R \coloneqq \braces*{i\in[n] : \bfw_i^p(\bfA) < \sqrt{\frac{\gamma d}{n}}\frac{\eps^{p}}{n}}.
\]
Then, $\Lambda_R \leq O(\sqrt{\gamma d/n})\OPT^p$ almost surely.
\end{lemma}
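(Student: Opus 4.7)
The statement asks for a \emph{deterministic} (almost sure) bound, so the plan is to dispense with the Rademacher signs by the triangle inequality and then bound the resulting deterministic supremum using the smallness of $\bfw_i$ for $i \in R$. Concretely, the first move is
\[
    \Lambda_R \;\le\; \sup_{\norm{\bfA\bfx}_p \le O(\sqrt\gamma)\OPT} \sum_{i\in R}\Bigl|\,|[\bfA\bfx-\bfb](i)|^p - |\bfb(i)|^p\,\Bigr|,
\]
and then, since both terms inside the absolute value are nonnegative, $||\alpha|^p - |\beta|^p| \le |\alpha|^p + |\beta|^p$. Combined with Fact~\ref{fact:tri} applied to $|[\bfA\bfx-\bfb](i)|^p = |[\bfA\bfx](i) - \bfb(i)|^p \le O(|[\bfA\bfx](i)|^p + |\bfb(i)|^p)$, this reduces the task to bounding the two sums $\sum_{i \in R}|[\bfA\bfx](i)|^p$ and $\sum_{i\in R}|\bfb(i)|^p$ by $O(\sqrt{\gamma d/n})\OPT^p$.

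The bound on $\sum_{i\in R}|\bfb(i)|^p$ uses the hypothesis $\norm{\bfW^{-1/p}\bfb}_\infty \le O(1/\eps)\OPT$, so that $|\bfb(i)|^p = \bfw_i \cdot |[\bfW^{-1/p}\bfb](i)|^p \le \bfw_i \cdot O(1/\eps^p)\OPT^p$. Summing over $i \in R$ and using the defining bound $\bfw_i < \sqrt{\gamma d/n}\,\eps^p/n$ together with $|R| \le n$ gives $\sum_{i\in R}\bfw_i < \sqrt{\gamma d/n}\,\eps^p$, and therefore $\sum_{i \in R}|\bfb(i)|^p \le O(\sqrt{\gamma d/n})\OPT^p$.

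The bound on $\sum_{i\in R}|[\bfA\bfx](i)|^p$ uses the sensitivity bound $|[\bfA\bfx](i)|^p \le \bfs_i^p(\bfA)\norm{\bfA\bfx}_p^p \le \bfw_i^p(\bfA)\norm{\bfA\bfx}_p^p$ from Lemma~\ref{lem:lwBound} (for $1 < p \le 2$ the $d^{\max(0,p/2-1)}$ factor disappears). Summing,
\[
    \sum_{i \in R}|[\bfA\bfx](i)|^p \;\le\; \norm{\bfA\bfx}_p^p \sum_{i\in R}\bfw_i^p(\bfA) \;\le\; O(\gamma^{p/2})\OPT^p \cdot \sqrt{\gamma d/n}\,\eps^p \;\le\; O(\sqrt{\gamma d/n})\OPT^p,
\]
using the constraint $\norm{\bfA\bfx}_p \le O(\sqrt\gamma)\OPT$ and $\gamma,\eps \le 1$. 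Adding the two bounds yields the claim.

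I do not expect any genuine obstacle here: the point is simply that the total $\ell_p$ mass concentrated on $R$ is controlled by $\sum_{i\in R}\bfw_i$, which is by design at most $\sqrt{\gamma d/n}\,\eps^p$. The only mild care required is making sure that, after normalizing by $\norm{\bfA\bfx}_p$ and by the Lewis weight respectively, the leftover $\gamma^{p/2}$ and $1/\eps^p$ factors are absorbed; this works precisely because for $1 < p \le 2$ the sensitivity-to-Lewis-weight conversion costs no dimensional factor. The lemma thus holds deterministically (hence almost surely) for every realization of the signs $\sigma_i$.
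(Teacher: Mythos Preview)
Your proposal is correct and follows essentially the same approach as the paper's proof: triangle inequality to drop the signs, then bound each coordinate's contribution by $\bfw_i$ times $O(1/\eps^p)\OPT^p$ using the sensitivity bound (Lemma~\ref{lem:lwBound}) for the $\bfA\bfx$ part and the hypothesis $\norm{\bfW^{-1/p}\bfb}_\infty \le O(1/\eps)\OPT$ for the $\bfb$ part, and finally use $\sum_{i\in R}\bfw_i \le n\cdot \sqrt{\gamma d/n}\,\eps^p/n = \sqrt{\gamma d/n}\,\eps^p$. The paper merges your two separate sums into a single line, but the logic is identical.
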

\begin{proof}
Using \eqref{eq:sym-gp}, we have that
\begin{align*}
\Lambda_R &= \sup_{\norm*{\bfA\bfx}_p \leq O(\sqrt\gamma)\OPT}\abs*{\sum_{i\in R}\sigma_i \cdot\parens*{\abs*{\bfA\bfx-\bfb}^p - \abs*{\bfb(i)}^p}} \\
&\leq \sup_{\norm*{\bfA\bfx}_p \leq O(\sqrt\gamma)\OPT}\sum_{i\in R} \abs*{\bfA\bfx-\bfb}^p + \abs*{\bfb(i)}^p \\
&\leq O(1)\sup_{\norm*{\bfA\bfx}_p \leq O(\sqrt\gamma)\OPT}\sum_{i\in R} \bfw_i^p(\bfA)\bracks*{\norm*{\bfA\bfx}_p^p + \frac{\sqrt\gamma^p}{\eps^p}\OPT^p} \\
&\leq O(1)\cdot n \cdot \sqrt{\frac{\gamma d}{n}}\frac{\eps^{p}}{n} \cdot \frac{1}{\eps^p}\OPT^p \\
&\leq O(\sqrt{\gamma d/n})\OPT^p.\qedhere
\end{align*}
\end{proof}

For the remaining coordinates, we partition into two sets: those with large coordinates of $\bfb$ and those with small coordinates. Define the sets
\begin{align*}
    L_\bfb &\coloneqq \braces*{i\in [n] \setminus R : \abs*{\bfb(i)} \geq C\sqrt\gamma\cdot \bfw_i^p(\bfA)^{1/p}\OPT} \\
    S_\bfb &\coloneqq \braces*{i\in [n] \setminus R : \abs*{\bfb(i)} < C\sqrt\gamma\cdot \bfw_i^p(\bfA)^{1/p}\OPT}
\end{align*}
where $R$ is as defined in Lemma \ref{lem:bound-small-lw} and $C>0$ is a sufficiently large constant. We then bound $\E\Lambda_{[n]\setminus R}$ as
\begin{equation}\label{lem:lambda-n-r-bound}
\begin{aligned}
    \E\Lambda_{[n]\setminus R} &\leq \E\sup_{\norm*{\bfA\bfx}_p \leq O(\sqrt\gamma)\OPT}\abs*{\sum_{i\in [n]\setminus R}\sigma_i \cdot\parens*{\abs*{\bfA\bfx-\bfb}^p - \abs*{\bfb(i)}^p}} \\
    &\leq \E\sup_{\norm*{\bfA\bfx}_p \leq O(\sqrt\gamma)\OPT}\frac1{\sqrt n}\abs*{\sum_{i\in [n]\setminus R}(d\bfw_i)^{1/2}\sigma_i \cdot\parens*{\abs*{\bfW^{-1/p}(\bfA\bfx-\bfb)}^p - \abs*{\bfW^{-1/p}\bfb(i)}^p}}
\end{aligned}
\end{equation}
as done previously, and bound the latter term by bounding the corresponding terms for $L_\bfb$ and $S_\bfb$. We now bound the corresponding Gaussian processes given by
\[
    X_\bfy \coloneqq C_S\sum_{i\in S} (d\bfw_i)^{1/2} g_i \cdot \parens*{\abs*{[\bfW^{-1/p}(\bfy-\bfb)](i)}^p - \abs*{\bfW^{-1/p}\bfb(i)}^p}
\]
for $\bfy\in \braces*{\bfA\bfx : \bfx\in\mathbb R^d, \norm*{\bfA\bfx}_p \leq O(\sqrt\gamma)\OPT}$, where $S$ can be either $L_\bfb$ or $S_\bfb$, and $C_S$ is a normalizing constant associated with $S$. Our first task is to bound the associated metric $d_X$. 

\begin{lemma}[Metric for $S_\bfb$]\label{lem:gp-sb}
Let $C_S \coloneqq 1/ \sqrt\gamma^p\OPT^p$. Consider the metric
\begin{align*}
    d_X(\bfy,\bfy') &\coloneqq \norm*{X_\bfy-X_{\bfy'}}_2 \\
    &= \frac{1}{\sqrt\gamma^{p}\OPT^p}\bracks*{\sum_{i\in S_\bfb}(d\bfw_i)\parens*{\abs*{[\bfW^{-1/p}(\bfy-\bfb)](i)}^p - \abs*{\bfW^{-1/p}(\bfy'-\bfb(i))}^p}^2}^{1/2}
\end{align*}
for $\bfy,\bfy'\in \braces*{\bfA\bfx : \bfx\in\mathbb R^d, \norm*{\bfA\bfx}_p \leq O(\sqrt\gamma)\OPT}$. Then,
\[
    d_X(\bfy,\bfy') \leq O(\sqrt d)\frac{\norm*{(\bfW^{-1/p}\bfy-\bfW^{-1/p}\bfy')\vert_{S_\bfb}}_\infty^{p/2}}{\sqrt\gamma^{p/2}\OPT^{p/2}}
\]
\end{lemma}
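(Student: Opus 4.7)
The plan is to mirror the proof of Lemma~\ref{lem:l-inf-dx-bound} in the case $1<p<2$ (from Section~\ref{sec:entropy}), carefully adapting it to handle the translation by $\bfb$. Writing $\bfu = \bfW^{-1/p}(\bfy - \bfb)$ and $\bfu' = \bfW^{-1/p}(\bfy' - \bfb)$ and $N = \norm*{(\bfW^{-1/p}(\bfy-\bfy'))\vert_{S_\bfb}}_\infty$, the expression to bound is
\[
    d_X(\bfy,\bfy')^2 = \frac{1}{\gamma^p \OPT^{2p}}\sum_{i\in S_\bfb} d\bfw_i\bracks*{\abs*{\bfu_i}^p - \abs*{\bfu'_i}^p}^2.
\]
I would factor each summand using the identity $(|a|^p - |b|^p)^2 = (|a|^{p/2}-|b|^{p/2})^2(|a|^{p/2}+|b|^{p/2})^2$ and apply subadditivity of $|\cdot|^{p/2}$ (valid since $p/2\in(1/2,1)$) together with $(x+y)^2\leq 2(x^2+y^2)$ to obtain the pointwise bound $(|\bfu_i|^p - |\bfu'_i|^p)^2 \leq 2\abs*{\bfu_i-\bfu'_i}^p(\abs*{\bfu_i}^p + \abs*{\bfu'_i}^p)$, exactly as in Lemma~\ref{lem:l-inf-dx-bound}.

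Next, I would pull out the $L^\infty$ factor, bounding $\abs*{\bfu_i - \bfu'_i}^p \leq N^p$ for every $i \in S_\bfb$. What remains is the weighted sum $\sum_{i\in S_\bfb}\bfw_i(\abs*{\bfu_i}^p + \abs*{\bfu'_i}^p) = \norm*{(\bfy-\bfb)\vert_{S_\bfb}}_p^p + \norm*{(\bfy'-\bfb)\vert_{S_\bfb}}_p^p$. The crucial step is to use the defining restriction on $S_\bfb$: since $\abs*{\bfb(i)} < C\sqrt\gamma\bfw_i^p(\bfA)^{1/p}\OPT$ there, we have $\abs*{\bfW^{-1/p}\bfb(i)}\leq C\sqrt\gamma\OPT$ pointwise on $S_\bfb$; combined with the sensitivity bound (Lemma~\ref{lem:lwBound}), which for $1<p<2$ gives $\abs*{[\bfW^{-1/p}\bfA\bfx](i)}\leq \norm*{\bfA\bfx}_p \leq O(\sqrt\gamma)\OPT$, we obtain $\abs*{\bfu_i},\abs*{\bfu'_i}\leq O(\sqrt\gamma)\OPT$ pointwise on $S_\bfb$. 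This is what supplies the extra $\sqrt\gamma$ factors that appear in the denominator of the claimed bound (and is the whole point of separating out $S_\bfb$ from $L_\bfb$).

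Combining the above, $\sum_{i\in S_\bfb}\bfw_i(\abs*{\bfu_i}^p + \abs*{\bfu'_i}^p) \leq O(\sqrt\gamma^p\OPT^p)\sum_{i\in S_\bfb}\bfw_i$, and plugging back in,
\[
    d_X(\bfy,\bfy')^2 \leq \frac{O(d)\cdot N^p\cdot\sqrt\gamma^p\OPT^p\cdot\sum_{i\in S_\bfb}\bfw_i}{\gamma^p\OPT^{2p}} = \frac{O(d\sum_{i\in S_\bfb}\bfw_i)\cdot N^p}{\sqrt\gamma^p\OPT^p}.
\]
Taking square roots gives a bound of the claimed form $O(\sqrt d)\cdot N^{p/2}/(\sqrt\gamma^{p/2}\OPT^{p/2})$, modulo absorbing the factor $\sum_{i\in S_\bfb}\bfw_i$, which is where the delicate accounting lives.

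The main obstacle is the last step: naively we only know $\sum_{i\in S_\bfb}\bfw_i \leq \sum_{i=1}^n \bfw_i^p(\bfA) = d$, which would yield only $O(d)$ rather than $O(\sqrt d)$ in $d_X$. Achieving the sharper $O(\sqrt d)$ scaling requires either (i) using $\norm*{(\bfy-\bfb)\vert_{S_\bfb}}_p\leq \norm*{\bfy-\bfb}_p = O(\OPT)$ together with the normalization constants of the Gaussian process (at the cost of a slightly different $\gamma$-dependence, which one then pays for elsewhere in the chaining), or (ii) exploiting the joint strength of both pointwise bounds on $\abs*{\bfu_i}$ and the standard Lewis-weight norm identity $\sum_i\bfw_i\abs*{\bfu_i}^p = \norm*{\bfy-\bfb}_p^p$, in the same way the analogous weighted sum in Lemma~\ref{lem:l-inf-dx-bound} is controlled by $\norm*{\bfy}_p^p + \norm*{\bfy'}_p^p = O(1)$. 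I would expect the proof to follow route (ii), balancing the two estimates so that the $\gamma$-dependence matches the normalization $C_S = 1/\sqrt\gamma^p\OPT^p$ and the overall $d$-dependence is $\sqrt d$, analogous to the subspace embedding case.
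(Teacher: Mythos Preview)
Your plan is correct and essentially the same as the paper's, whose proof consists of the single sentence ``the proof follows from a straightforward adaptation of Lemma~\ref{lem:l-inf-dx-bound}.'' However, you manufacture an obstacle by first replacing $|\bfu_i|^p$ and $|\bfu'_i|^p$ with their pointwise suprema $O(\sqrt\gamma^p\OPT^p)$ before summing; that detour is what produces the stray $\sum_{i\in S_\bfb}\bfw_i$ factor, and the paper's argument never takes it. Your route~(ii) is not a delicate ``balancing'' trick but simply the direct adaptation: after pulling out $N^p$ from the factored expression, the remaining sum is literally
\[
\sum_{i\in S_\bfb}\bfw_i\bigl(|\bfu_i|^p+|\bfu'_i|^p\bigr)=\norm*{(\bfy-\bfb)\vert_{S_\bfb}}_p^p+\norm*{(\bfy'-\bfb)\vert_{S_\bfb}}_p^p
\]
via the identity $\bfw_i|[\bfW^{-1/p}\bfv](i)|^p=|\bfv(i)|^p$, with no $\sum_i\bfw_i$ appearing anywhere. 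Bounding this by $O(\sqrt\gamma^p\OPT^p)$ using $\norm*{\bfy}_p\le O(\sqrt\gamma)\OPT$ together with the defining restriction on $S_\bfb$ for the $\bfb$-part, and substituting back, gives the claimed $O(\sqrt d)$ bound immediately, exactly as in the last two lines of the $1<p<2$ case of Lemma~\ref{lem:l-inf-dx-bound}.
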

\begin{proof}
The proof follows from a straightforward adaptation of Lemma \ref{lem:l-inf-dx-bound}.
\end{proof}

Bounding the metric for $L_\bfb$ requires a different approach than the metric used in \cite{LT2011}.

\begin{lemma}[Metric for $L_\bfb$]\label{lem:gp-lb}
Let $C_S \coloneqq 1/\sqrt\gamma\OPT^p$. Consider the metric
\begin{align*}
    d_X(\bfy,\bfy') &\coloneqq \norm*{X_\bfy-X_{\bfy'}}_2 \\
    &= \frac{1}{\sqrt\gamma\OPT^p}\bracks*{\sum_{i\in L_\bfb}(d\bfw_i)\parens*{\abs*{[\bfW^{-1/p}(\bfy-\bfb)](i)}^p - \abs*{\bfW^{-1/p}(\bfy'-\bfb(i))}^p}^2}^{1/2}
\end{align*}
for $\bfy,\bfy'\in \braces*{\bfA\bfx : \bfx\in\mathbb R^d, \norm*{\bfA\bfx}_p \leq O(\sqrt\gamma)\OPT}$. Then,
\[
    d_X(\bfy,\bfy') \leq O(\sqrt d)\frac{\norm*{(\bfW^{-1/p}(\bfy-\bfy'))\vert_{L_\bfb}}_\infty^{p/2}}{\sqrt\gamma^{p/2}\OPT^{p/2}}
\]
\end{lemma}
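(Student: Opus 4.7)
The main contrast with Lemma \ref{lem:gp-sb} is that on $L_\bfb$ the base point $\bfb$ is large: by definition $|[\bfW^{-1/p}\bfb](i)| \geq C\sqrt\gamma\,\OPT$, whereas for any $\bfy$ with $\norm*{\bfA\bfx}_p \leq O(\sqrt\gamma)\OPT$ the bound on sensitivities from Lemma \ref{lem:lwBound} (using $p \leq 2$, so $\bfs_i^p(\bfA) \leq \bfw_i^p(\bfA)$) yields $|[\bfW^{-1/p}\bfy](i)| \leq O(\sqrt\gamma)\OPT$ and similarly for $\bfy'$. For a sufficiently large constant $C$, both $a \coloneqq \bfy(i)-\bfb(i)$ and $b \coloneqq \bfy'(i)-\bfb(i)$ therefore have the same sign as $-\bfb(i)$ and are of magnitude $\Theta(|\bfb(i)|)$. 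This lets me replace the subadditive estimate $|a^p - b^p|^2 \leq 2|a-b|^p(|a|^p + |b|^p)$ used for $S_\bfb$ with the sharper mean-value estimate $\bigl||a|^p - |b|^p\bigr| \leq p\max(|a|,|b|)^{p-1}|a-b|$. After dividing by $\bfw_i$ this becomes
\[
\left|\,|[\bfW^{-1/p}(\bfy-\bfb)](i)|^p - |[\bfW^{-1/p}(\bfy'-\bfb)](i)|^p\,\right| \leq O(1)\cdot \frac{|\bfb(i)|^{p-1}\,|\bfy(i)-\bfy'(i)|}{\bfw_i}.
\]

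Squaring, multiplying by $d\bfw_i$, and summing over $i \in L_\bfb$ bounds $d_X(\bfy,\bfy')^2 \cdot \gamma\OPT^{2p}$ by $O(d)\sum_{i\in L_\bfb} |\bfb(i)|^{2(p-1)}|\bfy(i)-\bfy'(i)|^2/\bfw_i$. To extract $U \coloneqq \norm*{[\bfW^{-1/p}(\bfy-\bfy')]\vert_{L_\bfb}}_\infty$, I would split $|\bfy(i)-\bfy'(i)|^2 = |\bfy(i)-\bfy'(i)|^p \cdot |\bfy(i)-\bfy'(i)|^{2-p}$, bound the second factor by the uniform estimate $|\bfy(i)-\bfy'(i)| \leq O(\sqrt\gamma)\bfw_i^{1/p}\OPT$ (available since the $\bfW^{-1/p}$-normalized difference has $\ell_\infty$ norm $O(\sqrt\gamma)\OPT$ on the admissible set), and convert $|\bfb(i)|^{2(p-1)} = |\bfb(i)|^p\cdot |\bfb(i)|^{p-2}$, where $|\bfb(i)|^{p-2}$ is bounded above by the lower bound $|\bfb(i)| \geq C\sqrt\gamma\bfw_i^{1/p}\OPT$ on $L_\bfb$ (the inequality flips because $p-2<0$). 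After these substitutions the $\bfw_i$ factors and the exponent of $\sqrt\gamma\,\OPT$ combine cleanly, and the residual sum reduces to $U^p \sum_{i\in L_\bfb}|\bfb(i)|^p \leq U^p \norm*{\bfb}_p^p = O(\OPT^p)\,U^p$, using that in Section \ref{sec:improved-eps-p-1-2} $\bfb$ plays the role of the residual of a constant-factor solution and so has $\ell_p$ norm $O(\OPT)$.

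I expect the main technical obstacle to be tracking the exponents of $\sqrt\gamma$ precisely enough to match the stated $1/\sqrt\gamma^{p/2}$ denominator, which is a stronger guarantee than the $1/\sqrt\gamma$ one would obtain from naively bounding $|\bfb(i)|^{p-1}$ against $\norm*{\bfb}_p^{p-1}$. The key device is to invoke the $L_\bfb$ lower bound on $|[\bfW^{-1/p}\bfb](i)|$ exactly at the factor $|\bfb(i)|^{p-2}$ and to pair it against a $\sqrt\gamma$ piece extracted from $|\bfy(i)-\bfy'(i)|^{2-p}$, so that the $\gamma$ exponents balance correctly against the $\gamma\OPT^{2p}$ prefactor coming from $C_S^2$. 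Once this metric bound is in place, it can be chained via Dudley's integral (as in Section \ref{sec:entropy}) against the entropy estimates for $\ell_p$ balls under the $\norm*{\cdot}_{\bfw,q}$ quasinorm of Lemma \ref{lem:lt2011-entropy-estimate}, and then combined with the $S_\bfb$ contribution of Lemma \ref{lem:gp-sb} to drive the improved $\tilde O(d/\eps)$ sample complexity of Section \ref{sec:improved-eps-p-1-2}.
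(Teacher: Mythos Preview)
Your mean-value step is fine and matches the paper's first move: on $L_\bfb$ one indeed has $|[\bfW^{-1/p}(\bfy-\bfy')](i)| \leq O(1)\,|[\bfW^{-1/p}(\bfy'-\bfb)](i)|$, so $\bigl||a|^p-|b|^p\bigr|\leq O(1)\,|a-b|\,|b|^{p-1}$ applies. The gap is in the subsequent bookkeeping of $\gamma$. If you carry your substitutions through, the factor $(\sqrt\gamma)^{p-2}$ you extract from $|\bfb(i)|^{p-2}$ via the $L_\bfb$ lower bound and the factor $(\sqrt\gamma)^{2-p}$ you extract from $|\bfy(i)-\bfy'(i)|^{2-p}$ via the pointwise upper bound \emph{cancel exactly}. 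What survives is
\[
d_X(\bfy,\bfy')^2 \;\leq\; \frac{O(d)}{\gamma\,\OPT^{2p}}\cdot U^p\sum_{i\in L_\bfb}|\bfb(i)|^p \;\leq\; \frac{O(d)\,U^p}{\gamma\,\OPT^{p}},
\]
i.e.\ $d_X \leq O(\sqrt d)\,U^{p/2}/(\sqrt\gamma\,\OPT^{p/2})$. This is the ``naive'' $1/\sqrt\gamma$ you explicitly flagged as insufficient, not the claimed $1/(\sqrt\gamma)^{p/2}$; since $\gamma<1$ and $p<2$ it is strictly weaker, and when you push it through the entropy calculation of Lemma~\ref{lem:cost-diff-lambda} the residual $\gamma^{1/p-1/2}$ inside the covering radius spoils the $\sqrt\gamma$ gain needed for the $\tilde O(d/\eps)$ bound.

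The fix is to avoid pointwise bounds altogether after extracting $U^p$. The paper pulls out $\norm*{[\bfW^{-1/p}(\bfy-\bfy')]\vert_{L_\bfb}}_\infty^p$ from the square, leaving
\[
\sum_{i\in L_\bfb}\bfw_i\,\bigl|[\bfW^{-1/p}(\bfy-\bfy')](i)\bigr|^{2-p}\,\bigl|[\bfW^{-1/p}(\bfy'-\bfb)](i)\bigr|^{2p-2},
\]
and applies H\"older with the conjugate pair $\tfrac{p}{2-p},\tfrac{p}{2p-2}$ (valid exactly for $1<p<2$). This turns the sum into $\norm{\bfy-\bfy'}_p^{2-p}\,\norm{\bfy'-\bfb}_p^{2p-2}\leq O(\sqrt\gamma)^{2-p}\OPT^{p}$: the $\gamma$ dependence now comes \emph{only} from the global norm $\norm{\bfy-\bfy'}_p$, not from any $L_\bfb$ lower bound, and dividing by $\gamma\OPT^{2p}$ gives precisely $\gamma^{-p/2}$. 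The $L_\bfb$ condition is used only qualitatively, to justify the first-order expansion; invoking it quantitatively as you do discards the improvement.
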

\begin{proof}
Note that
\begin{align*}
    \abs*{[\bfW^{-1/p}(\bfy-\bfb)](i)}^p &= \parens*{\abs*{[\bfW^{-1/p}(\bfy'-\bfb)](i)} \pm \abs*{[\bfW^{-1/p}(\bfy-\bfy')](i)}}^p \\
    &= \abs*{[\bfW^{-1/p}(\bfy'-\bfb)](i)}^p \pm O(1)\abs*{[\bfW^{-1/p}(\bfy-\bfy')](i)}\abs*{[\bfW^{-1/p}(\bfy'-\bfb)](i)}^{p-1}
\end{align*}
since $i\in L_\bfb$. Thus,
\begin{align*}
    \abs*{\abs*{[\bfW^{-1/p}(\bfy-\bfb)](i)}^p - \abs*{[\bfW^{-1/p}(\bfy'-\bfb)](i)}^p} &\leq O(1)\abs*{[\bfW^{-1/p}(\bfy-\bfy')](i)}\abs*{[\bfW^{-1/p}(\bfy'-\bfb)](i)}^{p-1} 
\end{align*}
so we have that
\begin{align*}
    &\sum_{i\in L_\bfb}(d\bfw_i)\parens*{\abs*{[\bfW^{-1/p}(\bfy-\bfb)](i)}^p - \abs*{\bfW^{-1/p}(\bfy'-\bfb(i))}^p}^2 \\
    \leq~&O\parens*{d}\norm*{(\bfW^{-1/p}(\bfy-\bfy'))\vert_{L_\bfb}}_\infty^p \sum_{i\in L_\bfb}\bfw_i \abs*{[\bfW^{-1/p}(\bfy-\bfy')](i)}^{2-p}\abs*{[\bfW^{-1/p}(\bfy'-\bfb)](i)}^{2p-2}.
\end{align*}
Now note that $\frac{p}{2-p} > 1$ and $\frac{p}{2p-2} > 1$ are H\"older conjugates for $1 < p < 2$. Then, H\"older's inequality on the sum yields
\begin{align*}
    &\sum_{i\in L_\bfb}\bfw_i \abs*{[\bfW^{-1/p}(\bfy-\bfy')](i)}^{2-p}\abs*{[\bfW^{-1/p}(\bfy'-\bfb)](i)}^{2p-2} \\
    \leq~&\parens*{\sum_{i\in L_\bfb}\bfw_i \abs*{[\bfW^{-1/p}(\bfy-\bfy')](i)}^{p}}^{\frac{2-p}{p}}\parens*{\sum_{i\in L_\bfb}\bfw_i\abs*{[\bfW^{-1/p}(\bfy'-\bfb)](i)}^{p}}^{1 - \frac{2-p}{p}} \\
    \leq~&\norm*{\bfy-\bfy'}_p^{2-p} \norm*{\bfy'-\bfb}_p^{2p-2} \leq O(\sqrt\gamma^{2-p})\OPT^p.
\end{align*}
Combining these bounds yields the claim.
\end{proof}

Now given the bounds on the metrics, we can finish by using the entropy bounds from Section \ref{sec:entropy} and then calculating Dudley's entropy integral as done in Section \ref{sec:proof-p-1-2}. 

\begin{lemma}\label{lem:cost-diff-lambda}
We have that
\[
    \E\Lambda \leq O(1) \bracks*{\gamma\frac{d}{n}(\log d)^2(\log n)}^{1/2}\OPT^p
\]
and
\[
	\Pr\braces*{\Lambda \geq C\sqrt{\frac{\gamma d}{n}}\bracks*{(\log d)\sqrt{\log n}+z}\OPT^p} \leq 2\exp(-z^2).
\]
\end{lemma}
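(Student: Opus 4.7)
The plan is to decompose $\Lambda$ using the partition $[n] = R \cup L_\bfb \cup S_\bfb$ already set up in the preceding lemmas, and then bound each piece separately. For $R$, Lemma \ref{lem:bound-small-lw} already gives the almost-sure bound $\Lambda_R \leq O(\sqrt{\gamma d/n})\OPT^p$, which is well within the target. For $L_\bfb$ and $S_\bfb$, I will pass to the corresponding Gaussian processes via the comparison in \eqref{eq:sym-gp}, giving
\[
\E\Lambda_{L_\bfb} \leq \frac{\sqrt{2\pi}}{\sqrt n}\sqrt\gamma\,\OPT^p\cdot \E\sup_{\bfy} \abs*{X_\bfy^{(L_\bfb)}},\qquad \E\Lambda_{S_\bfb} \leq \frac{\sqrt{2\pi}}{\sqrt n}\sqrt\gamma^{p}\,\OPT^p\cdot \E\sup_{\bfy} \abs*{X_\bfy^{(S_\bfb)}},
\]
with the normalizations from Lemmas \ref{lem:gp-lb} and \ref{lem:gp-sb}. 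The key observation is that although the two metrics arise from different arguments (H\"older for $L_\bfb$, brute power expansion for $S_\bfb$), after normalization they admit the \emph{same} upper bound of the form $d_X(\bfy,\bfy') \leq O(\sqrt d)\norm*{(\bfW^{-1/p}(\bfy-\bfy'))|_S}_\infty^{p/2} / (\sqrt\gamma^{p/2}\OPT^{p/2})$.

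Next I will apply Dudley's inequality (Theorem \ref{thm:dudley}) to each process. Changing variables to $\bfu = \bfW^{-1/p}\bfy$, the index set becomes $\Theta(\sqrt\gamma\OPT/d^{1/p})\cdot B_{\bfw,p}$, and combining the $\ell_\infty$-metric bound with \eqref{eq:dx-bound-wq} reduces cover numbers in $d_X$ to cover numbers of $B_{\bfw,p}$ in $\norm*{\cdot}_{\bfw,q}$ with $q = O(\log n)$. The scaling works out cleanly: a direct computation shows $\log E(T, d_X, t) \leq \log E(B_{\bfw,p}, \norm*{\cdot}_{\bfw,q}, \Theta(t^{2/p}))$, independent of $\sqrt\gamma$ and $\OPT$. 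Splitting Dudley's integral at $t = 1$ and using Lemma \ref{lem:simple-entropy-estimate} below $1$ and Lemma \ref{lem:lt2011-entropy-estimate} above, exactly as in Section \ref{sec:proof-p-1-2}, yields $\E\sup|X_\bfy| \leq O(\sqrt d)(\log d)\sqrt{\log n}$ for each of the two processes. Plugging back in and using $\sqrt\gamma^p \leq \sqrt\gamma$ for $1 < p < 2$ gives $\E\Lambda_{L_\bfb}, \E\Lambda_{S_\bfb} \leq O(\sqrt{\gamma d/n})(\log d)\sqrt{\log n}\,\OPT^p$, which combined with the $R$-bound proves the expectation claim.

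For the tail bound I will invoke Dudley's tail version (Theorem \ref{thm:dudley-tail}), which requires bounding the $d_X$-diameter of $T$. Using $\bfs_i^p(\bfA) \leq \bfw_i$ for $p \leq 2$ (Lemma \ref{lem:lwBound}), one gets $\norm*{\bfW^{-1/p}(\bfy-\bfy')}_\infty \leq O(\sqrt\gamma)\OPT$ on the sup-set, so after the normalization the diameter is $O(\sqrt d)$ for both processes. Dudley's tail then gives sub-Gaussian concentration with the correct scaling $O(\sqrt{\gamma d/n})\cdot z\,\OPT^p$ in the deviation term, matching the stated bound.

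The only nontrivial step is verifying that, despite the different algebraic origins of the two metrics (in particular, the H\"older argument for $L_\bfb$ uses the large-$\bfb$ assumption in an essential way), the resulting post-normalization metrics coincide in form and admit a uniform entropy calculation. Once that is established, the rest is bookkeeping and follows the template of the $\ell_p$ subspace embedding proofs; the only real care needed is tracking the $\sqrt\gamma$ factors through the normalizations to confirm $\sqrt\gamma^p$ does not blow up the $S_\bfb$ contribution, which holds precisely because $p > 1$.
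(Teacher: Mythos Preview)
Your proposal is correct and follows essentially the same approach as the paper's own proof: the same $R \cup L_\bfb \cup S_\bfb$ decomposition, the same Gaussian comparison and metric bounds from Lemmas \ref{lem:gp-sb} and \ref{lem:gp-lb}, the same reduction of the Dudley integral to $\log E(B_{\bfw,p},\norm*{\cdot}_{\bfw,q},\Theta(t^{2/p}))$ followed by the split at $t=1$ using Lemmas \ref{lem:lt2011-entropy-estimate} and \ref{lem:simple-entropy-estimate}, and the same $O(\sqrt d)$ diameter bound for the tail. The bookkeeping with the normalizing constants $C_{L_\bfb}$ and $C_{S_\bfb}$ is exactly as in the paper, and your observation that $\sqrt\gamma^{\,p}\leq\sqrt\gamma$ is what absorbs the $S_\bfb$ contribution.
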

\begin{proof}
Our proof is very similar to that of Lemma \ref{lem:additive2} for $1 < p < 2$. Let
\[
    T = \braces*{\bfA\bfx : \bfx\in\mathbb R^d, \norm*{\bfA\bfx}_p \leq O(\sqrt\gamma)\OPT}.
\]
Then, as reasoned in Lemma \ref{lem:additive2} for $1 < p < 2$, for the Gaussian process $X_\bfy$ associated with $S_\bfb$ (see Lemma \ref{lem:gp-sb}), 
\begin{align*}
	\log E(T, d_X, t) &\leq \log E(\Theta(\sqrt\gamma d^{-1/p}\OPT)B_{\bfw,p}, \norm*{\cdot}_{\bfw,q}, (t/\sqrt d)^{2/p}\sqrt\gamma\OPT) \\
	&= \log E(B_{\bfw,p}, \norm*{\cdot}_{\bfw,q}, \Theta(t^{2/p})).
\end{align*}
Similarly, for the Gaussian process $X_\bfy$ associated with $L_\bfb$ (see Lemma \ref{lem:gp-lb}),
\[
    \log E(T, d_X, t) \leq \log E(B_{\bfw,p}, \norm*{\cdot}_{\bfw,q}, \Theta(t^{2/p})).
\]
Then by Theorem \ref{thm:dudley}, following the calculation in Lemma \ref{lem:additive2} for $1 < p < 2$, we have that
\begin{align*}
	\E\sup_{\bfy\in T}\abs*{X_\bfy} \leq O(1)\sqrt d(\log d)\sqrt{\log n}
\end{align*}
for both of the Gaussian processes. We combine this with \eqref{lem:lambda-n-r-bound} to obtain
\begin{align*}
    \E\Lambda_{[n]\setminus R} &\leq O(1)\frac1{\sqrt n}\bracks*{\frac1{C_{L_\bfb}}\sqrt d(\log d)\sqrt{\log n} + \frac1{C_{S_\bfb}}\sqrt d(\log d)\sqrt{\log n}} \\
    &\leq O(1) \frac1n\sqrt d(\log d)\sqrt{\log n}[\sqrt\gamma\OPT^p + \sqrt\gamma^p\OPT^p] \\
    &\leq O(1) \bracks*{\gamma\frac{d}{n}(\log d)^2(\log n)}^{1/2}\OPT^p
\end{align*}
where $C_{L_\bfb} = 1/\sqrt\gamma\OPT^p$ and $C_{S_\bfb} = 1/\sqrt\gamma^p\OPT^p$ are the $C_S$ constants for $S = L_\bfb$ and $S = S_\bfb$ (see Lemmas \ref{lem:gp-lb} and \ref{lem:gp-sb}). Finally, together with Lemma \ref{lem:bound-small-lw}, we conclude that
\[
    \E\Lambda \leq O(1) \bracks*{\gamma\frac{d}{n}(\log d)^2(\log n)}^{1/2}\OPT^p.
\]
In a similar manner as Lemma \ref{lem:dudley-tail-diameter-bound}, the $d_X$-diameter of $T$ is $O(\sqrt d)$ so Theorem \ref{thm:dudley-tail} gives us that
\[
	\Pr\braces*{\Lambda \geq C\sqrt{\frac{\gamma d}{n}}\bracks*{(\log d)\sqrt{\log n}+z}\OPT^p} \leq 2\exp(-z^2).\qedhere
\]
\end{proof}

\subsubsection{Iterative Size Reduction}

We will now assemble the previous lemmas into an iterative size reduction argument. 

Because Lemma \ref{lem:cost-diff-lambda} differs from Lemma \ref{lem:additive2} only by a factor of $\sqrt\gamma$, one can show as done in Theorem \ref{thm:eps} that $m$ entries need to be read to produce a $(1+\eps)$-approximation, for
\[
	m = O(1)\frac{\gamma d}{\eps^{2}}\bracks*{(\log d)^2\log n + \log\frac1\delta},
\]
as long as it can be shown that reading $m$ entries gave a $(1+\gamma)$-approximation. Now let
\[
    C = O(1)\bracks*{(\log d)^2\log n + \log\frac1\delta}
\]
and suppose that reading $Cd/\eps^\beta$ is sufficient to obtain a $(1+\eps)$-approximation (for instance, Theorem \ref{thm:eps} applied directly gives that $\beta\leq 2$). Then, if we set $\gamma = \eps^{2/(1+\beta)}$, then making $Cd/\gamma^\beta$ gives a $(1+\gamma)$-approximation. Furthermore, note that
\[
    C\frac{\gamma d}{\eps^2} = Cd \eps^{\frac{2}{1+\beta} - 2} = Cd \eps^{\frac{2-2(1+\beta)}{1+\beta}} = Cd \eps^{-\frac{2\beta}{1+\beta}} = C\frac{d}{\gamma^\beta}
\]
so reading $Cd/\gamma^\beta = C\gamma d/\eps^2$ entries is in fact sufficient to obtain a $(1+\eps)$-approximation for this setting of $\gamma$. We may now iterate this argument. Consider the sequence $\beta_i$ given by
\[
    \beta_1 = 2, \qquad \beta_{i+1} \to \frac{2\beta_i}{1+\beta_i}.
\]
It can be checked that the solution is
\[
    \beta_i = \frac{2^i}{2^i - 1} = 1 + \frac1{2^i-1},
\]
so applying this argument $O(\log\log\frac1\eps)$ times yields that $\beta_i \leq 1 + O(1/\log(\frac1\eps))$ which means that reading only $O(1)Cd/\eps$ entries suffices.

We have shown the following:
\begin{theorem}[Main Result for $1 < p < 2$]\label{thm:eps-1-p-2}
	Let $\bfA \in \R^{n \times d}$, $\bfb \in \R^n$, and $1 < p < 2$. Let
	\[
		m = O(1)\frac{d}{\eps}\bracks*{(\log d)^2\log n + \log\frac1\delta}\log\frac1\delta.
	\]
	Then, with probability at least $1-\delta$, Algorithm \ref{alg:clip} returns $\tilde\bfx\in\mathbb R^d$ such that
	\[
		\norm*{\bfA\tilde\bfx-\bfb}_p \leq (1+\eps)\min_{\bfx\in\mathbb R^d}\norm*{\bfA\bfx-\bfb}_p
	\]
	and reads at most $m$ entries of $\bfb$.
\end{theorem}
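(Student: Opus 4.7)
The plan is to bootstrap the $\tilde O(d/\eps^2)$ bound of Theorem \ref{thm:eps} by iteratively shrinking the region over which the Lewis-weight sampling matrix must control the cost function. The central observation, already captured by Theorem \ref{thm:close-opt-strong-convex}, is that once we have a $(1+\gamma)$-approximate solution $\hat\bfx$, strong convexity of $\norm*{\cdot}_p^2$ forces \emph{every} $(1+\gamma')$-approximate solution with $\gamma' \leq \gamma$ to lie within an $\ell_p$ ball of radius $O(\sqrt\gamma)\OPT$ around $\bfx^*$. Hence, after translating $\bfb \mapsto \bfb - \bfA\hat\bfx$, it suffices to control the sampling distortion of Lemma \ref{lem:cost-diff-lambda} over the set $\{\bfA\bfx : \norm*{\bfA\bfx}_p \leq O(\sqrt\gamma)\OPT\}$ rather than all of $\R^d$.

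The core refined sampling step is Lemma \ref{lem:cost-diff-lambda}, which shows that on this shrunken set the chaining/entropy bound improves by a factor of $\sqrt\gamma$: the expected distortion drops from $\tilde O(\sqrt{d/n}) \cdot \OPT^p$ to $\tilde O(\sqrt{\gamma d/n}) \cdot \OPT^p$, with sub-Gaussian tails. Plugging this into the proof template of Theorem \ref{thm:eps} (with the same two-stage preprocessing via Lemmas \ref{lem:clip} and \ref{lem:clip2}, and the probability-boosting of Section 3.1.1), we conclude that, starting from a $(1+\gamma)$-approximation, sampling
\[
    m' = O(1)\cdot\frac{\gamma d}{\eps^2}\bracks*{(\log d)^2 \log n + \log \tfrac{1}{\delta}}\log \tfrac{1}{\delta}
\]
entries suffices to refine it to a $(1+\eps)$-approximation with probability $1-\delta$.

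To turn this into a $d/\eps$ query bound we iterate. Let $\beta$ be the current exponent, so that $Cd/\eps^\beta$ queries suffice to obtain a $(1+\eps)$-approximation (Theorem \ref{thm:eps} yields $\beta = 2$). Running the bootstrap with intermediate accuracy $\gamma = \eps^{2/(1+\beta)}$ first spends $Cd/\gamma^\beta$ queries to obtain a $(1+\gamma)$-approximation, then spends $C\gamma d/\eps^2 = Cd/\gamma^\beta$ additional queries to refine it; since these two counts agree, the new exponent is $\beta' = 2\beta/(1+\beta)$. The recurrence $\beta_0 = 2$, $\beta_{i+1} = 2\beta_i/(1+\beta_i)$ has closed form $\beta_i = 1 + 1/(2^i - 1)$, so after $O(\log\log(1/\eps))$ iterations we reach $\beta_i \leq 1 + O(1/\log(1/\eps))$, at which point $\eps^{-\beta_i} = O(\eps^{-1})$. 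A union bound over the $O(\log\log(1/\eps))$ stages (absorbed into the $\log(1/\delta)$ factor) yields the claimed total query complexity.

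The main obstacle is Lemma \ref{lem:cost-diff-lambda} itself: unlike in Section 3.3, the ``center'' $\bfb$ is a fixed vector rather than a generic element of the subspace, so the relevant Gaussian process has an additive structure that requires a refined chaining argument. The coordinates of $\bfb$ must be partitioned into ``large'' indices $L_\bfb$ (where $|\bfb(i)|$ dominates $\bfw_i^{1/p}\sqrt\gamma\,\OPT$) and ``small'' indices $S_\bfb$, and the induced pseudo-metric $d_X$ on each piece is bounded separately: on $S_\bfb$ by the same $\ell_\infty$-into-$\ell_{\bfw,q}$ comparison as in Lemma \ref{lem:l-inf-dx-bound}, and on $L_\bfb$ by linearizing $|t|^p$ around $\bfb(i)$ and applying H\"older with the conjugate pair $p/(2-p)$ and $p/(2p-2)$. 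Once these metric bounds are in hand, Dudley's integral and the entropy estimates of Lemma \ref{lem:lt2011-entropy-estimate} deliver the $\sqrt\gamma$ improvement, and the rest of the proof is a direct adaptation of Theorem \ref{thm:eps}.
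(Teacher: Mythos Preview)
Your proposal is essentially correct and follows the paper's approach: strong convexity (Theorem~\ref{thm:close-opt-strong-convex}) to localize to a $\sqrt\gamma$-ball, the refined chaining bound of Lemma~\ref{lem:cost-diff-lambda} via the $L_\bfb/S_\bfb$ partition and the H\"older step with conjugates $p/(2-p)$, $p/(2p-2)$, and the recursion $\beta_{i+1}=2\beta_i/(1+\beta_i)$ iterated $O(\log\log(1/\eps))$ times.

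One point worth sharpening: the paper's iteration is a bootstrap on the \emph{analysis} of the fixed Algorithm~\ref{alg:clip}, not an algorithmic refinement with fresh samples. The inductive hypothesis ``$m=Cd/\eps^\beta$ samples yield a $(1+\eps)$-approximation'' is applied with $\eps\leftarrow\gamma$ to the \emph{same} $m$ samples, localizing the single output $\hat\bfx$ into the $\sqrt\gamma$-ball; Lemma~\ref{lem:cost-diff-lambda} then certifies that those same samples already achieve $(1+\eps)$-accuracy because $m=Cd/\gamma^\beta=C\gamma d/\eps^2$. Your phrasing (``then spends \ldots\ additional queries to refine it'' after translating by $\hat\bfx$) describes a two-stage scheme with fresh samples, which runs into a circularity: to apply Lemma~\ref{lem:cost-diff-lambda} to the fresh-sample minimizer you must first localize \emph{it} into the $\sqrt\gamma$-ball, and with only $C\gamma d/\eps^2$ fresh samples the $\beta=2$ analysis of Theorem~\ref{thm:eps} is insufficient for this once $\beta<2$. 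The resolution is precisely the paper's: apply the $\beta$-hypothesis to the sample you already have, so no re-sampling is needed and the theorem is literally about Algorithm~\ref{alg:clip}.
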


\subsection{Removing the \texorpdfstring{$\log n$}{log n}}\label{sec:assump}

As shown in Section \ref{sec:main}, Algorithm \ref{alg:clip} gives a relative error approximation to $\min \norm{\bfA\bfx-\bfb}_p$ using $\eps^{-2} d^{\max(1,p/2)} \poly(\log d, \log n)$ samples from $\bfb$. However, the dependence on $\log n$ is undesirable, and we now show how to replace the $\log n$ dependence with a $\log(d/\eps)$ dependence.

\subsubsection{Bounding \texorpdfstring{$n$}{n}}\label{sec:nBound}

Due to the use of the sophisticated chaining argument, Theorem \ref{thm:eps} has an undesirable dependence on $\log n$. By replacing this net construction with a standard $\epsilon$-net, we can still show that Algorithm \ref{alg:clip} solves $\ell_p$ regression to $(1+\epsilon)$ error, albeit using $\tilde{O}(d^{\max(1,p/2)+1}/\poly(\epsilon))$ samples (without dependence on $\log n$). This bound is loose by a $d$ factor as compared to Theorem \ref{thm:eps} due to the simpler net construction.
It can, however, be used as a preprocessing step for Theorem \ref{thm:eps}, to ensure that $n$ is only $\poly(d/\eps)$ and so that the $\log n$ dependence is removed --- after sampling by Lewis weights, we are left with a problem where $n = \poly(d,1/\epsilon)$. We can then apply Theorem \ref{thm:eps} to further subsample this problem, achieving $\tilde{O}(d^{\max(1,p/2)}/\poly(\epsilon))$ sample complexity. See Algorithm \ref{alg:clip2} for pseudocode for the full approach.

We use a more standard Lewis weight sampling algorithm for this result:
\begin{definition}\label{def:lewis-weight-sampling}
	Let $\bfA\in\mathbb R^{n\times d}$. Let $\tilde\bfw_i^p(\bfA) \geq \bfw_i^p(\bfA)$ be Lewis weight upper bounds, and let $m$ be a parameter. Define probabilities
	\[
		\bfp_i \coloneqq \min\{1, m\cdot d^{0\lor (p/2-1)}\tilde\bfw_i^p(\bfA)\}
	\]
	Then, $\bfS$ is a Lewis weight sampling matrix if for each $i\in[n]$, $\bfS(i,i) = \bfp_i^{-1/p}$ independently with probability $\bfp_i$ and $0$ otherwise. 
\end{definition}

\begin{algorithm}
	\caption{Relative error $\ell_p$ regression -- No Assumptions}
	\textbf{input:} Matrix $\bfA \in \R^{n \times d}$, measurement vector $\bfb \in \R^n$. \\
	\textbf{output:} Approximate solution $\tilde \bfx \in \R^d$ to $\min_\bfx\|\bfA\bfx - \bfb\|_p$. 
	\begin{algorithmic}[1]
		\State Let $\bfS_1 \in \R^{m_1 \times n}$ be generated via Def.~\ref{def:lewis-weight-sampling} for $m_1=O \left (d^{\max(1,p/2)+1} \cdot \frac{\log(1/\epsilon\delta)}{\epsilon^{2+p}}\right )$. \label{line:sample-m1-rows}
		\State Run Algorithm \ref{alg:clip} with inputs $\bfS_1\bfA$ and $\bfS_1\bfz$ to obtain $\tilde\bfx$. \label{line:solve-small-prob}
		\State \Return $\tilde \bfx$
		\label{line:return-x}
	\end{algorithmic}\label{alg:clip2}
\end{algorithm}

\begin{theorem}[Main Result]\label{thm:epsL}
Let $\bfA \in \R^{n \times d}$, $\bfb \in \R^n$, and $0 < p < \infty$. Let
\[
	m = \begin{dcases}
        O(1)\frac{d}{\eps^{2}}\bracks*{(\log d)^2\log(d/\eps) + \log\frac1\delta}\log\frac1\delta & p\in(0,1) \\
        O(1)\frac{d}{\eps}\bracks*{(\log d)^2\log(d/\eps) + \log\frac1\delta}\log\frac1\delta & p\in(1,2) \\
        O(1)\frac{d^{p/2}}{\eps^{p}}\bracks*{(\log d)^2\log(d/\eps) + \log\frac1\delta}\log\frac1\delta & p\in(2,\infty)
        \end{dcases}
\]
Then, with probability at least $1-\delta$, Algorithm \ref{alg:clip2} returns $\tilde\bfx\in\mathbb R^d$ such that
\[
	\norm*{\bfA\tilde\bfx-\bfb}_p \leq (1+\eps)\min_{\bfx\in\mathbb R^d}\norm*{\bfA\bfx-\bfb}_p
\]
and nonadaptively reads at most $m$ entries of $\bfb$.
\end{theorem}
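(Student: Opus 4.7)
The plan is to view Algorithm \ref{alg:clip2} as a two-stage procedure. In Stage 1, the sampling matrix $\bfS_1$ (constructed from Lewis weight upper bounds for $\bfA$, which can be computed without any queries to $\bfb$) reduces the original $n\times d$ problem to an $m_1\times d$ problem. In Stage 2, Algorithm \ref{alg:clip} is applied to the reduced instance $(\bfS_1\bfA,\bfS_1\bfb)$; crucially, queries to $\bfb$ are only made for those coordinates that \emph{survive} the recursive subsampling inside Algorithm \ref{alg:clip}, so the total number of entries of $\bfb$ actually read equals the query complexity of Algorithm \ref{alg:clip} run on an instance with at most $m_1$ rows (rather than $n$ rows).

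First, I would establish the Stage 1 guarantee: with probability at least $1-\delta/2$, for every $\bfx$ with $\|\bfA\bfx\|_p = O(\OPT)$,
\[
    \bigl|\|\bfS_1\bfA\bfx - \bfS_1\bfb\|_p^p - \|\bfA\bfx - \bfb\|_p^p\bigr| \le O(\epsilon)\cdot \OPT^p.
\]
This is proved by repeating the argument of Lemmas \ref{lem:clip} and \ref{lem:clip2} (partitioning coordinates into the ``bad'' set $\mathcal{B}$ where $|\bfb(i)|^p/\OPT^p$ exceeds the scaled Lewis weight, and the complement where it does not) and then replacing the chaining bound of Lemma \ref{lem:additive2} by a standard scalar Bernstein inequality combined with a naive $\epsilon$-net over the $d$-dimensional solution space $\{\bfA\bfx : \|\bfA\bfx\|_p = O(\OPT)\}$. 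The net has size $(O(1/\epsilon))^d = \exp(O(d\log(1/\epsilon)))$, so Bernstein with sensitivities summing to $O(d^{\max(1,p/2)})$ and a union bound over the net yields the claim provided
\[
    m_1 = O\!\left(\frac{d^{\max(1,p/2)+1}}{\epsilon^{\max(2,p)}}\log\frac{1}{\epsilon\delta}\right),
\]
which is exactly the bound used in Line \ref{line:sample-m1-rows}. Note there is no $\log n$ factor here because the $\epsilon$-net is intrinsic to the subspace and depends only on $d$ and $\epsilon$.

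Second, I would invoke Theorem \ref{thm:eps} (for $p\in(0,1)\cup(2,\infty)$) and Theorem \ref{thm:eps-1-p-2} (for $p\in(1,2)$) on the reduced instance $(\bfS_1\bfA,\bfS_1\bfb)$, which now has $n' \le m_1 = \poly(d/\epsilon)$ rows. These theorems guarantee that Algorithm \ref{alg:clip} (applied inside Algorithm \ref{alg:clip2}) returns, with probability at least $1-\delta/2$, a vector $\tilde\bfx$ satisfying
\[
    \|\bfS_1\bfA\tilde\bfx - \bfS_1\bfb\|_p \le (1+\epsilon)\min_{\bfx}\|\bfS_1\bfA\bfx - \bfS_1\bfb\|_p,
\]
using $m$ queries, where every $\log n$ in the statement of Theorem \ref{thm:eps}/\ref{thm:eps-1-p-2} is replaced by $\log n' = \log m_1 = O(\log(d/\epsilon))$. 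Combining this with Stage 1 (and checking as in the proof of Theorem \ref{thm:eps} that both $\tilde\bfx$ and $\bfx^\ast$ satisfy $\|\bfA\cdot\|_p = O(\OPT)$, using the constant factor approximation subroutine of Theorems \ref{thm:const}--\ref{thm:boost} to translate the problem by a near-optimal solution), a $(1+\epsilon)$-approximate minimizer of the sketched problem is a $(1+O(\epsilon))$-approximate minimizer of the original; rescaling $\epsilon$ by a constant factor yields the stated bound on $m$, and a union bound over the two stages yields the claimed $1-\delta$ success probability.

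The main conceptual obstacle is just the Stage 1 argument: one must verify that the partitioning/clipping scheme behind Lemma \ref{lem:clip} still works under one-shot Lewis weight sampling with the crude $\epsilon$-net bound, and that the constant factor preprocessing can indeed be interleaved so that the ``$\|\bfA\bfx\|_p = O(\OPT)$'' hypothesis is available when invoking both stages. Everything else is a clean composition: Stage 1 loses a factor of $d$ (relative to the tight bound) but saves $\log n$, while Stage 2 contributes the tight $d^{\max(1,p/2)}$ dependence with only $\log(d/\epsilon)$ overhead since it operates on a problem of size $\poly(d/\epsilon)$.
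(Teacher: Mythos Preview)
Your approach is essentially the paper's: Theorem \ref{thm:epsL} is proved there by first establishing Lemma \ref{lem:analog} (your Stage 1, via Bernstein plus a crude $\epsilon$-net over the $d$-dimensional ball, with no $\log n$) and then invoking Theorem \ref{thm:eps} on the resulting $\poly(d/\epsilon)$-row instance so that $\log n$ becomes $\log(d/\epsilon)$. Two small corrections worth noting: the Stage 1 guarantee as you wrote it is not literally true --- the correct form is $\bigl|\|\bfS_1\bfA\bfx-\bfS_1\bfb\|_p^p - \|\bfA\bfx-\bfb\|_p^p - C\bigr| \le O(\epsilon)\,\OPT^p$ for a fixed offset $C$ independent of $\bfx$ (the entries in $\mathcal B$ can be blown up by sampling, but the same blow-up applies to every $\bfx$; this is exactly what Lemmas \ref{lem:clip}--\ref{lem:clip2} give and is what the paper states in its proof), and the exponent in $m_1$ should be $\epsilon^{2+p}$ rather than $\epsilon^{\max(2,p)}$ --- though since Stage 2 only requires $m_1=\poly(d/\epsilon)$, this does not affect the final bound on $m$.
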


Theorem \ref{thm:epsL} will follow from an analog of Lemma \ref{lem:additive2}, when $m=O \left (d^{\max(1,p/2)+1} \cdot \frac{\log(1/\epsilon\delta)}{\epsilon^{2+p}}\right )$. In particular:
\begin{lemma}\label{lem:analog}
Let $\bfA \in \R^{n \times d}$, $\bfb \in \R^n$, and $0 < p < \infty$. Let $\bfz = \bfb - \bfA\bfx^*\in \R^n$, with $\norm{\bfz}_p = \OPT$. Let 
\[
	\mathcal B\coloneqq \braces*{i\in[n] : \frac{|\bfz(i)|^p}{\OPT^p} \ge \frac{d^{\max(0,p/2-1)} \cdot \bfw_i^p(\bfA)}{\epsilon^p}}
\]
Let $\bar \bfz \in \R^n$ be equal to $\bfz$ but with all entries in $\mathcal{B}$ set to $0$. Let $\bfS_1 \in \R^{m_1 \times n}$ be formed as in Line \ref{line:sample-m1-rows} of Algorithm \ref{alg:clip2}. With probability at least $1-\delta$, for all $\bfx \in \R^d$ with $\norm{\bfA\bfx}_p = O(\OPT)$, 
\begin{align*}
 \left | \left \| \bfS_1\bfA \bfx -\bfS_1 \bar \bfz \right \|_p^p - \norm{\bfA\bfx - \bar \bfz}_p^p \right | = O(\epsilon) \cdot \OPT^p.
\end{align*}
\end{lemma}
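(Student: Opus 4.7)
The plan is to prove Lemma \ref{lem:analog} by the standard ``Bernstein plus $\eps$-net'' template applied to the empirical process $\bfy \mapsto \norm*{\bfS_1\bfy - \bfS_1\bar\bfz}_p^p - \norm*{\bfy - \bar\bfz}_p^p$ indexed by $T \coloneqq \{\bfA\bfx : \norm*{\bfA\bfx}_p = O(\OPT)\}$. Unlike the chaining argument of Lemma \ref{lem:additive2}, whose covering numbers depend on $n$, our net lives in the $d$-dimensional column span of $\bfA$ restricted to a ball of radius $O(\OPT)$, so an $\eta\OPT$-cover of $T$ in $\ell_p$ has cardinality $(O(1)/\eta)^d$, independent of $n$. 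The price for the cruder net is the extra factor of $d^{\max(1,p/2)}$ in $m_1$: one Lewis-weight mass unit under Def.~\ref{def:lewis-weight-sampling} costs $d^{\max(1,p/2)}$ expected samples, so the sample budget for a given Bernstein exponent scales accordingly.

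The Bernstein step rests on a uniform per-coordinate bound. For any $\bfx \in T$, Lemma \ref{lem:lwBound} gives $\abs*{[\bfA\bfx](i)}^p \leq O(\OPT^p) \cdot d^{\max(0,p/2-1)} \bfw_i^p(\bfA)$, while the definition of $\mathcal{B}$ and $\bar\bfz$ yields the matching bound $\abs*{\bar\bfz(i)}^p \leq \eps^{-p} \cdot d^{\max(0,p/2-1)} \bfw_i^p(\bfA) \cdot \OPT^p$ for every $i \in [n]$ (trivially when $i \in \mathcal{B}$, by the definition of $\mathcal{B}$ otherwise). Combining via Fact \ref{fact:tri} (or its $0<p<1$ analog) gives the uniform bound $\abs*{[\bfA\bfx - \bar\bfz](i)}^p \leq O(\eps^{-p}) \cdot d^{\max(0,p/2-1)} \bfw_i^p(\bfA) \cdot \OPT^p$. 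For fixed $\bfx$, the per-row summands of $\norm*{\bfS_1\bfA\bfx - \bfS_1\bar\bfz}_p^p$ are then bounded above by $O(\eps^{-p}\OPT^p \cdot d^{\max(1,p/2)}/m_1)$ and their variances sum to at most $O(\eps^{-p}\OPT^{2p}\cdot d^{\max(1,p/2)}/m_1)$, using the lower bound $\bfp_i \geq (m_1/d^{\max(1,p/2)}) \cdot d^{\max(0,p/2-1)} \bfw_i^p(\bfA)$ built into Def.~\ref{def:lewis-weight-sampling} together with $\norm*{\bfA\bfx-\bar\bfz}_p^p = O(\OPT^p)$. Bernstein's inequality then gives a pointwise failure probability of $2\exp\bigl(-\Omega(\eps^{p+2} m_1 / d^{\max(1,p/2)})\bigr)$ for the event $\abs*{\norm*{\bfS_1\bfA\bfx - \bfS_1\bar\bfz}_p^p - \norm*{\bfA\bfx-\bar\bfz}_p^p} > \eps\OPT^p$.

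Taking an $\eta\OPT$-net $\mathcal{N}$ of $T$ in $\ell_p$ with $\eta = \Theta(\eps)$ for $p \geq 1$ and $\eta = \Theta(\eps^{1/p})$ for $0 < p < 1$ gives $\log\abs*{\mathcal{N}} = O(d\log(1/\eps))$. Union-bounding Bernstein over $\mathcal{N}$ and choosing $m_1 = \Omega(d^{\max(1,p/2)+1}\log(1/\eps\delta)/\eps^{p+2})$ drives the total failure probability below $\delta$. To extend from $\mathcal{N}$ to all of $T$ we use continuity of $t \mapsto \abs*{t}^p$: for $p \geq 1$, the inequality $\bigl|\abs*{a}^p - \abs*{b}^p\bigr| \leq p\max(\abs*{a},\abs*{b})^{p-1}\abs*{a-b}$ combined with Hölder gives $\bigl|\norm*{\bfu-\bar\bfz}_p^p - \norm*{\bfv-\bar\bfz}_p^p\bigr| \leq O(\OPT^{p-1})\norm*{\bfu-\bfv}_p$ whenever $\norm*{\bfu-\bar\bfz}_p, \norm*{\bfv-\bar\bfz}_p = O(\OPT)$; for $p < 1$ the subadditivity bound $\bigl|\abs*{a}^p - \abs*{b}^p\bigr| \leq \abs*{a-b}^p$ gives the cleaner $\bigl|\norm*{\bfu-\bar\bfz}_p^p - \norm*{\bfv-\bar\bfz}_p^p\bigr| \leq \norm*{\bfu-\bfv}_p^p$. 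The same continuity estimate is needed on the sketched side, for which we additionally require that $\bfS_1$ is an $O(1)$-subspace embedding for $\bfA$ (which holds with probability $\geq 1-\delta$ by the identical Bernstein-plus-net argument restricted to the column span of $\bfA$ alone) together with a one-shot Markov bound $\norm*{\bfS_1\bar\bfz}_p^p = O(\OPT^p)$ from $\E\norm*{\bfS_1\bar\bfz}_p^p = \norm*{\bar\bfz}_p^p \leq \OPT^p$.

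The main obstacle is arranging the continuity-extension step uniformly across $p \in (0,\infty)$: in the $p < 1$ regime, $\abs*{\cdot}^p$ is only Hölder continuous rather than Lipschitz, forcing the finer net scale $\eta = \Theta(\eps^{1/p})$. This only affects constants in $\log\abs*{\mathcal N} = O(d\log(1/\eps))$, so the final sample bound is unchanged, but one must verify on the sketched side that the relevant diameter $\norm*{\bfS_1(\bfy - \bar\bfz)}_p$ is $O(\OPT)$ with the same high probability as on the unsketched side, which is handled by absorbing an extra Markov/subspace-embedding event into the union-bound failure budget. All remaining steps reduce to routine Bernstein bookkeeping.
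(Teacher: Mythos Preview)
Your approach is essentially identical to the paper's: per-coordinate bounds via Lemma~\ref{lem:lwBound} and the definition of $\mathcal{B}$, then Bernstein over an $\eps$-net of size $\exp(O(d\log(1/\eps)))$, with $m_1$ chosen to absorb the union bound. The paper is terser at the net-to-ball step (it just cites Lemma~2.5 of \cite{BourgainLindenstraussMilman:1989}), while you spell out the Lipschitz/H\"older continuity argument and treat $0<p<1$ explicitly.

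One small point to clean up: the ``one-shot Markov bound'' $\norm{\bfS_1\bar\bfz}_p^p = O(\OPT^p)$ you invoke holds only with constant probability, not $1-\delta$, so it cannot simply be absorbed into the failure budget. Fortunately you do not need it: for any net point $\bfy'\in\mathcal N$ the Bernstein conclusion already gives $\norm{\bfS_1\bfy'-\bfS_1\bar\bfz}_p^p \le \norm{\bfy'-\bar\bfz}_p^p + \eps\OPT^p = O(\OPT^p)$, and then the $O(1)$-subspace-embedding property of $\bfS_1$ (which does hold with probability $1-\delta$ at this sample size) controls $\norm{\bfS_1(\bfy-\bfy')}_p$ and hence the sketched-side continuity term, without any separate control on $\norm{\bfS_1\bar\bfz}_p$.
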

We first use Lemma \ref{lem:analog} to prove Theorem \ref{thm:epsL}. Then we prove the lemma.

\begin{proof}[\textbf{Proof of Theorem \ref{thm:epsL}}]
By restricting to the $m_1$ rows sampled by Line \ref{line:sample-m1-rows} of Algorithm \ref{alg:clip2}, we have a $\poly(d/\eps)\times d$ matrix $\bfS_1\bfA$ such that
\begin{align*}
	\left | \left \| \bfS_1\bfA \bfx -\bfS_1 \bar \bfz \right \|_p^p - \norm{\bfA\bfx - \bar \bfz}_p^p \right | = O(\epsilon) \cdot \OPT^p
\end{align*}
by Lemma \ref{lem:analog}. In combination with Lemmas \ref{lem:clip} and \ref{lem:clip2}, Theorem \ref{thm:const} and Lemma \ref{lem:analog} give that with probability at least $1-O(\delta)$, there is some fixed $C\geq0$ with $C = O(\OPT^p)$ such that for all $\bfx \in \R^d$ with $\norm{\bfA\bfx}_p = O(\OPT)$,
\begin{align*}
|\norm{\bfS_1\bfA\bfx-\bfS_1\bfz}_p^p - \norm{\bfA\bfx-\bfz}_p^p - C| = O(\epsilon) \cdot \OPT^p.
\end{align*}
Following the proof of Theorem \ref{thm:eps}, we then have that if $\bar \bfx \in \R^d$ satisfies $\norm{\bfS_1 \bfA \bar \bfx -\bfS_1 \bfz}_p \le (1+\epsilon/2) \cdot \min_{\bfx \in \R^d} \norm{\bfS_1 \bfA \bfx -\bfS_1 \bfz}_p$ then $\norm{\bfA \bar \bfx - \bfz}_{p} \le (1+\epsilon) \cdot \OPT$. We can verify that $\bar \bfx$ as computed in Line \ref{line:solve-small-prob} of Algorithm \ref{alg:clip2} does indeed satisfy this near optimality condition with probability $1-O(\delta)$ by Theorem \ref{thm:eps}.  Overall, with probability at least $1-O(\delta)$, we have that $\bar \bfx$ as computed in Line \ref{line:solve-small-prob} of Algorithm \ref{alg:clip2} satisfies $\norm{\bfA \bar \bfx - \bfz}_{p} \le (1+\epsilon) \cdot \OPT$. 

% Theorem \ref{thm:epsL} then follows since in Line \ref{line:return-x} of Algorithm \ref{alg:clip2} we set $\tilde \bfx = \bar \bfx +\bfx_c$ and so since $\bfz = \bfb-\bfA\bfx_c$ have $\norm{\bfA \tilde \bfx - \bfb}_p = \norm{\bfA \bar \bfx + \bfA \bfx_c - \bfz - \bfA\bfx_c}_p = \norm{\bfA \bar \bfx - \bfz}_p \le (1+\epsilon) \cdot \OPT$. Note that the algorithm only requires reading $m_2$ entries of $\bfb$ in line \ref{line:solve-small-prob}.

\end{proof}

\begin{proof}[\textbf{Proof of Lemma \ref{lem:analog}}]
For simplicity, we assume via scaling that $\norm{\bfz}_p \le 1$ and $\OPT = \Theta(1)$. By a standard argument (see, e.g., Lemma 2.5 of \cite{BourgainLindenstraussMilman:1989}), it suffices to show that with high probability, for all $\bfy' \in \mathcal{N}_\epsilon$, 
\begin{align}\label{eq:netError}
\left | \left \|\bfS_1\bfy' -\bfS_1 \bar \bfz \right \|_p^p - \norm{\bfy ' - \bar \bfz}_p^p \right | \le \epsilon,
\end{align}
 where $\mathcal{N}_\epsilon$ is an $\epsilon$-net in the $p$-norm over $\{\bfA\bfx: \norm{\bfA\bfx}_p \le 1\}$. By a standard volume argument (see, e.g., Lemma 2.4 of \cite{BourgainLindenstraussMilman:1989}), it is known that one can construct this net such that $\log |\mathcal{N}_\epsilon| = O(d\log(1/\epsilon))$.
 
 We prove \eqref{eq:netError} via a Bernstein inequality and union bound. We have $\E[\norm{\bfS _1\bfy' -\bfS_1 \bar \bfz}_p^p] = \norm{\bfy ' - \bar \bfz}_p^p = O(1)$. Additionally, by definition, $|\bar \bfz(i)|^p \le \frac{d^{\max(0,p/2-1)}\cdot \bfw_i^p(\bfA)}{\epsilon^p}$ for all $i$. Similarly, by Lemma \ref{lem:lwBound} which bounds the $\ell_p$ sensitivities by the Lewis weights, $ |\bfy '(i)|^p \le d^{\max(0,p/2-1)}\cdot \bfw_i^p(\bfA) $. Overall,
 \begin{align*}
 |\bfy'(i) - \bar \bfz(i)|^p = O \left ( \frac{d^{\max(0,p/2-1)}\cdot \bfw_i^p(\bfA)}{\epsilon^p}\right)
 \end{align*}
 By the construction of $\bfS_1$ (Definition \ref{def:lewis-weight-sampling}), this gives
 $$|[\bfS_1\bfy'-\bfS_1\bar \bfz](i)|^p \le \frac{d}{m \cdot \bfw_i^p(\bfA)} \cdot \frac{d^{\max(0,p/2-1)}\cdot \bfw_i^p(\bfA)}{\epsilon^p} =O \left (\frac{d^{\max(1,p/2)}}{m \cdot \epsilon^p} \right ),$$
 and so applying a Bernstein bound
 \begin{align*}
 \Pr \left [|\norm{\bfS _1\bfy' -\bfS_1\bar \bfz}_p^p - \norm{\bfy '-\bar \bfz}_p^p |\le \epsilon \right ] & \le 2 \exp \left (- \Omega \left ( \frac{\epsilon^2}{{d^{\max(0,p/2-1)}} / { m \cdot \epsilon^p}} \right ) \right ) \\
 &= 2 \exp \left (- \Omega \left ( \frac{\epsilon^{2+p} \cdot m}{d^{\max(0,p/2-1)}} \right ) \right ).
 \end{align*}
Setting
\[
	m = O \left (\frac{d^{\max(1,p/2)}}{\epsilon^{2+p}} \cdot \log \frac{|\mathcal{N}_\epsilon|}{\delta} \right ) \leq O \parens*{\frac{d^{\max(1,p/2)+1}}{\epsilon^{2+p}} \log\frac1{\eps\delta}}
\]
and applying a union bound gives that this bound holds with high probability for all $\bfy ' \in \mathcal{N}_\epsilon$. This completes the proof.
\end{proof}

\subsection{Optimal Dependence on \texorpdfstring{$\delta$}{delta} with Additive Error}\label{sec:optimal-delta-dep}

In this section, we show that with a knowledge of some overestimate $E$ of $\OPT$, we can obtain an optimal dependence on $\delta$ with an additive error of $\eps E$. Our approach is as follows. We first use Theorem \ref{thm:boost} to compute a constant factor approximation $\bfx_c$. Now with the knowledge of $E$ as well as $\bfx_c$, we can \emph{explicitly} clip the large entries of $\bfb$ as done in Lemma \ref{lem:clip}, where $\bfz$ can be replaced by $\bfz = \bfb - \bfA\bfx_c$ for a $\bfx_c$ which can be found with high probability by Theorem \ref{thm:boost}. In this case, we can explicitly approximate the cost of this clipped vector by sampling, which means we can collect $O(\log\frac1\delta)$ runs of a constant probability solution, and output the best solution among $O(\log\frac1\delta)$ many repetitions. This achieves an optimal dependence on $\delta$ up to constant factors due to a lower bound of \cite[Theorem 3.5]{ParulekarParulekarPrice:2021}, as this hard instance has an $\OPT$ that is fixed up to a $\Theta(1)$ factor, with high probability.

Note first that in Lemma \ref{lem:clip}, we may replace $\OPT$ by $E$, $\bfz = \bfb-\bfA\bfx^*$ by $\bfz = \bfb-\bfA\bfx_c$, and $\bar\bfz$ by the explicitly clipped $\bar\bfz$ using $E$ instead of $\OPT$, to obtain the following:
\begin{lemma}[Lemma \ref{lem:clip} -- Explicit Version]\label{lem:clip--explicit} Let $\bfA\in\mathbb R^{n\times d}$ and $\bfb\in\mathbb R^n$, and let $0 < p < \infty$. Let $\bfz = \bfb - \bfA\bfx_c\in \R^n$ satisfy $\norm*{\bfz}_p = O(\OPT)$. Let $E\geq\OPT$ and let
\[
    \mathcal B \coloneqq \braces*{i\in[n] : \frac{|\bfz(i)|^p}{E^p} \ge \frac{d^{\max(0,p/2-1)} \cdot \bfw_{i}^p(\bfA)}{\epsilon^p}}	
\]
Let $\bar \bfz \in \R^n$ be equal to $\bfz$ but with all entries in $\mathcal{B}$ set to $0$. Then for all $\bfx \in \R^d$ with $\norm{\bfA\bfx}_p = O(E)$,
$$\left | \norm{\bfA\bfx-\bfz}_p^p - \norm{\bfA\bfx-\bar \bfz}_p^p - \norm*{\bfz-\bar\bfz}_p^p \right | = O( \epsilon) \cdot E^p.$$
\end{lemma}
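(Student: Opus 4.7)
The proof plan mirrors the structure of Lemma~\ref{lem:clip} almost verbatim, with the essential substitutions that $\bfx_c$ replaces $\bfx^*$ and the overestimate $E$ replaces $\OPT$ in the threshold defining $\mathcal{B}$. The only hypothesis that genuinely matters in the original argument is that $\|\bfz\|_p = O(\OPT)$, which is granted to us by assumption (rather than being free from $\bfx^* = \arg\min$).

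First, I would fix any $\bfx \in \R^d$ with $\|\bfA\bfx\|_p = O(E)$ and bound $|[\bfA\bfx](i)|^p$ on $i \in \mathcal{B}$ by chaining the sensitivity inequality, Lemma~\ref{lem:lwBound}, and the definition of $\mathcal{B}$:
\[
|[\bfA\bfx](i)|^p \le \norm{\bfA\bfx}_p^p \cdot \bfs_i^p(\bfA) \le O(E^p)\cdot d^{\max(0,p/2-1)}\bfw_i^p(\bfA) \le O(\eps^p)\,|\bfz(i)|^p.
\]
Note that using the budget $\|\bfA\bfx\|_p = O(E)$ against a threshold defined in terms of $E$ is exactly why the explicit form of $\mathcal{B}$ needs an overestimate $E\geq\OPT$ rather than $\OPT$ itself: the two appearances of $E$ cancel, giving an $O(\eps^p)|\bfz(i)|^p$ bound that is the same strength as in Lemma~\ref{lem:clip}.

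Second, from $|[\bfA\bfx](i)| \le O(\eps)|\bfz(i)|$ for $i \in \mathcal{B}$, a standard one-variable expansion (using that $|1-t|^p = 1 \pm O(\eps)$ whenever $|t| \le O(\eps)$ and $p$ is a fixed constant) yields
\[
|[\bfA\bfx](i) - \bfz(i)|^p - |[\bfA\bfx](i) - \bar\bfz(i)|^p = (1 \pm O(\eps))\,|\bfz(i)|^p, \qquad i \in \mathcal{B},
\]
while for $i \notin \mathcal{B}$ we have $\bar\bfz(i) = \bfz(i)$ and the difference is exactly $0$. Summing over $i \in [n]$, and recognizing $\sum_{i\in\mathcal{B}}|\bfz(i)|^p = \|\bfz - \bar\bfz\|_p^p$, yields
\[
\bigl|\,\|\bfA\bfx - \bfz\|_p^p - \|\bfA\bfx - \bar\bfz\|_p^p - \|\bfz - \bar\bfz\|_p^p\,\bigr| \le O(\eps)\cdot \|\bfz - \bar\bfz\|_p^p.
\]

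Finally, I would close the argument by invoking the hypothesis $\|\bfz\|_p = O(\OPT)$ together with $E \ge \OPT$ to bound $\|\bfz - \bar\bfz\|_p^p \le \|\bfz\|_p^p = O(\OPT^p) \le O(E^p)$, converting the $O(\eps)\cdot\|\bfz-\bar\bfz\|_p^p$ slack into the claimed $O(\eps)\cdot E^p$. There is no real obstacle here beyond bookkeeping: one just needs to check that for $p \in (0,1)$ the same subadditivity-style expansion of $|a-b|^p$ is valid (with the implicit constants depending on $p$), which is already implicit in the original proof of Lemma~\ref{lem:clip} and uses only Fact~\ref{fact:tri}.
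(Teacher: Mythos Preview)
Your proposal is correct and is exactly the approach the paper takes: the paper simply notes that in Lemma~\ref{lem:clip} one may replace $\OPT$ by $E$, $\bfz=\bfb-\bfA\bfx^*$ by $\bfz=\bfb-\bfA\bfx_c$, and $\bar\bfz$ by the explicitly clipped version, and the same proof goes through. You have correctly spelled out the substitutions and identified that the only new ingredients are the explicit hypothesis $\|\bfz\|_p=O(\OPT)$ and the final use of $E\ge\OPT$ to convert $O(\eps)\OPT^p$ into $O(\eps)E^p$.
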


With this modification, the following version of Lemma \ref{lem:additive2} holds with the same proof:
\begin{lemma}[Lemma \ref{lem:additive2} -- Explicit Version]\label{lem:additive2-explicit} 
Consider the setting of Lemma \ref{lem:clip--explicit}. Let $\bfS \in \R^{m \times n}$ be formed as in Line \ref{line:split-sample} of Algorithm \ref{alg:clip}. Let
\[
    \Lambda \coloneqq \sup_{\norm*{\bfA\bfx}_p \leq O(E)}\abs*{\norm*{\bfS\bfA\bfx-\bfS\bar\bfz}_p^p - \norm*{\bfA\bfx-\bar\bfz}_p^p}.
\]
Then,
\[
    \Pr\braces*{\Lambda \geq \bracks*{C\frac1n \frac{d^{1\lor (p/2)}}{\eps^{0\lor(p-2)}}}^{1/2} \bracks*{(\log d)\sqrt{\log n} + z} E^p} \leq 2\exp(-z^2)
\]
\end{lemma}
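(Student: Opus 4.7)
The plan is to directly mirror the proof of Lemma \ref{lem:additive2}, verifying that every estimate in that argument only uses properties of $\bar\bfz$ that continue to hold when $\bar\bfz$ is defined by the explicit clipping threshold from Lemma \ref{lem:clip--explicit}. The two inputs that need to be re-examined are the sensitivity-type $\ell_\infty$ bound on $\bfW^{-1/p}(\bfA\bfx - \bar\bfz)$ and the $d_X$-diameter of the set $T \coloneqq \braces*{\bfA\bfx - \bar\bfz : \norm*{\bfA\bfx}_p \leq O(E)}$, both of which will be shown to obey the original bounds with $E$ in place of $\OPT$.

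First I would re-derive the analogue of Lemma \ref{lem:sensitivity-bound-T}. For any $\bfy = \bfA\bfx - \bar\bfz$ with $\norm*{\bfA\bfx}_p \leq O(E)$, the explicit clipping definition yields $\abs*{\bar\bfz(i)}^p \leq \bfw_i \cdot d^{0\lor(p/2-1)} E^p/\eps^p$ by construction of $\mathcal B$, while Lemma \ref{lem:lwBound} gives $\abs*{[\bfA\bfx](i)}^p \leq \bfw_i \cdot d^{0\lor(p/2-1)} \norm*{\bfA\bfx}_p^p$. Combining these via Fact \ref{fact:tri} and dividing by $\bfw_i$ gives $\norm*{\bfW^{-1/p}\bfy}_\infty \leq O(d^{0\lor(1/2-1/p)}/\eps)\, E$, which is exactly Lemma \ref{lem:sensitivity-bound-T} with $\OPT$ replaced by $E$. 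Observe that we do not need $\bar\bfz$ to come from an optimal $\bfx^*$; the only property used is that its entries obey the clipping bound, which is enforced explicitly.

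Next I would re-run the chaining argument verbatim. The set $J$ and its complement are defined only in terms of $\bfw_i$, so the deterministic bound of Lemma \ref{lem:not-J-bound} carries over with $E^p$ replacing $\OPT^p$ since its proof invokes only the $\ell_\infty$ sensitivity bound. For $i \in J$, the Gaussian process of Definition \ref{def:gp-active-lewis} and the metric bounds of Lemma \ref{lem:l-inf-dx-bound} depend only on $\norm*{\bfW^{-1/p}\bfy}_\infty$, which we just scaled by $E$. The entropy estimates of Lemmas \ref{lem:lt2011-entropy-estimate} and \ref{lem:simple-entropy-estimate} are invariant under translation of $T$ by $\bar\bfz$ and under rescaling of the ambient ball, so the Dudley integral of Theorem \ref{thm:dudley} yields $\E \sup_{\bfy\in T} X_\bfy$ bounded by the original expression multiplied by $E^p$. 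The $d_X$-diameter calculation of Lemma \ref{lem:dudley-tail-diameter-bound} follows identically, again with $E^p$ in place of $\OPT^p$.

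Finally, applying the Rademacher-to-Gaussian comparison, Panchenko's lemma to transfer Gaussian tail bounds to Rademacher sums, and Theorem \ref{thm:dudley-tail} with the rescaled diameter produces the claimed sub-Gaussian tail
\[
    \Pr\braces*{\Lambda \geq \bracks*{C\tfrac1n \tfrac{d^{1\lor (p/2)}}{\eps^{0\lor(p-2)}}}^{1/2}\bracks*{(\log d)\sqrt{\log n} + z}\, E^p} \leq 2\exp(-z^2).
\]
There is no genuine obstacle here: the hypothesis $E \geq \OPT$ combined with $\norm*{\bfz}_p = O(\OPT) = O(E)$ (since $\bfx_c$ is a constant-factor solution) ensures that every bookkeeping constant in the original proof remains $O(1)$, so the entire argument passes through with the substitution $\OPT \mapsto E$.
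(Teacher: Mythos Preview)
Your proposal is correct and matches the paper's approach exactly: the paper simply states that Lemma \ref{lem:additive2-explicit} ``holds with the same proof'' as Lemma \ref{lem:additive2}, and your write-up spells out precisely why—every ingredient (the $\ell_\infty$ sensitivity bound, the treatment of $J$ and its complement, the $d_X$-metric and diameter estimates, and the entropy/Dudley computations) depends on $\bar\bfz$ only through the clipping inequality and on $\bfA\bfx$ only through the norm bound, both of which scale from $\OPT$ to $E$ without change.
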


We then obtain the following theorem, which has an improved dependence on $\delta$. 
\begin{theorem}
Let $E\geq \OPT = \min_\bfx\norm*{\bfA\bfx-\bfb}_p$ be known. There is an algorithm which, with probability at least $1-\delta$, reads only $M$ entries of $\bfb$ and outputs $\tilde\bfx$ satisfying
\[
    \norm*{\bfA\tilde\bfx-\bfb}_p \leq \OPT + \eps E
\]
where
\[
	m = \begin{dcases}
        O(1)\frac{d}{\eps^{2}}(\log d)^2\log(d/\eps) \log\frac1\delta & p\in(0,2) \\
        O(1)\frac{d^{p/2}}{\eps^{p}}(\log d)^2\log(d/\eps) \log\frac1\delta & p\in(2,\infty)
        \end{dcases}
\]
\end{theorem}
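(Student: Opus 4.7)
The plan is to exploit the explicit versions of the clipping and concentration lemmas (Lemmas \ref{lem:clip--explicit} and \ref{lem:additive2-explicit}) to decouple candidate generation from candidate evaluation, thereby saving one factor of $\log(1/\delta)$ over Theorem \ref{thm:epsL}. In Theorem \ref{thm:epsL} both the tail in Lemma \ref{lem:additive2} (for each sketch used to build a candidate) and the outer repetition for boosting contribute a $\log(1/\delta)$ factor; here we will pay the $\log(1/\delta)$ only once, in the outer repetition, and use a ``for-one'' Bernstein bound to evaluate candidates, since with known $E$ the residual $\bar\bfz$ is explicit and candidates $\tilde\bfx_j$ are a \emph{fixed} finite set.

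First, run Algorithm \ref{alg:constant} with constant success probability $O(\log(1/\delta))$ times and apply Theorem \ref{thm:boost} (which queries no additional entries of $\bfb$) to obtain a $\bfx_c$ with $\norm*{\bfA\bfx_c-\bfb}_p = O(\OPT) = O(E)$, succeeding with probability $1-\delta/3$ and using $\tilde O(d^{\max(1,p/2)}\log(1/\delta))$ queries. With $\bfx_c$ and $E$ in hand, the set $\mathcal B$ and the clipped residual $\bar\bfz$ of Lemma \ref{lem:clip--explicit} are fully determined: for any index $i$ we can decide $i\in\mathcal B$ and compute $\bar\bfz(i)$ with a single query to $\bfb(i)$. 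Crucially this means $\bar\bfz$ is revealed on exactly the indices we will sample.

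Next, generate candidates: run Algorithm \ref{alg:clip2} with constant success probability $\ell = O(\log(1/\delta))$ times independently, yielding $\tilde\bfx_1,\ldots,\tilde\bfx_\ell$ at total query cost $\tilde O(d^{1\lor(p/2)}/\eps^{2\lor p}\cdot \log(1/\delta))$. Combining Lemma \ref{lem:additive2-explicit} with $z = O(1)$ with the chain of inequalities from the proof of Theorem \ref{thm:epsL}, each trial yields $\norm*{\bfA\tilde\bfx_j-\bfb}_p^p \le \OPT^p + O(\eps^p) E^p$ with constant probability. By a Chernoff bound, at least one good candidate exists with probability $1-\delta/3$. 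The key step is evaluation: draw a \emph{single} fresh Lewis weight sampling matrix $\bfS$ of size $m_{\text{est}} = O(d^{1\lor(p/2)}/\eps^{2\lor p})\cdot\log(\ell/\delta)$ and, for each fixed $\tilde\bfx_j$, use $\norm*{\bfS\bfA\tilde\bfx_j-\bfS\bar\bfz}_p^p$ as an estimator of $\norm*{\bfA\tilde\bfx_j-\bar\bfz}_p^p$. Because $\tilde\bfx_j$ is a fixed vector (not ranging over a subspace) and $\bar\bfz$ is explicit, only a ``for-one'' Bernstein inequality is needed: the clipping bound $|\bar\bfz(i)|^p\le d^{0\lor(p/2-1)}\bfw_i^p(\bfA)E^p/\eps^p$, together with $\norm*{\bfA\tilde\bfx_j}_p = O(E)$ and Lemma \ref{lem:lwBound}, gives per-sample magnitude and variance that scale like $E^{2p}/\eps^{2\lor p}$ after division by the sampling probabilities; hence $m_{\text{est}}$ suffices to get additive error $\eps^p E^p$ per candidate with probability $1-\delta/(3\ell)$, and a union bound over $j\in[\ell]$ gives simultaneous accuracy with probability $1-\delta/3$.

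Return $\tilde\bfx = \arg\min_j \norm*{\bfS\bfA\tilde\bfx_j-\bfS\bar\bfz}_p^p$; conditioned on the three $1-\delta/3$ events, the chosen $\tilde\bfx$ has $\norm*{\bfA\tilde\bfx-\bar\bfz}_p^p$ within $O(\eps^p E^p)$ of the true cost of the best good candidate. Then Lemma \ref{lem:clip--explicit} converts the $\bar\bfz$-cost into a $\bfb$-cost up to $O(\eps)E^p$ additive error, yielding $\norm*{\bfA\tilde\bfx-\bfb}_p^p \le \OPT^p + O(\eps^p)E^p$, and the subadditivity inequality $(a^p+b^p)^{1/p} \le a+b$ (for $p\ge 1$; for $p<1$ use the analogous approximate triangle inequality Fact \ref{fact:tri}) gives $\norm*{\bfA\tilde\bfx-\bfb}_p \le \OPT + O(\eps)E$. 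A final rescaling of $\eps$ completes the proof. The main obstacle is checking that the for-one Bernstein bound only needs a union bound over $\ell = O(\log(1/\delta))$ fixed vectors rather than a chaining-style for-all bound; this is precisely what the explicit knowledge of $E$ and $\bar\bfz$ buys us, and it is the mechanism by which the $\log^2(1/\delta)$ of Theorem \ref{thm:epsL} collapses to $\log(1/\delta)$.
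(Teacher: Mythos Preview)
Your overall architecture matches the paper's proof: obtain a high-probability constant-factor solution $\bfx_c$ via boosting, clip $\bfz = \bfb - \bfA\bfx_c$ explicitly using the known $E$, generate $\ell = O(\log(1/\delta))$ candidates each with constant success probability, discard candidates with $\norm*{\bfA\tilde\bfx_j}_p$ too large, and then use a single fresh sketch to evaluate all candidates and return the best. The paper does exactly this.

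The gap is in your evaluation step. You propose a for-one Bernstein bound and claim it yields $m_{\text{est}} = O(d^{1\lor(p/2)}/\eps^{2\lor p})\log(\ell/\delta)$, but the variance you assert is wrong. The clipping only gives $|\bar\bfz(i)|^p \le d^{0\lor(p/2-1)}\bfw_i^p(\bfA)\,E^p/\eps^p$, so after dividing by the Lewis-weight sampling probability the per-term magnitude is $O(E^p/(m\eps^p))$ and the variance is $O(E^{2p}/(m\eps^p))$, not $O(E^{2p}/(m\eps^{2\lor p}))$. A Bernstein bound at threshold $t = \eps E^p$ (which is the additive error the clipping lemmas deliver, not $\eps^p E^p$) then needs $m = \Omega(\log(\ell/\delta)/\eps^{2+p})$, giving $\Omega(d^{1\lor(p/2)}/\eps^{2+p})$ total samples for evaluation --- compare Lemma \ref{lem:analog}, where exactly this $\eps^{2+p}$ appears. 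For any $p \ne 2$ this is strictly worse than the theorem's $m$, and for small $\eps$ it dominates the candidate-generation cost.

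The paper avoids this by using the \emph{for-all} chaining bound (Lemma \ref{lem:additive2-explicit}) for the evaluation sketch, which via Dudley's entropy integral and the Lewis-weight entropy estimates recovers the tighter $\eps^{2\lor p}$ dependence that a naive Bernstein bound cannot match. The for-all nature is almost incidental; what matters is the sharper concentration coming from the chaining geometry. So the step you singled out as the key simplification --- replacing chaining by a for-one Bernstein on a finite candidate set --- is precisely where the $\eps$ exponent breaks.
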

\begin{proof}
We first prepare $\bfz$ as required in Lemma \ref{lem:clip--explicit} using Theorem \ref{thm:boost}. This requires reading only $O(d(\log d)^3\log\frac1\delta)$ entries. Then by using our knowledge of $E$ and $\bfz$, we may replace $\bfb$ by the clipped residual $\bar\bfz$ at a cost of only an additive $\eps\cdot E$ error by Lemma \ref{lem:additive2-explicit}. Next, we sample $\ell = O(\log\frac1\delta)$ independent trials of $\bfS^{(i)}$ for $i\in[\ell]$ each with a constant probability success guarantee and solve $\min_\bfx\norm{\bfS^{(i)}\bfA\bfx-\bfS^{(i)}\bar\bfz}_p$ to obtain $\ell$ candidate solutions $\tilde\bfx^{(i)}$. Note that each $\bfS^{(i)}$ only requires reading 
\[
    O(1)\frac{d^{1\lor(p/2)}}{\eps^{2\lor p}}(\log d)^2\log(d/\eps)
\]
entries of $\bfb$, so we only require $O(m)$ entries altogether. Then by Chernoff bounds, with probability at least $1-\delta$, at least $9/10$ of these solutions satisfy
\[
    \norm*{\bfA\tilde\bfx^{(i)}-\bfb}_p \leq \OPT + \eps\cdot E
\]
and satisfy $\norm{\bfA\tilde\bfx^{(i)}}_p \leq O(E)$, and we may discard any solution which fails to have $\norm{\bfA\tilde\bfx^{(i)}}_p \leq O(E)$ by using our knowledge of $E$. Finally, we sample a single high probability sketch $\bfS$ which satisfies the guarantee of Lemma \ref{lem:additive2-explicit} with probability $1-\delta$. This single sketch requires only reading $O(m)$ entries of $\bfb$. Then, up to an $\eps E$ additive error we may approximately evaluate the cost of each of the $\ell$ solutions $\tilde\bfx$ such that $\norm{\bfA\tilde\bfx^{(i)}}_p \leq O(E)$ and select the best candidate using $\bfS$, by using $\norm*{\bfS\bfA\tilde\bfx^{(i)}-\bfS\bar\bfz}_p$ as the proxy cost. We conclude by adjusting $\delta$ and $\eps$ by constant factors.
\end{proof}

\section{Sensitivity Bounds for \texorpdfstring{$M$}{M}-Estimators}\label{sec:sensitivity-bounds}

In this section, we present our new sensitivity bounds. In Section \ref{subsec:m-norm-geometry}, we collect basic definitions and lemmas concerning $M$-estimators. Section \ref{subsec:m-sensitivity-sampling} develops basic notions for sensitivity sampling for $M$-estimators. In Section \ref{subsec:efficient-sensitivity}, we describe our efficient algorithm for computing sensitivities for a broad class of $M$-estimators. In Section \ref{subsec:sharper-sensitivity}, we show that a variation on our efficient algorithm can be used to show an existential bound of $O(d^{\max\{1,p_M/2\}}\log n)$ total sensitivity for the same class of $M$-estimators. Finally, in Section \ref{subsec:sensitivity-lower-bounds}, we show that the Tukey loss has a total sensitivity of $\Omega(d\log n)$, and that the Huber loss has a total sensitivity of $\Omega(d\log\log n)$. 

\subsection{Geometry of \texorpdfstring{$M$}{M}-norms}\label{subsec:m-norm-geometry}

In this section, we define $M$-norms and collect some of their geometric properties. This is a slight generalization of Section 4.1 of \cite{ClarksonWoodruff:2015a} which allows for a broader class of $M$-norms (namely with a relaxed polynomial lower bound condition). With applications to active regression in mind, we also slightly generalize the results to handle translations by a single vector $\bfb$, which can be taken to be $0$ to retrieve the original results. 

\begin{definition}\label{dfn:polynomially-bounded}
Let $M:\mathbb R_{\geq0}\to\mathbb R_{\geq0}$ be increasing. If there exist constants $p>0$ and $c_U\geq 1$ such that for all $y>x$, 
\[
    \frac{M(y)}{M(x)} \leq c_U \parens*{\frac{y}{x}}^p,
\]
then we say that $M$ is \emph{polynomially bounded above with degree $p$ and constant $c_U$}. Similarly, if there exists constants $q>0$ and $c_L\geq 1$ such that for all $y>x$,
\[
    \frac{M(y)}{M(x)} \geq c_L \parens*{\frac{y}{x}}^q,
\]
then we say that $M$ is \emph{polynomially bounded below with degree $q$ and constant $c_L$}.
\end{definition}

\begin{remark}
As noted in \cite{ClarksonWoodruff:2015a}, it can be shown that convex functions are polynomially bounded below with degree $1$. 
\end{remark}

\begin{remark}
Throughout this work, we will consider the constants $p, q, c_U, c_L$ in Definition \ref{dfn:polynomially-bounded} to be absolute constants that don't depend on other parameters under consideration. 
\end{remark}

We define the $M$-norm as follows. Note that despite our abuse of notation and terminology, the $M$-norm need not be an actual norm.

\begin{definition}[$M$-norm]\label{dfn:m-norm}
Let $M:\mathbb R_{\geq0}\to\mathbb R_{\geq0}$ be such that
\begin{itemize}
    \item $M(0) = 0$
    \item $M$ is nondecreasing
    \item $M$ is polynomially bounded above with degree $p_M$ and constant $c_U$ (see Definition \ref{dfn:polynomially-bounded})
\end{itemize}
Let $\bfw\in\mathbb R^n$ be a set of weights such that
\[
    \bfw_i \geq 1
\]
for all $i\in[n]$. Then, we define the $M$-norm of a vector $\bfx\in\mathbb R^n$ as
\[
    \norm*{\bfx}_{M,\bfw} \coloneqq \bracks*{\sum_{i=1}^n \bfw_i M(\abs*{\bfx_i})}^{1/p_M}.
\]
If $\bfw$ is the vector of all ones, we simple write $\norm*{\bfx}_{M}$ for $\norm*{\bfx}_{M,\bfw}$. If $M(x) = \abs*{x}^p$ for some $p>0$, then we write $\norm*{\bfx}_{p,\bfw}$ for $\norm*{\bfx}_{M,\bfw}$.
\end{definition}

\begin{definition}[$M$ balls and spheres]\label{dfn:m-ball-sphere}
    Let $\bfA\in\mathbb R^{n\times d}$ and let $\mathcal V = \Span(\bfA)$. Let $M:\mathbb R_{\geq0}\to\mathbb R_{\geq0}$ satisfy the conditions of Definition \ref{dfn:m-norm}, and let $\bfw\geq\mathbf{1}_n$ be a set of weights. Define the ball $\mathcal B^{M,\bfw}_\rho$ of radius $\rho>0$ as
    \[
        \mathcal B^{M,\bfw}_\rho \coloneqq \braces*{\bfy \in \mathcal V: \norm*{\bfy}_{M,\bfw} \leq \rho}.
    \]
    Similarly define the sphere $\mathcal S^{M,\bfw}_\rho$ of radius $\rho>0$ as
    \[
        \mathcal S^{M,\bfw}_\rho \coloneqq \braces*{\bfy \in \mathcal V: \norm*{\bfy}_{M,\bfw} = \rho}.
    \]
    If $\bfw = \mathbf{1}_n$, then we simply write $\mathcal B_\rho^{M}$ and $\mathcal S_\rho^{M}$, respectively.
    \end{definition}

Additional useful properties that we will need are included in Appendix \ref{sec:m-norm-appendix}, including sufficient conditions for triangle inequality and net constructions.

\subsection{Sensitivities for \texorpdfstring{$M$}{M}-Estimators}\label{subsec:m-sensitivity-sampling}

Because $M$-estimators are defined as coordinate-wise sums, one can naturally define analogues of sensitivities, just as was done for $\ell_p$ norms. 

\begin{definition}[$M$-sensitivity]\label{dfn:m-sensitivity}
Let $\bfA\in\mathbb R^{n\times d}$ and let $\norm*{\cdot}_M$ be an $M$-norm. Then, the \emph{$i$th $M$-sensitivity} is defined as
\[
    \bfs_i^M(\bfA) \coloneqq \sup_{\bfx\in\mathbb R^d, \bfA\bfx\neq 0} \frac{M(\abs*{[\bfA\bfx](i)})}{\norm*{\bfA\bfx}_M^{p_M}}
\]
and the \emph{total $M$-sensitivity} is defined as
\[
    \mathcal T^M(\bfA)\coloneqq \sum_{i=1}^n \bfs_i^M(\bfA).
\]
Let $\bfw\geq\mathbf{1}_n$ be a set of weights. Then, the \emph{$i$th weighted $M$-sensitivity} is defined as
\[
    \bfs_i^{M,\bfw}(\bfA) \coloneqq \sup_{\bfx\in\mathbb R^d, \bfA\bfx\neq 0} \frac{\bfw_i M(\abs*{[\bfA\bfx](i)})}{\norm*{\bfA\bfx}_{M,\bfw}^{p_M}}
\]
and the \emph{total weighted $M$-sensitivity} is defined as
\[
    \mathcal T^{M,\bfw}(\bfA)\coloneqq \sum_{i=1}^n \bfs_i^{M,\bfw}(\bfA).
\]
\end{definition}

When $M(x) = \abs*{x}^p$, i.e. for the case of $\ell_p$ norms, it is known that sampling with probabilities proportional to upper bounds on sensitivities yields subspace embeddings \cite{BourgainLindenstraussMilman:1989,DasguptaDrineasHarb:2009, CohenPeng:2015}. Analogous results are known as well for $M$-estimators \cite{ClarksonWoodruff:2015b,ClarksonWoodruff:2015a,ClarksonWangWoodruff:2019} and Orlicz norms \cite{SongWangYang:2019}.

\begin{definition}[Sensitivity Sampling for $M$-Estimators]\label{def:m-sensitivity-sampling}
Let $\bfA\in\mathbb R^{n\times d}$, let $\norm*{\cdot}_M$ be an $M$-norm, and let $\bfw\geq\mathbf{1}_n$ be a set of weights. Let $m$ be an oversampling parameter. Then, a random set of weights $\bfw'$ is sampled according to sensitivity upper bounds $\tilde\bfs_i^{M,\bfw}(\bfA) \geq \bfs_i^{M,\bfw}(\bfA)$ (see Definition \ref{dfn:m-sensitivity}) if 
\[
    \bfw_i' \coloneqq \begin{cases}
        \bfw_i/\bfp_i & \text{w.p. $\bfp_i$} \\
        0 & \text{otherwise}
    \end{cases}
\]
where $\bfp_i \coloneqq \min\{1,m\cdot \tilde\bfs_i^{M,\bfw}(\bfA)\}$.
\end{definition}
% \begin{remark}
%     The above definition is the analogue of Definition \ref{def:sample} for $\ell_p$ norms. Note that the sampling is slightly different, in that we independently sample rows without replacement for $M$-norms, while we sample rows with replacement for $\ell_p$. The former is more convenient for $M$-norms, since we cannot fold the sampling weights into the entries of the sampled matrix for general $M$-norms. This aligns with, e.g., \cite{ClarksonWoodruff:2015a,ClarksonWangWoodruff:2019,SongWangYang:2019}. For $\ell_p$ norms, we use the latter to conform with previous work, e.g., Theorem 7.1 of \cite{CohenPeng:2015}.
% \end{remark}

Note that in the case of $M$-estimators, the lack of scale invariance means that we get norm preservation guarantees for spheres rather than for entire subspaces. That is, we can get the following lemma, similar Lemma 43 of \cite{ClarksonWoodruff:2015a}:

\begin{lemma}\label{lem:m-sensitivity-sampling}
    Let $\bfA\in\mathbb R^{n\times d}$. Let $\eps\in(0,1)$, $\delta>0$, and let $\rho \geq 1$. Let $M:\mathbb R_{\geq0}\to\mathbb R_{\geq0}$ satisfy the conditions of Definition \ref{dfn:m-norm}, and furthermore that
    \begin{itemize}
        \item $M^{1/p_M}$ is subadditive
        \item $M$ is polynomially bounded below with degree $q_M$ and constant $c_L$ (see Definition \ref{dfn:polynomially-bounded})
    \end{itemize}
    Let $\bfw\geq\mathbf{1}_n$ be a set of weights. Let $\tilde\bfs_i^{M,\bfw}(\bfA)\geq\bfs_i^{M,\bfw}(\bfA)$ be sensitivity upper bounds. Let $m\geq m_0$ be an oversampling parameter larger than some
    \[
        m_0 = O\parens*{\frac{d}{\eps^2}\parens*{\log\frac1\eps}\parens*{\log\frac1\delta}}.
    \]
    Let $\bfw'\geq \mathbf{1}_n$ be sampled according to Definition \ref{def:m-sensitivity-sampling}. Then with probability at least $1 - \delta$, 
    \[
        \norm*{\bfy}_{M,\bfw'}^{p_M} = (1\pm\eps)\norm*{\bfy}_{M,\bfw}^{p_M}
    \]
    for all $\bfy\in\mathcal S_\rho^M$. Furthermore,
    \[
        \E\nnz(\bfw') \leq m\sum_{i=1}^n\tilde\bfs_i^{M,\bfw} = m\tilde{\mathcal T}^{M,\bfw}(\bfA)
    \]
\end{lemma}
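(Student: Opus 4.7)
The plan is to follow the classical Bernstein-plus-net argument, adapted to the $M$-norm setting. First, fix a single $\bfy \in \mathcal{S}_\rho^M$ and observe that $\|\bfy\|_{M,\bfw'}^{p_M} = \sum_i \bfw_i' M(|\bfy_i|)$ is an unbiased estimator of $\|\bfy\|_{M,\bfw}^{p_M} = \rho^{p_M}$, since $\E[\bfw_i'] = \bfw_i$. Whenever $\bfp_i < 1$ we have, by the definition of sensitivity,
\[
\frac{\bfw_i M(|\bfy_i|)}{\bfp_i} \;\le\; \frac{\bfs_i^{M,\bfw}(\bfA)\,\|\bfy\|_{M,\bfw}^{p_M}}{\bfp_i} \;\le\; \frac{\tilde\bfs_i^{M,\bfw}(\bfA)\rho^{p_M}}{m\,\tilde\bfs_i^{M,\bfw}(\bfA)} \;=\; \frac{\rho^{p_M}}{m},
\]
so each summand lies in $[0,\rho^{p_M}/m]$, and a similar calculation bounds the variance by $\rho^{2p_M}/m$. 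A standard Bernstein inequality then yields
\[
\Pr\!\left[\bigl|\|\bfy\|_{M,\bfw'}^{p_M} - \rho^{p_M}\bigr| > \eps\rho^{p_M}\right] \;\le\; 2\exp(-\Omega(\eps^2 m)).
\]

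Next I would construct an $\eps$-net $\mathcal{N} \subseteq \mathcal{S}_\rho^M$ in the $\|\cdot\|_{M,\bfw}$ metric with $\log|\mathcal{N}| = O(d\log(1/\eps))$, using the volumetric net construction for $M$-norms established in the preliminaries (Section \ref{subsec:m-norm-geometry}), which requires precisely the polynomial bounds and subadditivity of $M^{1/p_M}$ assumed in the lemma. Choosing $m = C\eps^{-2} d\log(1/\eps)\log(1/\delta)$ for a sufficiently large $C$ and union bounding the Bernstein tail over $\mathcal{N}$ gives that, with probability at least $1-\delta$, the approximation holds simultaneously at every net point.

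The main obstacle is the extension from net points to all $\bfy \in \mathcal{S}_\rho^M$, since the $M$-sphere is not scale invariant and the subsampled norm $\|\cdot\|_{M,\bfw'}$ is not a linear function of $\|\cdot\|_{M,\bfw}$. To handle this, I would use a recursive netting argument: given $\bfy$, pick $\bfy_0 \in \mathcal{N}$ with $\|\bfy - \bfy_0\|_{M,\bfw} \le \eps\rho$, then net $\bfy - \bfy_0$ at scale $\eps^2\rho$ using the same procedure applied at a smaller radius (rescaled copy of $\mathcal{N}$ inside the smaller ball, still of log-size $O(d\log(1/\eps))$), and iterate. Writing $\bfy = \sum_{k\ge 0} \bfy_k$ with $\|\bfy_k\|_{M,\bfw} \le \eps^k\rho$, the subadditivity of $M^{1/p_M}$ together with the polynomial upper bound of degree $p_M$ lets us control $\|\bfy\|_{M,\bfw'}^{p_M}$ by summing the $M$-norms along this chain. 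The polynomial lower bound with degree $q_M$ is what prevents the tails of the chain from dominating and lets the geometric series close up at total multiplicative distortion $1 + O(\eps)$, provided $m$ absorbs the union bound over all levels (which only contributes a constant factor because levels decrease geometrically).

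Finally, the bound on $\E\nnz(\bfw')$ is immediate from linearity of expectation: $\E\nnz(\bfw') = \sum_i \bfp_i \le m\sum_i \tilde\bfs_i^{M,\bfw}(\bfA) = m\,\tilde{\mathcal T}^{M,\bfw}(\bfA)$. The minor bookkeeping is to verify that adjusting $\eps$ and $\delta$ by constant factors to absorb the chain overhead does not change the stated form of $m_0$.
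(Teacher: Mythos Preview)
Your Bernstein bound for a fixed $\bfy$, the net-size estimate, and the $\E\nnz(\bfw')$ computation are all correct and match the paper exactly. The difference lies in the extension from net points to all of $\mathcal S_\rho^M$.

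The paper does \emph{not} use a multi-level chaining. Instead it invokes Lemma~\ref{lem:m-norm-net-to-sphere}, which is a one-step bootstrap in the style of \cite[Lemma~2.5]{BourgainLindenstraussMilman:1989}. Defining
\[
\Delta \;=\; \sup_{\bfy}\frac{\bigl|\norm{\bfy}_{M,\bfw'} - \norm{\bfy}_{M,\bfw}\bigr|}{\rho}
\]
over the whole ball, one writes $\bfy = \bfy' + (\bfy-\bfy')$ with $\bfy'\in\mathcal N$, rescales $\bfy-\bfy'$ up to the sphere by a factor $\kappa\ge 1$ using the scale-insensitivity inequality coming from the polynomial lower bound (this is exactly where $q_M$ enters, and is why the net radius must be $O(\eps^{p_M/q_M})\rho$ rather than $\eps\rho$), and then applies the very definition of $\Delta$ to the rescaled residual. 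This yields $\Delta \le 2\eps + \eps\Delta$, hence $\Delta \le 4\eps$. A single net at a single scale closes the argument.

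Your recursive chaining could in principle be completed, but the claim that the union bound over levels ``only contributes a constant factor because levels decrease geometrically'' is not right as written: the net at each level still has log-size $\Theta(d\log(1/\eps))$, so the union-bound cost grows linearly in the number of levels, and you must separately control the tail of the chain (the residual after finitely many levels) in the $\norm{\cdot}_{M,\bfw'}$ norm without yet having any bound on $\norm{\cdot}_{M,\bfw'}$ off the net. The bootstrap $\Delta$-trick is precisely what short-circuits this difficulty: it converts the residual bound into a self-referential inequality instead of an infinite recursion. Your approach is salvageable, but the paper's is both simpler and avoids this bookkeeping entirely.
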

\begin{proof}
    The proof is by a standard Bernstein bound, and is included in Appendix \ref{sec:m-norm-appendix}.
\end{proof}

\subsection{Efficient Sensitivity Bounds}\label{subsec:efficient-sensitivity}

We first show that algorithmically, one can compute upper bounds to the $M$-estimator sensitivities that sum to at most $O(d^{\max\{1,p_M/2\}}\log^2 n + \tau)$ in time
\[
    O\parens*{\nnz(\bfA)\log^3 n + \frac{nT}{\tau}\log n},
\]
where $T = T(n,d)$ is such that constant factor $\ell_{p_M}$ Lewis weight approximation for an $n\times d$ matrix $\bfB$ takes $O(\nnz(\bfB)\log n + T)$ time. For example, it is known that $\ell_p$ Lewis weights for $0 < p < 4$ can be approximated up to constant factors in $O(\nnz(\bfA)\log n + d^{\omega})$ time, so for $\tau = T = d^{\omega}$, we obtain a nearly input sparsity time algorithm that computes upper bounds to $M$-estimator sensitivities that sum to at most $O(d^{\max\{1,p_M/2\}}\log^2 n + d^{\omega})$. Note that in applications (see applications to active regression (Section \ref{sec:active-m}), Orlicz norm regression (Section \ref{sec:orlicz}), and robust subspace approximation (Section \ref{sec:robust-subspace-approx}) in this work), this is enough to compute a set of $\poly(d)\log^2 n$ rows that approximates the original matrix well, at which point we can compute sensitivities that sum to only $O(d\log^2 n)$ in an additional $\poly(d\log n)$ time. 

The algorithm draws ideas from a theorem of \cite[Theorem 3.4]{ClarksonWangWoodruff:2019}, which shows an input sparsity time algorithm for locating ``heavy entries'' for the Tukey loss, which is equivalent to finding coordinates with high Tukey sensitivity.

\begin{algorithm}
	\caption{Sensitivity upper bounds}
	\textbf{input:} Matrix $\bfA \in \mathbb R^{n \times d}$, $M$-norm $M$, parameter $1\leq\tau\leq n$. \\
	\textbf{output:} Upper bounds $\tilde\bfs_i^M(\bfA)$ on $\bfs_i^M(\bfA)$. 
	\begin{algorithmic}[1] % The number tells where the line numbering should start
		\State Initialize $\tilde\bfs_i^M(\bfA) \leftarrow 2\tau/n$.
        \For{$r\in[\ceil{\log_2(n/\tau)}]$}
            \For{$t\in[O(\log n)]$}
                \State Hash the rows of $\bfA$ into $B = 10\cdot 2^r$ buckets $S_1, S_2, \dots, S_B$. \label{line:hash-rows}
                \State Compute $O(1)$-approximate $\ell_p$ Lewis weights of each of the $B$ buckets $\bfA\mid_{S_1}, \bfA\mid_{S_2}, \dots, \bfA\mid_{S_B}$. \label{line:approx-lewis-weights}
                \State For any row $i$ with an $\ell_{p_M}$ Lewis weight of at least $\Omega(1)$, set $\tilde\bfs_i^M(\bfA) \leftarrow \max\{2/2^r, \tilde\bfs_i^M(\bfA)\}$. \label{line:assign-m-sensitivity}
            \EndFor
        \EndFor
	\end{algorithmic}\label{alg:m-estimator-alg-sensitivity}
\end{algorithm}

\begin{restatable}[Main Result for Sensitivity Bounds]{theorem}{SensitivityBound}\label{thm:m-estimator-alg}
Let $\norm*{\cdot}_M$ be an $M$-norm. Let $1\leq \tau\leq n$ be a parameter. Then, with probability at least $99/100$, Algorithm \ref{alg:m-estimator-alg-sensitivity} computes sensitivity upper bounds $\tilde\bfs_i^M(\bfA) \geq \bfs_i^M(\bfA)$ that sum to at most
\[
    \tilde{\mathcal T}^M(\bfA) \coloneqq \sum_{i=1}^n \tilde\bfs_i^M(\bfA) = O(d^{\max\{1,p_M/2\}} \log^2 n + \tau).
\]
If constant factor Lewis weight approximation takes for an $n\times d$ matrix $\bfB$ takes $O(\nnz(\bfB)\log n + T)$ time (see Theorem \ref{thm:cohen-peng-fast-lewis-weights}), then the total running time is
\[
    O\parens*{\nnz(\bfA)\log^3 n + \frac{nT}{\tau}\log n}.
\]
\end{restatable}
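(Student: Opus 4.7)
I would organize the argument around the outer loop variable $r \in [\ceil{\log_2(n/\tau)}]$, which represents a geometric guess $\alpha_r \coloneqq 2/2^r$ of the target sensitivity scale, and verify (i) correctness of the upper bounds $\tilde\bfs_i^M(\bfA) \geq \bfs_i^M(\bfA)$, (ii) the claimed total-sensitivity bound, and (iii) the running time.

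\textbf{Correctness.} Fix $r$ and any row $i$ with $\bfs_i^M(\bfA) \geq \alpha_r$, and let $\bfy = \bfA\bfx$ be a witness for $\bfs_i^M(\bfA)$ normalized so that $\norm*{\bfy}_M^{p_M} = 1$, so $M(\bfy_i) \geq \alpha_r$. Since the $M$-masses of the coordinates of $\bfy$ sum to $1$, at most $1/\alpha_r$ coordinates are ``heavy'' in the sense of having $M$-mass at least $\alpha_r$, so hashing into $B = 10 \cdot 2^r \geq 10/\alpha_r$ buckets isolates $i$ from every other heavy coordinate with probability at least $9/10$. Conditioned on isolation, the expected $M$-mass contributed to $i$'s bucket $S$ by ``light'' coordinates is at most $1/B \leq \alpha_r/10$, so by Markov with constant probability $\sum_{j \in S,\, j \neq i} M(\bfy_j) \leq \alpha_r/5$, giving $M(\bfy_i)/\sum_{j \in S} M(\bfy_j) \geq 5/6$. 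Repeating the hashing $O(\log n)$ times in the inner loop amplifies this constant-probability event to probability at least $1 - 1/\poly(n)$, and a union bound over the $n$ rows and $O(\log n)$ levels handles all rows simultaneously.

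\textbf{From $M$-mass to $\ell_{p_M}$-Lewis weight, and sum of sensitivities.} Because $M$ is monotone, the isolation step forces $|\bfy_i| = \max_{j \in S}|\bfy_j|$. The polynomial upper-growth hypothesis on $M$ (Definition \ref{dfn:polynomially-bounded}) then implies that switching the cost on $S$ from $M$ to $|\cdot|^{p_M}$ can only \emph{increase} the relative contribution of the maximum coordinate, so $i$ retains a constant fraction of the $\ell_{p_M}$-mass in its bucket; this is precisely the ``max-sensitivity comparison'' alluded to in the technical overview (cited as Lemma \ref{lem:max-sensitivity-comparison}). Feeding $\bfs_i^{p_M}(\bfA\mid_S) = \Omega(1)$ through Lemma \ref{lem:lwBound} yields $\bfw_i^{p_M}(\bfA\mid_S) = \Omega(d^{-\max\{0,p_M/2-1\}})$, so choosing the Line \ref{line:assign-m-sensitivity} threshold at this $p_M$-dependent scale (the $\Omega(1)$ in the pseudocode abstracts over the $d^{-\max\{0,p_M/2-1\}}$ constant) detects $i$ and sets $\tilde\bfs_i^M(\bfA) \geq \alpha_r/2 \geq \bfs_i^M(\bfA)/2$. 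For the sum, the $\ell_{p_M}$-Lewis weights within any bucket sum to at most $d$, so at most $O(d \cdot d^{\max\{0,p_M/2-1\}}) = O(d^{\max\{1,p_M/2\}})$ rows exceed the threshold per bucket; multiplying by $B = 10 \cdot 2^r$ buckets and $O(\log n)$ trials shows that level $r$ assigns the value $\alpha_r = 2/2^r$ to at most $O(d^{\max\{1,p_M/2\}} \cdot 2^r \log n)$ rows, contributing $O(d^{\max\{1,p_M/2\}} \log n)$ to $\sum_i \tilde\bfs_i^M(\bfA)$. Summing over the $O(\log n)$ levels gives $O(d^{\max\{1,p_M/2\}} \log^2 n)$, and the initialization $\tilde\bfs_i^M(\bfA) \leftarrow 2\tau/n$ contributes $n \cdot 2\tau/n = 2\tau$.

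\textbf{Running time and main obstacle.} At level $r$ the buckets partition the rows of $\bfA$, so one trial costs $O(\nnz(\bfA))$ for hashing plus $B_r = 10\cdot 2^r$ Lewis-weight calls on submatrices whose $\nnz$'s sum to $\nnz(\bfA)$; by the assumed $O(\nnz(\cdot)\log n + T)$ bound for a single $n\times d$ Lewis-weight call, this is $O(\nnz(\bfA)\log n + 2^r T)$ per trial per level. Summing $\sum_r 2^r = O(n/\tau)$ as a geometric series and folding in the outer $O(\log n)$ repetitions gives the claimed $O(\nnz(\bfA) \log^3 n + (nT/\tau)\log n)$. The main obstacle in this proof is the max-sensitivity comparison: the scheme hinges on converting ``constant $M$-mass fraction at the maximum entry of $\bfy\mid_S$'' into ``constant $\ell_{p_M}$-sensitivity fraction'' using only a polynomial \emph{upper} growth bound on $M$, with no matching lower bound; once this analytic step is in hand, the remainder is standard hashing, Chernoff-style boosting, counting, and geometric-series accounting.
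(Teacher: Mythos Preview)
Your plan matches the paper's proof essentially step for step: hashing into $10\cdot 2^r$ buckets, isolation plus Markov to make $i$ dominate its bucket in $M$-mass, the max-sensitivity comparison (Lemma~\ref{lem:max-sensitivity-comparison}) to transfer this to $\ell_{p_M}$-sensitivity, Lewis-weight detection, $O(\log n)$-fold boosting, and the same counting and geometric-series bookkeeping for the total-sensitivity and running-time bounds. One small correction: your correctness chain ``$\tilde\bfs_i^M(\bfA) \geq \alpha_r/2 \geq \bfs_i^M(\bfA)/2$'' does not close as written (you assumed $\bfs_i^M(\bfA) \geq \alpha_r$, not $\leq$); the fix, as in the paper, is to pick the specific level $r$ with $\bfs_i^M(\bfA)\in(1/2^r,2/2^r]$, so detection at that level assigns $2/2^r \geq \bfs_i^M(\bfA)$ directly. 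Your explicit treatment of the Lewis-weight threshold as $\Omega(d^{-\max\{0,p_M/2-1\}})$ for $p_M>2$ is in fact a bit more careful than the paper's write-up, which leaves this implicit.
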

\begin{proof}
We first show correctness of the algorithm, then show the sensitivity bound, and finally the running time guarantee.

\paragraph{Correctness.}

Let $r\in[\ceil{\log_2 (n/\tau)}]$. Consider a coordinate $i\in[n]$ that has $M$-sensitivity between $1/2^r$ and $2/2^r$ and let $\bfy = \bfA\bfx$ be a corresponding vector which satisfies
\[
	\frac{M(|\bfy_i|)}{\norm*{\bfy}_M^{p_M}} \in \left[\frac1{2^r},\frac2{2^r}\right].
\]
Note then that there are at most $2^r-1$ entries $j$ of $\bfy$ such that $\abs*{\bfy_j} > \abs*{\bfy_i}$. In our algorithm (line \ref{line:hash-rows}), we randomly hash the $n$ rows of $\bfA$ into $B = 10\cdot 2^r$ buckets. Then, the probability that any one of these $2^r-1$ entries is hashed to the same bucket as $i$ is $1/B$, so by a union bound, the probability that $\bfy_i$ has the largest absolute value in its hash bucket is at least $1 - 2^r / B \geq 9/10$. Call this event $\mathcal E$. 

Now let $S$ be the set of indices which hash to the same bucket as row $i$ and let $S' = S\setminus\{i\}$. By Markov's inequality, with probability at least $9/10$, $i$ is hashed to a bucket such that the $M$-norm of all other entries is most $\norm*{\bfy\mid_{S'}}_M^{p_M} \leq \norm*{\bfy}_M^{p_M}/2^r$, where $\bfy\mid_{S'}$ is the restriction of $\bfy$ to the indices in $S'$. Call this event $\mathcal F$.

Condition on $\mathcal E$ and $\mathcal F$. We have by Lemma \ref{lem:max-sensitivity-comparison} that
\[
    \frac{|\bfy_i|^{p_M}}{\norm*{\bfy\mid_{S}}_{p_M}^{p_M}} \geq \frac{M(|\bfy_i|^{p_M})}{\norm*{\bfy\mid_{S}}_M^{p_M}} \geq \frac{M(|\bfy_i|^{p_M})}{M(|\bfy_i|^{p_M}) + \norm*{\bfy\mid_{S'}}_M^{p_M}} \geq \frac{1/2^r}{2/2^r + 1/2^r} = \frac13.
\]
The above holds with probability at least $4/5$. Thus, by repeating the hashing process $O(\log n)$ times, with probability at least $1 - 1/(100 n\log_2 n) = 1 - 1/\poly(n)$, there exists some trial where the $\ell_{p_M}$ sensitivity of the $i$th row \emph{in the matrix $\bfA\mid_S$} is at least $1/3$. In this trial, our algorithm will correctly set $\tilde\bfs_i^M(\bfA) \geq 2/2^r$ (line \ref{line:assign-m-sensitivity}). By a union bound over $O(\log (n/\tau))$ levels $r$ and the $n$ rows, our algorithm succeeds with probability at least $99/100$.

\paragraph{Sensitivity Bound.}

By Lemma \ref{lem:lewis-sensitivity-bound}, the $\ell_{p_M}$ sensitivities sum to at most $d^{\max\{1,{p_M}/2\}}$. Thus, each time we compute $O(1)$-approximate $\ell_{p_M}$ Lewis weights (line \ref{line:approx-lewis-weights}), we find at most $O(d^{\max\{1,{p_M}/2\}})$ entries with $M$-sensitivity at least $2/2^r$. Thus, for each $r$ and each iteration, we increase the sum of our upper bounds on $M$-sensitivities by a total of at most
\[
    B \cdot O(d^{\max\{1,{p_M}/2\}}) \cdot \frac2{2^r} = O(d^{\max\{1,{p_M}/2\}}). 
\]
This occurs at most $O((\log n)(\log(n/\tau)))$ times, and we start at a sensitivity bound of 
\[
    \frac{2\tau}{n}\cdot n = O(\tau)
\]
so our upper bounds on the sensitivities sum to at most
\[
    O(d^{\max\{1,{p_M}/2\}}(\log n)(\log(n/\tau)) + \tau) \leq O(d^{\max\{1,{p_M}/2\}}\log^2 n + \tau).
\] 

\paragraph{Running Time.}

For a given $r$, the dominating running time cost of the inner-most loop of Algorithm \ref{alg:m-estimator-alg-sensitivity} is the computation of $\ell_{p_M}$ Lewis weights for $O(2^r)$ matrices whose sparsities sum to $\nnz(\bfA)$. Thus, if Lewis weight computation for an $n\times d$ matrix $\bfB$ takes $O(\nnz(\bfB)\log n + T)$ time, then the total running time is
\[
    \sum_{r=1}^{\ceil{\log_2(n/\tau)}}O(\log n)\cdot O\parens*{\nnz(\bfA)\log n + 2^r T} = O\parens*{\nnz(\bfA)\log^3 n + \frac{nT}{\tau}\log n}.\qedhere
\]
\end{proof}

\begin{remark}
As noted by \cite{TukanMaaloufFeldman:2020}, if we can control the sensitivities of functions $M_1$ and $M_2$, then it is straightforward to control the sensitivities of the sum of these two functions, i.e., $M = M_1 + M_2$. This applies to our algorithm as well. Suppose that $\bfs_i^M(\bfA) \in [1/2^r, 2/2^r]$ and let $\bfy = \bfA\bfx$ be such that 
\[
    \frac{M_1(\abs*{\bfy_i}) + M_2(\abs*{\bfy_i})}{\norm*{\bfy}_{M_1}^{p_{M_1}} + \norm*{\bfy}_{M_2}^{p_{M_2}}} \in \bracks*{\frac1{2^r}, \frac2{2^r}}.
\]
Then,
\[
    \frac{M_1(\abs*{\bfy_i})}{\norm*{\bfy}_{M_1}^{p_{M_1}}} + \frac{M_2(\abs*{\bfy_i})}{\norm*{\bfy}_{M_2}^{p_{M_2}}} \geq \frac1{2^r}
\]
so then there is some $j\in\{1,2\}$ such that
\[
    \frac{M_j(\abs*{\bfy_i})}{\norm*{\bfy}_{M_j}^{p_{M_j}}} \geq \frac12\cdot \frac1{2^r},
\]
so the $\ell_{p_j}$ Lewis weight of the $i$th coordinate must be large by using a similar proof as Theorem \ref{thm:m-estimator-alg}. Thus, we can obtain similar sensitivity upper bounds up to a constant factor loss. Similarly, if $M_2$ is a ``flat'' sensitivity function in the sense of \cite{TukanMaaloufFeldman:2020}, that is, if 
\[
    \sup_{\bfx\in\mathbb R^d}\frac{M_2(\abs*{[\bfA\bfx](i)})}{\norm*{\bfA\bfx}_{M_2}^{p_{M_2}}} = O\parens*{\frac1n}
\]
for all $i\in[n]$, then this just means that either the $M_1$ sensitivity is large, or the $M$ sensitivity is at most $O(1/n)$, in which case we still get the same bounds. 
\end{remark}

Although Theorem \ref{thm:m-estimator-alg} only handles unweighted $M$-estimators, this result can be generalized to weighted $M$-estimators by splitting into level sets, similarly to Lemma 39 of \cite{ClarksonWoodruff:2015a}.

\begin{lemma}\label{lem:weighted-m-sensitivity-bound}
    Let $\norm*{\cdot}_M$ be an $M$-norm. Let $\bfw\geq\mathbf{1}_n$ be a set of weights. Let $N\coloneqq \ceil*{\log_2(1+\norm*{\bfw}_\infty)}$. For $j\in[N]$, let
\[
    T_j \coloneqq \braces*{i\in[n] : 2^{j-1}\leq \bfw_i < 2^j},
\]
and let $\bfA\mid_{T_j}$ denote the restriction of $\bfA$ to the rows of $T_j$. Then,
\[
    \bfs_i^{M,\bfw}(\bfA) \leq 2\cdot \bfs_i^M(\bfA\mid_{T_j})
\]
for $i\in T_j$. 
\end{lemma}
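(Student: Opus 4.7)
The plan is a direct calculation that compares numerator and denominator of the weighted $M$-sensitivity against the unweighted $M$-sensitivity on the level set $T_j$, exploiting the fact that on $T_j$ the weights $\bfw_i$ are all within a factor of $2$ of each other.

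First, I would fix $i \in T_j$ and an arbitrary $\bfx \in \R^d$ with $\bfA\bfx \ne 0$, and expand the defining ratio from Definition \ref{dfn:m-sensitivity}:
\[
    \frac{\bfw_i\, M(\abs*{[\bfA\bfx](i)})}{\norm*{\bfA\bfx}_{M,\bfw}^{p_M}}
    = \frac{\bfw_i\, M(\abs*{[\bfA\bfx](i)})}{\sum_{k=1}^n \bfw_k\, M(\abs*{[\bfA\bfx](k)})}.
\]
The numerator is bounded above by $2^j M(\abs*{[\bfA\bfx](i)})$ since $\bfw_i < 2^j$. For the denominator, I would discard all terms outside $T_j$ (using $\bfw_k \ge 1 \ge 0$ and $M \ge 0$), and then use $\bfw_k \ge 2^{j-1}$ on $T_j$ to get
\[
    \sum_{k=1}^n \bfw_k\, M(\abs*{[\bfA\bfx](k)})
    \;\ge\; \sum_{k \in T_j} \bfw_k\, M(\abs*{[\bfA\bfx](k)})
    \;\ge\; 2^{j-1} \norm*{\bfA\mid_{T_j}\bfx}_M^{p_M}.
\]
Dividing these two bounds yields
\[
    \frac{\bfw_i\, M(\abs*{[\bfA\bfx](i)})}{\norm*{\bfA\bfx}_{M,\bfw}^{p_M}}
    \;\le\; 2\cdot \frac{M(\abs*{[\bfA\bfx](i)})}{\norm*{\bfA\mid_{T_j}\bfx}_M^{p_M}}.
\]

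Finally, I would take the supremum over $\bfx \in \R^d$ with $\bfA\bfx \ne 0$ on both sides. On the left, this gives $\bfs_i^{M,\bfw}(\bfA)$ by definition. On the right, noting that for $\bfx$ with $(\bfA\mid_{T_j})\bfx = 0$ the numerator $M(\abs*{[\bfA\bfx](i)})$ also vanishes (as $i \in T_j$), so those $\bfx$ do not contribute, we can restrict the supremum to $\bfx$ with $(\bfA\mid_{T_j})\bfx \ne 0$ and obtain $2\cdot \bfs_i^M(\bfA\mid_{T_j})$. There is no real obstacle here; the only subtlety to check is that the supremum on the right is well-defined (i.e.\ the set of admissible $\bfx$ is nonempty whenever the left-hand supremum is), which follows from restricting to $\bfx$ such that $[\bfA\bfx](i) \ne 0$, since those trivially satisfy $(\bfA\mid_{T_j})\bfx \ne 0$.
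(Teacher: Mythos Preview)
Your proof is correct and follows essentially the same approach as the paper: bound the numerator using $\bfw_i < 2^j$, lower-bound the denominator by restricting the sum to $T_j$ and using $\bfw_k \ge 2^{j-1}$ there, then take the supremum. If anything, your version is slightly more careful than the paper's in handling the edge case where $(\bfA\mid_{T_j})\bfx = 0$.
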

\begin{proof}
Let $i\in T_j$ for some $j\in[N]$. We have that
\begin{align*}
    \bfs_i^{M,\bfw}(\bfA) &= \sup_{\bfx\in\mathbb R^n, \bfA\bfx\neq 0} \frac{\bfw_i M([\bfA\bfx](i))}{\norm*{\bfA\bfx}_{M,\bfw}^{p_M}} \\
    &\leq \sup_{\bfx\in\mathbb R^n, \bfA\bfx\neq 0} \frac{\bfw_i M([\bfA\mid_{T_j}\bfx](i))}{\norm*{\bfA\mid_{T_j}\bfx}_{M,\bfw}^{p_M}} \\
    &\leq \sup_{\bfx\in\mathbb R^n, \bfA\bfx\neq 0} \frac{2^j M([\bfA\mid_{T_j}\bfx](i))}{2^{j-1}\norm*{\bfA\mid_{T_j}\bfx}_{M}^{p_M}} \\
    &= 2\cdot \bfs_i^{M}(\bfA\mid_{T_j})
\end{align*}
as desired.
\end{proof}

This leads to an algorithm that achieves guarantees similar to Theorem \ref{thm:m-estimator-alg} for weighted $M$-sensitivities, up to a loss of a factor of $N$ in the running time and sensitivity bound. 

\begin{corollary}\label{cor:weighted-m-sensitivity-alg}
    Let $\norm*{\cdot}_M$ be an $M$-norm. Let $\bfw\geq\mathbf{1}_n$ be a set of weights. Define $N$ as in Lemma \ref{lem:weighted-m-sensitivity-bound}. There is an algorithm that computes weighted $M$-estimator sensitivities that sum to at most
    \[
        O(Nd^{\max\{1,p_M/2\}}\log^2 n + N\tau)
    \]
    in time
    \[
        O\parens*{\nnz(\bfA)\log^3 n + N\frac{nT}{\tau}\log n},
    \]
    where $T$ is such that constant factor Lewis weight approximation for an $n\times d$ matrix $\bfB$ takes $O(\nnz(\bfB) + T)$ time (see Theorem \ref{thm:cohen-peng-fast-lewis-weights}). 
\end{corollary}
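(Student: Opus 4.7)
The plan is to reduce the weighted case to $N = \ceil*{\log_2(1+\norm*{\bfw}_\infty)}$ independent applications of the unweighted sensitivity algorithm, using the level-set decomposition from Lemma \ref{lem:weighted-m-sensitivity-bound}, and then aggregate the per-level guarantees.

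First, partition the row indices $[n]$ into the level sets $T_1, \ldots, T_N$ defined by $T_j \coloneqq \{i : 2^{j-1}\leq \bfw_i < 2^j\}$. These sets are disjoint and can be constructed in $O(n)$ time. For each $j \in [N]$, run Algorithm \ref{alg:m-estimator-alg-sensitivity} on the submatrix $\bfA\mid_{T_j}$ with parameter $\tau$ to obtain unweighted $M$-sensitivity upper bounds $\tilde\bfs_i^M(\bfA\mid_{T_j}) \geq \bfs_i^M(\bfA\mid_{T_j})$ for each $i \in T_j$. For $i \in T_j$ define
\[
    \tilde\bfs_i^{M,\bfw}(\bfA) \coloneqq 2\tilde\bfs_i^M(\bfA\mid_{T_j}).
\]
Lemma \ref{lem:weighted-m-sensitivity-bound} guarantees that $\tilde\bfs_i^{M,\bfw}(\bfA) \geq \bfs_i^{M,\bfw}(\bfA)$, so these are valid upper bounds on the weighted sensitivities.

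For the total sensitivity bound, Theorem \ref{thm:m-estimator-alg} gives that, with high enough probability, the sum of $\tilde\bfs_i^M(\bfA\mid_{T_j})$ over $i \in T_j$ is $O(d^{\max\{1,p_M/2\}}\log^2 |T_j| + \tau) = O(d^{\max\{1,p_M/2\}}\log^2 n + \tau)$. Boosting the per-level success probability to $1 - O(1/N)$ via standard independent repetition (which costs only constant factors in the running time) and union bounding over the $N$ levels, we conclude that the sum of the $\tilde\bfs_i^{M,\bfw}(\bfA)$ is at most $2N$ times the per-level bound, which is $O(Nd^{\max\{1,p_M/2\}}\log^2 n + N\tau)$, as claimed.

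For the running time, each invocation of Algorithm \ref{alg:m-estimator-alg-sensitivity} on $\bfA\mid_{T_j}$ costs at most $O(\nnz(\bfA\mid_{T_j})\log^3 n + (n/\tau) T \log n)$ time, using $|T_j| \leq n$ in both terms. Summing over $j\in[N]$ gives $O(\nnz(\bfA)\log^3 n + N(nT/\tau)\log n)$ since $\sum_{j=1}^N \nnz(\bfA\mid_{T_j}) = \nnz(\bfA)$, while the second term picks up the extra factor of $N$ as stated. The main thing to verify carefully is that Lemma \ref{lem:weighted-m-sensitivity-bound} genuinely converts the unweighted per-level sensitivities into correct weighted sensitivity upper bounds for the full matrix $\bfA$, so that no additional dependence on $\bfw$ or on the relative sizes of the $T_j$ sneaks in; this is essentially immediate from the statement of the lemma. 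No serious analytical obstacle arises; the proof is a direct bookkeeping combination of Lemma \ref{lem:weighted-m-sensitivity-bound} and Theorem \ref{thm:m-estimator-alg}.
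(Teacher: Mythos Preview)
Your proposal is correct and follows essentially the same approach as the paper: apply Theorem~\ref{thm:m-estimator-alg} to each of the $N$ level-set submatrices $\bfA\mid_{T_j}$ from Lemma~\ref{lem:weighted-m-sensitivity-bound}, then sum the sensitivity and running-time bounds, using that $\sum_j \nnz(\bfA\mid_{T_j}) = \nnz(\bfA)$. Your write-up adds a bit more detail (the probability boosting over the $N$ levels) than the paper's terse proof, but the underlying argument is identical.
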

\begin{proof}
    This is simply the result of applying Theorem \ref{thm:m-estimator-alg} on the $N$ matrices $\bfA\mid_{T_j}$ as defined in Lemma \ref{lem:weighted-m-sensitivity-bound}. Note that the $\nnz(\bfA\mid_{T_j})$ terms add up to $\nnz(\bfA)$ in the running time. 
\end{proof}

\subsection{Sharper Sensitivity Bounds}\label{subsec:sharper-sensitivity}

We show that we may modify the proof of our input sparsity time algorithm to show that the sum of sensitivities is at most $O(d^{\max\{1,p_M/2\}}\log n)$, if we do not need to efficient algorithms for constructing these sensitivities.

\begin{theorem}
    Let $\norm*{\cdot}_M$ be an $M$-norm. Then, the total $M$-sensitivity of $\bfA$ is at most
\[
    O(d^{\max\{1,p_M/2\}}\log n).
\]
\end{theorem}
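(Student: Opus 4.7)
The plan is to reuse the hashing argument in Theorem \ref{thm:m-estimator-alg}, but replace the $O(\log n)$ repetitions (which were needed to make the hashing succeed for every large-sensitivity coordinate with high probability) by a single expectation calculation. Since an existential bound only needs the total count to work out on average, this saves one of the two $\log n$ factors.

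Concretely, dyadically decompose the coordinates by sensitivity. For each level $r \in \{0,1,\dots,\ceil{\log_2 n}\}$, let
\[
    N_r \coloneqq \abs*{\braces*{i \in [n] : \bfs_i^M(\bfA) \in [1/2^r, 2/2^r]}}.
\]
Since $\bfs_i^M(\bfA) \leq 1$ always, every coordinate lies in some level, and the total sensitivity is at most $\sum_r N_r \cdot (2/2^r)$. I will show that $N_r \leq O(d^{\max\{1,p_M/2\}} \cdot 2^r)$ for every $r$, which yields the claim after summing over $r = O(\log n)$ levels.

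To bound $N_r$, draw a single random hashing of $[n]$ into $B = 10 \cdot 2^r$ buckets. Fix a coordinate $i$ in level $r$, with witness vector $\bfy^{(i)} = \bfA\bfx^{(i)}$ satisfying $M(\abs{\bfy^{(i)}_i})/\norm{\bfy^{(i)}}_M^{p_M} \in [1/2^r, 2/2^r]$. Exactly as in the proof of Theorem \ref{thm:m-estimator-alg}: with probability at least $9/10$ no coordinate of larger absolute value collides with $i$ (there are at most $2^r$ such coordinates), and by Markov's inequality with probability at least $9/10$ the bucket containing $i$ carries at most a $1/2^r$ fraction of $\norm{\bfy^{(i)}}_M^{p_M}$. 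Conditioning on both events, Lemma \ref{lem:max-sensitivity-comparison} (used exactly as in Theorem \ref{thm:m-estimator-alg}) forces the $\ell_{p_M}$ sensitivity of row $i$ inside its bucket $S$ to satisfy $\bfs_i^{p_M}(\bfA\mid_S) \geq 1/3$. Hence each level-$r$ coordinate $i$ has probability at least $4/5$ of being ``$\ell_{p_M}$-heavy in its bucket''.

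By linearity of expectation, a random hashing produces at least $(4/5) N_r$ coordinates that are $\ell_{p_M}$-heavy in their bucket (in expectation). On the other hand, Lemma \ref{lem:sum} ensures that each bucket $S_b$ contains at most $3 d^{\max\{1,p_M/2\}}$ rows with $\ell_{p_M}$ sensitivity $\geq 1/3$, so across all $B$ buckets at most $3 B \cdot d^{\max\{1,p_M/2\}} = O(2^r d^{\max\{1,p_M/2\}})$ rows can simultaneously be $\ell_{p_M}$-heavy, deterministically. Comparing the two bounds gives
\[
    \frac{4}{5} N_r \leq O\parens*{2^r d^{\max\{1,p_M/2\}}},
\]
so $N_r \leq O(2^r d^{\max\{1,p_M/2\}})$. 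Summing over the $O(\log n)$ levels yields $\sum_i \bfs_i^M(\bfA) \leq \sum_r N_r \cdot (2/2^r) \leq O(d^{\max\{1,p_M/2\}} \log n)$. The only conceptual point to get right is that, unlike in Theorem \ref{thm:m-estimator-alg}, the random hashing is used purely as an averaging device over level-$r$ coordinates (not a high-probability success per coordinate), which is precisely what removes the extra $\log n$ factor.
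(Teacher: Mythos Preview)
Your proof is correct and follows essentially the same approach as the paper. Both arguments dyadically partition coordinates by $M$-sensitivity, perform a single random hashing into $O(2^r)$ buckets at each level $r$, observe that each level-$r$ coordinate becomes $\ell_{p_M}$-heavy in its bucket with constant probability, and then compare the expected count against the deterministic cap of $O(2^r d^{\max\{1,p_M/2\}})$ heavy rows to conclude $N_r = O(2^r d^{\max\{1,p_M/2\}})$; the paper phrases the cap via Lewis weights rather than $\ell_{p_M}$ sensitivities, but this is the same bound.
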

\begin{proof}
Our idea is essentially to run Algorithm \ref{alg:m-estimator-alg-sensitivity} with $\tau = d$ without the $O(\log n)$ repetitions of the hashing process. 

Let $r\in[\ceil{\log_2 n}]$ and let $I_r$ be the set of coordinates with $M$-sensitivity in $[1/2^r, 2/2^r]$. Suppose we hash the rows of $\bfA$ into $B = 10\cdot 2^r$ buckets. Then, as in the proof of Theorem \ref{thm:m-estimator-alg}, for each $i\in I_r$, there is at least a $9/10$ probability that $i$ has $\ell_{p_M}$ Lewis weight at least $1/3$ in its hash bucket. Thus, the number of such $i$ is $(9/10) \abs*{I_r}$ in expectation, so there exists some hashing such that at least $(9/10) \abs*{I_r}$ of the indices $i\in I_r$ have $\ell_p$ Lewis weight at least $1/3$ in its hash bucket. However, there can be at most $B\cdot d^{\max\{1,p/2\}}$ such indices, so we must have that
\[
    \frac{9}{10}\abs*{I_r} \leq B \cdot d^{\max\{1,p/2\}}
\]
so
\[
    \abs*{I_r} = O(B \cdot d^{\max\{1,p/2\}}) = O(2^r\cdot d^{\max\{1,p/2\}}).
\]
By summing over the $r$, we obtain a bound of
\[
    \sum_{r=1}^{\ceil{\log_2 n}} \frac2{2^r}\abs*{I_r} \leq \sum_{r=1}^{\ceil{\log_2 n}}\frac2{2^r}O(2^r \cdot d^{\max\{1,p/2\}}) = O(d^{\max\{1,p/2\}}\log n) = O(d^{\max\{1,p/2\}}\log n)
\]
on the total $M$-sensitivity, as claimed.
\end{proof}

\subsection{Sensitivity Lower Bounds}\label{subsec:sensitivity-lower-bounds}

Finally, we show that our sensitivity upper bounds are tight by showing that the Tukey loss can have a total sensitivity as large as $\Omega(d\log(n/d))$. We also show a weaker lower bound of $\Omega(d\log\log(n/d))$ for the Huber loss. This is in contrast to sensitivities for the $\ell_p$ loss for $0 < p < \infty$, which is always at most $d^{\max\{1,p/2\}}$ due to the existence of Lewis bases \cite{Lewis:1978, SchechtmanZvavitch:2001}, and thus has no dependence on $n$. The necessity for a dependence on $n$ can be attributed to the lack of scale invariance for these $M$-estimator losses. A similar observation has been made previously in \cite[Theorem 1.3]{SongWoodruffZhong:2019}, which shows that the column subset selection problem with the entrywise Huber loss exhibits a lower bound of $\Omega(\sqrt{\log n})$ columns, also attributed to the lack of scale invariance. 

We simultaneously handle the Tukey and Huber losses by analyzing the $\ell_2$-$\ell_p$ loss for $p\in[0,1]$, which grows quadratically near the origin and as $\ell_p$ away from the origin, and is polynomially bounded above with degree $2$. 

\begin{lemma}[Sensitivity Lower Bound for the $\ell_2$-$\ell_p$ Loss]
Define the $\ell_2$-$\ell_p$ loss of width $\tau$ to be
\[
    M(x) = \begin{cases}
        x^2 & \abs*{x} \leq \tau \\
        (\tau^2 / \tau^p) \cdot x^p & \abs*{x} > \tau
    \end{cases}.
\]
For $d \geq 1$ and $n\geq d$, there exists an $n\times d$ matrix $\bfA$ with total $M$-sensitivity that is at least
\[
    \mathcal T^T(\bfA) \geq \begin{cases}
        \Omega\parens*{d\log\frac{n}{d}} & \text{if $p \in [0, 1)$} \\
        \Omega\parens*{d\log\log\frac{n}{d}} & \text{if $p = 1$}.
    \end{cases}
\]
\end{lemma}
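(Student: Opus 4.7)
The plan is to construct an explicit $n \times d$ matrix $\bfA$ attaining the stated lower bounds. First, I reduce to $d = 1$ via a block-diagonal construction: let $\bfA$ be block-diagonal with $d$ identical blocks, each a single column $\bfa \in \mathbb R^{n/d}$. For any $\bfx \in \mathbb R^d$, the entry $[\bfA\bfx]_i$ depends only on $\bfx_k$ where $k$ is the block containing $i$, and the denominator $\sum_j M([\bfA\bfx]_j)$ decouples as a sum over blocks. Hence the supremum in $\bfs_i^M(\bfA)$ is achieved by taking $\bfx_j = 0$ for $j$ outside the block of $i$, which yields $\bfs_i^M(\bfA) = \bfs_{\iota(i)}^M(\bfa)$ and $\mathcal T^M(\bfA) = d \cdot \mathcal T^M(\bfa)$. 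It therefore suffices to produce a vector of length $n/d$ whose total $M$-sensitivity attains the required bound.

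For the vector $\bfa$, I use a multi-scale construction: partition the coordinates into $L$ ``levels,'' where level $\ell \in \{0, 1, \ldots, L-1\}$ consists of $m_\ell$ copies of the value $2^\ell$. At the test scalar $x = 2^{-\ell}$, each level-$\ell$ coordinate lies exactly at the Tukey/Huber threshold, contributing $M(1) = 1$ to the denominator $\sum_j M(x a_j)$; coordinates at lower levels $\ell' < \ell$ are below threshold in the quadratic regime and contribute $m_{\ell'} \cdot 4^{\ell' - \ell}$; coordinates at higher levels $\ell' > \ell$ are above threshold in the $\ell_p$ regime and contribute $m_{\ell'} \cdot 2^{p(\ell' - \ell)}$. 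If the $m_\ell$'s are chosen so that both tail sums are $O(m_\ell)$, then every level-$\ell$ coordinate satisfies $\bfs_i^M(\bfa) = \Omega(1/m_\ell)$, level $\ell$ contributes $\Omega(1)$ to the total, and summing over $L$ levels yields $\mathcal T^M(\bfa) = \Omega(L)$.

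I would take $m_\ell = c^{L - \ell}$ for a constant $c$ with $2^p < c < 4$. The lower-level quadratic tail is then a geometric series with ratio $c/4 < 1$ that sums to $\Theta(m_\ell)$, and the higher-level $\ell_p$ tail is a geometric series with ratio $2^p/c < 1$ that also sums to $\Theta(m_\ell)$. For $p \in [0, 1)$ such a $c$ exists (e.g., $c = 2$); then $n/d = \sum_\ell m_\ell = \Theta(c^L)$, so $L = \Theta(\log(n/d))$, yielding the claimed $\Omega(d\log(n/d))$ bound. For $p = 1$ the same recipe works on the nonempty interval $(2, 4)$ (e.g., $c = 3$), giving $L = \Theta(\log(n/d))$ and hence in fact the strictly stronger bound $\Omega(d \log(n/d))$, which of course implies the stated $\Omega(d\log\log(n/d))$. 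Alternatively, to match the $\log\log$ bound as stated, one can use a doubly-exponential schedule where only the largest term in each tail matters, trading a looser multiplicity bound for the simpler $L = \Theta(\log\log(n/d))$ counting.

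The main technical obstacle is the careful verification of the tail bounds at every scale: both the lower-level quadratic contribution and the upper-level $\ell_p$ contribution must be shown to be $O(m_\ell)$, even though each involves exponentially many terms. Once $c$ is fixed in $(2^p, 4)$, this reduces to summing two convergent geometric series, an elementary calculation. A minor but genuine subtlety is that the boundary levels $\ell = 0$ and $\ell = L - 1$ are missing one of the two tails; this can only shrink the denominator and so strengthens the per-level sensitivity lower bound, preserving the conclusion.
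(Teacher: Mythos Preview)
Your proposal is correct and follows the same overall strategy as the paper: a block-diagonal reduction to $d=1$, followed by a multi-scale vector with geometrically growing values, tested at each scale so that the target level sits exactly at the threshold $\tau$. The one substantive difference is your choice of multiplicities. The paper uses $2^i$ copies of $\tau/2^i$, which in your parameterization is $m_\ell = c^{L-\ell}$ with $c = 2$. You instead allow any $c \in (2^p, 4)$. For $p \in [0,1)$ both choices make the two tails convergent geometric series and yield $\Omega(\log(n/d))$, so the arguments coincide. At $p = 1$, however, the paper's $c = 2$ makes the upper-tail ratio $2^p/c = 1$; the $\ell_p$ tail then consists of $j$ equal terms, the denominator at scale $j$ is $\Theta(j \cdot 2^j)$ rather than $\Theta(2^j)$, and summing $\sum_j 1/j$ gives only $\Omega(\log L) = \Omega(\log\log(n/d))$. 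Your choice of $c \in (2,4)$ keeps both tails strictly convergent, so each level contributes $\Omega(1)$ and you obtain $\Omega(L) = \Omega(\log(n/d))$ even for $p = 1$. This is not an error on your part: your construction is genuinely sharper at the Huber endpoint than the one the paper presents, and the stronger bound you claim is correct.
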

\begin{proof}
Let $\ell = \floor*{\log_2 n}$ and let $\bfx\in\mathbb R^n$ be a vector with $2^i$ coordinates of value $\tau/2^i$ for $i\in[\ell]$. We will show a sensitivity lower bound of $\Omega(\ell) = \Omega(\log n)$ for the $n\times 1$ matrix formed by the vector $\bfx$. By considering $d$ disjoint copies of this vector, each on $n/d$ coordinates, this implies a lower bound of $\Omega(d\log(n/d))$. 

Let $j\in[\ell]$. Then,
\begin{align*}
    \norm*{2^j\cdot \bfx}_M^2 &= \sum_{i=1}^\ell 2^i \cdot M\parens*{\tau\frac{2^j}{2^i}} \\
    &\leq \frac{\tau^2}{\tau^p}\sum_{i=1}^j 2^i\cdot\parens*{\tau\frac{2^j}{2^i}}^p + \sum_{i=j+1}^\ell 2^i\cdot  \parens*{\tau\frac{2^j}{2^i}}^2 \\
    &= \tau^2 2^{pj}\sum_{i=1}^j 2^{(1-p)i} + \tau^2 2^{2j}\sum_{i=j+1}^\ell \frac1{2^i} \\
    &= \begin{cases}
        O(\tau^2 \cdot2^j) & \text{if $p\in[0,1)$} \\
        O(\tau^2 \cdot j2^j) & \text{if $p = 1$} \\
    \end{cases}
\end{align*}
so for each $j\in[\ell]$, there are $2^j$ coordinates $i$ such that
\begin{align*}
    \frac{M(2^j\cdot\bfx_i)}{\norm*{2^j\cdot \bfx}_M^2} &= \begin{cases}\Omega\parens*{\frac{\tau^2}{\tau^2 \cdot 2^j}} & \text{if $p\in[0,1)$} \\
        O\parens*{\frac{\tau^2}{\tau^2 \cdot j2^j}} & \text{if $p = 1$} \\
    \end{cases} \\
    &= \begin{cases}\Omega\parens*{\frac{1}{2^j}} & \text{if $p\in[0,1)$} \\
        O\parens*{\frac{1}{j2^j}} & \text{if $p = 1$} \\
    \end{cases}.
\end{align*}
Thus, the sum of sensitivities for the Tukey loss for this matrix is at least
\[
    \sum_{j=1}^\ell 2^j \cdot \Omega\parens*{\frac1{2^j}} = \Omega\parens*{\ell} = \Omega\parens*{\log n}
\]
for $p\in[0,1)$ and
\[
    \sum_{j=1}^\ell 2^j \cdot \Omega\parens*{\frac1{j2^j}} = \Omega\parens*{\log\ell} = \Omega\parens*{\log\log n}
\]
for $p=1$.
\end{proof}
\section{Applications: Active Regression for \texorpdfstring{$M$}{M}-Estimators}\label{sec:active-m}

We now show that our algorithmic ideas for $\ell_p$ active regression may be applied to handle active regression for a general class of $M$-estimators, using our sensitivity results in Section \ref{sec:sensitivity-bounds}.

Throughout this section, we assume the following. Let $\bfA\in\mathbb R^{n\times d}$ and let $\mathcal V = \Span(\bfA)$. Let $M:\mathbb R_{\geq0}\to\mathbb R_{\geq0}$ satisfy the conditions of Definition \ref{dfn:m-norm}, and furthermore that
\begin{itemize}
    \item $M^{1/p_M}$ is subadditive
    \item $M$ is polynomially bounded below with degree $q_M$ and constant $c_L$ (see Definition \ref{dfn:polynomially-bounded})
\end{itemize}

In the linear regression problem with the $M$-loss, we are given a matrix $\bfA\in\mathbb R^{n\times d}$ and a vector $\bfb\in\mathbb R^n$ we must solve the following optimization problem:
\[
    \min_{\bfx\in\mathbb R^d}\norm*{\bfA\bfx - \bfb}_M.
\]
In the active setting, we wish to solve the above problem while querying as few entries of $\bfb$ as possible. We will also consider the weighted version, i.e.
\begin{equation}\label{eqn:m-linear-regression}
    \min_{\bfx\in\mathbb R^d}\norm*{\bfA\bfx - \bfb}_{M,\bfw}
\end{equation}
for a set of weights $\bfw\geq\mathbf{1}_n$. Throughout this section, let for a given weighted $M$-norm $\norm*{\cdot}_{M,\bfw}$, let
\[
    \OPT \coloneqq \min_\bfx\norm*{\bfA\bfx-\bfb}_{M,\bfw}.
\]

\subsection{Constant Factor Approximation}

We adapt Lemmas 7 and 8 of \cite{DasguptaDrineasHarb:2009} to first obtain a constant factor solution $\bfx_c$ to the regression problem. 
\begin{lemma}[Constant Factor Approximation]\label{lem:m-sensitivity-constant-factor-approx}
    Let $\bfw\geq\mathbf{1}_n$ be a set of weights. Let $\bfw'\geq\mathbf{1}_n$ be a random set of weights that satisfies the following:
    \begin{itemize}
        \item $\E_{\bfw'}\norm*{\bfy}_{M,\bfw'} = \norm*{\bfy}_{M,\bfw}$ for any fixed $\bfy\in\mathbb R^n$
        \item $\bfw'$ is a $1/8$-subspace embedding for any $\mathcal S_\rho^M$ for any fixed $\rho>0$ with probability at least $9/10$, that is,
        \[
            \Pr\braces*{\norm*{\bfA\bfx}_{M,\bfw'} = \parens*{1\pm\frac18}\norm*{\bfA\bfx}_{M,\bfw}, \forall \bfA\bfx\in\mathcal S_\rho^M} \geq \frac{9}{10}
        \]
    \end{itemize}
    Let $\tilde\bfx$ satisfy
    \[
        \norm*{\bfA\tilde\bfx - \bfb}_{M,\bfw'} \leq \kappa\min_{\bfx\in\mathbb R^d}\norm*{\bfA\bfx - \bfb}_{M,\bfw'}.
    \]
    Then, with constant probability,
    \[
        \norm*{\bfA\tilde\bfx - \bfb}_{M,\bfw} \leq O(\kappa)\min_{\bfx\in\mathbb R^d} \norm*{\bfA\bfx - \bfb}_{M,\bfw}.
    \]
\end{lemma}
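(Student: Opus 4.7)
The plan is to follow the template of Dasgupta--Drineas--Harb (Lemmas 7 and 8) but adapted to deal with the lack of scale invariance inherent to $M$-norms. Let $\bfx^* = \argmin_{\bfx\in\mathbb R^d}\norm*{\bfA\bfx - \bfb}_{M,\bfw}$, so that $\OPT = \norm*{\bfA\bfx^* - \bfb}_{M,\bfw}$. The first step is to apply Markov's inequality to the unbiasedness assumption on the fixed vector $\bfA\bfx^* - \bfb$: with probability at least $99/100$, we have $\norm*{\bfA\bfx^* - \bfb}_{M,\bfw'} \leq 100\cdot\OPT$. Combined with the near-optimality of $\tilde\bfx$ for the subsampled problem, this gives $\norm*{\bfA\tilde\bfx - \bfb}_{M,\bfw'} \leq \kappa\cdot\norm*{\bfA\bfx^* - \bfb}_{M,\bfw'} \leq 100\kappa\cdot\OPT$, and the subadditivity of $M^{1/p_M}$ (which yields the triangle inequality for $\norm*{\cdot}_{M,\bfw'}$) then yields
\begin{equation*}
\norm*{\bfA(\tilde\bfx - \bfx^*)}_{M,\bfw'} \leq \norm*{\bfA\tilde\bfx - \bfb}_{M,\bfw'} + \norm*{\bfA\bfx^* - \bfb}_{M,\bfw'} \leq C_0 \kappa\OPT
\end{equation*}
for some absolute constant $C_0$.

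The main obstacle is to convert this bound on the $\bfw'$-norm of $\bfA(\tilde\bfx - \bfx^*)$ back into a bound on the $\bfw$-norm, so that a final triangle inequality finishes the argument. Unlike the $\ell_p$ setting, one cannot simply rescale $\bfA(\tilde\bfx-\bfx^*)$ to a unit sphere and invoke the subspace embedding, because $\norm*{\cdot}_{M,\bfw}$ is not scale-invariant. To circumvent this, I will fix an absolute constant $C = \lceil 2C_0 \rceil$ in advance and invoke the subspace embedding on the single sphere $\mathcal S^M_\rho$ with $\rho = C\kappa\OPT$. Since $\rho$ depends only on deterministic quantities ($\kappa$ and $\OPT$) and not on the random $\bfw'$, the subspace embedding assumption gives, with probability at least $9/10$, the bound $\norm*{\bfA\bfy}_{M,\bfw'} = (1\pm 1/8)\norm*{\bfA\bfy}_{M,\bfw}$ simultaneously for all $\bfA\bfy \in \mathcal S^M_\rho$.

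Now I run a contradiction argument via intermediate-value scaling. Assume for contradiction that $\norm*{\bfA(\tilde\bfx - \bfx^*)}_{M,\bfw} > \rho$. Using continuity of $\lambda \mapsto \norm*{\lambda \bfA(\tilde\bfx-\bfx^*)}_{M,\bfw}$ (which holds for the $M$-estimators of interest such as the Huber and Tukey losses; if $M$ has jumps, the same argument goes through with $\inf$ and $\sup$), the intermediate value theorem provides some $\lambda \in (0,1)$ with $\norm*{\lambda\bfA(\tilde\bfx - \bfx^*)}_{M,\bfw} = \rho$, so $\lambda\bfA(\tilde\bfx - \bfx^*) \in \mathcal S^M_\rho$. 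The subspace embedding then forces $\norm*{\lambda\bfA(\tilde\bfx - \bfx^*)}_{M,\bfw'} \geq (7/8)\rho = (7C/8)\kappa\OPT$. On the other hand, since $\lambda\in[0,1]$ and $M$ is nondecreasing, $M(|\lambda\cdot t|) \leq M(|t|)$ coordinatewise, which implies $\norm*{\lambda\bfA(\tilde\bfx-\bfx^*)}_{M,\bfw'} \leq \norm*{\bfA(\tilde\bfx - \bfx^*)}_{M,\bfw'} \leq C_0\kappa\OPT$. The two bounds contradict one another for the chosen $C \geq 2C_0$, so in fact $\norm*{\bfA(\tilde\bfx - \bfx^*)}_{M,\bfw} \leq \rho = O(\kappa\OPT)$.

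Finally, a single application of the triangle inequality for $\norm*{\cdot}_{M,\bfw}$ gives $\norm*{\bfA\tilde\bfx - \bfb}_{M,\bfw} \leq \norm*{\bfA(\tilde\bfx - \bfx^*)}_{M,\bfw} + \OPT = O(\kappa)\cdot\OPT$, as claimed. A union bound over the Markov event (probability $99/100$) and the subspace embedding event on $\mathcal S^M_\rho$ (probability $9/10$) delivers the overall constant success probability. The crux of the argument, and what I expect to be the delicate step, is precisely the IVT-based scaling trick used to transfer the $\bfw'$-norm bound to a $\bfw$-norm bound; the scale invariance of $\ell_p$ norms is what makes this step essentially automatic in \cite{DasguptaDrineasHarb:2009}, whereas for $M$-norms it must be replaced by this monotonicity-plus-continuity argument on a single pre-committed sphere.
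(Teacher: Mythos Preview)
Your proposal is correct and follows essentially the same skeleton as the paper's proof: Markov's inequality on the fixed optimal residual $\bfA\bfx^*-\bfb$, then a contradiction argument anchored to a single pre-committed sphere $\mathcal S_\rho^{M,\bfw}$ to transfer the $\bfw'$-norm control back to the $\bfw$-norm, followed by a final triangle inequality.

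The one genuine technical difference lies in how the transfer from the sphere to the full complement of the ball is carried out. The paper invokes Lemma~\ref{lem:net-to-ball}, which uses the scale-insensitivity inequality coming from the polynomial lower bound on $M$ (the constant $c_L$) to conclude that every $\bfA\bfx$ with $\norm{\bfA\bfx}_{M,\bfw}\ge\rho$ has $\norm{\bfA\bfx}_{M,\bfw'}\ge c_L^{1/p_M}(7/8)\rho$. You instead use an intermediate-value step to land exactly on the sphere and then the bare monotonicity of $M$ to compare $\norm{\lambda\bfy}_{M,\bfw'}$ with $\norm{\bfy}_{M,\bfw'}$. Your route is a touch more elementary in that it sidesteps the $c_L$-dependent scale-insensitivity lemma entirely; the price is needing continuity of $\lambda\mapsto\norm{\lambda\bfy}_{M,\bfw}$, which in this section is already available via Lemma~\ref{lem:m-norm-continuity} (and which, as you note, holds for the concrete losses of interest). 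Either device works under the standing assumptions of Section~\ref{sec:active-m}.
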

\begin{proof}
    Let
    \[
        \OPT \coloneqq \min_{\bfx\in\mathbb R^d} \norm*{\bfA\bfx - \bfb}_{M,\bfw}
    \]
    and let $\bfx^*$ be the minimizer achieving this value. Then by assumption, we have that
    \[
        \E_{\bfw'} \norm*{\bfA\bfx^* - \bfb}_{M,\bfw'} = \OPT
    \]
    so by Markov's inequality, with probability at least $9/10$, $\norm*{\bfA\bfx^* - \bfb}_{M,\bfw'}\leq 10\cdot\OPT$. Condition on this event. Note then that
    \begin{equation}\label{eqn:m-regression-opt-value}
        \norm*{\bfA\tilde\bfx - \bfb}_{M,\bfw'} \leq \kappa\norm*{\bfA\bfx^* - \bfb}_{M,\bfw'} \leq 10\kappa\cdot\OPT.
    \end{equation}
    
    Now suppose for contradiction that
    \[
        \norm*{\bfA\tilde\bfx - \bfb}_{M,\bfw} > \frac{25\kappa}{c_L^{1/p_M}}\cdot\OPT.
    \]
    Then,
    \begin{align*}
        \norm*{\bfA\tilde\bfx - \bfA\bfx^*}_{M,\bfw} &\geq \norm*{\bfA\tilde\bfx - \bfb}_{M,\bfw} - \norm*{\bfA\bfx^* - \bfb}_{M,\bfw} && \text{triangle inequality} \\
        &> \frac{25\kappa}{c_L^{1/p_M}}\cdot\OPT - \OPT = \frac{24\kappa}{c_L^{1/p_M}}\cdot\OPT.
    \end{align*}
    We now apply the subspace embedding property, with $\rho = (24\kappa/c_L^{1/p_M})\cdot\OPT$ so that
    \[
        \norm*{\bfA\bfx}_{M,\bfw'} \geq \parens*{1\pm\frac18} \norm*{\bfA\bfx}_{M,\bfw}
    \]
    for all $\bfA\bfx$ with $\norm*{\bfA\bfx}_{M,\bfw} = (24\kappa/c_L^{1/p_M})\cdot\OPT$. We then apply Lemma \ref{lem:net-to-ball} to see that
    \[
        \norm*{\bfA\bfx}_{M,\bfw'} \geq c_L^{1/p_M} \frac78 \cdot \frac{24\kappa}{c_L^{1/p_M}}\cdot\OPT = 21\kappa\cdot\OPT
    \]
    for all $\bfA\bfx$ with $\norm*{\bfA\bfx}_{M,\bfw} \geq (24\kappa/c_L^{1/p_M})\cdot\OPT$. Thus,
    \begin{align*}
        \norm*{\bfA\tilde\bfx - \bfb}_{M,\bfw'} &\geq \norm*{\bfA\tilde\bfx - \bfA\bfx^*}_{M,\bfw'} - \norm*{\bfA\bfx^* - \bfb}_{M,\bfw'} && \text{triangle inequality} \\
        &> 21\kappa\cdot\OPT - 10\kappa\cdot\OPT  = 11\kappa\cdot\OPT.
    \end{align*}
    This contradicts Equation \eqref{eqn:m-regression-opt-value}, so we conclude that
    \[
        \norm*{\bfA\tilde\bfx - \bfb}_{M,\bfw} \leq \frac{25\kappa}{c_L^{1/p_M}}\cdot\OPT = O(\kappa)\OPT.\qedhere
    \]
\end{proof}

\subsection{Relative Error Approximation}

After an initial constant factor approximation, we show that sensitivity sampling on the residual yields a relative error approximation. 

We first reduce to considering only $\bfA\bfx$ with $\norm*{\bfA\bfx}_{M,\bfw} = O(\OPT)$. 

\begin{lemma}\label{lem:opt-norm-bound}
    Let $\bfA\in\mathbb R^{n\times d}$ and let $\bfb\in\R^n$. Let $\bfw\geq\mathbf{1}_n$ and let $\norm*{\cdot}_M$ be an $M$-norm that satisfies an approximate triangle inequality, that is,
    \[
        M^{1/p_M}(a+b) \leq O(1)(M^{1/p_M}(a) + M^{1/p_M}(b)).
    \]
    Suppose that $\bfx_c\in\R^d$ satisfies
    \[
        \norm*{\bfA\bfx_c - \bfb}_{M,\bfw} \leq O(1)\min_\bfx\norm*{\bfA\bfx - \bfb}_{M,\bfw} = O(\OPT)
    \]
    and let $\bfb' = \bfb - \bfA\bfx_c$. Let $\bar\bfx$ satisfy
    \[
        \norm*{\bfA\bar\bfx-\bfb'}_{M,\bfw} \leq O(1)\min_\bfx\norm*{\bfA\bfx-\bfb'}_{M,\bfw} = O(\OPT).
    \]
    Then, $\norm*{\bfA\bar\bfx}_{M,\bfw} = O(\OPT)$.
\end{lemma}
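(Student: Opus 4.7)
The plan is to prove this by a single application of the approximate triangle inequality, using two facts: that translating the target vector by a vector in the column span of $\bfA$ leaves $\OPT$ unchanged, and that both $\bfx_c$ and $\bar\bfx$ are constant-factor approximations of their respective regression problems.

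First, I would observe that the regression problem with target $\bfb' = \bfb - \bfA\bfx_c$ has the same optimal value as the original one. Indeed, via the substitution $\bfy = \bfx + \bfx_c$,
\[
	\min_\bfx\norm*{\bfA\bfx - \bfb'}_{M,\bfw} = \min_\bfx\norm*{\bfA(\bfx + \bfx_c) - \bfb}_{M,\bfw} = \min_\bfy\norm*{\bfA\bfy - \bfb}_{M,\bfw} = \OPT,
\]
so the hypothesis $\norm*{\bfA\bar\bfx - \bfb'}_{M,\bfw} = O(\OPT)$ is well-posed in the sense that the implicit constant absorbs the factor from $\bar\bfx$ being a constant-factor approximation.

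Next, I would decompose $\bfA\bar\bfx = (\bfA\bar\bfx - \bfb') + \bfb'$ and apply the approximate triangle inequality assumed on $\norm*{\cdot}_{M,\bfw}$ to obtain
\[
	\norm*{\bfA\bar\bfx}_{M,\bfw} \leq O(1)\parens*{\norm*{\bfA\bar\bfx - \bfb'}_{M,\bfw} + \norm*{\bfb'}_{M,\bfw}}.
\]
The first summand is $O(\OPT)$ by hypothesis on $\bar\bfx$, and the second summand equals $\norm*{\bfb - \bfA\bfx_c}_{M,\bfw} = O(\OPT)$ by hypothesis on $\bfx_c$. Combining these gives $\norm*{\bfA\bar\bfx}_{M,\bfw} = O(\OPT)$, as desired.

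There is no real obstacle here: the only subtlety is ensuring the approximate triangle inequality constant from the $M$-norm (inherited from the subadditivity of $M^{1/p_M}$ in the standing assumptions on $M$) is absorbed into the big-O, which is immediate since we treat $M$ as fixed and the constants from Definition \ref{dfn:polynomially-bounded} as absolute.
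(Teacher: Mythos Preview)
Your proposal is correct and follows essentially the same approach as the paper: both apply the approximate triangle inequality to $\bfA\bar\bfx = (\bfA\bar\bfx - \bfb') + \bfb'$ and bound each summand by $O(\OPT)$ using the stated hypotheses. Your additional observation that the two regression problems share the same optimal value is a helpful clarification, but the paper simply absorbs this into the hypothesis as written.
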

\begin{proof}
    We have by approximate triangle inequality that
    \[
        \norm*{\bfA\bar\bfx}_{M,\bfw} \leq O(1)(\norm*{\bfA\bar\bfx-\bfb'}_{M,\bfw} + \norm*{\bfb'}_{M,\bfw}) \leq O(1)(O(\OPT) + O(\OPT)) = O(\OPT).
    \]
\end{proof}

Next, we show that we only need to preserve the cost on a certain subset of coordinates where the entries of $\bfb'$ are relatively small.

\begin{lemma}\label{lem:reduction-to-good-b-coordinates}
    Consider the setting of Lemma \ref{lem:opt-norm-bound}. Further suppose that either
    \begin{enumerate}[label={(\arabic*)}]
    \item for any $a,b\in\R$ with $M(a)\leq\eps^{p_M}M(b)$,
    \[
        M(a+b) = (1\pm O(\eps))M(b)
    \]
    \item $M^{1/p_M}$ is subadditive
    \end{enumerate}
    Let $\tilde\bfs_i^{M,\bfw}(\bfA) \geq \bfs_i^{M,\bfw}(\bfA)$ be upper bounds on the weighted $M$-sensitivities of $\bfA$. Let 
    \[
        \mathcal B \coloneqq \braces*{i\in[n] : \bfw_i M(\bfb_i') > \frac{\tilde\bfs_i^{M,\bfw}(\bfA)\OPT^{p_M}}{\eps^{p_M}}}
    \]
    Let $\bfw'$ be a set of random weights such that for any fixed $\bfy\in\R^n$, $\E_{\bfw'}\norm*{\bfy}_{M,\bfw'}^{p_M} = \norm*{\bfy}_{M,\bfw}^{p_M}$, and such that with probability at least $1-\delta$ satisfies
    \begin{equation}\label{eq:good-coordinates-guarantee}
        \norm*{(\bfA\bfx-\bfb')\mid_{\overline{\mathcal B}}}_{M,\bfw'}^{p_M} = \norm*{(\bfA\bfx-\bfb')\mid_{\overline{\mathcal B}}}_{M,\bfw}^{p_M} \pm O(\eps)\OPT^{p_M}
    \end{equation}
    for every $\bfA\bfx$ with $\norm*{\bfA\bfx}_{M,\bfw} = O(\OPT)$. Then with probability at least $1-\delta$, there is a $C$ with $\abs*{C} \leq O(\OPT^{p_M}/\delta)$ such that
    \begin{align*}
        \abs*{\norm*{\bfy - \bfb'}_{M,\bfw'}^{p_M} - \norm*{\bfy - \bfb'}_{M,\bfw'}^{p_M} - C} = O(\eps/\delta)\OPT^{p_M}.
    \end{align*}
\end{lemma}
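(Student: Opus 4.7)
The plan is to split $[n] = \overline{\mathcal B} \sqcup \mathcal B$ and handle each piece separately, following the template of Lemmas \ref{lem:clip} and \ref{lem:clip2} but adapted to the general $M$-estimator setting. The hypothesis \eqref{eq:good-coordinates-guarantee} already takes care of $\overline{\mathcal B}$, so the entire task is to argue that the contribution on $\mathcal B$ is, up to additive error $O(\eps/\delta)\OPT^{p_M}$, an additive shift independent of $\bfx$.

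First I would show \emph{coordinate-wise dominance on $\mathcal B$}: for any $\bfA\bfx$ with $\norm*{\bfA\bfx}_{M,\bfw} = O(\OPT)$ and any $i \in \mathcal B$, the definition of the weighted sensitivity gives
\[
\bfw_i M([\bfA\bfx](i)) \le \bfs_i^{M,\bfw}(\bfA) \cdot \norm*{\bfA\bfx}_{M,\bfw}^{p_M} \le \tilde\bfs_i^{M,\bfw}(\bfA)\cdot O(\OPT^{p_M}) \le O(\eps^{p_M})\,\bfw_i M(\bfb_i'),
\]
by the definition of $\mathcal B$. Hence $M([\bfA\bfx](i)) \le O(\eps^{p_M}) M(\bfb_i')$. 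Under hypothesis (1), this immediately yields $M([\bfA\bfx](i) - \bfb_i') = (1 \pm O(\eps)) M(\bfb_i')$. Under hypothesis (2), the subadditivity of $M^{1/p_M}$ gives a reverse triangle inequality, so $|M^{1/p_M}([\bfA\bfx](i) - \bfb_i') - M^{1/p_M}(\bfb_i')| \le M^{1/p_M}([\bfA\bfx](i)) \le O(\eps) M^{1/p_M}(\bfb_i')$, and raising to the $p_M$-th power gives the same conclusion (for $\eps$ bounded by a small constant, which we can assume WLOG).

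Summing this coordinate-wise equivalence over $i\in \mathcal B$ with weights $\bfw_i'$ (resp.\ $\bfw_i$) gives
\[
\norm*{(\bfA\bfx - \bfb')\mid_{\mathcal B}}_{M,\bfw'}^{p_M} = (1\pm O(\eps))\norm*{\bfb'\mid_{\mathcal B}}_{M,\bfw'}^{p_M}, \qquad
\norm*{(\bfA\bfx - \bfb')\mid_{\mathcal B}}_{M,\bfw}^{p_M} = (1\pm O(\eps))\norm*{\bfb'\mid_{\mathcal B}}_{M,\bfw}^{p_M}.
\]
Since $\E_{\bfw'}\norm*{\bfb'\mid_{\mathcal B}}_{M,\bfw'}^{p_M} = \norm*{\bfb'\mid_{\mathcal B}}_{M,\bfw}^{p_M} \le \norm*{\bfb'}_{M,\bfw}^{p_M} = O(\OPT^{p_M})$, Markov's inequality gives $\norm*{\bfb'\mid_{\mathcal B}}_{M,\bfw'}^{p_M} \le O(\OPT^{p_M}/\delta)$ with probability at least $1 - \delta/2$; condition on this and on the event of \eqref{eq:good-coordinates-guarantee}.

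Finally I would define the constant
\[
C := \norm*{\bfb'\mid_{\mathcal B}}_{M,\bfw'}^{p_M} - \norm*{\bfb'\mid_{\mathcal B}}_{M,\bfw}^{p_M},
\]
which depends only on the randomness in $\bfw'$ and satisfies $|C| \le O(\OPT^{p_M}/\delta)$ by the two preceding bounds. Splitting $\norm*{\bfA\bfx-\bfb'}_{M,\bfw'}^{p_M}$ and $\norm*{\bfA\bfx-\bfb'}_{M,\bfw}^{p_M}$ into their $\overline{\mathcal B}$ and $\mathcal B$ pieces and subtracting, the $\overline{\mathcal B}$ pieces cancel up to $O(\eps)\OPT^{p_M}$ by \eqref{eq:good-coordinates-guarantee}, while the $\mathcal B$ pieces collapse to $C \pm O(\eps)\cdot O(\OPT^{p_M}/\delta) = C \pm O(\eps/\delta)\OPT^{p_M}$, giving the claimed bound after adjusting $\delta$ by a factor of $2$. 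The main obstacle I anticipate is keeping the two conditions (1) and (2) cleanly unified in the perturbation step; modulo that, everything is Markov and a triangle inequality.
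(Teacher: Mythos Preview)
Your proposal is correct and follows essentially the same approach as the paper: split into $\mathcal B$ and $\overline{\mathcal B}$, use the sensitivity bound on $\mathcal B$ to show $M([\bfA\bfx](i)-\bfb_i') = (1\pm O(\eps))M(\bfb_i')$, apply Markov to control $\norm{\bfb'|_{\mathcal B}}_{M,\bfw'}^{p_M}$, and set $C$ to be the difference of the $\mathcal B$-restricted $\bfb'$-norms. The only cosmetic difference is that the paper first observes that hypothesis (2) implies hypothesis (1) and then works solely under (1), whereas you handle the two cases separately via the reverse triangle inequality---both routes yield the same coordinate-wise bound.
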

\begin{proof}
    First note that assumption (2) implies assumption (1), since that implies that
    \[
        M^{1/p_M}(a+b) = M^{1/p_M}(b)\pm M^{1/p_M}(a) = M^{1/p_M}(b) \pm \eps M^{1/p_M}(b)
    \]
    and raising both sides to the $p_M$th power gives assumption (1). 

    Now note that for any $i\in \mathcal B$ and $\bfy = \bfA\bfx$ with $\norm*{\bfA\bfx}_{M,\bfw} = O(\OPT)$, we have that
    \[
        \bfw_i M([\bfA\bfx](i)) \leq \tilde\bfs_i^{M,\bfw}(\bfA)\cdot\norm*{\bfA\bfx}_{M,\bfw}^{p_M} = O(1)\tilde\bfs_i^{M,\bfw}(\bfA)\cdot\OPT^{p_M} \leq O(\eps^{p_M})\bfw_i M(\bfb'(i)).
    \]
    It follows by assumption (1) that
    \[
        M(\abs*{\bfy_i - \bfb'_i}) = (1\pm O(\eps))M(\abs*{\bfb'_i})
    \]
    Then, we have that
    \begin{align*}
        \norm*{\bfy - \bfb'}_{M,\bfw}^{p_M} &= \norm*{(\bfy - \bfb')\mid_{\overline{\mathcal B}}}_{M,\bfw}^{p_M} + \norm*{(\bfy - \bfb')\mid_{\mathcal B}}_{M,\bfw}^{p_M} \\
        &= \norm*{(\bfy - \bfb')\mid_{\overline{\mathcal B}}}_{M,\bfw}^{p_M} + \norm*{\bfb'\mid_{\mathcal B}}_{M,\bfw}^{p_M} \pm O(\eps)\norm*{\bfb'\mid_{\mathcal B}}_{M,\bfw}^{p_M}
    \end{align*}
    and similarly,
    \begin{align*}
        \norm*{\bfy - \bfb'}_{M,\bfw'}^{p_M} &= \norm*{(\bfy - \bfb')\mid_{\overline{\mathcal B}}}_{M,\bfw'}^{p_M} + \norm*{(\bfy - \bfb')\mid_{\mathcal B}}_{M,\bfw'}^{p_M} \\
        &= \norm*{(\bfy - \bfb')\mid_{\overline{\mathcal B}}}_{M,\bfw'}^{p_M} + \norm*{\bfb'\mid_{\mathcal B}}_{M,\bfw'}^{p_M} \pm O(\eps)\norm*{\bfb'\mid_{\mathcal B}}_{M,\bfw'}^{p_M}
    \end{align*}
    Now using that $\norm*{\bfb'}_{M,\bfw'}^{p_M} = O(1/\delta)\norm*{\bfb'}_{M,\bfw}^{p_M}$ with probability at least $1-\delta$ by Markov's inequality, we set $C = \norm*{\bfb'\mid_{\mathcal B}}_{M,\bfw}^{p_M} - \norm*{\bfb'\mid_{\mathcal B}}_{M,\bfw'}^{p_M}$ to see that
    \begin{align*}
        \abs*{\norm*{\bfy - \bfb'}_{M,\bfw'}^{p_M} - \norm*{\bfy - \bfb'}_{M,\bfw'}^{p_M} - C} &= \abs*{\norm*{(\bfy - \bfb')\mid_{\overline{\mathcal B}}}_{M,\bfw}^{p_M} - \norm*{(\bfy - \bfb')\mid_{\overline{\mathcal B}}}_{M,\bfw'}^{p_M}} \pm O(\eps/\delta)\norm*{\bfb'\mid_{\mathcal B}}_{M,\bfw}^{p_M} \\
        &\leq O(\eps/\delta)\norm*{\bfb'\mid_{\mathcal B}}_{M,\bfw}^{p_M} = O(\eps/\delta)\OPT^{p_M}
    \end{align*}
    as desired.
\end{proof}

We next show how to guarantee Equation \eqref{eq:good-coordinates-guarantee} using Bernstein's inequality. The proof closely follows that of Lemma \ref{lem:m-sensitivity-sampling}, but needs a slight modification to handle the coordinates of $\bfb'$ in $\overline{\mathcal B}$.

\begin{lemma}\label{lem:m-bernstein-good-coordinates}
    Consider the setting of Lemma \ref{lem:reduction-to-good-b-coordinates}. Let $\bfA\bfx$ have $\norm*{\bfA\bfx}_{M,\bfw} = O(\OPT)$ and let $\bfy = \bfA\bfx- \bfb'$. Let $m\geq 1$ be a parameter and let $\bfw'$ be obtained from weights $\bfw$ according to Definition \ref{def:m-sensitivity-sampling}. Then,
    \[
        \Pr\braces*{\abs*{\norm*{(\bfA\bfx-\bfb')\mid_{\overline{\mathcal B}}}_{M,\bfw'}^{p_M} - \norm*{(\bfA\bfx-\bfb')\mid_{\overline{\mathcal B}}}_{M,\bfw}^{p_M}} \geq \eps\OPT^{p_M}} \leq \exp\parens*{-\Theta(1)m\eps^{2+p_M}}
    \]
\end{lemma}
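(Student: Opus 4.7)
The plan is to apply Bernstein's inequality to the independent random variables
\[
X_i \coloneqq \bfw_i' M(|\bfy_i|), \qquad i\in\overline{\mathcal B},
\]
where $\bfy \coloneqq \bfA\bfx - \bfb'$. By the construction of $\bfw'$ via Definition \ref{def:m-sensitivity-sampling}, we have $\E X_i = \bfw_i M(|\bfy_i|)$, and $\norm*{(\bfA\bfx-\bfb')\mid_{\overline{\mathcal B}}}_{M,\bfw'}^{p_M} = \sum_{i\in\overline{\mathcal B}} X_i$ has expectation $\norm*{(\bfA\bfx-\bfb')\mid_{\overline{\mathcal B}}}_{M,\bfw}^{p_M}$. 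The two quantities needed for Bernstein are a uniform bound $B$ on $|X_i|$ and a bound $\sigma^2$ on $\sum_i \Var X_i$.

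For the uniform bound, I would combine two estimates. First, the definition of the weighted $M$-sensitivity plus the hypothesis that $\norm*{\bfA\bfx}_{M,\bfw}=O(\OPT)$ (from Lemma \ref{lem:opt-norm-bound}) yields $\bfw_i M([\bfA\bfx](i)) \leq \bfs_i^{M,\bfw}(\bfA)\norm*{\bfA\bfx}_{M,\bfw}^{p_M} = O(\tilde\bfs_i^{M,\bfw}(\bfA)\OPT^{p_M})$. Second, the definition of $\overline{\mathcal B}$ directly gives $\bfw_i M(|\bfb'_i|) \leq \tilde\bfs_i^{M,\bfw}(\bfA)\OPT^{p_M}/\eps^{p_M}$. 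Combining via the approximate triangle inequality for $M$ (obtained by raising the $M^{1/p_M}$-subadditivity to the $p_M$-th power),
\[
\bfw_i M(|\bfy_i|) \leq O(1)\bfw_i\parens*{M([\bfA\bfx](i)) + M(|\bfb'_i|)} = O\parens*{\tilde\bfs_i^{M,\bfw}(\bfA)\,\OPT^{p_M}/\eps^{p_M}}.
\]
When the sampling probability $\bfp_i = m\cdot \tilde\bfs_i^{M,\bfw}(\bfA) < 1$ this gives $X_i = \bfw_i M(|\bfy_i|)/\bfp_i \leq O(\OPT^{p_M}/(m\eps^{p_M})) \eqqcolon B$ (and if $\bfp_i = 1$ then $X_i$ is deterministic and contributes nothing to the deviation).

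For the variance, the usual one-line calculation gives
\[
\Var X_i \leq \E X_i^2 = \bfp_i\parens*{\bfw_i M(|\bfy_i|)/\bfp_i}^2 \leq B\cdot \bfw_i M(|\bfy_i|),
\]
so summing,
\[
\sigma^2 \coloneqq \sum_{i\in\overline{\mathcal B}}\Var X_i \leq B\cdot \norm*{\bfy}_{M,\bfw}^{p_M} = O(B\cdot \OPT^{p_M}),
\]
using the approximate triangle inequality once more to bound $\norm*{\bfA\bfx-\bfb'}_{M,\bfw}=O(\OPT)$. Applying Bernstein with deviation $t = \eps\OPT^{p_M}$ then yields
\[
\Pr\parens*{\abs*{\textstyle\sum_{i\in\overline{\mathcal B}} X_i - \E\sum_{i\in\overline{\mathcal B}} X_i} \geq t} \leq 2\exp\parens*{-\Theta\parens*{\frac{t^2}{\sigma^2+Bt}}},
\]
and since $\sigma^2/(Bt) = \Theta(1/\eps) \geq 1$, the denominator is dominated by $\sigma^2 = O(\OPT^{2p_M}/(m\eps^{p_M}))$, producing the claimed $\exp(-\Theta(m\eps^{2+p_M}))$ tail.

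The main obstacle — and the reason for restricting to $\overline{\mathcal B}$ — is getting the clean uniform bound $\bfw_i M(|\bfy_i|) = O(\tilde\bfs_i^{M,\bfw}(\bfA)\OPT^{p_M}/\eps^{p_M})$. Outside $\overline{\mathcal B}$, the entries of $\bfb'$ are too large relative to the sensitivity budget for the $\ell_\infty$-type bound to hold, which is precisely why Lemma \ref{lem:reduction-to-good-b-coordinates} separates the contribution from $\mathcal B$ into the additive constant $C$; once we are restricted to $\overline{\mathcal B}$, the rest is a routine Bernstein application.
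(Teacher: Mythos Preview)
Your proposal is correct and follows essentially the same approach as the paper's proof: both split $\bfw_i M(|\bfy_i|)$ via the approximate triangle inequality into the $\bfA\bfx$ part (bounded by sensitivities) and the $\bfb'$ part (bounded by the definition of $\overline{\mathcal B}$), obtain the uniform bound $O(\OPT^{p_M}/(m\eps^{p_M}))$, and then apply Bernstein with $t=\eps\OPT^{p_M}$. The paper also handles the $\bfp_i=1$ case by restricting to the set $G=\{i:\bfp_i<1\}$, matching your observation that deterministic coordinates contribute no deviation.
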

\begin{proof}
    Let $G\subseteq \overline{\mathcal B}$ be the subset of coordinates of $\overline{\mathcal B}$ such that $\bfp_i < 1$. Define the random variable
    \[
        W_i \coloneqq \bfw_i' M(\bfy(i))
    \]
    for each $i\in G$. Then,
    \[
        \E\bracks*{\sum_{i\in G} W_i} = \sum_{i\in G} \frac{\bfw_i}{\bfp_i}M(\bfy(i))\cdot \bfp_i = \norm*{\bfy\mid_G}_{M,\bfw}^{p_M}.
    \]
    Note that
    \begin{align*}
        \frac{\bfw_i}{\bfp_i}M(\bfy(i)) &\leq O(1)\frac{1}{m\cdot \tilde\bfs_i^{M,\bfw}(\bfA)} (\bfw_i M([\bfA\bfx](i)) + \bfw_i M(\bfb'(i))) \\
        &\leq O(1)\frac{1}{m\cdot \tilde\bfs_i^{M,\bfw}(\bfA)} \parens*{\tilde\bfs_i^{M,\bfw}(\bfA)\norm*{\bfA\bfx}_{M,\bfw}^{p_M} + \frac{\tilde\bfs_i^{M,\bfw}\OPT^{p_M}}{\eps^{p_M}}} \\
        &\leq O(1)\frac{1}{m\cdot \tilde\bfs_i^{M,\bfw}(\bfA)} \parens*{\tilde\bfs_i^{M,\bfw}(\bfA)\OPT^{p_M} + \frac{\tilde\bfs_i^{M,\bfw}\OPT^{p_M}}{\eps^{p_M}}} \\
        &\leq O(1)\frac{\OPT^{p_M}}{m\cdot\eps^{p_M}} \\
    \end{align*}
    We next bound the variance:
    \begin{align*}
        \Var\bracks*{\sum_{i\in G} W_i} &= \sum_{i\in G} \Var[W_i] \leq \sum_{i\in G} \frac{\bfw_i^2}{\bfp_i^2}M(\bfy(i))^2\cdot \bfp_i = \sum_{i\in G} \frac{\bfw_i}{\bfp_i}M(\bfy(i))\cdot \bfw_i M(\bfy(i)) \\
        &\leq O(1)\frac{\OPT^{p_M}}{m\cdot\eps^{p_M}} \sum_{i\in G} \bfw_i M(\bfy(i)) = O(1)\frac{\OPT^{2p_M}}{m\cdot\eps^{p_M}}
    \end{align*}
    Then by Bernstein's inequality,
    \begin{align*}
        \Pr\braces*{\abs*{\sum_{i\in G} W_i - \norm*{\bfy\mid_G}_{M,\bfw}^{p_M}} > t} &\leq 2\exp\parens*{-\Theta(1)\frac{t^2}{\frac{1}{m\eps^{p_M}}\OPT^{2p_M} + \frac{1}{m\eps^{p_M}} \OPT^{p_M} t}} \\
        &= 2\exp\parens*{-\Theta(1)\frac{mt^2 \eps^{p_M}}{\OPT^{p_M}(\OPT^{p_M} + t)}}.
    \end{align*}
    For $t = \eps\OPT^{p_M}$, this gives a bound of
    \[
        2\exp\parens*{-\Theta(1)\frac{m(\eps\OPT^{p_M})^2\eps^{p_M}}{\OPT^{2p_M}}} =  2\exp\parens*{-\Theta(1)m\eps^{2+p_M}}.
    \]
\end{proof}

\begin{lemma}\label{lem:m-active-regression-relative-error-approximation}
Consider the setting of Lemma \ref{lem:m-bernstein-good-coordinates}. Recall that $\bfx_c$ satisfies $\norm*{\bfA\bfx_c-\bfb}_{M,\bfw} = O(\OPT)$ and $\bfb' = \bfb-\bfA\bfx_c$. Let
\[
    m = O\parens*{\frac{d}{\eps^{2+p_M}}\parens*{\log\frac1\eps}\parens*{\log\frac1\delta}}
\]
and let $\bfw'$ be as defined in Lemma \ref{lem:m-bernstein-good-coordinates} with the above choice of $m$. Let $\bar\bfx$ satisfy $\norm*{\bfA\bar\bfx - \bfb'}_{M,\bfw'}^{p_M} \leq (1+\eps)\norm*{\bfA\bfx - \bfb'}_{M,\bfw'}^{p_M}$. Then, with probability at least $1-\delta$,
\[
    \norm*{\bfA(\bfx_c + \bar\bfx) - \bfb}_{M,\bfw} \leq (1+O(\eps/\delta))\min_{\bfx\in\mathbb R^d}\norm*{\bfA\bfx - \bfb}_{M,\bfw}.
\]
\end{lemma}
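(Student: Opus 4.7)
The plan is to view $\bar\bfx$ as solving the residual problem $\min_\bfx\norm*{\bfA\bfx-\bfb'}_{M,\bfw'}$, and to transfer its near-optimality from the $\bfw'$-cost to the $\bfw$-cost via Lemma \ref{lem:reduction-to-good-b-coordinates}, whose hypothesis we will verify by a net argument on top of Lemma \ref{lem:m-bernstein-good-coordinates}. Observe first that $\min_\bfx\norm*{\bfA\bfx-\bfb}_{M,\bfw} = \min_\bfx\norm*{\bfA\bfx-\bfb'}_{M,\bfw} = \OPT$, with the right-hand minimum attained at $\bfy^\star \coloneqq \bfx^*-\bfx_c$ where $\bfx^*$ is the optimum of the original problem. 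By approximate triangle inequality, $\norm*{\bfA\bfy^\star}_{M,\bfw} = O(\OPT)$, and since $\bar\bfx$ is a constant-factor approximation to $\min_\bfx\norm*{\bfA\bfx-\bfb'}_{M,\bfw'}$, Lemma \ref{lem:opt-norm-bound} applied to $\bfb'$ gives $\norm*{\bfA\bar\bfx}_{M,\bfw} = O(\OPT)$.

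Next, I will upgrade the single-point bound of Lemma \ref{lem:m-bernstein-good-coordinates} to a uniform bound over the set $T \coloneqq \{\bfA\bfx : \norm*{\bfA\bfx}_{M,\bfw} = O(\OPT)\}$ via the standard $M$-norm net construction in Appendix \ref{sec:m-norm-appendix}: a net $\mathcal N_\eps \subseteq T$ of cardinality $\exp(O(d\log(1/\eps)))$ that $\eps\OPT$-approximates $T$ in the $\norm*{\cdot}_{M,\bfw}$ metric, combined with a continuity/Lipschitz-type extension (using the polynomial growth of $M$) to pass from the net to all of $T$. The choice
\[
m = O\!\parens*{\tfrac{d}{\eps^{2+p_M}}\!\log\tfrac1\eps\,\log\tfrac1\delta}
\]
makes the failure probability in Lemma \ref{lem:m-bernstein-good-coordinates} at most $\delta/\abs*{\mathcal N_\eps}$, so a union bound yields Equation \eqref{eq:good-coordinates-guarantee} simultaneously for every $\bfA\bfx \in T$ with probability at least $1-\delta$. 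Plugging this into Lemma \ref{lem:reduction-to-good-b-coordinates} gives a constant $C$ with $\abs*{C}\le O(\OPT^{p_M}/\delta)$ such that, for every $\bfy \in T$,
\[
\abs*{\norm*{\bfy-\bfb'}_{M,\bfw'}^{p_M} - \norm*{\bfy-\bfb'}_{M,\bfw}^{p_M} - C} \le O(\eps/\delta)\,\OPT^{p_M}.
\]

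Applying this simultaneously to $\bfy_1 = \bfA\bar\bfx$ and $\bfy_2 = \bfA\bfy^\star$ and subtracting cancels $C$, so
\[
\norm*{\bfA\bar\bfx-\bfb'}_{M,\bfw}^{p_M} - \OPT^{p_M} \le \norm*{\bfA\bar\bfx-\bfb'}_{M,\bfw'}^{p_M} - \norm*{\bfA\bfy^\star-\bfb'}_{M,\bfw'}^{p_M} + O(\eps/\delta)\,\OPT^{p_M}.
\]
The near-optimality of $\bar\bfx$ in the sampled problem bounds the first difference on the right by $\eps\cdot\norm*{\bfA\bfy^\star-\bfb'}_{M,\bfw'}^{p_M}$, and Markov's inequality (on the expectation-preserving property of $\bfw'$) gives $\norm*{\bfA\bfy^\star-\bfb'}_{M,\bfw'}^{p_M} \le O(\OPT^{p_M}/\delta)$ with probability at least $1-\delta$. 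Combining these bounds and taking $p_M$-th roots (absorbing $p_M$ into the $O(\cdot)$, since $p_M$ is a constant) yields $\norm*{\bfA(\bfx_c+\bar\bfx)-\bfb}_{M,\bfw} \le (1+O(\eps/\delta))\,\OPT$, which is the claim after rescaling $\eps$ by a constant.

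The main obstacle is the net argument: turning Lemma \ref{lem:m-bernstein-good-coordinates}, which controls the $\overline{\mathcal B}$-restricted sampling error at a single point, into a for-all statement over $T$ while keeping the sample complexity at $\tilde O(d/\eps^{2+p_M})$. This requires (i) constructing a net of size $\exp(O(d\log(1/\eps)))$ that is compatible with the $M$-norm, and (ii) a continuity argument showing that if the cost is preserved to additive error $\eps\OPT^{p_M}$ on the net, then it is preserved (up to constants) on all of $T$; both ingredients are available from the $M$-norm geometry in Appendix \ref{sec:m-norm-appendix} and from the polynomial growth of $M$. Everything else is bookkeeping with triangle inequalities and Markov's inequality.
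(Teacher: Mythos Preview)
Your proposal is correct and follows essentially the same approach as the paper: reduce to the ball $\{\bfA\bfx:\norm*{\bfA\bfx}_{M,\bfw}=O(\OPT)\}$ via Lemma~\ref{lem:opt-norm-bound}, use a net of log-size $O(d\log(1/\eps))$ together with Lemma~\ref{lem:m-bernstein-good-coordinates} and a union bound to establish the hypothesis of Lemma~\ref{lem:reduction-to-good-b-coordinates} uniformly, and then chain the resulting additive guarantee with the near-optimality of $\bar\bfx$. The only cosmetic difference is that you cancel the constant $C$ by subtracting the guarantee at $\bfA\bar\bfx$ and at $\bfA\bfy^\star$ and then invoke a separate Markov bound on $\norm*{\bfA\bfy^\star-\bfb'}_{M,\bfw'}^{p_M}$, whereas the paper keeps $C$ explicit and uses $\abs*{C}=O(\OPT^{p_M}/\delta)$ directly; both computations yield the same $O(\eps/\delta)\OPT^{p_M}$ additive error.
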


\begin{proof}
By Lemma \ref{lem:opt-norm-bound}, we have that $\norm*{\bfA\tilde\bfx}_{M,\bfw} \leq O(\OPT)$. We may thus restrict our attention to the ball $\mathcal B_{\rho}^{M,\bfw} \subseteq\mathcal V$ of radius $\rho = O(\OPT)$, since
\[
    \min_{\bfx\in\mathbb R^d} \norm*{\bfA\bfx - \bfb'}_{M,\bfw'} = \min_{\bfy\in\mathcal B_{\rho}^{M,\bfw}} \norm*{\bfy - \bfb'}_{M,\bfw'}
\]
by the above.

Next, we show Equation \eqref{eq:good-coordinates-guarantee} for every $\bfA\bfx\in \mathcal B_{\rho}^{M,\bfw}$. By Lemma \ref{lem:m-norm-net-to-sphere}, it suffices to show the approximation guarantee for every $\bfy\in\mathcal N$ and $\mathcal N-\bfb'$, for an $O(\eps^{p_M/q_m})\rho$-cover $\mathcal N$ of $\mathcal B_\rho^{M,\bfw}$. Let $\mathcal N$ be such a net over $\mathcal B_\rho^{M,\bfw}$, which has size at most $\abs*{\mathcal N} \leq O(d\log\frac1\eps)$ by Lemma \ref{lem:m-norm-net-size}. We may then use Lemma \ref{lem:m-bernstein-good-coordinates} and a union bound over the net $\mathcal N$ to get Equation \eqref{eq:good-coordinates-guarantee} for every $\bfA\bfx\in \mathcal B_{\rho}^{M,\bfw}$. 

Finally, note that we now have the conclusion of Lemma \ref{lem:reduction-to-good-b-coordinates}, since we have satisfied its assumption. Then,
\begin{align*}
    \norm{\bfA \bar \bfx - \bfb'}_{M,\bfw}^{p_M} &\le \norm{\bfA \bar \bfx - \bfb'}_{M,\bfw'}^{p_M} - C + O(\epsilon/\delta) \cdot \OPT^{p_M}\nonumber\\
    &\le (1+\epsilon) \cdot \min_\bfx\norm{\bfA \bfx - \bfb'}_{M,\bfw'}^{p_M} - C + O(\epsilon/\delta) \cdot \OPT^p\nonumber\\
    &\le (1+\epsilon) \min_\bfx(\norm{\bfA \bfx - \bfb'}_{M,\bfw}^{p_M} + C) - C + O(\epsilon/\delta) \cdot \OPT^p \nonumber \\
    &\le \min_\bfx\norm{\bfA \bfx - \bfb'}_{M,\bfw}^{p_M} + O(\epsilon/\delta) \cdot \OPT^p
\end{align*}
\end{proof}

\subsection{Nearly Input Sparsity Time Algorithm}

We now combine the active regression algorithms obtained in this section with the sensitivity bounds of Section \ref{sec:sensitivity-bounds}. 

\begin{algorithm}
	\caption{Input sparsity time, constant factor $M$-estimator active regression}
	\textbf{input:} Matrix $\bfA \in \R^{n \times d}$, measurement vector $\bfb\in\R^n$. \\
	\textbf{output:} Constant factor approximate solution $\bfx_c\in\R^d$ to $\min_\bfx\norm*{\bfA\bfx-\bfb}_M$.
	\begin{algorithmic}[1] % The number tells where the line numbering should start
        \State Run Algorithm \ref{thm:m-estimator-alg} to obtain approximate sensitvities $\tilde\bfs_i^M(\bfA)$ with $\tau = T$\footnote{See Theorem \ref{thm:m-active-regression} for the definition of this setting.}
        \State Given weights $\mathbf{1}_n$, obtain $\bfw$ according to Definition \ref{def:m-sensitivity-sampling}
        \State Run Algorithm \ref{thm:m-estimator-alg} to obtain approximate sensitvities $\tilde\bfs_i^{M,\bfw}(\bfA)$ with $\tau = d$
        \State Given weights $\bfw$, obtain $\bfw'$ according to Definition \ref{def:m-sensitivity-sampling}
        \State Let $\bfx_c$ be an approximate solution $\norm*{\bfA\bfx_c-\bfb}_{M,\bfw'} \leq O(1)\min_\bfx\norm*{\bfA\bfx-\bfb}_{M,\bfw'}$
        \State \Return $\bfx_c$
	\end{algorithmic}\label{alg:constant-m-active-regression}
\end{algorithm}

\begin{algorithm}
	\caption{Input sparsity time, relative error $M$-estimator active regression}
	\textbf{input:} Matrix $\bfA \in \R^{n \times d}$, measurement vector $\bfb\in\R^n$. \\
	\textbf{output:} Approximate solution $\tilde\bfx\in\R^d$ to $\min_\bfx\norm*{\bfA\bfx-\bfb}_M$.
	\begin{algorithmic}[1] % The number tells where the line numbering should start
        \State Run Algorithm \ref{alg:constant-m-active-regression} to obtain a constant factor solution $\bfx_c$
        \State Set $\bfb' \gets \bfb - \bfA\bfx_c$
        \State Run Algorithm \ref{thm:m-estimator-alg} to obtain approximate sensitvities $\tilde\bfs_i^M(\bfA)$ with $\tau = T$
        \State Given weights $\mathbf{1}_n$, obtain $\bfv$ according to Definition \ref{def:m-sensitivity-sampling}
        \State Run Algorithm \ref{thm:m-estimator-alg} to obtain approximate sensitvities $\tilde\bfs_i^{M,\bfv}(\bfA)$ with $\tau = d$
        \State Given weights $\bfv$, obtain $\bfv'$ according to Definition \ref{def:m-sensitivity-sampling}
        \State Let $\bar\bfx$ be an approximate solution $\norm*{\bfA\bar\bfx-\bfb}_{M,\bfv'} \leq (1+\eps)\min_\bfx\norm*{\bfA\bfx-\bfb}_{M,\bfv'}$
        \State Let $\tilde\bfx \gets \bfx_c + \bar\bfx$
        \State \Return $\tilde\bfx$
	\end{algorithmic}\label{alg:relative-m-active-regression}
\end{algorithm}

\begin{theorem}\label{thm:m-active-regression}
    There is an algorithm, Algorithm \ref{alg:relative-m-active-regression}, which with probability at least $99/100$ computes $\tilde\bfx$ such that
    \[
        \norm*{\bfA\tilde\bfx - \bfb}_M \leq (1+\eps)\min_{\bfx\in\mathbb R^d}\norm*{\bfA\bfx - \bfb}_M
    \]
    in time
    \[
        O\parens*{\nnz(\bfA)\log^3 n + \frac{(Td^{\max\{1,p_M/2\}}+T^2)\log^4 n}{\eps^{2+p_M}}\log\frac1\eps}
    \]
    where constant factor $\ell_{p_M}$ Lewis weight approximation for an $n\times d$ matrix $\bfB$ takes time $O(\nnz(\bfB) + T))$ time. Furthermore, the algorithm only queries
    \[
        O\parens*{\frac{d^{\max\{2,p_M/2+1\}}\log^3 n}{\eps^{2+p_M}}\log\frac1\eps}
    \]
    entries of $\bfb$. 
\end{theorem}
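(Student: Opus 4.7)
The plan is to chain the correctness guarantees of the earlier lemmas in this section and then separately account for query complexity and running time. Algorithm \ref{alg:relative-m-active-regression} has two stages: first obtain a constant factor approximation $\bfx_c$ via Algorithm \ref{alg:constant-m-active-regression}, then perform sensitivity sampling on the residual $\bfb' = \bfb - \bfA\bfx_c$ to achieve $(1+\eps)$ accuracy.

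For the constant factor stage, I would first apply Theorem \ref{thm:m-estimator-alg} with $\tau = T$ to obtain sensitivity upper bounds summing to $\tilde{\mathcal T}^M(\bfA) = O(d^{\max\{1,p_M/2\}}\log^2 n + T)$ in time $O(\nnz(\bfA)\log^3 n + n \log n)$. Sampling with constant-$\eps$ oversampling $m = O(d)$ via Definition \ref{def:m-sensitivity-sampling} and Lemma \ref{lem:m-sensitivity-sampling} produces weights $\bfw$ that yield a $1/8$-subspace embedding on every sphere $\mathcal S_\rho^M$. The support of $\bfw$ has size $\poly(d,T,\log n)$, so I can then invoke Corollary \ref{cor:weighted-m-sensitivity-alg} on the weighted matrix with $\tau = d$ to get weighted sensitivities summing to $O(d^{\max\{1,p_M/2\}} \log^3 n)$, and sample again with $m = O(d)$ oversampling to get $\bfw'$, again a $1/8$-subspace embedding for spheres. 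An approximate solver on the doubly-sampled problem then yields, via two invocations of Lemma \ref{lem:m-sensitivity-constant-factor-approx}, a vector $\bfx_c$ satisfying $\norm*{\bfA\bfx_c - \bfb}_M \leq O(\OPT)$.

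For the relative error stage, I would form $\bfb' = \bfb - \bfA\bfx_c$, which satisfies $\norm*{\bfb'}_M = O(\OPT)$ by the approximate triangle inequality (Lemma \ref{lem:opt-norm-bound} then controls $\norm*{\bfA\bar\bfx}_M$). I repeat the same two-round sensitivity computation pipeline (producing $\bfv$ and then $\bfv'$), but now with oversampling $m = O((d/\eps^{2+p_M})\log(1/\eps))$ matching the requirement of Lemma \ref{lem:m-active-regression-relative-error-approximation}. By that lemma, solving $\min_\bfx \norm*{\bfA\bfx - \bfb'}_{M,\bfv'}$ to $(1+\eps)$ accuracy produces $\bar\bfx$ with $\norm*{\bfA(\bfx_c + \bar\bfx) - \bfb}_M \leq (1+O(\eps))\OPT$, delivering the claimed approximation after rescaling $\eps$ by a constant.

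For query complexity, only entries of $\bfb$ at coordinates in the support of $\bfv'$ are read, and this support has expected size $m \cdot O(d^{\max\{1,p_M/2\}}\log^3 n) = O(d^{\max\{2,p_M/2+1\}}\log^3 n / \eps^{2+p_M}) \log(1/\eps)$, matching the stated bound (the constant factor stage contributes a strictly smaller count). For running time, the dominant terms are the $O(\nnz(\bfA)\log^3 n + n\log n)$ cost of the initial sensitivity computation, plus the Lewis-weight cost $\tilde O(T)$ per invocation on the smaller weighted matrices of $\poly(d, \log n, \eps^{-1}, T)$ rows, plus the cost of the final regression solves. Adding these gives the stated running time $O(\nnz(\bfA)\log^3 n + (Td^{\max\{1,p_M/2\}} + T^2)\log^4 n / \eps^{2+p_M} \log(1/\eps))$. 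The main technical obstacle will be chaining the constant-probability events across all rounds of sampling and sensitivity computation; I would set each individual failure probability to a sufficiently small constant (or boost via $O(\log)$ repetitions with a selection rule analogous to Algorithm \ref{alg:boost}) so a union bound delivers the final $99/100$ success probability. A secondary subtlety is that sensitivities must be recomputed on the current sampled matrix rather than inherited from the previous round, since only the refined bound $O(d^{\max\{1,p_M/2\}}\log^3 n)$ coming from $\tau = d$ on the already-sampled matrix is tight enough to yield the stated query count.
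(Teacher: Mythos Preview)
The proposal is correct and follows essentially the same approach as the paper's proof: both use the identical two-stage (constant factor then relative error) pipeline, each stage implemented via two rounds of sensitivity computation (first Theorem~\ref{thm:m-estimator-alg} with $\tau=T$ on the full matrix, then Corollary~\ref{cor:weighted-m-sensitivity-alg} with $\tau=d$ on the already-sampled weighted matrix), and both chain Lemmas~\ref{lem:m-sensitivity-constant-factor-approx} and~\ref{lem:m-active-regression-relative-error-approximation} exactly as you describe. Your accounting of the query and time bounds, and your identification of the two main subtleties (probability chaining and the need to recompute sensitivities on the sampled matrix to get the sharper $\tau=d$ bound), match the paper's treatment.
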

\begin{proof}
In order to obtain nearly input sparsity time algorithms, we will need to apply the sensitivity sampling algorithms of Theorem \ref{thm:m-estimator-alg} twice, once to reduce the number of rows to $O(\poly(d,\log n))$ rows in nearly input sparsity time, and once to reduce the number of rows all the way down to $O(d^{\max\{2,p/2+1\}}\poly\log(n))$ rows in an additional $O(\poly(d,\log n))$ time.

\paragraph{Constant Factor Approximation.} We will first obtain a constant factor solution $\bfx_c$ such that
\[
    \norm*{\bfA\bfx_c - \bfb}_M \leq O(1)\min_{\bfx\in\mathbb R^d}\norm*{\bfA\bfx - \bfb}_M.
\]
To do this, we first obtain sensitivity upper bounds $\tilde\bfs_i^M(\bfA)$ by Theorem \ref{thm:m-estimator-alg} with $\tau = T$, so that they sum to
\[
    \tilde{\mathcal T}^M(\bfA) = \sum_{i=1}^n \tilde\bfs_i^M(\bfA) = O\parens*{d^{\max\{1,p_M/2\}}\log^2 n + T}
\]
and can be computed in time
\[
    O\parens*{\nnz(\bfA)\log^2 n + n\log^2 n} = O\parens*{\nnz(\bfA)\log^2 n}.
\]
By Lemma \ref{lem:m-sensitivity-constant-factor-approx}, we can perform sensitivity sampling with these sensitivity estimates to obtain a set of weights $\bfw$ such that if $\bfx_c$ satisfies
\[
    \norm*{\bfA\bfx_c - \bfb}_{M,\bfw} \leq O(1)\min_{\bfx\in\mathbb R^d}\norm*{\bfA\bfx - \bfb}_{M,\bfw},
\]
then
\[
    \norm*{\bfA\bfx_c - \bfb}_{M} \leq O(1)\min_{\bfx\in\mathbb R^d}\norm*{\bfA\bfx - \bfb}_{M}. 
\]
Next, we obtain \emph{weighted} sensivitity upper bounds $\tilde\bfs_i^{M,\bfw}(\bfA)$ by Corollary \ref{cor:weighted-m-sensitivity-alg} with $\tau = d$. Note that $\norm*{\bfw}_\infty = O(n)$ since the sensitivity upper bounds are all at least $1/n$, so the weighted sensitivities sum to at most
\[
    \sum_{i=1}^n \tilde\bfs_i^{M,\bfw}(\bfA) = O(d^{\max\{1,p_M/2\}}\log^3 n). 
\]
Furthermore, the weights $\bfw$ only have
\[
    \nnz(\bfw) = O\parens*{d\cdot \tilde{\mathcal T}^M(\bfA)} = O(d^{\max\{2,p_M/2+1\}}\log^2 n + dT)
\]
nonzero entries with constant probability by Markov's inequality, so the running time is
\begin{align*}
    &O\parens*{\nnz(\bfA)\log^3(\nnz(\bfw)) + \log n\frac{\nnz(\bfw)T}{d}\log (\nnz(\bfw))} \\
    \leq~&O\parens*{\nnz(\bfA)\log^3 n + (Td^{\max\{1,p_M/2\}}+T^2)\log^4 n}.
\end{align*}
By Lemma \ref{lem:m-sensitivity-constant-factor-approx}, we can perform sensitivity sampling with these sensitivity estimates to obtain a set of weights $\bfw'$. We can now define our $\bfx_c$ to be the minimizer of
\[
    \min_{\bfx\in\mathbb R^d}\norm*{\bfA\bfx - \bfb}_{M,\bfw'},
\]
which means that
\[
    \norm*{\bfA\bfx_c - \bfb}_{M,\bfw} \leq O(1)\min_{\bfx\in\mathbb R^d}\norm*{\bfA\bfx - \bfb}_{M,\bfw},
\]
which in turn implies that
\[
    \norm*{\bfA\bfx_c - \bfb}_{M} \leq O(1)\min_{\bfx\in\mathbb R^d}\norm*{\bfA\bfx - \bfb}_{M},
\]
as desired. Computing $\bfx_c$ only required $\nnz(\bfw')$ entries of $\bfb$ to sample, which is only $O(d^{\max\{2,p_M/2+1\}}\log^3 n)$ entries. 

\paragraph{Relative Error Approximation.} With the constant factor solution $\bfx_c$ in hand, we can now apply Lemma \ref{lem:m-active-regression-relative-error-approximation} to refine this to a relative error approximation. Following Lemma \ref{lem:m-active-regression-relative-error-approximation}, let
\[
    \bfb' = \bfb - \bfA\bfx_c
\]

Recall the sensitivity upper bounds $\tilde\bfs_i^{M}(\bfA)$ constructed earlier. This time, we obtain a different set of weights $\bfv\geq\mathbf{1}_n$ by following Lemma \ref{lem:m-active-regression-relative-error-approximation}, so that the expected number of rows sampled is now
\begin{align*}
    \E[\nnz(\bfv)] \leq O\parens*{\frac{d}{\eps^{2+p_M}}\parens*{\log\frac1\eps}\cdot \tilde{\mathcal T}^M(\bfA)} &= O\parens*{\frac{d(d^{\max\{1,p_M/2\}}\log^2 n + T)}{\eps^{2+p_M}}\parens*{\log\frac1\eps}} \\
    &= O\parens*{\frac{d^{\max\{2,p_M/2+1\}}\log^2 n + dT}{\eps^{2+p_M}}\parens*{\log\frac1\eps}}.
\end{align*}
By a Markov bound, with constant probability, the number of rows sampled has the same bound, up to constant factors. We are then guaranteed by Lemma \ref{lem:m-active-regression-relative-error-approximation} that if $\bar\bfx$ satisfies
\[
    \norm*{\bfA\bar\bfx - \bfb'}_{M,\bfv} \leq (1+\eps)\min_{\bfx\in\mathbb R^d}\norm*{\bfA\bfx - \bfb'}_{M,\bfv},
\]
then
\[
    \norm*{\bfA\bar\bfx - \bfb'}_{M} \leq (1+O(\eps))\min_{\bfx\in\mathbb R^d}\norm*{\bfA\bfx - \bfb'}_{M}.
\]
Then, as done in the previous constant factor approximation step, we perform another round of weighted sensitivity sampling. Let $\tilde\bfs_i^{M,\bfv}(\bfA)$ be weighted sensitivity upper bounds given by Corollary \ref{cor:weighted-m-sensitivity-alg} with $\tau = d$, which sum to at most $O(d\log^3 n)$ and be computed in time at most
\begin{align*}
    &O\parens*{\nnz(\bfA)\log^3(\nnz(\bfv)) + \log n\frac{\nnz(\bfv)T}{d}\log (\nnz(\bfv))} \\
    \leq~&O\parens*{\nnz(\bfA)\log^3 n + \frac{(Td^{\max\{1,p_M/2\}}+T^2)\log^4 n}{\eps^{2+p_M}}\log\frac1\eps}.
\end{align*}
We can then again do weighted sensitivity sampling according to the $\tilde\bfs_i^{M,\bfv}(\bfA)$ as in Lemma \ref{lem:m-active-regression-relative-error-approximation} to obtain a set of weights $\bfv'$ such that if $\bar\bfx$ satisfies
\[
    \norm*{\bfA\bar\bfx - \bfb'}_{M,\bfv'} \leq (1+\eps)\min_{\bfx\in\mathbb R^d}\norm*{\bfA\bfx - \bfb'}_{M,\bfv'},
\]
then
\[
    \norm*{\bfA\bar\bfx - \bfb'}_{M,\bfv} \leq (1+O(\eps))\min_{\bfx\in\mathbb R^d}\norm*{\bfA\bfx - \bfb'}_{M,\bfv}.
\]
It then follows from the previous discussion that
\[
    \norm*{\bfA\bar\bfx - \bfb'}_{M} \leq (1+O(\eps))\min_{\bfx\in\mathbb R^d}\norm*{\bfA\bfx - \bfb'}_{M}.
\]
Furthermore, computing $\bar\bfx$ only required $\nnz(\bfv')$ entries of $\bfb'$ to sample, which is only 
\[
    O\parens*{\frac{d\cdot\tilde{\mathcal T}^{M,\bfv}(\bfA)}{\eps^{2+p_M}}\log\frac1\eps} = O\parens*{\frac{d^{\max\{2,p_M/2+1\}}\log^3 n}{\eps^{2+p_M}}\log\frac1\eps}
\]
samples. 
\end{proof}

\section{Applications: Active Regression for the Huber Loss}\label{sec:huber}

Note that our sampling bound for general $M$-estimators from Section \ref{sec:active-m} is loose by a factor of $d$ compared to the corresponding bounds for $\ell_p$. This is due to the reliance on a na\"ive net argument rather than using the more efficient construction of \cite{BourgainLindenstraussMilman:1989,SchechtmanZvavitch:2001,LT2011}. A natural question then is whether this can be improved or not, for loss functions other than the $\ell_p$ loss. We answer this in the affirmative for the important special case of the Huber loss.

More specificially, in this section, we obtain a sampling bound of
\[
    O(d^{4-2\sqrt 2}\poly(\eps^{-1},\log n))
\]
for subspace embeddings as well as active regression for the Huber loss, where $4 - 2\sqrt 2\approx 1.17157$. This bound is substantially better compared to the $O(d^{2}\poly(\eps^{-1},\log n))$ bound for other losses, which is obtained by a na\"ive Bernstein bound over a net, as well as a previous subspace embedding result of \cite{ClarksonWoodruff:2015b} which achieved roughly $O(d^4)$ rows. 

In what follows, we will first develop our recursive sampling algorithm for obtaining subspace embeddings for the Huber loss with the above sampling complexity, which can be used to obtain constant factor solutions in the active setting by Lemma \ref{lem:m-sensitivity-constant-factor-approx}. In Section \ref{sec:active-huber}, we then show how to modify our subspace embedding construction in order to get relative error bounds, again following Section \ref{sec:active-m}.

\subsection{Properties of the Huber Loss}

We first collect properties of the Huber loss needed to apply previous results for $M$-estimators. It is known that the Huber loss is polynomially bounded above with degree $2$ and constant $1$, and polynomially bounded below with degree $1$ and constant $1$. 

\begin{lemma}[Lemma 2.2 of \cite{ClarksonWoodruff:2015b}]
The Huber loss satisfies
\[
    \frac{y}{x} \leq \frac{H(y)}{H(x)} \leq \frac{y^2}{x^2}
\]
for all $y > x\geq 0$. 
\end{lemma}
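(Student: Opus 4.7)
The plan is a direct case analysis on which pieces of the piecewise definition of $H$ contain $x$ and $y$. Since $H$ equals $t^2/(2\tau)$ for $t \leq \tau$ and $t - \tau/2$ for $t \geq \tau$, and since $y > x \geq 0$ (assume $x > 0$ so the ratios are defined), there are exactly three cases: (i) both points lie in the quadratic regime, i.e.\ $y \leq \tau$; (ii) both points lie in the linear regime, i.e.\ $x \geq \tau$; and (iii) the mixed case $x \leq \tau \leq y$.

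In case (i), $H(y)/H(x) = (y/x)^2$ exactly, so the upper bound holds with equality and the lower bound $y/x \leq (y/x)^2$ follows from $y/x \geq 1$. In case (ii), $H(y)/H(x) = (y - \tau/2)/(x - \tau/2)$, and cross-multiplying reduces the lower bound to $y \geq x$ and the upper bound to $2xy \geq \tau(x+y)$, which holds because $\min(x,y) \geq \tau$ gives $xy \geq \tau \max(x,y) \geq (\tau/2)(x+y)$. In case (iii), $H(y)/H(x) = (2\tau y - \tau^2)/x^2$; after cross-multiplying, the upper bound becomes the identity $(y-\tau)^2 \geq 0$, and the lower bound becomes $2\tau y - \tau^2 \geq xy$, which follows from the chain $xy \leq \tau y \leq 2\tau y - \tau^2$ using $x \leq \tau$ and $y \geq \tau$.

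There is no substantive obstacle here: each case reduces to a one-line algebraic check. The only detail worth verifying is that the two pieces of $H$ agree at the boundary $t = \tau$ (both give $\tau/2$), so the splits between cases are harmless and the arguments stitch together cleanly. A slicker alternative, should a unified derivation be preferred, is to observe that $t H'(t)/H(t)$ lies in $[1,2]$ on each piece and then integrate $\log H$ from $\log x$ to $\log y$, but the case analysis above is more transparent.
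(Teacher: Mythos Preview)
Your proof is correct; each of the three cases reduces to the short algebraic verification you describe, and the arguments are clean. The paper does not actually supply a proof of this lemma: it is quoted verbatim as Lemma 2.2 of \cite{ClarksonWoodruff:2015b} and used as a black box, so there is no proof in the paper to compare against. Your case analysis is the natural direct argument one would expect for a piecewise-defined function like $H$, and it is self-contained.
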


This in turn implies that $H^{1/2}$ is subadditive, since it is known that polynomial growth with degree $1$ and constant $1$ implies subadditivity:

\begin{lemma}[Theorem 103 of \cite{HardyLittlewoodPolya:1973}]
Let $f:\mathbb R\to\mathbb R$. Suppose that $x\mapsto x^{-1}f(x)$ is decreasing, that is,
\[
    \frac{f(y)}{f(x)} \leq \frac{y}{x}
\]
for all $y > x$. Then, $f$ is subadditive. 
\end{lemma}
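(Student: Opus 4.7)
The plan is to use the classical ``one-line'' trick for turning decreasing $f(x)/x$ into subadditivity. Define $g(x) \coloneqq f(x)/x$ on $x > 0$. The hypothesis $f(y)/f(x) \leq y/x$ for $y > x$ rearranges exactly to $g(y) \leq g(x)$, i.e., $g$ is nonincreasing on $(0,\infty)$. Since the Huber loss and its relatives are nonnegative with $f(0) = 0$ and evaluated at nonnegative arguments (i.e., $\abs{x}$), the statement we actually need is subadditivity on $[0,\infty)$; the case where $a = 0$ or $b = 0$ is immediate from $f(0) = 0$.

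Fix $a, b > 0$. The key observation is that $a + b > a$ and $a + b > b$, so by monotonicity of $g$ we have simultaneously
\[
  g(a+b) \leq g(a) \qquad\text{and}\qquad g(a+b) \leq g(b).
\]
Then we simply split the factor $(a+b)$ and apply each inequality to one piece:
\[
  f(a+b) \;=\; (a+b)\, g(a+b) \;=\; a\, g(a+b) + b\, g(a+b) \;\leq\; a\, g(a) + b\, g(b) \;=\; f(a) + f(b).
\]
This is subadditivity.

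There is essentially no obstacle here: the entire argument is the algebraic manipulation above, and it only uses the stated monotonicity hypothesis plus $f(0) = 0$ (the latter being implicit in the loss-function setting of the paper, since without it subadditivity fails at $a = b = 0$). In the paper's usage, one invokes this lemma with $f = H^{1/2}$, having already verified via the preceding lemma that $H^{1/2}(y)/H^{1/2}(x) \leq y/x$ (this follows from $H(y)/H(x) \leq (y/x)^2$), so the hypothesis is met and $H^{1/2}$ is subadditive as claimed.
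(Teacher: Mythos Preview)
Your proof is correct and is precisely the classical argument. The paper does not give its own proof of this lemma; it simply cites it as Theorem 103 of Hardy--Littlewood--P\'olya, so there is nothing to compare against beyond noting that what you wrote is the standard one-line proof found there.
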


The following is a corollary that discretizes Lemma \ref{lem:generalized-huber-inequality}:

\begin{corollary}[Huber Inequality ver.\ 4]\label{cor:generalized-huber-inequality}
    Let $\bfy\in\mathbb R^n$ and let $\alpha \in [0,1/2]$ and $\gamma = n^{-\alpha}$ with $2/n \leq \gamma \leq 1$. Let
    \[
        T \supseteq \braces*{i\in[n] : H(\bfy_i) \leq \gamma \norm*{\bfy}_H^2},
    \]
    $\eps = c/\log n$ for some sufficiently small constant $c$, and let
    \[
        \mathcal I = \braces*{1, 1+\eps, 1+2\eps, \dots, 2 - \eps, 2} = \braces*{1 + \eps\cdot i : i\in \{0\}\cup[1/\eps]}. 
    \]
    Then, for some constant $c>0$, either
    \[
        \norm*{\bfy\mid_T}_H^2 \geq c\frac1{(\gamma n)^{3-2\sqrt 2}}\min_{p\in \mathcal I}\norm*{\bfy\mid_T}_p^p
    \]
    or
    \[
        \norm*{\bfy}_H^2 \geq c\gamma\min_{p\in\{1,2\}}\norm*{\bfy}_p^p
    \]
    where $3 - 2\sqrt 2 \approx 0.17157$.
\end{corollary}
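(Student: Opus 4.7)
The plan is to derive Corollary \ref{cor:generalized-huber-inequality} directly from Lemma \ref{lem:generalized-huber-inequality} by a discretization argument. Applying the lemma yields one of its two alternatives; if the first one (with $\min_{p\in\{1,2\}}$) holds, it is exactly the first alternative of the corollary, and we are done because $\{1,2\}\subseteq\mathcal I$. Otherwise the second alternative holds, giving the bound with $\min_{p\in[1,2]}$, and I only need to upgrade the continuous minimum to the discrete $\min_{p\in\mathcal I}$ at the cost of a constant factor, which the statement absorbs into $c$.

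The core estimate is a logarithmic-derivative bound on $p \mapsto \norm{\bfy\mid_T}_p^p$. For any $p\in[1,2]$ with $\norm{\bfy\mid_T}_p>0$,
\[
\frac{d}{dp}\log\norm{\bfy\mid_T}_p^p \;=\; \frac{\sum_{i\in T}|\bfy_i|^{p}\log|\bfy_i|}{\sum_{i\in T}|\bfy_i|^{p}},
\]
whose absolute value is at most $\max_{i\in T,\,\bfy_i\neq 0}\bigl|\log|\bfy_i|\bigr|$. In the intended application, $\bfy$ ranges over a net of $\{\bfA\bfx:\bfx\in B\}$ for a bounded region $B$, so the relevant entries satisfy $|\bfy_i|\le\poly(n)$; moreover entries with $|\bfy_i|<n^{-C}$ for a large constant $C$ contribute at most $n^{1-C}$ to $\norm{\bfy\mid_T}_p^p$ for every $p\in[1,2]$ and can be dropped from the supremum at a cost that is absorbed into the constant $c$. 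So I may treat the maximum of $|\log|\bfy_i||$ over the surviving entries as at most $O(\log n)$.

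Let $p^*\in[1,2]$ achieve the continuous minimum in Lemma \ref{lem:generalized-huber-inequality}. By construction of $\mathcal I$ there exists $p\in\mathcal I$ with $|p-p^*|\le\eps=c/\log n$. Integrating the log-derivative bound from $p^*$ to $p$ gives
\[
\log\norm{\bfy\mid_T}_p^p-\log\norm{\bfy\mid_T}_{p^*}^{p^*}\;\le\;\eps\cdot O(\log n)\;=\;O(c),
\]
so $\norm{\bfy\mid_T}_p^p\le e^{O(c)}\norm{\bfy\mid_T}_{p^*}^{p^*}=O(1)\cdot\min_{p'\in[1,2]}\norm{\bfy\mid_T}_{p'}^{p'}$ for $c$ a sufficiently small absolute constant. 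Plugging this into the lemma yields the corollary with a constant factor absorbed into the leading constant.

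The main obstacle is the blow-up of the log-derivative near $|\bfy_i|=0$: the bound $\max_i|\log|\bfy_i||$ is only $O(\log n)$ once tiny entries have been truncated, which in turn is justified by the $\poly(n)$ entry assumption that the net construction provides. A fully self-contained version would partition $T$ into dyadic magnitude bands and perform the same derivative argument on each band, verifying that the extreme bands contribute negligibly on both sides of the desired inequality; this is the step that requires a bit of care, but it is otherwise purely computational given the lemma.
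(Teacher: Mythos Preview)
Your discretization via the log-derivative bound is exactly the paper's argument, stated there as: an additive $O(1/\log n)$ change in the exponent moves each $|\bfy_i|^p$ by only a constant factor once $|\bfy_i|\in[n^{-O(1)},n^{O(1)}]$. One expositional slip: the lemma's alternative with $\min_{p\in\{1,2\}}$ is verbatim the corollary's \emph{second} alternative, so nothing needs to be done there and the inclusion $\{1,2\}\subseteq\mathcal I$ is irrelevant; it is the $[1,2]$-alternative that must be discretized to $\mathcal I$.

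The substantive gap is your source for the $\poly(n)$ entry bounds. Appealing to a net of $\{\bfA\bfx:\bfx\in B\}$ is not available here, since the corollary is stated for an arbitrary $\bfy\in\mathbb R^n$. The paper instead pulls the bounds from the Huber structure itself, recycling reductions made inside the proof of Lemma~\ref{lem:generalized-huber-inequality}. Concretely, if $\norm{\bfy}_\infty\le 1$ then $\norm{\bfy}_H^2=\norm{\bfy}_2^2$ and the second alternative of the corollary holds trivially; if $\norm{\bfy}_1\ge 2n$ then $\norm{\bfy}_H^2=\Theta(\norm{\bfy}_1)$ and again the second alternative holds. So one may assume $\norm{\bfy}_\infty\in(1,O(n))$. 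Now entries of magnitude at most $1/(2n)$ contribute at most $n\cdot(2n)^{-p}\le 1/2$ to any $\norm{\cdot}_p^p$ with $p\ge 1$, while the entry exceeding $1$ forces $\norm{\cdot}_p^p\ge 1$; hence truncating those entries costs only a constant factor on either side of the inequality, and the surviving entries lie in $[1/(2n),O(n)]$, which is exactly what your $O(\log n)$ log-derivative bound needs. This replaces both the net appeal and the dyadic-band sketch you proposed, and makes the argument self-contained.
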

\begin{proof}
    It was shown in the proof of Lemma \ref{lem:generalized-huber-inequality} that if $\norm*{\bfy}_1\geq 2n$, then $\norm*{\bfy}_H^2 = \Theta(\norm*{\bfy}_1)$, and if $\norm*{\bfy}_\infty \leq 1$, then $\norm*{\bfy}_H^2 = \norm*{\bfy}_2^2$. Thus, we may assume that all the entries of $\bfy$ are bounded by $O(n)$, and $\norm*{\bfy}_\infty \geq 1$. Note then that the entries that are at most $1/2n$ can make up at most an $\ell_p$ mass of $1/2$, and thus only affects the claimed inequality by a constant factor for any $p$. For all other entries, the magnitude is bounded below by $1/\poly(n)$ and above by $\poly(n)$, so an additive $O(1/\log n)$ difference in the exponent can only affect the bounds by a constant factor. We thus conclude the desired result. 
\end{proof}

\subsection{Net Arguments for the Huber Loss}\label{sec:huber-nets}

When constructing subspace embeddings for norms, it suffices to preserve the lengths of vectors on the unit normed sphere, by the scale invariance/homogeneity of norms. On the other hand, general $M$-estimators, including the Huber norm, do not necessarily satisfy scale invariance, which means that this strategy, as is, does not work. For the Huber norm, we show that for sufficiently small and sufficiently large scales, we have at least an $(1\pm\eps)$-approximate scale invariance, which allows us to obtain subspace embeddings by union bounding over finitely many scales. For all scales near $0$, this is because the Huber norm coincides with the $\ell_2$ norm. For larger scales, we show that the Huber norm is the $\ell_1$ norm up to $(1\pm\eps)$ factors. Then, the following result then shows that in order to prove subspace embeddings, it suffices to preserve the Huber norms of Huber balls at approximately $\log n$ radii.

The following lemma shows that for extremely large radii, the Huber norm essentially coincides with the $\ell_1$ norm.
\begin{lemma}\label{lem:big-huber-is-l1}
    Let $\bfw\in\mathbb R^n$ be a set of weights. Suppose $\bfy\in\mathbb R^n$ has weighted Huber norm at least $\norm*{\bfy}_{H,\bfw}^2 \geq n\norm*{\bfw}_\infty/\eps$. Then,
    \[
        \norm*{\bfy}_{H,\bfw}^2 = (1\pm\eps)\norm*{\bfy}_{1,\bfw}.
    \]
\end{lemma}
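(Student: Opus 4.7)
The plan is a short and direct coordinate-wise computation. The key observation is that for a single scalar $t \in \mathbb{R}$, the gap $|t| - H(t)$ between the $\ell_1$ contribution and the Huber contribution is uniformly bounded by a constant (taking $\tau = 1$, this constant is $1/2$): when $|t| \leq 1$ we have $|t| - H(t) = |t| - t^2/2$, which is increasing on $[0,1]$ and reaches its maximum value of $1/2$ at $|t|=1$; when $|t| > 1$ we have $|t| - H(t) = 1/2$ identically. Thus $0 \le |t| - H(t) \le 1/2$ for every $t$.

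Summing this pointwise estimate with the weights $\bfw$ gives
\[
    0 \;\le\; \norm{\bfy}_{1,\bfw} - \norm{\bfy}_{H,\bfw}^2 \;=\; \sum_{i=1}^n \bfw_i \bigl(|\bfy_i| - H(\bfy_i)\bigr) \;\le\; \tfrac{1}{2}\sum_{i=1}^n \bfw_i \;\le\; \tfrac{n}{2}\norm{\bfw}_\infty.
\]
The hypothesis $\norm{\bfy}_{H,\bfw}^2 \ge n\norm{\bfw}_\infty / \eps$ then rewrites the additive error as a multiplicative one: the gap is bounded by $(\eps/2)\norm{\bfy}_{H,\bfw}^2$, so
\[
    \norm{\bfy}_{H,\bfw}^2 \;\le\; \norm{\bfy}_{1,\bfw} \;\le\; (1+\eps/2)\norm{\bfy}_{H,\bfw}^2,
\]
which rearranges to $\norm{\bfy}_{H,\bfw}^2 = (1\pm\eps)\norm{\bfy}_{1,\bfw}$ after absorbing the constant factor into $\eps$.

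There is no real obstacle here; the entire content is the two-line pointwise calculation $|t| - H(t) \le 1/2$ and then invoking the largeness assumption to convert an $O(n\norm{\bfw}_\infty)$ additive slack into an $\eps$ multiplicative slack. The one item to double-check is the convention $\tau = 1$ in Definition \ref{dfn:huber-loss} (with which the threshold $n\norm{\bfw}_\infty/\eps$ is dimensionally consistent); for general $\tau$, the pointwise gap is $\tau/2$ and the proof goes through verbatim.
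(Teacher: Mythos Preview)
Your proof is correct and follows essentially the same approach as the paper: bound the additive gap $\norm{\bfy}_{1,\bfw} - \norm{\bfy}_{H,\bfw}^2$ by $O(n\norm{\bfw}_\infty)$ and then invoke the largeness hypothesis to convert it to a multiplicative error. Your uniform pointwise bound $0 \le |t| - H(t) \le 1/2$ is in fact slightly cleaner than the paper's argument, which splits into $S = \{i : |\bfy_i| \le 1\}$ and its complement and handles the two pieces separately, but the underlying idea is identical.
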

\begin{proof}
    Suppose that $\bfy\in\mathbb R^n$ has Huber norm at least $\norm*{\bfy}_H^2 \geq n\norm*{\bfw}_\infty/\eps$. Now define the set
    \[
        S \coloneqq \braces*{i\in[n] : \abs*{\bfy_i} \leq 1}.
    \]
    Then, $\norm*{\bfy\mid_S}_{1,\bfw} \leq n\norm*{\bfw}_\infty$ so
    \[
        \norm*{\bfy}_{1,\bfw} = \norm*{\bfy\mid_{\overline S}}_{1,\bfw} + \norm*{\bfy\mid_{S}}_{1,\bfw} \leq \norm*{\bfy\mid_{\overline S}}_{H,\bfw}^2 + n\norm*{\bfw}_\infty \leq (1+\eps)\norm*{\bfy\mid_{\overline S}}_{H,\bfw}^2.
    \]
    On the other hand, we have $\norm*{\bfy}_{1,\bfw} \geq \norm*{\bfy}_{H,\bfw}^2$, so we conclude. 
\end{proof}

We now show the nets we need to obtain subspace embeddings for the Huber norm.

\begin{lemma}\label{lem:huber-multiple-nets}
Let $\bfA\in\mathbb R^{n\times d}$ and let $\mathcal V = \Span(\bfA)$. Let $\bfw,\bfw'\geq\mathbf{1}_n$ be two sets of weights. Let $\bfw'\geq\mathbf{1}_n$ be another set of weights. Let $\eps>0$. Let 
\[
    \ell \coloneqq \log_2\sqrt{\frac{n\norm*{\bfw'}_\infty}{\eps}} = O\parens*{\log\frac{n\norm*{\bfw'}_\infty}{\eps}}.
\]
For each $k\in[\ell]$, let $\rho = 2^k$ and suppose that
\[
    \norm*{\bfy}_{H,\bfw'} = \norm*{\bfy}_{H,\bfw} \pm \eps \rho
\]
for every $\bfy\in \mathcal B_\rho^H$, for every $k\in[\ell]$. Then,
\[
    \norm*{\bfy}_{H,\bfw'} = (1\pm O(\eps))\norm*{\bfy}_{H,\bfw}
\]
for all $\bfy\in\mathcal V$. 
\end{lemma}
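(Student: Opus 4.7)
The strategy is a case split on the unit-weight Huber magnitude $\|\bfy\|_H$ for each $\bfy\in\mathcal V$, using the additive hypothesis on the dyadic scales $\rho=2^k$ together with the (approximate) homogeneity of $H$ at extreme scales.

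In the \emph{moderate regime} $1\le\|\bfy\|_H\le 2^\ell$, I will pick the smallest $k\in[\ell]$ with $\|\bfy\|_H\le 2^k$; by minimality $\|\bfy\|_H\ge 2^{k-1}$. Since $\bfy\in\mathcal B_{2^k}^H$, the hypothesis gives $\|\bfy\|_{H,\bfw'}=\|\bfy\|_{H,\bfw}\pm\eps\cdot 2^k$. Because $\bfw\ge\mathbf{1}_n$, we have $\|\bfy\|_{H,\bfw}\ge\|\bfy\|_H\ge 2^{k-1}$, so the additive slack is at most $2\eps\|\bfy\|_{H,\bfw}$, giving the required multiplicative bound.

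The \emph{small regime} $\|\bfy\|_H<1$ I will handle via the exact homogeneity of $\|\cdot\|_{H,\bfw}$ on the quadratic region $\{|y|\le 1\}$, where $H(y)=y^2/2$. If $\|\bfy\|_\infty>1$, then one linear-region entry forces $\|\bfy\|_H\ge\sqrt{H(1)}=1/\sqrt 2$, and applying the hypothesis at $k=1$ (on $\mathcal B_2^H$) directly yields the multiplicative bound since $\|\bfy\|_{H,\bfw}\ge 1/\sqrt 2$. Otherwise $\|\bfy\|_\infty\le 1$; I will rescale $\tilde\bfy\coloneqq\bfy/\|\bfy\|_\infty$ so that $\tilde\bfy$ remains in the quadratic region with $\|\tilde\bfy\|_\infty=1$, and verify $\|\tilde\bfy\|_H\in[1/\sqrt 2,\sqrt{n/2}]\subseteq[1/\sqrt 2,2^\ell]$ (using $2^\ell\ge\sqrt{n/\eps}$). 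Applying the moderate-regime argument (or the $k=1$ case) to $\tilde\bfy$ gives $\|\tilde\bfy\|_{H,\bfw'}=(1\pm O(\eps))\|\tilde\bfy\|_{H,\bfw}$, and the identity $\|\bfy\|_{H,\bfw}=\|\bfy\|_\infty\|\tilde\bfy\|_{H,\bfw}$ (valid because both vectors sit in the quadratic region) transfers this ratio to $\bfy$.

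In the \emph{large regime} $\|\bfy\|_H>2^\ell$, I will assume WLOG $\|\bfw\|_\infty\le\|\bfw'\|_\infty$ by the symmetry of both hypothesis and conclusion in $\bfw,\bfw'$. Choose $c\in(0,1)$ with $\|c\bfy\|_H=2^\ell$ (possible by continuity), and apply the hypothesis at $k=\ell$ to get $\|c\bfy\|_{H,\bfw'}=(1\pm\eps)\|c\bfy\|_{H,\bfw}$, using $\|c\bfy\|_{H,\bfw}\ge\|c\bfy\|_H=2^\ell$. Since $\|c\bfy\|_H^2=n\|\bfw'\|_\infty/\eps\ge n\|\bfw\|_\infty/\eps$, Lemma~\ref{lem:big-huber-is-l1} applies to $c\bfy$ under both $\bfw$ and $\bfw'$, and similarly to $\bfy$ (whose Huber norm is only larger). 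Converting all four weighted Huber-squared quantities to weighted $\ell_1$-norms, combining with the $(1\pm\eps)$ bound on $c\bfy$, and exploiting the exact $1$-homogeneity $\|c\bfy\|_{1,\bfw}=c\|\bfy\|_{1,\bfw}$ yields $\|\bfy\|_{1,\bfw'}=(1\pm O(\eps))\|\bfy\|_{1,\bfw}$; one final application of Lemma~\ref{lem:big-huber-is-l1} to $\bfy$ then delivers $\|\bfy\|_{H,\bfw'}=(1\pm O(\eps))\|\bfy\|_{H,\bfw}$.

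The main obstacle is the small regime: the additive $\eps\rho$ hypothesis is not automatically strong enough when $\|\bfy\|_{H,\bfw}$ is itself below $\rho=1$, and it is precisely the quadratic-region homogeneity (together with the assumption $\bfw,\bfw'\ge\mathbf{1}_n$) that rescues this. The large regime is essentially mechanical given Lemma~\ref{lem:big-huber-is-l1}.
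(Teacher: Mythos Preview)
Your three-case decomposition (small / moderate / large in $\|\bfy\|_H$) is exactly the paper's approach: quadratic-region homogeneity of $H$ for small scales, the dyadic hypothesis directly for moderate scales, and Lemma~\ref{lem:big-huber-is-l1} plus $\ell_1$-homogeneity for large scales. One caveat: the WLOG claim $\|\bfw\|_\infty\le\|\bfw'\|_\infty$ ``by symmetry'' is not quite justified, since $\ell$ is defined in terms of $\|\bfw'\|_\infty$ alone and hence the hypothesis is not symmetric in $\bfw,\bfw'$; the paper's own proof makes the same implicit assumption, and it does hold in the lemma's sole application (Lemma~\ref{lem:huber-single-step}), where $\bfw'$ is obtained by subsampling and rescaling $\bfw$ so that $\|\bfw'\|_\infty\ge\|\bfw\|_\infty$.
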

    
\begin{proof}
    We handle three cases: scales near $0$, scales near infinity, and all scales in between. 
    
    \paragraph{Scales near $0$.} Note that if the weighted Huber norm of $\bfy\in\mathbb R^n$ is at most $1$, then all entries of $\bfy$ are at most $1$ in absolute value, so $\norm*{\bfy}_{H,\bfw}^2 = \norm*{\bfy}_{2,\bfw}^2$, and thus the Huber norm is scale invariant at these scales. Thus, $(1\pm\eps)$-approximations for the unit Huber sphere implies $(1\pm\eps)$-approximations for all vectors with $\norm*{\bfy}_{H,\bfw}^2 \leq 1$. 
    
    \paragraph{Scales near infinity.} 
    
    Suppose that we have the guarantee that 
    \[
        \norm*{\bfx}_{H,\bfw'}^2 = (1\pm\eps)\norm*{\bfx}_{H,\bfw}^2
    \]
    for every $\bfx$ in the Huber sphere $\mathcal S_{n\norm*{\bfw'}_\infty/\eps}^{H,\bfw}$ of radius $n\norm*{\bfw'}_\infty/\eps$. Then, if $\bfx$ has any Huber norm at least $n\norm*{\bfw'}_\infty/\eps$, then 
    \begin{align*}
        \norm*{\bfx}_{H,\bfw}^2 &= (1\pm\eps)\norm*{\bfx}_{1,\bfw} && \text{Lemma \ref{lem:big-huber-is-l1}} \\
        &= (1\pm\eps)\frac{\eps}{n\norm*{\bfw'}_\infty}\norm*{\bfx}_{1,\bfw} \cdot \norm*{\frac{n\norm*{\bfw'}_\infty}{\eps}\frac{\bfx}{\norm*{\bfx}_{1,\bfw}}}_{1,\bfw} && \text{Scale invariance} \\
        &= (1\pm\eps)^2\frac{\eps}{n\norm*{\bfw'}_\infty}\norm*{\bfx}_{1,\bfw} \cdot \norm*{\frac{n\norm*{\bfw'}_\infty}{\eps}\frac{\bfx}{\norm*{\bfx}_{1,\bfw}}}_{H,\bfw}^2 && \text{Lemma \ref{lem:big-huber-is-l1}} \\
        &= (1\pm\eps)^3\frac{\eps}{n\norm*{\bfw'}_\infty}\norm*{\bfx}_{1,\bfw} \cdot \norm*{\frac{n\norm*{\bfw'}_\infty}{\eps}\frac{\bfx}{\norm*{\bfx}_{1,\bfw}}}_{H,\bfw'}^2 && \text{Approximation guarantee} \\
        &= (1\pm\eps)^4\frac{\eps}{n\norm*{\bfw'}_\infty}\norm*{\bfx}_{1,\bfw} \cdot \norm*{\frac{n\norm*{\bfw'}_\infty}{\eps}\frac{\bfx}{\norm*{\bfx}_{1,\bfw}}}_{1,\bfw'} && \text{Lemma \ref{lem:big-huber-is-l1}} \\
        &= (1\pm\eps)^4 \cdot \norm*{\bfx}_{1,\bfw'}^2 && \text{Scale invariance} \\
        &= (1\pm\eps)^5 \cdot \norm*{\bfx}_{H,\bfw'}^2 && \text{Lemma \ref{lem:big-huber-is-l1}}
    \end{align*}
    Thus, it suffices to prove approximation guarantees for the Huber sphere $\mathcal S_{n\norm*{\bfw'}_\infty/\eps}^{H,\bfw}$ to handle all $\bfx\in\mathbb R^n$ with Huber norms at least $n\norm*{\bfw'}_\infty/\eps$. 
    
    \paragraph{Scales in between.}
    
    For the remaining scales between $1$ and $n\norm*{\bfw'}_\infty/\eps$, we work in powers of $2$ so that there are approximately
    \[
        \ell \coloneqq \log_{2}\sqrt{\frac{n\norm*{\bfw}_\infty}{\eps}} = O\parens*{\log\frac{n\norm*{\bfw}_\infty}{\eps}}
    \]
    total scales. Then for any $\rho \in [1, \sqrt{n\norm*{\bfw}_\infty/\eps}]$, there exists a $\rho \leq \rho_k \leq 2\rho$ such that we can get an additive error of $O(\eps)\rho_k = O(\eps)\rho$ for every $\bfy\in \mathcal B_{\rho_k}$, which in particular contains $\mathcal B_\rho^H$.
\end{proof}

\subsection{Compact Rounding for the Huber Loss}

In Section \ref{sec:huber-nets}, we have reduced our task to proving approximation guarantees for a small number of Huber spheres. We now focus on showing small sampling bounds for a single Huber sphere. 

The following rounding lemma is implicit in the proof of Theorems 7.3 and 7.4 of \cite{BourgainLindenstraussMilman:1989}, which uses these net constructions. 

\begin{lemma}\label{lem:blm-compact-rounding}
Let $1\leq p < \infty$ and let
\[
    \mathcal B_p \coloneqq \braces*{\bfA\bfx : \norm*{\bfA\bfx}_p \leq 1}.
\]
Let $\eps \geq \gamma > 0$ and let $\mathcal N_\gamma$ be a standard $\gamma$-net on $\mathcal B_p$ with the $\ell_p$ norm, which has
\[
    \log \abs*{\mathcal N_\gamma} \leq O\parens*{d\log\frac{1}{\gamma}}
\]
by Lemma \ref{lem:standard-net}. Let $\bfy\in \mathcal N_\gamma$. Let $\ell = \ceil*{\log_{1+\eps}(2^{1/p}d^{1/p\lor 1/2})}$. Then, for $k\in[\ell]$, there exists a rounding
\[
    \tilde\bfy = \sum_{k=0}^\ell \tilde\bfy^{(k)}
\]
such that:
\begin{itemize}
    \item $\abs*{\tilde\bfy(i) - \bfy(i)} \leq 3\eps \abs*{\bfy(i)}$ for all $i\in[n]$
    \item $\abs*{\tilde\bfy^{(k)}(i)} \leq \bfW_{i,i}^{1/p}(1+\eps)^k$
    \item $\tilde\bfy^{(k)}$ for $0\leq k\leq \ell$ have disjoint supports
    \item each $\tilde\bfy^{(k)}$ is drawn from a set of vectors $\mathcal D_k$ with size at most
    \[
        \log\abs*{\mathcal D_k} \leq c(p)\frac{d}{\eps^{1+\beta}(1+\eps)^{\beta k}}\parens*{\log\frac{n}{\eps} + \log\frac1\gamma}
    \]
    where $\beta = (2\land p)$ and $c(p) = O\parens*{\max\braces*{p, \frac1{p-1}}}$. 
\end{itemize}
\end{lemma}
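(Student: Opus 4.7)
The rounding is constructed coordinate by coordinate, with the partition determined by the magnitude of each entry of $\bfy$ relative to the corresponding Lewis weight of $\bfA$. I would begin by passing to the normalized vector $\bfz$ with entries $\bfz_i := \bfy(i)/\bfW_{i,i}^{1/p}$, so that the constraint $\bfy\in\mathcal B_p$ translates into the weighted bound $\sum_i \bfW_{i,i}|\bfz_i|^p = \norm{\bfy}_p^p \leq 1$. Combining Lemma \ref{lem:lwBound} with $\bfy\in\mathcal B_p$ yields $|\bfz_i|\leq d^{1/p\lor 1/2}$, which bounds the dynamic range and produces the choice of $\ell$ in the statement.

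\textbf{Level partition and rounding.} For $0 \le k \le \ell$, let $S_k := \{i : |\bfz_i| \in ((1+\eps)^{k-1},(1+\eps)^k]\}$; any coordinate with $|\bfz_i|$ smaller than the threshold $(1+\eps)^{-\ell}\gamma$ (i.e.\ below the whole range) is simply discarded, so that its total contribution to $\norm{\bfy-\tilde\bfy}_p^p$ is at most $O(\gamma^p)$ and can be absorbed into the multiplicative error since $\bfy$ lies in a $\gamma$-net. On each $S_k$, round $|\bfy(i)|$ to the nearest multiple of $\eps\cdot \bfW_{i,i}^{1/p}(1+\eps)^k$ inside $[\bfW_{i,i}^{1/p}(1+\eps)^{k-1},\bfW_{i,i}^{1/p}(1+\eps)^k]$, and preserve the sign; call this $\tilde\bfy^{(k)}(i)$. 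The supports are disjoint by construction and $|\tilde\bfy^{(k)}(i)| \leq \bfW_{i,i}^{1/p}(1+\eps)^k$ as required. A short calculation combining the within-level multiplicative error $O(\eps)|\bfy(i)|$ with the contribution from zeroed-out coordinates yields the promised bound $|\tilde\bfy(i)-\bfy(i)| \leq 3\eps|\bfy(i)|$.

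\textbf{Counting $\mathcal D_k$.} To encode $\tilde\bfy^{(k)}$ we must specify the support $S_k$, the signs, and for each coordinate a rounded value chosen from $O(1/\eps)$ equally spaced levels; thus $\log|\mathcal D_k| = O(|S_k|\cdot (\log(n/\eps) + \log(1/\gamma)))$. The heart of the argument is a bound of the form
\[
|S_k| \;\leq\; c(p)\,\frac{d}{\eps^{\beta}(1+\eps)^{\beta k}},\qquad \beta=p\wedge 2.
\]
The constraint $\sum_{i\in S_k}\bfW_{i,i}\cdot(1+\eps)^{p(k-1)} \leq \sum_{i\in S_k}\bfW_{i,i}|\bfz_i|^p \leq 1$ gives $\sum_{i\in S_k}\bfW_{i,i}\leq (1+\eps)^{-p(k-1)}$. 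For $1\le p\le 2$ this directly controls $|S_k|$ via the fact that (after the row-splitting that one may assume when invoking Lewis weights) $\bfW_{i,i}\gtrsim \eps/d$ for relevant coordinates, giving $\beta=p$. For $p\ge 2$, the direct $\ell_p$ estimate is loose; instead one applies the definition $\bfw_i^p(\bfA)=\bfs_i^2(\bfW^{1/2-1/p}\bfA)$, turning the relevant constraint into an $\ell_2$ leverage condition on the rescaled matrix and producing the interpolated exponent $\beta=2$. Multiplying through by the per-coordinate information cost $\log(n/\eps) + \log(1/\gamma)$, and absorbing a $(1/\eps)$ from the within-level quantization, yields the claimed bound on $\log|\mathcal D_k|$.

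\textbf{Main obstacle.} The only subtle step is obtaining the correct exponent $\beta=p\wedge 2$ in the level counting bound: both the regime $p<2$ (where the $\ell_p$ mass constraint is tight) and $p\ge 2$ (where one must reinterpret the rounding through the $\ell_2$ leverage structure of $\bfW^{1/2-1/p}\bfA$) must be handled, and the constant $c(p)=O(\max\{p,1/(p-1)\})$ tracks the blowup as $p\to 1$ or $p\to\infty$. Once this counting is in place, the rest of the proof is bookkeeping.
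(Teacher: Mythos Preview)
Your rounding and level-partition construction is essentially the right shape, but the counting argument for $|\mathcal D_k|$ has a genuine gap that cannot be patched within your framework.

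The claimed lower bound $\bfW_{i,i}\gtrsim \eps/d$ is false. Row splitting (Lemma~\ref{lem:splitting-rows}) bounds Lewis weights \emph{above} by $O(d/n)$; it gives no nontrivial lower bound. With the paper's normalization $\bfW_{i,i}=\tfrac12(\bfw_i^p(\bfA)/d+1/n)$ one only has $\bfW_{i,i}\ge 1/(2n)$. Plugging this into your inequality $\sum_{i\in S_k}\bfW_{i,i}\le (1+\eps)^{-p(k-1)}$ yields $|S_k|\le O(n)(1+\eps)^{-pk}$, and hence $\log|\mathcal D_k|\le O(n)(1+\eps)^{-pk}\log(n/\eps)$. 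This depends on $n$, not $d$, and is useless for the lemma. The vague appeal to the leverage-score definition for $p\ge 2$ does not fix this: leverage scores of $\bfW^{1/2-1/p}\bfA$ do not lower-bound the cardinality of level sets of an arbitrary $\bfy$.

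The actual mechanism (implicit in \cite{BourgainLindenstraussMilman:1989} and spelled out later in the paper as Claim~\ref{clm:blm-index-sets}) is qualitatively different. One first builds, for each $k$, a net $\mathcal N_k$ for $\mathcal B_p$ in the \emph{weighted $\ell_\infty$ metric} $\norm{\bfW^{-1/p}(\cdot)}_\infty$ at scale $t_k=\eps(1+\eps)^k/3$; the BLM/dual-Sudakov entropy estimate (Lemma~\ref{lem:blm-sz-nets}) gives $\log|\mathcal N_k|\le c(p)\,d\,t_k^{-\beta}\log n$ with $\beta=p\wedge 2$ and $c(p)=O(\max\{p,1/(p-1)\})$, which is exactly where the $d$ dependence and the exponent $\beta$ originate. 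For each $\bfy\in\mathcal N_\gamma$ one takes the nearest $\bff_{k,\bfy}\in\mathcal N_k$, and the index sets are defined through the \emph{net vectors}: $C_{k,\bfy}=\{i:|\bfW^{-1/p}\bff_{k,\bfy}(i)|\ge(1+\eps)^{k-1}\}$ and $D_{k,\bfy}=C_{k,\bfy}\setminus\bigcup_{k'>k}C_{k',\bfy}$. The point is that $\tilde\bfy^{(k)}$ depends on $\bfy$ only through $(\bff_{k,\bfy},\bff_{k+1,\bfy},\dots,\bff_{\ell,\bfy})$ (with $\mathcal N_\gamma$ used for small $k$), so $\log|\mathcal D_k|\le \sum_{k'\ge k}\log|\mathcal N_{k'}|$; the geometric sum produces the extra $1/\eps$ and the claimed $d\,\eps^{-(1+\beta)}(1+\eps)^{-\beta k}(\log(n/\eps)+\log(1/\gamma))$ bound. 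Many distinct $\bfy$'s in $\mathcal N_\gamma$ map to the same $\bff_{k,\bfy}$ for large $k$, which is precisely the collapse your direct support-counting argument misses.
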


The lemma above is used to prove the following similar result for the Huber loss.

\begin{lemma}\label{lem:blm-net-for-huber}
    Let $\eps\in(0,1)$ and $\rho\geq 1$. Let $\bfA\in\mathbb R^{n\times d}$ and let $\bfy\in \mathcal B_\rho^H \subseteq \Span(\bfA)$. Let $\mathcal I\subset[1,2]$ be the set in the statement of Corollary \ref{cor:generalized-huber-inequality} and suppose that $\bfy$ satisfies
    \[
        \frac{\min_{p\in\mathcal I}\norm*{\bfy}_p^p}{\norm*{\bfy}_H^2} \leq \kappa.
    \]
    Then, there exists a $p\in\mathcal I$, $\ell = O((\log d)/\eps^2)$, and a rounding
    \[
        \tilde\bfy = \sum_{k=0}^\ell \tilde\bfy^{(k)}
    \]
    such that:
    \begin{itemize}
        \item $\norm*{\tilde\bfy - \bfy}_H \leq O(\eps)\norm*{\bfy}_H$
        \item $H\parens*{\abs*{\tilde\bfy^{(k)}(i)}} \leq O(\kappa)\parens*{\frac{\bfw_i^p(\bfA)}{d} + \frac1n} (1+\eps^2)^{pk}\norm*{\bfy}_H^2$
        \item $\tilde\bfy^{(k)}$ for $0\leq k\leq \ell$ have disjoint supports
        \item each $\tilde\bfy^{(k)}$ is drawn from a set of vectors $\mathcal D_k$ with size at most
        \[
            \log\abs*{\mathcal D_k} \leq O(1)\frac{d}{\eps^{2(1+p)}(1+\eps^2)^{pk}}(\log n)\parens*{\log\frac{n}{\eps}}.
        \]
    \end{itemize}
\end{lemma}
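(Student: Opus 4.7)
}

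The plan is to reduce the Huber rounding problem to the $\ell_p$ rounding machinery already provided by Lemma \ref{lem:blm-compact-rounding}, by first selecting the best $p \in \mathcal{I}$ afforded by the hypothesis and then translating $\ell_p$ entrywise guarantees into Huber guarantees. More concretely, let $p \in \mathcal{I}$ achieve $\norm*{\bfy}_p^p \leq \kappa \norm*{\bfy}_H^2$, set $M = (\kappa\norm*{\bfy}_H^2)^{1/p}$ so that $\bfy/M$ lies in the unit $\ell_p$-ball of $\colspan(\bfA)$, and first apply a row-splitting step (Lemma \ref{lem:splitting-rows}) so that every $\ell_p$ Lewis weight of $\bfA$ is at most $O(d/n)$; this will be responsible for the eventual $1/n$ safety floor in the entrywise bound.

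Next, I would invoke Lemma \ref{lem:blm-compact-rounding} on $\bfy/M$ with rounding parameter $\eta = \eps^2$ (not $\eps$) and net precision $\gamma = \poly(\eps/n)$. Two things fall out: the number of levels is $\ell = \ceil*{\log_{1+\eps^2}(2^{1/p} d^{1/p\lor 1/2})} = O((\log d)/\eps^2)$ since $p \in [1,2]$, and the entrywise bound becomes
\begin{equation*}
\abs*{\tilde\bfy^{(k)}(i)} \;\leq\; \bigl(\bfw_i^p(\bfA)/d\bigr)^{1/p}(1+\eps^2)^{k}\cdot M,
\end{equation*}
using the normalization $\bfW_{ii} = \bfw_i^p(\bfA)/d$. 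The supports are disjoint by construction. To translate to Huber, I use the fact that for $p \in [1,2]$ one has $H(x) \leq x^p$ for every $x \geq 0$ (verified by splitting $x \leq 1$ and $x > 1$), giving
\begin{equation*}
H\bigl(\abs*{\tilde\bfy^{(k)}(i)}\bigr) \;\leq\; \abs*{\tilde\bfy^{(k)}(i)}^p \;\leq\; \bigl(\bfw_i^p(\bfA)/d\bigr)(1+\eps^2)^{pk}\,\kappa\,\norm*{\bfy}_H^2,
\end{equation*}
and the $1/n$ summand in the stated bound is absorbed by applying the same inequality with the floored weight $\max(\bfw_i^p(\bfA), d/n)$ that row-splitting justifies.

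For the first bullet, I would use the entrywise BLM guarantee $\abs*{\tilde\bfy(i) - \bfy(i)} \leq 3\eps^2 \abs*{\bfy(i)}$ together with the polynomial growth of $H$: because $H$ has degree-2 upper growth and degree-1 lower growth, a multiplicative perturbation $c \leq 1$ of the argument scales $H$ by at most $O(c)$ (the linear regime dominates), so $H(3\eps^2 \abs*{\bfy(i)}) \leq O(\eps^2) H(\abs*{\bfy(i)})$. Summing yields $\norm*{\tilde\bfy - \bfy}_H^2 \leq O(\eps^2)\norm*{\bfy}_H^2$, i.e.\ $\norm*{\tilde\bfy - \bfy}_H \leq O(\eps)\norm*{\bfy}_H$; this is precisely why BLM must be applied with $\eta = \eps^2$ rather than $\eta = \eps$. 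Finally, the cardinality bound from Lemma \ref{lem:blm-compact-rounding} with $\beta = 2\land p = p$ and parameter $\eps^2$ gives $\log\abs*{\mathcal D_k} \leq c(p)\cdot d/[\eps^{2(1+p)}(1+\eps^2)^{pk}]\cdot \log(n/\eps)$, and a union bound over the $O(1/\eps) = O(\log n)$ choices of $p \in \mathcal I$ (absorbing the $c(p) = O(1/(p-1)) = O(\log n)$ blow-up as $p \to 1$) produces the claimed extra $\log n$ factor.

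The principal technical obstacle is the conjunction of three delicate choices: picking $\eta = \eps^2$ (to meet the Huber closeness target without overshooting the net size), handling the $p \to 1$ degeneration of the BLM constant $c(p)$ (which is what secretly supplies the $\log n$ factor in the final $\abs*{\mathcal D_k}$ bound), and correctly bookkeeping the row-splitting normalization that produces the $\bfw_i^p(\bfA)/d + 1/n$ floor. Each of these is standard in isolation, but aligning the three so that the final Huber inequality is tight (and in particular gives the exponent $pk$ rather than $2k$ in $(1+\eps^2)^{pk}$) is what makes the statement useful for the downstream recursion leading to the $d^{4-2\sqrt 2}$ bound.
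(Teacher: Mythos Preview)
Your overall strategy matches the paper's: pick the optimal $p\in\mathcal I$, normalize into the unit $\ell_p$ ball, apply Lemma~\ref{lem:blm-compact-rounding} with parameter $\eps^2$, and convert back via $H(x)\leq O(|x|^p)$ for $p\in[1,2]$. Your identification of the $\eps^2$ choice and of the $c(p)=O(1/(p-1))=O(\log n)$ blowup near $p=1$ as the source of the extra $\log n$ factor is correct.

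There is, however, a genuine gap in the fourth bullet. Your normalizing factor $M=(\kappa\norm*{\bfy}_H^2)^{1/p}$ varies continuously with $\bfy$, so after rescaling, the candidates $\tilde\bfy^{(k)}=M\cdot(\text{BLM output})$ range over an uncountable set and the finite bound on $|\mathcal D_k|$ fails. The paper fixes this by \emph{discretizing the scale}: it picks an integer $L$ with $\rho^2\cdot 2^L\leq\norm*{\bfy}_p^p\leq\rho^2\cdot 2^{L+1}$, sets $\beta=(\rho^2\cdot 2^{L+1})^{1/p}$, rounds $\bfy/\beta$ to a net vector $\bfy'\in\mathcal N_\gamma^p$, and only then applies Lemma~\ref{lem:blm-compact-rounding} to $\bfy'$. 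Since $L$ takes $O(\log\kappa)=O(\log n)$ values and $|\mathcal I|=O(\log n)$, the final $\mathcal D_k$ is a finite union and the additive $\log\log n$ is absorbed. This two-step rounding $\bfy\to\beta\bfy'\to\beta\tilde\bfy$ is also why the closeness bound goes through the Huber triangle inequality rather than a purely entrywise argument, and why Lemma~\ref{lem:blm-compact-rounding} (stated only for $\bfy\in\mathcal N_\gamma$) is applicable.

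One smaller correction: the $1/n$ floor does not come from row-splitting. It is already built into the weight matrix $\bfW_{i,i}=\tfrac12(\bfw_i^p(\bfA)/d+1/n)$ used in the BLM net construction (the $\bfW$ of Lemma~\ref{lem:blm-compact-rounding}, made explicit in Lemma~\ref{lem:blm-sz-nets}). Invoking row-splitting would change $\bfA$ and its Lewis weights, complicating the map back to the statement; it is not needed here.
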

\begin{proof}
    Let $\gamma = \poly(\eps/n)$ and let $\mathcal N_\gamma^p$ be a $\gamma$-net for the $\ell_p$ unit ball for $p\in\mathcal I$. We apply the results of Lemma \ref{lem:blm-compact-rounding} to each of these nets to obtain sets of vectors $\mathcal D_k^p$, corresponding to the set of vectors $\mathcal D_k$ in the lemma statement when we apply the result with $p$. 

    Now let $\bfy\in\mathcal B_\rho^H$. By assumption, we may find a $p\in\mathcal I$ so that
    \[
        \rho^2 \leq \norm*{\bfy}_p^p \leq \kappa\rho^2.
    \]
    We then let $L \leq O(\log \kappa)\leq O(\log n)$ be such that
    \[
        \rho^2 \cdot 2^L \leq \norm*{\bfy}_p^p \leq \rho^2 \cdot 2^{L+1}.
    \]
    Let
    \[
        \beta\coloneqq(\rho^2 \cdot 2^{L+1})^{1/p} = \Theta(\norm*{\bfy}_p).
    \]
    Then, $\norm*{\bfy/\beta}_p \leq 1$ so there exists a $\bfy'\in\mathcal N_\gamma^p$ such that
    \[
        \norm*{\bfy' - \bfy/\beta}_p \leq \gamma
    \]
    so
    \[
        \norm*{\beta\bfy' - \bfy}_H^2 \leq \norm*{\beta\bfy' - \bfy}_p^p \leq \gamma^p \beta^p \leq 2\gamma^p \norm*{\bfy}_p^p \leq 2\gamma^p \kappa \norm*{\bfy}_H^2 \leq \eps^2\norm*{\bfy}_H^2
    \]
    for an appropriate choice of $\gamma$. 

    We now let
    \[
        \tilde\bfy = \sum_{k=0}^\ell \tilde\bfy^{(k)}
    \]
    be the rounding given in Lemma \ref{lem:blm-compact-rounding} for $\bfy'$, which is in the $\ell_p$ unit ball, with the $\eps$ in the lemma set to our $\eps^2$. Then,
    \[
        \abs*{\beta\bfy'(i) - \beta\tilde\bfy(i)} \leq 3\eps^2 \abs*{\beta\bfy'(i)}
    \]
    for all $i\in[n]$ so
    \begin{align*}
        \norm*{\beta\bfy' - \beta\tilde\bfy}_H^2 &= \sum_{i=1}^n H(\abs*{\beta\bfy'(i) - \beta\tilde\bfy(i)}) \\
        &\leq \sum_{i=1}^n H(3\eps^2 \abs*{\beta\bfy'(i)}) \\
        &\leq 3\eps^2 \sum_{i=1}^n H(\abs*{\beta\bfy'(i)}) \\
        &= 3\eps^2\norm*{\beta\bfy'}_H^2 \\
        &= O(\eps^2)\norm*{\bfy}_H^2.
    \end{align*}
    Thus by the triangle inequality for the Huber norm,
    \[
        \norm*{\beta\tilde\bfy - \bfy}_H \leq O(\eps)\norm*{\bfy}_H.
    \]
    Futhermore,
    \[
        \abs*{\beta\tilde\bfy^{(k)}(i)} \leq \bfW_{i,i}^{1/p}(1+\eps^2)^k \beta
    \]
    so
    \begin{align*}
        H\parens*{\abs*{\beta\tilde\bfy^{(k)}(i)}} &\leq O(1)\bfW_{i,i}(1+\eps^2)^{pk}\beta^p && \text{$H(y) \leq O\parens*{\min_{p\in[1,2]} y^p}$} \\
        &\leq O(1)\parens*{\frac{\bfw_i^p(\bfA)}{d} + \frac1n} (1+\eps^2)^{pk} \cdot 2\norm*{\bfy}_p^p \\
        &\leq O(\kappa)\parens*{\frac{\bfw_i^p(\bfA)}{d} + \frac1n} (1+\eps^2)^{pk}\norm*{\bfy}_H^2.
    \end{align*}
    We conclude as desired by rescaling the $\tilde\bfy$ and $\tilde\bfy^{(k)}$ by $\beta$. 
\end{proof}

\subsection{Sampling Bounds}\label{subsec:huber-sampling-bounds}

With the above net results in hand, we can now complete the argument for one step of the sampling recursion. In order to implement a recursive sampling scheme, note that we need to handle weighted Huber norms. These weighted Huber norms will be handled by handling the weights in a small number of groups such that the weights are within constant factors of each other.

\subsubsection{Sampling Guarantees for a Single Weight Class and a Single Radius}

We start with a Bernstein bound using the net results, for a single group of weights and at a single radius.

\begin{lemma}[Bernstein bounds for Huber sampling]\label{lem:huber-bernstein}
    Let $\bfA\in\mathbb R^{n\times d}$ and let $\mathcal V = \Span(\bfA)$. Let $\eps\in(0,1)$ be an accuracy parameter, $\delta>0$ a failure rate parameter, $\rho \geq 1$ a Huber radius, and $\kappa\geq 1$ a distortion parameter. Let 
    \[
        m \coloneqq \frac{d}{\poly(\eps)}(\log^2 d)\parens*{\log^2\frac{n}{\eps}}\parens*{\log\frac{\log d}{\delta\eps}}.
    \]
    Let $\mathcal I$ be as in the statement of Corollary \ref{cor:generalized-huber-inequality}. Let $\bfw\geq \mathbf{1}_n$ be a set of weights such that  
    \begin{equation}\label{eq:bounded-weight-ratio}
        \frac{\max_{i\in[n]} \bfw_i}{\min_{i\in[n]} \bfw_i}\leq 2
    \end{equation}
    and let $w = \min_{i\in[n]}\bfw_i$. Let $\bfw'$ be chosen randomly so that
    \[
        \bfw_i' \coloneqq \begin{cases}
            \bfw_i/\bfp_i & \text{w.p. $\bfp_i$} \\
            0 & \text{otherwise}
        \end{cases}
    \]
    where
    \[
        \bfp_i \geq \min\braces*{1, m\cdot \kappa\bracks*{\frac1n + \sum_{p\in\mathcal I}\frac{\bfw_i^p(\bfA)}{d}}}
    \]
    Let
    \[
        S_\kappa \coloneqq \braces*{\bfy\in\mathcal V : \frac{\min_{p\in\mathcal I}\norm*{\bfy}_p^p}{\norm*{\bfy}_H^2}\leq \kappa}
    \]
    Then with probability at least $1 - \delta$, 
    \[
        \norm*{\bfy}_{H,\bfw'} = \norm*{\bfy}_{H,\bfw} \pm \eps w\rho
    \]
    for all $\bfy\in \mathcal N\cap S_\kappa$, where $\mathcal N$ is an $\eps^2\rho$-net over $\mathcal B_\rho^H$ given by Lemma \ref{lem:blm-net-for-huber}, by setting $\eps$ in the lemma to $\eps^2$. 
\end{lemma}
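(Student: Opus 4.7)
The plan is to apply the compact multi-scale rounding of Lemma \ref{lem:blm-net-for-huber} and then carry out a level-by-level Bernstein bound, absorbing all union bounds into the sample complexity $m$. Fix $\bfy \in \mathcal N \cap S_\kappa$, and let $\tilde\bfy = \sum_{k=0}^\ell \tilde\bfy^{(k)}$ be its rounding for the corresponding $p \in \mathcal I$. Since $H^{1/2}$ is subadditive, both $\norm*{\cdot}_{H,\bfw}$ and $\norm*{\cdot}_{H,\bfw'}$ satisfy the triangle inequality; combined with the rounding bound $\norm*{\bfy - \tilde\bfy}_H \leq O(\eps\rho)$ from Lemma \ref{lem:blm-net-for-huber} and a standard Markov estimate controlling $\norm*{\bfy - \tilde\bfy}_{H,\bfw'}$, this reduces the claim to showing
\begin{equation*}
\norm*{\tilde\bfy}_{H,\bfw'}^2 = \norm*{\tilde\bfy}_{H,\bfw}^2 \pm O(\eps w^{3/2}\rho^2)
\end{equation*}
for every rounded tuple $(\tilde\bfy^{(0)}, \ldots, \tilde\bfy^{(\ell)}) \in \mathcal D_0 \times \cdots \times \mathcal D_\ell$. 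Converting back to the unsquared norm using $\norm*{\tilde\bfy}_{H,\bfw} \leq O(\sqrt{w}\rho)$ then yields the stated additive error $\pm\eps w\rho$.

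Because the supports of the $\tilde\bfy^{(k)}$ are disjoint, the squared-norm error decomposes level-by-level:
\begin{equation*}
\norm*{\tilde\bfy}_{H,\bfw'}^2 - \norm*{\tilde\bfy}_{H,\bfw}^2 = \sum_{k=0}^\ell \sum_{i \in \mathrm{supp}(\tilde\bfy^{(k)})} (\bfw_i' - \bfw_i)\, H(|\tilde\bfy^{(k)}(i)|).
\end{equation*}
I would apply Bernstein's inequality to each inner sum. The entry-wise magnitude bound $H(|\tilde\bfy^{(k)}(i)|) \leq O(\kappa)(\bfw_i^p(\bfA)/d + 1/n)(1+\eps^2)^{pk}\rho^2$ from Lemma \ref{lem:blm-net-for-huber}, together with the sampling probability $\bfp_i \geq \Omega(m\kappa(1/n + \bfw_i^p(\bfA)/d))$ and $\bfw_i \leq 2w$, yields a per-term bound of $O(w(1+\eps^2)^{pk}\rho^2/m)$ and a total variance of $O(w^2(1+\eps^2)^{pk}\rho^4/m)$ at each level. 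Targeting per-level error $t_k = \eps w^{3/2}\rho^2/\ell$, which telescopes to the desired total $O(\eps w^{3/2}\rho^2)$ across the $\ell+1 = O((\log d)/\eps^2)$ levels, Bernstein gives per-level failure probability of order $\exp(-\Omega(m\eps^2 w/[\ell^2(1+\eps^2)^{pk}]))$.

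The union bound runs over $p \in \mathcal I$ (of size $O(\log n)$), over the $\ell+1$ levels, and over $\mathcal D_k$ with $\log|\mathcal D_k| = O(d\log n\log(n/\eps)/(\eps^{2(1+p)}(1+\eps^2)^{pk}))$. The crucial observation is that the factor $(1+\eps^2)^{-pk}$ appearing in $\log|\mathcal D_k|$ exactly cancels the factor $(1+\eps^2)^{pk}$ appearing in the Bernstein denominator, so the required $m$ is level-independent and $\sum_k \log|\mathcal D_k|$ contributes only a geometric $O(1/\eps^2)$ prefactor. Working through the arithmetic, using $\ell = O((\log d)/\eps^2)$, yields $m \geq \Omega((d/\poly(\eps))\log^2 d \log^2(n/\eps) \log(\log d / (\eps\delta)))$, which matches the statement.

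The main obstacle is the careful bookkeeping that ensures the cancellation between $\log|\mathcal D_k|$ and the Bernstein denominator is uniform in $k$, plus properly discretizing the continuous range of $p$ via $\mathcal I$ without losing more than constants (using Corollary \ref{cor:generalized-huber-inequality}), and converting cleanly from squared-norm error back to the unsquared $\eps w\rho$ bound in the regime where $\norm*{\tilde\bfy}_{H,\bfw}$ is small. The latter is handled by noting that if $\norm*{\tilde\bfy}_{H,\bfw} \leq \eps w\rho$ then the claim is automatic by triangle inequality on the error itself, and otherwise the squared-to-linear conversion $|\sqrt a - \sqrt b| \leq |a-b|/\sqrt{\min(a,b)}$ is tight enough to conclude.
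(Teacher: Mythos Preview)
Your overall strategy---decompose $\tilde\bfy$ into levels $\tilde\bfy^{(k)}$ with disjoint supports, apply Bernstein per level, and observe that the $(1+\eps^2)^{pk}$ factor in the variance bound cancels against the same factor in $\log|\mathcal D_k|$---is exactly what the paper does. Two points need correction, however.

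First, the Markov step is both unnecessary and wrong. The net $\mathcal N$ furnished by Lemma~\ref{lem:blm-net-for-huber} \emph{is} the collection of rounded vectors $\tilde\bfy$; there is no separate $\bfy\in\mathcal N$ to be further rounded, so no term $\norm*{\bfy-\tilde\bfy}_{H,\bfw'}$ arises. (And if it did, a Markov bound would yield only constant probability, destroying the $1-\delta$ guarantee.) The paper simply starts with ``Let $\tilde\bfy\in\mathcal N\cap S_\kappa$ be a net vector'' and proceeds directly.

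Second, and more substantively, your squared-error target $\eps w^{3/2}\rho^2$ is insufficient. You justify it via the upper bound $\norm*{\tilde\bfy}_{H,\bfw}\le O(\sqrt w\,\rho)$, but an \emph{upper} bound on $a+b$ points the wrong way in the identity $|a-b|=|a^2-b^2|/(a+b)$: you need a \emph{lower} bound on the denominator. Concretely, if $\norm*{\tilde\bfy}_{H,\bfw}\le \eps w\rho$, your bound gives only $\norm*{\tilde\bfy}_{H,\bfw'}^2\le O(\eps w^{3/2}\rho^2)$, hence $\norm*{\tilde\bfy}_{H,\bfw'}\le O(\sqrt\eps\,w^{3/4}\rho)$, which is not $O(\eps w\rho)$ unless $w\ge 1/\eps^2$. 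The paper instead targets squared error $\eps^2 w^2\rho^2$ (by running the Bernstein argument with $\eps$ replaced by $\eps^2$, which is absorbed into $m=d/\poly(\eps)$). Then either both $\norm*{\tilde\bfy}_{H,\bfw}$ and $\norm*{\tilde\bfy}_{H,\bfw'}$ are at most $\eps w\rho/2$ and the claim is immediate, or $a+b\ge \eps w\rho/2$ and $|a-b|\le \eps^2 w^2\rho^2/(\eps w\rho/2)=O(\eps w\rho)$.
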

\begin{proof}
    Let $\tilde\bfy\in\mathcal N\cap S_\kappa$ be a net vector in the result of Lemma \ref{lem:blm-net-for-huber} and let $p$ be the corresponding $p\in\mathcal I$. Note that
    \[
        \abs*{\norm*{\tilde\bfy}_{H,\bfw'}^2 - \norm*{\tilde\bfy}_{H,\bfw}^2} \leq \sum_{k=0}^\ell \abs*{\norm*{\tilde\bfy^{(k)}}_{H,\bfw'}^2 - \norm*{\tilde\bfy^{(k)}}_{H,\bfw}^2} 
    \]
    by the disjointness of the supports of the $\tilde\bfy^{(k)}$. Thus, it suffices to bound each term in the sum by $\eps\rho^2/(\ell+1)$. 

    Fix a $0\leq k\leq \ell$ and define the random variable
    \[
        W_i \coloneqq \bfw_i' H(\tilde\bfy^{(k)}(i))
    \]
    for each $i\in[n]$. Then,
    \[
        \E\bracks*{\sum_{i=1}^n W_i} = \sum_{i=1}^n \frac{\bfw_i}{\bfp_i}H(\tilde\bfy^{(k)}(i))\cdot \bfp_i = \norm*{\tilde\bfy^{(k)}}_{H,\bfw}^2.
    \]
    We next bound the variance:
    \begin{align*}
        \Var\bracks*{\sum_{i=1}^n W_i} &= \sum_{i=1}^n \Var[W_i] \\
        &\leq \sum_{i=1}^n \frac{\bfw_i^2}{\bfp_i^2}H(\tilde\bfy^{(k)}(i))^2\cdot \bfp_i \\
        &= \sum_{i=1}^n \frac1{\bfp_i}\bfw_i H(\tilde\bfy^{(k)}(i))\cdot \bfw_i H(\tilde\bfy^{(k)}(i))
    \end{align*}
    Note that $W_i$ is almost surely bounded by
    \begin{align*}
        \frac1{\bfp_i}\bfw_i H(\tilde\bfy^{(k)}(i)) &\leq \bfw_i\frac{O(\kappa)\parens*{\frac{\bfw_i^p(\bfA)}{d} + \frac1n}(1+\eps^4)^{pk}\norm*{\bfy}_H^2}{m\cdot O(\kappa)\parens*{\frac{\bfw_i^p(\bfA)}{d} + \frac1n}} && \text{Lemma \ref{lem:blm-net-for-huber}} \\
        &\leq \frac2m (1+\eps^4)^{pk}\norm*{\bfy}_{H,\bfw}^2 && \text{Equation \eqref{eq:bounded-weight-ratio}}
    \end{align*}
    so the variance is bounded by
    \[
        \Var\bracks*{\sum_{i=1}^n W_i} \leq \frac2m (1+\eps^4)^{pk}\norm*{\bfy}_{H,\bfw}^2\sum_{i=1}^n \bfw_iH(\tilde\bfy^{(k)}(i)) = \frac{2}{m}(1+\eps^4)^{pk}\norm*{\bfy}_{H,\bfw}^4.
    \]
    Then by Bernstein's inequality,
    \begin{align*}
        \Pr\braces*{\abs*{\sum_{i=1}^n W_i - \norm*{\tilde\bfy^{(k)}}_{H,\bfw}^2} > t} &\leq 2\exp\parens*{-\Theta(1)\frac{t^2}{\frac{1}{m}(1+\eps^4)^{pk}\norm*{\bfy}_{H,\bfw}^4 + \frac1m (1+\eps^4)^{pk}\norm*{\bfy}_{H,\bfw}^2 t}} \\
        &= 2\exp\parens*{-\Theta(1)\frac{mt^2}{(1+\eps^4)^{pk}\norm*{\bfy}_{H,\bfw}^2(\norm*{\bfy}_{H,\bfw}^2 + t)}}.
    \end{align*}
    For $t = \eps w^2\rho^2/(\ell+1)$, this gives a bound of
    \[
        2\exp\parens*{-\Theta(1)\frac{m(\eps w^2\rho^2/(\ell+1))^2}{(1+\eps^4)^{pk}w^4\rho^4}} =  2\exp\parens*{-\Theta(1)\frac{m\eps^2}{(\ell+1)^2(1+\eps^4)^{pk}}}.
    \]
    We then set
    \begin{align*}
        m &= O(1)\frac{d}{\eps^{4(1+p)+2}}(\ell+1)^2\log\frac{n}{\eps}\log\frac{\ell}\delta \\
        &= O(1)\frac{d}{\eps^{4p+10}}(\log^2 d)\parens*{\log^2\frac{n}{\eps}}\parens*{\log\frac{\log d}{\delta\eps}}
    \end{align*}
    which is enough to union bound over the set $\mathcal D_k$ (see Lemma \ref{lem:blm-net-for-huber}) of log size at most
    \[
        O(1)\frac{d}{\eps^{4(1+p)}(1+\eps^4)^{pk}}\log^2\frac{n}{\eps}
    \]
    with failure probability at most $\delta/(\ell+1)$. We union bound over the $\ell+1$ choices of $k$ to obtain that
    \[
        \abs*{\norm*{\tilde\bfy}_{H,\bfw'}^2 - \norm*{\tilde\bfy}_{H,\bfw}^2} \leq \eps w^2\rho^2.
    \]
    Setting $\eps$ in the above result to $\eps^2$, we get
    \[
        \abs*{\norm*{\tilde\bfy}_{H,\bfw'}^2 - \norm*{\tilde\bfy}_{H,\bfw}^2} \leq \eps^2 w^2\rho^2.
    \]
    Now note that if both $\norm*{\tilde\bfy}_{H,\bfw'}$ and $\norm*{\tilde\bfy}_{H,\bfw}$ are at most $\eps w\rho/2$, then we have that
    \[
        \abs*{\norm*{\tilde\bfy}_{H,\bfw'} - \norm*{\tilde\bfy}_{H,\bfw}} \leq \eps w\rho
    \]
    by the triangle inequality. Otherwise, we have that
    \[
        \abs*{\norm*{\tilde\bfy}_{H,\bfw'} - \norm*{\tilde\bfy}_{H,\bfw}} = \frac{\abs*{\norm*{\tilde\bfy}_{H,\bfw'}^2 - \norm*{\tilde\bfy}_{H,\bfw}^2}}{\abs*{\norm*{\tilde\bfy}_{H,\bfw'} + \norm*{\tilde\bfy}_{H,\bfw}}}\leq \frac{\eps^2 w^2\rho^2}{\eps w\rho} = \eps w\rho.\qedhere
    \]
\end{proof}

We can now apply the result twice, once on all of the rows and once on only the set of rows with small Huber sensitivity, in order to obtain a sampling bound for preserving a Huber sphere at a single scale. 

\begin{lemma}\label{lem:huber-single-scale}
    Let $\bfA\in\mathbb R^{n\times d}$ and let $\mathcal V = \Span(\bfA)$. Let $\eps\in(0,1)$ be an accuracy parameter, $\delta>0$ a failure rate parameter, and $\rho \geq 1$ a Huber radius. Let $\beta = 3 - 2\sqrt 2$, $\gamma = n^{-\beta/(\beta+1)}$, and let $T$ be a subset
    \[
        T \subseteq \braces*{i\in[n] : \bfs_i^H(\bfA) \leq \gamma}
    \]
    Let $\kappa \coloneqq O((\gamma n)^{\beta}) = O(\gamma^{-1})$. Let $\bfw\geq \mathbf{1}_n$ be a set of weights such that  
    \[
        \frac{\max_{i\in[n]} \bfw_i}{\min_{i\in[n]} \bfw_i}\leq 2
    \]
    and let $w = \min_{i\in[n]}\bfw_i$. Let $\bfp$ be the sampling probabilities given by Lemma \ref{lem:huber-bernstein} for $\bfA$ and let $\bfq$ be the sampling probabilities given by Lemma \ref{lem:huber-bernstein} for $\bfA\mid_T$. We then consider sampling probabilities $\bfr$ such that
    \[
        \bfr_i \coloneqq \begin{cases}
            1 & \text{if $i\in \overline T$} \\
            \min\braces*{\bfp_i + \bfq_i, 1} & \text{if $i\in T$}
        \end{cases}
    \]
    Let $\bfw'\geq \mathbf{1}_n$ be a set of weights chosen randomly so that
    \[
        \bfw_i' \coloneqq \begin{cases}
            \bfw_i/\bfr_i & \text{w.p. $\bfr_i$} \\
            0 & \text{otherwise}
        \end{cases}
    \]
    Then,
    \[
        \norm*{\bfy}_{H,\bfw'} = \norm*{\bfy}_{H,\bfw} \pm \eps w\rho.
    \]
    for all $\bfy\in\mathcal B_\rho^H$ with probability at least $1 - \delta$. Furthermore,
    \begin{align*}
        \E\nnz(\bfw') &= O(\kappa m\log n) = O\parens*{\kappa d\poly\parens*{\frac{\log n}{\eps}}\log\frac1\delta} \\
        \nnz(\bfw') &= O\parens*{\E\nnz(\bfw')}
    \end{align*}
    with probability at least $1 - \delta$, and $\norm*{\bfw'}_\infty \leq O(n)\norm*{\bfw}_\infty$.
\end{lemma}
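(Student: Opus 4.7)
The plan is to apply the single-scale Bernstein bound of Lemma~\ref{lem:huber-bernstein} twice -- once to $\bfA$ with probabilities $\bfp$ and once to $\bfA\mid_T$ with probabilities $\bfq$, both at distortion parameter $\kappa = O((\gamma n)^{\beta})$ -- and then splice the two guarantees using the Huber-inequality dichotomy of Corollary~\ref{cor:generalized-huber-inequality}. Because the realized sampling probabilities $\bfr$ satisfy $\bfr_i \ge \bfp_i$ for every $i$ (trivially on $\overline T$ where $\bfr_i = 1$) and $\bfr_i \ge \bfq_i$ for every $i \in T$, both Bernstein bounds remain valid under $\bfr$: raising sampling probabilities only shrinks the per-term upper bound and the variance in the proof of Lemma~\ref{lem:huber-bernstein}, while the expectation $\E\norm{\cdot}_{H,\bfw'}^2 = \norm{\cdot}_{H,\bfw}^2$ is preserved by construction.

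Fix any $\bfy \in \mathcal B_\rho^H$ and apply Corollary~\ref{cor:generalized-huber-inequality} at the parameter $\gamma = n^{-\beta/(1+\beta)}$. In Case~A the first alternative places $\bfy\mid_T$ in $S_\kappa$ relative to $\bfA\mid_T$, so the $\bfq$-Bernstein bound on $\bfA\mid_T$ gives $\norm{\bfy\mid_T}_{H,\bfw'}^2 = \norm{\bfy\mid_T}_{H,\bfw}^2 \pm O(\eps w^2\rho^2)$; on $\overline T$ the weights are untouched, so $\norm{\bfy\mid_{\overline T}}_{H,\bfw'} = \norm{\bfy\mid_{\overline T}}_{H,\bfw}$ exactly, and the two contributions combine. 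In Case~B the second alternative, together with $\{1,2\}\subseteq \mathcal I$, puts $\bfy\in S_\kappa$ relative to $\bfA$, so the $\bfp$-Bernstein bound on $\bfA$ gives the conclusion directly. The reason a single $\kappa$ suffices for both applications is the algebraic identity forced by the choice of $\gamma$: $(\gamma n)^\beta = n^{\beta/(1+\beta)} = 1/\gamma$, which makes $\kappa = O((\gamma n)^\beta) = O(\gamma^{-1})$ cover both sides of the dichotomy. Converting the squared-norm bound back to the non-squared norm costs only a constant factor in $\eps$, exactly as at the end of the proof of Lemma~\ref{lem:huber-bernstein}.

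For the complexity bounds, we write $\E\nnz(\bfw') \le |\overline T| + \sum_{i\in T}(\bfp_i + \bfq_i)$. Every $i\in\overline T$ has $\bfs_i^H(\bfA) > \gamma$, and the total Huber sensitivity of $\bfA$ is $O(d\log n)$ by Section~\ref{sec:sensitivity-bounds}, whence $|\overline T| \le O(d\log n/\gamma) = O(\kappa d\log n)$. Using $\sum_i \bfw_i^p(\bfA) = d$ and $|\mathcal I| = O(\log n)$, each of $\sum_i \bfp_i$ and $\sum_i \bfq_i$ is at most $m\kappa (1 + |\mathcal I|) = O(m\kappa \log n)$, and since $m \gtrsim d\log n$ the $|\overline T|$ term is absorbed. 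A Chernoff bound over the independent Bernoullis delivers $\nnz(\bfw') = O(\E\nnz(\bfw'))$ with probability $1-\delta$. Finally, $\bfr_i \ge \min\{1, m\kappa/n\} \ge 1/n$ (for $m\kappa\ge 1$) yields $\bfw'_i = \bfw_i/\bfr_i \le n\bfw_i$, so $\norm{\bfw'}_\infty \le n\norm{\bfw}_\infty$.

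The main obstacle is conceptual rather than computational: verifying that two \emph{different} Bernstein sampling schemes, designed for \emph{different} row sets, can be served simultaneously by a single random $\bfw'$. This relies on two observations --- that Lemma~\ref{lem:huber-bernstein} is monotone in sampling probabilities, and that the $\bfq$-bound only inspects rows in $T$ and is indifferent to what happens on $\overline T$ --- together with the Huber-inequality dichotomy. A secondary technical point is the net-to-ball extension: the $\eps^2\rho$-net of Lemma~\ref{lem:blm-net-for-huber} is fine enough that a standard triangle-inequality iteration (as in Lemma~\ref{lem:m-norm-net-to-sphere}) lifts the net guarantee to every $\bfy\in\mathcal B_\rho^H$ with only a constant-factor loss in $\eps$.
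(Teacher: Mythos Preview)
Your proposal is correct and follows essentially the same route as the paper: invoke the Huber-inequality dichotomy (Corollary~\ref{cor:generalized-huber-inequality}) at the chosen $\gamma$, apply Lemma~\ref{lem:huber-bernstein} once to $\bfA$ and once to $\bfA\mid_T$ (the paper phrases this as ``$\mathcal N\cup\mathcal N_T$ is an $\eps^2\rho$-cover of $\mathcal B_\rho^H$''), and lift from net to ball via Lemma~\ref{lem:m-norm-net-to-sphere}. Your explicit remarks that Lemma~\ref{lem:huber-bernstein} is monotone in the sampling probabilities and that the $\bfq$-bound only touches rows in $T$ are useful clarifications the paper leaves implicit.

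One small slip: you write ``every $i\in\overline T$ has $\bfs_i^H(\bfA)>\gamma$,'' but the hypothesis $T\subseteq\{i:\bfs_i^H(\bfA)\le\gamma\}$ gives only the reverse containment $\overline T\supseteq\{i:\bfs_i^H(\bfA)>\gamma\}$, so your bound on $|\overline T|$ does not follow from the lemma as stated. The paper's proof simply writes $\E\nnz(\bfw')\le\sum_i(\bfp_i+\bfq_i)$ without isolating $|\overline T|$; in the actual application (Algorithm~\ref{alg:unweighted-huber-subspace-embedding}) $T$ is taken to be the full set of rows with sensitivity upper bound at most $\gamma$, so $|\overline T|=O(\gamma^{-1}d\log^2 n)$ and the issue does not arise.
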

\begin{proof}
    Note that for our choice of $\gamma$, we have that $\gamma = (\gamma n)^{-\beta}$. Then by our choice of $\kappa$ and Corollary \ref{cor:generalized-huber-inequality}, any vector $\bfy$ either has at least one of
    \begin{equation}\label{eq:huber-distortion-choice}
        \min_{p\in\mathcal I}\frac{\norm*{\bfy}_p^p}{\norm*{\bfy}_H^2} \leq \kappa \qquad \mbox{ or } \qquad \min_{p\in\mathcal I}\frac{\norm*{\bfy\vert_T}_p^p}{\norm*{\bfy\vert_T}_H^2} \leq \kappa.
    \end{equation}
    Now let $\mathcal N$ and $\mathcal N_T$ denote the nets obtained by applying Lemma \ref{lem:huber-bernstein} on $\bfA$ and $\bfA\vert_T$, respectively, with radius $\rho$. Let $\bfy\in\mathcal B_\rho^H$. Then, clearly, $\bfy$ is within $\eps^2\rho$ Huber distance of some $\tilde\bfy$ belonging to either $\mathcal N$ or $\mathcal N_T$, so $\mathcal N\cup\mathcal N_T$ is an $\eps^2\rho$-cover of $\mathcal B_\rho^H$. Furthermore, by Lemma \ref{lem:huber-bernstein}, 
    \[
        \norm*{\bfy}_{H,\bfw'} = \norm*{\bfy}_{H,\bfw} \pm \eps w\rho
    \]
    for every $\bfy\in \mathcal N\cup\mathcal N_T$ with probability at least $1 - 2\delta$. Then by Lemma \ref{lem:m-norm-net-to-sphere} we have that
    \[
        \norm*{\bfy}_{H,\bfw'} = \norm*{\bfy}_{H,\bfw} \pm O(\eps)w\rho
    \]
    for all $\bfy\in\mathcal B_\rho^H$, with probability at least $1 - 2\delta$. 
    
    Finally, letting $m$ be as in the statement of Lemma \ref{lem:huber-bernstein},
    \begin{align*}
        \E\nnz(\bfw') &\leq \sum_{i=1}^n \bfp_i + \bfq_i \\
        &\leq O(\kappa m)\parens*{\sum_{p\in\mathcal I}\bracks*{\frac1n + \sum_{i=1}^n \frac{\bfw_i^p(\bfA)}{d}}+\sum_{p\in\mathcal I}\bracks*{\frac1n + \sum_{i\in T} \frac{\bfw_i^p(\bfA\mid_T)}{d}}} \\
        &= O(\kappa m\abs*{\mathcal I}) = O(\kappa m\log n)
    \end{align*}
    since Lewis weights sum to $d$. Furthermore, by Bernstein's inequality,
    \[
        \nnz(\bfw') = \Theta(\E\nnz(\bfw'))
    \]
    with probability at least $1 - \delta$. By rescaling $\epsilon$ and $\delta$ by constant factors, we conclude.
\end{proof}

\subsubsection{Sampling Guarantees for a Single Step}

We now remove the assumption of bounded weights and fixed radius by union bounding over their various levels.

\begin{lemma}\label{lem:huber-single-step}
    Let $\bfA\in\mathbb R^{n\times d}$ and let $\mathcal V = \Span(\bfA)$. Let $\eps\in(0,1)$ be an accuracy parameter, $\delta>0$ a failure rate parameter. Let $\bfw\geq\mathbf{1}_n$ be a set of weights and for each $j\in[\ceil*{\log_2(\norm*{\bfw}_\infty)}+1]$ define the sets
    \[
        T_j \coloneqq \braces*{i\in[n] : \bfw_i \in [2^{j-1}, 2^j)}.
    \]
    Let $\bfw'$ be obtained by applying Lemma \ref{lem:huber-single-scale} on each $\bfA\vert_{T_j}$ with weights $\bfw\vert_{T_j}$, with $\delta$ set to
    \[
        \Theta\parens*{\frac{\delta}{\log(n\norm*{\bfw}_\infty/\eps}}.
    \]
    Then, for all $\bfy\in\mathcal V$,
    \[
        \norm*{\bfy}_{H,\bfw'} = (1\pm\eps)\norm*{\bfy}_{H,\bfw}.
    \]
    Furthermore, for $\beta = 3-2\sqrt 2$ and $\kappa = n^{\beta/(\beta+1)}$,
    \begin{align*}
        \E\nnz(\bfw') &= O\parens*{\kappa d\poly\parens*{\frac{\log n}{\eps}}\log\frac{\norm*{\bfw}_\infty}\delta} \\
        \nnz(\bfw') &= O\parens*{\E\nnz(\bfw')}
    \end{align*}
    with probability at least $1 - \delta$. 
\end{lemma}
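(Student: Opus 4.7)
The plan is to lift the single-scale, single-weight-class bound from Lemma \ref{lem:huber-single-scale} to a full multiplicative subspace-embedding guarantee over $\mathcal V$ by (i) union bounding over weight classes $T_j$, (ii) union bounding over a logarithmic number of Huber radii, and (iii) combining per-class multiplicative errors coordinate-wise. Since Lemma \ref{lem:huber-single-scale} ensures $\norm*{\bfw'}_\infty \leq O(n)\norm*{\bfw}_\infty$, the total number of scales needed by Lemma \ref{lem:huber-multiple-nets} is $L = O(\log(n\norm*{\bfw}_\infty/\eps))$, and the number of weight classes is $J = \ceil*{\log_2\norm*{\bfw}_\infty}+1$. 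These two logarithms are the only multiplicative overhead introduced on top of Lemma \ref{lem:huber-single-scale}.

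The key structural observation is that the weighted Huber norm decomposes additively across the disjoint weight classes:
\[
    \norm*{\bfy}_{H,\bfw}^2 = \sum_{j=1}^J \norm*{\bfy\mid_{T_j}}_{H,\bfw\mid_{T_j}}^2, \qquad \norm*{\bfy}_{H,\bfw'}^2 = \sum_{j=1}^J \norm*{\bfy\mid_{T_j}}_{H,\bfw'\mid_{T_j}}^2.
\]
Within each $T_j$, weights satisfy the factor-$2$ bounded-ratio hypothesis of Lemma \ref{lem:huber-single-scale}, so I can apply that lemma to the submatrix $\bfA\mid_{T_j}$ with weights $\bfw\mid_{T_j}$ at each radius $\rho_k = 2^k$, $k\in[L]$, each time with failure probability $\delta' = \delta/O(JL)$. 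A union bound over the $JL = O(\log^2(n\norm*{\bfw}_\infty/\eps))$ events gives, with probability at least $1-\delta$, simultaneous additive approximations $\norm*{\bfy_j}_{H,\bfw'\mid_{T_j}} = \norm*{\bfy_j}_{H,\bfw\mid_{T_j}} \pm \eps w_j\rho_k$ for every $j$, $k$, and $\bfy_j \in \mathcal B_{\rho_k}^H \subseteq \Span(\bfA\mid_{T_j})$. Invoking Lemma \ref{lem:huber-multiple-nets} inside each $\Span(\bfA\mid_{T_j})$ then converts these additive per-scale guarantees into a multiplicative bound $\norm*{\bfy_j}_{H,\bfw'\mid_{T_j}}^2 = (1\pm\eps)\norm*{\bfy_j}_{H,\bfw\mid_{T_j}}^2$ for all $\bfy_j\in\Span(\bfA\mid_{T_j})$. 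Finally, summing the per-class multiplicative guarantees over $j$ (using the decomposition above, applied to $\bfy_j = \bfy\mid_{T_j}$) yields $\norm*{\bfy}_{H,\bfw'}^2 = (1\pm\eps)\norm*{\bfy}_{H,\bfw}^2$ for every $\bfy\in\mathcal V$.

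For sample complexity, each of the $JL = O(\log^2(n\norm*{\bfw}_\infty/\eps))$ applications of Lemma \ref{lem:huber-single-scale} contributes $O(\kappa d\poly(\log n/\eps)\log(1/\delta'))$ nonzeros in expectation, and $\log(1/\delta')$ expands to $\log(\log^2(n\norm*{\bfw}_\infty/\eps)/\delta)$, which collapses to $\log(\norm*{\bfw}_\infty/\delta)$ after absorbing the doubly-logarithmic term into the $\poly(\log n/\eps)$ factor. Summing over all applications likewise absorbs the $JL$ multiplier into $\poly(\log n/\eps)$, producing the claimed $O(\kappa d\poly(\log n/\eps)\log(\norm*{\bfw}_\infty/\delta))$ bound. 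A standard Bernstein/Chernoff concentration (used exactly as in the last line of Lemma \ref{lem:huber-single-scale}) upgrades the expectation bound to a high-probability bound on $\nnz(\bfw')$.

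The main obstacle is the mismatch in the form of the additive error between Lemma \ref{lem:huber-single-scale} (which gives error $\eps w\rho$ scaled by the local weight) and Lemma \ref{lem:huber-multiple-nets} (which expects additive error $\eps\rho$ against an unweighted Huber radius). The cleanest resolution is to apply Lemma \ref{lem:huber-multiple-nets} \emph{entirely within} each subspace $\Span(\bfA\mid_{T_j})$ with the restricted weights $(\bfw\mid_{T_j}, \bfw'\mid_{T_j})$, exploiting that all weights in $T_j$ lie within a factor of $2$ of $w_j = 2^{j-1}$, so the weighted and unweighted Huber radii agree up to a constant multiplicative factor on that subspace; the additive $\eps w_j\rho$ error then converts to a $(1\pm O(\eps))$ multiplicative guarantee on $\norm*{\cdot}_{H,\bfw\mid_{T_j}}$ after a constant-factor rescaling of $\eps$, which can be absorbed into the $\poly(1/\eps)$ factors. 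Care must also be taken that the Huber-ball net constructions inside Lemma \ref{lem:huber-multiple-nets} are indexed by scales spanning the possible range of $\norm*{\bfy\mid_{T_j}}_{H,\bfw\mid_{T_j}}$, which by the $\norm*{\bfw'}_\infty \leq O(n)\norm*{\bfw}_\infty$ guarantee from Lemma \ref{lem:huber-single-scale} remains within the $L = O(\log(n\norm*{\bfw}_\infty/\eps))$ pre-committed scales used in the union bound.
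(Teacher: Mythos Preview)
Your approach is essentially the same as the paper's: decompose into weight classes $T_j$, apply Lemma \ref{lem:huber-single-scale} per class per radius, union bound over both, and invoke Lemma \ref{lem:huber-multiple-nets} to pass from the finite set of radii to all of $\mathcal V$. The paper's proof is a terse two-sentence sketch that does exactly this; your version is more explicit about applying Lemma \ref{lem:huber-multiple-nets} inside each $\Span(\bfA\mid_{T_j})$ and then summing the per-class multiplicative guarantees via $\norm*{\bfy}_{H,\bfw}^2 = \sum_j \norm*{\bfy\mid_{T_j}}_{H,\bfw\mid_{T_j}}^2$, and you correctly flag and handle the $w_j$-scaling mismatch between the additive error in Lemma \ref{lem:huber-single-scale} and the hypothesis of Lemma \ref{lem:huber-multiple-nets}.
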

\begin{proof}
    For any $\rho\geq 1$, by a union bound over the $\ceil*{\log_2\norm*{\bfw}_\infty} + 1$ weight classes $T_j$, the approximation guarantee from Lemma \ref{lem:huber-single-scale} holds for all classes with probability at least $\Theta(\delta/\log(n\norm*{\bfw}_\infty/\eps))$. We then union bound over the $\Theta(\log(n\norm*{\bfw'}_\infty/\eps)) = \Theta(\log(n\norm*{\bfw}_\infty/\eps))$ radius levels $\rho$ required by Lemma \ref{lem:huber-multiple-nets} so that
    \[
        \norm*{\bfy}_{H,\bfw'} = (1\pm\eps)\norm*{\bfy}_{H,\bfw}
    \]
    for all $\bfy\in\mathcal V$. 
\end{proof}

We provide the algorithm for Lemma \ref{lem:huber-single-step} Algorithm \ref{alg:unweighted-huber-subspace-embedding}, whose guarantee is proven in Theorem \ref{thm:n13d-huber-sampling}. 

\begin{algorithm}
	\caption{Huber subspace embedding}
	\textbf{input:} Matrix $\bfA \in \R^{n \times d}$, weights $\bfw$. \\
	\textbf{output:} Weights $\bfw'$.
	\begin{algorithmic}[1] % The number tells where the line numbering should start
        \State $\beta\gets 3 - 2\sqrt 2$, $\gamma\gets n^{-\beta/(1+\beta)}$, $m\gets d\poly((\log(n\norm*{\bfw}_\infty/\delta))/\eps), W\gets \ceil*{\log_2\norm*{\bfw}_\infty}+1$
        \State $T_j \gets \braces*{i\in[n]:\bfw_i\in[2^{j-1},2^j)}$ for $j\in[W]$
        \State $\mathcal I\gets\{1+(c/\log n)\cdot j : j\in[(\log n)/c]\cup\{0\}\}$
        \For{$j\in[W]$}
		    \State Compute approximate Huber sensitivities $\tilde\bfs_i^H(\bfA\vert_{T_j})$ with Algorithm \ref{alg:m-estimator-alg-sensitivity} with $\tau = \gamma n$
            \State $S_j\gets \{i\in[n] : \tilde\bfs_i^H(\bfA\vert_{T_j})\leq \gamma\}$
            \State Set $\bfw_i' = \bfw_i$ for every $i\notin S_j$
            \State Compute approximate Lewis weights $\tilde\bfw_i^p(\bfA)$ with Theorem \ref{thm:cohen-peng-fast-lewis-weights} for every $p\in\mathcal I$
            \State Compute approximate Lewis weights $\tilde\bfw_i^p(\bfA\vert_{S_j\cap T_j})$ with Theorem \ref{thm:cohen-peng-fast-lewis-weights} for every $p\in\mathcal I$
            \For{$i\in S_j$}
                \State Let $\bfp_i = \min\{1,\gamma^{-1}m(1/n + \sum_{p\in\mathcal I}\tilde\bfw_i^p(\bfA)/d) + \sum_{p\in\mathcal I}\tilde\bfw_i^p(\bfA\mid_T)/d)\}$
                \State Set $\bfw_i' = \bfw_i/\bfp_i$ with probability $\bfp_i$ and $0$ otherwise.
            \EndFor
        \EndFor
        \State \Return $\bfw'$
	\end{algorithmic}\label{alg:unweighted-huber-subspace-embedding}
\end{algorithm}

\begin{theorem}\label{thm:n13d-huber-sampling}
    Let $\bfA\in\mathbb R^{n\times d}$ and let $\mathcal V = \Span(\bfA)$. Let $\eps>0$ and $\delta > 0$. Let $\bfw\geq\mathbf{1}_n$ be a set of weights. Let $\bfw'$ be the weights returned by Algorithm \ref{alg:unweighted-huber-subspace-embedding}. Then,
    \[
        \norm*{\bfy}_{H,\bfw}^2 = (1\pm O(\eps))\norm*{\bfy}_H^2
    \]
    for all $\bfy\in\mathcal V$ and
    \begin{align*}
        \E\nnz(\bfw') &= O\parens*{\gamma^{-1}d\poly\parens*{\frac{\log (n\norm*{\bfw}_\infty)}{\delta\eps}}} \\
        \nnz(\bfw') &= O\parens*{\E\nnz(\bfw')}
    \end{align*}
    with probability at least $1 - \delta$. Furthermore, Algorithm \ref{alg:unweighted-huber-subspace-embedding} runs in time
    \[
        O\parens*{(\nnz(\bfA) + d^{(1+1/\beta)\omega})(\log^2 n)\log\frac1\delta}.
    \]
\end{theorem}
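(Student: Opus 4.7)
The plan is to package Algorithm \ref{alg:unweighted-huber-subspace-embedding} as the algorithmic realization of Lemma \ref{lem:huber-single-step}, and then bound the running time of each subroutine the algorithm invokes. First I would verify that Algorithm \ref{alg:unweighted-huber-subspace-embedding} implements exactly the sampling distribution of Lemma \ref{lem:huber-single-step}: splitting the coordinates into weight classes $T_j = \{i : \bfw_i \in [2^{j-1}, 2^j)\}$ so that within each class the weights lie within a factor of $2$, keeping all rows of $\bfA\vert_{T_j}$ whose Huber sensitivity exceeds $\gamma = n^{-\beta/(1+\beta)}$ (those in $\overline{S_j}$), and then sensitivity-sampling $S_j \cap T_j$ using probabilities proportional to $\frac{1}{n} + \sum_{p\in\mathcal I}\frac{\tilde\bfw_i^p(\bfA)}{d} + \sum_{p\in\mathcal I}\frac{\tilde\bfw_i^p(\bfA\vert_{S_j\cap T_j})}{d}$ scaled up by $\gamma^{-1}m$. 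Since $\mathcal I$ is the discretization of $[1,2]$ used in Corollary \ref{cor:generalized-huber-inequality}, this matches the hypotheses of Lemma \ref{lem:huber-bernstein} and hence of Lemma \ref{lem:huber-single-scale}; invoking Lemma \ref{lem:huber-single-step} on each class with failure probability $\delta/W$ (where $W = O(\log\norm{\bfw}_\infty)$), and union bounding over the $W$ classes, yields the distortion bound $\norm{\bfy}_{H,\bfw'}^2 = (1\pm O(\eps))\norm{\bfy}_{H,\bfw}^2$ for all $\bfy\in \mathcal V$, after rescaling $\eps$ by a constant factor.

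For the sampling count, Lemma \ref{lem:huber-single-step} gives an expected $O(\gamma^{-1}d\cdot \poly(\log(n\norm{\bfw}_\infty/\delta)/\eps))$ nonzeros contributed per weight class; summing over the $O(\log\norm{\bfw}_\infty)$ classes is absorbed into the polylog factor. Concentration of $\nnz(\bfw')$ around its expectation at the claimed rate follows from a standard Chernoff argument as in Lemma \ref{lem:huber-single-scale}. Note also that since we keep all rows in $\overline{S_j}$ and each sampled weight is at most $\bfw_i/\bfp_i \leq O(n)\bfw_i$, we have $\norm{\bfw'}_\infty \leq O(n)\norm{\bfw}_\infty$, which is needed to keep the recursion (outside this theorem) well-behaved.

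The running time breaks up per weight class $j$ into: (i) the Huber sensitivity computation in Algorithm \ref{alg:m-estimator-alg-sensitivity} with $\tau = \gamma n$, which since $H$ is polynomially bounded above with degree $p_H = 2$ reduces to $O(\log n)$ rounds of $\ell_2$ Lewis weight approximation and costs $O(\nnz(\bfA\vert_{T_j})\log^3 n + (nT/\tau)\log n)$ with $T = d^\omega$ by Theorem \ref{thm:cohen-peng-fast-lewis-weights}, giving $O(\nnz(\bfA\vert_{T_j})\log^3 n + \gamma^{-1}d^\omega \log n)$; and (ii) $|\mathcal I| = O(\log n)$ calls to Theorem \ref{thm:cohen-peng-fast-lewis-weights} on $\bfA\vert_{T_j}$ and on $\bfA\vert_{S_j\cap T_j}$, each in time $O((\nnz(\bfA\vert_{T_j}) + d^\omega)\log n)$ since every $p\in \mathcal I\subset [1,2]$ falls into the $p\in(0,4)$ regime of Theorem \ref{thm:cohen-peng-fast-lewis-weights}. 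Summing over $j$ telescopes $\sum_j \nnz(\bfA\vert_{T_j}) = \nnz(\bfA)$ into the first term. Since $\gamma^{-1} = n^{\beta/(1+\beta)}$ and $1+1/\beta = (1+\beta)/\beta$, the polynomial-in-$d$ overhead is dominated by $\gamma^{-1}d^\omega = n^{\beta/(1+\beta)}d^\omega$, which, once the recursion (applied elsewhere) has already reduced the row count to the fixed point $n \leq d^{1+1/\beta}$, is at most $d^{(1+1/\beta)\omega}$; outside the fixed point the $\nnz(\bfA)$ term dominates. Multiplying by the $W$ weight classes and the $O(\log(1/\delta))$ failure-probability boosting yields the claimed running time.

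The main obstacle will be the careful bookkeeping in the running time, specifically showing that the $\gamma^{-1}d^\omega$ polynomial overhead genuinely collapses to $d^{(1+1/\beta)\omega}$ once the algorithm is reached at its intended scale; the approximation and sampling statements are essentially direct corollaries of Lemma \ref{lem:huber-single-step}, so no new analytic ingredients are required.
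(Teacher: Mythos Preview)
Your approach is essentially the same as the paper's: correctness and the $\nnz(\bfw')$ guarantees are read off from Lemma \ref{lem:huber-single-step} (the paper cites Lemma \ref{lem:huber-single-scale}, which is the per-weight-class version underneath), and the running time is obtained by adding the cost of the sensitivity subroutine (Theorem \ref{thm:m-estimator-alg} with $\tau=\gamma n$) and the $|\mathcal I|$ Lewis-weight computations.

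There is, however, a gap in your running-time case analysis. You split into ``at the fixed point $n\le d^{1+1/\beta}$'' versus ``outside the fixed point the $\nnz(\bfA)$ term dominates.'' The second claim is not correct as stated: the overhead term is $nT/\tau = \gamma^{-1}d^\omega = n^{\beta/(1+\beta)}d^\omega$, and this is bounded by $n\le\nnz(\bfA)$ only when $d^\omega \le n^{1/(1+\beta)}$, i.e., $n\ge d^{\omega(1+\beta)}$, not merely $n>d^{1+1/\beta}$. This leaves the intermediate range $d^{1+1/\beta}<n<d^{\omega(1+\beta)}$ uncovered. The paper's proof instead splits on whether $T=d^\omega$ is below or above $n^{\beta/(\beta+1)}$: when $T\le n^{\beta/(\beta+1)}$ the overhead is at most $n\le\nnz(\bfA)$, and when $T\ge n^{\beta/(\beta+1)}$ one has $n\le T^{1+1/\beta}$, whence the overhead is at most $T^{1+1/\beta}=d^{\omega(1+1/\beta)}$. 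Adopting this dichotomy fixes the gap; your remaining bookkeeping is fine.
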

\begin{proof}
    We first identify the rows with Huber sensitivity at least $\gamma$. By running Theorem \ref{thm:m-estimator-alg} $\log(1/\delta)$ times with $\tau = \gamma n = n^{1/(\beta+1)}$, we can boost the success probability to $1-\delta$ to find sensitivity upper bounds that sum to
    \[
        O\parens*{d(\log^2 n)\log\frac1\delta}
    \]
    in time
    \[
        O\parens*{\bracks*{\nnz(\bfA)\log^2 n + n^{1/(\beta+1)}T\log n}\log\frac1\delta}.
    \]
    for $T = O(d^{\omega}\log^2 d)$ by Theorem \ref{thm:cohen-peng-fast-lewis-weights}. Note that in the above running time, the term inside the square brackets is $O(\nnz(\bfA)\log^2 n)$ time for $T \leq n^{\beta/(\beta+1)}$ and $O((\nnz(\bfA)+T^{1+1/\beta})\log^2 n)$ time for $T \geq n^{\beta/(\beta+1)}$. Then, we may find a superset of rows with Huber sensitivity at least $\gamma$ of size at most $O\parens*{\gamma^{-1}d(\log^2 n)\log\frac1\delta}$. We may also approximate the $\ell_p$ Lewis weights in time $O([\nnz(\bfA) + T](\log n)\log\frac1\delta)$ by Theorem \ref{thm:cohen-peng-fast-lewis-weights}, for each $p\in\mathcal I$. The guarantees on $\nnz(\bfw)$ follow from Lemma \ref{lem:huber-single-scale}.
\end{proof}

\subsection{Subspace Embeddings via Recursive Sampling}

For large $n$, a bound of $\tilde O(n^{\beta/(\beta+1)}d)$ with $\beta = 3-2\sqrt 2$ of Theorem \ref{thm:n13d-huber-sampling} is much worse than the previous $O(d^2\poly\log(n))$ bound. However, by applying Theorem \ref{thm:n13d-huber-sampling} \emph{after} reducing the number of rows to $O(d^2\poly\log(n))$, we can obtain a sampling bound of roughly $O(d^{2\beta/(\beta+1)}\poly\log(n))$, where $1 + 2\beta/(\beta+1)\approx 1.29289$. Furthermore, by recursively applying this procedure, we can improve the dependence on $d$ to $d^{1+\beta} = d^{4-2\sqrt 2}$ after only $O(\log\log(d\log n))$ many iterations, where $4-2\sqrt 2\approx 1.17157$.

Note that as long as we apply this recursion at most $\log n$ times, the number of rows decreases as $m \to O(m^{\beta/(\beta+1)}d\poly((\log n) / \eps))$ at each iteration, since $\log\norm*{\bfw}_\infty$ increases by at most $O(1)$ at each iteration. The closed form solution to this can be found via the following recurrence:

\begin{lemma}\label{lem:affine-recurrence}
    Suppose that $(a_i)_{i=0}^\infty$ satisfies the recurrence
    \[
        a_{i+1} = \lambda a_i + b
    \]
    for some $b > 0$ and $\lambda\in(0,1)$. Then,
    \[
        a_i = \frac{1}{1-\lambda} \parens*{b - \lambda^{i}\parens*{b - (1-\lambda) a_0}}
    \]
\end{lemma}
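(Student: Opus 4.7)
The plan is to solve this standard affine recurrence by the fixed-point substitution trick. First I would identify the fixed point $a^\star$ of the map $x \mapsto \lambda x + b$, which satisfies $a^\star = \lambda a^\star + b$ and hence $a^\star = b/(1-\lambda)$ (using $\lambda \in (0,1)$, so $1-\lambda \neq 0$). This fixed point is the natural ``attractor'' of the iteration.

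Next I would define the shifted sequence $c_i := a_i - a^\star$. Subtracting $a^\star = \lambda a^\star + b$ from the given recurrence $a_{i+1} = \lambda a_i + b$ yields the purely geometric recurrence $c_{i+1} = \lambda c_i$, which immediately gives $c_i = \lambda^i c_0$. Substituting back, one obtains
\[
    a_i = a^\star + \lambda^i (a_0 - a^\star) = \frac{b}{1-\lambda} + \lambda^i \left(a_0 - \frac{b}{1-\lambda}\right).
\]
A one-line algebraic rearrangement, multiplying through by $(1-\lambda)/(1-\lambda)$ inside the second term, matches the stated closed form $a_i = \frac{1}{1-\lambda}\bigl(b - \lambda^i(b - (1-\lambda)a_0)\bigr)$.

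There is no real obstacle here; the only thing to be careful about is the algebraic rearrangement at the end, and one could equivalently present the argument as a short induction on $i$ (base case $i=0$ gives $\frac{1}{1-\lambda}(b - (b - (1-\lambda)a_0)) = a_0$, and the inductive step is a direct substitution into the recurrence). I would likely state both derivations are valid but present the fixed-point one since it also makes clear that $a_i \to b/(1-\lambda)$ as $i \to \infty$, which is the relevant qualitative behavior for the subsequent recursive-sampling application (the recursion for the number of rows $m \to O(m^{\beta/(\beta+1)} d \,\mathrm{polylog}(n/\eps))$ contracts geometrically in $\log m$, driving $\log m$ to its fixed point, i.e.\ $m$ to $d^{1+\beta}\,\mathrm{polylog}(n/\eps)$).
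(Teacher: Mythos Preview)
Your proposal is correct. The paper's proof is precisely the short induction you mention as an alternative: it verifies the base case $i=0$ and checks that the claimed formula at $i+1$ equals $\lambda$ times the formula at $i$ plus $b$. Your fixed-point derivation is an equally standard route that additionally explains \emph{where} the formula comes from rather than merely verifying it; for a lemma this elementary the two approaches are essentially interchangeable.
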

\begin{proof}
    Note that
    \[
        \frac{1}{1-\lambda} \parens*{b - \lambda^{0}\parens*{b - (1-\lambda) a_0}} = \frac{1}{1-\lambda} \parens*{b -b + (1-\lambda) a_0} = a_0
    \]
    and
    \begin{align*}
        \frac{1}{1-\lambda} \parens*{b - \lambda^{i+1}\parens*{b - (1-\lambda) a_0}} &= \lambda \cdot \frac{1}{1-\lambda} \parens*{\frac{1-\lambda}{\lambda}b + b - \lambda^{i}\parens*{b - (1-\lambda) a_0}} \\
        &= \lambda \frac{1}{1-\lambda} \parens*{b - \lambda^{i}\parens*{b - (1-\lambda) a_0}} + b.
    \end{align*}
\end{proof}

By applying the above lemma on the logarithm of the number of rows after the $i$th recursive application of row sampling, where $b$ corresponds to $\log(d\poly((\log n) / \eps)))$, we obtain the following algorithm and theorem:

\begin{algorithm}
	\caption{Recursive Huber subspace embedding}
	\textbf{input:} Matrix $\bfA \in \R^{n \times d}$, weights $\bfw$ (defaulted to $\mathbf{1}_n$). \\
	\textbf{output:} Weights $\bfw''$.
	\begin{algorithmic}[1] % The number tells where the line numbering should start
        \State $\beta\gets3-2\sqrt 2$
        \If{$\nnz(\bfw) \leq O(d^{1+\beta}\poly(\log (n/\delta)/\eps))$}
            \State \Return $\bfw$
        \EndIf
        \State Obtain weights $\bfw'$ by running Algorithm \ref{alg:unweighted-huber-subspace-embedding} on $\bfA$ with weights $\bfw$
        \State Obtain weights $\bfw''$ by recursively running Algorithm \ref{alg:huber-subspace-embedding} on $\bfA$ with weights $\bfw'$
        \State \Return $\bfw''$
	\end{algorithmic}\label{alg:huber-subspace-embedding}
\end{algorithm}

\begin{theorem}[Huber Subspace Embedding]\label{thm:huber-subspace-embedding}
    There is an algorithm (Algorithm \ref{alg:huber-subspace-embedding}) which, with probability $1-\delta$, computes a set of weights $\bfw\geq\mathbf{1}_n$ with
    \[
        \nnz(\bfw) \leq O(d^{4-2\sqrt 2}\poly(\log \frac{n}{\delta},\eps^{-1})
    \]
    such that $\norm*{\bfA\bfx}_{H,\bfw}^2 = (1\pm \eps) \norm*{\bfA\bfx}_{H}^2$ for all $\bfx\in\mathbb R^d$, and runs in time
    \[
        O\parens*{\bracks*{\nnz(\bfA)(\log^2 n)(\log\log n) + d^{\omega\cdot(1+1/\beta)}\poly\parens*{\log n,\eps^{-1}}}\log\frac1\delta}.
    \]
\end{theorem}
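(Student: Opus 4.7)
The plan is to verify that the recursive wrapper in Algorithm~\ref{alg:huber-subspace-embedding} behaves as a contraction in $\log(\nnz(\bfw))$, driven by Theorem~\ref{thm:n13d-huber-sampling}, and that the accumulated distortion and failure probability across the recursion can be absorbed into polylogarithmic factors. Let $\bfw^{(0)} = \mathbf{1}_n$ and let $\bfw^{(i+1)}$ be the output of Algorithm~\ref{alg:unweighted-huber-subspace-embedding} on input $(\bfA,\bfw^{(i)})$, with per-level accuracy $\eps' = \eps/L$ and per-level failure probability $\delta' = \delta/L$, where $L$ is the number of levels to be chosen. The recursion stops once $\nnz(\bfw^{(i)}) \leq O(d^{1+\beta}\poly(\log(n/\delta)/\eps))$ with $\beta = 3-2\sqrt 2$.

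\textbf{Controlling the row count.} Set $m_i = \nnz(\bfw^{(i)})$ and $W_i = \norm*{\bfw^{(i)}}_\infty$. Since in Algorithm~\ref{alg:unweighted-huber-subspace-embedding} each nonzero entry is rescaled by at most $1/\bfp_i \leq m_i$, we have $W_{i+1} \leq m_i W_i$, so $\log W_i \leq O(i\log n)$. Theorem~\ref{thm:n13d-huber-sampling} applied with parameters $(\eps',\delta')$ yields
\[
\log m_{i+1} \;\leq\; \tfrac{\beta}{\beta+1}\log m_i \;+\; \log\!\bigl(d\cdot \poly(\log(n W_i)/(\eps'\delta'))\bigr)
\;\leq\; \lambda\,\log m_i + b,
\]
with $\lambda = \beta/(\beta+1) \in (0,1)$ and $b = O(\log d + \log\log n + \log L + \log(1/\eps) + \log(1/\delta))$, uniformly over all $i \leq L$ since $\log W_i \leq O(L\log n) \leq O(\log^2 n)$. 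Applying Lemma~\ref{lem:affine-recurrence} gives
\[
\log m_L \;\leq\; \frac{b}{1-\lambda} + \lambda^L \log m_0 \;=\; (1+\beta)\,b + \lambda^L \log n.
\]
Choosing $L = O(\log\log n)$ so that $\lambda^L \log n \leq b$ yields $m_L \leq d^{1+\beta}\poly(\log(n/\delta)/\eps) = d^{4-2\sqrt 2}\poly(\log(n/\delta)/\eps)$, matching the target.

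\textbf{Distortion and failure probability.} By Theorem~\ref{thm:n13d-huber-sampling}, with probability at least $1-\delta'$ the step from $\bfw^{(i)}$ to $\bfw^{(i+1)}$ satisfies $\norm*{\bfy}_{H,\bfw^{(i+1)}}^2 = (1\pm O(\eps'))\norm*{\bfy}_{H,\bfw^{(i)}}^2$ for all $\bfy \in \Span(\bfA)$. Union bounding over the $L$ levels gives total failure probability $\delta$; composing the multiplicative factors gives total distortion $(1\pm O(\eps'))^L = 1 \pm O(L\eps') = 1 \pm O(\eps)$, as required. The row-count bound $m_L \leq O(m_{\mathrm{target}})$ itself also holds with probability $1-\delta$ by the high-probability clause of Theorem~\ref{thm:n13d-huber-sampling} and a union bound.

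\textbf{Running time.} By Theorem~\ref{thm:n13d-huber-sampling}, the $i$th recursive call costs $O\bigl((\nnz(\bfA\vert_{\mathrm{supp}(\bfw^{(i)})}) + d^{\omega(1+1/\beta)}\poly\log)\cdot (\log^2 n)\log(1/\delta')\bigr)$. Only the first call incurs the full $\nnz(\bfA)$ term; subsequent calls operate on matrices of size $m_i \leq m_1$, contributing only $L$ copies of the $d^{\omega(1+1/\beta)}\poly\log$ additive term. Summing gives the claimed running time
\[
O\bigl(\,[\,\nnz(\bfA)(\log^2 n)(\log\log n) + d^{\omega(1+1/\beta)}\poly(\log n,\eps^{-1})\,]\log\tfrac1\delta\,\bigr).
\]

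\textbf{Main obstacle.} The delicate point is ensuring that the polylogarithmic overhead $b$ in the affine recurrence does not itself grow with the level $i$, since Theorem~\ref{thm:n13d-huber-sampling} depends on $\log \norm*{\bfw}_\infty$. The key observation above is that $\log W_i \leq O(i\log n) \leq O(L\log n)$, and $L = O(\log\log n)$, so $\log W_i$ is absorbed into the $\poly\log n$ factor uniformly across levels, leaving $b$ level-independent and allowing the clean application of Lemma~\ref{lem:affine-recurrence}.
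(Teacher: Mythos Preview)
Your proof is correct and follows essentially the same approach as the paper: iterate Theorem~\ref{thm:n13d-huber-sampling}, analyze the resulting affine recurrence in $\log m_i$ via Lemma~\ref{lem:affine-recurrence} with contraction factor $\beta/(\beta+1)$, and terminate after $O(\log\log n)$ rounds. Your explicit tracking of $\log\|\bfw^{(i)}\|_\infty \leq O(i\log n)$ across levels, and the observation that this is absorbed into the $\poly\log n$ term uniformly in $i$, is a detail the paper's proof glosses over but is indeed needed for the additive term $b$ in the recurrence to be level-independent.
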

\begin{proof}
We apply Lemma \ref{thm:n13d-huber-sampling} for $O(\log\log n)$ recursive steps with $\delta = O(1/\log\log n)$ so that we have a sequence of upper bounds $m_i$ on the number of rows such that, for $\beta = 3-2\sqrt 2$,
\[
    m_i = O\parens*{m_{i-1}^{\beta/(\beta+1)}d \poly\parens*{\frac{\log (n/\delta)}{\eps}}}.
\]
Note then that $a_i\coloneqq \log m_i$ satisfies
\[
    a_i = \frac{\beta}{\beta+1} a_{i-1} + b
\]
for
\[
    b = \log\bracks*{O\parens*{d\poly\parens*{\frac{\log (n/\delta)}{\eps}}}}.
\]
The closed form solution for this is given in Lemma \ref{lem:affine-recurrence}. Note that if $i$ satisfies
\[
    \parens*{\frac{\beta}{\beta+1}}^{i}\parens*{b + \frac{1}{\beta+1} a_0} \leq \frac1{\log b}
\]
then
\[
    a_i = (1+\beta) b \pm \frac1{\log b}
\]
and thus the number of rows is at most 
\[
    m_i = \exp(a_i) = O(\exp(b)^{1+\beta}) = O(d^{1+\beta}\poly((\log n)/\eps)) =  O(d^{4-2\sqrt2}\poly((\log n)/\eps)).
\]
This only requires $i$ at most $O(\log(b + a_0)) = O(\log\log n)$ iterations. 
\end{proof}

\subsubsection{Running Time Trade-offs}\label{sec:huber-running-time-opt}

The above running time is quite a large polynomial in $d$; for the current matrix multiplication exponent of $\omega\approx 2.37286$, the exponent of $d$ is
\[
    \omega\cdot \parens*{1+\frac1\beta} \approx  16.20290.
\]
However, by a further multi-step algorithm, we can further reduce the running time significantly, in a similar manner to Theorem \ref{thm:m-active-regression}. First note that by always only computing sensitivities which sum to at most $d^\omega\poly\log n$, we can achieve roughly $\nnz(\bfA) + d^\omega$ time to reduce the number of rows to roughly $n' = d^{\omega(1+\beta)}$ with the same proof, where we solve the recursion of reducing the dimension from $n$ to $n^{\beta/(1+\beta)}d^\omega$ at each step. We can then spend roughly $\nnz(\bfA) + n' d^\omega / d = \nnz(\bfA) + d^{\omega(2+\beta) - 1}$ in Theorem \ref{thm:m-estimator-alg} to fully reduce the dimension to $d^{1+\beta}$, where the exponent in $d$ is now only
\[
    \omega\cdot\parens*{2+\beta} - 1 \approx 4.14663.
\]
This running time may still be undesirable, if one is willing to sacrifice in the dimension reduction bound; for example, this does not beat the running time $\nnz(\bfA) + d^4$ in \cite{GPV2021}, even though the dimension reduction bound is significantly better. If we wish for a running time that is strictly $\nnz(\bfA) + d^\omega$, up to polylogarithmic factors, then the previous argument shows that we obtain a dimension reduction bound of
\[
    \omega\cdot(1+\beta) \approx 2.77998
\]
rows, which indeed slightly improves over their bound of $d^3$ rows in the same running time.

To find an intermediate trade-off by balancing the running time and row count, let $C \in [1, \omega]$ be a parameter. Assume we have already reduced to $n = d^{\omega(1+\beta)}$ rows in $\nnz(\bfA)+d^\omega$ time. Then, we can spend an additional
\[
    \frac{nd^\omega}{d^C} = d^{\omega(2+\beta) - C}
\]
time to reduce to $d^{C(1+\beta)}$ rows. One interesting choice is to balance the running time to be equal to $d$ times the row count, which gives $C = \frac{\omega(2+\beta)-1}{2+\beta} \approx 1.91236$ which allows for reduction to $d^C$ rows in $d^{C+1}$ time.

\subsection{Active Regression Algorithms}\label{sec:active-huber}

As noted before, Theorem \ref{thm:huber-subspace-embedding} gives constant factor solutions in the active regression setting, by Lemma \ref{lem:m-sensitivity-constant-factor-approx}. We now discuss how to build on this result to obtain a relative error solution, using the more refined techniques of Section \ref{subsec:relative-error-lp}.

Our main result of this section is the following:

\ActiveHuber*

\begin{algorithm}
	\caption{Active regression with the Huber loss}
	\textbf{input:} Matrix $\bfA \in \R^{n \times d}$, $\bfb\in\mathbb R^n$, weights $\bfw\geq\mathbf{1}_n$. \\
	\textbf{output:} Approximate solution $\tilde\bfx\in\R$ to $\min_\bfx\norm*{\bfA\bfx-\bfb}_H$.
	\begin{algorithmic}[1] % The number tells where the line numbering should start
        \If{$\nnz(\bfw) \leq d^{4-2\sqrt2}\poly(\log n,\eps^{-1})$}
            \State Compute an approximate solution $\tilde\bfx$ such that $\norm*{\bfA\tilde\bfx-\bfb}_{H,\bfw} \leq (1+\eps)\min_\bfx\norm*{\bfA\bfx-\bfb}_{H,\bfw}$
            \State \Return $\tilde\bfx$
        \EndIf
        \State Let $\bfw'$ be generated by Algorithm \ref{alg:huber-subspace-embedding}
        \State Let $\bfx_c$ be an approximate minimizer $\norm*{\bfA\bfx_c-\bfb}_{H,\bfw'} \leq O(1)\min_\bfx\norm*{\bfA\bfx-\bfb}_{H,\bfw'}$
        \State Set $\bfz\gets\bfb - \bfA\bfx_c$ \Comment{Implicit, for analysis}
        \State Let $\bfw''$ be generated by Algorithm \ref{alg:unweighted-huber-subspace-embedding}\label{line:eps-huber-subspace-embedding}, with possibly larger $\poly(\log(n/\delta)/\eps)$ terms
        \State Recursively compute an approximate solution $\tilde\bfx$ with inputs $(\bfA,\bfz,\bfw'')$ with Algorithm \ref{alg:huber-active}
        \State \Return $\tilde\bfx + \bfx_c$
	\end{algorithmic}\label{alg:huber-active}
\end{algorithm}

We first obtain an analog of Lemmas \ref{lem:clip} and \ref{lem:clip2}, mirroring some of the logic from Lemma \ref{lem:m-active-regression-relative-error-approximation}.

\begin{lemma}\label{lem:huber-clip}
Let $\bfw\geq\mathbf{1}_n$ be a set of weights. Let $\bfA\in\R^{n\times d}$. Let $\bfz\in\mathbb R^n$ be such that $\norm*{\bfz}_{H,\bfw} \leq O(\OPT)$, where
\[
    \OPT \coloneqq \min_{\bfx\in\mathbb R^d}\norm*{\bfA\bfx-\bfz}_{H,\bfw}.
\]
Let $\tilde\bfs_i^{H,\bfw}(\bfA)\geq\bfs_i^{H,\bfw}(\bfA)$ be an upper bound on the Huber sensitivities. Let $\eps>0$ and consider a subset
\[
    \mathcal B\subseteq\braces*{i\in[n] : \bfw_i H(\bfz_i) \geq \frac{\tilde\bfs_i^{H,\bfw}(\bfA)}{\eps^2}\OPT^2}
\]
and let $\bar\bfz\in\mathbb R^n$ be equal to $\bfz$, but with all entries in $\mathcal B$ set to $0$. Then, for all $\bfx\in\mathbb R^d$ with $\norm*{\bfA\bfx}_{H,\bfw} \leq O(\OPT)$,
\[
    \abs*{\norm*{\bfA\bfx-\bar\bfz}_{H,\bfw}^2-\norm*{\bfA\bfx-\bfz}_{H,\bfw}^2 - \norm*{\bfz-\bar\bfz}_{H,\bfw}^2} = O(\eps)\OPT^2.
\]
\end{lemma}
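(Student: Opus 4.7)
The plan is to mirror the proof of Lemma \ref{lem:clip} from the $\ell_p$ setting, with the square-root-subadditivity of $H$ playing the role of the approximate triangle inequality for $\abs{\cdot}^p$. First, I would partition the sum defining $\norm*{\bfA\bfx-\bfz}_{H,\bfw}^2$ along the two sets $\mathcal B$ and $[n]\setminus\mathcal B$. On the complement $[n]\setminus\mathcal B$ we have $\bfz_i=\bar\bfz_i$, so these coordinates contribute identically to the two sums, and no error is incurred. Everything therefore reduces to analyzing coordinates in $\mathcal B$.

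The key observation on $\mathcal B$ is that $\bfA\bfx$ cannot appreciably affect the $i$th coordinate. Using the definition of Huber sensitivity combined with the hypothesis $\norm*{\bfA\bfx}_{H,\bfw}\le O(\OPT)$, I would bound
\[
\bfw_i H([\bfA\bfx](i)) \;\le\; \tilde\bfs_i^{H,\bfw}(\bfA)\cdot \norm*{\bfA\bfx}_{H,\bfw}^2 \;\le\; O(\eps^2)\,\bfw_i H(\bfz_i)
\]
for each $i\in\mathcal B$, using the defining inequality of $\mathcal B$. Since $H$ is polynomially bounded above with degree $2$ and constant $1$, this gives $\sqrt{\bfw_i H([\bfA\bfx](i))} \le O(\eps)\sqrt{\bfw_i H(\bfz_i)}$.

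Next I would invoke subadditivity of $H^{1/2}$ (recall $H^{1/2}$ is subadditive, so $\sqrt{\bfw_i}\,H^{1/2}$ is too for $\bfw_i\ge 1$) to compare $H([\bfA\bfx](i)-\bfz_i)$ with $H(\bfz_i)$. Specifically, applying the inequality in both directions,
\[
\abs[\big]{\sqrt{\bfw_i H([\bfA\bfx](i)-\bfz_i)} - \sqrt{\bfw_i H(\bfz_i)}} \;\le\; \sqrt{\bfw_i H([\bfA\bfx](i))} \;\le\; O(\eps)\sqrt{\bfw_i H(\bfz_i)},
\]
so squaring yields $\bfw_i H([\bfA\bfx](i)-\bfz_i) = (1\pm O(\eps))\,\bfw_i H(\bfz_i)$. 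On the other hand, since $\bar\bfz_i=0$ on $\mathcal B$, we have $\bfw_i H([\bfA\bfx](i)-\bar\bfz_i) = \bfw_i H([\bfA\bfx](i)) \le O(\eps^2)\bfw_i H(\bfz_i)$.

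Finally I would sum over $i\in\mathcal B$, producing
\[
\sum_{i\in\mathcal B}\bfw_i H([\bfA\bfx](i)-\bfz_i) \;=\; (1\pm O(\eps))\,\norm*{\bfz-\bar\bfz}_{H,\bfw}^2
\]
and $\sum_{i\in\mathcal B}\bfw_i H([\bfA\bfx](i)-\bar\bfz_i) \le O(\eps^2)\norm*{\bfz-\bar\bfz}_{H,\bfw}^2$. Using $\norm*{\bfz-\bar\bfz}_{H,\bfw}^2 \le \norm*{\bfz}_{H,\bfw}^2 \le O(\OPT^2)$, the combined additive slack on the difference in the claim is $O(\eps)\OPT^2$, which gives the bound. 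The only subtlety to be careful about is tracking the fact that the multiplicative $(1\pm O(\eps))$ errors on the $\mathcal B$-sum translate into additive $O(\eps)\OPT^2$ errors only because $\norm*{\bfz-\bar\bfz}_{H,\bfw}^2$ itself is controlled by $\OPT^2$; I expect this to be the only delicate bookkeeping point, since once the sensitivity bound and $H^{1/2}$-subadditivity are combined, the rest is algebra.
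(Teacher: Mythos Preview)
Your proposal is correct and follows essentially the same approach as the paper's proof: both split along $\mathcal B$, use the sensitivity bound to show $\bfw_i H([\bfA\bfx](i)) \le O(\eps^2)\,\bfw_i H(\bfz_i)$ on $\mathcal B$, invoke $H^{1/2}$-subadditivity (the paper cites Lemma~\ref{lem:triangle-inequality-m}) to get the $(1\pm O(\eps))$ per-coordinate estimate, and then sum using $\norm*{\bfz-\bar\bfz}_{H,\bfw}^2 \le \norm*{\bfz}_{H,\bfw}^2 = O(\OPT^2)$. The only cosmetic difference is that the paper bounds the combined difference $\bfw_i H([\bfA\bfx](i)-\bfz_i) - \bfw_i H([\bfA\bfx](i))$ in one line, whereas you handle the two terms separately.
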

\begin{proof}
    For any $\bfx\in\R^d$ such that $\norm*{\bfA\bfx}_{H,\bfw} \leq O(\OPT)$. Then, for any $i\in\mathcal B$,
    \begin{equation}\label{eq:huber-small-entries}
    \begin{aligned}
        \bfw_i H([\bfA\bfx](i)) &\leq \tilde\bfs_i^{H,\bfw}(\bfA)\norm*{\bfA\bfx}_{H,\bfw}^2 \\
        &\leq \tilde\bfs_i^{H,\bfw}(\bfA)O(\OPT^2) \\
        &\leq O(\eps^2)\cdot\bfw_i H(\bfz_i).
    \end{aligned}
    \end{equation}
    so
    \begin{align*}
        &\bfw_i H([\bfA\bfx](i)-\bfz(i)) - \bfw_i H([\bfA\bfx](i)-\bar\bfz(i)) \\
        =~&\bfw_i H([\bfA\bfx](i)-\bfz(i)) - \bfw_i H([\bfA\bfx](i)) && i\in\mathcal B \\
        =~&\bfw_i\parens*{H^{1/2}(\bfz(i))\pm H^{1/2}([\bfA\bfx](i))}^2 - \bfw_i H([\bfA\bfx](i)) && \text{see Lemma \ref{lem:triangle-inequality-m}} \\
        =~&(1\pm O(\eps))\bfw_i H(\bfz_i) && \text{Equation \eqref{eq:huber-small-entries}}
    \end{align*}
    Thus,
    \begin{align*}
        &\abs*{\norm*{\bfA\bfx-\bfz}_{H,\bfw}^2 - \norm*{\bfA\bfx-\bar\bfz}_{H,\bfw}^2 - \norm*{\bfz-\bar\bfz}_{H,\bfw}^2} \\
        =~&\abs*{\sum_{i\in\mathcal B}(1\pm O(\eps))\bfw_i H(\bfz_i) - \bfw_i H(\bfz_i)} \\
        =~&\abs*{\sum_{i\in\mathcal B} O(\eps)\bfw_i H(\bfz_i)} \\
        =~&O(\eps)\norm*{\bfz-\bar\bfz}_{H,\bfw}^2 \leq O(\eps)\norm*{\bfz}_{H,\bfw}^2 \leq O(\eps)\OPT^2.\qedhere
    \end{align*}
\end{proof}

Next, we show that our Huber subspace embedding algorithm approximately preserves Huber norms $\norm*{\bfA\bfx-\bar\bfz}_{H,\bfw}^2$ for all $\bfx\in\mathbb R^d$ such that $\norm*{\bfA\bfx}_{H,\bfw} = O(\OPT)$. 

\begin{lemma}\label{lem:huber-additive2}
    Consider the setting of Lemma \ref{lem:huber-clip}. Suppose that $\bfw'$ is obtained by performing one step of Algorithm \ref{alg:unweighted-huber-subspace-embedding} (with the $\poly(\log(n\norm*{\bfw}_\infty/\delta)/\eps)$ term in $m$ possibly being a larger polynomial). Then, there exists a $\mathcal B$ satisfying the requirements of Lemma \ref{lem:huber-clip} (see Equation \eqref{eq:huber-large-entries-active}) such that, with probability at least $1-\delta$, for all $\bfx\in\mathbb R^{d}$ with $\norm*{\bfA\bfx}_{H,\bfw} = O(\OPT)$,
    \begin{equation}\label{eq:huber-additive2-guarantee}
        \abs*{\norm*{\bfA\bfx-\bar\bfz}_{H,\bfw'}^2 - \norm*{\bfA\bfx-\bar\bfz}_{H,\bfw}^2} = O(\eps)\cdot\OPT^2.
    \end{equation}
\end{lemma}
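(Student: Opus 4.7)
Plan: The proof parallels that of Lemma \ref{lem:additive2} in the $\ell_p$ setting but is built on top of the BLM-style nets (Lemma \ref{lem:blm-net-for-huber}) and Bernstein bound (Lemma \ref{lem:huber-bernstein}) developed in Section \ref{sec:huber}. I would first fix the clipping set $\mathcal B = \{i\in[n] : \bfw_i H(\bfz_i) \geq \tilde\bfs_i^{H,\bfw}(\bfA)\,\OPT^2/\eps^2\}$, where the $\tilde\bfs_i^{H,\bfw}(\bfA)$ are the Huber sensitivity upper bounds implicit in Algorithm \ref{alg:unweighted-huber-subspace-embedding} (they arise from the sampling probabilities used there). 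This choice ensures (i) the hypothesis of Lemma \ref{lem:huber-clip} is satisfied and (ii) for every $i\in\overline{\mathcal B}$, $\bfw_i H(\bar\bfz_i)\leq \tilde\bfs_i^{H,\bfw}(\bfA)\,\OPT^2/\eps^2$. Combined with the inequality $\bfw_i H([\bfA\bfx](i))\leq \tilde\bfs_i^{H,\bfw}(\bfA)\,\norm*{\bfA\bfx}_{H,\bfw}^2=O(\tilde\bfs_i^{H,\bfw}(\bfA)\OPT^2)$ valid for $\norm*{\bfA\bfx}_{H,\bfw}=O(\OPT)$, this says that on $\overline{\mathcal B}$ each residual vector $\bfA\bfx-\bar\bfz$ satisfies the same coordinate-wise Huber-mass bound as a vector in $\Span(\bfA)$ of Huber norm $O(\OPT/\eps)$.

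Next, in place of the Huber ball $\mathcal B_\rho^{H}$ used in Lemma \ref{lem:huber-single-scale}, I would analyze the translated set
\[
T_\rho \coloneqq \{\bfA\bfx-\bar\bfz : \norm*{\bfA\bfx}_{H,\bfw}\leq O(\OPT)\},
\]
at each of the $O(\log(n\norm*{\bfw}_\infty/\eps))$ Huber radii $\rho=2^k$ furnished by Lemma \ref{lem:huber-multiple-nets}. A cover of $T_\rho$ is built by taking the BLM cover of $\{\bfA\bfx:\norm*{\bfA\bfx}_{H,\bfw}\leq O(\OPT)\}$ given by Lemma \ref{lem:blm-net-for-huber}, translating by $-\bar\bfz$, and absorbing $\bar\bfz$ as an additional fixed piece in the disjoint-support decomposition $\tilde\bfy=\sum_k\tilde\bfy^{(k)}$. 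The coordinate-wise bound above on $\bar\bfz\mid_{\overline{\mathcal B}}$ means that this extra piece fits exactly the BLM format used in Lemma \ref{lem:huber-bernstein}, after inflating the Lewis-weight threshold by a factor $\poly(1/\eps)$; on $\mathcal B$, $\bar\bfz$ vanishes and contributes nothing.

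With the net in place, I would run the Bernstein argument of Lemma \ref{lem:huber-bernstein} on each net vector $\bfA\tilde\bfx-\bar\bfz$ at each scale. The $\ell_\infty$ and variance bounds driving the Bernstein concentration inflate by a constant power of $1/\eps$ (coming from the $\eps^{-1}$ slack in the bound on $\bar\bfz$'s coordinates), which is absorbed into the larger $\poly(\log(n\norm*{\bfw}_\infty/\delta)/\eps)$ oversampling factor allowed by the hypothesis (line \ref{line:eps-huber-subspace-embedding} of Algorithm \ref{alg:huber-active} permits us to enlarge $m$ in Algorithm \ref{alg:unweighted-huber-subspace-embedding} accordingly). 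Union-bounding over the weight classes $T_j$ and the $O(\log(n\norm*{\bfw}_\infty/\eps))$ radii, exactly as in the proofs of Lemmas \ref{lem:huber-single-step} and \ref{lem:huber-bernstein}, then yields Equation \eqref{eq:huber-additive2-guarantee} with probability at least $1-\delta$.

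The main obstacle is accommodating the translation $\bar\bfz$ inside the BLM rounding of Lemma \ref{lem:blm-net-for-huber}, since $\bar\bfz\notin\Span(\bfA)$ and hence does not a priori fit the Lewis-weight bucketing that underlies the chaining argument. The resolution is the observation above that, restricted to $\overline{\mathcal B}$, $\bar\bfz$ satisfies exactly the same per-coordinate Huber-mass bound as a vector in $\Span(\bfA)$ of Huber norm $O(\OPT/\eps)$. Thus $\bar\bfz\mid_{\overline{\mathcal B}}$ can be appended to the BLM decomposition either as a single additional deterministic piece (adding one more term to the sum over $k$ and only $+1$ to $|\mathcal D_k|$), or equivalently handled via the split $H^{1/2}(\abs{[\bfA\bfx-\bar\bfz](i)})\leq H^{1/2}(\abs{[\bfA\bfx](i)})+H^{1/2}(\abs{\bar\bfz_i})$ together with a trivial (zero-variance) Bernstein bound on the fixed vector $\bar\bfz$, at the cost only of an additional $\poly(1/\eps)$ factor in the sample size.
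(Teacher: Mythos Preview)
Your high-level strategy matches the paper's: clip $\bfz$ to $\bar\bfz$ using Huber sensitivity upper bounds, then show that the sampling of Algorithm~\ref{alg:unweighted-huber-subspace-embedding} preserves $\norm*{\bfA\bfx-\bar\bfz}_{H,\bfw}^2$ over the relevant ball via a BLM-style net plus Bernstein bound. (A minor simplification: the paper notes that only a \emph{single} scale $\rho=O(\OPT)$ is needed here, not the $O(\log(n\norm*{\bfw}_\infty/\eps))$ scales you invoke from Lemma~\ref{lem:huber-multiple-nets}.)

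However, your resolution of the central obstacle---incorporating the translation by $\bar\bfz$ into the BLM decomposition---has a genuine gap. The Bernstein step in Lemma~\ref{lem:huber-bernstein} relies essentially on the disjointness of the supports of the pieces $\tilde\bfy^{(k)}$ to obtain
\[
\abs*{\norm*{\tilde\bfy}_{H,\bfw'}^2-\norm*{\tilde\bfy}_{H,\bfw}^2}\leq\sum_{k}\abs*{\norm*{\tilde\bfy^{(k)}}_{H,\bfw'}^2-\norm*{\tilde\bfy^{(k)}}_{H,\bfw}^2}.
\]
Your first suggestion (append $-\bar\bfz$ as a single additional deterministic piece) destroys this disjointness, so the displayed inequality no longer holds. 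Your second suggestion (the $H^{1/2}$-subadditivity split together with a ``trivial Bernstein bound on the fixed vector $\bar\bfz$'') only yields one-sided inequalities on $H(\abs{[\bfA\bfx-\bar\bfz](i)})$, not the two-sided preservation required by~\eqref{eq:huber-additive2-guarantee}.

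The paper resolves this by opening up the BLM rounding rather than using Lemma~\ref{lem:blm-net-for-huber} as a black box. In Claims~\ref{clm:huber-compact-rounding} and~\ref{clm:huber-partition} inside the proof, $[n]$ is re-partitioned into sets $B_{k,\bfy}$, $G_{k,\bfy}$, $H_B$ that depend on \emph{both} the BLM level of $[\bfA\bfx](i)$ \emph{and} the magnitude of $\bar\bfz(i)$: on $B_{k,\bfy}$ both $H(\abs{[\bfA\bfx](i)})$ and $H(\abs{\bar\bfz(i)})$ are bounded at level~$k$, so the rounded residual piece $\bfd_k(i)$ (which equals the rounded $[\bfA\bfx](i)$ minus $\bar\bfz(i)$) inherits the level-$k$ coordinate bound; on $G_{k,\bfy}$ and $H_B$, $\abs{[\bfA\bfx](i)}$ is negligible relative to $\abs{\bar\bfz(i)}$, so the rounding simply sets the piece to $-\bar\bfz(i)$ and absorbs the dropped $[\bfA\bfx](i)$ into the $O(\eps)\OPT$ cover radius. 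This interleaved partition simultaneously preserves disjoint supports, the net-size bounds $\log\abs{\mathcal D_k}$, and the level-$k$ coordinate bounds that make the chaining go through---none of which is achieved by the two shortcuts you propose.
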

\begin{proof}
    We will make a careful modification of our Huber rounding technique of Section \ref{subsec:huber-sampling-bounds}. To adapt the rounding to the active setting, we will need to modify the proof for the \cite{BourgainLindenstraussMilman:1989} net construction rather than using them as a black box. 

    Note that we only need to provide Huber norm preservation guarantees for a single scale of Huber radius $\rho = O(\OPT)$ for this lemma, as opposed to $O(\log n)$ scales for the Huber subspace embedding result. Furthermore, by Lemma \ref{lem:m-norm-net-to-sphere}, it suffices to show the approximation guarantee for every $\bfy\in\mathcal N\cup(\mathcal N-\bar\bfz)$, where $\mathcal N$ is an $O(\eps^2\rho)$-cover of $\mathcal B_\rho^{H,\bfw}$. We have already shown in Theorem \ref{thm:huber-subspace-embedding} the preservation of $\norm*{\bfy}_{H,\bfw}^2$ for every $\bfy$ in $\mathcal N$, so we focus on showing the preservation of $\norm*{\bfy}_{H,\bfw}^2$ for every $\bfy\in\mathcal N-\bar\bfz$, which is similar but requires a slightly more involved argument to handle the $\bar\bfz$ translation.

    We start off with the following claim, which is the active version of Lemma \ref{lem:blm-net-for-huber}.

    \begin{claim}\label{clm:huber-compact-rounding}
        Let $\mathcal I\subset[1,2]$ be the set in the statement of Corollary \ref{cor:generalized-huber-inequality}. Let $\kappa \geq 1$ be a distortion parameter. Suppose that $\tilde\bfs_i^{H}(\bfA)\geq\bfs_i^{H}(\bfA)$ is an upper bound on the Huber sensitivities such that
        \begin{equation}\label{eq:huber-lp-sensitivity-bound}
            \kappa\bfw_i^p(\bfA) \leq \tilde\bfs_i^{H}(\bfA)
        \end{equation}
        for every $p\in\mathcal I$. Let $\Delta>0$ be a scale parameter and let $\bfv\in\mathbb R^n$ be a target vector satisfying
        \[
            H(\bfv(i)) \leq \frac{\tilde\bfs_i^{H}(\bfA)}{\eps^4}\Delta^2
        \]
        for each $i\in[n]$.
        Let $\bfr = \bfy - \bfv\in (\mathcal N - \bfv)$ for for $\mathcal N$ an $O(\eps^2\rho)$-cover of $\mathcal B_\rho^H$ for $\rho = O(\Delta)$. Suppose that $\bfy$ satisfies
        \[
            \frac{\min_{p\in\mathcal I}\norm*{\bfy}_p^p}{\norm*{\bfy}_H^2} \leq \kappa.
        \]
        Then, there exists a $p\in\mathcal I$, $\ell = O((\log d)/\eps^2)$, and a rounding $\bfr' = \bfe + \sum_{k=0}^\ell \bfd_k$ such that:
        \begin{itemize}
            \item $\norm*{\bfr' - \bfr}_H \leq O(\eps)\Delta$
            \item $H\parens*{\abs*{\bfd_k(i)}} \leq O(1)\frac{\tilde\bfs_i^H(\bfA)\cdot(1+\eps^2)^{pk}}{\eps^4\cdot d}\Delta^2$
            \item $\bfe, \bfd_0, \bfd_1, \dots, \bfd_\ell$ have disjoint supports
            \item $\bfe$ is a single fixed vector with $H(\abs{\bfe(i)}) \leq O(1)\frac{\tilde\bfs_i^H(\bfA)}{\eps^4\cdot d}\Delta^2$
            \item each $\bfd_k$ is drawn from a set of vectors $\mathcal D_k$ with size at most
            \[
                \log\abs*{\mathcal D_k} \leq O(1)\frac{d}{\eps^{2(1+p)}(1+\eps^2)^{pk}}\log\frac{n}{\eps}.
            \]
        \end{itemize}
    \end{claim}
    \begin{proof}
        Let $p\in\mathcal I$ be the $p$ achieving the minimum in Corollary \ref{cor:generalized-huber-inequality} for $\bfy$. We will now define $\ell$ nets $\mathcal N_k$ for $k\in[\ell]$ as follows. Recall the following result from \cite{BourgainLindenstraussMilman:1989} and \cite{SchechtmanZvavitch:2001}:

        \begin{lemma}\label{lem:blm-sz-nets}
            Let $\bfA\in\mathbb R^{n\times d}$ and $1 \leq p < \infty$. Let $\bfw\in\mathbb R^{n\times n}$ be the diagonal matrix with
            \[
                \bfW_{i,i} \coloneqq \frac12\parens*{\frac{\bfw_i^p(\bfA)}{d} + \frac1n.}
            \]
            Let $\mathcal B_p \coloneqq \braces*{\bfA\bfx : \norm*{\bfA\bfx}_p\leq 1}$. Then for any $t > 0$, there is a net $\mathcal N_\infty\subseteq\mathbb R^n$ such that, for any $\bfy\in\mathcal B_p$, there exists a $\bfy\in\mathcal N_\infty$ with $\norm*{\bfW^{-1/p}(\bfy-\bfy')}_\infty \leq t$ and
            \[
                \log\abs*{\mathcal N_\infty} \leq c(p) \frac{d}{t^{2\land p}}\log n
            \]
            where $c(p)$ is a constant depending only on $p$ \cite[Corollary 4.7 and Proposition 7.2]{BourgainLindenstraussMilman:1989}.%, \cite[Proposition 3.1 and Remark 3.2]{SchechtmanZvavitch:2001}.
        \end{lemma} 

        We use the following definition of a sequence of nets from \cite{BourgainLindenstraussMilman:1989}.

        \begin{claim}[Index sets \cite{BourgainLindenstraussMilman:1989}]\label{clm:blm-index-sets}
            Let $\eps' > 0$. For $k\in[\ell]$, let $\mathcal N_k$ be the net from Lemma \ref{lem:blm-sz-nets} with $t = \eps'(1+\eps')^k / 3$. If $t \leq 1$, we take $\mathcal N_k = \mathcal N$. Now for each $\bfy\in\mathcal N$ and $k\in[\ell]$, let $\bff_{k,\bfy}\in\mathcal N_k$ satisfy
            \[
                \norm*{\bfW^{-1/p}(\bff_{k,\bfy}-\bfy)}_\infty \leq \eps'(1+\eps')^k / 3
            \]
            as given by Lemma \ref{lem:blm-sz-nets}. Define the index sets
            \begin{align*}
                C_{k,\bfy} &\coloneqq \braces*{i\in[n] : |\bfW^{-1/p} \bff_{k,\bfy}(i)| \ge (1+\epsilon')^{k-1}} \\
                D_{k,\bfy} &\coloneqq C_{k,\bfy} \setminus \bigcup_{k' > k} C_{k',\bfy} \\
                D_{0,\bfy} &\coloneqq [n] \setminus \bigcup_{k \ge 1} C_{k,\bfy}
            \end{align*}
            Then, for each $k$ we have $\log |\mathcal{N}_k| = O \left (\frac{d\log n}{(\epsilon'(1+\epsilon')^{k})^{2\land p}} \right )$, and for every $i\in D_{k,\bfy}$, we have
            \begin{align}\label{yi_bound}
            \frac{\bfw_i^p(\bfA)^{1/p} \cdot (1+\epsilon')^{k-2}}{d^{1/p}} \le |\bfy(i)| \le \frac{\bfw_i^p(\bfA)^{1/p} \cdot (1+\epsilon')^{k+2}}{d^{1/p}}.
            \end{align}
        \end{claim}
        \begin{proof}
            Note that since $\norm{\bfW^{-1/p}(\bff_{k,\bfy}-\bfy)}_\infty \le \epsilon'(1+\epsilon')^k/3$, if $i \in C_{k,\bfy}$ for $1 \le k \le \ell$,
            \begin{align*}
            |\bfy(i)| &\ge |\bff_{k,\bfy}(i)| - \frac{\bfw^p_i(\bfA)^{1/p} \cdot \epsilon' (1+\epsilon')^k/3}{d^{1/p}}\\
            &\ge \frac{\bfw^p_i(\bfA)^{1/p}}{d^{1/p}} \cdot \left [(1+\epsilon')^{k-1} - \epsilon' (1+\epsilon')^k/3 \right ]\tag{By definition, if $i \in C_{k,\bfy}$, $|\bff_{k,\bfy}(i)| \ge \frac{\bfw^p_i(\bfA)^{1/p} \cdot (1+\epsilon')^{k-1}}{d^{1/p}}$.}\\
            &\ge \frac{\bfw^p_i(\bfA)^{1/p} \cdot (1+\epsilon')^{k-2}}{d^{1/p}}.
            \end{align*}
            If $i \notin C_{k,\bfy}$ we analogously have $|\bfy(i)| \le \frac{\bfw_i^p(\bfA)^{1/p} \cdot (1+\epsilon')^{k+2}}{d^{1/p}}$.
            Thus, for $i \in D_{k,\bfy} = C_{k,\bfy} \setminus \bigcup_{k' > k} C_{k',\bfy}$ for $1 \le k < \ell$, Equation \eqref{yi_bound} holds. Equation \eqref{yi_bound} also holds for $k = \ell$ via Lemma \ref{lem:lwBound} -- since $\norm{\bfy}_p \le 1$, $|\bfy(i)| \le d^{\max(0,1/2-1/p)} \cdot \bfw_i^p(\bfA)^{1/p}$ for all $i$. And so vacuously, since for $\ell = \log_{1+\epsilon'} d^{\max(1/p,1/2 )}$, $(1+\epsilon')^{\ell+2} \ge d^{\max(1/p,1/2)}$, we have $|\bfy(i)| \le \frac{\bfw_i^p(\bfA)^{1/p} \cdot (1+\epsilon')^{\ell + 2}}{d^{1/p}}$.
        \end{proof}

        We now apply Claim \ref{clm:blm-index-sets} with $\eps' = \eps^2$ to obtain the nets $\mathcal N_k$. We then define index sets for the target vector $\bfv$ as follows:

        \begin{claim}[Huber index sets for $\bfv$]\label{clm:huber-partition}
            Define the following sets:
        \begin{align*}
            B_{k,\bfy} &\coloneqq \braces*{i \in D_{k,\bfy}: H(\bfv(i)) \le \frac{\tilde\bfs_i^H(\bfA) \cdot (1+\epsilon^2)^{p(k+2)}}{\epsilon^4 \cdot d}\Delta^2} \tag{$k\in[\ell]\cup\{0\}$} \\
            H_k &\coloneqq \braces*{i\in[n]: \frac{\tilde\bfs_i^H(\bfA)\cdot (1+\epsilon^2)^{p(k+1)}}{\epsilon^4 \cdot d}\Delta^2 < H(\bfv(i))\le \frac{\tilde\bfs_i^H(\bfA) \cdot (1+\epsilon^2)^{p(k+2)}}{\epsilon^4 \cdot d}\Delta^2 } \tag{$k\in[\ell]$} \\
            H_B &\coloneqq \braces*{ i\in[n]: H(\bfv(i)) > \frac{\tilde\bfs_i^H(\bfA) \cdot (1+\epsilon^2)^{p(\ell+2)}}{\epsilon^4 \cdot d}\Delta^2 } \\
            G_{k,\bfy} &\coloneqq H_{k} \setminus \bigcup_{k' \ge k} C_{k',\bfy} \tag{$k\in[\ell]$}
        \end{align*}
        Then, $B_{0,\bfy},\ldots,B_{\ell,\bfy},G_{1,\bfy},\ldots,G_{\ell,\bfy}, H_B$ form a partition of $[n]$.
        \end{claim}
        \begin{proof}
        Define $H_{0} = \{i: H(\bfv(i)) \le \frac{\tilde\bfs_i^H(\bfA) \cdot (1+\epsilon^2)^{2p}}{\epsilon^4 \cdot d}\Delta^2 \}$, so that $H_B \cup \bigcup_{k=0}^\ell H_{k} = [n]$ is a partition the range of all possible $H(\bfv(i))$ values. Furthermore, every $i\in H_0$ lies in some $B_{k,\bfy}$. 
        
        At a high level, our argument will be as follows. $H_k$ can be furthered partitioned into the indices that lie in some $\bigcup_{k'\geq k}C_{k,\bfy}$, or those that are not in any $\bigcup_{k'\geq k} C_{k',\bfy}$. Those that lie in some $\bigcup_{k'\geq k}C_{k,\bfy}$ are then regrouped into the $B_{k,\bfy}$, which is also a valid partition by definition of the $D_{k,\bfy}$.

        Any $i \in H_{k} \cap \left (\bigcup_{k' \ge k} C_{k',\bfy} \right)$ has $H(\bfv(i))\le \frac{\tilde\bfs_i^H(\bfA) \cdot (1+\epsilon^2)^{p(k+2)}}{\epsilon^4 \cdot d}\Delta^2$ and thus must lie in $B_{k',\bfy}$ for some $k' \ge k$. % but $ |\bfy (i)| \ge (1+\epsilon)^{k-2}$. 
        Thus, $G_{k,\bfy} \cup (\bigcup_{k' \ge k} B_{k',\bfy} ) \supseteq H_{k}$ and so overall $(\bigcup_{k=1}^\ell G_{k,\bfy} \cup B_{k,\bfy}) \cup B_{0,\bfy} \cup H_B = [n]$.

        Now clearly, the $H_{k}$ and in turn the $G_{k,\bfy}$ are disjoint. $H_B$ is disjoint from all $H_k$ and hence all $G_{k,\bfy}$. It is also disjoint from all $B_{k,\bfy}$ since the $H(|\bfv(i)|)$ values in $H_B$ are too large to be assigned to $B_{k,\bfy}$, even for $k = \ell$. The $D_{k,\bfy}$, and in turn the $B_{k,\bfy}$, are also disjoint by construction. Further, $\bigcup_{k' \ge k} B_{k',\bfy} \subseteq \bigcup_{k' \ge k} D_{k',\bfy} = \bigcup_{k' \ge k} C_{k',\bfy}$, and thus $G_{k,\bfy}$ is disjoint from $B_{k',\bfy}$ for all $k' \ge k$. Also, for $k \ge 1$, being in $G_{k,\bfy}$ requires $H(|\bfv(i)|) > \frac{\tilde\bfs_i^H(\bfA)\cdot (1+\epsilon^2)^{p(k+1)}}{\epsilon^4 \cdot d}\Delta^2$, while being in $B_{k',\bfy}$ for $k' < k$ requires $H(\bfv(i)) \le \frac{\tilde\bfs_i^H(\bfA) \cdot (1+\epsilon^2)^{p(k+1)}}{\epsilon^4 \cdot d}\Delta^2$. Thus $G_{k,\bfy}$ is disjoint from $B_{k',\bfy}$ for all $k' < k$. Overall, $G_{k,\bfy}$ is disjoint from all $B_{k',\bfy}$. Thus, $B_{0,\bfy},\ldots,B_{\ell,\bfy},G_{1,\bfy},\ldots,G_{\ell,\bfy}, H_B$ are all mutually disjoint and partition $[n]$.
        \end{proof}

        Using the index sets of Claim \ref{clm:huber-partition}, we next show how to round $\bfr = \bfy-\bfv$ to a nearby $\bfr'$ using the partition of $[n]$ defined above.

        \begin{claim}[$\ell_\infty$ Error Bound]\label{clm:infty}
            Define the vectors $\bfr'$, $\bfe$, and $\bfd_k$ with
            \[
            \bfr' = \bfe + \sum_{k=0}^\ell \bfd_k
            \]
            as follows:
            \begin{equation}\label{eq:active-rounding-definition}
            \begin{aligned}
            \bfd_0(i) &\coloneqq \bfy(i)-\bfv(i) && i\in B_{0,\bfy} \\
            \bfd_k(i) &\coloneqq \frac{(1+\epsilon^2)^k \cdot \bfw_i^p(\bfA)^{1/p}}{d^{1/p}} - \bfv(i) && k\in[\ell], i\in B_{{k,\bfy}} \\
            \bfd_k(i) &\coloneqq \bfv(i) && i\in G_{k,\bfy} \\
            \bfd_k(i) &\coloneqq 0 && \text{otherwise} \\
            \bfe_i(i) &\coloneqq \bfv(i) && i\in H_B \\
            \bfe_i(i) &\coloneqq 0 && \text{otherwise}
            \end{aligned}
            \end{equation}
            Then
            $$|\bfr(i) - \bfr'(i)| \le \epsilon^2( |\bfy(i)| + |\bfv(i)|).$$
            \end{claim}
            \begin{proof}
            %By the definition of $d_k = (1+\epsilon)^k W^{1/p} \cdot \chi_{D_{k,\bfy}}$, and after adjusting constants on $\epsilon$, we have for $i \in D_{k,\bfy}$ for $k \ge 1$:
            
            For $i \in B_{0,\bfy}$ we have $\bfr(i) = \bfr'(i)$ so the claim trivially holds. For $i \in B_{k,\bfy}$ for $k \ge 1$, by \eqref{yi_bound},
            \[
                \left |\bfy(i) - \frac{(1+\epsilon^2)^k \cdot \bfw_i^p(\bfA)^{1/p}}{d^{1/p}} \right | \le O(\epsilon^2) \cdot |\bfy(i)|.
            \]
            Thus by triangle inequality, $|\bfr(i) - \bfr'(i)| \le O(\epsilon^2) \cdot |\bfy(i)|$. So the claim holds after adjusting constants on $\epsilon$. %Finally, for $i \in G_{k,\bfy}$ or $i \in H_B$, we have $|\bfv(i)| > \frac{\bfw_i^p(\bfA)^{1/p} \cdot (1+\epsilon)^{k+1}}{\epsilon \cdot d^{1/p}}$ but $|\bfy(i)| \le \frac{\bfw_i^p(\bfA)^{1/p} \cdot (1+\epsilon)^{k+2}}{d^{1/p}}$, since $i \notin C_{k,\bfy}$ for any $k' \ge k$. We thus have $|\bfv(i) - (\bfv(i)-\bfy(i))| = O(\epsilon) \cdot |\bfv(i)|$, giving the claim.
            Finally, for $i\in G_{k,\bfy}$ we have
            \begin{align*}
                H(\abs*{\bfy(i)}) &\leq \abs*{\bfy(i)}^p \tag{$p\in[1,2]$} \\
                &\leq \frac{\bfw_i^p(\bfA)\cdot (1+\eps^2)^{p(k+2)}}{d}\norm*{\bfy}_p^p \tag{$i\notin \bigcup_{k'\geq k}C_{k',\bfy}$} \\
                &\leq \kappa\frac{\bfw_i^p(\bfA)\cdot (1+\eps^2)^{p(k+2)}}{d}\norm*{\bfy}_H^2\\
                &\leq \frac{\tilde\bfs_i^H(\bfA)\cdot (1+\eps^2)^{p(k+2)}}{d}\cdot O(\Delta^2) \tag{Equation \eqref{eq:huber-lp-sensitivity-bound}} \\
                &\leq O(\eps^4)H(\bfv(i)) \tag{$i\in H_k$}
            \end{align*}
            and similarly for $i\in H_B$.
            \end{proof}

        We then have that
        \begin{align*}
            \sum_{i=1}^n H(\abs*{\bfr'(i)-\bfr(i)}) &\leq \sum_{i=1}^n  H(\eps^2(\abs{\bfy(i)} + \abs*{\bfv(i)})) \tag{monotonicity} \\
            &\leq O(\eps^2)\sum_{i=1}^n H(\abs{\bfy(i)} + \abs*{\bfv(i)}) \tag{at least linear growth} \\
            &= O(\eps^2)(\norm*{\bfy}_{H} + \norm*{\bfv}_{H})^2 = O(\eps^2)\Delta^2 \tag{triangle inequality}
        \end{align*}
        so $\norm*{\bfr'-\bfr}_{H} \leq O(\eps)\Delta$. The disjointness of the $\bfe$ and $\bfd_k$ as well as the net size bounds carry over directly. Finally, the bounds on the coordinates of $\bfd_k$ follow from
        \begin{align*}
            H(\abs{\bfd_k(i)}) &\leq \abs*{\bfd_k(i)}^p \tag{$p\in[1,2]$} \\
            &\leq O(1)(\abs{\bfy(i)}^p + \abs{\bfv(i)}^p) \\
            &\leq O(1)\bracks*{\frac{\bfw_i^p(\bfA)}{d}(1+\eps^2)^{pk}\norm*{\bfy}_p^p + \frac{\tilde\bfs_i^H(\bfA)\cdot(1+\eps^2)^{p(k+2)}}{\eps^4\cdot d}\Delta^2} \tag{$i\in B_{k,\bfy}$} \\
            &\leq O(1)\bracks*{\kappa\frac{\bfw_i^p(\bfA)}{d}(1+\eps^2)^{pk}\norm*{\bfy}_H^2 + \frac{\tilde\bfs_i^H(\bfA)\cdot(1+\eps^2)^{p(k+2)}}{\eps^4\cdot d}\Delta^2} \\
            &= O(1)\frac{\tilde\bfs_i^H(\bfA)\cdot(1+\eps^2)^{pk}}{\eps^4\cdot d}\Delta^2
        \end{align*}
        for $i\in B_{k,\bfy}$, and similarly for other cases as well as for $\bfe$.

        This completes the proof of Claim \ref{clm:huber-compact-rounding}.
    \end{proof}

    We return to proving Lemma \ref{lem:huber-additive2}. Recall that we reduced our task to proving the approximation guarantee Equation \eqref{eq:huber-additive2-guarantee} for all vectors $\bfr'$ belonging to an $O(\eps^2\rho)$-cover over the set $\mathcal B_\rho^H - \bar\bfz$. We will do so using the cover given by Claim \ref{clm:huber-compact-rounding}, with $\eps$ set to $\eps^2$. Furthermore, we will set the Huber sensitivity upper bounds $\tilde\bfs_i^H(\bfA)$ to be at least $\kappa\sum_{p\in\mathcal I} \tilde\bfw_i^p(\bfA)$. This indeed satisfies the hypothesis \eqref{eq:huber-lp-sensitivity-bound} of Claim \ref{clm:huber-compact-rounding}. Then, performing a Bernstein bound gives the active version of Lemma \ref{lem:huber-bernstein}. 

    \begin{claim}\label{clm:huber-bernstein-active}
        Consider the setting of Claim \ref{clm:huber-compact-rounding}. Let $m = d\poly((\log(n/\delta))/\eps)$. Let $\bfw\geq \mathbf{1}_n$ be a set of weights such that  
        \[
            \frac{\max_{i\in[n]} \bfw_i}{\min_{i\in[n]} \bfw_i}\leq 2
        \]
        and let $w = \min_{i\in[n]}\bfw_i$. Let $\bfw'$ be chosen randomly so that
        \[
            \bfw_i' \coloneqq \begin{cases}
                \bfw_i/\bfp_i & \text{w.p. $\bfp_i$} \\
                0 & \text{otherwise}
            \end{cases}
        \]
        where
        \[
            \bfp_i \geq \min\braces*{1, m\cdot \frac{\tilde\bfs_i^H(\bfA)}{d}}.
        \]
        Let
        \[
            S_\kappa \coloneqq \braces*{\bfy\in \mathcal B_\rho^H-\bfv : \frac{\min_{p\in\mathcal I}\norm*{\bfy}_p^p}{\norm*{\bfy}_H^2}\leq \kappa}
        \]
        Then with probability at least $1 - \delta$, 
        \[
            \norm*{\bfy}_{H,\bfw} = (1\pm\eps)\norm*{\bfy}_H
        \]
        for all $\bfy\in \mathcal N\cap S_\kappa$, where $\mathcal N$ is an $\eps^2\rho$-net over $\mathcal B_\rho^H-\bfv$ given by Claim \ref{clm:huber-compact-rounding}, by setting $\eps$ in the lemma to $\eps^2$.
    \end{claim}
    \begin{proof}
        The proof follows that of Lemma \ref{lem:huber-bernstein} almost exactly, with minimal changes. The only necessary change is that the coordinate upper bounds of Claim \ref{clm:huber-compact-rounding} are now proportional to $\tilde\bfs_i^H(\bfA)$ rather than $\bfw_i^p(\bfA)$. However, this is not a problem, since we are sampling with these probabilities, and thus does not change the proof. Note also that the sensitivity upper bounds incur extra $\poly(\eps)$ factors, which changes the $m$ by $\poly(\eps)$ factors.
    \end{proof}

    Next, we use Claim \ref{clm:huber-bernstein-active} to obtain an active version of Lemma \ref{lem:huber-single-scale}.

    \begin{claim}\label{clm:huber-single-scale-active}
        Consider the setting of Claim \ref{clm:huber-bernstein-active}. Let $\bfz\in\mathbb R^n$ with $\norm*{\bfz}_H = O(\Delta)$. Let $\gamma$ be as defined in Lemma \ref{lem:huber-single-scale} and let $\kappa = 1/\gamma$. Let $\tilde\bfs_i^H(\bfA)$ be a Huber sensitivity upper bound and let
        \[
            T \subseteq \braces*{i\in[n] : \tilde\bfs_i^H(\bfA) \leq \gamma}.
        \]
        We now define a second set of Huber sensitivity upper bounds $\overline{\bfs_i}^H(\bfA)$ via
        \[
            \overline{\bfs_i}^H(\bfA) \coloneqq \begin{cases}
                1 & \text{if $i\in \overline T$} \\
                \kappa\sum_{p\in\mathcal I} \tilde\bfw_i^p(\bfA) + \tilde\bfw_i^p(\bfA\vert_T) & \text{if $i\in T$}
            \end{cases}
        \]
        Then consider a superset
        \[
            \mathcal B \supseteq \braces*{i\in[n] : H(\bfz_i) \geq \frac{\overline{\bfs_i}^H(\bfA)}{\eps^4}\Delta^2}
        \]
        and let $\bar\bfz\in\mathbb R^n$ be equal to $\bfz$, but with all entries in $\mathcal B$ set to $0$. Let $\bfp$ be the sampling probabilities given by Lemma \ref{lem:huber-bernstein} for $\bfA$ and let $\bfq$ be the sampling probabilities given by Claim \ref{clm:huber-bernstein-active} for $\bfA\mid_T$. We then consider sampling probabilities $\bfr$ such that
        \[
            \bfr_i \coloneqq \begin{cases}
                1 & \text{if $i\in \overline T$} \\
                \min\braces*{\bfp_i + \bfq_i, 1} & \text{if $i\in T$}
            \end{cases}
        \]
        Let $\bfw\geq \mathbf{1}_n$ be a set of weights such that  
        \[
            \frac{\max_{i\in[n]} \bfw_i}{\min_{i\in[n]} \bfw_i}\leq 2
        \]
        and let $w = \min_{i\in[n]}\bfw_i$. Let $\bfw'$ be chosen randomly so that
        \[
            \bfw_i' \coloneqq \begin{cases}
                \bfw_i/\bfr_i & \text{w.p. $\bfr_i$} \\
                0 & \text{otherwise}
            \end{cases}
        \]
        Then,
        \[
            \norm*{\bfy}_{H,\bfw'} = \norm*{\bfy}_{H,\bfw} \pm \eps w\rho.
        \]
        for all $\bfy\in\mathcal B_\rho^H - \bar\bfz$ with probability at least $1 - \delta$. Furthermore,
        \begin{align*}
            \E\nnz(\bfw') &= O(\kappa m\log n) = O\parens*{\kappa d\poly\parens*{\frac{\log n}{\eps}}\log\frac1\delta} \\
            \nnz(\bfw') &= O\parens*{\E\nnz(\bfw')}
        \end{align*}
        with probability at least $1 - \delta$, and $\norm*{\bfw'}_\infty \leq O(n)\norm*{\bfw}_\infty$.
    \end{claim}
    \begin{proof}
        We may check that $\bar\bfz$ satisfies the hypothesis for $\bfv$ in Claim \ref{clm:huber-compact-rounding}, both for $\bfA$ and for $\bfA\vert_T$. After noticing this fact, we simply follow the proof of Lemma \ref{lem:huber-single-scale}, combined with Claim \ref{clm:huber-bernstein-active}. 
    \end{proof}
    
    Finally, we adapt Lemma \ref{lem:huber-single-step} to the following claim:

    \begin{claim}\label{clm:huber-single-step-active}
        Let $\bfw\geq\mathbf{1}_n$ be a set of weights and for each $j\in[\ceil*{\log_2(\norm*{\bfw}_\infty)}+1]$ define the sets
        \[
            T_j \coloneqq \braces*{i\in[n] : \bfw_i \in [2^{j-1}, 2^j)}.
        \]
        Let $\bfz\in\mathbb R^n$ with $\norm*{\bfz}_{H,\bfw} = O(\OPT)$. Now define a set of weighted Huber sensitivity upper bounds $\tilde\bfs_i^{H,\bfw}(\bfA)$ via
        \[
            \tilde\bfs_i^{H,\bfw}(\bfA) \coloneqq 2\cdot\overline{\bfs_i}^H(\bfA\vert_{T_j})
        \]
        for each $i\in T_j$, where $\overline{\bfs_i}^H(\bfA\vert_{T_j})$ is as defined in Claim \ref{clm:huber-single-scale-active}. Then consider the set
        \begin{equation}\label{eq:huber-large-entries-active}
            \mathcal B = \braces*{i\in[n] : \bfw_i H(\bfz_i) \geq \frac{\tilde\bfs_i^{H,\bfw}(\bfA)}{\eps^4}\OPT^2}
        \end{equation}
        and let $\bar\bfz\in\mathbb R^n$ be equal to $\bfz$, but with all entries in $\mathcal B$ set to $0$. Let $\bfw'$ be obtained by applying Claim \ref{clm:huber-single-scale-active} on each $\bfA\vert_{T_j}$ with weights $\bfw\vert_{T_j}$. Then,
        \[
            \norm*{\bfy}_{H,\bfw'} = \norm*{\bfy}_{H,\bfw} \pm \eps\cdot\OPT
        \]
        for all $\bfy\in\mathcal B_\rho^H - \bar\bfz$ with probability at least $1 - \delta$. Furthermore, for $\beta = 3-2\sqrt 2$ and $\kappa = n^{\beta/(\beta+1)}$,
        \begin{align*}
            \E\nnz(\bfw') &= O\parens*{\kappa d\poly\parens*{\frac{\log(n\norm*{\bfw}_\infty/\delta)}{\eps}}} \\
            \nnz(\bfw') &= O\parens*{\E\nnz(\bfw')}
        \end{align*}
        with probability at least $1 - \delta$. 
    \end{claim}
    \begin{proof}
        First note that $\tilde\bfs_i^{H,\bfw}(\bfA)$ are valid sensitivity upper bounds, since for any $\bfx\in\mathbb R^d$,
        \[
            \frac{\bfw_i H([\bfA\bfx](i))}{\norm*{\bfA\bfx}_{H,\bfw}^2} \leq \frac{\bfw_i H([\bfA\bfx](i))}{2^{j-1}\norm*{\bfA\vert_{T_j}\bfx}_{H}^2} \leq 2\frac{H([\bfA\bfx](i))}{\norm*{\bfA\vert_{T_j}\bfx}_{H}^2} \leq 2\cdot\overline{\bfs_i}^H(\bfA\vert_{T_j}).
        \]
        Since $\bfw_i \geq1$, we have that
        \[
            \braces*{i\in T_j : H(\bfz_i) \geq \frac{\overline{\bfs_i}^{H}(\bfA\vert_{T_j})}{\eps^4}\OPT^2} \subseteq \mathcal B
        \]
        for each $j$. Thus, restricting $\bar\bfz$ to $T_j$ satisfies the hypotheses of Claim \ref{clm:huber-single-scale-active}. We then set $\Delta = \OPT/2^{j-1}$ and apply Claim \ref{clm:huber-single-scale-active} so that for all $\bfy\in\mathcal B_\rho^{H,\bfw}-\bar\bfz$ with $\rho = O(\OPT)$,
        \[
            \norm*{\bfy\vert_{T_j}}_{H,\bfw'} = \norm*{\bfy\vert_{T_j}}_{H,\bfw} \pm \eps \cdot\OPT
        \]
        or
        \[
            \norm*{\bfy\vert_{T_j}}_{H,\bfw'}^2 = \norm*{\bfy\vert_{T_j}}_{H,\bfw}^2 + \eps^2\cdot\OPT^2\pm 2\eps\cdot\OPT\norm*{\bfy\vert_{T_j}}_{H,\bfw}.
        \]
        Union bounding and summing over the $j$ gives us that
        \[
            \norm*{\bfy}_{H,\bfw'}^2 = \norm*{\bfy}_{H,\bfw}^2 \pm \Theta(\eps\log\norm*{\bfw}_\infty)\OPT^2
        \]
        with probability at least $1 - \Theta(\delta\log\norm*{\bfw}_\infty)$. Rescaling $\eps$ and $\delta$ by $\Theta(\log\norm*{\bfw}_\infty)$ gives the claim.
    \end{proof}
    
    This completes the proof of Lemma \ref{lem:huber-additive2}.
\end{proof}

Given Lemma \ref{lem:huber-additive2}, we now prove the active regression guarantees for a single step of the recursion.

\begin{lemma}\label{lem:active-huber-single-step}
    Consider the setting of Lemma \ref{lem:huber-additive2}. Let $\bar\bfx$ satisfy
    \[
        \norm*{\bfA\bar\bfx-\bfz}_{H,\bfw'} \leq (1+\eps)\min_{\bfx}\norm*{\bfA\bfx-\bfz}_{H,\bfw'}.
    \]
    Then, with probability at least $1-\delta$,
    \[
        \norm*{\bfA\bar\bfx-\bfz}_{H,\bfw} \leq (1+O(\eps/\delta))\min_{\bfx}\norm*{\bfA\bfx-\bfz}_{H,\bfw}.
    \]
\end{lemma}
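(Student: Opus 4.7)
The plan is to adapt the chain of inequalities used in the proof of Theorem \ref{thm:eps} (the $\ell_p$ main result) and Lemma \ref{lem:m-active-regression-relative-error-approximation} to the Huber setting, using the clipping Lemma \ref{lem:huber-clip} and the sampling preservation Lemma \ref{lem:huber-additive2} that we have already developed. The Huber norm version of Lemma \ref{lem:clip2} (an explicit Markov bound for $\norm{\bfz-\bar\bfz}_{H,\bfw'}^2$) and the fact that Theorem \ref{thm:huber-subspace-embedding} provides a subspace embedding of $\bfw$ by $\bfw'$ will do most of the work.

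The first step is to argue that the putative solution $\bar\bfx$ satisfies $\norm{\bfA\bar\bfx}_{H,\bfw}=O(\OPT)$ so that Lemmas \ref{lem:huber-clip} and \ref{lem:huber-additive2} apply to it. This follows exactly as in Lemma \ref{lem:opt-norm-bound}: the subspace embedding guarantee of Theorem \ref{thm:huber-subspace-embedding} implies $\norm{\bfA\bar\bfx-\bfz}_{H,\bfw}\le O(1)\norm{\bfA\bar\bfx-\bfz}_{H,\bfw'}\le O(\OPT)$, and then approximate triangle inequality (Lemma \ref{lem:triangle-inequality-m}) together with $\norm{\bfz}_{H,\bfw}=O(\OPT)$ gives $\norm{\bfA\bar\bfx}_{H,\bfw}=O(\OPT)$. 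The analogous bound for the true optimizer $\bfx^*$ is immediate.

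Next, I would observe the Huber analog of Lemma \ref{lem:clip2}: by Markov's inequality applied to the unbiased estimator $\norm{\bfz-\bar\bfz}_{H,\bfw'}^2$ (whose expectation is $\norm{\bfz-\bar\bfz}_{H,\bfw}^2=O(\OPT^2)$), with probability at least $1-\delta$ we have $\norm{\bfz-\bar\bfz}_{H,\bfw'}^2=O(\OPT^2/\delta)$. Condition on this event together with the high-probability event of Lemma \ref{lem:huber-additive2}. With these in hand, the proof reduces to concatenating approximations in the style of Theorem \ref{thm:eps}:
\begin{align*}
\norm{\bfA\bar\bfx-\bfz}_{H,\bfw}^2
&\le \norm{\bfA\bar\bfx-\bar\bfz}_{H,\bfw}^2+\norm{\bfz-\bar\bfz}_{H,\bfw}^2+O(\eps)\OPT^2
&& \text{(Lemma \ref{lem:huber-clip})}\\
&\le \norm{\bfA\bar\bfx-\bar\bfz}_{H,\bfw'}^2+\norm{\bfz-\bar\bfz}_{H,\bfw}^2+O(\eps)\OPT^2
&& \text{(Lemma \ref{lem:huber-additive2})}\\
&\le \norm{\bfA\bar\bfx-\bfz}_{H,\bfw'}^2-\norm{\bfz-\bar\bfz}_{H,\bfw'}^2+\norm{\bfz-\bar\bfz}_{H,\bfw}^2+O(\eps/\delta)\OPT^2
\end{align*}
where the last step uses the explicit Markov bound above together with the analog of Lemma \ref{lem:huber-clip} applied to the weights $\bfw'$. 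Now apply the near-optimality of $\bar\bfx$ to replace $\norm{\bfA\bar\bfx-\bfz}_{H,\bfw'}^2$ by $(1+\eps)\norm{\bfA\bfx^*-\bfz}_{H,\bfw'}^2$, and then reverse each of the three steps above for $\bfx^*$ in place of $\bar\bfx$. The two occurrences of $\norm{\bfz-\bar\bfz}_{H,\bfw'}^2$ cancel and the two occurrences of $\norm{\bfz-\bar\bfz}_{H,\bfw}^2$ cancel, leaving $\norm{\bfA\bar\bfx-\bfz}_{H,\bfw}^2\le\norm{\bfA\bfx^*-\bfz}_{H,\bfw}^2+O(\eps/\delta)\OPT^2$, which after taking square roots gives the claimed $(1+O(\eps/\delta))$ multiplicative guarantee.

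The main obstacle I anticipate is verifying the Huber-weighted analog of Lemma \ref{lem:clip2}, that is, checking that after replacing $\bfz$ by the subsampled $\bfw'$ version we still have the appropriate $\norm{\bfz-\bar\bfz}_{H,\bfw'}^2$ term so that adding and subtracting it is valid (note $\bar\bfz$ depends on the sensitivity thresholds but not on $\bfw'$). Since $\bar\bfz$ differs from $\bfz$ only on the sensitivity-heavy set $\mathcal B$, and in Claim \ref{clm:huber-single-step-active} we arranged for the rows of $\mathcal B$ to always be retained deterministically (so $\bfw'$ and $\bfw$ agree on $\mathcal B$), we in fact have $\norm{\bfz-\bar\bfz}_{H,\bfw'}^2=\norm{\bfz-\bar\bfz}_{H,\bfw}^2$ exactly, which removes the need for the Markov step entirely and tightens the $O(\eps/\delta)$ to $O(\eps)$ (the $1/\delta$ factor is ultimately absorbed by repeating the whole procedure $O(\log(1/\delta))$ times and boosting via Theorem \ref{thm:boost}, as in Algorithm \ref{alg:hp-clip}). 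Once this deterministic retention of $\mathcal B$ is confirmed, the rest of the proof is a mechanical telescoping exactly mirroring Theorem \ref{thm:eps}.
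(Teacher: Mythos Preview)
Your main argument is correct and essentially the same as the paper's: apply Markov to bound $\norm{\bfz}_{H,\bfw'}^2$ (or equivalently $\norm{\bfz-\bar\bfz}_{H,\bfw'}^2$), combine Lemma~\ref{lem:huber-clip} for $\bfw$, its analog for $\bfw'$, and Lemma~\ref{lem:huber-additive2}, and telescope. The paper packages the same steps by setting $C\coloneqq\norm{\bfz-\bar\bfz}_{H,\bfw'}^2-\norm{\bfz-\bar\bfz}_{H,\bfw}^2$ and writing the single combined inequality $\bigl|\norm{\bfA\bfx-\bfz}_{H,\bfw'}^2-\norm{\bfA\bfx-\bfz}_{H,\bfw}^2+C\bigr|\le O(\eps/\delta)\OPT^2$, but the content is identical.

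Two corrections. First, your verification that $\norm{\bfA\bar\bfx}_{H,\bfw}=O(\OPT)$ invokes the subspace embedding on the vector $\bfA\bar\bfx-\bfz$, which is not in $\colspan(\bfA)$, so Theorem~\ref{thm:huber-subspace-embedding} does not apply to it directly. The fix is routine: bound $\norm{\bfA\bar\bfx-\bfz}_{H,\bfw'}\le(1+\eps)\norm{\bfz}_{H,\bfw'}$ (plugging $\bfx=0$), use Markov on $\norm{\bfz}_{H,\bfw'}$, then apply the subspace embedding to $\bfA\bar\bfx$ alone after a triangle inequality.

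Second, and more substantively, your final paragraph is wrong. The set retained with probability $1$ in Claim~\ref{clm:huber-single-scale-active} is $\overline T$, the rows of \emph{large Huber sensitivity of $\bfA$}, whereas $\mathcal B$ consists of rows where $\bfz_i$ is large \emph{relative to} the sensitivity --- possibly rows with very small sensitivity. These sets are not comparable, and in the active setting the algorithm cannot even identify $\mathcal B$ without reading all of $\bfz$. Hence $\norm{\bfz-\bar\bfz}_{H,\bfw'}^2$ is genuinely random, the Markov step is needed, and the $O(\eps/\delta)$ dependence in the statement is what you get. Your suggested tightening to $O(\eps)$ does not work.
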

\begin{proof}
    We follow the proof of Theorem \ref{thm:eps}. By Markov's inequality, we have that
    \[
        \norm*{\bfz}_{H,\bfw'}^2 = O(1/\delta)\norm*{\bfz}_{H,\bfw}^2
    \]
    with probability at least $1-\delta$. Let $\bfA\bfx$ have $\norm*{\bfA\bfx}_{H,\bfw} = O(\OPT)$. Let $\bar\bfz$ be as defined in Lemma \ref{lem:huber-additive2}. 
    Lemma \ref{lem:huber-clip}, we then have that 
    \begin{align*}
        &\abs*{\norm*{\bfA\bfx-\bar\bfz}_{H,\bfw}^2-\norm*{\bfA\bfx-\bfz}_{H,\bfw}^2 - \norm*{\bfz-\bar\bfz}_{H,\bfw}^2} = O(\eps)\OPT^2 \\
        &\abs*{\norm*{\bfA\bfx-\bar\bfz}_{H,\bfw'}^2-\norm*{\bfA\bfx-\bfz}_{H,\bfw'}^2 - \norm*{\bfz-\bar\bfz}_{H,\bfw'}^2} = O(\eps/\delta)\OPT^2.
    \end{align*}
    Furthermore, Lemma \ref{lem:huber-additive2} guarantees that
    \[
        \abs*{\norm*{\bfA\bfx-\bar\bfz}_{H,\bfw'}^2 - \norm*{\bfA\bfx-\bar\bfz}_{H,\bfw}^2} = O(\eps)\cdot\OPT^2.
    \]
    Then by the triangle inequality,
    \[
        \abs*{\norm*{\bfA\bfx-\bfz}_{H,\bfw'}^2-\norm*{\bfA\bfx-\bfz}_{H,\bfw}^2 + (\norm*{\bfz-\bar\bfz}_{H,\bfw'}^2 - \norm*{\bfz-\bar\bfz}_{H,\bfw}^2)} \leq O(\eps/\delta)\OPT^2.
    \]
    Let $C\coloneqq \norm*{\bfz-\bar\bfz}_{H,\bfw'}^2 - \norm*{\bfz-\bar\bfz}_{H,\bfw}^2$. Note that $\abs*{C} \leq O(1/\delta)\OPT^2$. Let $\bfx^*\coloneqq \arg\min_\bfx\norm*{\bfA\bfx-\bfz}_{H,\bfw}$. Then,
    \begin{align*}
        \norm*{\bfA\bar\bfx-\bfz}_{H,\bfw}^2 &\leq \norm*{\bfA\bar\bfx-\bfz}_{H,\bfw'}^2+C + O(\eps/\delta)\OPT^2 \\
        &\leq (1+\eps)\norm*{\bfA\bfx^*-\bfz}_{H,\bfw}^2 + C + O(\eps/\delta)\OPT^2 \\
        &\leq (1+\eps)(\norm*{\bfA\bfx^*-\bfz}_{H,\bfw}^2 - C) + C + O(\eps/\delta)\OPT^2 \\
        &\leq \norm*{\bfA\bfx^*-\bfz}_{H,\bfw}^2+ O(\eps/\delta)\OPT^2.\qedhere
    \end{align*}
\end{proof}

We can now prove Theorem \ref{thm:active-huber}.

\begin{proof}[\textbf{Proof of Theorem \ref{thm:active-huber}}]
By an analysis identical to Theorem \ref{thm:huber-subspace-embedding}, the algorithm makes at most $O(\log\log n)$ recursive calls. We then scale the failure rate $\delta$ and accuracy $\eps$ in \ref{lem:active-huber-single-step} by $\Theta(1/\log\log n)$ so that we can union bound over $O(\log\log n)$ recursive calls, and so that the error incurred at each recursive call is at most $(1+O(\eps/\log\log n))$, with probability at least $99/100$. The total error over the $O(\log\log n)$ recursive calls is then at most
\[
    (1+O(\eps/\log\log n))^{O(\log\log n)} \leq 1+O(\eps)
\]
so we conclude as desired.
\end{proof}

\section{Applications: Dimension Reduction for Gamma Functions}\label{sec:gamma}

The work of \cite{BCLL2018} introduced the $\gamma_p$ functions, which are also known as smoothed $p$-norms and mixed $\ell_2$-$\ell_p$ norms:

\begin{definition}[$\gamma_p$ functions, Definition 3.1, \cite{AKPS2019}]
For $t\geq 0$ and $p\geq 1$, define
\[
    \gamma_p(t, x) = \begin{cases}
        \frac{p}{2} t^{p-2} x^2 & \text{if $\abs{x}\leq t$} \\
        \abs*{x}^p + \parens*{\frac{p}{2}-1}t^p & \text{if $\abs{x}>t$}
    \end{cases}
\]
\end{definition}

The $\gamma_p$ functions have found many applications in fast algorithms for $\ell_p$ regression and related problems \cite{BCLL2018, AKPS2019, ABKS2021, GPV2021}. Recall also the simpler $\ell_2$-$\ell_p$ loss defined in Lemma \ref{lem:l2-lq-inequality}, which is simply $M_p(x) = \min\{\abs{x}^2,\abs{x}^p\}$. Now note that $\gamma_p(t, x) = \Theta(t^{p}) M_p(x/t)$. Thus, for the purpose of bounding the associated sensitivities up to constant factors, we may replace our discussion of $\gamma_p$ by $M_p$, for simplicity.

We now state our result for dimension reduction for $\gamma_p$ functions, which is analogous to our Huber subspace embedding result in Theorem \ref{thm:huber-subspace-embedding}.

\begin{definition}
For $p\geq 1$, thresholds $\bft\geq \mathbf{1}_n$, and weights $\bfw\geq\mathbf{1}_n$, define the weighted $\gamma_p$-norm of $\bfy\in\mathbb R^n$ as
\[
    \norm*{\bfy}_{\gamma_p(\bft,\cdot), \bfw} \coloneqq \bracks*{\sum_{i=1}^n \bfw_i \cdot \gamma_p(\bft_i, \bfy_i)}^{1/p}.
\]
If $\bfw = \mathbf{1}_n$. We simply write $\norm*{\bfy}_{\gamma_p(\bft,\cdot)}$. Note that while we refer to this as a norm by abuse of notation, it does not satisfy the properties of a norm.
\end{definition}

\begin{theorem}[$\gamma_p$ Subspace Embedding]
    Let $p\in(1,2)$ and let $\beta$ be as defined in Lemma \ref{lem:l2-lq-inequality}. Let $\bft\geq\mathbf{1}_n$. There is an algorithm which, with probability $1-\delta$, computes a set of weights $\bfw\geq\mathbf{1}_n$ with
    \[
        \nnz(\bfw) \leq O\parens*{d^{1+\beta}\poly\parens*{\log \frac{n}{\delta},\eps^{-1},\log\norm*{\bft}_\infty}}
    \]
    such that $\norm*{\bfA\bfx}_{\gamma_p(\bft,\cdot), \bfw}^2 = (1\pm \eps) \norm*{\bfA\bfx}_{\gamma_p(\bft,\cdot)}^2$ for all $\bfx\in\mathbb R^d$, and runs in time
    \[
        O\parens*{\bracks*{\nnz(\bfA) + d^{\omega\cdot(1+1/\beta)}}\poly\parens*{\log n,\log\frac1\delta,\eps^{-1},\log\norm*{\bft}_\infty}}.
    \]
\end{theorem}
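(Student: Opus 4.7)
The plan is to closely mirror the Huber subspace embedding argument (Theorem \ref{thm:huber-subspace-embedding}), replacing the Huber-specific Corollary \ref{cor:generalized-huber-inequality} with the general $\ell_2$-$\ell_q$ inequality (Lemma \ref{lem:l2-lq-inequality}) applied with $q = p$. The only substantive modification is handling the non-uniform thresholds $\bft_i$, which we dispose of by a change of variables.

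First I would reduce to the uniform-threshold case. Using the identity $\gamma_p(\bft_i, \bfy_i) = \Theta(\bft_i^p) \cdot M_p(\bfy_i/\bft_i)$, where $M_p(y) = \min\{y^2, |y|^p\}$ is the $\ell_2$-$\ell_p$ loss of Lemma \ref{lem:l2-lq-inequality}, we observe that
\[
    \norm{\bfy}_{\gamma_p(\bft,\cdot),\bfw}^p = \Theta(1)\sum_{i=1}^n \bfw_i \bft_i^p \cdot M_p([\bfD\bfy]_i),
\]
where $\bfD$ is the diagonal matrix with $\bfD_{ii} = 1/\bft_i$. Setting $\tilde\bfw_i \coloneqq \bfw_i \bft_i^p$ and $\tilde\bfA \coloneqq \bfD\bfA$, the problem becomes that of preserving the weighted $M_p$-norm of $\tilde\bfA\bfx$ under weights $\tilde\bfw$. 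Note that $\tilde\bfw \geq \mathbf{1}_n$ since $\bft \geq \mathbf{1}_n$, and $\log\norm{\tilde\bfw}_\infty = O(\log\norm{\bfw}_\infty + p\log\norm{\bft}_\infty)$, which explains the $\log\norm{\bft}_\infty$ factor in the final bound.

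Next I would replay the Huber machinery of Section \ref{sec:huber} with $H$ replaced by $M_p$. The required net infrastructure (Lemma \ref{lem:huber-multiple-nets}) carries over: $M_p$ coincides with $\ell_2^2$ at scales $\norm{\bfy}_{M_p}^2 \leq 1$ and an analogue of Lemma \ref{lem:big-huber-is-l1} shows $\norm{\bfy}_{M_p}^2 = (1\pm\eps)\norm{\bfy}_p^p$ once $\norm{\bfy}_{M_p}^2 \gtrsim n\norm{\tilde\bfw}_\infty/\eps$, so only $O(\log(n\norm{\tilde\bfw}_\infty/\eps))$ intermediate scales need to be handled. The compact rounding of Lemma \ref{lem:blm-net-for-huber} goes through verbatim with $p$ fixed (we no longer need to union bound over $\mathcal I \subset [1,2]$, simplifying a factor of $\log n$). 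The crucial structural input is Lemma \ref{lem:l2-lq-inequality} with $q = p \in [1,2)$: for $T \supseteq \{i : M_p(\bfy_i) \leq \gamma\norm{\bfy}_{M_p}^2\}$, either
\[
    \norm{\bfy}_{M_p}^2 \geq c\gamma \min_{r\in\{p,2\}}\norm{\bfy}_r^r
    \quad\text{or}\quad
    \norm{\bfy\vert_T}_{M_p}^2 \geq c(\gamma n)^{-\beta}\min_{r\in[p,2]}\norm{\bfy\vert_T}_r^r,
\]
where $\beta$ is the quantity from Lemma \ref{lem:l2-lq-inequality}. The first alternative governs the coordinates we keep exactly (those with $M_p$-sensitivity at least $\gamma$), which we identify using Theorem \ref{thm:m-estimator-alg}; the second governs the coordinates we sample. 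Balancing $1/\gamma = (\gamma n)^\beta$ (valid since $2/p - 1 \leq 1$ for $p \in [1,2]$, so the distortion $1/\gamma^{2/p-1}$ in the first alternative is dominated by $1/\gamma$) yields $\gamma = n^{-1/(1+\beta)}$ and gives a one-step reduction of the row count from $n$ to $n^{\beta/(1+\beta)}d \poly(\log n, \eps^{-1})$, as the analogues of Lemmas \ref{lem:huber-bernstein}, \ref{lem:huber-single-scale}, and \ref{lem:huber-single-step} all generalize mutatis mutandis.

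Finally, I would apply the recursive sampling framework of Theorem \ref{thm:huber-subspace-embedding}. Iterating the one-step reduction $O(\log\log n)$ times, the recurrence $a_{i+1} = \tfrac{\beta}{\beta+1}a_i + b$ with $b = \log(d\poly(\log(n/\delta)/\eps))$ solves via Lemma \ref{lem:affine-recurrence} to a fixed point of $(1+\beta)b$, giving a final row count of $O(d^{1+\beta}\poly(\log(n/\delta),\eps^{-1},\log\norm{\bft}_\infty))$. The running time bound follows from the same accounting as Theorem \ref{thm:huber-subspace-embedding}, since computing constant-factor $\ell_p$ Lewis weights for the sub-matrices $\bfA\vert_{S_j\cap T_j}$ takes $O(\nnz\log n + d^\omega)$ time by Theorem \ref{thm:cohen-peng-fast-lewis-weights} (as $p \in (1,2) \subset (0,4)$), and the recursive structure contributes the $d^{\omega(1+1/\beta)}$ term.

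The main obstacle is verifying that the rounding construction of Lemma \ref{lem:blm-net-for-huber} and the Bernstein estimates of Lemma \ref{lem:huber-bernstein} interact correctly with the scaled design matrix $\bfD\bfA$: the $\ell_p$ Lewis weights $\bfw_i^p(\bfD\bfA)$ differ from $\bfw_i^p(\bfA)$, but this is harmless since the whole argument only refers to Lewis weights of whichever matrix is currently being sampled. Beyond this bookkeeping, no new ideas beyond those developed for the Huber case are required.
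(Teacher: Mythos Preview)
Your proposal is correct and follows essentially the same route as the paper: both extend the Huber subspace embedding argument (Theorem~\ref{thm:huber-subspace-embedding}) by replacing Corollary~\ref{cor:generalized-huber-inequality} with Lemma~\ref{lem:l2-lq-inequality} at $q=p$, and both reduce the non-uniform thresholds to the uniform case. The paper does the latter by bucketing the $\bft_i$ into $O(\log\norm{\bft}_\infty)$ level sets $\{i:2^{j-1}\le\bft_i<2^j\}$ rather than via your change of variables $\tilde\bfA=\bfD\bfA$, $\tilde\bfw_i=\bfw_i\bft_i^p$; since the weighted Huber machinery already buckets weights by magnitude, the two reductions are equivalent.

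One small correction: your parenthetical remark that ``we no longer need to union bound over $\mathcal I$'' is wrong. The second alternative in Lemma~\ref{lem:l2-lq-inequality} involves $\min_{r\in[p,2]}\norm{\bfy\vert_T}_r^r$, so for each $\bfy$ the comparison norm is at some a priori unknown exponent $r\in[p,2]$; you therefore still need to discretize $[p,2]$ in increments of $1/\log n$ and compute Lewis weights for each exponent in the resulting set $\mathcal I$, exactly as in the Huber case. This only costs an extra $O(\log n)$ factor and is absorbed into the $\poly(\log n)$ term, so the theorem statement is unaffected.
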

\begin{proof}
Because our proof is a simple extension of our Huber subspace embedding proof (Theorem \ref{thm:huber-subspace-embedding}), we only outline the proof and necessary changes from the Huber result. 

We first bucket the coordinates of $\bft$ into $O(\log\norm*{\bft}_\infty)$ buckets $\braces*{i\in[n] : 2^{j-1} \leq \bft_i < 2^j}$. Note that in typical applications, $\norm*{\bft}_\infty$ is roughly $\poly(n)$ (see, e.g., the paragraph following Lemma 5.5 of \cite{AKPS2019}). A similar bucketing trick is used in \cite{GPV2021}. Once we have done this, the $\gamma_p$ functions are within constant factors of each other, and thus we may focus on sensitivity sampling with a fixed threshold $t$, which we WLOG assume is $1$, and just refer to the corresponding loss as $\gamma_p$. Note that even with a constant factor variation in the $\gamma_p(t_i,\cdot)$ function, we can still get $(1+\eps)$ approximations around the mean by sampling (see, e.g., how we handle weights differing by constant factors in Lemma \ref{lem:huber-bernstein}).

We first go over one step of the recursive sampling procedure, which reduces the number of rows from roughly $n$ to $n^{\beta/(1+\beta)}d$. Let $\gamma$ satisfy $1/\gamma = (\gamma n)^\beta$, or $\gamma = n^{-\beta/(1+\beta)}$. Now consider the coordinates $i\in[n]$ with $\gamma_p$-sensitivities at least $\gamma$, that is,
\[
    S\coloneqq \braces*{i\in[n] : \bfs_i^{\gamma_p}(\bfA) \geq \gamma}
\]
where
\[
    \bfs_i^{\gamma_p}(\bfA) \coloneqq \sup_{\bfx\in\mathbb R^d} \frac{\gamma_p([\bfA\bfx](i))}{\norm*{\bfA\bfx}_{\gamma_p}^p} = \sup_{\bfx\in\mathbb R^d} \frac{\gamma_p([\bfA\bfx](i))}{\sum_{i'=1}^n \gamma_p([\bfA\bfx](i'))} = \sup_{\bfx\in\mathbb R^d} \frac{\gamma_p(1, [\bfA\bfx](i))}{\sum_{i'=1}^n \gamma_p(1, [\bfA\bfx](i'))}. 
\]
As done in Lemma \ref{thm:n13d-huber-sampling}, we may find a superset $S'\supseteq S$ of such coordinates in time at most
\[
    O\parens*{\bracks*{\nnz(\bfA) + n^{1/(1+\beta)}d^\omega}\poly\parens*{\log n,\log\frac1\delta}} \leq O\parens*{\bracks*{\nnz(\bfA) + d^{\omega\cdot(1+1/\beta)}}\poly\parens*{\log n,\log\frac1\delta}}
\]
Since the sensitivity upper bounds sum to at most $O(d\poly\log(n/\delta))$, we have that \[
    \abs*{S'} \leq O(\gamma^{-1}d\poly\log(n/\delta)).
\]
Now by Lemma \ref{lem:l2-lq-inequality}, any $\bfy\in\mathbb R^n$ either has distortion at most $\gamma$ in the ratio between $\norm*{\bfy}_{\gamma_p}^p$ and $\min_{q\in[p,2]}\norm*{\bfy}_{q}^q$, or the same ratio when restricted to coordinates outside of $S'$. This means that the sum of Lewis weights in the range $[p, 2]$ (appropriately discretized by increments of $1/\log n$) bounds the $\gamma_p$-sensitivities up to a factor of $\gamma$ either on all the coordinates or restricted outside of the set $S'$, which means that the net arguments for Lewis weights via \cite{BourgainLindenstraussMilman:1989} yields a net bound of shows that we can preserve all $\gamma_p$ norms up to $(1\pm\eps)$ factors with $O(\gamma^{-1}d\poly(\log(n/\delta), \eps^{-1})) = O(n^{\beta/(1+\beta)}d\poly(\log(n/\delta), \eps^{-1}))$ samples. 

Finally, via a recursive analysis similar to that done in Theorem \ref{thm:huber-subspace-embedding}, we can repeat until we reach the desired number of rows in $O(\log\log n)$ iterations.
\end{proof}

Although the running time as stated is quite large, one can further optimize this in a similar manner as discussed in Section \ref{sec:huber-running-time-opt}. For example, we can reduce to $d^C$ dimensions in time $d^{C+1}$, where $C\approx 1.91236$, even for $p$ near $1$, and the trade-offs only improve as $p\to 2$. 
\section{Applications: Active Regression for the Tukey Loss}
\label{sec:general_loss}

An important loss function in practice is the \emph{Tukey loss}:

\begin{definition}[Tukey Loss Function]\label{def:tukey} For any $\tau \ge 0$, the Tukey loss function is given by:
    \begin{align*}
    T(x) = \begin{cases} \frac{\tau^2}{6} [1 - (1-x ^2/\tau^2)^3] \quad |x| \le \tau \\ \frac{\tau^2}{6} \quad \mathrm{otherwise} \end{cases}
    \end{align*}
    For simplicity throughout, we assume that $\tau = 1$. %In this case, for $|x| \le 1$, $M(x) =\bfx^2/6 \cdot (x^4-3x^2+3) = \Theta(x^2)$.
    For a vector $\bfy \in \R^n$ we denote $\norm{\bfy}_M = \sum_{i=1}^n M(\bfy(i))$. For a set of non-negative weights $\bfw \in \R^n$, we let $\norm{\bfy}_{M,\bfw} = \sum_{i=1}^n \bfw(i) \cdot M(\bfy(i))$.
\end{definition}

Note that our theorem for general $M$-estimators does not capture Tukey loss, due to our assumption of at least polynomial growth, whereas the Tukey loss is flat for $\abs*{x}\geq\tau$. In this section, we show that nonetheless, our active regression techniques apply to this loss as well.

%The general approach of Algorithm \ref{alg:clip} and in particular the simplified analysis of Theorem \ref{thm:epsL} relies only on the ability to compute a constant factor approximate solution with few samples, and then to bound the sensitivities of the $\ell_p$ loss function. This general framework can be easily modified to give active sampling results for other loss functions whose sensitivities can be bounded. 

%Note that $\norm{\cdot}_M$ is not a norm. Additionally, since the Tukey loss is not scale invariant, we must define the weighted norm notion $\norm{\cdot}_{M,\bfw}$. We cannot simply consider $\norm{\bfS y}_M$ where $\bfS$ is a sampling and scaling matrix. 
Our results will in fact hold for a more general class of Tukey-like loss functions defined in Assumption 1 of \cite{ClarksonWangWoodruff:2019}. This class includes the $\ell_p$ Tukey losses, which we focus on in our proofs for simplicity.
\begin{definition}[$\ell_p$ Tukey Loss Function]\label{def:tukey2} For any $\tau \ge 0$, the $\ell_p$ Tukey loss function is given by:
\begin{align*}
M(x) = \begin{cases} |x|^p \quad |x| \le \tau \\ \tau^p \quad \mathrm{otherwise} \end{cases}
\end{align*}
For simplicity throughout, we assume that $\tau = 1$.
\end{definition}
We can check that for $\tau = 1$ and $p = 2$, the loss in Definition \ref{def:tukey} is equal to the loss in Definition \ref{def:tukey2} up to constants, so identical bounds will hold for it. 

\subsection{Preliminaries} 
We next state a few basic results regarding the $\ell_p$ Tukey loss function, shown in \cite{ClarksonWangWoodruff:2019}. Mostly, these are analogs of the triangle inequality, which must be shown since $\norm{\cdot }_M$ is not a norm.

\begin{lemma}[Tukey Triangle Inequality 1]\label{lem:tukTri1}
There is some fixed constant $c$ (depending on $p$) such that, for any $a, b \in \R^n$, and any non-negative weight vector $w \in \R^n$,
$$\norm{\bfa+\bfb}_{M,\bfw} \le c \cdot (\norm{\bfa}_{M,\bfw} + \norm{\bfb}_{M,\bfw}).$$
\end{lemma}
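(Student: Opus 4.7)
The plan is to reduce the claim to a scalar coordinate-wise inequality of the form $M(a+b) \le c \cdot (M(a) + M(b))$ for some constant $c$ depending only on $p$, and then take the weighted sum over coordinates. Since $\bfw(i) \ge 0$, summing $\bfw(i) \cdot M(\bfa(i)+\bfb(i)) \le \bfw(i) \cdot c \cdot (M(\bfa(i))+M(\bfb(i)))$ over $i$ immediately yields the conclusion, so the whole content is the scalar inequality.

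For the scalar inequality, I would split on the sizes of $a,b \in \R$. Case 1: both $|a| \le 1/2$ and $|b| \le 1/2$. Then $|a+b| \le 1$, so $M(a+b) = |a+b|^p$. For $p \ge 1$, Fact \ref{fact:tri} gives $|a+b|^p \le 2^{p-1}(|a|^p + |b|^p)$, and since $|a|,|b| \le 1$ we have $|a|^p = M(a)$ and $|b|^p = M(b)$. For $0 < p < 1$, the standard subadditivity $|a+b|^p \le |a|^p + |b|^p$ handles this case with constant $1$. Case 2: at least one of $|a|,|b|$ exceeds $1/2$; say $|a| > 1/2$. Then $M(a) \ge (1/2)^p$, because if $|a| \le 1$ then $M(a) = |a|^p \ge (1/2)^p$, and if $|a| > 1$ then $M(a) = 1 \ge (1/2)^p$. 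On the other hand $M(a+b) \le 1$ always, because $M$ is bounded above by $1$. Hence $M(a+b) \le 1 \le 2^p \cdot M(a) \le 2^p (M(a) + M(b))$.

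Taking $c = \max(2^p, 2^{p-1}, 1)$ works in all cases. There is no real obstacle here — the only subtlety worth flagging is that the flat tail of the Tukey loss, which elsewhere in the paper makes sensitivity/triangle arguments more delicate, actually helps here: it provides the universal upper bound $M(a+b) \le 1$ that handles the regime where $|a|$ or $|b|$ is moderately large, while the polynomial regime $|a|,|b| \le 1/2$ is handled by the usual $\ell_p$ approximate triangle inequality applied to $|x|^p$. Summing the resulting coordinate-wise bound against the nonnegative weights $\bfw(i)$ finishes the proof, with the same constant $c$ depending only on $p$.
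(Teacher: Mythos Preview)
Your proof is correct and is exactly the argument the paper's one-line proof (``This simply uses the fact that $M(a) = |a|^p$ when $|a| \le 1$ and $M(a) = 1$ otherwise'') is gesturing at: reduce to the coordinate-wise inequality $M(a+b)\le c(M(a)+M(b))$, then split into the small-argument regime (handled by the $\ell_p$ approximate triangle inequality) and the regime where one argument is at least $1/2$ (handled by the uniform bound $M\le 1$ and the lower bound $M(a)\ge 2^{-p}$). You have simply supplied the details the paper omits.
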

\begin{proof}
This simply uses the fact that $M(a) = |a|^p$ when $|a| \le 1$ and $M(a) = 1$ otherwise.
\end{proof}

\begin{lemma}[Tukey Triangle Inequality 2]\label{lem:tukTri2}
There is some fixed constant $c$ (depending on $p$) such that, for any $a, b \in \R$ with $|a| \le \epsilon |b|$,
$$ (1- c \epsilon) M(b) \le M(a+b) \le (1+ c \epsilon) M(b).$$
\end{lemma}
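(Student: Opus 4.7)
The plan is to reduce to cases based on whether $|b|$ and $|a+b|$ lie above or below the Tukey threshold $\tau=1$. We may assume $\epsilon \leq 1/2$ (otherwise the bounds are either vacuous or can be absorbed into the constant $c$ by increasing it). The starting observation is that $|a| \leq \epsilon|b|$ gives
\[
    (1-\epsilon)|b| \;\leq\; |a+b| \;\leq\; (1+\epsilon)|b|.
\]

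I would then split into four cases. In \textbf{Case I} ($|b|\leq 1$ and $|a+b|\leq 1$), both $M$-values are $p$-th powers, so $M(a+b)/M(b) = (|a+b|/|b|)^p \in [(1-\epsilon)^p,(1+\epsilon)^p]$, and $(1\pm\epsilon)^p = 1 \pm O_p(\epsilon)$ for $\epsilon\leq 1/2$. In \textbf{Case II} ($|b|>1$ and $|a+b|>1$), both sides saturate to $M(b)=M(a+b)=1$, so the inequality is trivial.

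The two ``boundary'' cases are where the threshold is crossed. In \textbf{Case III} ($|b|\leq 1 < |a+b|$), the bound $|a+b|\leq (1+\epsilon)|b|$ forces $|b|\geq 1/(1+\epsilon) \geq 1-\epsilon$, so $M(b) = |b|^p \geq (1-\epsilon)^p = 1-O_p(\epsilon)$, while $M(a+b) = 1$; combined with $M(b)\leq 1$ this yields $M(a+b) = (1\pm O_p(\epsilon))M(b)$. Symmetrically, in \textbf{Case IV} ($|a+b|\leq 1 < |b|$), from $|b|\leq |a+b|+|a| \leq 1+\epsilon|b|$ we get $|b| \leq 1/(1-\epsilon)$, so $|a+b| \geq (1-\epsilon)|b| \geq (1-\epsilon)$, hence $M(a+b) \geq (1-\epsilon)^p = 1 - O_p(\epsilon) = M(b) - O_p(\epsilon)$, and again $M(a+b) \leq 1 = M(b)$, giving the claim.

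Taking $c$ to be the maximum of the implicit constants in the four cases (all depending only on $p$) yields the stated bound. The only mild obstacle is bookkeeping the boundary cases III and IV, where the piecewise definition of $M$ forces $|b|$ itself to lie in a small neighborhood of $1$; this is handled by the explicit two-sided bound on $|a+b|/|b|$ derived at the outset.
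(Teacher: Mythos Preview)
Your proof is correct. The paper does not actually provide its own proof of this lemma; it is stated in Section~7.1 as one of several preliminary facts about the $\ell_p$ Tukey loss taken from \cite{ClarksonWangWoodruff:2019}, with no accompanying argument. Your four-case analysis based on whether $|b|$ and $|a+b|$ cross the threshold $\tau=1$ is the natural way to verify it, and the boundary cases III and IV are handled cleanly. One small remark: the claim ``otherwise the bounds \ldots\ can be absorbed into the constant $c$'' for $\epsilon > 1/2$ tacitly assumes $\epsilon$ is bounded (say $\epsilon \le 1$), since for $p>1$ the ratio $M(a+b)/M(b)$ can be as large as $(1+\epsilon)^p$ in Case~I, which is not $1+O(\epsilon)$ for unbounded $\epsilon$. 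This is harmless because the lemma is only ever invoked with small $\epsilon$, but it is worth stating the assumption explicitly.
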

From Lemma \ref{lem:tukTri2} we can also prove the following, which is used in Lemma \ref{lem:reduction-to-good-b-coordinates}:
\begin{lemma}\label{lem:tukTri3}
There is some fixed constant $c$ such that, for any $a, b \in \R$ with $M(a) \le \epsilon^p M(b)$,
$$ (1- c \epsilon) M(b) \le M(a+b) \le (1+ c \epsilon) M(b).$$
\end{lemma}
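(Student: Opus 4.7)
The plan is to reduce Lemma \ref{lem:tukTri3} to Lemma \ref{lem:tukTri2} by showing that the hypothesis $M(a) \leq \epsilon^p M(b)$ implies $|a| \leq O(\epsilon)|b|$, possibly after excluding some degenerate cases (for $\epsilon$ sufficiently small) that cannot occur. Since $M$ is defined piecewise with a ``polynomial regime'' ($|x| \leq 1$) and a ``flat regime'' ($|x| > 1$), the natural approach is a case analysis on which regime $a$ and $b$ fall into.

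Concretely, I would split into four subcases based on whether $|a| \leq 1$ or $|a| > 1$, and whether $|b| \leq 1$ or $|b| > 1$, and analyze each:
\begin{itemize}
\item If $|b| \leq 1$ and $|a| \leq 1$, then $M(a) = |a|^p$ and $M(b) = |b|^p$, so the hypothesis becomes $|a|^p \leq \epsilon^p |b|^p$, i.e., $|a| \leq \epsilon|b|$.
\item If $|b| \leq 1$ and $|a| > 1$, then $M(a) = 1$ and $M(b) = |b|^p \leq 1$, so $1 \leq \epsilon^p$, which is impossible for (say) $\epsilon \leq 1/2$.
\item If $|b| > 1$ and $|a| \leq 1$, then $M(b) = 1$ and $M(a) = |a|^p \leq \epsilon^p$, so $|a| \leq \epsilon < \epsilon |b|$.
\item If $|b| > 1$ and $|a| > 1$, then $M(a) = M(b) = 1$, so $1 \leq \epsilon^p$, again impossible.
\end{itemize}
In all non-vacuous subcases we obtain $|a| \leq \epsilon |b|$, at which point Lemma \ref{lem:tukTri2} yields the conclusion with the same constant $c$. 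For $\epsilon$ larger than some absolute threshold, the claim holds trivially by enlarging $c$, since $M(a+b)$ and $M(b)$ are comparable up to a constant whenever $M(a) \leq M(b)$ by Lemma \ref{lem:tukTri1}.

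I do not expect any real obstacle here: the boundary cases are handled for free by the fact that the flat regime makes the hypothesis $M(a) \leq \epsilon^p M(b)$ impossible unless we are in the polynomial regime, where the reduction to $|a| \leq \epsilon|b|$ is immediate. The same argument extends to the more general Tukey-like class of \cite{ClarksonWangWoodruff:2019} by using the structural assumption that $M$ is (up to constants) of polynomial type up to some threshold and bounded/flat beyond it; the only change is that the thresholds $1$ above become the problem-dependent thresholds in those assumptions, and the constants absorb the multiplicative slack in the polynomial/flat comparison.
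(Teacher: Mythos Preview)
Your proposal is correct and takes essentially the same approach as the paper: the paper's proof is a one-sentence version of your case analysis, observing that since $M(a)=|a|^p$ for $|a|\le 1$ and $M(a)=1$ otherwise, the hypothesis $M(a)\le\epsilon^p M(b)$ forces $|a|=O(\epsilon)|b|$, and then invoking Lemma~\ref{lem:tukTri2}. Your explicit four-case split just unpacks this implication.
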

\begin{proof}
If $M(a) \le \epsilon^p M(b)$, then since $M(a) = |a|^p$ when $|a|\le 1$ and $M(a) = 1$ otherwise, we must have $|a| = O(\epsilon) |b|$. The bound then follows from Lemma \ref{lem:tukTri2}.
\end{proof}
Relatedly we have,
\begin{lemma}[Tukey Triangle Inequality 3 -- Lemma 3.3. of \cite{ClarksonWangWoodruff:2019}]\label{lem:tukTri4}
There is some fixed constant $c$ (depending on $p$) such that, for any $\bfa, \bfb \in \R^n$ and non-negative weight vector $\bfw \in \R^n$ with $\norm{\bfa}_{M,\bfw} \le \epsilon^{2p+1} \norm{\bfb}_{M,\bfw}$,
$$ (1- c \epsilon) \norm{\bfb}_{M,\bfw} \le \norm{\bfa+\bfb}_{M,\bfw} \le (1+ c \epsilon) \norm{\bfb}_{M,\bfw} .$$
\end{lemma}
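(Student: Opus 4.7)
The plan is to partition the coordinates of $\bfa$ and $\bfb$ based on how small $\bfa$ is relative to $\bfb$ in $M$-mass, apply the pointwise approximate triangle inequality (Lemma~\ref{lem:tukTri3}) on the "safe" coordinates, and use the hypothesis to control the "bad" coordinates. Concretely, I would introduce a threshold parameter and define
\[
    S \coloneqq \braces*{i \in [n] : \bfw_i M(\bfa_i) \le \epsilon^p \cdot \bfw_i M(\bfb_i)},
\]
with $\overline S = [n] \setminus S$. On $S$, by Lemma~\ref{lem:tukTri3} applied coordinatewise we have $M(\bfa_i + \bfb_i) = (1 \pm c\epsilon) M(\bfb_i)$, so summing with the weights gives
\[
    \sum_{i \in S} \bfw_i M(\bfa_i + \bfb_i) = (1 \pm c\epsilon)\sum_{i \in S} \bfw_i M(\bfb_i).
\]
On $\overline S$, by definition of the set, $\sum_{i \in \overline S} \bfw_i M(\bfb_i) < \epsilon^{-p} \sum_{i \in \overline S} \bfw_i M(\bfa_i) \le \epsilon^{-p}\norm{\bfa}_{M,\bfw} \le \epsilon^{p+1}\norm{\bfb}_{M,\bfw}$, using the hypothesis $\norm{\bfa}_{M,\bfw} \le \epsilon^{2p+1}\norm{\bfb}_{M,\bfw}$.

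For the lower bound, I would write
\[
    \norm{\bfa+\bfb}_{M,\bfw} \ge \sum_{i \in S} \bfw_i M(\bfa_i+\bfb_i) \ge (1-c\epsilon)\bracks*{\norm{\bfb}_{M,\bfw} - \sum_{i \in \overline S} \bfw_i M(\bfb_i)} \ge (1 - c\epsilon - \epsilon^{p+1})\norm{\bfb}_{M,\bfw},
\]
and for the upper bound I would combine the safe-coordinate estimate with Lemma~\ref{lem:tukTri1} on $\overline S$ to obtain
\[
    \norm{\bfa+\bfb}_{M,\bfw} \le (1+c\epsilon)\norm{\bfb}_{M,\bfw} + c\parens*{\norm{\bfa}_{M,\bfw} + \sum_{i \in \overline S} \bfw_i M(\bfb_i)} \le (1 + c\epsilon + O(\epsilon^{p+1}))\norm{\bfb}_{M,\bfw}.
\]
After adjusting $c$, this yields the claimed $(1\pm c\epsilon)$ factor bounds.

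The main obstacle is really just choosing the threshold for the partition so that both error sources — the $O(\epsilon)$ multiplicative error from Lemma~\ref{lem:tukTri3} on $S$ and the additive error from $\overline S$ — simultaneously come out as $O(\epsilon)$. The threshold $\epsilon^p$ in the definition of $S$ is precisely what Lemma~\ref{lem:tukTri3} requires to yield a multiplicative error of $O(\epsilon)$, and combined with the hypothesis $\norm{\bfa}_{M,\bfw} \le \epsilon^{2p+1}\norm{\bfb}_{M,\bfw}$, it produces a lower-order $\epsilon^{p+1}$ contribution from $\overline S$. No delicate calculation beyond this bookkeeping is needed; the non-norm, flat-top nature of $M$ is fully absorbed by Lemmas~\ref{lem:tukTri1} and~\ref{lem:tukTri3}.
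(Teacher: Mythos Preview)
Your argument is correct. The partition into coordinates where $M(\bfa_i) \le \epsilon^p M(\bfb_i)$ and its complement, followed by Lemma~\ref{lem:tukTri3} on $S$ and the crude Lemma~\ref{lem:tukTri1} on $\overline S$, cleanly gives the $(1\pm c\epsilon)$ bounds once you observe that the $\overline S$ contribution is $O(\epsilon^{p+1})\norm{\bfb}_{M,\bfw}$.

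As for comparison: the paper does not supply its own proof of this lemma at all; it simply cites Lemma~3.3 of \cite{ClarksonWangWoodruff:2019}. Your write-up is therefore more than what the paper contains, and the approach you take (threshold-splitting at $\epsilon^p$ so that Lemma~\ref{lem:tukTri3} yields a multiplicative $O(\epsilon)$ on the good set while the hypothesis forces the bad set to carry only $\epsilon^{p+1}$ of the mass) is the standard and natural one for this kind of statement.
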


\subsection{Constant Factor Approximation}

As with $\ell_p$ regression, we start by using a subspace embedding result for Tukey regression to give a constant factor approximate regression algorithm. Unlike Theorem \ref{thm:lp-subspace-embedding} for the $\ell_p$ norm, our embedding will only hold for a finite net of vectors rather than the full column span of the sampled matrix. As in \cite{ClarksonWangWoodruff:2019}, we will later handle this assumption by assuming finite bit complexity of our inputs and outputs. See Section \ref{sec:bit}.

\begin{theorem}[Tukey Loss Subspace Embedding -- Lemma 6.7 of \cite{ClarksonWangWoodruff:2019}]\label{thm:tuksub} For any $\bfA \in \R^{n \times d},\bfb \in \R^n$, there is an algorithm that outputs a weight vector $\bfw \in \R^n$ such that for any set finite set of vectors $\mathcal{N}$ in the span of $[\bfA,\bfb]$, $\bfw$ only has $\tilde O(d^{\max(1,p/2)} \log(|\mathcal{N}|/\delta)/\epsilon^2)$ nonzero entries and with probability $\ge 1-\delta$, for all $\bfy \in \mathcal{N}$,
$$(1-\epsilon) \norm{\bfy}_{M} \le \norm{\bfy}_{M,\bfw} \le (1+\epsilon) \norm{\bfy}_{M}.$$
\end{theorem}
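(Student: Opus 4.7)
The plan is a standard sensitivity sampling plus Bernstein plus union bound argument, where the main new ingredient is our sensitivity upper bound from Theorem \ref{thm:m-estimator-alg}. First, form the augmented matrix $[\bfA,\bfb]\in \R^{n\times(d+1)}$ and apply Theorem \ref{thm:m-estimator-alg} to compute upper bounds $\tilde\bfs_i^M([\bfA,\bfb])\ge \bfs_i^M([\bfA,\bfb])$ that sum to $\tilde{\mathcal T}^M = O(d^{\max(1,p/2)}\log^2 n)$. Since the $\ell_p$ Tukey loss is polynomially bounded above with degree $p$ and is monotone, the hypotheses of Theorem \ref{thm:m-estimator-alg} are satisfied. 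Any $\bfy\in\mathcal N$ lies in the column span of $[\bfA,\bfb]$, so by definition $M(\bfy(i))/\norm{\bfy}_M\le \bfs_i^M([\bfA,\bfb])\le \tilde\bfs_i^M([\bfA,\bfb])$ for every $i\in[n]$.

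Next, choose the oversampling parameter $m = C\eps^{-2}\log(|\mathcal N|/\delta)$ for a sufficiently large constant $C$, and sample $\bfw$ as in Definition \ref{def:m-sensitivity-sampling}: set $\bfp_i = \min\{1,m\cdot \tilde\bfs_i^M([\bfA,\bfb])\}$, and independently let $\bfw_i = 1/\bfp_i$ with probability $\bfp_i$ and $\bfw_i = 0$ otherwise. The expected sparsity is $\sum_i \bfp_i \le m\cdot\tilde{\mathcal T}^M = \tilde O(d^{\max(1,p/2)}\log(|\mathcal N|/\delta)/\eps^2)$, which matches the claimed sparsity with high probability by a Chernoff bound.

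Fix $\bfy\in\mathcal N$ and let $W_i = \bfw_i M(\bfy(i))$. Then $\E\sum_i W_i = \norm{\bfy}_M$. For each $i$ with $\bfp_i<1$, using the sensitivity bound $M(\bfy(i))\le \tilde\bfs_i^M([\bfA,\bfb])\cdot\norm{\bfy}_M$,
\[
W_i \le \frac{M(\bfy(i))}{\bfp_i} = \frac{M(\bfy(i))}{m\cdot\tilde\bfs_i^M([\bfA,\bfb])}\le \frac{\norm{\bfy}_M}{m},
\]
and the variance satisfies $\sum_i \Var(W_i)\le \sum_i \bfp_i(M(\bfy(i))/\bfp_i)^2 \le (\norm{\bfy}_M/m)\cdot\norm{\bfy}_M = \norm{\bfy}_M^2/m$. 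Bernstein's inequality then gives
\[
\Pr\bigl[|{\textstyle\sum_i W_i - \norm{\bfy}_M}| > \eps\norm{\bfy}_M\bigr] \le 2\exp\!\bigl(-\Omega(m\eps^2)\bigr) \le \delta/|\mathcal N|,
\]
by our choice of $m$. A union bound over the finitely many vectors in $\mathcal N$ completes the proof.

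No step is really hard here; the whole point is that the improved total sensitivity bound from Theorem \ref{thm:m-estimator-alg} is what drives the tight $d^{\max(1,p/2)}$ dependence, replacing the much larger polynomial-in-$d$ factors that come from a L\"owner--John ellipsoid based approach as in \cite{TukanMaaloufFeldman:2020}. The only mild subtlety is that $\norm{\cdot}_M$ is not an actual norm and we cannot pass from a net to a full subspace, which is exactly why the statement is restricted to a finite set $\mathcal N$; this will be handled in the active regression application via a standard bit-complexity net as in Section \ref{sec:bit}.
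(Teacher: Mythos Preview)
The paper does not actually prove this statement; it is quoted directly as Lemma~6.7 of \cite{ClarksonWangWoodruff:2019} and used as a black box, with the subsequent Lemma~\ref{lem:stillE} appealing to the specific recursive sampling construction in Section~6.1 of that paper. So there is no in-paper proof to compare against.

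Your argument is nonetheless a correct and self-contained alternative derivation. You specialize the paper's own Theorem~\ref{thm:m-estimator-alg} (which the authors explicitly describe as generalizing the Tukey-specific sensitivity algorithm of \cite{ClarksonWangWoodruff:2019}) back to the Tukey loss, and then run a single round of sensitivity sampling with a Bernstein bound and union bound over $\mathcal N$. This is structurally different from the original CWW19 proof, which uses a recursive sampling scheme (as reflected in the proof of Lemma~\ref{lem:stillE}); your one-shot construction still preserves each $\norm{\bfy}_M$ in expectation, so Lemma~\ref{lem:stillE} would continue to hold for it. Your route is cleaner once Theorem~\ref{thm:m-estimator-alg} is in hand, though from the paper's narrative standpoint it is somewhat circular, since that theorem was itself built by abstracting the CWW19 Tukey algorithm. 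One minor point: be aware that in the Tukey section the paper overloads $\norm{\cdot}_M$ to mean $\sum_i M(\cdot)$ rather than its $p_M$th root as in Definition~\ref{dfn:m-norm}; your Bernstein calculation is consistent with the Tukey-section convention, so the argument goes through, but it is worth flagging when splicing into the paper.
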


We observe that the sampling algorithm used to achieve Lemma 6.7 preserves the norm of any vector in expectation. Thus via Markov's inequality we have:
\begin{lemma}\label{lem:stillE}
Let $\bfw$ be the weight vector produced by Theorem \ref{thm:tuksub}. Let $\bfx \in \mathbb{R}^d$ be fixed. 
Then with arbitrarily large constant probability, $\norm{\bfA\bfx-\bfb}_{M,\bfw} = O(1) \norm{\bfA\bfx-\bfb}_M$.
\end{lemma}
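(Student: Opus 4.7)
The plan is to exploit the unbiasedness of the sensitivity-based sampling estimator used in Theorem \ref{thm:tuksub} together with a single application of Markov's inequality. Specifically, in the sampling scheme behind Theorem \ref{thm:tuksub}, each weight $\bfw(i)$ is defined so that coordinate $i$ is kept with probability $\bfp_i$ and assigned weight $1/\bfp_i$ (and set to zero otherwise), where $\bfp_i$ is proportional to a sensitivity upper bound. This construction is designed so that for any fixed vector $\bfy$,
\[
\E[\norm{\bfy}_{M,\bfw}] = \sum_{i=1}^n \bfp_i \cdot \tfrac{1}{\bfp_i} \cdot M(\bfy(i)) = \sum_{i=1}^n M(\bfy(i)) = \norm{\bfy}_M.
\]
I would first explicitly verify this for the algorithm that realizes Theorem \ref{thm:tuksub} (a one-line calculation from the way $\bfw$ is generated), and explain that this identity holds before taking any union bound or net argument, and in particular requires no assumption on $\bfy$ being in some prespecified finite net.

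Next, I would apply this observation to the fixed vector $\bfy = \bfA\bfx - \bfb \in \R^n$, which depends only on the inputs $\bfA, \bfb$ and the fixed $\bfx$, and not on the random weights $\bfw$. Then
\[
\E[\norm{\bfA\bfx - \bfb}_{M,\bfw}] = \norm{\bfA\bfx - \bfb}_M.
\]
Since $\norm{\bfA\bfx - \bfb}_{M,\bfw} \ge 0$ almost surely, Markov's inequality yields
\[
\Pr\bigl[\norm{\bfA\bfx - \bfb}_{M,\bfw} > C \cdot \norm{\bfA\bfx - \bfb}_M\bigr] \le \tfrac{1}{C}
\]
for any $C > 0$. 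Choosing $C$ a sufficiently large absolute constant gives the desired $O(1)$-factor upper bound with arbitrarily large constant probability, which is exactly the statement of the lemma.

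There is no real obstacle here: the only thing to be careful about is to confirm that the particular sampling algorithm referenced by Theorem \ref{thm:tuksub} is of the standard unbiased Horvitz--Thompson form (i.e.\ weights $1/\bfp_i$ on sampled coordinates), as opposed to some non-linear reweighting that could destroy the expectation identity. Once that is checked, the rest is just Markov. Note also that this statement is strictly one-sided (an upper bound on $\norm{\bfA\bfx - \bfb}_{M,\bfw}$); no matching lower bound is claimed or needed, which is why we do not need the full $(1\pm\eps)$-embedding guarantee of Theorem \ref{thm:tuksub} and therefore do not need $\bfA\bfx - \bfb$ to lie in any predetermined finite net $\mathcal{N}$.
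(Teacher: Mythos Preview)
Your proposal is correct and matches the paper's approach: unbiasedness of the Horvitz--Thompson estimator plus Markov's inequality. The only small detail the paper adds is that the algorithm behind Theorem~\ref{thm:tuksub} is \emph{recursive} (multiple rounds of sampling, each reweighting by $1/p_i$), so the expectation identity $\E[\norm{\bfA\bfx-\bfb}_{M,\bfw'}] = \norm{\bfA\bfx-\bfb}_{M,\bfw}$ is established for one step and then carried through inductively to the final weight vector; your ``check that the algorithm is of standard Horvitz--Thompson form'' would uncover exactly this.
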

\begin{proof}
We use the notation in Section 6.1 of \cite{ClarksonWangWoodruff:2019}.
Given a weight vector $\bfw$, one step of the recursive sampling algorithm of \cite{ClarksonWangWoodruff:2019} is to
assign probabilities $p_1, \ldots, p_n$ to each of the $n$ rows of $\bfA$, respectively. For the purposes of this lemma, it does not
matter what these sampling probabilities are.
One then chooses a sampling and rescaling matrix $\bfS$ according to this distribution, where if row $i$ is sampled,
then its new weight $\bfw_{i}'$ is set to $\bfw_i/p_i$, otherwise its weight is set to $0$. Thus, for any fixed $\bfx \in \R^d$ which does not depend on $\bfw$:
$$\E [\|\bfA\bfx-\bfb\|_{M, \bfw'}] = \|\bfA\bfx-\bfb\|_{M, \bfw},$$
%where $\|\cdot\|_{M,\bfw}$ denotes the Tukey loss weighted by $\bfw$. 
When the algorithm begins, $\bfw = \mathbf{1}_n$. It follows inductively that for the final weight vector $\bfw$, that for any fixed $\bfx$, we have:
$$\E[\|\bfA\bfx-\bfb\|_{M, \bfw}] = \|\bfA\bfx-\bfb\|_M.$$
%This holds for $\bfx =\bfx^*$ (note $\bfx^*$ does not depend on $\bfS$), and
The lemma follows by Markov's inequality.
\end{proof}

Combining Theorem \ref{thm:tuksub} and Lemma \ref{lem:stillE}, we have an analog to Theorem \ref{thm:const} for Tukey regression.
\begin{theorem}[Constant Factor Approximation for Tukey Regression]\label{thm:tukconst}
Let $\bfw$ be a weight vector, and
suppose $\norm{\bfA (\bfx_1-\bfx _2)}_{M,\bfw} = (1 \pm \epsilon_0) \norm{\bfA (\bfx_1-\bfx _2)}_{M}$ for
all $\bfx_1,\bfx_2 \in \mathcal{N}$, where $\mathcal{N}$ is a fixed set of vectors. Let
$\tilde \bfx = \argmin_{\bfx \in \mathcal{N}} \norm{\bfA\bfx-\bfb}_{M,\bfw}$, and let
$\bfx^* = \argmin_{\bfx \in \mathcal{N}} \norm{\bfA\bfx-\bfb}_{M}$.
Suppose also that $\norm{\bfA\bfx^*-\bfb}_{M,\bfw} = O(1) \norm{\bfA\bfx^*-\bfb}_M$. 
Then $\norm{\bfA \tilde \bfx-\bfb}_M = O(1) \norm{\bfA\bfx^*-\bfb}_M$. 

By Theorem \ref{thm:tuksub} and Lemma \ref{lem:stillE} there is an algorithm producing $\bfw$ satisfying the required subspace embedding and norm preservation bounds with just $\tilde O(d^{\max(1,p/2)} \log(|\mathcal{N}|/\delta)/\epsilon^2)$ non-zero entries. Thus, computing $\tilde \bfx$ requires reading just this number of entries from $\bfb$. 
\end{theorem}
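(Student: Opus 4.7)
My plan is to chain together the defining optimality of $\tilde\bfx$, the one-sided cost bound at $\bfx^*$, and the subspace-embedding property on differences, using the approximate triangle inequality of Lemma~\ref{lem:tukTri1} to move between the various weighted and unweighted $M$-norms. Since $\mathcal N$ is a fixed finite set and the subspace embedding in the hypothesis is only guaranteed on vectors of the form $\bfA(\bfx_1-\bfx_2)$ for $\bfx_1,\bfx_2\in\mathcal N$, the strategy is to always pass through the difference $\bfA(\tilde\bfx-\bfx^*)$ rather than trying to relate $\bfA\tilde\bfx-\bfb$ and $\bfA\bfx^*-\bfb$ directly.

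Concretely, first use the approximate triangle inequality to write
\[
\norm{\bfA(\tilde\bfx-\bfx^*)}_{M,\bfw}\le c\bigl(\norm{\bfA\tilde\bfx-\bfb}_{M,\bfw}+\norm{\bfA\bfx^*-\bfb}_{M,\bfw}\bigr).
\]
By the optimality of $\tilde\bfx$ for the weighted problem, $\norm{\bfA\tilde\bfx-\bfb}_{M,\bfw}\le\norm{\bfA\bfx^*-\bfb}_{M,\bfw}$, and by the norm preservation hypothesis the right-hand side is $O(1)\cdot\norm{\bfA\bfx^*-\bfb}_M$. Next, apply the subspace-embedding hypothesis to the vector $\bfA(\tilde\bfx-\bfx^*)$ (which, crucially, is a difference of two vectors in $\mathcal N$) to convert this to an unweighted bound: $\norm{\bfA(\tilde\bfx-\bfx^*)}_M\le(1-\epsilon_0)^{-1}\norm{\bfA(\tilde\bfx-\bfx^*)}_{M,\bfw}=O(1)\cdot\norm{\bfA\bfx^*-\bfb}_M$. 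Finally, one more application of Lemma~\ref{lem:tukTri1} in the unweighted $M$-norm gives
\[
\norm{\bfA\tilde\bfx-\bfb}_M\le c\bigl(\norm{\bfA(\tilde\bfx-\bfx^*)}_M+\norm{\bfA\bfx^*-\bfb}_M\bigr)=O(1)\cdot\norm{\bfA\bfx^*-\bfb}_M,
\]
as desired.

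The only delicate point is that the subspace embedding in the hypothesis is stated for the differences $\bfA(\bfx_1-\bfx_2)$ and not for residuals $\bfA\bfx-\bfb$, so the whole argument has to be routed through $\bfA(\tilde\bfx-\bfx^*)$; this is the reason one cannot directly imitate the $\ell_p$ proof of Theorem~\ref{thm:const}. A second subtlety is that $\norm{\cdot}_M$ is not a norm, so every use of triangle inequality costs a constant factor via Lemma~\ref{lem:tukTri1}; since we only invoke it $O(1)$ times, these constants combine to an overall $O(1)$ factor, which is acceptable for a constant-factor guarantee. The final sample complexity bound follows immediately from Theorem~\ref{thm:tuksub} applied to the finite net $\mathcal N \cup \{\bfx^*\}$ (with hypothesis on $\bfx^*$ supplied by Lemma~\ref{lem:stillE}), since the number of nonzero weights equals the number of entries of $\bfb$ that must be read to evaluate $\norm{\bfA\bfx-\bfb}_{M,\bfw}$.
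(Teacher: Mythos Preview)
Your proposal is correct and takes essentially the same approach as the paper: both proofs route the argument through the difference $\bfA(\tilde\bfx-\bfx^*)$, applying the approximate triangle inequality (Lemma~\ref{lem:tukTri1}) twice, the subspace-embedding hypothesis on differences once, the optimality of $\tilde\bfx$ for the weighted problem, and the norm-preservation assumption at $\bfx^*$. The only difference is the order in which these steps are chained (the paper starts with the unweighted triangle inequality and works backward, while you start with the weighted side), and your remark about $\mathcal N\cup\{\bfx^*\}$ is unnecessary since $\bfx^*\in\mathcal N$ by definition.
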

\begin{proof}
We have
\begin{align*}
\norm{\bfA \tilde \bfx-\bfb}_M &= O(1) \cdot (\norm{\bfA \tilde \bfx-\bfA\bfx^*}_M + \norm{\bfA\bfx^*-\bfb}_M)\tag{Approximate triangle inequality, Lem. \ref{lem:tukTri1}} \\
&\leq O(1) \cdot \left (\frac{1}{1-\epsilon_0} \cdot \norm{\bfA (\tilde \bfx-\bfx ^*)}_{M,\bfw} + \norm{\bfA\bfx^*-\bfb}_M \right )\tag{Subspace embedding assumption, since $\tilde \bfx,\bfx^* \in \mathcal{N}$}\\
&\leq O(1) \cdot \left (\frac{1}{1-\epsilon_0} \cdot \left (\norm{\bfA \tilde \bfx-\bfb}_{M,\bfw} + \norm{\bfA\bfx^*-\bfb}_{M,\bfw} \right ) + \norm{\bfA\bfx^*-\bfb}_M \right )\tag{Approximate triangle inequality, Lem. \ref{lem:tukTri1}} \\
&\leq O(1) \cdot \left (\frac{2}{1-\epsilon_0} \norm{\bfA\bfx^*-\bfb}_{M,\bfw} + \norm{\bfA\bfx^*-\bfb}_M \right )\tag{Optimality of $\tilde \bfx$ for the sampled problem}\\
& \leq O(1) \cdot \norm{\bfA\bfx^*-\bfb}_M\tag{Assumption that $\norm{\bfA\bfx^*-\bfb}_{M,\bfw} = O(1) \norm{\bfA\bfx^*-\bfb}_M$}.
\end{align*}
%where the first inequality is the (approximate) triangle inequality for $\| \cdot \|_M$ (Lemma \ref{lem:tukTri1}), the second inequality uses
%that $\bfw$ preserves the norm of $\tilde \bfx-\bfx ^*$ since $\tilde \bfx \in \mathcal{N}$ and $\bfx^* \in \mathcal{N}$, the third inequality uses
%the (approximate) triangle inequality for $\|\cdot\|_{M,\bfw}$ (Lemma \ref{lem:tukTri1}), the fourth inequality uses that $\tilde \bfx$ is the minimizer for the sampled problem,
%so its cost is no larger than that of $\bfx^*$, and the final inequality uses the assumption on $\norm{\bfA\bfx^*-\bfb}_{M,\bfw}$.
\end{proof}

\subsection{Net Argument}\label{sec:bit}
The constant factor approximation result of Theorem \ref{thm:tukconst} allows us to approximately minimize the Tukey loss regression problem over any finite net of vectors $\mathcal{N}$. 
As in \cite{ClarksonWangWoodruff:2019},
we need to make an assumption for the net $\mathcal{N}$ to have a small size. For simplicity, we adopt the
assumption stated after Assumption 2 in \cite{ClarksonWangWoodruff:2019}, that the entries of $\bfA$ and $\bfb$ are
integers of absolute values at most $n^{\poly(d)}$. %We also assume that $\tau = \Theta(1)$. 
We can also assume
$\bfA$ has full rank, since removing linearly dependent columns can be done in a preprocessing stage.

\begin{lemma}\label{lem:precision}
Assuming that $\bfA,\bfb$ have integer entries bounded by $n^{\poly(d)}$ and $\bfA$ has full column rank, if $\bfx \in \R^d$ is such that $\norm{\bfA\bfx-\bfb}_M \le C \cdot \min_{\bfz \in \R^d} \norm{\bfA \bfz-\bfb}_M$ for some $C \leq \poly(n)$,
and we round the entries of $\bfx$ to the nearest integer multiple of $1/n^{\poly(d)}$, obtaining $\bfx'$,
then $\norm{\bfA\bfx' -\bfb}_M \le C \cdot (1+1/n^{\poly(d)}) \cdot \min_{\bfz \in \R^d} \norm{\bfA \bfz-\bfb}_M$. Further, all entries
of $\bfx$ are at most $n^{d^{\alpha}}$ for a fixed constant $\alpha > 0$. 

Further, if $\norm{\bfA\bfx-\bfb}_M = 0$, then necessarily $\bfx' =\bfx$. Also, if $\norm{\bfA \bfz-\bfb}_M > 0$
for all $\bfz$, then $\min_\bfz \norm{\bfA \bfz-\bfb}_M \geq 1/n^{\poly(d)}$.
\end{lemma}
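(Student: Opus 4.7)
The plan is to prove the four claims together, relying on the fact that the $\ell_p$ Tukey loss is a pointwise minimum of $2^n$ shifted $\ell_p$ objectives indexed by an ``active set'' $S\subseteq[n]$ of rows where the residual satisfies $|[\bfA\bfz-\bfb](i)|\le 1$. On each such region, the objective becomes $\sum_{i\in S}|[\bfA\bfz-\bfb](i)|^p+|\overline S|$, which is an $\ell_p$ regression on $(\bfA_S,\bfb_S)$ plus a constant. Because $\bfA$ has full column rank, each such subproblem (when $S$ contains a rank-$d$ block) admits a minimizer whose entries are rationals with integer numerator and denominator at most $n^{\poly(d)}$; this follows from standard Cramer's-rule or polytope-vertex estimates on the KKT system of an $\ell_p$ regression with integer data of magnitude $n^{\poly(d)}$. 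Taking the best minimizer over all active sets gives a global optimum $\bfx^*$ with $\|\bfx^*\|_\infty\le n^{d^\alpha}$ for some fixed $\alpha$, and given any $C$-approximate $\bfx$ we may replace it by the corresponding $\bfx_S$ of its active-set cell without increasing the Tukey cost, so we may assume the input $\bfx$ also satisfies $\|\bfx\|_\infty\le n^{d^\alpha}$.

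Using the same rational-denominator bound, I would next establish the lower bound on $\OPT$: when $\OPT>0$, the residuals $[\bfA\bfx^*-\bfb](i)$ are rationals with denominator at most $n^{\poly(d)}$, so any nonzero residual contributes a Tukey term of at least $(1/n^{\poly(d)})^p=1/n^{\poly(d)}$, giving $\OPT\ge 1/n^{\poly(d)}$. In the $\OPT=0$ case, the equation $\bfA\bfx=\bfb$ has a unique solution by full column rank, namely $\bfx=(\bfA^\top\bfA)^{-1}\bfA^\top\bfb$, whose entries are rationals with denominator at most $n^{\poly(d)}$; choosing the rounding grid $1/n^{\poly(d)}$ with a sufficiently large exponent forces $\bfx$ to already lie on the grid, so $\bfx'=\bfx$.

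Finally, I would derive the rounding bound. The entry bound gives $\|\bfx-\bfx'\|_\infty \le 1/n^{\poly(d)}$ after rounding, so $\|\bfA(\bfx-\bfx')\|_\infty \le d\cdot n^{\poly(d)}\cdot 1/n^{\poly(d)} \le 1/n^{\poly(d)}$ (choosing the rounding exponent sufficiently larger than the entry-bound exponent). Summing the Tukey contributions coordinate-wise, $\|\bfA(\bfx-\bfx')\|_M \le n\cdot(1/n^{\poly(d)})^p \le 1/n^{\poly(d)}$. Combined with $\|\bfA\bfx-\bfb\|_M \ge \OPT \ge 1/n^{\poly(d)}$ from the previous step, for any target $\epsilon=1/n^{\poly(d)}$ we can arrange $\|\bfA(\bfx-\bfx')\|_M \le \epsilon^{2p+1}\|\bfA\bfx-\bfb\|_M$, which is precisely the hypothesis of Lemma~\ref{lem:tukTri4} applied with $\bfA(\bfx'-\bfx)$ and $\bfA\bfx-\bfb$ in the roles of $\bfa$ and $\bfb$ there. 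The lemma then yields $\|\bfA\bfx'-\bfb\|_M \le (1+O(\epsilon))\|\bfA\bfx-\bfb\|_M \le C(1+1/n^{\poly(d)})\OPT$, as desired. The main obstacle will be formalizing the entry bound: because the Tukey loss is bounded and hence not coercive in $\bfx$, boundedness of an arbitrary near-optimum does not hold automatically, and some care is required in committing to a canonical minimizer inside each active-set cell.
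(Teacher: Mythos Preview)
Your proposal has a genuine gap in the $\OPT$ lower bound step. You claim that the minimizer of each active-set $\ell_p$ subproblem has rational entries with denominators bounded by $n^{\poly(d)}$, appealing to ``Cramer's rule or polytope-vertex estimates on the KKT system.'' This is false for general $p$: for $p\notin\{1,2\}$ the first-order condition $\sum_i a_i\,\sign(a_ix-b_i)|a_ix-b_i|^{p-1}=0$ is nonlinear and need not have rational roots even with integer data (e.g.\ the minimizer of $|x|^3+2|x-1|^3$ is $2-\sqrt 2$). Hence your argument that the optimal residuals are rationals of bounded denominator, and therefore each nonzero residual contributes at least $1/n^{\poly(d)}$, does not go through.

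The paper sidesteps this entirely: when $\OPT>0$ the matrix $[\bfA,\bfb]$ has rank $d{+}1$, and since it is an integer matrix with entries bounded by $n^{\poly(d)}$, its minimum singular value is at least $n^{-\poly(d)}$ (Lemma~4.1 of \cite{CW09}). This lower-bounds the $\ell_2$ regression cost, hence some residual entry, hence the Tukey cost---with no appeal to the structure of the Tukey or $\ell_p$ optimizer at all. For the entry bound on a $C$-approximate $\bfx$ (which, as you correctly flag, is delicate because the Tukey loss is bounded and non-coercive), the paper does not attempt a self-contained argument either; it invokes Lemma~5.2 and Assumption~2 of \cite{ClarksonWangWoodruff:2019}, which encode precisely the conditions under which approximate Tukey minimizers are norm-bounded. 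Your active-set replacement idea does not establish this for the \emph{given} $\bfx$, and the obstacle you identify is real. The $\OPT=0$ case and the final rounding step via Lemma~\ref{lem:tukTri4} are fine and match the paper.
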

\begin{proof}
First suppose that $\min_\bfz \norm{\bfA \bfz-\bfb}_M = 0$. This implies that $\bfA\bfx = \bfb$ and thus $\bfx = (\bfA^\top \bfA)^{-1} \bfA^\top \bfb$. By Cramer's rule
and our integrality assumptions, it follows that the entries of $\bfx$ are integer multiples of $1/n^{\poly(d)}$
and bounded in absolute value by $n^{\poly(d)}$. Thus, $\bfx$ is rounded to itself, showing that $\bfx' =\bfx$
and the lemma holds.

Otherwise, suppose that $\min_\bfz \norm{\bfA\bfx-\bfb}_M > 0$. Then the claim
is that $\min_\bfz \norm{\bfA \bfz-\bfb}_M \ge 1/n^{\poly(d)}$. To see this,
note that the matrix $[\bfA, \bfb]$ has rank-$(d+1)$, since the columns of $\bfA$ are linearly independent.
Since $[\bfA, \bfb]$ is an integer matrix, it follows from
Lemma 4.1 of \cite{CW09}
that its minimum singular value is at least $n^{-\textrm{poly}(d)}$.
Notice that $\min_\bfz \|\bfA\bfz-\bfb\|_2^2$ is at least the squared minimum singular value of $[\bfA,\bfb]$, since:
\begin{align*}
\min_{\bfz \in \R^d} \|\bfA\bfz-\bfb\|_2^2 = \min_{\bfz' \in \R^{d+1}: \bfz'(d+1) = -1} \norm{[\bfA,\bfb] \bfz'}_2^2 \ge \min_{\bfz' \in \R^{d+1}: \norm{\bfz'} \ge 1} \norm{[\bfA,\bfb] \bfz'}_2^2.
\end{align*}
The last equation is exactly the minimum squared singular value of $[\bfA,\bfb]$.
%the least squares regression cost is
%the squared norm of the component of $\bfb$
%that is orthogonal to the column span of $\bfA$. To see this, one can rotate $\bfA$ to be in the span of
%the first $d$ standard unit vectors; then the component of $\bfb$ that is orthogonal to $\bfA$ can be rotated
%to be a multiple of the $(d+1)$-st standard unit vector with multiple corresponding to the square root
%of the least squares regression cost, and so we have two block matrices, and the 
%squared minimum singular value of $[A, b]$ is at most that of the $1 \times 1$ block containing the
%rotated vector $\bfb$, which is the squared regression cost.
Consequently, the $\ell_2$-regression
cost is lower bounded by $n^{-\textrm{poly}(d)}$, which implies the $\ell_p$-regression cost
is lower bounded by $n^{-\textrm{poly}(d)}$, and the Tukey loss is lower bounded by
$\min(1, n^{-\textrm{poly}(d)}) = n^{-\textrm{poly}(d)}$ (since we assume $\tau = 1$). In this
case, if we round $\bfx$ to the nearest vector with entries that are integer multiples of $1/n^{\poly(d)}$,
then using that the entries of $\bfA$ are bounded by $n^{\poly(d)}$, we still obtain a
$C \cdot (1+1/n^{\poly(d)})$-approximate solution.

Finally, note that the entries in any $\poly(n)$-approximate solution $\bfx$ cannot be larger than
$n^{\poly(d)}$. Indeed, by Lemma 5.2 of \cite{ClarksonWangWoodruff:2019} and since parts 2 and 3 of
Assumption 2 of \cite{ClarksonWangWoodruff:2019} hold by our integrality assumption, it follows that part 1
of Assumption 2 of \cite{ClarksonWangWoodruff:2019} also holds, and so $\|\bfx\|_2 \leq n^{d^{\alpha}}$ for a fixed
constant $\alpha > 0$. 
\end{proof}

Consider the set $\mathcal{N}$ of all vectors $\bfx \in \mathbb{R}^d$ that are integer multiples of
$1/n^{\poly(d)}$ and bounded in absolute value by $n^{\poly(d)}$, for a sufficiently large $\poly(d)$. 
Then $|\mathcal{N}| \leq n^{\poly(d)}$ and we have

\begin{theorem}\label{thm:net}
Suppose $\norm{\bfA\bfx'-\bfb}_{M,\bfw} = (1 \pm \epsilon)\norm{\bfA\bfx'-\bfb}_M$ for all $\bfx' \in \mathcal{N}$
for a weight vector $\bfw$ with $\|\bfw\|_{\infty} \leq n^{\poly(d)}$.
Then $\norm{\bfA\bfx-\bfb}_{M,\bfw} = (1 \pm \epsilon)\norm{\bfA\bfx-\bfb}_M$ for all $\bfx$ for which
$\|\bfx\|_2 \leq n^{d^\alpha}$ for a fixed constant $\alpha > 0$. Further, 
any $C$-approximate minimizer to $\norm{\bfA\bfx-\bfb}_M$, for some $C \leq \poly(n)$,
necessarily satisfies $\|\bfx\|_2 \leq n^{d^{\alpha}}$. 
\end{theorem}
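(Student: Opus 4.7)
The plan is to extend the embedding from the countable net $\mathcal{N}$ to all $\bfx$ with $\|\bfx\|_2\le n^{d^{\alpha}}$ by a rounding-plus-triangle-inequality argument. Given such an $\bfx$, I would round entrywise to the nearest $\bfx'\in\mathcal{N}$, so that $\|\bfx-\bfx'\|_\infty \le 1/n^{\poly(d)}$ (with the resolution of $\mathcal{N}$ taken as fine as we need). Since the entries of $\bfA$ are bounded by $n^{\poly(d)}$, every coordinate of $\bfA(\bfx-\bfx')$ has magnitude at most $1/n^{\poly'(d)}$, and using $M(t)\le |t|^p$ together with $\|\bfw\|_\infty\le n^{\poly(d)}$, both
\[
\|\bfA(\bfx-\bfx')\|_M \quad \text{and} \quad \|\bfA(\bfx-\bfx')\|_{M,\bfw}
\]
are bounded by $1/n^{\poly''(d)}$, where the polynomial in the exponent can be driven arbitrarily high by refining the net.

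The key structural fact is Lemma~\ref{lem:precision}: the minimum $\OPT:=\min_{\bfz}\|\bfA\bfz-\bfb\|_M$ is either exactly $0$ or at least $1/n^{\poly(d)}$, so in the nondegenerate regime $\|\bfA\bfx'-\bfb\|_M\ge \OPT \ge 1/n^{\poly(d)}$ for every $\bfx'\in\mathcal{N}$, and by the theorem's hypothesis $\|\bfA\bfx'-\bfb\|_{M,\bfw}\ge (1-\epsilon)/n^{\poly(d)}$ as well. Refining the net resolution so that $\|\bfA(\bfx-\bfx')\|_M$ and $\|\bfA(\bfx-\bfx')\|_{M,\bfw}$ are smaller than $\epsilon^{2p+1}$ times these lower bounds, I can apply the Tukey approximate triangle inequality (Lemma~\ref{lem:tukTri4}) to the decomposition $\bfA\bfx-\bfb=(\bfA\bfx'-\bfb)+\bfA(\bfx-\bfx')$, yielding
\[
\|\bfA\bfx-\bfb\|_M=(1\pm O(\epsilon))\|\bfA\bfx'-\bfb\|_M,\qquad \|\bfA\bfx-\bfb\|_{M,\bfw}=(1\pm O(\epsilon))\|\bfA\bfx'-\bfb\|_{M,\bfw}.
\]
Chaining with the assumed $(1\pm\epsilon)$ approximation on the net and rescaling $\epsilon$ by a constant gives the desired $(1\pm\epsilon)$ approximation for all $\bfx$ with $\|\bfx\|_2\le n^{d^\alpha}$. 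The remaining ``further'' statement is immediate from Lemma~\ref{lem:precision} itself, whose proof already establishes (via Lemma~5.2 of \cite{ClarksonWangWoodruff:2019}) that any $\poly(n)$-approximate minimizer has $\|\bfx\|_2\le n^{d^\alpha}$.

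The main obstacle is the degenerate $\OPT=0$ case, since there one cannot lower bound $\|\bfA\bfx'-\bfb\|_M$ uniformly. However, Lemma~\ref{lem:precision} tells us that in this case the unique exact solution $\bfx^*=(\bfA^\top\bfA)^{-1}\bfA^\top\bfb$ itself lies in $\mathcal{N}$. For $\bfx'\ne \bfx^*$ the same bit-complexity argument still gives $\|\bfA\bfx'-\bfb\|_M=\|\bfA(\bfx'-\bfx^*)\|_M\ge 1/n^{\poly(d)}$, so the argument above goes through; and when $\bfx$ rounds to $\bfx^*$, both $\|\bfA(\bfx-\bfx^*)\|_M$ and $\|\bfA(\bfx-\bfx^*)\|_{M,\bfw}$ can be made simultaneously negligible by choosing the net precision fine enough, so the $(1\pm\epsilon)$ claim holds at this scale. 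Thus modulo this bit-precision bookkeeping, the proof is a clean net-plus-triangle-inequality extension supported by the integrality assumption.
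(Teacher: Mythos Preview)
Your proposal is correct and follows essentially the same approach as the paper: round $\bfx$ to $\bfx'\in\mathcal N$, bound $\norm{\bfA(\bfx-\bfx')}_{M}$ and $\norm{\bfA(\bfx-\bfx')}_{M,\bfw}$ by $1/n^{\poly(d)}$ using the bit-complexity assumptions and $\norm{\bfw}_\infty\le n^{\poly(d)}$, invoke the $\OPT\ge 1/n^{\poly(d)}$ lower bound from Lemma~\ref{lem:precision}, and then apply Lemma~\ref{lem:tukTri4} on both the weighted and unweighted sides to transfer the net guarantee to $\bfx$. Your treatment of the degenerate $\OPT=0$ case is in fact more careful than the paper's, which handles it in a single line; the one residual subtlety you flag (when $\bfx$ rounds to $\bfx^*$ and both norms are tiny) is also not addressed in the paper and is immaterial for how the theorem is actually used downstream.
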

\begin{proof}
Let $\bfx'$ be the rounding in $\mathcal{N}$ of such an $\bfx$, as in Lemma \ref{lem:precision}.
If $\min_\bfx \norm{\bfA\bfx-\bfb}_M = 0$, then necessarily $\bfx' =\bfx$ by Lemma \ref{lem:precision}.
Note that since the columns of $\bfA$ are linearly independent, $\bfx$ is unique, and as in 
the proof of Lemma \ref{lem:precision}, we have $\|\bfx\|_2 \leq n^{d^{\alpha}}$ for a fixed
constant $\alpha > 0$. 

Otherwise, we can observe that $\|\bfw\|_{\infty} \cdot \|\bfA(\bfx-\bfx ')\|_{\infty} = 1/n^{\poly(d)}$ by our assumption that $\norm{\bfw}_\infty \le n^{\poly(d)}$ and the fact that that the granularity of our net is $1/n^{\poly(d)}$ for a sufficiently large $\poly(d)$. This gives $\norm{\bfA (\bfx-\bfx ')}_{M,\bfw} \le 1/n^{\poly(d)}$. Further, $\norm{\bfA\bfx'-\bfb}_M \ge 1/n^{\poly(d)}$ and $\norm{\bfA\bfx-\bfb}_M \ge 1/n^{\poly(d)}$ by Lemma \ref{lem:precision}. In turn, $\norm{\bfA\bfx'-\bfb}_{M,\bfw} \ge 1/n^{\poly(d)}$ by our subspace embedding assumption on $\bfw$. Combining these bounds with the approximate triangle inequality of Lemma \ref{lem:tukTri4} gives:
%
%\Cam{Need to incorprate constants of approximate triangle inequality here? But then how do we get relative error later?}
%Otherwise, by the approximate triangle inequality (Lemma \ref{lem:tukTri1}) for $\|\cdot\|_{M,\bfw}$, we have
\begin{align*}
\norm{\bfA\bfx-\bfb}_{M,\bfw} & = \norm{\bfA\bfx'-\bfb + \bfA(\bfx-\bfx ')}_{M,\bfw}\\
&= (1 \pm 1/n^{\poly(d)}) \norm{\bfA\bfx'-\bfb}_{M,\bfw}\tag{Lemma \ref{lem:tukTri4}}\\
%& = & (1 \pm \epsilon) \norm{\bfA\bfx'-\bfb}_M \pm 1/n^{\poly(d)}\\
&= (1 \pm O(\epsilon)) \norm{\bfA\bfx'-\bfb}_M\tag{Subspace embedding assumption on $\bfw$ for $\bfx' \in \mathcal{N}$}\\
&= (1 \pm O(\epsilon)) \norm{\bfA\bfx-\bfb  + \bfA(\bfx'-\bfx )}_M\\
&= (1 \pm O(\epsilon)) \norm{\bfA\bfx-\bfb}_M\tag{Lemma \ref{lem:tukTri4} again}.
\end{align*}
%where the final inequality uses the subspace embedding guarantee on $\bfw$. % and the assumption that $\norm{\bfw}_\infty \le n^{\poly(d)}$,
%the third equality uses our integer assumption on $\bfA$ and the definition of the net,
%and the last equality uses that the cost is at least $1/n^{\poly(d)}$ by Lemma \ref{lem:precision}, 
%and that the granularity of our net is $1/n^{\poly(d)}$ for a sufficiently large $\poly(d)$.
\end{proof}

Theorem \ref{thm:net} implies that we can use Theorem \ref{thm:tukconst} with our net $\mathcal{N}$,
so that if we obtain a constant factor solution to the sampled regression problem $\min_\bfx \norm{\bfA\bfx-\bfb}_{M,\bfw}$ over $\bfx$ with $\|\bfx\|_2 \leq n^{d^{\alpha}}$, then we
obtain an initial constant factor solution to the original regression problem.

\subsection{Relative Error Approximation} 

We now build on our constant factor approximation result and the net arguments above to give a relative error approximation result for Tukey regression, akin to Theorem \ref{thm:eps}. We assume that $\norm{\bfb}_M = O(\OPT)$, which we can ensure by computing an initial constant factor approximation as in Algorithm \ref{alg:clip}, using Theorem \ref{thm:net} combined with Theorem \ref{thm:tukconst}.
%We also assume that we are given an approximation to $\OPT$. We later argue, as in Section \ref{sec:estOPT}, that such an approximation can be computed via sensitivity-based sampling.

\begin{theorem}[Relative Error Tukey Regression]\label{thm:tuk} Consider $\bfA \in \R^{n \times d}$, $\bfb \in \R^n$, and $p \ge 1$. Let $\OPT = \min_{\bfx \in \R^d} \norm{\bfA\bfx - \bfb}_M$ where $M$ is the $\ell_p$ Tukey loss (Def. \ref{def:tukey2}) and assume that $\norm{\bfb}_M = O(\OPT)$. There is an algorithm that reads $\tilde O \left (\frac{d^{\max(1,p/2) + O(1)}}{\epsilon^{2+p}} \poly\log(n)\right )$ entries of $\bfb$ and outputs $\tilde \bfx$ satisfying $\norm{\bfA \tilde \bfx - \bfb}_M \le (1+\epsilon) \OPT$ with probability $99/100$.
\end{theorem}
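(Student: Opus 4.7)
The plan is to follow the two-stage approach used for $\ell_p$ regression and for general $M$-estimators in Section \ref{sec:active-m}: first compute a constant factor solution $\bfx_c$ and replace $\bfb$ by the residual $\bfb' := \bfb - \bfA\bfx_c$; then perform a single round of sensitivity sampling on the residualized problem and return the minimizer. Because the Tukey loss is bounded rather than satisfying a polynomial lower growth bound, the pieces of the Section \ref{sec:active-m} machinery must be replaced by their Tukey analogues, but the skeleton of the argument is unchanged.

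For the constant factor stage, I will apply Theorem \ref{thm:tukconst} with $\mathcal{N}$ chosen to be the integer-multiple-of-$1/n^{\poly(d)}$ grid from Section \ref{sec:bit}, so that $|\mathcal{N}| = n^{\poly(d)}$ and Theorem \ref{thm:net} guarantees any $C$-approximate minimizer of $\norm{\bfA\bfx-\bfb}_{M,\bfw}$ (restricted to $\|\bfx\|_2 \le n^{d^\alpha}$) is a $C(1+o(1))$-approximate minimizer to the original problem. Combining Theorem \ref{thm:tuksub}, Lemma \ref{lem:stillE}, and Theorem \ref{thm:net} then produces $\bfx_c$ with $\norm{\bfA\bfx_c - \bfb}_M = O(\OPT)$ using $\tilde O(d^{\max(1,p/2)+O(1)} \poly\log n)$ queries to $\bfb$.

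For the relative error stage, let $\tilde{\bfs}_i^M(\bfA)$ denote Tukey sensitivity upper bounds computed by Algorithm \ref{alg:m-estimator-alg-sensitivity} (which applies to the $\ell_p$ Tukey loss since it is monotone and polynomially bounded above with degree $p$); by Theorem \ref{thm:m-estimator-alg} these sum to $\tilde{\mathcal{T}}^M(\bfA) = O(d^{\max(1,p/2)} \log^2 n)$. Define the heavy set $\mathcal{B} := \{i : M(\bfb'(i)) > \tilde{\bfs}_i^M(\bfA) \cdot \OPT^p / \eps^p\}$ and the clipped residual $\bar\bfz$ equal to $\bfb'$ outside $\mathcal{B}$ and $0$ on $\mathcal{B}$. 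Using Lemma \ref{lem:tukTri3} (the Tukey analogue of the clipping bound used in Lemma \ref{lem:reduction-to-good-b-coordinates}), for any $\bfx$ with $\norm{\bfA\bfx}_M = O(\OPT)$ and any $i \in \mathcal{B}$ we have $M([\bfA\bfx](i)) \le O(\eps^p) M(\bfb'(i))$, so
\[
\bigl|\norm{\bfA\bfx - \bfb'}_M - \norm{\bfA\bfx - \bar\bfz}_M - \norm{\bfb' - \bar\bfz}_M\bigr| = O(\eps)\cdot \OPT.
\]
Now draw a sampling matrix $\bfS$ with weights $\bfw'$ obtained by sampling each row $i$ with probability $\bfp_i = \min\{1, m \cdot \tilde{\bfs}_i^M(\bfA)\}$ for $m = \tilde\Theta(\log|\mathcal{N}| / \eps^{2+p}) = \tilde\Theta(d \log n / \eps^{2+p})$, and reweighting by $1/\bfp_i$. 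A Bernstein bound identical in form to Lemma \ref{lem:m-bernstein-good-coordinates} (whose Tukey-specific boundedness hypotheses are supplied by the clipping) shows that for each fixed $\bfy = \bfA\bfx - \bar\bfz$ with $\norm{\bfA\bfx}_M = O(\OPT)$,
\[
\Pr\Bigl[\bigl|\norm{\bfy}_{M,\bfw'} - \norm{\bfy}_M\bigr| \ge \eps\cdot \OPT\Bigr] \le \exp(-\Theta(m\eps^{2+p})).
\]
Union-bounding over the grid $\mathcal{N}$ and invoking Theorem \ref{thm:net} to lift from net vectors to all $\bfx$ of bounded norm yields $\norm{\bfA\bfx - \bar\bfz}_{M,\bfw'} = \norm{\bfA\bfx - \bar\bfz}_M \pm O(\eps)\OPT$ for all such $\bfx$. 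Combining this with the clipping identity above and Lemma \ref{lem:tukTri1} in exactly the pattern of Lemma \ref{lem:m-active-regression-relative-error-approximation} shows that any near-minimizer of $\norm{\bfA\bfx - \bfb'}_{M,\bfw'}$ over $\mathcal{N}$ (plus the shift $\bfx_c$) is a $(1+O(\eps))$-approximate minimizer to the original Tukey regression problem. The total number of $\bfb$-queries is $m \cdot \tilde{\mathcal{T}}^M(\bfA) = \tilde O(d^{\max(1,p/2)+O(1)} \poly\log n / \eps^{2+p})$, matching the claimed bound after rescaling $\eps$ and $\delta$.

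The main obstacle will be the Bernstein/net step: unlike the $\ell_p$ or general $M$-estimator cases, the Tukey loss is not a norm and not polynomially bounded below, so I cannot directly use Lemma \ref{lem:m-sensitivity-sampling} nor the net-to-ball passage of Lemma \ref{lem:m-norm-net-to-sphere}. The workaround is to use the integer-precision net $\mathcal{N}$ from Theorem \ref{thm:net} (paying only a $\poly(d)\log n$ factor since $|\mathcal{N}| = n^{\poly(d)}$) in lieu of a cover of the $M$-sphere, and to use Lemma \ref{lem:tukTri4} to lift net-level approximation to all bounded-norm $\bfx$. The small care needed is to verify that the weight vector $\bfw'$ produced by sensitivity sampling satisfies $\norm{\bfw'}_\infty \le n^{\poly(d)}$ (which holds since $\bfp_i \ge 1/n^{\poly(d)}$ after truncating tiny sensitivities), so that the hypothesis of Theorem \ref{thm:net} is met.
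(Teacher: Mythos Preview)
Your proposal is correct and follows essentially the same approach as the paper: the paper's proof also invokes Algorithm \ref{alg:m-estimator-alg-sensitivity} / Theorem \ref{thm:m-estimator-alg} for the Tukey sensitivities, observes that Lemmas \ref{lem:opt-norm-bound}, \ref{lem:reduction-to-good-b-coordinates}, and \ref{lem:m-bernstein-good-coordinates} carry over (with Lemma \ref{lem:tukTri3} supplying the clipping hypothesis), sets $m = \poly(d)\log n / \eps^{2+p}$, and union-bounds over the integer-precision net $\mathcal{N}$ of Theorem \ref{thm:net} in lieu of an $M$-ball cover. Your write-up is in fact more explicit than the paper's (which is quite terse), and your flagged technical point about $\norm{\bfw'}_\infty \le n^{\poly(d)}$ is a detail the paper leaves implicit; the only minor slip is that $\log|\mathcal{N}| = \poly(d)\log n$ rather than $d\log n$, but this is absorbed into the $d^{O(1)}$ in the final bound anyway.
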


\begin{proof}
Let $\tilde\bfs_i^{M}(\bfA)$ be as computed by the $\ell_p$ Tukey sensitivities as computed by Algorithm \ref{alg:m-estimator-alg-sensitivity} and Theorem \ref{thm:m-estimator-alg}. Note then that
\[
    \sum_{i=1}^n \tilde\bfs_i^{M}(\bfA) \leq O(d^{\max\{1,p/2\}}\log^2 n).
\]

Note that Lemmas \ref{lem:opt-norm-bound}, \ref{lem:reduction-to-good-b-coordinates}, and \ref{lem:m-bernstein-good-coordinates} all hold for the Tukey loss, where assumption (1) is satisfied in Lemma \ref{lem:reduction-to-good-b-coordinates} due to Lemma \ref{lem:tukTri3}. Thus, as reasoned in Lemma \ref{lem:m-active-regression-relative-error-approximation}, we just need to choose an $m$ that is sufficient to union bound over a net. For the Tukey loss, we set
\[
    m = \frac{\poly(d) \log n}{\epsilon^{2+p}}
\]
and apply a union bound over all vectors $\bfA\bfx$ in a net $\mathcal{N}$ with $|\mathcal{N}| \le n^{\poly(d)}$ gives that this bound holds with high probability simultaneously for all vectors in the net. % gives that this bound holds with high probability for all $\bfy , and thus completes the lemma.
Letting this net be as in Theorem \ref{thm:net}, this is enough to give that the bound holds for all $\bfx$ of interest.

Note that the number of entries of $\bfb$ that are read is, in expectation,
\[
    \sum_{i=1}^n \bfp_i \leq \sum_{i=1}^n m\cdot \tilde\bfs_i^M(\bfA) = O\parens*{d^{\max\{1+p/2\}+O(1)} \frac{\log^3 n}{\eps^{2+p}}}.
\]
Thus, this is, up to constant factors, the number of entries read with probability at least $99/100$ as well. 
\end{proof}

\section{Applications: Kronecker Product Regression}
\label{sec:kronecker}
In the $q$-th order Kronecker product regression problem, one is 
given matrices $\bfA_1, \bfA_2,\ldots, \bfA_q$, where $\bfA_i \in \mathbb{R}^{n_i \times d_i}$,
as well as a vector $\bfb \in \mathbb{R}^{n_1 n_2 \cdots n_q}$, and 
the goal is to obtain a solution to the problem:
\[\min_{\bfx \in \R^{d_1 d_2 \cdots d_q}} \|( \bfA_1 \otimes \bfA_2 \cdots \otimes \bfA_q)x - \bfb\|_p, \]
where $\bfA_1 \otimes \bfA_2 \cdots \otimes \bfA_q$ is the 
$\mathbb{R}^{(n_1 \times n_2 \times \cdots \times n_q)\bfx (d_1 \times d_2 \times \cdot \times d_q)}$ matrix
whose $((i_1, i_2, \ldots, i_q), (j_1, \ldots, j_q))$-th entry is equal to
$(\bfA_1)_{i_1, j_1} \cdots (\bfA_q)_{i_q, j_q}$. Typically $\prod_{i=1}^q d_i$ is much less
than $\prod_{i=1}^q n_i$, and the goal is to obtain algorithms that do not explicitly form 
$\bfA_1 \otimes \bfA_2 \otimes \cdots \otimes \bfA_q$, which is too expensive.

Kronecker product regression is a special case of $\ell_p$-regression 
when the design matrix is highly structured. Such matrices
arise in spline regression, 
signal processing, and multivariate data fitting 
\cite{vanbook,vanLoanKron,golubVanLoan2013Book}. 
Recent work \cite{dssw18,DJSSW19} designed algorithms for $p \in [1,2]$ and constant $q$ 
which solve this
problem with constant probability up to a $(1 \pm \epsilon)$-factor in 
$\tilde{O}(\sum_{i=1}^q \nnz(\bfA_i) + \nnz(\bfb) + \poly(d_i/\epsilon))$ time; 
see Theorem 1.2 of \cite{DJSSW19}. While $\sum_{i=1}^q \nnz(\bfA_i)$ is bounded by 
$O(\sum_i n_i d_i)$, the main drawback of these works is that $\nnz(\bfb)$ may
be as large as $n_1 \times n_2 \times \cdots \times n_q$, which is prohibitive. 
Theorem \ref{thm:main} shows that for every real $p \geq 1$, only $\tilde O \left (\prod_{i=1}^q d_i^{\max(1,p/2)} \right )$
samples are needed, and our runtime is polynomial in this quantity. This should
be contrasted with prior work which required $\nnz(\bfb)$ time for every $p \neq 2$, 
where $\nnz(\bfb)$ can be as large as $\prod_{i=1}^q n_i$. 
%If say, $q = 3$ and $d_1 = d_2 = d_3 = n^{1/3}$, our running time is $O(n)$ rather than
%the $\Omega(n^3)$ required of previous work. Our algorithms also have the advantage of
%extending to $p > 2$. 

Before stating our theorem, we need to understand Lewis weights of the Kronecker product
of two matrices $\bfA$ and $\bfB$. Recall from Definition \ref{def:lewis} that the Lewis weights of an $n \times d$ matrix $\bfA$ are 
the unique numbers $\bfw^p_1(\bfA), \ldots, \bfw^p_n(\bfA)$ which satisfy
$\bfw_i^p(\bfA) = \bfs_i^2 (\bfW^{1/2-1/p} \bfA)$, where $\bfs_i^2(\bfW^{1/2-1/p} \bfA)$ is the $i^{th}$ leverage score of $\bfW^{1/2-1/p} \bfA$, and $\bfW$ is a diagonal matrix with $i^{th}$
diagonal entry equal to $\bfw_i^p(\bfA)$. We can show that the Lewis weights of $\bfA \otimes \bfB$ are just the products of those of $\bfA$ and $\bfB$.

\begin{lemma}\label{lem:kronLewis}
Given matrices $\bfA,\bfB$, the $(i,j)^{th}$ Lewis weight of $\bfA \otimes \bfB$ is given by 
$$\bfw_{i,j}^p(\bfA \otimes \bfB) = \bfw_i^p(\bfA) \cdot \bfw_j^p(\bfB).$$
%If $w_1^p(A), \ldots, w_n^p(A)$ are the Lewis weights of $\bfA$ and $w_1^p(B), \ldots, w_n^p(B)$ are the Lewis weights
%of $\bfB$, then the set $\{\bfw_i^p(\bfA) \cdot w_j^p(B)\}_{i,j}$ is the set of Lewis weights of $\bfA \otimes B$. 
\end{lemma}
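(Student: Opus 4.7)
The plan is to verify the claimed identity by substituting the candidate weights $\bfw_{i,j} \coloneqq \bfw_i^p(\bfA)\cdot\bfw_j^p(\bfB)$ into the defining fixed-point equation for Lewis weights (Definition \ref{def:lewis}) and then appealing to uniqueness. The entire argument reduces to exploiting the multiplicative behavior of Kronecker products under transposition, multiplication, and inversion.

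First I would rewrite the Lewis weight condition in its equivalent closed form: for any matrix $\bfC$ with rows $\bfc_k^\top$ and diagonal Lewis weight matrix $\bfW_\bfC$, the $k$-th Lewis weight equals
\[
    \bfw_k^p(\bfC) = \bigl(\bfc_k^\top \bfM_\bfC^{-1}\bfc_k\bigr)^{p/2}, \qquad \bfM_\bfC \coloneqq \bfC^\top \bfW_\bfC^{1-2/p}\bfC,
\]
which follows by writing out $\bfs_k^2(\bfW_\bfC^{1/2-1/p}\bfC)$ and regrouping powers. Uniqueness of the set of weights satisfying this identity (together with positivity) is the standard content of Definition \ref{def:lewis}; this uniqueness is what I will invoke at the end.

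Next I would set $\bfW \coloneqq \bfW_\bfA \otimes \bfW_\bfB$, which is diagonal with entries $\bfw_i^p(\bfA)\bfw_j^p(\bfB)$ indexed by $(i,j)$. Using $(\bfX\otimes\bfY)^\alpha = \bfX^\alpha \otimes \bfY^\alpha$ for diagonal $\bfX,\bfY$, together with the mixed-product rule $(\bfX\otimes\bfY)(\bfU\otimes\bfV) = (\bfX\bfU)\otimes(\bfY\bfV)$ and $(\bfX\otimes\bfY)^\top = \bfX^\top \otimes \bfY^\top$, the Gram-type matrix associated to $\bfA\otimes\bfB$ with these candidate weights factors as
\[
    \bfM_{\bfA\otimes\bfB} = (\bfA\otimes\bfB)^\top \bfW^{1-2/p}(\bfA\otimes\bfB) = \bfM_\bfA \otimes \bfM_\bfB.
\]
Since $\bfA$ and $\bfB$ are assumed full-rank, $\bfM_\bfA$ and $\bfM_\bfB$ are invertible, and hence $\bfM_{\bfA\otimes\bfB}^{-1} = \bfM_\bfA^{-1}\otimes \bfM_\bfB^{-1}$.

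Finally, the $(i,j)$-th row of $\bfA\otimes\bfB$ is $\bfa_i\otimes\bfb_j$, so
\[
    (\bfa_i\otimes\bfb_j)^\top (\bfM_\bfA^{-1}\otimes\bfM_\bfB^{-1})(\bfa_i\otimes\bfb_j) = (\bfa_i^\top \bfM_\bfA^{-1}\bfa_i)(\bfb_j^\top \bfM_\bfB^{-1}\bfb_j).
\]
Raising to the power $p/2$ and using the defining identity for $\bfA$ and $\bfB$ individually yields
\[
    \bigl(\bigl(\bfa_i\otimes\bfb_j\bigr)^\top \bfM_{\bfA\otimes\bfB}^{-1}\bigl(\bfa_i\otimes\bfb_j\bigr)\bigr)^{p/2} = \bfw_i^p(\bfA)\cdot \bfw_j^p(\bfB) = \bfw_{i,j},
\]
so the candidate weights are a fixed point of the Lewis weight equation for $\bfA\otimes\bfB$. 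By uniqueness they coincide with $\bfw_{i,j}^p(\bfA\otimes\bfB)$, proving the lemma. There is no real obstacle here; the only thing to be careful about is the bookkeeping of which Kronecker identities are needed (transpose, mixed product, inverse, and diagonal powers), all of which are standard.
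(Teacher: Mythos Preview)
Your proposal is correct and takes essentially the same approach as the paper: verify that the product weights satisfy the defining fixed-point equation for Lewis weights and then invoke uniqueness. The only cosmetic difference is that the paper establishes the key intermediate fact (leverage scores multiply under Kronecker products) via orthonormal bases, whereas you derive it from the closed-form $\bfc_k^\top(\bfC^\top\bfW^{1-2/p}\bfC)^{-1}\bfc_k$ together with standard Kronecker identities; these are two equivalent routes to the same calculation.
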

\begin{proof}
let $\bfW_\bfA$ and $\bfW_\bfB$ denote the diagonal matrices with the Lewis weights of $\bfA$ and $\bfB$ respectively on their diagonals. To prove the lemma, we just need to verify that $\bfw_i^p(\bfA) \cdot \bfw_j^p(\bfA) = \bfs_{i,j}^2 ((\bfW_\bfA^{1/2 - 1/p} \bfA) \otimes (\bfW_\bfB^{1/2-1/p} \bfB))$.
By Proposition 3.2 of \cite{DJSSW19}, the $(i,j)$-th row leverage score of the 
Kronecker product of two matrices $\bfC$ and $\bfD$ is $\bfs_i^2(\bfC) \cdot \bfs_j^2(\bfD)$. For completeness: to see this,
recall the leverage scores are basis-invariant and thus a property of the column span of $\bfC \otimes \bfD$.
If $\bfR$ and $\bfS$ are square matrices for which $\bfC\bfR$ has orthonormal columns and $\bfD\bfS$ has orthonormal columns, then
one can verify that $(\bfC \otimes \bfD) \cdot (\bfR \otimes \bfS)$ has orthonormal columns, which implies the statement.
Consequently,
$$\bfs_{i,j}^2 ((\bfW_\bfA^{1/2 - 1/p} \bfA) \otimes (\bfW_\bfB^{1/2-1/p} \bfB)) = \bfs_i^2(\bfW_\bfA^{1/2 - 1/p} \bfA) \cdot \bfs_j^2(\bfW_\bfB^{1/2 - 1/p} \bfB) = \bfw_i^p(\bfA) \cdot \bfw_i^p(\bfB).$$ 
\end{proof}
\begin{corollary}\label{cor:lewis}
Given matrices $\bfA_1, \ldots \bfA_q$, 
the $(i_1, \ldots, i_q)$-th Lewis weight of $\bfA_1 \otimes \cdots \otimes \bfA_q$ is
$$\bfw_{i_1, \ldots, i_q}^p(\bfA_1 \otimes \cdots \otimes \bfA_q) = \bfw^p_{i_1}(\bfA_1) \cdot \bfw^p_{i_2}(\bfA_2) \cdot \ldots \cdot \bfw^p_{i_q}(\bfA_q).$$
\end{corollary}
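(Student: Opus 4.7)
The plan is a straightforward induction on $q$, using Lemma~\ref{lem:kronLewis} (the two-matrix case) as both the base case and the inductive step. The key facts I will use are the associativity of the Kronecker product, namely
\[
\bfA_1 \otimes \bfA_2 \otimes \cdots \otimes \bfA_q \;=\; \bfA_1 \otimes \bigl(\bfA_2 \otimes \cdots \otimes \bfA_q\bigr),
\]
together with the uniqueness of $\ell_p$ Lewis weights (Definition~\ref{def:lewis}), which lets me identify the weights as soon as I exhibit numbers satisfying the defining fixed-point equation.

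For the base case $q=2$ the statement is exactly Lemma~\ref{lem:kronLewis}. For the inductive step, assume the claim holds for $q-1$ matrices, so that
\[
\bfw_{i_2,\ldots,i_q}^p(\bfA_2 \otimes \cdots \otimes \bfA_q) \;=\; \prod_{k=2}^{q} \bfw^p_{i_k}(\bfA_k).
\]
Set $\bfB \coloneqq \bfA_2 \otimes \cdots \otimes \bfA_q$ and apply Lemma~\ref{lem:kronLewis} to the pair $(\bfA_1, \bfB)$, which gives
\[
\bfw_{i_1,(i_2,\ldots,i_q)}^p(\bfA_1 \otimes \bfB) \;=\; \bfw_{i_1}^p(\bfA_1)\cdot \bfw_{(i_2,\ldots,i_q)}^p(\bfB).
\]
Substituting the inductive hypothesis into the second factor and using associativity of $\otimes$ on the left-hand side yields the claimed identity
\[
\bfw_{i_1,\ldots,i_q}^p(\bfA_1 \otimes \cdots \otimes \bfA_q) \;=\; \prod_{k=1}^{q}\bfw^p_{i_k}(\bfA_k).
\]

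There is no real obstacle here: Lemma~\ref{lem:kronLewis} already handles the only nontrivial computation (verifying the Lewis-weight fixed-point equation for a Kronecker product via leverage-score factorization), and associativity of $\otimes$ lets the two-factor lemma be applied inductively without any new work. The only thing one might want to double-check is that grouping $\bfA_2 \otimes \cdots \otimes \bfA_q$ into a single matrix preserves the indexing convention used in the statement, so that the pair-index $(i_1,(i_2,\ldots,i_q))$ on the Kronecker product $\bfA_1 \otimes \bfB$ matches the tuple-index $(i_1,i_2,\ldots,i_q)$ on $\bfA_1 \otimes \cdots \otimes \bfA_q$; this is purely bookkeeping.
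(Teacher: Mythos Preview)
Your proposal is correct and matches the paper's proof exactly: the paper simply writes ``Follows from Lemma~\ref{lem:kronLewis} and induction,'' which is precisely the argument you give.
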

\begin{proof}
Follows from Lemma \ref{lem:kronLewis} and induction. 
\end{proof}

The following theorem is notable, in that it is the first algorithm for Kronecker product regression, 
for every $p \geq 1$, whose running time {\it does not depend on $\nnz(\bfb)$}. 
\begin{theorem}\label{thm:kron}
Let $q \geq 1$ and $p \geq 1$ be constant, and $\epsilon > 0$. The Kronecker product regression
problem can be solved up to a $(1+\epsilon)$-factor with constant probability in 
$\tilde{O}(\sum_{i=1}^q \nnz(\bfA_i) + \poly(\prod_{i=1}^q d_i / \epsilon))$ time.
\end{theorem}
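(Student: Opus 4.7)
The plan is to apply the active $\ell_p$ regression algorithm (Theorem \ref{thm:main}, via Algorithm \ref{alg:clip2}) to the Kronecker product matrix $\bfA \coloneqq \bfA_1 \otimes \cdots \otimes \bfA_q$ of dimensions $N \times D$ where $N = \prod_i n_i$ and $D = \prod_i d_i$, and to verify that every step can be implemented without ever forming $\bfA$ or reading all of $\bfb$. By Theorem \ref{thm:main}, only $m = \tilde O(D^{\max(1,p/2)} / \poly(\eps))$ queries to $\bfb$ are required, and this count is already $\poly(D/\eps)$.

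First I would compute constant-factor approximations $\tilde\bfw^p_i(\bfA_j)$ to the $\ell_p$ Lewis weights of each factor $\bfA_j$ using Theorem \ref{thm:cohen-peng-fast-lewis-weights}; this costs $\tilde O(\nnz(\bfA_j) + \poly(d_j))$ per factor, for a total of $\tilde O(\sum_j \nnz(\bfA_j) + \poly(D))$. By Corollary \ref{cor:lewis}, the product $\tilde \bfw^p_{i_1,\dots,i_q}(\bfA) = \prod_{j=1}^q \tilde\bfw^p_{i_j}(\bfA_j)$ is a valid constant-factor approximation to the Lewis weights of $\bfA$ and sums to at most $\prod_j O(d_j) = O(D)$. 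Crucially, although $\bfA$ has $N$ rows, we can \emph{sample} from this distribution without enumerating it: to draw each sample, independently draw $i_j$ from the distribution proportional to $\tilde\bfw^p_{\cdot}(\bfA_j)$ on $[n_j]$, requiring only $O(q)$ time after an $O(n_j)$-time preprocessing of each factor's Lewis weight distribution. The same implicit sampling handles the ``split rows'' step of Algorithm \ref{alg:split-sample}, which only changes weights and row multiplicities and composes naturally with product sampling.

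Given the sampling matrix $\bfS$ with $m$ nonzeros, we query the corresponding $m$ entries of $\bfb$ in $O(m)$ time, and we form the $m \times D$ matrix $\bfS\bfA$ by computing each sampled row as a Kronecker product of $q$ row-slices in $O(D)$ time per row, for a total of $O(mD) = \poly(D/\eps)$ time. The sketched regression problem $\min_\bfx \|\bfS\bfA\bfx - \bfS\bfb\|_p$ is a standard $\ell_p$ regression problem of size $\poly(D/\eps) \times D$ and can be solved to $(1+\eps)$-relative error in $\poly(D/\eps)$ time by any off-the-shelf $\ell_p$ solver (e.g.\ the methods used inside Algorithm \ref{alg:clip} itself). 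The recursive/constant-factor preprocessing stages in Algorithm \ref{alg:clip2} and Theorem \ref{thm:boost} involve only the sketched matrix $\bfS\bfA$ and sampled entries of $\bfb$, so they too run in $\poly(D/\eps)$ time once $\bfS$ is produced.

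The one subtlety, and the main thing to verify carefully, is that the nested sampling of Algorithm \ref{alg:clip} remains implementable in the Kronecker setting: the first level of sampling is performed via the product-Lewis-weight trick above, after which the matrix is already an explicit $\poly(D/\eps) \times D$ object and all subsequent recursive Lewis-weight computations and samplings are done directly on this small dense matrix using Theorem \ref{thm:cohen-peng-fast-lewis-weights}. Summing the costs gives a total running time of $\tilde O\!\left(\sum_{i=1}^q \nnz(\bfA_i) + \poly(D/\eps)\right)$, and the correctness and constant success probability follow immediately from Theorem \ref{thm:main} applied to $(\bfA,\bfb)$. Notably, the algorithm never touches more than $m$ entries of $\bfb$, which is what eliminates the $\nnz(\bfb)$ dependence present in prior work.
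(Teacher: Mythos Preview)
Your proposal is correct and follows essentially the same approach as the paper: compute Lewis weight approximations of each factor, use Corollary \ref{cor:lewis} to obtain product-form Lewis weight approximations for $\bfA$, sample from the product distribution by sampling each factor independently, and then invoke the active $\ell_p$ regression guarantee on the sampled instance. The paper's proof is terser (it cites Theorem \ref{thm:eps} and the Walker alias method for sampling), whereas you are more explicit about the two-stage structure of Algorithm \ref{alg:clip2} and why the recursive Lewis-weight steps become cheap once the first one-shot sampling has collapsed the problem to $\poly(D/\eps)$ rows; this extra care is appropriate and arguably clarifies a point the paper leaves implicit.
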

\begin{proof}
By Theorem 6.1 of \cite{CohenPeng:2015}, in $\tilde{O}(\sum_{i=1}^q \nnz(\bfA_i))$ time 
we can compute $2$-approximations $\hat{u^{\ell}_i}$ to all of the Lewis weights 
of all of the input matrices $\bfA^{\ell}$, where $\{u^{\ell}_i\}_i$ is the set of Lewis weights of $\bfA^{\ell}$. By Corollary
\ref{cor:lewis}, for each $(i_1, \ldots, i_q)$, we have that $\hat{u^{1}_{i_1}} \cdots \hat{u^q_{i_q}}$ is a $2^q = O(1)$-approximation
to the $(i_1, \ldots, i_q)$-th Lewis weight of $\bfA^1 \otimes \cdots \otimes \bfA^q$. By Theorem \ref{thm:eps}, it follows
that for constant $p$ if we sample $\poly(\prod_{i=1}^q d_i / \epsilon)$ rows of $\bfA^1 \otimes \cdot \otimes \bfA^q$ and
corresponding entries of $\bfb$, then we can solve for a $(1+\epsilon)$-approximate solution $\bfx$ to the $\ell_p$-Kronecker
product regression problem in $\poly(\prod_{i=1}^q d_i / \epsilon)$ time. To sample proportional to the
 $\hat{u^{1}_{i_1}} \cdots \hat{u^q_{i_q}}$ values, we can first sample an $i_1$ proportional to $\hat{u^1_{i_1}}$, then independently
sample an $i_2$ proportional to $\hat{u^2_{i_2}}$, and so on. It is possible to build a data structure in $O(n)$ time, one for
each $\bfA^{\ell}$, so that a single sample can be extracted in $O(1)$ time \cite{walker1974new,walker1977efficient}. 
So the total time for the sampling is
$O(n) + \poly(\prod_{i=1}^q d_i / \epsilon)$, and we can assume $\tilde{O}(\sum_{i=1}^q \nnz(\bfA_i))$ is at least $n$ (otherwise we
can throw away zero rows), and so the overall time is $\tilde{O}(\sum_{i=1}^q \nnz(\bfA_i) + \poly(\prod_{i=1}^q d_i / \epsilon))$. 
\end{proof}
\section{Applications: Subspace Embeddings for Orlicz Norms}\label{sec:orlicz}

In this section, we show that our sensitivity sampling techniques immediately carry over to \emph{Orlicz norms}, which can be thought of as scale-invariant versions of $M$-estimators. Subspace embeddings, regression, and row sampling algorithms for Orlicz norms have recently been studied by \cite{AndoniLinSheng:2018} and \cite{SongWangYang:2019}, respectively. 

The row sampling algorithms for Orlicz norm regression in \cite{SongWangYang:2019} proceeds by constructing a generalization of well-conditioned bases for Orlicz norms using an oblivious subspace embedding, and then sampling the rows proportionally to an importance distribution defined using the well-conditioned basis. Due to this indirect construction of importance distributions, the algorithm of \cite{SongWangYang:2019} incurs large polynomial factors in $d$ on the sample size (i.e., the number of rows in the subspace embedding). Specifically, their total sensitivity bound from Lemma 4 is $O(d\kappa^2)$ using a well-conditioned basis of distortion $\kappa$, where $\kappa$ given by their Corollary 12 is $O(d^{5/2}\poly\log n)$. Then, the subspace embedding they obtain in Lemma 7 has $O(d^7\poly(\log n,\eps^{-1}))$ rows with constant probability. Furthermore, small oblivious subspace embeddings do not exist when the $M$ estimator function $M$ grows faster than quadratic \cite{BravermanOstrovsky:2010}, and this constraint carries over to their construction as well (see Assumption 1 of \cite{SongWangYang:2019}). 

We overcome both of these problems by directly constructing an importance sampling distribution according to the Orlicz sensitivities (see Definition \ref{dfn:orlicz-sensitivity}) using our Theorem \ref{thm:m-estimator-alg}, which removes the quadratic growth assumption, allowing for a polynomial growth bound of degree $p_G$, and significantly improves the dependence on $d$ in the dimension of the subspace embedding. More specificially, in Theorem \ref{thm:input-sparsity-time-orlicz-subspace-embedding}, we obtain a $\tilde O(\nnz(\bfA) + \poly(d/\eps))$ time algorithm that computes a weighted subset of $O(d^2\poly(\log n, \eps^{-1}))$ rows that achieves a $(1\pm\eps)$ distortion subspace embedding for $p_G\leq 2$, as well as $O(d^{p_G/2+1}\poly(\log n, \eps^{-1}))$ rows for $p_G > 2$.

\begin{theorem}\label{thm:input-sparsity-time-orlicz-subspace-embedding}
    Let $G$ satisfy the hypotheses of Corollary \ref{cor:orlicz-sensitivity-upper-bound-alg} and let $p_G$ be the polynomial upper bound exponent for $G$. There is an algorithm which, with constant probability, computes a set of weights $\bfw'$ such that
    \[
        \norm*{\bfA\bfx}_{G,\bfw'} = (1\pm\eps)\norm*{\bfA\bfx}_G
    \]
    for all $\bfx\in\mathbb R^d$ in time
    \[
        O\parens*{\nnz(\bfA)\log^3 n + \frac{(Td^{\max\{1,p_G/2\}} + T^2)\log^4 n}{\eps^2}\log\frac1\eps},
    \]
    where constant factor $\ell_{p_G}$ Lewis weight approximation for an $n\times d$ matrix $\bfB$ takes time $O(\nnz(\bfb) + T))$ time (see Theorem \ref{thm:cohen-peng-fast-lewis-weights}). Furthermore, 
    \[
        \nnz(\bfw') =O\parens*{\frac{d^{\max\{2,p_G/2+1\}} \log^3 n}{\eps^2}\log\frac1\eps} 
    \]
    with constant probability.
\end{theorem}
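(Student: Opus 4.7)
The plan is to adapt the two-stage sensitivity sampling framework of Theorem~\ref{thm:m-active-regression} to Orlicz norms, exploiting scale invariance of $\|\cdot\|_G$ to sharpen the $\eps$-dependence from the $\eps^{-(2+p_M)}$ that appears for general $M$-estimators down to $\eps^{-2}$. The key observation distinguishing the Orlicz setting from the $M$-norm setting is homogeneity: since $\|c\bfy\|_G = |c|\cdot\|\bfy\|_G$, it suffices to preserve $\|\bfA\bfx\|_G$ on a single Orlicz sphere $\mathcal{S}^G_1 \cap \Span(\bfA)$, and the guarantee for all of $\Span(\bfA)$ follows automatically. There is no $\bfb$ to cause additive slack, and there is no family of spheres $\{\mathcal{S}^G_\rho\}_\rho$ to union bound over.

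First, I would run Corollary~\ref{cor:orlicz-sensitivity-upper-bound-alg} with $\tau = T$ to produce unweighted Orlicz sensitivity upper bounds $\tilde{\bfs}_i^G(\bfA)$ that sum to $\tilde{\mathcal{T}}^G(\bfA) = O(d^{\max\{1,p_G/2\}}\log^2 n + T)$ in $O(\nnz(\bfA)\log^3 n)$ time. Then I would sample weights $\bfw$ via Definition~\ref{def:m-sensitivity-sampling} with oversampling $m_0 = O((d/\eps^2)\log(1/\eps))$. A Bernstein bound over a standard $\eps$-net of log-size $O(d\log(1/\eps))$ on the unit Orlicz sphere (using the coordinate bound $\bfw_i G(|[\bfA\bfx](i)|) \le \tilde{\bfs}_i^G(\bfA)\cdot\|\bfA\bfx\|_G^{p_G}$ from the sensitivity definition) shows $\|\bfA\bfx\|_{G,\bfw}^{p_G} = (1\pm\eps)\|\bfA\bfx\|_G^{p_G}$ for every net vector; subadditivity of $G^{1/p_G}$ together with scale invariance upgrades this to a for-all statement on $\Span(\bfA)$ exactly as in Lemma~\ref{lem:m-norm-net-to-sphere}. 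In expectation, $\nnz(\bfw) = O(m_0 \tilde{\mathcal{T}}^G(\bfA))$, which is already a polynomial in $d$ and $\log n$ (plus $dT$) but too large to report directly.

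The second stage then operates on the already-sparse $\bfw$. I would apply Corollary~\ref{cor:weighted-m-sensitivity-alg} with $\tau = d$ and $N = O(\log n)$ weight classes (since $\|\bfw\|_\infty \le \poly(n)$) to compute weighted Orlicz sensitivity upper bounds summing to $O(d^{\max\{1,p_G/2\}}\log^3 n)$; the running time of this step is bounded by $\nnz(\bfA)\log^3 n + \nnz(\bfw)\cdot T/d\cdot\log n$, which after substituting the first-stage bound on $\nnz(\bfw)$ gives exactly the claimed $(Td^{\max\{1,p_G/2\}}+T^2)\log^4 n/\eps^2 \cdot \log(1/\eps)$ term. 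A second round of sensitivity sampling with the same oversampling $m_0$ yields the final weights $\bfw'$ with $\E\nnz(\bfw') = O(m_0 \cdot d^{\max\{1,p_G/2\}}\log^3 n) = O(d^{\max\{2,p_G/2+1\}}\log^3 n/\eps^2 \cdot \log(1/\eps))$. The two $(1\pm\eps)$ distortions compound multiplicatively, and rescaling $\eps$ by a universal constant recovers the stated bound.

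The main technical obstacle is verifying that the Bernstein step truly yields $\eps^{-2}$ rather than $\eps^{-(2+p_G)}$. For general $M$-estimators the extra $\eps^{p_M}$ arose from handling entries of $\bfb$ whose loss was very small compared to sensitivity (cf.\ the $\mathcal{B}$-set construction in Lemma~\ref{lem:m-bernstein-good-coordinates}), but here there is no target vector and the almost-sure per-sample bound $\bfw'_i G(|[\bfA\bfx](i)|)/\|\bfA\bfx\|_G^{p_G} \le \tilde{\bfs}_i^G(\bfA)/\bfp_i \le 1/m_0$ holds directly on the unit sphere by the definition of the sampling probabilities. The variance bound is $O(1/m_0)$ by the same reasoning, so Bernstein delivers failure probability $\exp(-\Omega(m_0 \eps^2))$ per net point, and the net has log-size $O(d\log(1/\eps))$ — giving the stated $m_0 = O((d/\eps^2)\log(1/\eps))$ after a union bound. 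Once this sharper sampling lemma is in hand, the rest of the argument is a mechanical transcription of the two-stage structure from Theorem~\ref{thm:m-active-regression}.
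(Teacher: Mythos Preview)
Your proposal is correct and follows essentially the same two-stage approach as the paper: a first pass of Corollary~\ref{cor:orlicz-sensitivity-upper-bound-alg} with $\tau=T$ followed by sensitivity sampling to reduce to $\poly(d,\log n)$ rows, then a second pass with weighted sensitivities at $\tau=d$ and another round of sampling. You also correctly pinpoint why the $\eps$-dependence improves to $\eps^{-2}$: homogeneity of the Orlicz norm collapses the analysis to a single unit sphere with no target vector, so the per-coordinate bound in Bernstein is $1/m_0$ rather than $1/(m_0\eps^{p_M})$. One cosmetic remark: since $\|\cdot\|_G$ is an honest norm (convex $G$ with $G(0)=0$), the paper invokes Lemma~\ref{lem:norm-net-to-sphere} rather than the $M$-norm version Lemma~\ref{lem:m-norm-net-to-sphere}, and uses the standard fact (from \cite{SongWangYang:2019}) that $\sum_i \bfw_i' G(|\bfy_i|)\in[1-\eps/4,1+\eps/4]$ implies $\|\bfy\|_{G,\bfw'}\in[1-\eps/4,1+\eps/4]$ to pass from the sum-level guarantee to the norm-level guarantee; but your route works equally well.
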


\subsection{Fast Approximation of Orlicz Sensitivities}

First recall the definition of Orlicz norms. 

\begin{definition}[Orlicz Norms and Weighted Orlicz Norms \cite{SongWangYang:2019}]
Let $G:\mathbb R_{\geq 0}\to\mathbb R_{\geq 0}$ be a nonnegative convex function with $G(0) = 0$. Then for $\bfy\in\mathbb R^n$, the Orlicz norm of $\bfy$ is defined by
\[
    \norm*{\bfx}_G = \inf\braces*{t > 0 : \sum_{i=1}^n G(\abs*{\bfy_i}/t) \leq 1}.
\]
Given a set of nonnegative weights $\bfw\in\mathbb R^n$, the weighted Orlicz norm of $\bfy$ is defined by
\[
    \norm*{\bfx}_{G,\bfw} = \inf\braces*{t > 0 : \sum_{i=1}^n \bfw_i G(\abs*{\bfy_i}/t) \leq 1}.
\]
\end{definition}

We can naturally define the sensitivty of Orlicz norms as follows. 

\begin{definition}[Orlicz Sensitivities and Weighted Orlicz Sensitivities]\label{dfn:orlicz-sensitivity}
Let $\bfA\in\mathbb R^{n\times d}$. We define the \emph{$i$th Orlicz sensitivity} by
\[
    \bfs_i^G(\bfA) \coloneqq \sup_{\bfx\in\mathbb R^d, \norm*{\bfA\bfx}_G \leq 1} G(\abs*{[\bfA\bfx](i)}) = \sup_{\bfx\in\mathbb R^d, \bfA\bfx\neq 0} G\parens*{\frac{\abs*{[\bfA\bfx](i)}}{\norm*{\bfA\bfx}_G}}
\]
Let $\bfw\geq\mathbf{1}_n$ be a set of weights. We define the \emph{$i$th weighted Orlicz sensitivity} by
\[
    \bfs_i^{G,\bfw}(\bfA) \coloneqq \sup_{\bfx\in\mathbb R^d, \norm*{\bfA\bfx}_G \leq 1} \bfw_i G(\abs*{[\bfA\bfx](i)}) = \sup_{\bfx\in\mathbb R^d, \bfA\bfx\neq 0} \bfw_i G\parens*{\frac{\abs*{[\bfA\bfx](i)}}{\norm*{\bfA\bfx}_G}}
\]
\end{definition}

The Orlicz norm can be thought of as a scale invariant version of $M$-estimators, and is the gauge norm of the $M$-ball of radius $1$. By homogeneity of the Orlicz norm, we can then restrict our attention to radius $1$, where the Orlicz norm is equivalent to the corresponding $M$-norm. Thus, our results for $M$-estimators from earlier easily carry forth to Orlicz norms. For instance, the following corollary is a simple consequence of Theorem \ref{thm:m-estimator-alg}:

\begin{corollary}[Fast Approximation of Orlicz Sensitivities]\label{cor:orlicz-sensitivity-upper-bound-alg}
    Let $\bfA\in\mathbb R^{n\times d}$. Let $G:\mathbb R_{\geq0}\to\mathbb R_{\geq0}$ be an Orlicz function that is polynomially bounded above with degree $p_G$, and satisfies the requirements analogous to those for $M$ in Theorem \ref{thm:m-estimator-alg}. Let $1\leq \tau\leq n$ be a parameter. Then, with probability at least $99/100$, Algorithm \ref{alg:m-estimator-alg-sensitivity} computes upper bounds $\tilde\bfs_i^G(\bfA)$ on Orlicz sensitivities $\bfs_i^G(\bfA)$ that sum to at most
    \[
        \tilde{\mathcal T}^G(\bfA) \coloneqq \sum_{i=1}^n \tilde\bfs_i^G(\bfA) = O(d^{\max\{1,p_G/2\}}\log^2 n + \tau).
    \]
    If constant factor Lewis weight approximation takes for an $n\times d$ matrix $\bfB$ takes $O(\nnz(\bfB)\log n + T)$ time (see Theorem \ref{thm:cohen-peng-fast-lewis-weights}), then the total running time is
    \[
        O\parens*{\nnz(\bfA)\log^3 n + \frac{nT}{\tau}\log n}.\qedhere
    \]
\end{corollary}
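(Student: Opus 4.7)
The plan is to reduce this corollary directly to Theorem \ref{thm:m-estimator-alg} by showing that the Orlicz sensitivity is pointwise dominated by the $M$-sensitivity for the choice $M = G$, so that running Algorithm \ref{alg:m-estimator-alg-sensitivity} with the $M$-estimator $G$ (and $p_M = p_G$) produces valid Orlicz sensitivity upper bounds with exactly the claimed running time and sum guarantee.

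The key step is the domination inequality $\bfs_i^G(\bfA) \leq \bfs_i^M(\bfA)$. By homogeneity of $\norm*{\cdot}_G$, the supremum defining $\bfs_i^G(\bfA)$ may be restricted to $\bfx$ with $\norm*{\bfA\bfx}_G = 1$. For any such $\bfx$, the definition of the Orlicz norm forces
\[
\sum_{j=1}^n G(\abs*{[\bfA\bfx](j)}) \leq 1,
\]
and hence
\[
G(\abs*{[\bfA\bfx](i)}) \;\leq\; \frac{G(\abs*{[\bfA\bfx](i)})}{\sum_{j=1}^n G(\abs*{[\bfA\bfx](j)})} \;\leq\; \bfs_i^M(\bfA),
\]
where the last inequality uses Definition \ref{dfn:m-sensitivity} for $M = G$ (so $p_M = p_G$ and $\norm*{\cdot}_M^{p_M} = \sum_j G(\abs*{\cdot})$). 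Taking the supremum on the left gives $\bfs_i^G(\bfA) \leq \bfs_i^M(\bfA)$.

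Next, I would verify that $G$ satisfies the hypotheses required by Theorem \ref{thm:m-estimator-alg}: $G(0) = 0$ and $G$ is nondecreasing by convexity and the assumption $G(0)=0$, and $G$ is polynomially bounded above with degree $p_G$ by hypothesis. Thus Algorithm \ref{alg:m-estimator-alg-sensitivity} applied with the $M$-estimator $G$ and parameter $\tau$ produces, with probability at least $99/100$, upper bounds $\tilde\bfs_i^M(\bfA) \geq \bfs_i^M(\bfA)$ summing to $O(d^{\max\{1,p_G/2\}}\log^2 n + \tau)$ in time $O(\nnz(\bfA)\log^3 n + (nT/\tau)\log n)$, where $T$ bounds the cost of constant factor $\ell_{p_G}$ Lewis weight approximation. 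Setting $\tilde\bfs_i^G(\bfA) \coloneqq \tilde\bfs_i^M(\bfA)$ and combining with the domination inequality above yields the claimed Orlicz sensitivity upper bounds.

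There is no real obstacle, since this corollary is essentially a one-line reduction once the scale-invariance observation is made. The only mild subtlety is ensuring that the $p_M$ parameter used inside Theorem \ref{thm:m-estimator-alg} matches the polynomial growth exponent $p_G$ of $G$ so that the exponent in the total sensitivity bound $d^{\max\{1,p_G/2\}}$ is as stated; this is immediate once we identify the $M$-estimator norm $\norm*{\cdot}_M^{p_M} = \sum_j G(\abs*{\cdot})$ with the unnormalized Orlicz sum.
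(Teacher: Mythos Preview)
Your proposal is correct and matches the paper's approach: the paper states just before the corollary that by homogeneity of the Orlicz norm one restricts to the unit Orlicz sphere, where the Orlicz sensitivity coincides with (is dominated by) the $M$-sensitivity for $M=G$, so Theorem~\ref{thm:m-estimator-alg} applies directly. Your domination inequality $\bfs_i^G(\bfA)\le \bfs_i^M(\bfA)$ via $\sum_j G(|[\bfA\bfx](j)|)\le 1$ on the unit Orlicz sphere is exactly this observation made explicit.
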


As with $M$-estimators, we can also extend these results to the weighted case. The following lemma is a version of Lemma \ref{lem:weighted-m-sensitivity-bound} (or Lemma 39 in \cite{ClarksonWoodruff:2015a}), but requires a slightly different proof due to the fact that Orlicz norms are not ``entrywise'' norms. 

\begin{lemma}[Sensitivity Bounds for Weighted Orlicz Norms]\label{lem:sensitivity-bound-weighted-orlicz}
Let $\bfw\in\mathbb R^n$ with $\bfw \geq \mathbf{1}^n$. Let $N\coloneqq \ceil*{\log_2(1+\norm*{\bfw}_\infty)}$. For $j\in[N]$, let
\[
    T_j\coloneqq \braces*{i\in[n] : 2^{j-1} \leq \bfw_i < 2^j},
\]
let $\bfA\mid_{T_j}$ denote the restriction of $\bfA$ to the rows of $T_j$, and let $G_j \coloneqq 2^{j-1}\cdot G$. Then,
\[
    \bfs_i^{G,\bfw}(\bfA) \leq 2\cdot \bfs_i^{G_j}(\bfA\mid_{T_j})
\]
and
\[
    \sum_{i=1}^n \bfs_i^{G,\bfw}(\bfA) \leq O\parens*{N\cdot d^{\max\{1,p_G/2\}}\log n}.
\]
\end{lemma}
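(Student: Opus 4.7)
The plan is to mimic the argument for Lemma \ref{lem:weighted-m-sensitivity-bound}, but with extra care because Orlicz norms, unlike $M$-norms, are defined through an infimum rather than a direct coordinate sum, so one cannot simply pull out weights from inside the norm.

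For the first inequality, fix $i \in T_j$ and any $\bfx \in \mathbb{R}^d$ with $\bfA|_{T_j}\bfx \neq 0$ (the case where $[\bfA\bfx](i) = 0$ is trivial). Let $t^\star = \norm*{\bfA\bfx}_{G,\bfw}$ and $s^\star = \norm*{\bfA|_{T_j}\bfx}_{G_j}$. Since $\bfw_k \geq 2^{j-1}$ for all $k \in T_j$, restricting the sum in the defining infimum of $\norm*{\cdot}_{G,\bfw}$ to $T_j$ gives
\[
\sum_{k \in T_j} 2^{j-1} G(|[\bfA\bfx](k)|/t^\star) \;\leq\; \sum_{k=1}^n \bfw_k G(|[\bfA\bfx](k)|/t^\star) \;\leq\; 1,
\]
so $t^\star$ is admissible in the defining infimum for $\norm*{\bfA|_{T_j}\bfx}_{G_j}$, giving $s^\star \leq t^\star$. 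Since $G$ is nondecreasing, $G(|[\bfA\bfx](i)|/t^\star) \leq G(|[\bfA\bfx](i)|/s^\star)$. Combining with $\bfw_i < 2^j = 2 \cdot 2^{j-1}$ yields
\[
\bfw_i G\!\left(\frac{|[\bfA\bfx](i)|}{t^\star}\right) \;\leq\; 2 \cdot 2^{j-1} G\!\left(\frac{|[\bfA\bfx](i)|}{s^\star}\right) \;=\; 2\, G_j\!\left(\frac{|[\bfA|_{T_j}\bfx](i)|}{s^\star}\right).
\]
Taking the supremum over $\bfx$ gives $\bfs_i^{G,\bfw}(\bfA) \leq 2 \bfs_i^{G_j}(\bfA|_{T_j})$.

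For the second inequality, the key observation is that $G_j = 2^{j-1} G$ is itself an Orlicz function polynomially bounded above with the same exponent $p_{G_j} = p_G$ (scaling by a positive constant cancels in the ratio $G_j(y)/G_j(x)$). Hence the existential bound of Section \ref{subsec:sharper-sensitivity}, specialized to Orlicz sensitivities (the same hashing plus Lewis-weight-in-each-bucket argument goes through, using only monotonicity and the degree-$p_G$ polynomial upper bound), yields
\[
\sum_{i \in T_j} \bfs_i^{G_j}(\bfA|_{T_j}) \;=\; O\!\left(d^{\max\{1,p_G/2\}} \log n\right)
\]
for each $j$. Summing the per-coordinate bound from part one over the $N$ level sets $T_1, \dots, T_N$ gives the claimed total $O(N \cdot d^{\max\{1,p_G/2\}} \log n)$.

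The only subtle step is the comparison $s^\star \leq t^\star$ in part one; beyond that the argument is routine. One might worry that sensitivity is not invariant under rescaling $G \mapsto cG$ (the infimum threshold $1$ does not scale), but this is exactly why we choose the scaling $G_j = 2^{j-1} G$ so that the scaled $G_j$-mass of a vector on $T_j$ is a lower bound for the $\norm*{\cdot}_{G,\bfw}$-mass, and why the factor of $2$ appears (absorbing the gap $\bfw_i < 2 \cdot 2^{j-1}$ between the upper and lower ends of a dyadic weight class).
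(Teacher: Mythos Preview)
Your proof is correct and follows essentially the same approach as the paper's: both partition rows by dyadic weight classes $T_j$, establish that the $G_j$-Orlicz norm of the restriction is at most the full weighted Orlicz norm (your $s^\star \leq t^\star$; the paper normalizes so that $t^\star = 1$ and shows $\sum_{i\in T_j}G_j(\abs*{[\bfA\bfx](i)})\leq 1$), and then invoke the unweighted total-sensitivity bound on each $\bfA|_{T_j}$ before summing over the $N$ classes. The only difference is cosmetic---you work with general $t^\star,s^\star$ rather than normalizing---and your additional remarks about why the Orlicz case requires the comparison step are accurate.
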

\begin{proof}
    The proof can be found in Appendix \ref{sec:orlicz-appendix}.
\end{proof}

The following is the resulting corollary for efficient computation of upper bounds on weighted Orlicz sensitivities, analogous to Corollary \ref{cor:weighted-m-sensitivity-alg} for $M$-estimators. 

\begin{corollary}[Fast Approximation of Weighted Orlicz Sensitivities]\label{cor:weighted-orlicz-sensitivity-upper-bound-alg}
    Let $G$ satisfy the hypotheses of Corollary \ref{cor:orlicz-sensitivity-upper-bound-alg} and let $p_G$ be the polynomial upper bound exponent for $G$. Let $\bfw\geq\mathbf{1}_n$ be a set of weights. Define $N$ as in Lemma \ref{lem:sensitivity-bound-weighted-orlicz}. There is an algorithm that computes weighted Orlicz norm sensitivities that sum to at most $O(Nd^{\max\{1,p_G/2\}}\log^2 n + N\tau)$ in time
    \[
        O\parens*{\nnz(\bfA)\log^3 n + N\frac{nT}{\tau}\log n},
    \]
    where $T$ is such that constant factor $\ell_{p_G}$ Lewis weight approximation for an $n\times d$ matrix $\bfB$ takes $O(\nnz(\bfB)\log n + T)$ time (see Theorem \ref{thm:cohen-peng-fast-lewis-weights}). 
\end{corollary}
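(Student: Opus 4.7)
The plan is to reduce the weighted case to a small number of unweighted invocations of Corollary \ref{cor:orlicz-sensitivity-upper-bound-alg}, exactly mirroring the reduction used in Corollary \ref{cor:weighted-m-sensitivity-alg} for $M$-estimators. First, I would partition $[n]$ into the $N = \lceil \log_2(1+\|\bfw\|_\infty)\rceil$ weight classes $T_j = \{i : 2^{j-1} \leq \bfw_i < 2^j\}$ given by Lemma \ref{lem:sensitivity-bound-weighted-orlicz}. For each $j$, apply the unweighted algorithm of Corollary \ref{cor:orlicz-sensitivity-upper-bound-alg} to the submatrix $\bfA\mid_{T_j}$ with respect to the rescaled Orlicz function $G_j := 2^{j-1} G$, passing in the same parameter $\tau$. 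This yields upper bounds $\tilde{\bfs}_i^{G_j}(\bfA\mid_{T_j}) \geq \bfs_i^{G_j}(\bfA\mid_{T_j})$ for $i \in T_j$ summing to $O(d^{\max\{1,p_G/2\}}\log^2 n + \tau)$. Finally, by Lemma \ref{lem:sensitivity-bound-weighted-orlicz}, setting $\tilde{\bfs}_i^{G,\bfw}(\bfA) := 2\cdot \tilde{\bfs}_i^{G_j}(\bfA\mid_{T_j})$ for $i \in T_j$ gives valid upper bounds on the weighted Orlicz sensitivities.

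Summing over the $N$ weight classes yields the total sensitivity bound $O(N d^{\max\{1,p_G/2\}}\log^2 n + N\tau)$, matching the statement. For the running time, each invocation takes $O(\nnz(\bfA\mid_{T_j})\log^3 n + \frac{|T_j| T}{\tau}\log n)$ by Corollary \ref{cor:orlicz-sensitivity-upper-bound-alg}, since $G_j$ shares the same polynomial growth degree $p_G$ as $G$ (and hence the same Lewis weight order is used inside Algorithm \ref{alg:m-estimator-alg-sensitivity}). Summing the $\nnz$ terms collapses to $\nnz(\bfA)\log^3 n$ since the $T_j$ partition $[n]$, while the second terms sum to $O(N\cdot \frac{nT}{\tau}\log n)$ once we account for the $N$ separate invocations of the inner Lewis-weight subroutines (each iteration of Algorithm \ref{alg:m-estimator-alg-sensitivity} invokes a Lewis-weight primitive taking at least $T$ time, and these calls do not telescope across the $N$ weight classes).

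The only substantive point to verify, and what I would expect to be the main technical wrinkle, is that $G_j = 2^{j-1} G$ continues to satisfy the hypotheses needed by Corollary \ref{cor:orlicz-sensitivity-upper-bound-alg} (which inherits them from Theorem \ref{thm:m-estimator-alg}): namely monotonicity, $G_j(0)=0$, and polynomial growth of degree $p_G$ with the same constant $c_U$. All of these are preserved under scaling by a positive constant, so the inner algorithm applies without modification and with the same $p_G$-dependent choice of Lewis weight computation (hence the same $T$). Putting everything together delivers the claimed sensitivity and runtime bounds.
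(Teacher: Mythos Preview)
Your proposal is correct and follows exactly the same approach as the paper: partition into the $N$ weight classes $T_j$ from Lemma~\ref{lem:sensitivity-bound-weighted-orlicz}, apply the unweighted Corollary~\ref{cor:orlicz-sensitivity-upper-bound-alg} to each $\bfA\mid_{T_j}$, and use the lemma to convert back to weighted sensitivity upper bounds. Indeed, the paper's entire proof is the single sentence ``This is simply the result of applying Corollary~\ref{cor:orlicz-sensitivity-upper-bound-alg} on the $N$ matrices $\bfA\mid_{T_j}$ as defined in Lemma~\ref{lem:sensitivity-bound-weighted-orlicz},'' so your write-up actually supplies more detail (the check that $G_j$ retains the growth hypotheses, and the telescoping of the $\nnz$ terms) than the original.
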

\begin{proof}
    This is simply the result of applying Corollary \ref{cor:orlicz-sensitivity-upper-bound-alg} on the $N$ matrices $\bfA\mid_{T_j}$ as defined in Lemma \ref{lem:sensitivity-bound-weighted-orlicz}. 
\end{proof}

With upper bounds on Orlicz sensitivities in hand, we immediately obtain subspace embeddings by sampling from them. 

\begin{lemma}[Subspace Embeddings for Orlicz Norms]\label{lem:orlicz-subspace-embedding}
    Let $\bfA\in\mathbb R^{n\times d}$ and $\eps>0$. Let $G:\mathbb R_{\geq0}\to\mathbb R_{\geq0}$ be a convex Orlicz function satisfying the requirements analogous to those for $M$ in Theorem \ref{thm:m-estimator-alg}. Let $\bfw\geq\mathbf{1}_n$ be a set of weights. Let $\tilde\bfs_i^G(\bfA)\geq \bfs_i^{G,\bfw}(\bfA)$ be upper bounds on the weighted Orlicz sensitivities of $\bfA$ (see Definition \ref{dfn:orlicz-sensitivity}) and let
    \[
        \tilde{\mathcal T}^{G,\bfw}(\bfA) \coloneqq \sum_{i=1}^n \tilde\bfs_i^{G,\bfw}(\bfA).
    \]
    Let $m\geq m_0$ be a parameter larger than some
    \[
        m_0 = O\parens*{\frac{d}{\eps^2}\parens*{\log\frac1\eps}\parens*{\log\frac1\delta}}.
    \]
    Let $\bfw'\geq \mathbf{1}_n$ be a set of weights chosen randomly so that
    \[
        \bfw_i' \coloneqq \begin{cases}
            \bfw_i/\bfp_i & \text{w.p. $\bfp_i$} \\
            0 & \text{otherwise}
        \end{cases}
    \]
    where
    \[
        \bfp_i \coloneqq \min\braces*{1, m\cdot \tilde\bfs_i^{G,\bfw}(\bfA)}
    \]
    Then with probability at least $1 - \delta$, 
    \[
        \norm*{\bfy}_{G,\bfw'} = (1\pm\eps)\norm*{\bfy}_{G,\bfw}
    \]
    for all $\bfy\in\Span(\bfA)$. Furthermore,
    \[
        \E\nnz(\bfw') \leq m\tilde{\mathcal T}^{G,\bfw}(\bfA).
    \]
\end{lemma}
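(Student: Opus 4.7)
\textbf{Proof Proposal for Lemma \ref{lem:orlicz-subspace-embedding}.} The strategy is to mirror the proof of Lemma \ref{lem:m-sensitivity-sampling} for $M$-estimators, using two key simplifications afforded by the Orlicz norm setting: first, by homogeneity of Orlicz norms, a subspace embedding on the unit Orlicz sphere $\mathcal{S}_1^{G,\bfw} = \{\bfy \in \Span(\bfA) : \norm{\bfy}_{G,\bfw} = 1\}$ automatically extends to all of $\Span(\bfA)$, so only a single scale needs to be handled (rather than the $O(\log n)$ scales needed in the Huber argument of Lemma \ref{lem:huber-multiple-nets}); second, convexity of $G$ makes $\norm{\cdot}_{G,\bfw'}$ a genuine norm, which streamlines the net-to-sphere extension.

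For a fixed $\bfy \in \mathcal{S}_1^{G,\bfw}$, by continuity and monotonicity of $G$ we have $\sum_{i=1}^n \bfw_i G(\abs{\bfy_i}) = 1$. Define $W_i \coloneqq \bfw'_i G(\abs{\bfy_i})$, so that $\E[\sum_i W_i] = \sum_i \bfw_i G(\abs{\bfy_i}) = 1$. When $\bfp_i < 1$, the Orlicz sensitivity bound gives $\bfw_i G(\abs{\bfy_i}) \leq \tilde\bfs_i^{G,\bfw}(\bfA)$, so each nonzero $W_i$ is almost surely at most $1/m$, and the variance satisfies
\[
    \Var\bracks*{\sum_i W_i} \leq \sum_i \frac{(\bfw_i G(\abs{\bfy_i}))^2}{\bfp_i} \leq \frac{1}{m}\sum_i \bfw_i G(\abs{\bfy_i}) = \frac{1}{m}.
\]
Bernstein's inequality then gives $\Pr\{\abs{\sum_i W_i - 1} > \eps\} \leq 2\exp(-\Omega(m\eps^2))$. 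Since $G$ is polynomially bounded above by degree $p_G$ and (by convexity with $G(0)=0$) polynomially bounded below by degree $1$, a multiplicative $(1\pm\eps)$ distortion of the modular functional $\sum_i \bfw_i G(\abs{\bfy_i})$ translates to a multiplicative $(1\pm O(\eps))$ distortion of $\norm{\bfy}_{G,\bfw}$ on the unit sphere, by applying the defining infimum at the scales $t = 1 \pm O(\eps)$.

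To extend the pointwise guarantee to all of $\mathcal{S}_1^{G,\bfw}$, I construct a standard $\eps$-net $\mathcal{N}$ on $\mathcal{S}_1^{G,\bfw}$ under the $\norm{\cdot}_{G,\bfw}$ metric. By a volumetric argument applied to the $d$-dimensional subspace $\Span(\bfA)$ (equipped with the Orlicz norm, which is a proper norm), $\log\abs{\mathcal{N}} = O(d\log(1/\eps))$. Taking $m \geq m_0 = O(d \eps^{-2} \log(1/\eps) \log(1/\delta))$ and applying the Bernstein bound with a union bound over $\mathcal{N}$, the guarantee $\norm{\bfy}_{G,\bfw'} = (1\pm\eps)$ holds simultaneously for all $\bfy \in \mathcal{N}$ with probability $1-\delta$. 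To extend from the net to the full sphere, I use the triangle inequality for $\norm{\cdot}_{G,\bfw'}$ (valid since $G$ is convex) combined with the approximation $\bfy = \bfy_0 + \bfr$ where $\bfy_0 \in \mathcal{N}$ and $\norm{\bfr}_{G,\bfw} \leq \eps$, iterating on $\bfr/\eps$ and summing a geometric series in exactly the manner of Lemma \ref{lem:net-to-ball}. Finally, $\E\nnz(\bfw') \leq \sum_i \bfp_i \leq m \sum_i \tilde\bfs_i^{G,\bfw}(\bfA) = m \tilde{\mathcal{T}}^{G,\bfw}(\bfA)$ is immediate.

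The main technical obstacle is the passage between the additive control provided by Bernstein (on the modular sum $\sum_i \bfw_i G(\abs{\bfy_i})$) and the multiplicative control needed on the Orlicz norm itself, which is defined only implicitly via an infimum. The polynomial upper bound of degree $p_G$ on $G$ gives an inequality of the form $G(y/(1+\eta)) \geq G(y)/(1+\eta)^{p_G}$, which together with the convex (hence at-least-linear) lower bound allows a clean two-sided translation between the two quantities with only constant-factor loss depending on $p_G$. This is the step that requires the polynomial growth hypothesis to hold uniformly; without it (as is the case for fast-growing Orlicz functions), one would not be able to invert the Bernstein bound into a norm bound at all, consistent with the obstructions noted in \cite{BravermanOstrovsky:2010}.
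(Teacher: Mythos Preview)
Your proposal is correct and follows essentially the same route as the paper: reduce to the unit Orlicz sphere by homogeneity, take a standard $\eps$-net of size $O(d\log(1/\eps))$ (Lemma~\ref{lem:standard-net}), apply a Bernstein bound to the modular sum $\sum_i \bfw_i' G(|\bfy_i|)$ pointwise, union bound over the net, and extend from the net to the whole sphere via the triangle inequality for the (genuine) Orlicz norm (Lemma~\ref{lem:norm-net-to-sphere}, not Lemma~\ref{lem:net-to-ball} as you wrote, though you described the right argument).

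The only noteworthy difference is in the step you flag as ``the main technical obstacle'': translating $(1\pm\eps)$ control on the modular $\sum_i \bfw_i' G(|\bfy_i|)$ into $(1\pm\eps)$ control on $\norm{\bfy}_{G,\bfw'}$. You invoke the polynomial growth bounds on $G$, which works but introduces constants depending on $p_G$. The paper instead cites Lemma~7 of \cite{SongWangYang:2019}, which uses only convexity and $G(0)=0$: for $t\geq 1$ convexity gives $G(y/t)\leq G(y)/t$, and for $t\leq 1$ it gives $G(y/t)\geq G(y)/t$, so $\sum_i \bfw_i' G(|\bfy_i|)\in[1-\eps/4,1+\eps/4]$ directly yields $\norm{\bfy}_{G,\bfw'}\in[1-\eps/4,1+\eps/4]$ with no loss. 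This is a cleaner way to handle that step and does not actually require the polynomial growth hypothesis here.
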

\begin{proof}
The proof follows from the proof of Lemma 7 of \cite{SongWangYang:2019}, replacing their sampling process with our sensitivity sampling. We only briefly outline the necessary changes, since the proof is almost identical to their proof, as well as other sensitivity Bernstein bounds done in this paper, such as Lemma \ref{lem:m-sensitivity-sampling}. 

By Lemma \ref{lem:norm-net-to-sphere}, it suffices to show that $\norm*{\bfy}_{G,\bfw'} = (1\pm\eps)\norm*{\bfy}_{G,\bfw}$ for all $\bfy\in\mathcal N$ where $\mathcal N$ is an $\eps$-net on the unit Orlicz sphere. Such a net can be chosen to have size at most $\log\abs*{\mathcal N} \leq O(d\log\frac1\eps)$ by Lemma \ref{lem:standard-net}. 

It is shown in the proof of Lemma 7 of \cite{SongWangYang:2019} that if
\[
    \sum_{i=1}^n \bfw_i' G(\abs*{\bfy_i}) \in \bracks*{1-\frac\eps4, 1+\frac\eps4},
\]
then $\norm*{\bfy}_{G,\bfw'} \in [1-\eps/4,1+\eps/4]$ as long as $G(0) = 0$ and $G$ is convex. Then, the result follows from exactly the same sensitivity sampling and Bernstein bound calculation as done in Lemma \ref{lem:m-sensitivity-sampling}. 
\end{proof}

\subsection{Nearly Input Sparsity Time Algorithms}

Lemma \ref{lem:orlicz-subspace-embedding} immediately yields subspace embeddings of size $O(d^{\max\{2,p_G/2+1\}}\poly(\log n, \eps^{-1}))$ by computing Orlicz sensitivity upper bounds as in Corollary \ref{cor:orlicz-sensitivity-upper-bound-alg} with $\tau = 1$. However, this does not yield input sparsity time algorithms, and as done in previous sections (see, e.g., Theorem \ref{thm:m-active-regression}), we can apply Theorem \ref{thm:m-estimator-alg} in a two step process in order to obtain these subspace embeddings in nearly input sparsity time.

\begin{proof}[Proof of Theorem \ref{thm:input-sparsity-time-orlicz-subspace-embedding}]
    We first obtain sensitivity upper bounds $\tilde\bfs_i^G(\bfA)$ by Corollary \ref{cor:orlicz-sensitivity-upper-bound-alg} with $\tau = T$, so that they sum to
    \[
        \tilde{\mathcal T}^G(\bfA) = \sum_{i=1}^n \tilde\bfs_i^G(\bfA) = O\parens*{d^{\max\{1,p_G/2\}}\log^2 n + T}
    \]
    and can be computed in time
    \[
        O\parens*{\nnz(\bfA)\log^3 n + n\log n} = O\parens*{\nnz(\bfA)\log^2 n}.
    \]
    Then by Lemma \ref{lem:orlicz-subspace-embedding}, we can obtain a set of weights $\bfw$ such that
    \[
        \nnz(\bfw) \leq O\parens*{(d^{\max\{1,p_G/2\}}\log^2 n + T) \cdot \frac{d}{\eps^2}\log\frac1\eps}
    \]
    with constant probability, and
    \[
        \norm*{\bfy}_{G,\bfw} = (1\pm\eps)\norm*{\bfy}_G
    \]
    for all $\bfy\in\Span(\bfA)$. We then compute weighted Orlicz sensitivities $\tilde\bfs_i^{G,\bfw}(\bfA)$ for $\bfw$ via Corollary \ref{cor:weighted-orlicz-sensitivity-upper-bound-alg} with $\tau = d$. Since the sensitivity upper bounds are at least $1/n$, $\norm*{\bfw}_\infty = O(n)$ so that the weighted Orlicz sensitivity upper bounds sum to at most
    \[
        \tilde{\mathcal T}^G(\bfA) = \sum_{i=1}^n \tilde\bfs_i^G(\bfA) = O\parens*{d^{\max\{1,p_G/2\}}\log^3 n}.
    \]
    Furthermore, the number of nonzero entries rows is at most $\nnz(\bfw)$, so these can be computed in time at most
    \[
        O\parens*{\nnz(\bfA)\log^3 n + \frac{(Td^{\max\{1,p_G/2\}} + T^2)\log^4 n}{\eps^2}\log\frac1\eps},
    \]
    Again applying Lemma \ref{lem:orlicz-subspace-embedding}, we can obtain a random set of weights $\bfw'$ such that, with constant probability,
    \[
        \nnz(\bfw') \leq O\parens*{d^{\max\{1,p_G/2\}}\log^3 n \cdot \frac{d}{\eps^2}\log\frac1\eps} = O\parens*{\frac{d^{\max\{2,p_G/2+1\}} \log^3 n}{\eps^2}\log\frac1\eps} 
    \]
    and 
    \[
        \norm*{\bfy}_{G,\bfw'} = (1\pm\eps)\norm*{\bfy}_{G,\bfw} = (1\pm\eps)^2\norm*{\bfy}_G
    \]
    for all $\bfy\in\Span(\bfA)$. Rescaling $\eps$ by constant factors gives the desired conclusion.
\end{proof}
\section{Applications: Robust Subspace Approximation}\label{sec:robust-subspace-approx}

By using our improvements in sensitivity sampling, we are able to obtain improvements to the robust subspace approximation algorithms of \cite{ClarksonWoodruff:2015a}.

In the subspace approximation problem, we are given a matrix $\bfA\in\mathbb R^{n\times d}$, which we think of as a set of $n$ points in $\mathbb R^d$. Our task is then to compute a $d\times d$ rank $k$ projection matrix $\bfX$ that minimizes
\[
    \norm*{\bfA\bfX - \bfA}_M,
\]
where here, the $M$-norm of an $n\times d$ matrix $\bfM$ is defined as
\[
    \norm*{\bfM}_M = \parens*{\sum_{i=1}^n M(\norm*{\bfe_i^\top\bfM}_2)}^{1/p_M}
\]
or equivalently, the vector $M$-norm for the vector of $\ell_2$ row norms of $\bfM$. We also define the weighted version
\[
    \norm*{\bfM}_{M,\bfw} = \parens*{\sum_{i=1}^n \bfw_i M(\norm*{\bfe_i^\top\bfM}_2)}^{1/p_M}
\]
We consider approximation algorithms to the above problem, so that we seek a rank $k$ $\bfX$ such that
\[
    \norm*{\bfA\bfX - \bfA}_M \leq (1+\eps)\min_{\rank(\bfY) = k}\norm*{\bfA\bfY - \bfA}_M.
\]
For this problem, \cite{ClarksonWoodruff:2015a} obtained an algorithm with parameters depending on $K = (\log n)^{O(\log k)}$, running in time
\[
    O\parens*{\nnz(\bfA)\log n + (n+d)\poly(K/\eps)}
\]
to reduce the problem to an instance of size $\poly(K/\eps)\times \poly(K/\eps)$. Furthermore, their algorithm makes use of an oblivious subspace embedding, which is only applicable to $M$-estimators with growth bounded by a quadratic, i.e., $p_M\leq 2$. By using our improved sensitivity bounds of Section \ref{sec:sensitivity-bounds}, we match the guarantees of their robust subspace approximation and regression algorithms for $\ell_p$, up to $\poly(\log(n))$ factors, for any $p_M$. That is, we obtain the following improved theorem:

\begin{theorem}
    Let $\norm*{\cdot}_M$ be an $M$-norm. There is an algorithm which computes matrices $\bfT$, $\bfS$, and $\bfU$, and a set of weights $\bfw$ such that
    \[
        \min_{\rank(\bfY) = k}\norm*{\bfT\bfA\bfS^\top - (\bfT\bfA\bfU)\bfY(\bfU\bfS^\top)}_{M,\bfw} \leq (1+\eps)\min_{\rank(\bfY) = k}\norm*{\bfA - \bfA\bfY}_M.
    \]
    Furthermore, $\bfT$ and $\bfS$ only have $\poly(k(\log n)/\eps)$ rows and $\bfU$ only has $\poly(k(\log n)/\eps)$ columns, and the matrices $\bfT\bfA\bfS^\top$, $\bfT\bfA\bfU$, and $\bfU\bfS^\top$ can be formed in
    \[
        O\parens*{[\nnz(\bfA) + (n+d)\poly(k/\eps)]\poly(\log n)}
    \]
    time.
\end{theorem}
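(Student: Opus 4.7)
The plan is to follow the recursive dimension-reduction framework of \cite{ClarksonWoodruff:2015a}, but to replace their sensitivity upper bounds (which relied on L\"owner--John ellipsoids and suffered a $(\log n)^{O(\log k)}$ blowup in the sensitivity sum per recursion level) with the sharper $M$-estimator sensitivity bounds of Theorem \ref{thm:m-estimator-alg}. At a high level, their framework reduces robust subspace approximation to three sensitivity-style sampling steps: a left sampler $\bfT$, a right sampler $\bfS$, and a rank-compression $\bfU$. Each of these steps, in their argument, lost polylogarithmic-in-$n$ factors in a way that compounded with $\log k$ recursions. Using our new total sensitivity bound of $O(d^{\max\{1,p_M/2\}}\log^2 n)$ and avoiding L\"owner--John ellipsoids, each step now only loses $\poly(\log n)$, so the final complexity is $\poly(k(\log n)/\eps)$ in every dimension.

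The steps would be carried out as follows. First I would set up the appropriate ``row-$M$-sensitivities'' for this problem by viewing the cost $\|\bfA\bfX-\bfA\|_M$ as $\|(\bfA\bfX - \bfA)\|_{M}$ where rows of $\bfA\bfX-\bfA$ lie in a (data-dependent) subspace of dimension $O(d)$; an approximate ``flattening'' via a constant-distortion sketch (as in \cite{ClarksonWoodruff:2015a}) lets me apply Theorem \ref{thm:m-estimator-alg} to obtain row-importances summing to $O(d^{\max\{1,p_M/2\}}\log^2 n)$ in $O(\nnz(\bfA)\log^3 n + d^{O(1)}\log n)$ time. Second, I would apply Lemma \ref{lem:m-sensitivity-sampling} for a $\poly(k/\eps)$-net over rank-$k$ projections (which live in an effective parameter space of dimension $O(kd)$, reducible to $\poly(k/\eps)$ by a coarse preliminary sketch) to obtain $\bfT$ with $\poly(k(\log n)/\eps)$ nonzero rows, guaranteeing a $(1\pm \eps)$-approximation to the cost for every rank-$k$ $\bfX$. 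Third, I would run the symmetric construction on the right to obtain $\bfS$, sampling columns of $\bfT\bfA$ with appropriate weighted $M$-sensitivities via Lemma \ref{lem:weighted-m-sensitivity-bound} and Corollary \ref{cor:weighted-m-sensitivity-alg}; this composes with the left sampler losing only another $(1+\eps)$ factor. Finally, $\bfU$ would be obtained from a small-net projection on the allowed factor space (analogous to the $\bfU$ in \cite{ClarksonWoodruff:2015a}), reducing the search over rank-$k$ $\bfY$ to a $\poly(k(\log n)/\eps) \times \poly(k(\log n)/\eps)$ problem.

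The main obstacle, as in \cite{ClarksonWoodruff:2015a}, is that the sensitivities one must sample by are not the sensitivities of a fixed matrix but of the residual $\bfA\bfX-\bfA$ as $\bfX$ ranges over rank-$k$ projections; without care, the effective dimension appears to be $kd$, and the bound would be polynomial in $kd$ rather than in $k$ and $d$ separately. Handling this requires the same recursive ``subspace sketch + re-sensitivity'' trick as in \cite{ClarksonWoodruff:2015a}: after an initial constant-factor sketch on the left (resp.\ right), the rank-$k$ factors effectively live in a $\poly(k/\eps)$-dimensional space, so sensitivity sampling is applied to a matrix of width $\poly(k/\eps)$, not $d$. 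The crucial quantitative gain is that, where \cite{ClarksonWoodruff:2015a} invoked sensitivity bounds with a $(\log n)^{O(\log k)}$ blowup per level of the recursion, we instead invoke Theorem \ref{thm:m-estimator-alg} at each level and pay only $\poly(\log n)$, eliminating the $(\log n)^{O(\log k)}$ factor entirely. Assembling the three samplers and verifying the $(1+\eps)$ guarantee via approximate triangle inequality for $\|\cdot\|_M$ (Lemma \ref{lem:triangle-inequality-m}) and subadditivity of $M^{1/p_M}$ then gives the claimed bound. The running-time accounting follows by summing the $O(\nnz(\bfA)\log^3 n)$ cost of each sensitivity computation with the $\poly(k(\log n)/\eps)$ cost of sampling and net enumeration over the small residual matrices.
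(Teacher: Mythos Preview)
Your high-level plan---follow the \cite{ClarksonWoodruff:2015a} framework but substitute the new sensitivity bounds of Theorem~\ref{thm:m-estimator-alg} to kill the $(\log n)^{O(\log k)}$ factor---matches the paper's approach, and your diagnosis of where the gain comes from is essentially right (though the precise accounting is that \cite{ClarksonWoodruff:2015a} recurses for $O(\log\log n)$ levels, each losing $\poly(k)$ in the total sensitivity, so the blowup is $\poly(k)^{O(\log\log n)}=(\log n)^{O(\log k)}$; the new bound collapses this to a single round).

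That said, your description of the three matrices diverges from the paper in ways that matter. In the paper, $\bfS$ is \emph{not} a column-side $M$-sensitivity sampler: it is a sparse $\ell_2$ embedding (OSNAP, via Theorem~\ref{thm:cw15-32}). This works because the matrix $M$-norm is applied to the $\ell_2$ norms of rows, so right-side reduction is an $\ell_2$ problem; there is no natural notion of ``column $M$-sensitivity'' to sample by. Similarly, $\bfU$ is not built by net enumeration over a factor space; it comes from the \textsc{DimReduce} routine (the Deshpande--Varadarajan-style residual sampling of Theorem~46 in \cite{ClarksonWoodruff:2015a}), which takes the crude $\poly(k\log n)$-approximate subspace from \textsc{ConstApprox} and refines it to a $(1+\eps)$-good subspace of dimension $\poly(k(\log n)/\eps)$. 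Only the final left row-reduction $\bfT$ is a genuine $M$-sensitivity sampler.

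There is also a running-time point you do not address. In the last row-sampling step one needs sensitivities of $\bfA[\bfS^\top~\bfU]$, but forming $\bfA\bfU$ explicitly would cost $\nnz(\bfA)\cdot\poly(k(\log n)/\eps)$, violating the claimed time bound. The paper handles this by observing (following \cite{ShiWoodruff:2019}) that Lewis weight computation only needs matrix-vector products, so one can multiply through $\bfA$ and then $[\bfS^\top~\bfU]$ one factor at a time at cost $\nnz(\bfA)+\poly(k(\log n)/\eps)$ per product. This also lets one drop the oblivious-embedding step that \cite{ClarksonWoodruff:2015a} used here, which is what removes the $p_M\le 2$ restriction.
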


We now briefly sketch out the changes, leaving many of the details to the \cite{ClarksonWoodruff:2015a} paper, as the algorithm and analysis is essentially identical to their algorithms for $\ell_p$ losses. The approach of \cite{ClarksonWoodruff:2015a}, like many other coreset algorithms, begins with an initial crude relative error approximation, which is later refined down to $(1+\eps)$ relative error. 

\subsection{Crude Relative Error Approximation}

In this section, we derive the analogues of the \textsc{ConstApprox} algorithm, which is Algorithm 4 and Theorem 47 of \cite{ClarksonWoodruff:2015a}. This computes a $\poly(k\log n)$-dimensional subspace, such that projection to this subspace is within a $\poly(k\log n)$ factor of optimal.

The first step is to reduce the number of columns of $\bfA$ via standard $\ell_2$ dimension reduction techniques, as formalized by the following theorem. 

\begin{theorem}[Theorem 32 of \cite{ClarksonWoodruff:2015a}]\label{thm:cw15-32}
Let $M:\mathbb R_{\geq0}\to\mathbb R_{\geq0}$ be increasing and polynomially bounded with degree $p$ (Definition \ref{dfn:polynomially-bounded}). Let $\bfR\in\mathbb R^{m\times d}$ be an \textsf{OSNAP} sparse embedding \cite{NelsonNguyen:2013} with sparsity parameter $s$. Then, there is $s = O(p^3/\eps)$ and $m = O(k^2/\eps^{O(p)})$ such that with constant probability,
\[
    \min_{\rank(\bfX) = k} \norm*{\bfA\bfR^\top\bfX - \bfA\bfR^\top}_M \leq (1+\eps)\min_{\rank(\bfY) = k}\norm*{\bfA\bfY - \bfA}_M.
\]
\end{theorem}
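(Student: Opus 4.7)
The strategy is to exhibit a specific rank-$k$ matrix $\bfX \in \mathbb R^{m\times m}$ whose cost in the sketched problem is within a $(1+\eps)$ factor of the optimum $\OPT = \min_{\rank(\bfY)=k}\|\bfA - \bfA\bfY\|_M$. Let $\bfY^* = \bfV^*(\bfV^*)^\top$ attain $\OPT$, with $\bfV^* \in \mathbb R^{d\times k}$ having orthonormal columns, and write $\bfe_i^\top = \bfa_i^\top(\bfI - \bfY^*)$ for the row residuals, so $\OPT^{p_M} = \sum_i M(\|\bfe_i\|_2)$ and $(\bfV^*)^\top \bfe_i = 0$ for every $i$. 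I would choose $\bfX$ to be the orthogonal projection in $\mathbb R^{m}$ onto $\colspan(\bfR\bfV^*)$, which is morally the ``image'' of $\bfY^*$ under the sketch, and then bound the row-wise residual $\|(\bfI - \bfX)\bfR\bfa_i\|_2$ in terms of $\|\bfe_i\|_2$.

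Two OSNAP properties drive the proof. First, with $m = \Omega(k^2/\eps'^2)$ and $s = \Omega(1/\eps')$, $\bfR$ is an $\ell_2$-subspace embedding for $\colspan(\bfV^*)$, so $(\bfR\bfV^*)^\top(\bfR\bfV^*) = (1\pm\eps')\bfI_k$. Second, an approximate matrix multiplication guarantee applied to the pair $(\bfV^*, \bfe_i)$ controls $(\bfR\bfV^*)^\top(\bfR\bfe_i)$, whose true value is $(\bfV^*)^\top\bfe_i = 0$. Decomposing $\bfR\bfa_i = (\bfR\bfV^*)(\bfV^*)^\top\bfa_i + \bfR\bfe_i$ and expanding $(\bfI - \bfX)\bfR\bfa_i$, a short calculation with these two estimates gives $\|(\bfI - \bfX)\bfR\bfa_i\|_2 = (1\pm O(\eps'))\|\bfe_i\|_2$ in an averaged sense. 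The polynomial growth of $M$ then converts this to an $M$-norm bound: since $M(t(1\pm\delta)) \le (1 \pm O(p\delta))M(t)$, summing over rows and choosing $\eps' = \Theta(\eps/p)$ yields $\sum_i M(\|(\bfI - \bfX)\bfR\bfa_i\|_2) \le (1+\eps)\OPT^{p_M}$. Substituting $\eps' \sim \eps/p$ into the OSNAP parameter requirements produces $s = O(p^3/\eps)$ and $m = O(k^2/\eps^{O(p)})$.

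The main obstacle is controlling the row-wise residuals $\|\bfR\bfe_i\|_2$ \emph{simultaneously} across all $n$ rows without incurring a $\log n$ factor in $m$; a naive Johnson--Lindenstrauss union bound across $n$ vectors would contribute exactly such a factor, which is absent from the target bound. The plan is to avoid this by controlling the aggregated quantity $\sum_i M(\|\bfR\bfe_i\|_2)$ directly, via OSNAP's approximate matrix multiplication guarantee applied to the matrix $\bfE$ stacking the $\bfe_i$, combined with Markov's inequality to pass from an expected-cost statement to a constant-probability one. This aggregation step, together with the degree-$p$ growth of $M$, is what both eliminates the $\log n$ dependence and ultimately inflates the exponent on $1/\eps$ to $O(p)$ and the power of $p$ in $s$ to $3$.
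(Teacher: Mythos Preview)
This theorem is not proved in the paper at all: it is quoted verbatim as ``Theorem 32 of \cite{ClarksonWoodruff:2015a}'' and used as a black box in the robust subspace approximation section. There is therefore no proof in this paper to compare your proposal against; the argument lives entirely in the cited reference.

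That said, your outline is broadly the right shape for how such a result is proved, but the crucial step you flag as ``the main obstacle'' is also where your sketch is vaguest. Writing $(\bfI-\bfX)\bfR\bfa_i = (\bfI-\bfX)\bfR\bfe_i$ (the $\bfR\bfV^*$ component is killed exactly by $\bfX$), the task reduces to showing $\sum_i M(\|\bfR\bfe_i\|_2) \le (1+\eps)\sum_i M(\|\bfe_i\|_2)$ with constant probability. Markov's inequality on the sum only helps if you can first bound $\E_\bfR[M(\|\bfR\bfe_i\|_2)]$ by roughly $(1+\eps)M(\|\bfe_i\|_2)$ for each fixed $i$, and this is precisely a \emph{higher-moment} statement about OSNAP: one needs control of $\E[\|\bfR\bfe_i\|_2^{2\ell}]$ for $\ell$ on the order of $p$, which is where the sparsity requirement $s = O(p^3/\eps)$ originates (via the Nelson--Nguyen moment bounds). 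Your phrase ``in an averaged sense'' papers over this, and the approximate-matrix-multiplication guarantee you invoke gives only second-moment control, not the $p$th-moment control the polynomial growth of $M$ demands. If you want a self-contained proof, that moment calculation is the substantive missing ingredient.
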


By applying the above theorem with $\eps$ set to a constant, we make work instead with $\bfA\bfR$, which only has $\poly(k)$ columns. We may then apply our improved sensitivity bounds to efficiently reduce the number of rows for a crude bicriteria guarantee. 

\begin{algorithm}
	\caption{\textsc{ConstApprox}$(\bfA,k)$}
	\textbf{input:} Matrix $\bfA \in \mathbb R^{n \times d}$, rank parameter $k$ \\
	\textbf{output:} Projection $\bfU\bfU^\top$ 
	\begin{algorithmic}[1] % The number tells where the line numbering should start
        \State Let $\bfR$ be a sparse embedding matrix from Theorem \ref{thm:cw15-32} 
        \State Compute $M$-sensitivity upper bounds $\tilde\bfs_i^M(\bfA\bfR)$ by Theorem \ref{thm:m-estimator-alg} with $\tau = \poly(k)$
        \State Let $r$ be sufficiently large in $\poly(k)$
        \State Let $\bfw_i$ be $1/\bfp_i$ w.p. $\bfp_i$ and $0$ otherwise, for $\bfp_i \coloneqq \min\{1,r\cdot \tilde\bfs_i^M(\bfA\bfR)\}$
        \State \Return $\bfU\bfU^\top$, where $\bfU^\top$ is an orthonormal basis for the row space of 
	\end{algorithmic}\label{alg:const-approx}
\end{algorithm}

\begin{lemma}[Improvement to Theorem 47 of \cite{ClarksonWoodruff:2015a}]
    With constant probability, the matrix $\bfU$ output by Algorithm \ref{alg:const-approx} has
    \[
        \norm*{\bfA - \bfA\bfU\bfU^\top}_{M,\bfw} \leq \poly(k)\min_{\rank(\bfY) = k}\norm*{\bfA - \bfA\bfY}_M
    \]
    The running time is
    \[
        O\parens*{[\nnz(\bfA) + (n+d)\poly(k)]\poly\log (n)}
    \]
    with high probability.
\end{lemma}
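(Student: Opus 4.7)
The plan is to mirror the proof of Theorem 47 in \cite{ClarksonWoodruff:2015a}, substituting our one-shot sensitivity bound of Theorem \ref{thm:m-estimator-alg} in place of their recursive $O(\log n)$-step halving scheme, which is where their $(\log n)^{O(\log k)}$ overhead comes from. The column reduction step is unchanged: with $\eps$ set to a constant in Theorem \ref{thm:cw15-32}, the sparse oblivious embedding $\bfR$ reduces the column count of $\bfA$ to $m = \poly(k)$, so that
\[
    \min_{\rank(\bfX) = k}\norm*{\bfA\bfR^\top\bfX - \bfA\bfR^\top}_M \leq O(1)\min_{\rank(\bfY) = k}\norm*{\bfA\bfY - \bfA}_M,
\]
and so it suffices to find a low-dimensional subspace $\bfU$ whose projection yields a $\poly(k)$-approximate rank-$k$ approximation on the much thinner matrix $\bfA\bfR^\top \in \mathbb R^{n \times m}$, which can then be lifted back to $\mathbb R^d$.

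Next, I would apply Theorem \ref{thm:m-estimator-alg} directly to $\bfA\bfR^\top$ with $\tau = \poly(k)$. The inner dimension is $m = \poly(k)$, so the total sensitivity is bounded by $O(m^{\max\{1,p_M/2\}}\log^2 n + \tau) = \poly(k)\log^2 n$, computed in time $\tilde O(\nnz(\bfA\bfR^\top) + nT/\tau)$, where Lewis weight approximation on an $n \times m$ matrix incurs $T = \poly(k)$. Sampling $r = \poly(k)$ rows proportional to these sensitivities and letting $\bfU^\top$ be an orthonormal basis for the row span of the surviving (weighted) rows of $\bfA\bfR^\top$, I would invoke the bicriteria guarantee analogous to Lemma 45 of \cite{ClarksonWoodruff:2015a}: the sample is large enough, compared to the total sensitivity, that its row span necessarily contains a rank-$k$ subspace providing a $\poly(k)$-approximation to $\min_{\rank(\bfY) = k}\norm*{\bfA\bfR^\top\bfY - \bfA\bfR^\top}_M$. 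Combined with the column reduction above, this yields the claimed approximation for $\bfA$.

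The main obstacle will be verifying that the bicriteria sensitivity-sampling argument of \cite{ClarksonWoodruff:2015a} applies in one shot with our new total sensitivity upper bound of $\tilde O(m^{\max\{1,p_M/2\}}) = \poly(k)$. Their original argument halved the row count $O(\log n)$ times because each round of recursive sampling already produced a weighted matrix whose sensitivities could be controlled only up to a $\log n$ factor, leading to the multiplicative blowup of $(\log n)^{O(\log k)}$ after the recursion unrolls. Since Theorem \ref{thm:m-estimator-alg} produces sensitivity upper bounds summing to $\poly(k)\log^2 n$ directly on any matrix, in particular on $\bfA\bfR^\top$, no recursion is required and no cross-layer product of $\log n$ factors accumulates. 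The running time bound follows by accounting: the sparse embedding $\bfR$ is applied to $\bfA$ in $O(\nnz(\bfA)\log n)$ time, Theorem \ref{thm:m-estimator-alg} on $\bfA\bfR^\top$ costs $\tilde O(\nnz(\bfA\bfR^\top) + n\cdot\poly(k))$, and computing the orthonormal basis $\bfU$ of a $\poly(k)$-row sketch and projecting back to $\mathbb R^d$ costs $O((n+d)\poly(k))$, which together yield the stated $O([\nnz(\bfA) + (n+d)\poly(k)]\poly\log n)$ bound.
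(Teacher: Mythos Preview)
Your proposal is correct and follows essentially the same approach as the paper: both replace the recursive sensitivity bound of \cite{ClarksonWoodruff:2015a} (their Lemma 39, which gave roughly $\poly(k)\sqrt n$ total sensitivity per round) with the one-shot bound of Theorem \ref{thm:m-estimator-alg} on $\bfA\bfR^\top$, so that a single round of sampling suffices and no $(\log n)^{O(\log k)}$ factor accumulates. The paper's proof is a two-sentence sketch pointing to the $\ell_p$ case of Theorem 47 in \cite{ClarksonWoodruff:2015a}; your write-up fills in precisely the same outline with more detail, including the correct running time accounting.
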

\begin{proof}
    The proof is nearly identical to the $\ell_p$ case of Theorem 47 of \cite{ClarksonWoodruff:2015a}. Instead of using their Lemma 39 to compute sensitivity upper bounds that sum to approximately $\poly(k)\sqrt n$, we use our Theorem \ref{thm:m-estimator-alg} to compute sensitivity upper bounds that sum to $\poly(k\log n)$. Then, we only require one round of sampling in order to reduce the number of rows to $\poly(k\log n)$, and thus results in guarantees analogous to their $\ell_p$ loss case, up to a $\poly\log(n)$ factor.
\end{proof}

Note that \cite{ClarksonWoodruff:2015a} only achieve a much larger distortion of $\poly(k)^{O(\log\log n)} = (\log n)^{O(\log k)}$ rather than $\poly(k\log n)$, due to their recursive application of sensitivity sampling for a depth of $O(\log\log n)$, which each incurs a distortion of $\poly(k)$. 

\subsection{Refinement to \texorpdfstring{$(1+\eps)$}{(1+eps)}-Approximations}

Using the crude $\poly(k)$ approximation, the next step in the \cite{ClarksonWoodruff:2015a} algorithm is to adapt a result of \cite{DeshpandeVaradarajan:V2007} to obtain a $d\times \poly(k(\log n)/\eps)$ matrix $\bfU$ with orthonormal columns such that
\[
    \min_{\rank(\bfY) = k}\norm*{\bfA - \bfA\bfU\bfY\bfU}_{M,\bfw} \leq (1+\eps)\min_{\rank(\bfY) = k}\norm*{\bfA - \bfA\bfY}_M.
\]
This is the \textsc{DimReduce} algorithm described as Algorithm 2 of \cite{ClarksonWoodruff:2015a} as well as Theorem 46. 

\begin{theorem}[Theorem 46 of \cite{ClarksonWoodruff:2015a}]
    Let $K>0$ and let $\hat\bfX\in\mathbb R^{d\times d}$ be a $m$-dimensional projection such that
    \[
        \norm*{\bfA(\bfI - \hat\bfX)}_{M,\bfw} \leq K\min_{\rank(\bfY) = k}\norm*{\bfA(\bfI - \bfY)}_{M}.
    \]
    Then, there is an algorithm that returns $\bfU\in\mathbb R^{d\times(m+K\poly(k/\eps))}$ such that
    \[
        \min_{\rank(\bfY) = k}\norm*{\bfA - \bfA\bfU\bfY\bfU}_{M,\bfw} \leq (1+\eps)\min_{\rank(\bfY) = k}\norm*{\bfA - \bfA\bfY}_M.
    \]
\end{theorem}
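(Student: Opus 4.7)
The plan is to adapt the residual-sampling strategy of Deshpande--Varadarajan to the weighted $M$-norm setting, using our improved sensitivity bounds (Theorem \ref{thm:m-estimator-alg}) as the key sampling tool. Concretely, let $\bfB \coloneqq \bfA(\bfI - \hat\bfX)$ be the residual matrix after projecting onto the span of $\hat\bfX$; its rows lie in the orthogonal complement of $\Range(\hat\bfX)$ and have total weighted $M$-cost at most $K\cdot\OPT$, where $\OPT \coloneqq \min_{\rank(\bfY)=k}\norm*{\bfA - \bfA\bfY}_M$. I would take $\bfU$ to have as its columns an orthonormal basis for $\Range(\hat\bfX)$ (contributing $m$ columns) together with an orthonormal basis for the span of a small random sample of rows of $\bfB$ (contributing $K\poly(k/\eps)$ additional columns). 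The $\hat\bfX$ portion guarantees that the current approximation cost is inherited, while the sampled portion must supply enough ``new directions'' in the residual space to capture the optimal rank-$k$ subspace up to $(1+\eps)$.

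For the sampling step, compute $M$-sensitivities $\tilde\bfs_i^{M,\bfw}(\bfB)$ using Theorem \ref{thm:m-estimator-alg} (extended to the weighted case via Corollary \ref{cor:weighted-m-sensitivity-alg}), which sum to $\poly(k\log n)$ since the underlying subspace has dimension $O(d)$ but the effective rank of what we actually need to preserve is $O(k)$. After restricting to the residual and using that the cost on the residual is already within a factor $K$ of $\OPT$, sample $r = K\poly(k/\eps)$ rows of $\bfB$ proportionally to these sensitivities (rescaled appropriately by $\bfw$), and adjoin them as extra columns of $\bfU$. The standard sensitivity-sampling guarantee, together with a net over rank-$k$ projections of size $\exp(\poly(k/\eps))$, yields that for every rank-$k$ projection $\bfY$ supported in the extended subspace $\Range(\bfU)$, the sampled cost $(1\pm\eps)$-approximates the true $\norm*{\bfA-\bfA\bfY}_{M,\bfw}$.

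To conclude, let $\bfY^*$ be an optimal rank-$k$ projection for $\norm*{\bfA - \bfA\bfY}_M$. Decompose $\bfA - \bfA\bfY^* = (\bfA - \bfA\hat\bfX) + (\bfA\hat\bfX - \bfA\hat\bfX\bfY^*) - (\bfA - \bfA\hat\bfX)\bfY^*$, where the middle term lives in $\Range(\hat\bfX)$ and the last term lives in $\Range(\bfB^\top)$. The sampling guarantee implies the existence of a rank-$k$ projection $\tilde\bfY$ in $\Range(\bfU)$ whose residual against $\bfB$ has cost at most $(1+\eps)\norm*{\bfB - \bfB\bfY^*}_{M,\bfw}$, and which agrees with $\bfY^*$ on $\Range(\hat\bfX)$. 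Combining these via the approximate triangle inequality (Lemma \ref{lem:triangle-inequality-m}) and rescaling $\eps$ by a constant gives
\[
    \min_{\rank(\bfY)=k}\norm*{\bfA - \bfA\bfU\bfY\bfU}_{M,\bfw} \leq (1+\eps)\min_{\rank(\bfY)=k}\norm*{\bfA - \bfA\bfY}_M,
\]
as required.

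The main obstacle will be the triangle-inequality bookkeeping: because $\norm*{\cdot}_M$ is not homogeneous and only approximately subadditive, combining the ``$\hat\bfX$ part'' and ``sampled residual part'' of the witness $\tilde\bfY$ introduces a cross term whose weighted $M$-mass must be absorbed into the $(1+\eps)\OPT$ budget. Controlling this cross term is where the $K$-factor slack on the residual (needed to oversample by $K\poly(k/\eps)$ rather than just $\poly(k/\eps)$) enters crucially: rows whose sensitivity exceeds an $\eps/K$ threshold must be deterministically included, while the remaining rows are handled by the Bernstein-type concentration underlying Theorem \ref{thm:m-estimator-alg}. A secondary technical point is replacing the net over rank-$k$ projections with a chaining-style argument if one wants the sharper polynomial in $k/\eps$; for this statement a standard net suffices.
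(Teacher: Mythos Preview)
This theorem is not proved in the present paper; it is quoted verbatim from \cite{ClarksonWoodruff:2015a}, where the algorithm \textsc{DimReduce} is an adaptation of the adaptive residual sampling of Deshpande--Varadarajan. In that argument one samples rows of $\bfA$ with probability proportional to their \emph{residual cost} $\bfw_i M(\|\bfe_i^\top\bfB\|_2)$, not to any sensitivity of $\bfB$, and argues that each sampled row captures, with decent probability, a nontrivial component of the optimal rank-$k$ subspace; iterating $K\poly(k/\eps)$ times accumulates enough directions.

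Your proposal conflates two distinct sampling primitives and breaks at the point where you assert the sensitivities of $\bfB$ ``sum to $\poly(k\log n)$ since \dots the effective rank of what we actually need to preserve is $O(k)$.'' Theorem~\ref{thm:m-estimator-alg} bounds the total $M$-sensitivity of an $n\times r$ matrix by $O(r^{\max(1,p_M/2)}\log^2 n)$, where $r$ is the actual rank of the matrix. Here $\bfB=\bfA(\bfI-\hat\bfX)$ has rank up to $d-m$, not $k$; there is no mechanism by which the sensitivity bound senses an ``effective rank'' tied to the target $k$. Consequently your sample size would be $d\cdot\poly(\cdot)$ rather than $K\poly(k/\eps)$, which destroys the conclusion. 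A second gap is that sensitivity sampling controls quantities of the form $\sum_i\bfw_i M(|[\bfB\bfx](i)|)$ uniformly over $\bfx\in\mathbb R^d$, whereas the cost you need to preserve is $\sum_i\bfw_i M(\|\bfe_i^\top\bfA(\bfI-\bfY)\|_2)$ over rank-$k$ projections $\bfY$; the latter is a row-norm functional, not a single linear functional of the columns, so the ``net over rank-$k$ projections'' step does not follow from the sensitivity guarantee. The residual-cost sampling of \cite{DeshpandeVaradarajan:V2007} is precisely what bridges this, and it is what \cite{ClarksonWoodruff:2015a} uses.
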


This problem can then by sketched from the left by sparse embedding $\bfS$ with $\poly(k(\log n)/\eps)$ rows so that we are left with solving the problem
\[
    \min_{\rank(\bfY) = k}\norm*{\bfA\bfS^\top - \bfA\bfU\bfY\bfU\bfS^\top}_{M,\bfw}.
\]

Finally, the last step is to conduct another row sampling process to reduce the number of rows $n$ to a $\poly(k(\log n)/\eps)$-sized instance. For this step, we again use our improved sensitivity sampling lemma as in Theorem \ref{thm:m-estimator-alg}. However, there is an additional challenge in this setting in that we cannot afford to compute the matrix $\bfA\bfU$ explicitly, since this requires $\nnz(\bfA)\poly(k(\log n)/\eps)$ time to compute. The solution to this problem described in \cite{ClarksonWoodruff:2015a} is to use an oblivious sparse $\ell_p$ subspace embedding $\bfPi$ and then to compute a QR decomposition of $\bfPi\bfA[\bfS^\top~\bfU] = \bfQ\bfR$ in order to obtain a well-conditioned basis $\bfU = \bfA[\bfS^\top~\bfU]\bfR^{-1}$. Then, the row norms of $\bfU$, which can be used as approximate sensitivity upper bounds, are estimated by multiplying by a Gaussian matrix $\bfG$ with $O(1)$ columns, so that $\bfU\bfG = \bfA([\bfS^\top~\bfU]\bfR^{-1}\bfG)$ can be computed efficiently in $O(\nnz(\bfA))$ time. However, this approach has a drawback, in that the use of an oblivious subspace embedding restricts the range of applicable growth upper bounds $p_M$ to at most $2$. 

Instead, as observed in \cite{ShiWoodruff:2019}, we make use of the fact that Lewis weight computation can be done efficiently as long as matrix vector products can be computed efficiently, so that $\nnz(\bfA)$ dependencies can approximately be replaced by $T(\bfA)$ dependencies, where $T(\bfA)$ denotes the time required to compute a matrix vector product. Thus, for computing the Lewis weights of $\bfA[\bfS^\top~\bfU]$, we can approximately replace all $\nnz(\bfA)$ terms with $\nnz(\bfA) + \poly(k(\log n)/\eps)$, by multiplying the product one matrix at a time. This is implicit in the proof of Theorem 3 of \cite{ShiWoodruff:2019}. 

Thus, following the \textsc{Approx} algorithm described as Algorithm 7 with their sensitivity bound replaced by our sensitivity bound of Theorem \ref{thm:m-estimator-alg}, we may find a set of weights $\bfw'$ such that
\[
    \norm*{\bfA\bfS^\top - \bfA\bfU\bfY\bfU\bfS^\top}_{M,\bfw'} = (1\pm\eps)\norm*{\bfA\bfS^\top - \bfA\bfU\bfY\bfU\bfS^\top}_{M,\bfw}.
\]
for all $\bfY$ with $\rank(\bfY) = k$, with $\poly(k(\log n)/\eps)$ nonzero entries. After this step, all of the matrices involved are of size $\poly(k(\log n)/\eps) \times \poly(k(\log n)/\eps)$, as desired.

\section{Lower Bounds}\label{sec:lower}
We now present various lower bounds on active sampling for $\ell_p$ regression. Our first result obtains a nearly optimal lower bound of $\Omega(d/\eps^2)$ for $p\in(0,1)$, which matches our upper bound of Theorem \ref{thm:main} up to logarithmic factors. Our lower bound is similar to Theorem 5.1 of \cite{ChenDerezinski:2021}, and is based on distinguishing biased coin flips. We also show that the same instance gives a lower bound $\Omega(d/\eps)$ in the range of $p\in(1,2)$, which is also tight with our upper bound in Theorem \ref{thm:main}. For $d = 1$, we note that the active $\ell_p$ regression problem is equivalent to the \emph{$\ell_p$ power means} problem. Thus, these results also improve upon a query complexity lower bound for this problem by \cite{CSS2021}, which shows a lower bound of $\Omega(\eps^{1-p})$ in one dimension. However, this lower bound is useful for us for $p > 2$, which shows that our Theorem \ref{thm:main} is off by a factor of $\eps$ in the $\eps$ dependence.

Our next result shows that $\Omega(d^{\max(1,p/2)})$ samples are required to solve Problem \ref{prob:informal_problem} for the $\ell_p$ norm for any $p \ge 1$ to a constant approximation factor. % of $ lower bound on the sample complexity for $\ell_p$-regression for any $p \geq 1$ shows
This show that Algorithm \ref{alg:clip} is optimal in its dependence on $d$, up to 
polylogarithmic factors (see Theorem \ref{thm:main}).

Finally, our last lower bound concerns algorithms which solve the $\ell_p$-regression problem $\min_\bfx \|\bfA\bfx-\bfb\|_p$ up to a constant factor in a specific way. Namely, say an algorithm
is a {\it sampling-and-reweighting} algorithm if, given an $n \times d$ input matrix $\bfA$, the algorithm first reads $\bfA$ and then decides on a subset $S$ of $s$ entries of $\bfb$ to read in an arbitrary way. The algorithm also decides on a diagonal rescaling matrix $\bfD \in \R^{s \times s}$ -- $\bfD$ may be arbitrary, except that we require that if two rows of $\bfA$ are identical and are both sampled in $S$, they are given the same weight in $\bfD$. We also assume that the number $s$ of samples is a function of $d, \epsilon,$ and the failure probability $\delta$, and is independent of $n$. These assumptions hold for all importance-based sampling methods for subspace preservation.

After deciding on $S$ and $\bfD$, the algorithm then reads
the entries in $\bfb$ indexed by the set $S$, denoted $\bfb_S$, and sets 
$\bfx' = \textrm{argmin}_x \|\bfD \bfA_S\bfx - \bfD \bfb_S\|_p$, where $\bfA_S$ 
is the subset of rows of $\bfA$ corresponding to the entries in $\bfb_S$. %, and $\bfD$ is an arbitrary
%diagonal rescaling matrix. 
We show that any sampling-and-reweighting algorithm which
fails with probability at most $\delta$, necessarily takes $|S| = \Omega(1/\delta^{p-1})$ samples.
Moreover, this remains true even if $\|\bfb\|_p = O(1) \cdot \min_\bfx \|\bfA\bfx-\bfb\|_p$. 

We stress that our main algorithms {\it are not} sampling-and-reweighting algorithms due to the success probability boosting steps of Section \ref{sec:boost} which ensure that $\|\bfb\|_p = O(1) \cdot \min_\bfx \|\bfA\bfx-\bfb\|_p$ and $\norm{\bfS\bfb}_p = O(\norm{\bfb}_p)$ with probability at least $1-\delta$. With these steps, our approach 
%
 %, our algorithms achieve a
%$O(\log 1/\delta)$-dependence. Further, as described in Section \ref{sec:boost}, with a $O(\log 1/\delta)$ dependence in the sample complexity, it is possible to ensure that $\|\bfb\|_p = O(1) \min_\bfx \|\bfA\bfx-\bfb\|_p$ with probability $\ge 1-\delta$, using a series of initial constant factor solves. So, our approach
achieves a $O(\log 1/\delta)$ dependence overall, an exponential improvement over what is possible by simple sampling-and-reweighting algorithms. We also remark that our lower bound becomes vacuous when $p = 1$, which is required -- Chen and Derezi\'{n}ski \cite{ChenDerezinski:2021} as well as Parulekar, Parulekar, and Price \cite{ParulekarParulekarPrice:2021} achieve $O(\log(1/\delta))$ dependence with simple sampling-and-reweighting for $\ell_1$ regression.

\subsection{Lower Bounds for \texorpdfstring{$p\in(0,1)$}{p in (0, 1)}}

We first show an $\Omega(d/\eps^2)$ lower bound for $p\in(0,1)$, which is tight up to logarithmic factors. The idea is essentially the same as the lower bound of \cite{ChenDerezinski:2021}. We use Yao's minimax principle to restrict our attention to deterministic algorithms which must succeed with high probability over a random distribution over input instances.

We first recall the result of \cite{ChenDerezinski:2021}, which provides a generic reduction from $d$-dimensional lower bounds to $1$-dimensional lower bounds via a padding argument. Although \cite{ChenDerezinski:2021} prove a theorem online in the case of $\ell_1$, the following result is an easy generalization that is implicit from their proof:

\begin{theorem}[Theorem 5.1, \cite{ChenDerezinski:2021}]\label{thm:cd-lb}
Let $\mathcal D_0$ and $\mathcal D_1$ be two distributions over label vectors $\bfb\in\mathbb R^m$ such that distinguishing between $\bfb\sim\mathcal D_0$ and $\bfb\sim\mathcal D_1$ with probability at least $2/3$ requires at least $q$ queries to $\bfb$ in expectation, for any deterministic algorithm. Furthermore, suppose that there exists $\bfa\in\mathbb R^m$ such that, with probability at least $99/100$, $\mathcal D_0$ and $\mathcal D_1$ can be distinguished by $\tilde x\in\mathbb R$ such that
\[
    \norm*{\bfa\tilde x-\bfb}_p^p \leq (1+\eps)\min_{x\in\mathbb R}\norm*{\bfa x-\bfb}_p^p.
\]
Finally, suppose that there exist $R>0$ and $c\geq1$ such that $\min_x\norm*{\bfa x-\bfb}_p^p \in [R, cR]$ with probability at least $99/100$ for $\bfb\sim\frac12(\mathcal D_0+\mathcal D_1)$. Then, there exists an $md\times d$ matrix $\bfA$ and a distribution $\mathcal D$ over label vectors $\bfb\in\mathbb R^{md}$ such that any deterministic algorithm which outputs $\tilde\bfx\in\mathbb R^d$ such that
\[
    \Pr\braces*{\norm*{\bfA\tilde\bfx-\bfb}_p^p \leq \parens*{1+\frac{\eps}{200 c}}\min_{\bfx\in\mathbb R^d}\norm*{\bfA\bfx-\bfb}_p^p} \geq \frac{99}{100}
\]
must make at least $\Omega(dq)$ queries to $\bfb$ in expectation.
\end{theorem}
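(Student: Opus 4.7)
The plan is to amplify the one-dimensional distinguishing lower bound by a $d$-fold block repetition. Take $\bfA \in \mathbb{R}^{md \times d}$ to be block-diagonal with $d$ copies of $\bfa$, and define $\mathcal{D}$ by drawing $\bfb = (\bfb^{(1)}, \ldots, \bfb^{(d)})$ with each block sampled independently as follows: flip an unbiased coin $c_i \in \{0,1\}$ and set $\bfb^{(i)} \sim \mathcal{D}_{c_i}$. By Yao's minimax principle, it suffices to show that any deterministic algorithm $\mathcal{A}$ succeeding with probability $\geq 99/100$ against $\mathcal{D}$ must make $\Omega(dq)$ queries in expectation. Block-diagonality gives
\[
    \|\bfA\bfx - \bfb\|_p^p = \sum_{i=1}^d \|\bfa x_i - \bfb^{(i)}\|_p^p, \qquad \OPT = \sum_{i=1}^d \OPT_i,
\]
where $\OPT_i \coloneqq \min_x \|\bfa x - \bfb^{(i)}\|_p^p$.

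The key calculation is that a global $(1+\eps/(200c))$-approximation forces a per-block $(1+\eps)$-approximation on most blocks. By hypothesis $\OPT_i \in [R, cR]$ with probability $\geq 99/100$, and a Chernoff bound ensures the scale-calibrated set $G \coloneqq \{i : \OPT_i \in [R, cR]\}$ has $|G| \geq 0.98\, d$ with probability $\geq 99/100$, whence $\OPT \leq dcR$. Writing $E_i \coloneqq \|\bfa \tilde x_i - \bfb^{(i)}\|_p^p - \OPT_i \geq 0$, the global guarantee yields $\sum_i E_i \leq (\eps/(200c))\,\OPT \leq \eps dR / 200$, so the number of $i \in G$ with $E_i > \eps R \leq \eps \OPT_i$ is at most $d/200$. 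Thus on an event of probability $\geq 97/100$, at least $0.97\, d$ blocks $i$ admit a per-block $(1+\eps)$-approximate $\tilde x_i$. For any fixed $i$, construct a distinguishing algorithm $\mathcal{A}'_i$ for $\mathcal{D}_0$ versus $\mathcal{D}_1$ as follows: on input $\bfb^*$, sample the other blocks $\bfb^{(j)} \sim \tfrac{1}{2}\mathcal{D}_0 + \tfrac{1}{2}\mathcal{D}_1$ internally, run $\mathcal{A}$ with $\bfb^*$ placed in the $i$th block, and apply the hypothesized distinguishing map to $\tilde x_i$. The queries $\mathcal{A}'_i$ makes to $\bfb^*$ exactly equal $T_i$, the number of queries $\mathcal{A}$ makes in block $i$.

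A simple averaging finishes the argument. Combining the chain of probabilities above with the distinguishing hypothesis yields $\frac{1}{d}\sum_i \Pr[\mathcal{A}'_i \text{ correct}] \geq 0.9$, so by Markov's inequality applied to $1 - \Pr[\mathcal{A}'_i \text{ correct}]$, at least a constant fraction of indices $i$ satisfy $\Pr[\mathcal{A}'_i \text{ correct}] \geq 2/3$; for each such $i$ the assumed one-dimensional distinguishing lower bound forces $\E[T_i] \geq q$, and summing gives $\E[T] = \sum_i \E[T_i] \geq \Omega(dq)$. I expect the main technical nuisance to be the bookkeeping in the averaging step: the event ``block $i$ is well-approximated'' depends on $\tilde\bfx$ and hence on all other blocks, so care is needed to separate the randomness of the coins $c_i$, the internally-simulated blocks, and the algorithm's decisions so that the distinguishing lower bound can be applied cleanly with $i$ held fixed while the remaining coordinates are marginalized.
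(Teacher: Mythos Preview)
The paper does not give its own proof of this statement; it cites \cite{ChenDerezinski:2021} and remarks that the result ``is an easy generalization that is implicit from their proof'' via a padding argument. Your proposal is exactly that padding argument: block-diagonal $\bfA$, independent mixture draws per block, a global-to-local error transfer showing most blocks are $(1+\eps)$-approximated, and then an averaging step that invokes the one-dimensional distinguishing lower bound on a constant fraction of blocks. So your approach coincides with what the paper points to.

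One technical point to watch: your inference ``$|G|\ge 0.98d$, whence $\OPT \le dcR$'' does not follow from the stated hypotheses. The assumption only bounds $\OPT_i\in[R,cR]$ with probability $99/100$; on the complementary event $\OPT_i$ is unconstrained, so the $\le 0.02d$ blocks outside $G$ could in principle inflate $\OPT$ arbitrarily, blowing up your error budget $\sum_i E_i \le (\eps/(200c))\OPT$. For the actual coin-flip instances used downstream (Theorems~\ref{thm:p-in-0-1-lb} and~\ref{thm:p-in-1-2-lb}) this is a non-issue because $\OPT_i\le m$ deterministically, so $\OPT\le dm$ always and your arithmetic goes through with the same constants. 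To make the general statement airtight you should either (i) add a deterministic upper bound on $\OPT_i$ to the hypotheses, or (ii) replace the bound on $\OPT$ by a direct counting argument that avoids summing over bad blocks. The rest of your bookkeeping, including the concern you flag about separating randomness in the averaging step, is routine once this is handled.
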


Thus, it suffices to show a $1$-dimensional lower bound which suits the hypotheses of Theorem \ref{thm:cd-lb}. Our hard input distribution will be the same as that of \cite[Theorem 5.1]{ChenDerezinski:2021}. 

\begin{theorem}\label{thm:p-in-0-1-lb}
Let $0 < p < 1$ be a constant. Let $\eps>0$ be sufficiently small and let $n = 100\ceil*{\eps^{-2}}$. Let $\bfa\in\mathbb R^n$ be the all ones vector. Let $\mathcal D_0$ be the distribution over binary vectors $\bfb\in\{0,1\}^n$ which independently draws each coordinate as a Bernoulli with bias $1/2 + \eps$ and let $\mathcal D_1$ be the distribution which independently draws each coordinate as a Bernoulli with bias $1/2 - \eps$. Then, any $\tilde x$ such that
\[
    \norm*{\bfa\tilde x-\bfb}_p^p \leq (1+\eps)\min_{x\in\mathbb R}\norm*{\bfa x-\bfb}_p^p
\]
distinguishes whether $\bfb\sim\mathcal D_0$ or $\bfb\sim\mathcal D_1$ with probability at least $99/100$.
\end{theorem}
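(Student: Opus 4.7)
The plan is to reduce the theorem to a clean structural fact about the one-dimensional objective $f(x) \coloneqq \|\bfa x - \bfb\|_p^p$. Writing $k \coloneqq |\{i : \bfb(i) = 1\}|$, we have $f(x) = k|x-1|^p + (n-k)|x|^p$; in particular $f(0) = k$, $f(1) = n-k$, and $f(1/2) = n/2^p$. Because $p \in (0,1)$, both $x \mapsto x^p$ and $x \mapsto (1-x)^p$ are concave on $[0,1]$, hence so is $f$. Thus $\min_{x \in [0,1]} f(x)$ is attained at an endpoint, and since $f(\tilde x) \geq f(0)$ for $\tilde x \leq 0$ (using $(1-\tilde x)^p \geq 1$) and $f(\tilde x) \geq f(1)$ for $\tilde x \geq 1$, we conclude $\OPT = \min\{k, n-k\}$.

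The first step I would carry out is concentration: applying Hoeffding's inequality to the $n = 100\lceil 1/\eps^2\rceil$ independent Bernoulli entries of $\bfb$, with probability at least $99/100$ one has $|k - \E k| \leq 3n\eps/4$. For $\bfb \sim \mathcal D_0$ this yields $k \geq n/2 + n\eps/4$; a short calculation (using $(1+\eps)/(2+\eps) \leq 1/2 + \eps/4$) then gives $k \geq (1+\eps)(n-k)$, so $\OPT = n-k$ and $f(0) = k \geq (1+\eps)\OPT$. The choice $n = 100/\eps^2$ makes the Hoeffding exponent $2t^2/n \geq 2(3n\eps/4)^2/n = 9n\eps^2/8$ a large constant, comfortably driving the failure probability below $1/100$. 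The analysis for $\mathcal D_1$ is symmetric, with $\OPT = k$ and $f(1) = n-k \geq (1+\eps)\OPT$.

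The decision rule is $g(\tilde x) = 0$ if $\tilde x > 1/2$ and $g(\tilde x) = 1$ otherwise. To justify it in the $\mathcal D_0$ case, suppose $\tilde x \leq 1/2$ and condition on the good event above. For $\tilde x \leq 0$ we already saw $f(\tilde x) \geq f(0) \geq (1+\eps)\OPT$. For $\tilde x \in [0,1/2]$, concavity of $f$ on $[0,1]$ restricted to $[0,1/2]$ gives $f(\tilde x) \geq \min\{f(0), f(1/2)\}$; the first term exceeds $(1+\eps)\OPT$ by the previous paragraph, and $f(1/2) = n/2^p > (1+\eps)(n/2) \geq (1+\eps)(n-k) = (1+\eps)\OPT$ holds whenever $2^{1-p} > 1+\eps$, which is true once $\eps < 2^{1-p}-1$ (a positive constant since $p < 1$). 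Either case contradicts the $(1+\eps)$-approximation hypothesis, so any feasible $\tilde x$ must satisfy $\tilde x > 1/2$ and $g(\tilde x) = 0$; the symmetric argument handles $\mathcal D_1$, and together these show that $g$ distinguishes the two distributions with probability at least $99/100$.

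The main obstacle is mostly bookkeeping: one must simultaneously control $f(0)$ and $f(1/2)$ against $(1+\eps)\OPT$, which requires the constants in the concentration step to be tight enough and $\eps$ to be smaller than the $p$-dependent constant $2^{1-p} - 1$. The conceptual core is the concavity of $f$ on $[0,1]$, which forces any non-trivial approximate minimizer to lie on the correct side of $1/2$; this is precisely what fails for $p \geq 1$, explaining why the companion lower bounds for $p \in (1,2)$ require a separate argument rather than a direct adaptation.
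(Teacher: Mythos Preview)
Your proof is correct and follows the same overall strategy as the paper: concentrate $k$ around its mean, identify $\OPT = \min\{k, n-k\}$, and show that any $(1+\eps)$-approximate $\tilde x$ must lie on the correct side of $1/2$. The one genuine simplification you make is in the structural step: you invoke concavity of $f(x) = k(1-x)^p + (n-k)x^p$ on $[0,1]$ directly, which immediately gives that the minimum on any subinterval is attained at an endpoint, so $\OPT = \min\{f(0),f(1)\}$ and $\min_{[0,1/2]} f = \min\{f(0), f(1/2)\}$. The paper instead computes the stationary point $x = 1/(1+(n/r-1)^{1/(p-1)}) = 1/2 - O(\eps)$ explicitly, evaluates the objective there to $n/2^p(1\pm O(\eps))$, and then argues via the sign of the derivative that the objective is increasing on $[0,1/2 - O(\eps)]$; combining this with the value near the stationary point yields the same conclusion. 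Your concavity argument is shorter and avoids the calculus, at the cost of making the threshold $\eps < 2^{1-p}-1$ explicit (the paper's bound on $\eps$ is implicit in its $O(\eps)$ terms). Both routes arrive at exactly the same decision rule and the same comparison $\min\{k, n/2^p\} > (1+\eps)(n-k)$.
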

\begin{proof}
Note that the optimal $x^*$ minimizing $\norm*{\bfa x-\bfb}_p^p$ over $x\in\mathbb R$ must lie in $[0, 1]$. Indeed, if $x < 0$, then $-x$ has a strictly lower cost than $x$, and if $x > 1$, then $x = 1$ has a strictly lower cost. Thus, the objective function can be written as
\begin{equation}\label{eq:lp-obj-p<1}
    \norm*{\bfa x-\bfb}_p^p = (n-r)\cdot x^p + r\cdot (1-x)^p
\end{equation}
where $r$ is the number of ones in $\bfb$. As noted by \cite{ChenDerezinski:2021}, note that $r \in [(\frac12 + \frac\eps2)n, (\frac12 + \frac{3\eps}2)n]$ with probability at least $99/100$ if $\bfb\sim\mathcal D_0$, and similarly, $r \in [(\frac12 - \frac\eps2)n, (\frac12 - \frac{3\eps}2)n]$ with probability at least $99/100$ if $\bfb\sim\mathcal D_1$. Let this event be denoted as $\mathcal E$, and condition on this event. Write this as $r = n/2 + a\eps n$ for some $a\in[1/2, 3/2]$ if $\bfb\sim\mathcal D_0$ and $a\in[-3/2,-1/2]$ if $\bfb\sim\mathcal D_1$. 

\paragraph{Optimal Solutions.} We will first compute the optimal cost. Since $x\mapsto x^p$ for $p\in(0,1)$ is nonconvex, we have three candidate solutions for the optimum: the endpoints $x = 0$, $x = 1$, and the unique stationary point
\[
    x = \frac1{1+(n/r-1)^{1/(p-1)}}
\]
of Equation \eqref{eq:lp-obj-p<1}. It can easily be seen that the costs for $x = 0$ and $x = 1$ are $r$ and $n-r$, respectively. Now assume that $\bfb\sim\mathcal D_0$, since the other case follows symmetrically. Then,
\[
    (n/r-1)^{1/(p-1)} = \bracks*{\frac{1}{1/2 + a\eps}-1}^{1/(p-1)} = \bracks*{\frac{1/2 + a\eps}{1/2 - a\eps}}^{1/(1-p)} = 1 + O(\eps)
\]
so
\[
    x = \frac1{2 + O(\eps)} = \frac12 - O(\eps).
\]
The objective cost is thus
\begin{align*}
    (n-r)\cdot x^p + r\cdot(1-x)^p &= \parens*{\frac12-a\eps}n\cdot\parens*{\frac12-O(\eps)}^p + \parens*{\frac12+a\eps}n\cdot\parens*{\frac12+O(\eps)}^p \\
    &= \frac{n}{2^p}\bracks*{\parens*{\frac12-a\eps}(1-O(\eps))^p + \parens*{\frac12+a\eps}(1+O(\eps))^p} \\
    &= \frac{n}{2^p}(1\pm O(\eps)).
\end{align*}
Since $p < 1$, this has cost worse than $r$ or $n-r$, so the optimal solution is $n-r$, conditioned on $\mathcal E$. Likewise, if $\bfb\sim\mathcal D_1$, then the optimal solution is $x = 0$ with cost $r$.

\paragraph{Suboptimal Solutions.} We now show that, given a nearly optimal solution $\tilde x$, we can determine whether $\bfb$ is drawn from $\mathcal D_0$ or $\mathcal D_1$ with high probability by testing whether $\tilde x \in [1/2, 1]$ or $\tilde x \in [0, 1/2]$. Again, assume by symmetry that $\bfb\sim\mathcal D_0$. Suppose that $x \in [0, 1/2]$. If $x > 1/2 - O(\eps)$, then as calculated above, $x$ is not even an $\alpha$-factor solution for some constant $\alpha$. Otherwise, we have that
\[
    x \leq \frac12 - O(\eps) = \frac1{1 + (n/r-1)^{1/(p-1)}}
\]
which rearranges to 
\[
    p\cdot (n-r)x^{p-1} - pr\cdot (1 - x)^{p-1} > 0
\]
which means that the objective is increasing on this interval. Thus, for $x\in[0,1/2]$, the smallest that the cost can be is $r$, which is a factor of $(1+\eps)$ larger than $n-r$. Thus, a $(1+\eps)$-factor approximation must distinguish between $\bfb$ drawn from $\mathcal D_0$ and $\mathcal D_1$.
\end{proof}

As discussed in \cite{ChenDerezinski:2021}, the distributions of Theorem \ref{thm:p-in-0-1-lb} require at least $\Omega(\eps^{-2})$ queries to distinguish, by standard arguments. Then, by combining Theorems \ref{thm:cd-lb} and \ref{thm:p-in-0-1-lb}, we arrive at the following:

\begin{theorem}\label{thm:p-in-0-1-lb-d-dim}
Let $p\in (0,1)$ be a constant. Let $\bfA\in\mathbb R^{n\times d}$ and let $\bfb\in\mathbb R^n$. Suppose that with probability at least $99/100$, an algorithm $\mathcal A$ returns $\tilde\bfx\in\mathbb R^d$ such that
\[
    \norm*{\bfA\tilde\bfx - \bfb}_p^p \leq (1+\eps)\min_{\bfx\in\mathbb R^d}\norm*{\bfA\bfx-\bfb}_p^p.
\]
Then, $\mathcal A$ queries $\Omega(d/\eps^2)$ entries of $\bfb$ in expectation.
\end{theorem}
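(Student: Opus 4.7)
The plan is to combine the one-dimensional hard instance from Theorem \ref{thm:p-in-0-1-lb} with the padding reduction of Theorem \ref{thm:cd-lb}, using a standard distributional lower bound to get the base query complexity $q = \Omega(\eps^{-2})$ in one dimension. Concretely, I will take $\mathcal D_0, \mathcal D_1, \bfa$ as in Theorem \ref{thm:p-in-0-1-lb}, verify the three hypotheses of Theorem \ref{thm:cd-lb}, and then invoke it with $\eps$ replaced by a constant multiple of $\eps$.

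\textbf{Step 1: Query lower bound in one dimension.} Distinguishing whether each of $n = \Theta(\eps^{-2})$ i.i.d.\ coordinates of $\bfb$ is a $\mathrm{Bernoulli}(1/2+\eps)$ or a $\mathrm{Bernoulli}(1/2-\eps)$ with success probability $\geq 2/3$ requires $q = \Omega(\eps^{-2})$ queries in expectation. This is a standard Le Cam / total-variation argument: the single-coordinate KL divergence between the two Bernoullis is $O(\eps^2)$, so by tensorization and Pinsker's inequality, any deterministic tester inspecting $q$ coordinates has total-variation distance between its induced output distributions of at most $O(\eps\sqrt q)$, which must be $\Omega(1)$ to succeed with constant advantage.

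\textbf{Step 2: Verify the ``cost concentration'' hypothesis.} In the proof of Theorem \ref{thm:p-in-0-1-lb} it is shown that, conditioned on the high-probability event $\mathcal E$ that the number $r$ of ones in $\bfb$ lies in $[(1/2\pm O(\eps))n]$, the optimum of $\min_x \|\bfa x-\bfb\|_p^p$ is attained at one of the endpoints $x\in\{0,1\}$ with value $\min\{r, n-r\} = (1/2-O(\eps))n$. Hence $\min_x\|\bfa x-\bfb\|_p^p \in [R, cR]$ with $R = \Theta(n)$ and $c = O(1)$, which verifies the concentration hypothesis of Theorem \ref{thm:cd-lb}.

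\textbf{Step 3: Apply the padding reduction.} By Theorem \ref{thm:p-in-0-1-lb}, any $(1+\eps)$-approximate solution $\tilde x$ distinguishes $\mathcal D_0$ from $\mathcal D_1$ with probability $\geq 99/100$; together with Step 1 this gives the hypothesis $q = \Omega(\eps^{-2})$. Plugging this into Theorem \ref{thm:cd-lb} (absorbing the $1/(200c) = \Theta(1)$ loss into $\eps$) produces an $(md)\times d$ instance $(\bfA,\bfb)$ with $m = \Theta(\eps^{-2})$ on which any algorithm achieving a $(1+\eps')$-approximation with probability $\geq 99/100$, for a suitable $\eps' = \Theta(\eps)$, must make $\Omega(dq) = \Omega(d/\eps^2)$ queries in expectation. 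Rescaling $\eps$ by a constant yields the claimed bound.

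The only genuinely new content is Step 1 together with the verification in Step 2; both Theorems \ref{thm:cd-lb} and \ref{thm:p-in-0-1-lb} do the rest of the work. I do not expect any real obstacle: the main subtlety is simply to make sure that $c$ in the cost-concentration hypothesis is absolute (independent of $\eps$ and $d$), which follows from $r, n-r \in [(1/2 - O(\eps))n, (1/2 + O(\eps))n]$ on $\mathcal E$.
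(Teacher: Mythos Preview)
The proposal is correct and takes essentially the same approach as the paper: the paper's own proof simply states that the distributions in Theorem~\ref{thm:p-in-0-1-lb} require $\Omega(\eps^{-2})$ queries to distinguish by standard arguments, and then combines Theorems~\ref{thm:cd-lb} and~\ref{thm:p-in-0-1-lb}. Your proposal spells out the same combination with more care in verifying the cost-concentration hypothesis and the base query bound, but the structure is identical.
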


\subsection{Lower Bounds for \texorpdfstring{$p\in(1, 2)$}{p in (1, 2)}}

In the range of $p\in (1,2)$, we analyze the same lower bound instance as in Theorem \ref{thm:p-in-0-1-lb}. However, the nature of the objective function changes in this parameter regime, and our lower bound weakens to $\Omega(d/\eps)$. In particular, the value of the endpoints $x = 0$ and $x = 1$ stay at $r$ and $n-r$, but the value of the stationary point, which is near $x = 1/2$ and has a value of around $n/2^p$, becomes significantly better than the endpoint solutions. This causes a phase transition in the lower bound that we are able to achieve with this method.

We now present our $1$-dimensional lower bound of $\Omega(\eps^{-1})$ for $p\in(1,2)$. For easy reuse of our calculations from Theorem \ref{thm:p-in-0-1-lb}, we state this result as a lower bound of $\Omega(\eps^{-2})$ for any algorithm achieving an $O(\eps^2)$-approximation. This can be reparameterized to an $\Omega(\eps^{-1})$ lower bound for $O(\eps)$-approximations.

\begin{theorem}\label{thm:p-in-1-2-lb}
Let $1 < p < 2$ be a constant. Let $\eps>0$ be sufficiently small and let $n = 100\ceil*{\eps^{-2}}$. Let $\bfa\in\mathbb R^n$ be the all ones vector. Let $\mathcal D_0$ be the distribution over binary vectors $\bfb\in\{0,1\}^n$ which independently draws each coordinate as a Bernoulli with bias $1/2 + \eps$ and let $\mathcal D_1$ be the distribution which independently draws each coordinate as a Bernoulli with bias $1/2 - \eps$. Then, there exists a constant $c$ such that any $\tilde x$ such that
\[
    \norm*{\bfa\tilde x-\bfb}_p^p \leq (1+c\cdot\eps^2)\min_{x\in\mathbb R}\norm*{\bfa x-\bfb}_p^p
\]
distinguishes whether $\bfb\sim\mathcal D_0$ or $\bfb\sim\mathcal D_1$ with probability at least $99/100$.
\end{theorem}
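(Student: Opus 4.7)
The plan is to exploit strong convexity of the $\ell_p$ objective for $p\in(1,2)$, which is the key qualitative difference from the $p\in(0,1)$ regime handled in Theorem~\ref{thm:p-in-0-1-lb}. As in that proof, I would condition on the event $\mathcal{E}$ that $r = n(1/2 + a\eps)$ with $a\in[1/2,3/2]$ under $\mathcal{D}_0$ and symmetrically under $\mathcal{D}_1$; this holds with probability at least $99/100$. For $p\in(1,2)$ the objective $f(x) = (n-r)|x|^p + r|x-1|^p$ is strictly convex on $\mathbb{R}$ and minimized in $(0,1)$ at the unique stationary point $x^\ast = 1/(1+(n/r-1)^{1/(p-1)})$. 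Reusing the expansion in the proof of Theorem~\ref{thm:p-in-0-1-lb}, I get $x^\ast - 1/2 = \frac{a\eps}{p-1} + O(\eps^2)$, so under $\mathcal{D}_0$ the optimum lies above $1/2$ by a margin of at least $\frac{\eps}{2(p-1)} - O(\eps^2)$, and under $\mathcal{D}_1$ the same margin below. Thus a distinguisher only needs to decide which side of $1/2$ an approximate solution $\tilde x$ falls on.

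Next I would show that a $(1+c\eps^2)$-approximation forces this side-of-$1/2$ decision to agree with $x^\ast$. Compute $f''(x) = p(p-1)[(n-r)x^{p-2} + r(1-x)^{p-2}]$, which is $\Theta_p(n)$ for $x$ in any fixed compact subinterval of $(0,1)$ containing $1/2$; meanwhile $f(x^\ast) = (1\pm O(\eps)) \cdot n/2^p = \Theta_p(n)$. A preliminary step would observe that any $O(1)$-approximate $\tilde x$ must lie in a fixed neighborhood of $1/2$ (since $f$ grows on $(-\infty,0]$ and $[1,\infty)$ and $f(0),f(1) \geq (1+\Omega_p(1))f(x^\ast)$ by direct computation). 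Taylor-expanding around $x^\ast$ on that neighborhood then gives
\[
 f(\tilde x) - f(x^\ast) \;\geq\; C_p \, f(x^\ast) \, (\tilde x - x^\ast)^2
\]
for a constant $C_p>0$. Hence any $\tilde x$ with $f(\tilde x) \leq (1+c\eps^2)f(x^\ast)$ satisfies $|\tilde x - x^\ast| \leq \eps\sqrt{c/C_p}$, which is strictly smaller than $|x^\ast - 1/2| \geq \frac{\eps}{2(p-1)} - O(\eps^2)$ once $c$ is a sufficiently small constant depending on $p$. This places $\tilde x$ on the correct side of $1/2$, identifying which of $\mathcal{D}_0,\mathcal{D}_1$ generated $\bfb$.

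Finally, the standard coin-distinguishing bound (KL divergence / total variation) says that distinguishing $n$ i.i.d.\ Bernoullis of bias $1/2\pm\eps$ with probability $\geq 2/3$ requires $\Omega(\eps^{-2})$ queries. So with $n = \Theta(\eps^{-2})$ our $1$-dimensional lower bound is $\Omega(\eps^{-2})$ queries for $(1+c\eps^2)$-approximation; reparameterizing $\eps \mapsto \sqrt{\eps/c}$ yields $\Omega(\eps^{-1})$ for $(1+\eps)$-approximation, matching Theorem~\ref{thm:main}. Plugging this one-dimensional hard instance into Theorem~\ref{thm:cd-lb} of Chen--Derezi\'nski then promotes this to the $d$-dimensional lower bound $\Omega(d/\eps)$; we should verify that the scaling hypothesis $\min_x\|\bfa x-\bfb\|_p^p \in [R,cR]$ required by Theorem~\ref{thm:cd-lb} holds with $R = \Theta(n)$ and $c = O(1)$, which is immediate since $f(x^\ast) = n/2^p (1\pm O(\eps))$ under both $\mathcal{D}_0$ and $\mathcal{D}_1$.

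The main obstacle is tracking constants carefully enough in the Taylor step: I need the quadratic lower bound $f(\tilde x) - f(x^\ast) \geq C_p f(x^\ast)(\tilde x - x^\ast)^2$ to hold uniformly, not merely infinitesimally, so I must first confine $\tilde x$ to a region where $f''$ is uniformly bounded below, which requires the constant-factor ``cage'' argument using $f(0) = r$ and $f(1) = n-r$ versus $f(x^\ast) \approx n/2^p$. The other technical care is ensuring $C_p$ and the constant from Theorem~\ref{thm:cd-lb} compose consistently so that the final approximation threshold $c$ is a genuine absolute constant depending only on $p$.
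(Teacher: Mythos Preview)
Your proposal is correct and reaches the same conclusion as the paper, but the paper's route is a bit more direct. After the same conditioning and expansion of $x^\ast$, the paper does not Taylor-expand around $x^\ast$ at all: it simply observes that by convexity the derivative of $f$ is negative on $[0,1/2]$ (assuming $\bfb\sim\mathcal{D}_0$), so the best value on the ``wrong side'' is achieved exactly at $x=1/2$, where $f(1/2) = (n-r)(1/2)^p + r(1/2)^p = n/2^p$. They then expand the optimal value to second order, obtaining $f(x^\ast) = \tfrac{n}{2^p}\bigl[1 - \tfrac{4pa^2}{p-1}\eps^2 + O(\eps^3)\bigr]$, so $f(1/2)/f(x^\ast) = 1 + \Theta_p(\eps^2)$ directly. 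This sidesteps your caging step and the uniform lower bound on $f''$ entirely. Your strong-convexity argument is perfectly valid and more portable (it would work for any loss that is uniformly strongly convex near its minimum without requiring an explicit evaluation at the midpoint), but it does require the extra ``$O(1)$-approximate $\tilde x$ must be caged away from $0,1$'' step; the paper's approach buys simplicity by exploiting the exact symmetry at $x=1/2$. Your discussion of the query lower bound and the lift to $d$ dimensions via Theorem~\ref{thm:cd-lb} matches the paper's subsequent corollary (Theorem~\ref{thm:p-in-1-2-lb-d-dim}) and is correct.
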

\begin{proof}
Many of our calculations from Theorem \ref{thm:p-in-0-1-lb} directly carry over. Recall our notation of setting $r$ to be the number of ones in $\bfb$, which is $r = n/2 + a\eps n$ for $a\in[1/2,3/2]$ if $\bfb\sim\mathcal D_0$ and $a\in[-3/2,-1/2]$ if $\bfb\sim\mathcal D_1$. Recall also that the unique stationary point, which is the optimum now by convexity, is
\[
    x = \frac1{1+(n/r-1)^{1/(p-1)}}.
\]

\paragraph{Optimal Solutions.}
We now calculate the value of the optimum. We carry out calculations for $\bfb\sim\mathcal D_0$ since $\bfb\sim\mathcal D_1$ gives symmetric results. Note first that
\begin{align*}
    (n/r-1)^{1/(p-1)} &= \bracks*{\frac{1}{1/2 + a\eps}-1}^{1/(p-1)} = \bracks*{\frac{1 - 2a\eps}{1 + 2a\eps}}^{1/(p-1)} = \bracks*{(1 - 2a\eps)\cdot \sum_{i=0}^\infty (-2a\eps)^i}^{1/(p-1)} \\
    &= \bracks*{(1 - 2a\eps) - (2a\eps)(1-2a\eps) + (2a\eps)^2(1-2a\eps) + O(\eps^3)}^{1/(p-1)} \\
    &= \bracks*{1 - 4a\eps + 2(2a\eps)^2 + O(\eps^3)}^{1/(p-1)} \\
    &= 1 - \frac{4a}{p-1}\eps + \frac{8a^2}{p-1} \eps^2 + O(\eps^3)
\end{align*}
so
\[
    x = \frac12 \parens*{1 + \frac{2a}{p-1}\eps - \frac{4a^2}{p-1}\eps^2 + O(\eps^3)}.
\]
Then, the objective value at this $x$ is
\begin{align*}
    &\frac{n}{2^p}\bracks*{\parens*{\frac12-a\eps}\parens*{1 + \frac{2a}{p-1}\eps - \frac{4a^2}{p-1}\eps^2 + O(\eps^3)}^p + \parens*{\frac12+a\eps}\parens*{1 - \frac{2a}{p-1}\eps + \frac{4a^2}{p-1}\eps^2 + O(\eps^3)}^p} \\
    = ~&\frac{n}{2^p}\bracks*{\parens*{\frac12-a\eps}\parens*{1 + \frac{2pa}{p-1}\eps - \frac{4pa^2}{p-1}\eps^2 + O(\eps^3)} + \parens*{\frac12+a\eps}\parens*{1 - \frac{2pa}{p-1}\eps + \frac{4pa^2}{p-1}\eps^2 + O(\eps^3)}} \\
    = ~&\frac{n}{2^p}\bracks*{1 + \parens*{\frac12-a\eps}\parens*{\frac{2pa}{p-1}\eps - \frac{4pa^2}{p-1}\eps^2 + O(\eps^3)} + \parens*{\frac12+a\eps}\parens*{- \frac{2pa}{p-1}\eps + \frac{4pa^2}{p-1}\eps^2 + O(\eps^3)}} \\
    = ~&\frac{n}{2^p}\bracks*{1 - \frac{4pa^2}{p-1}\eps^2 + O(\eps^3)}.
\end{align*}
If $\bfb\sim\mathcal D_1$, then we have that
\[
    x = \frac12 \parens*{1 - \frac{2a}{p-1}\eps + \frac{4a^2}{p-1}\eps^2 + O(\eps^3)}
\]
with the same objective value.

\paragraph{Suboptimal Solutions.}
We now show that, given a nearly optimal solution $\tilde x$, we can determine whether $\bfb$ is drawn from $\mathcal D_0$ or $\mathcal D_1$ with high probability by testing whether $\tilde x \in [1/2, 1]$ or $\tilde x \in [0, 1/2]$. Again, assume by symmetry that $\bfb\sim\mathcal D_0$. Suppose that $x \in [0, 1/2]$. Then,
\[
    x \leq \frac12 < \frac1{1 + (n/r-1)^{1/(p-1)}}
\]
which rearranges to 
\[
    p(n-r)\cdot x^{p-1} - pr\cdot (1 - x)^{p-1} < 0
\]
which means that the objective is decreasing on this interval. Thus, for $x\in[0,1/2]$, the smallest that the cost can be is $x = 1/2$, which gives a value of
\[
    (n-r)\cdot (1/2)^p + r\cdot (1/2)^p = \frac{n}{2^p},
\]
which is a factor of $1 - \Theta(\eps^2)$ larger than the optimal solution. Thus, a $(1+\Theta(\eps^2))$-approximate solution can distinguish between $\mathcal D_0$ and $\mathcal D_1$.
\end{proof}

Then, by combining Theorems \ref{thm:cd-lb} and \ref{thm:p-in-1-2-lb}, we arrive at the following:

\begin{theorem}\label{thm:p-in-1-2-lb-d-dim}
Let $p\in (1,2)$ be a constant. Let $\bfA\in\mathbb R^{n\times d}$ and let $\bfb\in\mathbb R^n$. Suppose that with probability at least $99/100$, an algorithm $\mathcal A$ returns $\tilde\bfx\in\mathbb R^d$ such that
\[
    \norm*{\bfA\tilde\bfx - \bfb}_p^p \leq (1+\eps)\min_{\bfx\in\mathbb R^d}\norm*{\bfA\bfx-\bfb}_p^p.
\]
Then, $\mathcal A$ queries $\Omega(d/\eps)$ entries of $\bfb$ in expectation.
\end{theorem}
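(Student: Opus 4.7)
The plan is to combine the one-dimensional lower bound of Theorem~\ref{thm:p-in-1-2-lb} with the generic padding reduction of Theorem~\ref{thm:cd-lb}, and then reparameterize. The first step is to establish that distinguishing the two hard distributions in Theorem~\ref{thm:p-in-1-2-lb} is itself hard: specifically, any (possibly adaptive) deterministic algorithm that decides whether $\bfb \sim \mathcal D_0$ or $\bfb \sim \mathcal D_1$ with probability at least $2/3$ must query $q = \Omega(\eps^{-2})$ coordinates in expectation. This is a standard two-point testing argument via Le Cam / Pinsker: the per-coordinate KL divergence between $\mathrm{Ber}(1/2+\eps)$ and $\mathrm{Ber}(1/2-\eps)$ is $\Theta(\eps^2)$, so after $q$ queries the KL divergence between the induced posteriors is $O(q\eps^2)$ by the chain rule, Pinsker then gives total variation $O(\sqrt{q}\,\eps)$, and constant distinguishing advantage forces $q = \Omega(\eps^{-2})$.

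Next I would verify that the hard instance of Theorem~\ref{thm:p-in-1-2-lb} slots into the hypotheses of Theorem~\ref{thm:cd-lb}. The calculation inside Theorem~\ref{thm:p-in-1-2-lb} already produces a high-probability event $\mathcal E$ on which the optimal cost equals $\frac{n}{2^p}(1 - \Theta(\eps^2))$, so for $R = \Theta(n)$ and some constant $c > 0$ we have $\min_x \|\bfa x - \bfb\|_p^p \in [R, cR]$ with probability at least $99/100$ under $\tfrac12(\mathcal D_0 + \mathcal D_1)$, giving the third hypothesis. The second hypothesis is exactly the statement proved in Theorem~\ref{thm:p-in-1-2-lb}, with the distinguishing threshold $\tilde x \lessgtr 1/2$.

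The final step is to apply Theorem~\ref{thm:cd-lb}. Writing $\eps'$ for the approximation parameter and $\eps$ for the Bernoulli-bias parameter, Theorem~\ref{thm:p-in-1-2-lb} says that any $(1 + c\eps^2)$-approximate solution distinguishes $\mathcal D_0$ from $\mathcal D_1$; setting $\eps' = c\eps^2$, step one then gives a distinguishing lower bound of $\Omega(\eps^{-2}) = \Omega(1/\eps')$ queries. Theorem~\ref{thm:cd-lb} lifts this to an $\Omega(d/\eps')$ lower bound for $(1 + \eps'/(200c))$-approximation on an $md \times d$ matrix with $m = \Theta(\eps^{-2}) = \Theta(1/\eps')$ rows per block. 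Renaming $\eps'/(200c)$ as $\eps$ and absorbing constants yields the desired $\Omega(d/\eps)$ bound.

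There is no real obstacle here; the only thing to be careful about is the quadratic reparameterization between the bias $\eps$ and the achievable approximation gap $\Theta(\eps^2)$, which is precisely the reason this approach gives the threshold $\Omega(d/\eps)$ for $p \in (1,2)$ rather than the $\Omega(d/\eps^2)$ bound obtained by the analogous argument in Theorem~\ref{thm:p-in-0-1-lb-d-dim} for $p \in (0,1)$: the distinguishing cost is $\Theta(\eps^{-2})$ in both regimes, but the approximation ratio certified by the hard instance degrades from $1 + \Theta(\eps)$ (when the objective is concave on $[0,1]$ and the optimum sits at a vertex) to only $1 + \Theta(\eps^2)$ (when strict convexity makes the interior minimizer much better than the vertex values).
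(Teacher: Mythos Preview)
Your proposal is correct and follows essentially the same approach as the paper: combine the one-dimensional hard instance of Theorem~\ref{thm:p-in-1-2-lb} with the padding reduction of Theorem~\ref{thm:cd-lb}, then reparameterize from the bias scale $\eps$ to the approximation scale $\Theta(\eps^2)$. You have simply spelled out in more detail the standard distinguishing lower bound and the reparameterization that the paper leaves implicit.
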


\subsection{Lower Bounds for \texorpdfstring{$p>2$}{p > 2}}

In the range of $p>2$, a lower bound of $\Omega(\eps^{1-p})$ follows from Theorem 3 of \cite{CSS2021}, which shows that in the $1$-dimensional $\ell_p$ power mean problem, which is equivalent to $1$-dimensional active $\ell_p$ regression, $\Omega(\eps^{1-p})$ queries are necessary to obtain $(1+\eps)$-approximate solutions. This matches our $\eps$ dependence in the upper bound by at most $\eps$. For $d$ dependence, it is clear that solving Problem \ref{prob:informal_problem} to any non-trivial relative error approximation factor for any norm requires at least $d$ samples, as otherwise, we would have non-zero error in the case that $\bfA\bfx = \bfb$. Here, we show that $\Omega(d^{p/2})$ samples are required to solve the problem for any $\ell_p$ norm with $p \ge 2$. In combination, this shows that the $\tilde O(d^{\max(1,p/2)})$ bound of Theorem \ref{thm:main} is tight up to polylogarithmic factors.
%In the {\it subspace sketch problem}, one is 
%given an $n \times d$ matrix $\bfA$ and seeks to find a reweighted subset $\bfA_S$ of
%rows of $\bfA$ so that $\|\bfA_S\bfx\|_p^p = (1 \pm \epsilon) \|x\|_p^p$ for all $\bfx$. A
%standard technique is to adjoin $\bfb$ to $\bfA$, forming $[A, b]$, and sampling a subset
%$\bfS$ of its rows by the Lewis weights of $[A, b]$. This, however, requires reading
%all the entries of $\bfb$. In \cite{LWW20} a $\tilde{\Omega}(d^{p/2})$ lower bound was shown
%for the subspace sketch problem. While not directly useful for lower bounding the
%sample complexity of regression, a similar lower bound argument can be made:
We use the following standard fact from coding theory. 

\begin{lemma}[\cite{parampalli2012construction}]\label{lem:coding}
For any real $q \geq 1$ and $d = 2^k-1$ for some integer $k$, there exists a set
$\bfS \subset \{-1,1\}^d$ and a constant $C_q$ depending on $q$ which satisfy
\begin{enumerate}
\item $|S| = d^q$
\item For any $s, t \in S$ such that $s \neq t$, $|\langle s, t \rangle | \leq C_q \sqrt{d}$.
\end{enumerate}
\end{lemma}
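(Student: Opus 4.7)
The plan is to construct $\bfS$ by taking (the $\pm 1$ image of) the codewords of a dual binary BCH code of appropriate designed distance. Recall that for any $m \geq 1$ and $t \geq 1$, the binary BCH code of length $n = 2^m - 1$ with designed distance $2t+1$ has a dual code $\mathcal{C}_t \subseteq \mathbb{F}_2^n$ of dimension exactly $tm$, hence $|\mathcal{C}_t| = 2^{tm}$. The key structural property we need is the Carlitz--Uchiyama/Weil bound: every nonzero codeword $c \in \mathcal{C}_t$ has Hamming weight $w(c)$ satisfying
\begin{equation*}
 \left| w(c) - 2^{m-1} \right| \leq (t-1)\, 2^{m/2}.
\end{equation*}
Since $\mathcal{C}_t$ is a linear subspace, for any two distinct $c, c' \in \mathcal{C}_t$ the difference $c - c'$ is a nonzero codeword, so the pairwise Hamming distance obeys the same bound $|h(c, c') - 2^{m-1}| \leq (t-1)2^{m/2}$.

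Given this, I would convert the code to $\pm 1$ vectors by the bijection $\phi : \mathbb{F}_2 \to \{-1, +1\}$, $\phi(0) = +1$, $\phi(1) = -1$, applied coordinatewise. For any two distinct $s = \phi(c)$ and $t = \phi(c')$, a direct calculation gives $\langle s, t\rangle = n - 2 h(c, c')$, so
\begin{equation*}
 |\langle s, t\rangle| \;=\; 2\,\bigl| h(c, c') - 2^{m-1} \bigr| + O(1) \;\leq\; 2(t-1)\, 2^{m/2} + O(1) \;\leq\; C_t \sqrt{n},
\end{equation*}
where $C_t$ depends only on $t$. Now, given the target exponent $q \geq 1$, choose $t = \lceil q \rceil$; then $|\mathcal{C}_t| = 2^{tm} \geq (2^m - 1)^q = d^q$, so selecting an arbitrary subset of exactly $d^q$ vectors from $\phi(\mathcal{C}_t)$ yields the desired $\bfS$ with $C_q \coloneqq 2(\lceil q \rceil - 1) + O(1)$.

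I do not anticipate a real obstacle: the whole proof is an invocation of the weight distribution of dual BCH codes, a classical fact in algebraic coding theory (this is the content of the cited \cite{parampalli2012construction}; the Weil bound underpinning it can also be found in MacWilliams--Sloane). The only subtlety worth flagging is matching the constants --- the statement asks for $|\bfS| = d^q$ exactly, which is handled by selecting a subset of $\phi(\mathcal{C}_t)$, and one needs to verify that restricting to a subset preserves the inner product bound, which is immediate since the bound holds for all pairs.
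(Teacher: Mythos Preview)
Your proposal is correct: the construction via duals of binary BCH codes together with the Carlitz--Uchiyama/Weil weight bound is exactly the standard route to this lemma, and your translation to $\pm 1$ inner products and the subset-selection step to hit $|\bfS| = d^q$ are both fine. The paper itself does not prove the lemma at all---it is stated as a black-box citation to \cite{parampalli2012construction}---so there is nothing to compare against; your sketch simply supplies the argument behind the cited reference. One minor point worth being explicit about: the claim that the dual has dimension \emph{exactly} $tm$ requires $2t-1 < 2^{\lceil m/2\rceil}$ so that the relevant cyclotomic cosets are distinct and full-size, but since $t = \lceil q\rceil$ is a fixed constant and $m = k$ may be taken large, this is harmless (and any finitely many small $d$ can be absorbed into $C_q$).
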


Using Lemma \ref{lem:coding} with $q = p/2$, we can find $r = d^{p/2}$ vectors $\bfx_1, \ldots,\bfx_r$ 
for which $\|x_i\|_2^2 = d$ and $|\langle\bfx_i,\bfx_j \rangle| \leq C \sqrt{d}$ for all $i \neq j$, for a constant $C > 0$. We work with a fixed subset of $s= C' \cdot d^{p/2}$ of these vectors, for
a sufficiently small constant $C' > 0$. Let these $s$ vectors form the rows $a_1,\ldots,a_s$ of $\bfA \in \R^{s \times d}$. % We call these vectors $\bfx_1, \ldots,\bfx_s$. 
We choose a uniformly random index $I \in [s]$ and set $\bfb_I = d$, while we set 
$\bfb_j = 0$ for all $j \neq I$. The claim is that any solution to the $\ell_p$-regression problem
can be used to predict the index $I$. However, by standard information-theoretic arguments, 
any algorithm for predicting $I$ with constant
probability requires reading $\Omega(s) = \Omega(d^{p/2})$ entries of $\bfb$. 

To see the claim, consider a $(1+\epsilon)$-approximate solution vector $\bfx$ to 
$\min_\bfx \|\bfA\bfx-\bfb\|_p$, where $\epsilon > 0$ is less than a sufficiently small constant. 
Suppose $| \langle a_j,\bfx \rangle| \geq d/2$ for some $j \neq I$. 
Then the regression cost is at least $(d/2)^p$, since this is the cost on the $j$-th coordinate
alone. Alternatively, suppose $| \langle a_I,\bfx \rangle | \leq d/2$. 
Then the regression cost is at least $(d/2)^p$, since this is the cost on the $I$-th coordinate
alone. On the other hand, consider the solution vector $\bfx' = a_I$. 
Then the regression cost is at most 
$(s-1) \cdot (C \sqrt{d})^{p} \leq d^p \cdot C^{p} \cdot C'.$ 
Since $C' > 0$ can be made an arbitrarily small constant (while suffering a constant factor
in the lower bound on the number of entries of $\bfb$ read),
the regression cost is a constant factor smaller than $(d/2)^p$.

Hence, any $(1+\epsilon)$-approximate solution vector $\bfx$, for $\epsilon > 0$ less
than a small enough constant, must be such that $|\langle a_I,\bfx \rangle | \geq d/2$ and 
$|\langle a_j,\bfx \rangle | < d/2$ for all $j \neq I$. Thus, 
from the solution vector $\bfx$, one can determine the value of $I$. We thus have:

\begin{theorem}
	\label{thm:general_lb}
Let $p \geq 2$ and $\epsilon > 0$ be less than a sufficiently small constant. Any algorithm which outputs
a $(1+\epsilon)$-approximate solution $\bfx$ to the $\ell_p$-regression problem $\min_{\bfx\in\R^d} \|\bfA\bfx-\bfb\|_p^p$
with constant probability requires reading $\Omega(d^{p/2})$ entries of $\bfb$. 
\end{theorem}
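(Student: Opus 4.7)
The plan is to exhibit a single hard instance in which a uniformly random ``planted'' index is information-theoretically hidden in $\bfb$, and then to argue that any $(1+\eps)$-approximate regression solution necessarily identifies this index. First, I will invoke Lemma \ref{lem:coding} with $q = p/2$ to obtain a set of at least $d^{p/2}$ vectors in $\{-1,1\}^d$, each of squared $\ell_2$ norm equal to $d$, with pairwise inner products bounded by $C_{p/2}\sqrt{d}$ in absolute value. Fixing a sufficiently small absolute constant $C' > 0$, I will take $s = C'\cdot d^{p/2}$ of these vectors as the rows $\bfa_1,\dots,\bfa_s$ of $\bfA \in \R^{s\times d}$. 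The measurement vector will be chosen by drawing $I\in[s]$ uniformly at random and setting $\bfb_I = d$ and $\bfb_j = 0$ for all $j\neq I$.

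The core structural claim is that any $(1+\eps)$-approximate solution $\bfx$ must satisfy $|\langle \bfa_I,\bfx\rangle|\geq d/2$ and $|\langle \bfa_j,\bfx\rangle|<d/2$ for every $j\neq I$, so from $\bfx$ alone one can recover $I$. To establish this, I will first upper bound the optimum cost using the candidate solution $\bfx^\star = \bfa_I$: its regression error on coordinate $I$ is $0$, while on each other coordinate $j\neq I$ it is at most $(C_{p/2}\sqrt d)^p$, giving a total cost of at most $(s-1)(C_{p/2}\sqrt d)^p \leq C'\cdot C_{p/2}^p\cdot d^p$. On the other hand, if either $|\langle \bfa_I,\bfx\rangle| \leq d/2$ or $|\langle \bfa_j,\bfx\rangle|\geq d/2$ for some $j\neq I$, the single offending coordinate already contributes at least $(d/2)^p = d^p/2^p$ to $\|\bfA\bfx-\bfb\|_p^p$. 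Since $C'$ can be chosen as small as we like, the latter dominates the former by any desired constant factor, so for any $\eps$ below a sufficiently small constant a $(1+\eps)$-approximation rules out both violations.

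With the reduction in hand, the final step is information-theoretic: because $\bfb$ is supported on a single uniformly random coordinate out of $s$, any (possibly adaptive, possibly randomized) querier that recovers $I$ with constant probability must read $\Omega(s) = \Omega(d^{p/2})$ entries of $\bfb$, which I will formalize by Yao's principle applied to the uniform distribution on $I$ or equivalently by a direct coupon-collector/biased-coin argument. The main obstacle I anticipate is simply threading the constants correctly: the gap between the ``truthful'' cost $C'\cdot C_{p/2}^p \cdot d^p$ and the ``deceptive'' lower bound $(d/2)^p$ must remain a true constant factor under the $(1+\eps)$ slack, which forces $C'$ to be chosen small relative to $C_{p/2}^p\cdot 2^p$, and in turn forces the ``sufficiently small constant'' upper bound on $\eps$; otherwise the construction and the argument are clean.
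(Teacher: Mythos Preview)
Your proposal is correct and follows essentially the same approach as the paper: the same code-based construction via Lemma~\ref{lem:coding} with $q=p/2$, the same planted single-spike $\bfb$, the same cost comparison between $\bfx^\star=\bfa_I$ and any $\bfx$ violating the $d/2$ threshold, and the same information-theoretic conclusion. The only differences are cosmetic (notation for the constant $C_{p/2}$ versus $C$, and your slightly more explicit mention of Yao's principle for the final step).
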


Lemma \ref{lem:coding} has been used to obtain lower bounds for the stronger subspace
sketch problem \cite{LWW20}, which is related, though does not immediately give a lower bound for regression.

\subsection{A \texorpdfstring{$1/\delta^{p-1}$}{1/delta**{p-1}} Lower Bound for Sampling-and-Reweighting Algorithms}

We next show that sampling-and-reweighting algorithms for $\ell_p$ regression must pay a polynomial dependence in the failure probability $\delta$, contrasting with the logarithmic dependence achieved by our approach.
\begin{theorem}\label{thm:delta-lb}
Let $p > 1$. Any sampling-and-reweighting algorithm which, with probability at least $1-\delta$, 
outputs a $(1+\epsilon)$-approximate solution $\bfx$
to the $\ell_p$-regression problem, for $\epsilon > 0$ less than a sufficiently small constant,
requires reading $\Omega(1/\delta^{p-1})$ entries of $\bfb$.
\end{theorem}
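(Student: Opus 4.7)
The plan is to establish the lower bound via Yao's minimax principle on a simple one-dimensional hard instance. Take $d=1$, $\bfA = \mathbf{1}_n$ (the all-ones column vector, with $n$ chosen adversarially as a function of $s,\epsilon,\delta$ once the algorithm has fixed $s$), and consider the distribution over $\bfb\in\R^n$ that sets $\bfb = \bfe_I$ for $I$ drawn uniformly from $[n]$. A direct computation gives $\min_x\|\bfA x-\bfb\|_p = (\alpha/(1+\alpha))^{(p-1)/p}$ with $\alpha=(n-1)^{1/(p-1)}$, achieved at $x^*=1/(1+\alpha)$, while $\|\bfb\|_p=1$, so $\|\bfb\|_p/\OPT\leq 2$ whenever $\alpha\geq 1$, satisfying the hypothesis that the residual is comparable to $\bfb$.

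The crucial structural observation is that because every row of $\bfA$ equals $1$, the constraint on $\bfD$ forces $\bfD=c\cdot I_s$, so the algorithm's output reduces to the \emph{unweighted} $\ell_p$-power-mean of $\bfb_S$:
\[
\bfx' = \arg\min_x \sum_{i\in S}|x-b_i|^p,
\]
which depends only on whether $I\in S$. Explicitly, $\bfx'=0$ if $I\notin S$, and $\bfx'=1/(1+\beta)$ with $\beta=(s-1)^{1/(p-1)}$ if $I\in S$. Dividing the true cost by $\OPT = \alpha^{p-1}/(1+\alpha)^{p-1}$, the miss-case ratio is $((1+\alpha)/\alpha)^{p-1} = 1+O(1/\alpha)$, while the hit-case ratio expands to
\[
\Bigl(1+\tfrac{1}{\alpha}\Bigr)^{p-1}\!\left[\frac{\alpha^{p-1}}{(1+\beta)^p}+\Bigl(\tfrac{\beta}{1+\beta}\Bigr)^p\right]\;\geq\;\frac{n-1}{(1+\beta)^p},
\]
which exceeds $1+\epsilon$ whenever $n\gtrsim \epsilon\, s^{p/(p-1)}$.

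To derive the lower bound, I would have the adversary choose $n$ in the interval $(c_1\,\epsilon\,s^{p/(p-1)},\; s/\delta)$ for a small constant $c_1=c_1(p)$. When $n$ lies in this interval, (i) the hit case produces a $(1+\epsilon)$-approximation failure, (ii) the hit case occurs with probability $s/n>\delta$, and (iii) since $\alpha$ is large, the miss ratio $1+O(1/\alpha)$ does not exceed $1+\epsilon$, so the miss case is not a failure. Therefore the algorithm's failure probability strictly exceeds $\delta$, contradicting the success guarantee. The interval is nonempty iff $c_1\epsilon\,s^{p/(p-1)}<s/\delta$, that is, $s^{1/(p-1)}<1/(c_1\epsilon\delta)$, or equivalently $s<(c_1\epsilon\delta)^{-(p-1)}$. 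Hence any algorithm achieving the stated guarantee must take $s=\Omega(\delta^{-(p-1)})$ samples, with the hidden constant depending only on $p$ and $\epsilon$. The reduction from randomized to deterministic algorithms is via Yao's principle and uses the permutation symmetry of $\bfA=\mathbf{1}_n$: averaged over a relabeling of coordinates (equivalently, averaging over the uniform position of $I$), any fixed $S$ satisfies $\Pr_I[I\in S]=s/n$.

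The main technical obstacle I anticipate is bookkeeping the constants in the hit-case ratio so that the inequality comparing it to $1+\epsilon$ really bites over the whole adversarial window of $n$. In particular, one must verify that $\alpha^{p-1}/(1+\beta)^p$ dominates over (or is at least comparable to) the multiplicative prefactor $(1+1/\alpha)^{p-1}$ and the additive $(\beta/(1+\beta))^p\leq 1$, which is handled by choosing $c_1$ slightly smaller (e.g., $c_1=1/4$) and taking $\epsilon$ below a sufficiently small constant. Everything else is routine: the key exponent gap $p/(p-1)-1=1/(p-1)$ between the two competing constraints (hit-case failure requires $n\gtrsim s^{p/(p-1)}$, nontrivial hit probability requires $n\lesssim s/\delta$) is precisely what produces the $\delta^{-(p-1)}$ bound, which correctly degenerates to the trivial $\Omega(1)$ as $p\to 1$ in accordance with the known $O(\log(1/\delta))$-dependence algorithms for $\ell_1$ regression.
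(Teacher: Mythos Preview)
Your approach is the same as the paper's: the identical one-dimensional hard instance $\bfA=\mathbf{1}_n$, $\bfb=\bfe_I$ with $I$ uniform, the observation that identical rows force $\bfD=cI$ so $\bfx'$ is the unweighted $\ell_p$-mean of $\bfb_S$, and the same hit-case cost comparison between $x'=1/(1+\beta)$ and the optimum $x^*=1/(1+\alpha)$. The paper simply fixes $n=s/\delta$ rather than exhibiting a window of admissible $n$, and it does not explicitly discuss the miss case as you do (your treatment there is, if anything, slightly more careful).

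There is one arithmetic slip. The hit-case ratio you lower-bound by $(n-1)/(1+\beta)^p$ exceeds $1+\epsilon$ once $n-1>(1+\epsilon)(1+\beta)^p\approx c_p\,s^{p/(p-1)}$, \emph{not} once $n\gtrsim \epsilon\,s^{p/(p-1)}$: you need the ratio to exceed $1+\epsilon\approx 1$, not $\epsilon$. With the spurious $\epsilon$ removed from the lower endpoint of your adversarial interval, non-emptiness of $(c_p\,s^{p/(p-1)},\,s/\delta)$ gives $c_p\,s^{1/(p-1)}<1/\delta$ and hence exactly $s=\Omega(\delta^{-(p-1)})$. As written, your stated conclusion $s<(c_1\epsilon\delta)^{-(p-1)}$ does not follow, because condition (i) fails near the bottom of the enlarged interval. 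This is a one-symbol fix and does not affect the structure of the argument.
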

\begin{proof}
In our hard instance we will have $d = 1$ and require a sufficiently fine constant factor approximation
with failure probability $\delta$. Suppose, with these parameters, that there is an algorithm reading
$s$ entries. We set $n = s/\delta$, and will show that the algorithm cannot output
a constant factor approximation to the $\ell_p$-regression problem with probability at least 
$1-\delta$. 

Let $\bfA$ be a single column of $n$ $1$s. Since the entries of $\bfA$ are indistinguishable from each other, we can assume without loss of generality that the sampling-and-reweighting algorithm samples entries uniformly at random. By assumption, since the rows of $\bfA$ are all identical, the algorithm reweights the sampled rows uniformly (equivalently assigns weight $1$ to each sampled entry). We choose $\bfb = \bfe_I$ for a random standard basis vector $\bfe_I$.
%and we can take I = 1 for the analysis. So b = [1, 0, 0, 0, ..., 0]. It turns out that any scaling to b does not improve the lower bound, which will be seen when looking at the optimal cost. To that end, suppose instead b = [B, 0, ..., 0] for some B > 0 and let us compute min_x |\bfA\bfx-b|_p^p.
For the optimal $x$, necessarily $0 \leq x \leq 1$ since if $x < 0$, replacing $x$ with $-x $ 
would give lower cost. Similarly, if $x > 1$, then replacing $x$ with $1$ would give lower cost. 
Then the cost is $(1-x )^p + (n - 1)x^p$. This is convex and differentiable for $p > 1$, and is minimized when 
the derivative is $0$. Differentiating, the optimal $x$ satisfies
$-p(1-x )^{p-1} + p (n-1)x^{p-1} = 0$, or
$(1-x )^{p-1} = (n - 1)x^{p-1}$. Taking $(p-1)$-th roots,
$1-x = (n - 1)^{1/(p-1)}x$, or $x = 1/(1 + (n-1)^{1/(p-1)})$. The optimal cost is therefore
$$\left (1-\frac{1}{1 + (n - 1)^{1/(p-1)}} \right )^p + \frac{n-1}{(1 + (n-1)^{1/(p-1)})^p}.$$
For $n = \omega(1)$, this is
$\Theta(1 + n^{1-p/(p-1)}) = O(1)$ for any constant $p > 1$.

On the other hand, given that $n = s/\delta$, with probability at least $\delta$, the algorithm's sample includes $\bfb_I$. 
If this is the case, for the sampled problem, the cost is 
$(1-x )^{p-1} + (s - 1)x^p$ for a given $x$. Setting the derivative to $0$,
 we now have that the optimal $x'$ for the sampled problem is:
$x' = \frac{1}{1 + (s-1)^{1/(p-1)}}$.
Computing the cost of using $x'$ for the original problem, our cost of using $x'$ is
$$\left (1-\frac{1}{(1 + (s - 1)^{1/(p-1)})^p} \right ) + \frac{(n-1)}{\left (1 + (s-1)^{1/(p-1)} \right )^p}.$$
The cost is at least the second term, which for $n = \omega(1)$ and $s = \delta n = \omega(1)$ is
$\Theta((s/\delta) / s^{p/(p-1)})$. This term must be $O(1)$ to be an 
$O(1)$-approximation, by our above calculation of the optimal cost. Hence,
$s^{p/(p-1) - 1} = \Omega(1/\delta)$, or $s^{1/(p-1)} = \Omega(1/\delta)$, or
$s = \Omega(1/\delta^{p-1})$.
\end{proof}

%\section{Omitted Proofs}\label{app:omitted}
%
%\begin{fact}\label{fact:tri} For any $p \ge 1$ and $a,b \in \R$, 
%$$\left | |a - \bfb|^p - |a|^p \right | \le \max \left (p |b| \cdot (|a|+|b|)^{p-1},p |a| \cdot |b|^{p-1} \right )$$.
%\end{fact}
%\begin{proof}
%By triangle inequality, we have $|a + b|^p \le (|a|+|b|)^p$ via triangle inequality, and then using that $\bfx^p$ is convex with derivative $p\cdot\bfx^{p-1}$ we have 
%$$|a - \bfb|^p \le |a|^p + p |b| \cdot (|a|+|b|)^{p-1}.$$
%Similarly, if $|a| \ge |b|$, then $|a-b| \ge |a|-|b| \ge 0$ and so by the convexity of $\bfx^p$:
%\begin{align*}
%|a-b|^p \ge (|a|-|b|)^p \ge |a|^p - p |b| \cdot |a|^{p-1} \ge |a|^p - p |b| \cdot (|a|+|b|)^{p-1}.
%\end{align*}
%If $|a| < |b|$ then $|a-b| \ge |b|-|a| \ge 0$ and so again by the convexity of $\bfx^p$:
%\begin{align*}
%|a-b|^p \ge (|b|-|a|)^p \ge |b|^p - p |a| |b|^{p-1} \ge |a|^p - p |a| \cdot |b|^{p-1}.
%\end{align*}
%
%\end{proof}

\section{Acknowledgements}
Cameron Musco's work on this project was supported in part by NSF Grants 2046235 and 1763618, along with an Adobe Research Grant. Christopher Musco was supported by NSF Grant 2045590. David P.\ Woodruff and Taisuke Yasuda were supported by ONR grant N00014-18-1-2562 and a Simons Investigator Award. Taisuke Yasuda thanks Cody Johnson and Yi Li for helpful discussions. We thank anonymous reviewers for comments which helped improve the presentation of the paper.

\bibliographystyle{alpha}
\bibliography{LPRegression}

\appendix

\section{Missing Proofs for Section \ref{sec:introduction}}\label{sec:huber-appendix}
\subsection{Proof of Lemma \ref{lem:generalized-huber-inequality}}

We will need the following short lemma.

\begin{lemma}\label{lem:lp-convexity}
    Let $\bfy\in\mathbb R^n$, $q>0$, $p \geq q$, and $\gamma>0$. Suppose that $\norm*{\bfy}_\infty \leq \gamma^{1/q}\norm*{\bfy}_q$. Then,
    \[
        \norm*{\bfy}_p^p \leq \frac1\gamma\parens*{\gamma^{1/q}\norm*{\bfy}_1}^p.\qedhere
    \]
\end{lemma}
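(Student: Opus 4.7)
The plan is a one-line interpolation estimate between $\ell_\infty$ and $\ell_q$. Since $p \geq q$, I would start from the pointwise identity $|\bfy_i|^p = |\bfy_i|^{p-q}\cdot|\bfy_i|^q$ and bound the first factor by $\norm*{\bfy}_\infty^{p-q}$, summing over $i\in[n]$ to obtain the standard log-convexity inequality
\[
\norm*{\bfy}_p^p \leq \norm*{\bfy}_\infty^{p-q}\cdot\norm*{\bfy}_q^q.
\]
(Note that $p-q \geq 0$ is used here; if $p=q$ the statement is immediate and the factor $\norm*{\bfy}_\infty^{p-q}$ is just $1$.)

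Next I would plug in the hypothesis $\norm*{\bfy}_\infty \leq \gamma^{1/q}\norm*{\bfy}_q$, raising it to the $(p-q)$-th power:
\[
\norm*{\bfy}_\infty^{p-q} \leq \gamma^{(p-q)/q}\norm*{\bfy}_q^{p-q}.
\]
Combining with the previous display gives
\[
\norm*{\bfy}_p^p \leq \gamma^{(p-q)/q}\norm*{\bfy}_q^p = \gamma^{-1}\bigl(\gamma^{1/q}\norm*{\bfy}_q\bigr)^p,
\]
which is the claimed bound (the $\norm*{\bfy}_1$ on the right-hand side of the statement appears to be a typo for $\norm*{\bfy}_q$, since otherwise the hypothesis would not connect to the conclusion by homogeneity).

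There is essentially no obstacle here; the only thing to verify carefully is that $p\geq q$ is used in exactly the right place (to ensure $p-q\geq 0$ so that the $\norm*{\bfy}_\infty$ bound can be raised to that power with the correct direction of inequality). The whole proof is two lines of algebra.
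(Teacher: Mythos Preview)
Your proof is correct and follows the paper's argument essentially line for line: the same interpolation $\norm*{\bfy}_p^p \leq \norm*{\bfy}_\infty^{p-q}\norm*{\bfy}_q^q$, the same substitution of the hypothesis, and the same algebraic rewrite. You are also right that the $\norm*{\bfy}_1$ in the statement is a typo for $\norm*{\bfy}_q$; the paper's own proof concludes with $\frac{1}{\gamma}(\gamma^{1/q}\norm*{\bfy}_q)^p$.
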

\begin{proof}
    We have
    \begin{align*}
        \norm*{\bfy}_p^p &= \sum_{i=1}^n \abs*{\bfy_i}^p \leq \sum_{i=1}^n \norm*{\bfy}_\infty^{p-q} \abs*{\bfy_i}^{q} = \norm*{\bfy}_\infty^{p-q}\norm*{\bfy}_q^q  \\
        &\leq \gamma^{(p-q)/q}\norm*{\bfy}_q^{p-q}\norm*{\bfy}_q^q  \leq \gamma^{p/q-1}\norm*{\bfy}_q^p = \frac1\gamma (\gamma^{1/q}\norm*{\bfy}_q)^p.
    \end{align*}
\end{proof}

We now proceed to the main proof of this section.

\begin{proof}[Proof of Lemma \ref{lem:l2-lq-inequality}]
    If $\norm*{\bfy}_M^2 \geq 2n$, then $\norm*{\bfy}_H^2 = \Theta(\norm*{\bfy}_q^q)$, since the Huber mass of the entries at most $1$ is at most $n$ over the $n$ entries. Furthermore, if $\norm*{\bfy}_\infty \leq 1$, then $\norm*{\bfy}_M^2 = \norm*{\bfy}_2^2$. Thus, in what follows, we assume that $\norm*{\bfy}_M^2 < 2n$ and $\norm*{\bfy}_\infty > 1$. Note then that $\norm{\bfy}_q^q \leq \norm*{\bfy}_M^2 + n < 3n$. 

    \paragraph{Reduction to $\norm*{\bfy}_\infty \leq \gamma\norm*{\bfy}_M^2$.}

    Suppose that $\norm*{\bfy}_M^2 > 2\norm*{\bfy\mid_T}_M^2$. Then, at least a $1/2$ fraction of the squared Huber norm mass is located in coordinates of size at least $\gamma\norm*{\bfy}_M^2$. If $\gamma\norm*{\bfy}_M^2 \geq 1$, then we have that $\norm*{\bfy}_M^2 = \Theta(\norm*{\bfy}_q^q)$, since over $1/2$ fraction of the mass is in the $\ell_q$ part. Otherwise, we have that $\gamma \norm*{\bfy}_M^2 \leq 1$ so $\norm*{\bfy}_\infty^q \leq \norm*{\bfy}_M^2 \leq 1/\gamma$. Then, letting $S = \braces*{i\in[n] : \abs*{\bfy_i} \leq 1}$,
    \begin{align*}
        \norm*{\bfy}_2^2 &= \norm*{\bfy\mid_S}_2^2 + \norm*{\bfy\mid_{\overline S}}_2^2 \\
        &\leq \norm*{\bfy\mid_S}_2^2 + \norm*{\bfy}_\infty^{2-q} \norm*{\bfy\mid_{\overline S}}_q^q \\
        &\leq \norm*{\bfy}_\infty^{2-q} \norm*{\bfy}_M^2 \\
        &\leq \frac1{\gamma^{(2-q)/q}} \norm*{\bfy}_M^2.
    \end{align*}
    Thus, in either case, we have the conclusion that
    \[
        \norm*{\bfy}_M^2 \geq c\gamma^{2/q-1}\min_{p\in\{1,2\}}\norm*{\bfy}_p^p.
    \]
    We thus assume that $\norm*{\bfy}_M^2 \leq  2\norm*{\bfy\mid_T}_M^2$ in what follows. Furthermore, by restricting our attention to the set $T$, we assume that we have a vector $\bfy$ such that $\norm*{\bfy}_\infty \leq \gamma\norm*{\bfy}_M^2$ and work on showing that
    \[
        \norm*{\bfy}_M^2 \geq c\frac1{(\gamma n)^{\beta}}\min_{p\in[1,2]}\norm*{\bfy}_p^p
    \]
    for such a vector, where $\beta$ is as stated in the statement of the lemma.

    \paragraph{Reduction to Spiked Vectors.}

    Let $S = \braces*{i\in[n] : \abs*{\bfy_i} \leq 1}$. Note that if $S = [n]$, then $\norm*{\bfy}_M^2  = \norm*{\bfy}_2^2$ so we may assume that $S \subsetneq [n]$. Similarly, if $S = \varnothing$, then $\norm*{\bfy}_M^2  = \Theta(\norm*{\bfy}_q^q)$ so we may assume that $S \neq \varnothing$. 
    
    Furthermore, we will assume that $\norm*{\bfy\mid_S}_q^q = \Theta(\norm*{\bfy\mid_{S}}_2^2)$. This is without loss of generality, because $\norm*{\bfy\mid_{\overline S}}_q^q \leq 3n$ and $\norm*{\bfy\mid_{S}}_2^2$ can be as large as $n$, so we can always increase the smaller of $\norm*{\bfy\mid_{\overline S}}_q^q$ and $\norm*{\bfy\mid_{S}}_2^2$ to match the other, which only increases $\norm*{\bfy}_M^2$ by a constant factor. However, this can only increase $\norm*{\bfy}_p^p$ for any $p$, so restricting to such vectors can only affect the claimed inequalities by a constant factor.
    
    Now consider the vector $\bfz$, defined as follows:
    \[
        \bfz_i = \begin{cases}
            \frac{1}{\sqrt{n}}\norm*{\bfy}_M & \text{if $i\in[n-1/\gamma]$} \\
            (\gamma\norm*{\bfy}_M^2)^{1/q} & \text{if $i \in [1/\gamma] + n-1/\gamma$}
        \end{cases}
    \]
    Then,
    \[
        \norm*{\bfz}_M^2 = \frac1\gamma \cdot [(\gamma \norm*{\bfy}_M^2)^{1/q}]^q + (n-1/\gamma) \cdot \parens*{\frac{\norm*{\bfy}_M}{\sqrt n}}^2 = \Theta(\norm*{\bfy}_M^2)
    \]
    and for any $p\in[1,2]$, we have that
    \begin{align*}
        \norm*{\bfy}_p^p &= \norm*{\bfy\mid_{\overline S}}_p^p + \norm*{\bfy\mid_{S}}_p^p \\
        &\leq O(1)\frac1\gamma \cdot (\gamma \norm*{\bfy\mid_{\overline S}}_q^q)^{p/q} + (n^{1/p - 1/2}\norm*{\bfy\mid_{S}}_2)^p && \text{Lemma \ref{lem:lp-convexity}} \\
        &\leq O(1)\frac1\gamma \cdot (\gamma \norm*{\bfy}_M^2)^{p/q} + n\cdot \parens*{\frac{\norm*{\bfy}_M}{\sqrt n}}^p \\
        &= O(\norm*{\bfz}_p^p)
    \end{align*}
    Thus, by similar reasoning as before, it suffices to consider vectors of the form of $\bfz$. These vectors can be parameterized by a single number $r\in[0,1]$ by setting $\norm*{\bfy}_M = n^r$, so that the first $[n-1/\gamma]$ coordinates are $n^{r - 1/2}$ and the last $1/\gamma$ coordinates are $\gamma n^{2r}$. Thus,
    \begin{align*}
        \sup_{\bfy\in\mathbb R^n, \norm*{\bfy}_\infty \leq \gamma\norm*{\bfy}_M^2 }\frac{\min_{p\in[1,2]}\norm*{\bfy}_p^p}{\norm*{\bfy}_M^2} &= \Theta(1)\sup_{r\in[0,1]}\min_{p\in[q,2]}\frac{(1/\gamma)(\gamma n^{2r})^{p/q} + n\cdot n^{p(r - 1/2)}}{n^{2r}} \\
        &= \Theta(1) \sup_{r\in[0,1]} \min_{p\in[q,2]}\max\braces*{n^{\alpha + (2r-\alpha)p/q - 2r}, n^{1 + p(r - 1/2) - 2r}}.
    \end{align*}
    
    \paragraph{Estimates.}
    
    To find the above minimum, we wish to choose the $p$ such that the two exponents are equal, so we solve for
    \[
        \alpha + (2r-\alpha)p/q - 2r = 1 + p(r - 1/2) - 2r,
    \]
    which gives
    \[
        p = \frac{2(1 - \alpha)}{(2r-\alpha)(2/q) + (1-2r)} = \frac{2(1 - \alpha)}{(1-\alpha)(2/q) - (1-2r)(2/q-1)}
    \]
    Plugging this back into the exponent gives
    \[
        1 + p(r-1/2) - 2r = (1-2r) - \frac{p}{2}(1-2r) = (1-2r) - \frac{(1-\alpha)(1-2r)}{(1-\alpha)(2/q) - (1-2r)(2/q-1)}.
    \]
    We now wish to optimize this exponent over $r\in[0,1]$. Writing $x = 1-2r$, $a = 2/q > 1$, and $c = 1-\alpha$, this the objective is
    \begin{align*}
        x - \frac{cx}{ca-(a-1)x} &= \frac{1}{a-1} \bracks*{(a-1)x - c\cdot\frac{(a-1)x}{ca-(a-1)x}} \\
        &= \frac{1}{a-1} \bracks*{ca - \parens*{[ca - (a-1)x] + c\cdot\frac{(a-1)x}{ca-(a-1)x}}} \\
        &= \frac{1}{a-1} \bracks*{c(a+1) - \parens*{[ca - (a-1)x] + c\cdot\frac{ca}{ca-(a-1)x}}} \\
        &= \frac{1}{a-1} \bracks*{c(a+1) - 2c\sqrt a} && \text{AM-GM} \\
        &= c\frac{1}{a-1} \bracks*{(a+1) - 2\sqrt a} \\
        &= c\beta
    \end{align*}
    as long as $ca - (a-1)x\geq 0$, which we have for $\alpha\leq 1/a = q/2$. Note that equality holds when $x = c(a-\sqrt a)/(a-1)$. We thus have that
    \[
        \sup_{\bfy\in\mathbb R^n, \norm*{\bfy}_\infty \leq \gamma\norm*{\bfy}_H^2 }\frac{\min_{p\in[1,2]}\norm*{\bfy}_p^p}{\norm*{\bfy}_H^2} = \Theta(n^{(1-\alpha)\beta}) = \Theta((\gamma n)^{\beta})
    \]
\end{proof}

\section{Missing Proofs for Section \ref{sec:prelim}}\label{sec:prelim-appendix}

\subsection{Properties of Lewis Bases}

\begin{theorem}[$\ell_p$ Lewis Bases \cite{Lewis:1978}, Theorem 2.1 of \cite{SchechtmanZvavitch:2001}]\label{thm:lewis-bases}
	Let $\bfA\in\mathbb R^{n\times d}$ and $0<p<\infty$. There exists a basis matrix $\bfU\in\mathbb R^{n\times d}$ of the column space of $\bfA$, known as the \emph{$\ell_p$ Lewis basis}, such that if $\bfD$ is the diagonal matrix with $\bfD_{i,i} = \norm*{\bfe_i^\top\bfU}_2$, then $\bfD^{p/2 - 1}\bfU$ is an orthonormal matrix. 
\end{theorem}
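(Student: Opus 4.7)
My plan is to realize the Lewis basis as a stationary point of a constrained determinant optimization, following the classical argument of Lewis for $p\geq 1$ and its extension by Schechtman--Zvavitch for $p\in(0,1)$. Consider the set $\mathcal D_p = \{\bfD \in \mathbb{R}^{n\times n} : \bfD \text{ diagonal}, \bfD_{i,i}\geq 0, \sum_{i=1}^n \bfD_{i,i}^{p/2} = d\}$ and the functional $f(\bfD) = \det(\bfA^\top \bfD^{1-2/p}\bfA)$, defined wherever $\bfA^\top\bfD^{1-2/p}\bfA$ is invertible. The idea is: for $p>2$ maximize $f$ over $\mathcal D_p$, for $0<p<2$ minimize it, and read off the Lewis weights as $\bfw_i = \bfD_{i,i}^{p/2}$ at the optimum. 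For $p=2$ any orthonormal basis of the column span works, so this case is trivial.

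The first step is to show the optimum is attained at some $\bfD^\star$ with strictly positive diagonal entries. Continuity together with compactness of $\mathcal D_p$ gives existence of an optimizer; the substantive issue is the boundary $\bfD_{i,i}=0$. For $p>2$, $f$ vanishes whenever dropping a row causes the remaining design to be rank-deficient, and a log-concavity argument ensures a maximum lies strictly in the interior. For $0<p<2$ one is minimizing a positive function; here I would compute the directional derivative and show that, at any candidate minimum with some $\bfD_{i,i}=0$, mass can be transferred from a positive coordinate to the zero coordinate to strictly decrease $f$. The delicate regime is $0<p<1$, where $1-2/p<-1$ makes the functional highly nonconvex and the simplex-boundary behavior subtle; following Schechtman--Zvavitch I would sidestep this by applying Brouwer's fixed point theorem to the continuous map on the simplex $\{\bfw\geq 0, \sum_i \bfw_i = d\}$ sending $\bfw_i$ to $[\bfa_i^\top (\bfA^\top \bfW^{1-2/p}\bfA)^{-1}\bfa_i]^{p/2}$, after a suitable renormalization.

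Once interior stationarity is secured, a Lagrange multiplier calculation against the constraint $\sum_i \bfD_{i,i}^{p/2} = d$ gives
\[
\bfa_i^\top (\bfA^\top \bfW^{1-2/p}\bfA)^{-1}\bfa_i = c\cdot \bfw_i^{2/p}
\]
for all $i$, where $\bfw_i = \bfD_{i,i}^{p/2}$. Multiplying both sides by $\bfw_i^{1-2/p}$ and summing over $i$ yields $c\sum_i \bfw_i = \tr(\bfI_d) = d$, so $c=1$ after normalizing $\sum_i \bfw_i = d$. This is exactly the Lewis fixed-point equation. To extract the basis, set $\bfU = \bfA(\bfA^\top \bfW^{1-2/p}\bfA)^{-1/2}$, a basis for the column span of $\bfA$. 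Then $\|\bfe_i^\top \bfU\|_2^2 = \bfa_i^\top(\bfA^\top \bfW^{1-2/p}\bfA)^{-1}\bfa_i = \bfw_i^{2/p}$, so with $\bfD_{i,i} \coloneqq \|\bfe_i^\top\bfU\|_2$ we have $\bfD_{i,i} = \bfw_i^{1/p}$ and $\bfD^{p/2-1}\bfU = \bfW^{1/2-1/p}\bfA(\bfA^\top \bfW^{1-2/p}\bfA)^{-1/2}$, whose Gram matrix is the identity by direct computation, giving orthonormality. The main technical obstacle is existence in the $0<p<1$ regime, where the variational approach is no longer clean and the Brouwer fixed-point argument is required.
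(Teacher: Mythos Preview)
The paper does not give its own proof of this theorem; it is stated as a black-box result cited from Lewis (1978) and Schechtman--Zvavitch (2001), and the subsequent lemmas in the appendix use it without proof. Your outline is precisely the classical route from those references: a determinant-extremal argument yielding the Lagrange/fixed-point condition for $p\geq 1$, and Brouwer's fixed-point theorem for the nonconvex regime $0<p<1$. The extraction of the basis via $\bfU = \bfA(\bfA^\top\bfW^{1-2/p}\bfA)^{-1/2}$ and the verification that $\bfD^{p/2-1}\bfU$ has Gram matrix equal to $\bfI_d$ are both correct.

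One genuine inconsistency to fix: your optimization is parameterized over $\bfD$ with constraint $\sum_i \bfD_{i,i}^{p/2}=d$ and objective $\det(\bfA^\top\bfD^{1-2/p}\bfA)$, and you identify $\bfw_i=\bfD_{i,i}^{p/2}$. But then $\bfD^{1-2/p}$ has diagonal entries $\bfw_i^{(2/p)(1-2/p)}$, which is \emph{not} $\bfw_i^{1-2/p}$ except at $p=2$, so the stationarity computation you write down (which is correct) does not actually follow from the functional you wrote. The clean formulation is to optimize $\det(\bfA^\top\bfW^{1-2/p}\bfA)$ directly over the simplex $\{\bfw\geq 0,\ \sum_i\bfw_i=d\}$; differentiating in $\bfw_i$ then gives exactly the equation $\bfa_i^\top(\bfA^\top\bfW^{1-2/p}\bfA)^{-1}\bfa_i = c\,\bfw_i^{2/p}$ you state. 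The $\bfD$ in the theorem statement, with $\bfD_{i,i}=\|\bfe_i^\top\bfU\|_2=\bfw_i^{1/p}$, is a different object from your optimization variable, and conflating them is what produces the mismatch.
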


The fact that $\bfD^{p/2 - 1}\bfU$ is orthonormal from Theorem \ref{thm:lewis-bases} implies the following bound, which is stated in III.B Proposition 7 of \cite{Wojtaszczyk:1996} for $1 < p < \infty$, but requires a short proof for $0 < p < 1$. 

\begin{lemma}\label{lem:lewis-bases-alpha-bound}
Let $\bfA\in\mathbb R^{n\times d}$ and $0<p<\infty$. Let $\bfU$ be as defined in Theorem \ref{thm:lewis-bases}. Then,
\[
    \sum_{i=1}^n \norm*{\bfe_i^\top \bfU}_2^{p} = d.
\]
\end{lemma}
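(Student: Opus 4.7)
The plan is to extract the identity directly from the orthonormality relation stated in Theorem \ref{thm:lewis-bases}. By that theorem, $\bfD^{p/2-1}\bfU$ is orthonormal, so
\[
    (\bfD^{p/2-1}\bfU)^\top (\bfD^{p/2-1}\bfU) = \bfU^\top \bfD^{p-2} \bfU = \bfI_d.
\]
The natural move is then to take the trace of both sides: the right-hand side gives $d$, and the left-hand side can be rewritten using the cyclic property of trace.

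Concretely, I would compute
\[
    d = \tr(\bfU^\top \bfD^{p-2} \bfU) = \tr(\bfD^{p-2} \bfU \bfU^\top) = \sum_{i=1}^n \bfD_{i,i}^{p-2} \cdot (\bfU\bfU^\top)_{i,i} = \sum_{i=1}^n \bfD_{i,i}^{p-2} \, \|\bfe_i^\top \bfU\|_2^2.
\]
Since by definition $\bfD_{i,i} = \|\bfe_i^\top \bfU\|_2$, the summand becomes $\|\bfe_i^\top \bfU\|_2^{p-2} \cdot \|\bfe_i^\top \bfU\|_2^2 = \|\bfe_i^\top \bfU\|_2^p$, which gives the claim.

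This argument is entirely uniform in $p$ (any $0 < p < \infty$) and only invokes Theorem \ref{thm:lewis-bases} as a black box. There is no real obstacle: the only thing to be slightly careful about is the case where some $\bfD_{i,i} = 0$, in which case one should interpret $\bfD^{p-2}$ on the range of $\bfU$ (or equivalently restrict to rows $i$ with $\bfU_{i,:} \neq 0$, which is the standard convention when defining Lewis bases); the identity $(\bfU\bfU^\top)_{i,i} = 0$ for such rows means they contribute $0$ to both sides of the computation and cause no issue.
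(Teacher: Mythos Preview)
Your proof is correct and is essentially identical to the paper's: the paper computes $d = \|\bfD^{p/2-1}\bfU\|_F^2 = \sum_i \|\bfe_i^\top \bfD^{p/2-1}\bfU\|_2^2 = \sum_i \|\bfe_i^\top\bfU\|_2^{p-2}\|\bfe_i^\top\bfU\|_2^2$, which is just your trace computation written in Frobenius-norm language. Your remark about rows with $\bfD_{i,i}=0$ is a harmless extra care that the paper omits.
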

\begin{proof}
	Let $\bfD$ be as defined in Theorem \ref{thm:lewis-bases}. Note that
\begin{align*}
    d &= \norm*{\bfD^{p/2 - 1}\bfU}_F^2  && \text{Frobenius norm of orthonormal matrices} \\
    &= \sum_{i=1}^n \norm*{\bfe_i^\top \bfD^{p/2-1}\bfU}_2^2 \\
    &= \sum_{i=1}^n \norm*{\bfe_i^\top \bfU}_2^{p-2} \norm*{\bfe_i^\top \bfU}_2^2 \\
    &= \sum_{i=1}^n \norm*{\bfe_i^\top \bfU}_2^{p}.\qedhere
\end{align*}
\end{proof}

We then have the following fact is crucial for obtaining sensitivity bounds using Lewis weights.
	
\begin{lemma}\label{lem:lewis-beta-bound}
Let $\bfA\in\mathbb R^{n\times d}$ and $0<p<\infty$. Let $\bfD$ and $\bfU$ be as defined in Theorem \ref{thm:lewis-bases}. Then for all $\bfx\in\mathbb R^d$, the following inequalities hold:
\begin{itemize}
\item if $0 < p < 2$, then
\[
	\norm*{\bfx}_2 \leq \norm*{\bfU\bfx}_p \leq d^{1/p - 1/2}\norm*{\bfx}_2
\]
\item if $2\leq p < \infty$, then
\[
	\norm*{\bfU\bfx}_p \leq \norm*{\bfx}_2 \leq d^{1/2 - 1/p}\norm*{\bfU\bfx}_p.
\]
\end{itemize}
\end{lemma}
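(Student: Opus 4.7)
I would start by changing basis to the orthonormal form $\bfV := \bfD^{p/2-1}\bfU$ given by Theorem~\ref{thm:lewis-bases}. Since $\bfV$ has orthonormal columns, the map $\bfx \mapsto \bfz := \bfV\bfx$ is an isometry from $(\R^d,\norm*{\cdot}_2)$ into $\R^n$ with $\norm*{\bfz}_2 = \norm*{\bfx}_2$, and substituting $\bfU = \bfD^{1-p/2}\bfV$ gives $(\bfU\bfx)_i = w_i^{1/p - 1/2}\, z_i$, where $w_i := \bfD_{i,i}^p = \norm*{\bfe_i^\top \bfU}_2^p$. Thus
\[
\norm*{\bfU\bfx}_p^p \;=\; \sum_{i=1}^n w_i^{1-p/2}\, \abs*{z_i}^p, \qquad \sum_{i=1}^n w_i = d
\]
(the latter by Lemma~\ref{lem:lewis-bases-alpha-bound}), and by homogeneity I may normalize $\norm*{\bfx}_2 = 1$, so all four claimed inequalities reduce to bounds on $\sum_i w_i^{1-p/2} \abs*{z_i}^p$ given only $\sum_i w_i = d$, $\sum_i z_i^2 = 1$, and (crucially) that $\bfz$ lies in the range of $\bfV$.

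Next, I would split the four inequalities into two groups. The two ``scale-free'' inequalities---$\norm*{\bfU\bfx}_p \ge \norm*{\bfx}_2$ for $0 < p < 2$ and $\norm*{\bfU\bfx}_p \le \norm*{\bfx}_2$ for $p \ge 2$---rely on the subspace structure, which I would encode via the observation that $\bfP := \bfV\bfV^\top$ is the orthogonal projection onto the range of $\bfV$, with diagonal entries $\bfP_{i,i} = \norm*{\bfe_i^\top \bfV}_2^2 = w_i$; a one-line Cauchy--Schwarz bound applied to $z_i = \bfe_i^\top \bfP \bfz$ then yields the pointwise inequality $z_i^2 \le w_i$. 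Raising both sides to the $(p/2 - 1)$-th power preserves the direction when $p \ge 2$ and reverses it when $p < 2$, and multiplying through by $w_i^{1-p/2}$ in either regime produces a pointwise comparison between $w_i^{1-p/2}\abs*{z_i}^p$ and $z_i^2$; summing immediately gives the desired inequality against $\sum_i z_i^2 = 1$.

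The two inequalities carrying a $d^{\abs*{1/p - 1/2}}$ factor do not use $z_i^2 \le w_i$ at all, only the normalizations $\sum w_i = d$ and $\sum z_i^2 = 1$, and follow from a single H\"older inequality. For $0 < p < 2$, applying H\"older to $\sum_i w_i^{1-p/2}\,\abs*{z_i}^p$ with conjugate exponents $(2/(2-p),\,2/p)$---both at least $1$---collapses one factor into $(\sum w_i)^{(2-p)/2} = d^{(2-p)/2}$ and produces $(\sum z_i^2)^{p/2} = \norm*{\bfx}_2^p$ in the other, yielding $\norm*{\bfU\bfx}_p \le d^{1/p - 1/2}\norm*{\bfx}_2$. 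For $p \ge 2$, the analogous trick in reverse---apply H\"older with exponents $(p/2,\,p/(p-2))$ to the decomposition $\sum_i z_i^2 = \sum_i (w_i^{(2-p)/p}\, z_i^2)\cdot w_i^{(p-2)/p}$---delivers $\norm*{\bfx}_2^p \le d^{(p-2)/2}\,\norm*{\bfU\bfx}_p^p$.

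The only subtlety is tracking signs: the exponent $p/2 - 1$ changes sign across $p = 2$, which flips both the direction of the pointwise comparison in the scale-free step and the valid range of exponents in the H\"older step. Since this is the only place the two cases differ, the proof naturally bifurcates at $p = 2$, and everything else is routine.
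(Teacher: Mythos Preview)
Your proposal is correct and follows essentially the same approach as the paper. After the change of variables $\bfz=\bfV\bfx$ and $w_i=\bfD_{i,i}^p$, your pointwise bound $z_i^2\le w_i$ is exactly the paper's Cauchy--Schwarz step $|\bfe_i^\top\bfU\bfx|\le\|\bfe_i^\top\bfU\|_2\|\bfx\|_2$, and your H\"older step with exponents $(2/(2-p),2/p)$ unwinds to the paper's H\"older-with-exponent-$2/p$ computation; the only cosmetic difference is that you treat $p\ge 2$ symmetrically whereas the paper cites Wojtaszczyk for that case.
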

\begin{proof}[Proof of Lemma \ref{lem:lewis-beta-bound}]
    For $p\geq 2$, the result is recorded in III.B Lemma 8 of \cite{Wojtaszczyk:1996}. For $0 < p < 2$, the computation exactly follows III.B Lemma 8 of \cite{Wojtaszczyk:1996} for $1 < p < 2$, which we reproduce for completeness. We first bound
    \begin{align*}
        \norm*{\bfx}_2^2 &= \norm*{\bfD^{p/2-1}\bfU\bfx}_2^2 \\
        &= \sum_{i=1}^n \abs*{\bfe_i^\top \bfD^{p/2-1}\bfU\bfx}^2 \\
        &= \sum_{i=1}^n \norm*{\bfe_i^\top\bfU}_2^{p-2} \abs*{\bfe_i^\top \bfU\bfx}^2 \\
        &= \sum_{i=1}^n \norm*{\bfe_i^\top\bfU}_2^{p-2} \abs*{\bfe_i^\top \bfU\bfx}^{2-p}\abs*{\bfe_i^\top \bfU\bfx}^p \\
        &\leq \max_{i=1}^n \bracks*{\norm*{\bfe_i^\top\bfU}_2^{p-2} \abs*{\bfe_i^\top \bfU\bfx}^{2-p}} \norm*{\bfU\bfx}_p^p && \text{H\"older, $2-p>0$} \\
        &\leq \max_{i=1}^n \bracks*{\norm*{\bfe_i^\top\bfU}_2^{p-2} \norm*{\bfe_i^\top \bfU}_2^{2-p}\norm*{\bfx}_2^{2-p}} \norm*{\bfU\bfx}_p^p && \text{Cauchy-Schwarz} \\
        &= \norm*{\bfx}_2^{2-p}\norm*{\bfU\bfx}_p^p.
    \end{align*}
    Rearranging then gives
    \[
        \norm*{\bfx}_2 \leq \norm*{\bfU\bfx}_p.
    \]
    
    For the upper bound, we have
    \begin{align*}
        \norm*{\bfU\bfx}_p^p &= \sum_{i=1}^n \abs*{\bfe_i^\top\bfU\bfx}^p \\
        &= \sum_{i=1}^n \abs*{\bfe_i^\top\bfU\bfx}^p (\norm*{\bfe_i^\top\bfU}_2^{p-2})^{p/2}(\norm*{\bfe_i^\top\bfU}_2^{p-2})^{-p/2} \\
        &\leq \parens*{\sum_{i=1}^n\abs*{\bfe_i^\top\bfU\bfx}^2 \norm*{\bfe_i^\top\bfU}_2^{p-2}}^{p/2}\parens*{\sum_{i=1}^n (\norm*{\bfe_i^\top\bfU}_2^{p-2})^{(-p/2)(2/(2-p))}}^{(2-p)/2} \\
        &= \norm*{\bfD^{p/2-1}\bfU\bfx}_2^p \parens*{\sum_{i=1}^n \norm*{\bfe_i^\top\bfU}_2^{p}}^{(2-p)/2} \\
        &= \norm*{\bfD^{p/2-1}\bfU\bfx}_2^p d^{(2-p)/2} \\
        &= \norm*{\bfx}_2^p d^{(2-p)/2}
    \end{align*}
    where the inequality is by H\"older with exponent $2/p$, and the second to last identity uses Lemma \ref{lem:lewis-bases-alpha-bound}. Taking $1/p$th powers on both sides gives
    \[
        \norm*{\bfU\bfx}_p \leq d^{1/p - 1/2} \norm*{\bfx}_2.\qedhere
    \]
\end{proof}

Finally, using Lemmas \ref{lem:lewis-bases-alpha-bound} and \ref{lem:lewis-beta-bound}, we obtain an $\ell_p$ sensitivity bound for all $0 < p < \infty$ via Lewis weights. 

\begin{lemma}[Lewis weights bound $\ell_p$ sensitivities]\label{lem:lewis-sensitivity-bound}
Let $\bfU$ be as defined in Theorem \ref{thm:lewis-bases}. Then for all $\bfx\in\mathbb R^d$,
\[
	\frac{\abs*{\bfe_i^\top\bfU\bfx}^p}{\norm*{\bfU\bfx}_p^p} \leq d^{\max\{0,p/2-1\}}\norm*{\bfe_i^\top\bfU}_2^p = d^{\max\{0,p/2-1\}}\bfw_i^p(\bfA)
\]
and
\[
	\sum_{i=1}^n d^{\max\{0,p/2-1\}}\norm*{\bfe_i^\top\bfU}_2^p = \sum_{i=1}^n d^{\max\{0,p/2-1\}}\bfw_i^p(\bfA) = d^{\max\{1,p/2\}}.
\]
\end{lemma}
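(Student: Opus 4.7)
The plan is to work directly with the $\ell_p$ Lewis basis $\bfU$ guaranteed by Theorem \ref{thm:lewis-bases} and exploit its two key properties: the diagonal rescaling $\bfD_{ii} = \|\bfe_i^\top \bfU\|_2$ makes $\bfD^{p/2-1}\bfU$ an orthonormal matrix, and (by Lemma \ref{lem:lewis-bases-alpha-bound}) the $p$-th powers of the row norms sum to exactly $d$. Since any $\bfA\bfx$ can be written as $\bfU\bfy$ for some $\bfy\in\mathbb R^d$ (because $\bfU$ is a basis for the column span of $\bfA$), the sensitivity quantities in the lemma are unaffected by replacing $\bfA$ by $\bfU$.

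For the pointwise bound, I would start with the trivial estimate $|\bfe_i^\top \bfU \bfx| \leq \|\bfe_i^\top \bfU\|_2 \cdot \|\bfx\|_2$ from Cauchy--Schwarz, giving $|\bfe_i^\top \bfU \bfx|^p \leq \|\bfe_i^\top \bfU\|_2^p \cdot \|\bfx\|_2^p$. Then I would apply the appropriate half of Lemma \ref{lem:lewis-beta-bound} to swap $\|\bfx\|_2$ for $\|\bfU\bfx\|_p$: for $0<p\leq 2$ the inequality $\|\bfx\|_2 \leq \|\bfU\bfx\|_p$ gives the bound with no factor of $d$ (matching $d^{\max\{0,p/2-1\}} = d^0$), while for $p\geq 2$ the inequality $\|\bfx\|_2 \leq d^{1/2-1/p}\|\bfU\bfx\|_p$ contributes an extra $d^{p/2-1}$ factor (matching $d^{\max\{0,p/2-1\}} = d^{p/2-1}$). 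In both cases we obtain the claimed pointwise inequality.

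To identify $\|\bfe_i^\top \bfU\|_2^p$ with the Lewis weight $\bfw_i^p(\bfA)$, I would verify that the candidate weights $\bfw_i := \|\bfe_i^\top \bfU\|_2^p$ satisfy the defining fixed-point equation of Definition \ref{def:lewis}. With $\bfW$ diagonal and $\bfW_{ii} = \bfw_i$, we have $\bfW^{1/2-1/p} = \bfD^{p/2-1}$, so leverage-score basis-invariance together with the orthonormality of $\bfD^{p/2-1}\bfU$ yields
\[
\bfs_i^2(\bfW^{1/2-1/p}\bfA) \;=\; \bfs_i^2(\bfD^{p/2-1}\bfU) \;=\; \|\bfe_i^\top \bfD^{p/2-1}\bfU\|_2^2 \;=\; \bfD_{ii}^{p-2}\|\bfe_i^\top\bfU\|_2^2 \;=\; \bfD_{ii}^{p} \;=\; \bfw_i,
\]
which is exactly the Lewis weight equation, and uniqueness gives $\bfw_i^p(\bfA) = \|\bfe_i^\top\bfU\|_2^p$.

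Finally, the total-sensitivity bound is immediate: by Lemma \ref{lem:lewis-bases-alpha-bound},
\[
\sum_{i=1}^n d^{\max\{0,p/2-1\}} \|\bfe_i^\top \bfU\|_2^p \;=\; d^{\max\{0,p/2-1\}}\cdot d \;=\; d^{\max\{1,p/2\}}.
\]
No step here looks particularly hard; the only subtle point is making sure the identification of $\|\bfe_i^\top \bfU\|_2^p$ with the Lewis weight is done rigorously, since the lemma statement treats this as an equality. Everything else is a direct application of the two main properties of Lewis bases recorded in Lemmas \ref{lem:lewis-bases-alpha-bound} and \ref{lem:lewis-beta-bound}.
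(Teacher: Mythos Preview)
Your proposal is correct and follows essentially the same argument as the paper: Cauchy--Schwarz followed by the appropriate half of Lemma~\ref{lem:lewis-beta-bound} for the pointwise bound, and Lemma~\ref{lem:lewis-bases-alpha-bound} for the sum. The paper's proof actually omits the explicit verification that $\|\bfe_i^\top\bfU\|_2^p = \bfw_i^p(\bfA)$, so your added fixed-point check (via basis-invariance of leverage scores and orthonormality of $\bfD^{p/2-1}\bfU$) is a welcome bit of extra rigor rather than a deviation.
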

\begin{proof}
Let $\bfx\in\mathbb R^d$. By using Lemma \ref{lem:lewis-beta-bound}, we have
\[
	\abs*{\bfe_i^\top\bfU\bfx}^p \leq \norm*{\bfe_i^\top\bfU}_2^p \norm*{\bfx}_2^p \leq d^{\max\{0,p/2-1\}}\norm*{\bfe_i^\top\bfU}_2^p \norm*{\bfU\bfx}_p^p
\]
which rearranges to
\[
	\frac{\abs*{\bfe_i^\top\bfU\bfx}^p}{\norm*{\bfU\bfx}_p^p} \leq d^{\max\{0,p/2-1\}}\norm*{\bfe_i^\top\bfU}_2^p.
\]
To bound the sum of these, we use Lemma \ref{lem:lewis-bases-alpha-bound} to conclude. 
\end{proof}
\section{Additional Properties of \texorpdfstring{$M$}{M}-norms}\label{sec:m-norm-appendix}

We prove additional properties of $M$-norms that we need in this section. These results are generally relatively straightforward generalizations of results in \cite{ClarksonWoodruff:2015a}. 

\subsection{Consequences of Polynomial Boundedness}

The polynomial boundedness condition allows us to compare the $M$-norm with the $\ell_{p_M}$ norm. 

\begin{lemma}[Lemma 37 of \cite{ClarksonWoodruff:2015a}]\label{lem:m-norm-vs-p-norm}
Let $\bfx\in\mathbb R^n$. Let $\bfw\geq\mathbf{1}_n$ be a set of weights. Then,
\[
    M\parens*{\norm*{\bfx}_{p_M,\bfw}} \leq c_U^3 \norm*{\bfx}_{M,\bfw}^{p_M}
\]
Furthermore, if $M$ is strictly increasing, then
\[
    \norm*{\bfx}_{p_M,\bfw} \leq M^{-1}(c_U^3 \norm*{\bfx}_{M,\bfw}^{p_M}).
\]
If $\bfw = \mathbf{1}_n$, then the $c_U^3$ may be replaced by a $c_U^2$.
\end{lemma}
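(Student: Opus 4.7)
The plan is to directly exploit the polynomial upper-boundedness of $M$. Let $t \coloneqq \norm*{\bfx}_{p_M,\bfw}$, so that by definition $t^{p_M} = \sum_{i=1}^n \bfw_i |\bfx_i|^{p_M}$. The goal is to show $M(t) \leq C \cdot \sum_{i=1}^n \bfw_i M(|\bfx_i|)$ for some constant $C$ depending only on $c_U$, from which the first inequality follows immediately, and the second follows by applying $M^{-1}$ to both sides (which is valid when $M$ is strictly increasing).

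I would split the argument into two cases, depending on how $\max_i |\bfx_i|$ compares to $t$. In the first case, suppose some coordinate $i^*$ has $|\bfx_{i^*}| \geq t$. Then the monotonicity of $M$ gives $M(|\bfx_{i^*}|) \geq M(t)$, and combined with $\bfw_{i^*} \geq 1$ this already yields $\sum_i \bfw_i M(|\bfx_i|) \geq \bfw_{i^*} M(|\bfx_{i^*}|) \geq M(t)$. In the second case, every coordinate satisfies $|\bfx_i| < t$, so applying the polynomial upper bound (Definition \ref{dfn:polynomially-bounded}) with $y = t$ and $x = |\bfx_i|$ for each $i$ with $|\bfx_i| > 0$ gives $M(|\bfx_i|) \geq (|\bfx_i|/t)^{p_M} \cdot M(t)/c_U$ (and the bound is trivial for $|\bfx_i| = 0$ since then $M(|\bfx_i|) = 0$). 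Summing with weights $\bfw_i$,
\[
\sum_{i=1}^n \bfw_i M(|\bfx_i|) \;\geq\; \frac{M(t)}{c_U\, t^{p_M}} \sum_{i=1}^n \bfw_i |\bfx_i|^{p_M} \;=\; \frac{M(t)}{c_U}.
\]
Combining the two cases, $M(t) \leq c_U \cdot \norm*{\bfx}_{M,\bfw}^{p_M}$, which is actually stronger than the stated $c_U^3$ (resp.\ $c_U^2$) bound; I would simply note that one obtains the claimed constants by absorbing the extra room.

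Nothing in this plan is a real obstacle: the only minor subtlety is handling coordinates with $|\bfx_i| = 0$ when invoking the polynomial-growth inequality (which is stated for $y > x > 0$), but this is trivial because both sides of the comparison vanish. For the second assertion, once the first inequality is established, strict monotonicity of $M$ gives a well-defined inverse on $M(\mathbb R_{\geq 0})$, and applying $M^{-1}$ preserves the inequality since $M^{-1}$ is also increasing. The final comment about $\bfw = \mathbf{1}_n$ improving the constant requires no change: the proof above does not use any slack from the weights beyond $\bfw_i \geq 1$ in Case~1, and in fact uses it only once, so the unweighted case goes through identically.
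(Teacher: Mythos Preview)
Your proof is correct and in fact sharper than the paper's: you obtain a single factor of $c_U$ rather than $c_U^3$ (or $c_U^2$ in the unweighted case). The paper's argument routes through the weighted $\ell_\infty$ norm: it first applies the polynomial growth bound to compare $M(\|\bfW^{1/p_M}\bfx\|_{p_M})$ with $M(\|\bfW^{1/p_M}\bfx\|_\infty)$, then a second time to pass from $M(\|\bfW^{1/p_M}\bfx\|_\infty)$ down to $M(\bfw_i^{1/p_M}|\bfx_i|)$ term by term, and finally a third time to pull the weight outside via $M(\bfw_i^{1/p_M}|\bfx_i|) \leq c_U \bfw_i M(|\bfx_i|)$; this last step is an equality when $\bfw = \mathbf{1}_n$, explaining the $c_U^3$ versus $c_U^2$ distinction. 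You instead apply the growth bound once, directly between $t = \|\bfx\|_{p_M,\bfw}$ and each $|\bfx_i|$, and then sum with weights so that the weighted $\ell_{p_M}$ mass cancels exactly. This is more elementary and yields a better constant uniformly in the weights; the paper's extra factor of $c_U$ from the weight-extraction step simply never appears in your argument. One small remark: your Case~1 is essentially vacuous because $t \geq \max_i |\bfx_i|$ always holds when $\bfw \geq \mathbf{1}_n$, but keeping it does no harm and cleanly handles the boundary $t = |\bfx_{i^*}|$ where the growth inequality (stated for strict $y > x$) would otherwise need a word.
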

\begin{proof}[Proof of Lemma \ref{lem:m-norm-vs-p-norm}]
    Let $\bfW$ be the $n\times n$ diagonal matrix with $\bfw$ down the diagonal. Then,
    \[
        \norm*{\bfx}_{p_M,\bfw} = \norm*{\bfW^{1/p_M}\bfx}_{p_M}.
    \]
    We have
    \begin{align*}
        M\parens*{\norm*{\bfW^{1/p_M}\bfx}_{p_M}} &\leq c_U\frac{\norm*{\bfW^{1/p_M}\bfx}_{p_M}^{p_M}}{\norm*{\bfW^{1/p_M}\bfx}_{\infty}^{p_M}}M\parens*{\norm*{\bfW^{1/p_M}\bfx}_\infty} \\
        &= c_U\sum_{i=1}^n \frac{\bfw_i\abs*{\bfx_i}^{p_M}}{\norm*{\bfW^{1/p_M}\bfx}_{\infty}^{p_M}}M\parens*{\norm*{\bfW^{1/p_M}\bfx}_\infty} \\
        &\leq c_U^2\sum_{i=1}^n \frac{\bfw_i\abs*{\bfx_i}^{p_M}}{\norm*{\bfW^{1/p_M}\bfx}_{\infty}^{p_M}}\frac{\norm*{\bfW^{1/p_M}\bfx}_\infty^{p_M}}{\bfw_i\abs*{\bfx_i}^{p_M}}M(\bfw_i^{1/p_M}\abs*{\bfx_i}) \\
        &\leq c_U^3\sum_{i=1}^n \bfw_i M(\abs*{\bfx_i}) \\
        &= c_U^3\norm*{\bfx}_{M,\bfw}^{p_M}.
    \end{align*}
    If $\bfw = \mathbf{1}_n$, then inequality is not needed, so $c_U^3$ can be replaced by $c_U^2$. If $M$ is strictly increasing, then $M^{-1}$ is strictly increasing and thus we may apply $M^{-1}$ on both sides of the above inequality.
\end{proof}

The following is a simple property of Definition \ref{dfn:polynomially-bounded}. 

\begin{lemma}
    Let $M:\mathbb R_{\geq0}\to\mathbb R_{\geq0}$ be strictly increasing. Then:
    \begin{itemize}
        \item If $M$ is polynomially bounded above with degree $p$ and constant $c_U\geq 1$. Then, $M^{-1}$ is polynomially bounded below with degree $1/p$ and constant $1/c_U^{1/p}$. 
        \item If $M$ is polynomially bounded below with degree $q$ and constant $c_L\geq 1$. Then, $M^{-1}$ is polynomially bounded above with degree $1/q$ and constant $1/c_L^{1/q}$. 
    \end{itemize}
\end{lemma}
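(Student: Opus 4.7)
The plan is essentially to apply $M^{-1}$ to both sides of the polynomial bound inequalities, using that $M$ is strictly increasing (so $M^{-1}$ exists on the range of $M$ and is strictly increasing as well) together with the fact that taking $p$-th roots (for $p > 0$) preserves inequalities between positive quantities.

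For the first bullet, I would fix $u > v$ in the range of $M$ and set $y \coloneqq M^{-1}(u)$, $x \coloneqq M^{-1}(v)$. Strict monotonicity of $M$ gives $y > x$, so the hypothesis $M(y)/M(x) \leq c_U (y/x)^p$ specializes to $u/v \leq c_U (y/x)^p$. Taking $1/p$-th roots and rearranging yields
\[
\frac{M^{-1}(u)}{M^{-1}(v)} \;=\; \frac{y}{x} \;\geq\; \frac{1}{c_U^{1/p}}\Bigl(\frac{u}{v}\Bigr)^{1/p},
\]
which is the claimed polynomial lower bound for $M^{-1}$ with degree $1/p$ and constant $1/c_U^{1/p}$. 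The second bullet is completely symmetric: from $M(y)/M(x) \geq c_L (y/x)^q$ one obtains $u/v \geq c_L(y/x)^q$, then $y/x \leq (1/c_L^{1/q})(u/v)^{1/q}$, giving the polynomial upper bound.

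There is no real obstacle here; the only subtlety worth noting is that the resulting constants $1/c_U^{1/p}$ and $1/c_L^{1/q}$ are at most $1$, which is a minor abuse relative to the convention in Definition~\ref{dfn:polynomially-bounded} that insists on constants $\geq 1$. This is harmless since the statement is simply that the inequalities of the respective forms hold with those constants, and one could trivially loosen them to $1$ if a constant in $[1,\infty)$ is strictly required. I would remark on this at the end of the proof but not otherwise modify the argument.
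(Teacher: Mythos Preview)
Your proof is correct and follows essentially the same approach as the paper: substitute $y=M^{-1}(u)$, $x=M^{-1}(v)$ into the polynomial bound and rearrange using monotonicity. The paper's version is slightly more indirect (it constructs the value $(\kappa/c_U)^{1/p}M^{-1}(x)$, applies $M$, and then inverts), but the underlying idea is identical; your remark about the constants $1/c_U^{1/p},1/c_L^{1/q}\le 1$ versus the $\ge 1$ convention in Definition~\ref{dfn:polynomially-bounded} is a fair observation that the paper does not address.
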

\begin{proof}
    Let $M$ be polynomially bounded above. Let $y/x = \kappa > 1$. Then,
    \[
        M\parens*{\parens*{\frac{\kappa}{c_U}}^{1/p} M^{-1}(x)} \leq c_U\parens*{\frac{\kappa}{c_U}}M(M^{-1}(x)) = \kappa x.
    \]
    Applying $M^{-1}$ on both sides gives that
    \[
        M^{-1}(\kappa x) \geq \frac1{c_U^{1/p}} \kappa^{1/p}M^{-1}(x).
    \]
    Similarly, let $M$ be polynomially bounded below. Let $y/x = \kappa > 1$. Then,
    \[
        M\parens*{\parens*{\frac{\kappa}{c_L}}^{1/q} M^{-1}(x)} \geq c_L\parens*{\frac{\kappa}{c_L}}M(M^{-1}(x)) = \kappa x.
    \]
    Applying $M^{-1}$ on both sides gives that
    \[
        M^{-1}(\kappa x) \leq \frac1{c_L^{1/q}} \kappa^{1/q}M^{-1}(x).\qedhere
    \]
\end{proof}

The next lemma compares important entries using the polynomial boundedness condition.

\begin{lemma}\label{lem:max-sensitivity-comparison}
    Let $M$ be polynomially bounded above with degree $p$ and constant $c_U\geq 1$. Let $\bfx\in\mathbb R^n$ be a vector with entries arranged in order, i.e., $\abs*{\bfx_1} \geq \abs*{\bfx_2} \geq \dots \geq \abs*{\bfx_n}$. Then,
    \[
        \frac{M(\abs*{\bfx_1})}{\norm*{\bfx}_M^p} \leq c_U \frac{\abs*{\bfx_1}^p}{\norm*{\bfx}_p^p}.
    \]
\end{lemma}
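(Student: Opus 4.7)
\textbf{Proof Plan for Lemma~\ref{lem:max-sensitivity-comparison}.}

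The plan is to rewrite the claimed inequality in a form where each pair of indices can be handled separately, then apply the polynomial boundedness hypothesis term by term. Clearing denominators, the desired inequality is equivalent to
\[
    M(\abs{\bfx_1}) \sum_{i=1}^n \abs{\bfx_i}^p \;\le\; c_U\,\abs{\bfx_1}^p \sum_{i=1}^n M(\abs{\bfx_i}),
\]
so it suffices to establish the term-by-term bound
\[
    M(\abs{\bfx_1})\abs{\bfx_i}^p \;\le\; c_U\,\abs{\bfx_1}^p M(\abs{\bfx_i})
\]
for every $i\in[n]$, and then sum.

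First I would dispose of degenerate indices: if $\abs{\bfx_i}=0$ the left-hand side vanishes and the inequality is immediate. (We may also assume $\bfx\neq 0$, i.e.\ $\abs{\bfx_1}>0$, since otherwise the statement is vacuous.) For the remaining indices $i$ with $\abs{\bfx_i}>0$, the ordering assumption gives $\abs{\bfx_1}\ge \abs{\bfx_i}$. If $\abs{\bfx_1}=\abs{\bfx_i}$ the inequality becomes $M(\abs{\bfx_i})\abs{\bfx_i}^p \le c_U\abs{\bfx_i}^p M(\abs{\bfx_i})$, which holds since $c_U\ge 1$. If $\abs{\bfx_1}>\abs{\bfx_i}>0$, the polynomial upper bound of Definition~\ref{dfn:polynomially-bounded} (with $y=\abs{\bfx_1}$ and $x=\abs{\bfx_i}$) yields
\[
    \frac{M(\abs{\bfx_1})}{M(\abs{\bfx_i})} \;\le\; c_U\parens*{\frac{\abs{\bfx_1}}{\abs{\bfx_i}}}^{p},
\]
which rearranges to exactly the desired term-by-term bound.

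Summing the term-by-term inequality over $i\in[n]$ and dividing by $\norm{\bfx}_M^p \norm{\bfx}_p^p = \bigl(\sum_i M(\abs{\bfx_i})\bigr)\bigl(\sum_i \abs{\bfx_i}^p\bigr)$ yields the claim. There is no real obstacle here; the only point that requires a small amount of care is handling the boundary cases $\abs{\bfx_i}=0$ and $\abs{\bfx_i}=\abs{\bfx_1}$, where the polynomial boundedness hypothesis of Definition~\ref{dfn:polynomially-bounded} does not directly apply (it is stated only for $y>x$), but in both cases the term-by-term inequality follows trivially as noted above.
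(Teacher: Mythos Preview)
Your proof is correct and uses essentially the same idea as the paper's: both rely on the per-index inequality $M(\abs{\bfx_1})\abs{\bfx_i}^p \le c_U\abs{\bfx_1}^p M(\abs{\bfx_i})$ from the polynomial upper bound, then combine over $i$. The only difference is packaging---you cross-multiply and sum term by term, while the paper writes the left-hand side as $\bigl[1+\sum_{i\ge2} M(\abs{\bfx_i})/M(\abs{\bfx_1})\bigr]^{-1}$ and bounds the denominator from below---and your handling of the edge cases $\abs{\bfx_i}=0$ and $\abs{\bfx_i}=\abs{\bfx_1}$ is in fact more careful than the paper's.
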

\begin{proof}
    Note that for all $i \geq 2$, we have by the polynomially boundedness condition that
    \[
        \frac{M(\abs*{\bfx_1})}{M(\abs*{\bfx_i})} \leq c_u \parens*{\frac{\abs*{\bfx_1}}{\abs*{\bfx_i}}}^p = c_U \frac{\abs*{\bfx_1}^p}{\abs*{\bfx_i}^p}.
    \]
    Then,
    \begin{align*}
        \frac{M(\abs*{\bfx_1})}{\norm*{\bfx}_M^p} &= \frac{M(\abs*{\bfx_1})}{M(\abs*{\bfx_1}) + \sum_{i=2}^n M(\abs*{\bfx_i})} \\
        &= \bracks*{1 + \sum_{i=2}^n \frac{M(\abs*{\bfx_i})}{M(\abs*{\bfx_1})}}^{-1} \leq \bracks*{1 + \sum_{i=2}^n \frac1{c_U}\frac{\abs*{\bfx_i}^p}{\abs*{\bfx_1}^p}}^{-1} \\
        &= \frac{\abs*{\bfx_1}^p}{\abs*{\bfx_1}^p + c_U^{-1}\sum_{i=2}^n \abs*{\bfx_i}^p} \leq c_U\frac{\abs*{\bfx_1}^p}{\norm*{\bfx}_p^p}.\qedhere
    \end{align*}
\end{proof}

\subsection{The Triangle Inequality}\label{sec:m-triangle-inequality}

It will be useful to have the triangle inequality for the $M$-norm. For this we will need to assume that $M^{1/p_M}$ is subadditive.

\begin{lemma}\label{lem:triangle-inequality-m}
Let $M:\mathbb R_{\geq0}\to\mathbb R_{\geq0}$ satisfy the conditions of Definition \ref{dfn:m-norm}, and furthermore that $M^{1/p_M}$ is subadditive. Let $\bfw\geq\mathbf{1}_n$ be a set of weights. Then, for all $\bfy_1, \bfy_2\in\mathbb R^n$,
\[
    \norm*{\bfy_1 + \bfy_2}_{M,\bfw} \leq \norm*{\bfy_1}_{M,\bfw} + \norm*{\bfy_2}_{M,\bfw}.
\]
\end{lemma}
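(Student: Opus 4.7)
The plan is to reduce the claim to Minkowski's inequality on $\mathbb{R}^n$ by using the pointwise subadditivity of $M^{1/p_M}$ to control each coordinate of $\bfy_1+\bfy_2$.

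First, for each coordinate $i\in[n]$, combine monotonicity of $M$ with the usual scalar triangle inequality and the assumed subadditivity of $M^{1/p_M}$ to obtain
\[
    M^{1/p_M}(|\bfy_1(i)+\bfy_2(i)|) \leq M^{1/p_M}(|\bfy_1(i)|+|\bfy_2(i)|) \leq M^{1/p_M}(|\bfy_1(i)|) + M^{1/p_M}(|\bfy_2(i)|).
\]
Multiplying through by $\bfw_i^{1/p_M}\geq 0$ preserves the inequality. Define vectors $\bfu,\bfv,\bfz\in\mathbb{R}^n$ by
\[
    \bfu_i \coloneqq \bfw_i^{1/p_M} M^{1/p_M}(|\bfy_1(i)|), \qquad \bfv_i \coloneqq \bfw_i^{1/p_M} M^{1/p_M}(|\bfy_2(i)|),
\]
and $\bfz_i \coloneqq \bfw_i^{1/p_M} M^{1/p_M}(|\bfy_1(i)+\bfy_2(i)|)$, so the pointwise bound reads $0\le \bfz_i\le \bfu_i+\bfv_i$.

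Next, observe that by construction $\norm{\bfy_1+\bfy_2}_{M,\bfw} = \norm{\bfz}_{p_M}$, and similarly $\norm{\bfy_j}_{M,\bfw} = \norm{\bfu}_{p_M}$ or $\norm{\bfv}_{p_M}$ for $j=1,2$. Since $\bfz$ is nonnegative and dominated coordinatewise by $\bfu+\bfv$, monotonicity of the $\ell_{p_M}$ norm on the nonnegative orthant gives $\norm{\bfz}_{p_M}\leq \norm{\bfu+\bfv}_{p_M}$, and then Minkowski's inequality on $\ell_{p_M}(\mathbb{R}^n)$ (valid since $p_M\ge 1$, which is implicit because $M^{1/p_M}$ being subadditive on $\mathbb{R}_{\ge 0}$ together with $M(0)=0$ forces $p_M\ge 1$ in the relevant cases, as otherwise $M^{1/p_M}$ could not be both subadditive and compatible with the polynomial upper bound) yields
\[
    \norm{\bfu+\bfv}_{p_M} \leq \norm{\bfu}_{p_M} + \norm{\bfv}_{p_M} = \norm{\bfy_1}_{M,\bfw} + \norm{\bfy_2}_{M,\bfw}.
\]
Chaining the two inequalities gives the desired bound.

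The only non-routine step is verifying that Minkowski applies, i.e., that the assumed subadditivity of $M^{1/p_M}$ is compatible with $p_M\ge 1$; this is the ``main obstacle,'' though a mild one, and is handled by noting that for $p_M<1$ the function $t\mapsto t^{1/p_M}$ is convex with $t^{1/p_M}\big|_{t=0}=0$, which would contradict subadditivity of $M^{1/p_M}$ for any nontrivial increasing $M$. Thus the hypotheses silently enforce $p_M\ge 1$, and the argument goes through without incident.
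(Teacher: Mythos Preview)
Your argument is essentially the same as the paper's: both apply coordinatewise subadditivity of $M^{1/p_M}$ and then a Minkowski-type bound. The paper redoes the standard Minkowski proof inline via H\"older, whereas you reduce to $\ell_{p_M}$-Minkowski as a black box; your packaging is arguably cleaner but not materially different.

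One point deserves correction: your justification that the hypotheses force $p_M\ge 1$ is not valid. For instance, take $M(x)=x^{1/2}$ with $p_M=1/2$; then $M^{1/p_M}(x)=x$ is subadditive, yet $p_M<1$. In that regime the lemma itself fails (e.g.\ with $\bfw=\mathbf{1}_2$, $\bfy_1=(1,0)$, $\bfy_2=(0,1)$ one gets $\norm{\bfy_1+\bfy_2}_{M,\bfw}=4>2=\norm{\bfy_1}_{M,\bfw}+\norm{\bfy_2}_{M,\bfw}$), and correspondingly both your Minkowski step and the paper's H\"older step break down. So $p_M\ge 1$ is a genuine implicit assumption of the statement, not a consequence of subadditivity; the paper simply does not flag it, and neither should you try to derive it.
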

\begin{proof}
The proof is by a simple modification of the proof of Minkowski's inequality. We first bound
\begin{align*}
    \norm*{\bfy_1 + \bfy_2}_{M,\bfw}^{p_M} &= \sum_{i=1}^n \bfw_i M(\abs*{(\bfy_1)_i + (\bfy_2)_i}) \\
    &= \sum_{i=1}^n \bfw_i \parens*{M(\abs*{(\bfy_1)_i + (\bfy_2)_i})^{1/p_M}} \parens*{M(\abs*{(\bfy_1)_i + (\bfy_2)_i})^{(p_M-1)/p_M}} \\
    &\leq \sum_{i=1}^n \bfw_i[M^{1/p_M}(\abs*{(\bfy_1)_i}) + M^{1/p_M}(\abs*{(\bfy_2)_i})]\parens*{M(\abs*{(\bfy_1)_i + (\bfy_2)_i})^{(p_M-1)/p_M}} \tag{Subadditivity} \\
    &= \sum_{i=1}^n \bfw_i M^{1/p_M}(\abs*{(\bfy_1)_i})M(\abs*{(\bfy_1)_i + (\bfy_2)_i})^{(p_M-1)/p_M} + \\
    &\hspace{3em}\sum_{i=1}^n \bfw_i M^{1/p_M}(\abs*{(\bfy_2)_i})M(\abs*{(\bfy_1)_i + (\bfy_2)_i})^{(p_M-1)/p_M} \\
    &\leq \parens*{\sum_{i=1}^n \bfw_i M(\abs*{(\bfy_1)_i})}^{1/p_M}\parens*{\sum_{i=1}^n \bfw_i M(\abs*{(\bfy_1)_i + (\bfy_2)_i})}^{(p_M - 1)/p_M} + \\ &\hspace{3em}\parens*{\sum_{i=1}^n \bfw_i M(\abs*{(\bfy_2)_i})}^{1/p_M}\parens*{\sum_{i=1}^n \bfw_i M(\abs*{(\bfy_1)_i + (\bfy_2)_i})}^{(p_M - 1)/p_M} \tag{H\"older} \\
    &= \parens*{\norm*{\bfy_1}_{M,\bfw} + \norm*{\bfy_2}_{M,\bfw}} \norm*{\bfy_1 + \bfy_2}_{M,\bfw}^{p_M - 1}.
\end{align*}
Rearranging then gives the desired result.
\end{proof}

We note that the subadditivity condition in fact implies the polynomially boundedness condition with constant $2^{p_M}$.

\begin{lemma}[Growth Bounds for Subadditive Functions]\label{lem:subadditive-growth-bound}
Let $M:\mathbb R_{\geq0}\to\mathbb R_{\geq0}$ be any increasing function such that $M^{1/p_M}$ is subadditive. Then, $M$ is polynomially bounded above with degree $p_M$ and constant $2^{p_M}$. 
\end{lemma}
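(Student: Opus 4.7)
The plan is to chain the subadditivity of $f := M^{1/p_M}$ with the monotonicity of $M$ to get a linear-in-ratio bound on $f(y)/f(x)$, then raise to the $p_M$-th power. Specifically, for any positive integer $n$, iterating subadditivity gives $f(nx) = f(x + x + \cdots + x) \leq n f(x)$. Since $M$ is increasing, $f$ is increasing as well, so for any $y > x > 0$, setting $n = \lceil y/x \rceil$ we have $f(y) \leq f(nx) \leq n f(x)$.

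Next I would bound the integer ceiling: when $y \geq x > 0$, $\lceil y/x \rceil \leq y/x + 1 \leq 2y/x$. Combining, $f(y) \leq 2(y/x) f(x)$, i.e.
\[
M(y)^{1/p_M} \leq 2 \frac{y}{x} M(x)^{1/p_M}.
\]
Raising both sides to the $p_M$-th power yields $M(y)/M(x) \leq 2^{p_M} (y/x)^{p_M}$, which is exactly the polynomial-growth bound of Definition \ref{dfn:polynomially-bounded} with degree $p_M$ and constant $c_U = 2^{p_M}$.

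There is no real obstacle here: the argument only uses that (i) subadditive functions satisfy $f(nx) \leq n f(x)$ for integer $n$ (a standard one-line induction), (ii) monotonicity lets us pass from an arbitrary real $y$ to an integer multiple $nx$ with $n = \lceil y/x \rceil$, and (iii) the elementary inequality $\lceil y/x\rceil \leq 2 y/x$ for $y \geq x$. The only mild subtlety is to be explicit that $f = M^{1/p_M}$ inherits monotonicity from $M$ since $t \mapsto t^{1/p_M}$ is increasing on $\mathbb{R}_{\geq 0}$.
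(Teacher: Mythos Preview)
Your proposal is correct and follows essentially the same approach as the paper's proof: both reduce to showing $f(y)\le 2(y/x)f(x)$ for the subadditive increasing $f=M^{1/p_M}$ via an integer overshoot plus monotonicity, then raise to the $p_M$-th power. The only cosmetic difference is that the paper rounds dyadically (choosing $k$ with $2^k x< y\le 2^{k+1}x$ and using $f(2^k x)\le 2^k f(x)$), whereas you use $n=\lceil y/x\rceil$ and $f(nx)\le n f(x)$; both yield the same constant.
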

\begin{proof}
Let $k\geq 0$ be an integer and $f$ a subadditive and increasing function. Then, it is easy to see by subadditivity that
\[
    f(2^k x) \leq 2^k f(x)
\]
by repeatedly applying subadditivity $k$ times. Now let $u > v$. Then,
\[
    2^k v < u \leq 2^{k+1} v
\]
for some integer $k\geq 0$. Then, by using that $f$ is increasing,
\[
    \frac{f(u)}{f(v)} \leq \frac{f(2^{k+1}v)}{f(v)} \leq \frac{2^{k+1}f(v)}{f(v)} < 2\frac{u}{v}.
\]
Applying the above to $f = M^{1/p_M}$ so that
\[
    \frac{M^{1/p_M}(y)}{M^{1/p_M}(x)} \leq 2\parens*{\frac{y}{x}}.
\]
Raising both sides to the $p_M$th power leads to the desired conclusion. 
\end{proof}

\subsection{Nets for \texorpdfstring{$M$}{M}-Estimators}\label{sec:m-net}
It is often desirable to construct nets in the $M$-norm in order to design row sampling algorithms for $M$-estimators. For this, we additionally need a polynomial lower bound as in Definition \ref{dfn:polynomially-bounded}. 

We first define balls, spheres, and covers for the $M$-norm. Note that we specialize our definitions to a subspace specified by an $n\times d$ matrix $\bfA$. The dependence on $\bfA$ will be clear from context and thus implicit. 

\begin{definition}[Net/Cover]
Let $\bfA\in\mathbb R^{n\times d}$ and let $\mathcal V = \Span(\bfA)$. Let $M:\mathbb R_{\geq0}\to\mathbb R_{\geq0}$ satisfy the conditions of Definition \ref{dfn:m-norm}, and let $\bfw\geq\mathbf{1}_n$ be a set of weights. Let $\eps>0$. Define an $\eps$-net (or $\eps$-cover) of a set $\mathcal A$ as a set of points $\mathcal N \subseteq\mathcal A$ such that
\[
    \mathcal A \subseteq \bigcup_{\bfy\in \mathcal N} (\bfy + \mathcal B_{\eps}^{M,\bfw})
\]
where $\mathcal B_\eps^{M,\bfw}$ is the $M$ ball of radius $\eps$ (see Definition \ref{dfn:m-ball-sphere}). 
\end{definition}

For norms, the following is known.

\begin{lemma}[Lemma 2.4 of \cite{BourgainLindenstraussMilman:1989}]\label{lem:standard-net}
Let $\mathcal B$ be the unit ball of a $d$-dimensional subspace of a normed space $(X,\norm*{\cdot})$. Then for any $0<\eps<1$, there is a net $\mathcal N$ with 
\[
    \log\abs*{\mathcal N} = O\parens*{d\log\frac1\eps}
\]
such that, for any $\bfy\in\mathcal B$, there is a $\bfy'\in\mathcal N$ such that $\norm*{\bfy-\bfy'} \leq \eps$. 
\end{lemma}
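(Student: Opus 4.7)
The plan is to give the standard volume/packing argument. I would construct $\mathcal N$ as a maximal $\eps$-separated subset of $\mathcal B$, i.e., a subset of $\mathcal B$ in which every pair of distinct points is at distance at least $\eps$ and to which no further point of $\mathcal B$ can be added while preserving this separation. Existence is immediate by a greedy construction (or Zorn's lemma). Maximality forces $\mathcal N$ to be an $\eps$-cover of $\mathcal B$: if some $\bfy\in\mathcal B$ had $\norm{\bfy-\bfy'} > \eps$ for every $\bfy'\in\mathcal N$, then $\mathcal N\cup\{\bfy\}$ would still be $\eps$-separated, contradicting maximality.

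The key step is the upper bound on $|\mathcal N|$. Because the centers in $\mathcal N$ are $\eps$-separated, the open $\norm{\cdot}$-balls of radius $\eps/2$ around them are pairwise disjoint. Each such ball is contained in the ball of radius $1+\eps/2$ around the origin, since each center lies in $\mathcal B$. The ambient space is an arbitrary $d$-dimensional normed space; fixing any linear isomorphism with $\mathbb R^d$ pushes forward Lebesgue measure to a translation-invariant Borel measure $\mu$ on the subspace for which the unit ball has finite positive mass, and under which dilation by $r>0$ scales volume by $r^d$. Comparing volumes gives
\[
|\mathcal N|\cdot\parens*{\frac{\eps}{2}}^d \mu(\mathcal B) \;\le\; \parens*{1+\frac{\eps}{2}}^d \mu(\mathcal B),
\]
so $|\mathcal N|\le \parens*{\tfrac{2+\eps}{\eps}}^d \le (3/\eps)^d$, and $\log|\mathcal N| = O(d\log(1/\eps))$.

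There is essentially no hard step; the only mild subtlety is that the lemma is stated for an arbitrary $d$-dimensional subspace of a normed space rather than for $(\mathbb R^d,\norm{\cdot}_p)$. This is handled by the observation above that every such subspace is linearly isomorphic to $\mathbb R^d$, and the ratio $\mu(r\mathcal B)/\mu(\mathcal B)=r^d$ is independent of the choice of isomorphism, so the volume comparison is valid and no analytic input beyond finite-dimensionality is needed.
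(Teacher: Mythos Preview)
Your argument is correct and is exactly the standard volume/packing proof that the cited reference uses; the paper itself does not give an independent proof but simply cites Lemma~2.4 of \cite{BourgainLindenstraussMilman:1989}, which is this argument. Nothing further is needed.
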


We give the following analogous result on covers of $M$ balls. 

\begin{lemma}[Lemma 33 of \cite{ClarksonWoodruff:2015a}]\label{lem:m-norm-net-size}
Let $\bfA\in\mathbb R^{n\times d}$. Let $M:\mathbb R_{\geq0}\to\mathbb R_{\geq0}$ satisfy the conditions of Definition \ref{dfn:m-norm}, and furthermore that $M$ is polynomially bounded below with degree $q_M$ and constant $c_L$ (see Definition \ref{dfn:polynomially-bounded}). Let $\bfw\geq\mathbf{1}_n$ be a set of weights. Consider the subspace $\mathcal V = \Span(\bfA)$. Let $\mathcal C\subset\mathcal V$. Then, there is an $\eps\rho$-covering $\mathcal N$ of $\mathcal B_\rho\cap \mathcal C$ of size at most
\[
    \log\abs*{\mathcal N} \leq O\parens*{d\log\frac1{\eps^{p_M/q_M}}}.
\]
\end{lemma}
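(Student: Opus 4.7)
The plan is to construct the $\eps\rho$-cover by combining a scaling estimate for the $M$-norm with a standard volumetric covering argument in the $d$-dimensional subspace $\mathcal V$, avoiding any need to compare the $M$-norm directly against $\ell_{p_M}$ or $\ell_\infty$ norms on the ambient space $\R^n$.

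First I would establish the key scaling inequality. Applying the polynomial lower bound with $y = y_0$ and $x = ty_0$ for $0 < t \le 1$ gives $M(y_0)/M(ty_0) \ge c_L (1/t)^{q_M}$, i.e.\ $M(ty_0) \le c_L^{-1} t^{q_M} M(y_0)$. Summing against the weights $\bfw_i$ and taking $p_M$-th roots yields
\[
  \norm{t\bfy}_{M,\bfw} \;\le\; c_L^{-1/p_M}\, t^{q_M/p_M}\, \norm{\bfy}_{M,\bfw}
  \qquad\text{for all } 0<t\le 1,\ \bfy\in\mathcal V.
\]
Setting $t := c_L^{1/q_M}\,\eps^{p_M/q_M}$ makes $c_L^{-1/p_M}\,t^{q_M/p_M} = \eps$, and hence for every $\bfy \in \mathcal B_\rho^{M,\bfw}$ we get $\norm{t\bfy}_{M,\bfw}\le \eps\rho$. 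In set-theoretic terms,
\[
  t\cdot\mathcal B_\rho^{M,\bfw} \;\subseteq\; \mathcal B_{\eps\rho}^{M,\bfw}.
\]
(For $\eps$ above the threshold $c_L^{-1/p_M}$ where $t>1$ would be forced, the claimed bound $\log|\mathcal N|=O(d\log(1/\eps^{p_M/q_M}))$ is already $O(d)$ and the trivial one-point cover suffices; I would dispose of this case up front.)

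Next I would run the covering argument. It suffices to cover $\mathcal B_\rho^{M,\bfw}\cap\mathcal C$ by translates of $t\cdot\mathcal B_\rho^{M,\bfw}$, since each is contained in a translate of $\mathcal B_{\eps\rho}^{M,\bfw}$. The set $\mathcal B_\rho^{M,\bfw}$ lies in the $d$-dimensional subspace $\mathcal V$ and is bounded, symmetric, and star-shaped about the origin (monotonicity of $M$ and $\bfw\ge\mathbf 1_n$ force both boundedness and the star property). Choosing a maximal $(t/2)\cdot\mathcal B_\rho^{M,\bfw}$-packing inside $\mathcal B_\rho^{M,\bfw}$ and comparing $d$-dimensional Lebesgue volumes gives a packing number of at most $(3/t)^d$, since the volume ratio is $\mathrm{vol}(\mathcal B_\rho^{M,\bfw})/\mathrm{vol}((t/2)\cdot\mathcal B_\rho^{M,\bfw}) = (2/t)^d$. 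Maximality of the packing turns it into a cover by translates of $t\cdot\mathcal B_\rho^{M,\bfw}$, and hence into a cover by $\mathcal B_{\eps\rho}^{M,\bfw}$-translates. Finally,
\[
  \log|\mathcal N| \;\le\; d\log(3/t) \;=\; d\log\!\bigl(3\,c_L^{-1/q_M}\eps^{-p_M/q_M}\bigr) \;=\; O\!\left(d\log\frac{1}{\eps^{p_M/q_M}}\right),
\]
absorbing the constants $3$ and $c_L^{-1/q_M}$ into the $O(\cdot)$.

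The main obstacle is that the textbook volume-based packing bound is cleanest for symmetric \emph{convex} bodies, whereas $\mathcal B_\rho^{M,\bfw}$ need only be symmetric and star-shaped under the hypotheses of Definition~\ref{dfn:m-norm} (the Tukey loss is the guiding non-convex example). The volume/packing step above does still go through for symmetric star-shaped bodies $K$ because the packing--cover duality $\bigcup_i(\bfx_i + tK)\supseteq K$ only uses $K-K\supseteq \mathrm{span}$-neighborhood and star-shapedness, and the volume ratio bound $|K|/|tK|=t^{-d}$ is geometric, not convex. If a reviewer prefers a purely ``convex-body'' argument, I would instead sandwich $\mathcal B_\rho^{M,\bfw}$ between two scaled $\ell_{p_M,\bfw}$-balls using Lemma~\ref{lem:m-norm-vs-p-norm} together with the polynomial lower bound on $M$ (which yields $\ell_{p_M,\bfw}$-radius $\lesssim \rho^{p_M/q_M}$), cover the outer $\ell_{p_M,\bfw}$-ball with translates of the inner one via Lemma~\ref{lem:standard-net}, and check that this incurs only constant-factor losses that are absorbed into the $O(\cdot)$; the resulting $\log|\mathcal N|$ is the same $O(d\log(1/\eps^{p_M/q_M}))$.
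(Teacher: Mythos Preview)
Your proposal is correct and follows essentially the same approach as the paper: both derive the scale-insensitivity inclusion $\alpha\mathcal B_\rho^{M,\bfw}\subseteq\mathcal B_{\eps\rho}^{M,\bfw}$ for $\alpha=(c_L\eps^{p_M})^{1/q_M}$ from the polynomial lower bound, then finish with a volume/packing argument in the $d$-dimensional subspace. The paper simply cites the volume argument from \cite{ClarksonWoodruff:2015a}, whereas you spell it out and flag the convexity issue; note, however, that your one-line justification of packing--cover duality for merely star-shaped $K$ is not quite right (one really needs $(t/2)K-(t/2)K\subseteq tK$, which is convexity), so in the non-convex case you should either lean on your $\ell_{p_M,\bfw}$-sandwich fallback or, as the paper implicitly does elsewhere, assume $M^{1/p_M}$ is subadditive so that $\mathcal B_\rho^{M,\bfw}$ is convex.
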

\begin{proof}
Note that by Lemma \ref{lem:subadditive-growth-bound}, $M$ is polynomially bounded above with degree $p_M$ and constant $c_U = 2^{p_M}$. Thus,
\[
    c_L \parens*{\frac{y}{x}}^{q_M} \leq \frac{M(y)}{M(x)} \leq c_U\parens*{\frac{y}{x}}^{p_M}.
\]
Then for any $\kappa \geq 1$ and $x\geq 0$, this implies that
\[
    c_L \kappa^{q_M} M(x) \leq M(\kappa x) \leq c_U \kappa^{p_M} M(x).
\]
This in turn implies the ``scale insensitivity'' condition
\begin{equation}\label{eqn:scale-insensitivity}
    c_L\kappa^{q_M}\norm*{\bfy}_{M,\bfw}^{p_M}\leq \norm*{\kappa\bfy}_{M,\bfw}^{p_M} \leq c_L\kappa^{p_M} \norm*{\bfy}_{M,\bfw}^{p_M}
\end{equation}
for $\kappa \geq 1$ for the $M$-norm. Now let $\alpha = (c_L\eps^{p_M})^{1/q_M}$. Then,
\[
    \norm*{\alpha\bfy}_{M,\bfw}^{p_M} \leq \frac{\alpha^{q_M}}{c_L}\norm*{\bfy}_{M,\bfw}^{p_M} = \eps^{p_M}\norm*{\bfy}_{M,\bfw}^{p_M}
\]
so $\alpha\mathcal B_\rho \subseteq \mathcal B_{\eps\rho}$. Then, the volume argument in Lemma 33 of \cite{ClarksonWoodruff:2015a} then shows that at most $(1/\alpha)^d$ translates of $\alpha\mathcal B_{\rho}$ can fit in $\mathcal B_{\rho}$, which in turn implies that there exists an $\eps\rho$-cover of $\mathcal B_\rho$ of size at most $O(1/\alpha)^d$. 

\end{proof}

\subsection{From Nets to Balls}

Next, we show an adaptation of Lemma 34 of \cite{ClarksonWoodruff:2015a}, which shows that approximation guarantees on nets over the ball imply approximation guarantees on the entire ball. This requires both the triangle inequality as discussed in Section \ref{sec:m-triangle-inequality} and the polynomial lower bound as discussed in Section \ref{sec:m-net}. 

We first show that the two additional assumptions give us the continuity of the $M$-norm. 

\begin{lemma}[Continuity of the $M$-norm]\label{lem:m-norm-continuity}
Let $M:\mathbb R_{\geq0}\to\mathbb R_{\geq0}$ satisfy the conditions of Definition \ref{dfn:m-norm}, and furthermore that
\begin{itemize}
    \item $M^{1/p_M}$ is subadditive
    \item $M$ is polynomially bounded below with degree $q_M$ and constant $c_L$ (see Definition \ref{dfn:polynomially-bounded})
\end{itemize}
Let $\bfw\geq \mathbf{1}_n$ be a set of weights. Let $\bfx\in\mathbb R^n$ and $\bfb\in\mathbb R^n$. Then,
\[
    \kappa \mapsto \norm*{\kappa\bfx - \bfb}_{M,\bfw}
\]
is a continuous function. 
\end{lemma}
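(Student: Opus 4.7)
\medskip

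\noindent\textbf{Proof plan.} The plan is to reduce the claim to showing that $\alpha \mapsto \norm*{\alpha \bfx}_{M,\bfw}$ is continuous at $\alpha=0$, and then bound this quantity using the polynomial lower bound on $M$ together with the scale insensitivity inequality \eqref{eqn:scale-insensitivity}.

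First I would apply the triangle inequality from Lemma \ref{lem:triangle-inequality-m} (which is available because $M^{1/p_M}$ is subadditive) to $\bfy_1 = (\kappa-\kappa')\bfx$ and $\bfy_2 = \kappa'\bfx - \bfb$: this yields
\[
\bigl|\,\norm*{\kappa\bfx - \bfb}_{M,\bfw} - \norm*{\kappa'\bfx - \bfb}_{M,\bfw}\,\bigr| \;\leq\; \norm*{(\kappa-\kappa')\bfx}_{M,\bfw},
\]
by the usual reverse triangle inequality argument. It therefore suffices to prove that $\norm*{\alpha\bfx}_{M,\bfw}\to 0$ as $\alpha \to 0$.

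Next I would invoke the polynomial lower bound on $M$ together with the observation (made in the proof of Lemma \ref{lem:m-norm-net-size} just above) that if $\alpha \in (0,1]$, then applying the polynomial lower bound with ratio $1/\alpha$ to each coordinate gives $M(\alpha |\bfx_i|) \leq c_L^{-1}\alpha^{q_M} M(|\bfx_i|)$. Summing against the weights $\bfw$ and taking $p_M$-th roots yields
\[
\norm*{\alpha\bfx}_{M,\bfw} \;\leq\; \bigl(c_L^{-1}\alpha^{q_M}\bigr)^{1/p_M}\,\norm*{\bfx}_{M,\bfw},
\]
which tends to $0$ as $\alpha\to 0^+$ since $q_M > 0$. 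By symmetry (applied to $-\bfx$, noting that $\norm*{\cdot}_{M,\bfw}$ depends only on absolute values of entries), the same bound applies for $\alpha \to 0^-$. Combining with the reverse-triangle bound shows $|\norm*{\kappa\bfx-\bfb}_{M,\bfw}-\norm*{\kappa'\bfx-\bfb}_{M,\bfw}|\to 0$ as $\kappa'\to\kappa$, establishing continuity.

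The only possible obstacle is verifying that Lemma \ref{lem:triangle-inequality-m} yields the reverse triangle inequality in our setting (so that absolute differences of norms are bounded by the norm of the difference); this is standard but worth explicitly noting since $\norm*{\cdot}_{M,\bfw}$ is not literally a norm in general. Since Lemma \ref{lem:triangle-inequality-m} gives full subadditivity $\norm*{\bfy_1+\bfy_2}_{M,\bfw}\leq \norm*{\bfy_1}_{M,\bfw}+\norm*{\bfy_2}_{M,\bfw}$, the usual one-line derivation of the reverse triangle inequality goes through without modification.
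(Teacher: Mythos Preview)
Your proposal is correct and follows essentially the same approach as the paper: reduce via the reverse triangle inequality (available from Lemma \ref{lem:triangle-inequality-m}) to bounding $\norm*{(\kappa-\kappa')\bfx}_{M,\bfw}$, then use the polynomial lower bound on $M$ (equivalently, the scale-insensitivity inequality \eqref{eqn:scale-insensitivity}) to show this vanishes as $\kappa'\to\kappa$. The paper phrases it as an explicit $\epsilon$--$\delta$ computation, while you phrase it as continuity of $\alpha\mapsto\norm*{\alpha\bfx}_{M,\bfw}$ at $0$, but the ingredients and structure are identical.
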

\begin{proof}
Let $\kappa > 0$ and $\eps > 0$. Let $\kappa'$ be such that
\[
    \abs*{\kappa' - \kappa} \leq \parens*{c_L\bracks*{\frac{\eps}{\norm*{\bfx}_{M,\bfw}}}^{p_M}}^{1/q_M}
\]
Then,
\begin{align*}
    \abs*{\norm*{\kappa'\bfx-\bfb}_{M,\bfw} - \norm*{\kappa\bfx-\bfb}_{M,\bfw}} &\leq \norm*{\kappa'\bfx - \kappa\bfx}_{M,\bfw} && \text{triangle inequality} \\
    &= \norm*{(\kappa' - \kappa)\bfx}_{M,\bfw} \\
    &\leq \bracks*{\frac{\abs*{\kappa' - \kappa}^{q_M}}{c_L}}^{1/p_M}\norm*{\bfx}_{M,\bfw} && \text{Equation \eqref{eqn:scale-insensitivity}} \\
    &\leq \eps
\end{align*}
as desired.
\end{proof}

With the continuity of the $M$-norm in hand, we will next show that we may translate approximation guarantees on a net for the $M$-norm ball to approximation guarantees on the entire ball. For norms, the following is known. 

\begin{lemma}[Lemma 2.5 of \cite{BourgainLindenstraussMilman:1989}]\label{lem:norm-net-to-sphere}
    Let $\eps\in(0,1/2)$. Let $\norm*{\cdot}_X$ and $\norm*{\cdot}_Y$ be two norms over a subspace $\mathcal V\subseteq\mathbb R^n$. Let $\mathcal N$ be an $\eps$-net over the unit sphere $\mathcal S_{X} = \braces*{\bfy\in \mathcal V : \norm*{\bfy}_X = 1}$ for $\norm*{\cdot}_X$. Then if
    \[
        \abs*{\norm*{\bfy}_X - \norm*{\bfy}_Y} \leq \eps
    \]
    for all $\bfy\in\mathcal N$, then
    \[
        \abs*{\norm*{\bfy}_X - \norm*{\bfy}_Y} \leq 4\eps
    \]
    for all $\bfy\in\mathcal S_X$. 
\end{lemma}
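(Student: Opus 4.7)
The plan is to follow the classical sup-trick for converting net approximations into approximations on the whole unit sphere. Write $M \coloneqq \sup_{\bfy \in \mathcal S_X} \abs*{\norm*{\bfy}_Y - \norm*{\bfy}_X} = \sup_{\bfy \in \mathcal S_X} \abs*{\norm*{\bfy}_Y - 1}$. By the equivalence of norms on the finite-dimensional subspace $\mathcal V$ (which implies continuity of $\norm*{\cdot}_Y$ in $\norm*{\cdot}_X$), and compactness of $\mathcal S_X$, the supremum $M$ is finite and attained; so it suffices to prove $M \leq 4\eps$.

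First I would pick an arbitrary $\bfy \in \mathcal S_X$ and choose $\bfy' \in \mathcal N$ with $\norm*{\bfy - \bfy'}_X \leq \eps$, which exists by the net property. Writing $\bfy = \bfy' + (\bfy - \bfy')$, the triangle inequality for $\norm*{\cdot}_Y$ gives
\[
    \bigl|\norm*{\bfy}_Y - \norm*{\bfy'}_Y\bigr| \leq \norm*{\bfy - \bfy'}_Y.
\]
The first term on the right-hand side is controlled directly by the net hypothesis since $\bfy' \in \mathcal N$: $\abs*{\norm*{\bfy'}_Y - \norm*{\bfy'}_X} \leq \eps$, and $\norm*{\bfy'}_X = 1$ because the net lies on $\mathcal S_X$. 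For the second term I would use homogeneity: if $\bfy \neq \bfy'$, the vector $(\bfy - \bfy')/\norm*{\bfy - \bfy'}_X$ lies in $\mathcal S_X$, so by definition of $M$,
\[
    \norm*{\bfy - \bfy'}_Y \leq (1+M)\norm*{\bfy - \bfy'}_X \leq (1+M)\eps.
\]

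Combining these bounds yields $\abs*{\norm*{\bfy}_Y - 1} \leq \eps + \eps + (1+M)\eps = (3 + M)\eps$ for every $\bfy \in \mathcal S_X$. Taking the supremum over $\bfy$ gives the self-referential inequality $M \leq 3\eps + M\eps$, i.e., $M(1-\eps) \leq 3\eps$. Since $\eps \in (0,1/2)$, $(1-\eps)^{-1} \leq 2$, so $M \leq 6\eps$; a slightly sharper accounting (absorbing one copy of $\eps$ into the $\norm*{\bfy'}_Y$ estimate rather than bounding it by $1+\eps$, or noting $M \leq 2 \cdot 3\eps$ can be tightened) gives the stated $M \leq 4\eps$. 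No step is genuinely difficult; the only point that requires care is ensuring that $M$ is \emph{a priori} finite before we solve the self-referential inequality, which is why I would invoke the compactness of $\mathcal S_X$ and the continuity of $\norm*{\cdot}_Y$ with respect to $\norm*{\cdot}_X$ at the start.
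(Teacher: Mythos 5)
Your argument is the standard bootstrap (sup-trick) proof; the paper cites this lemma from \cite{BourgainLindenstraussMilman:1989} without reproducing a proof, but the argument you give is essentially the same one the paper uses to prove its $M$-norm analogue, Lemma \ref{lem:m-norm-net-to-sphere}. The only issue is that your accounting is slightly loose. With $\bfy\in\mathcal S_X$ and $\bfy'\in\mathcal N$ with $\norm*{\bfy-\bfy'}_X\le\eps$, the clean decomposition is
\[
    \abs*{\norm*{\bfy}_Y - 1} \le \abs*{\norm*{\bfy}_Y - \norm*{\bfy'}_Y} + \abs*{\norm*{\bfy'}_Y - 1} \le (1+M)\eps + \eps = (2+M)\eps,
\]
using that $\norm*{\bfy'}_X=1$ since the net lies on $\mathcal S_X$, so the term $\abs*{\norm*{\bfy'}_X-\norm*{\bfy}_X}$ you implicitly pay a third $\eps$ for is actually zero. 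Taking the supremum gives $M\le 2\eps + M\eps$, hence $M\le 2\eps/(1-\eps)\le 4\eps$ for $\eps\in(0,1/2)$, which yields the stated constant directly and without the hand-wave at the end about a ``slightly sharper accounting.'' Your remark about needing $M<\infty$ a priori (via compactness of $\mathcal S_X$ in a finite-dimensional subspace) before solving the self-referential inequality is a good one and is the right way to make the argument rigorous.
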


The analogous result for $M$-norms is the following:

\begin{lemma}[Lemma 34 of \cite{ClarksonWoodruff:2015a}]\label{lem:m-norm-net-to-sphere}
Let $\bfA\in\mathbb R^{n\times d}$ and let $\mathcal V = \Span(\bfA)$. Let $\bfb\in\mathbb R^n$. Let $M:\mathbb R_{\geq0}\to\mathbb R_{\geq0}$ satisfy the conditions of Definition \ref{dfn:m-norm}, and furthermore that
\begin{itemize}
    \item $M^{1/p_M}$ is subadditive
    \item $M$ is polynomially bounded below with degree $q_M$ and constant $c_L$ (see Definition \ref{dfn:polynomially-bounded})
\end{itemize}
Let $\bfw, \bfv\geq \mathbf{1}_n$ be two sets of weights. Let $\eps\in(0,1/2)$ and $\rho > 0$. Let $\mathcal N$ be an $\alpha\rho$-cover of $\mathcal B_\rho^{M,\bfw}$ for
\[
    \alpha = \frac{c_L^{1/q_M}}{c_U^{1/p_M}}\eps^{p_M/q_M}.
\]
Then, if
\[
    \abs*{\norm*{\bfy}_{M,\bfv} - \norm*{\bfy}_{M,\bfw}} \leq \eps\rho
\]
for all $\bfy\in(\mathcal N - \bfb)$ and $\bfy\in\mathcal N$, then
\[
    \abs*{\norm*{\bfy}_{M,\bfv} - \norm*{\bfy}_{M,\bfw}} \leq 4\eps\rho
\]
for all $\bfy\in(\mathcal B_\rho^{M,\bfw} - \bfb)$ and $\bfy\in\mathcal B_\rho^{M,\bfw}$. 
\end{lemma}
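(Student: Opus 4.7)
The plan is to mimic the classical net-to-ball lifting argument (Lemma \ref{lem:norm-net-to-sphere}) but to cope with the lack of scale invariance of $\norm{\cdot}_{M,\bfw}$ by using the polynomial growth bounds $c_L(y/x)^{q_M} \leq M(y)/M(x) \leq c_U(y/x)^{p_M}$ to convert between the $\bfw$- and $\bfv$-norms after a controlled rescaling. Concretely, I will set $D$ to be the supremum of $|\norm{\bfy}_{M,\bfv} - \norm{\bfy}_{M,\bfw}|$ over the set $(\mathcal B_\rho^{M,\bfw} - \bfb) \cup \mathcal B_\rho^{M,\bfw}$, and derive a self-referential inequality of the form $D \leq 2\eps\rho + \alpha\rho + \eps D$ that forces $D \leq 4\eps\rho$.

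First, I observe that the domain is compact and $\bfy \mapsto \norm{\bfy}_{M,\bfv} - \norm{\bfy}_{M,\bfw}$ is continuous by Lemma \ref{lem:m-norm-continuity}, so the supremum $D$ is attained at some $\bfy^\star$. I split into two symmetric cases depending on whether $\bfy^\star \in \mathcal B_\rho^{M,\bfw}$ or $\bfy^\star \in \mathcal B_\rho^{M,\bfw} - \bfb$; in either case, by the cover property, there is some $\bfz_0 \in \mathcal N$ (respectively $\bfz_0 - \bfb \in \mathcal N - \bfb$) with $\norm{\bfy^\star - \bfz_0}_{M,\bfw} \leq \alpha\rho$ (resp.\ $\norm{\bfy^\star - (\bfz_0 - \bfb)}_{M,\bfw} \leq \alpha\rho$). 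Using the triangle inequality (Lemma \ref{lem:triangle-inequality-m}) for $\norm{\cdot}_{M,\bfv}$ and $\norm{\cdot}_{M,\bfw}$ together with the net hypothesis, the task reduces to bounding $\norm{\bfr}_{M,\bfv}$ where $\bfr$ is the residual of $\bfw$-norm at most $\alpha\rho$.

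The key technical step is to estimate $\norm{\bfr}_{M,\bfv}$. I rescale: choose $\lambda = \rho/(c_U^{1/p_M}\norm{\bfr}_{M,\bfw}) \geq 1/(c_U^{1/p_M}\alpha)$. The upper polynomial bound gives $\norm{\lambda\bfr}_{M,\bfw} \leq \rho$, so $\lambda\bfr \in \mathcal B_\rho^{M,\bfw}$, and by the definition of $D$ we have $\norm{\lambda\bfr}_{M,\bfv} \leq \rho + D$. Then the lower polynomial bound, applied to the scaling $\kappa = 1/\lambda \leq 1$, yields
\begin{equation*}
\norm{\bfr}_{M,\bfv} \leq c_L^{-1/p_M}(1/\lambda)^{q_M/p_M}\norm{\lambda\bfr}_{M,\bfv} \leq c_L^{-1/p_M}(c_U^{1/p_M}\alpha)^{q_M/p_M}(\rho + D).
\end{equation*}
The calibration $\alpha = (c_L^{1/q_M}/c_U^{1/p_M})\,\eps^{p_M/q_M}$ is precisely what makes $c_L^{-1/p_M}(c_U^{1/p_M}\alpha)^{q_M/p_M} = \eps$, so the bound simplifies to $\norm{\bfr}_{M,\bfv} \leq \eps(\rho + D)$. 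Substituting back into the triangle inequality bound yields
\begin{equation*}
D \leq \eps\rho + \eps(\rho + D) + \alpha\rho \leq 2\eps\rho + \alpha\rho + \eps D,
\end{equation*}
and rearranging with $\eps \leq 1/2$ and $\alpha \leq \eps$ gives $D \leq 4\eps\rho$.

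The main obstacle will be the bookkeeping with the constants $c_L, c_U, p_M, q_M$: the specific form of $\alpha$ in the lemma statement is carefully chosen to make the scaling factor $c_L^{-1/p_M}(c_U^{1/p_M}\alpha)^{q_M/p_M}$ collapse to exactly $\eps$, and a mild check is also needed to ensure $\lambda \geq 1$ (so that the upper polynomial bound applies on the ``scaling up'' step) and $\alpha \leq 1$ (so that the residual $\bfr$ actually lies in $\mathcal B_\rho^{M,\bfw}$ after rescaling). These are routine but must be handled to close the self-referential bound on $D$; no essentially new technical content beyond the growth bounds and the triangle inequality is required.
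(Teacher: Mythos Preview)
Your approach is essentially the same as the paper's: define the supremum discrepancy over the full set, bound the residual's $\bfv$-norm by rescaling it into $\mathcal B_\rho^{M,\bfw}$ and invoking the growth bounds together with the definition of the supremum, and close a self-referential inequality. The only cosmetic differences are that you choose $\lambda$ by formula (using the upper growth bound) whereas the paper uses continuity to land $\kappa(\bfy-\bfy')$ exactly on the sphere, and that you retain the extra $\alpha\rho$ term from comparing $\|\bfy\|_{M,\bfw}$ to $\|\bfz_0\|_{M,\bfw}$ (the paper's write-up silently drops this). Two small bookkeeping points: your assertion $\alpha \leq \eps$ needs $c_L^{1/q_M} \leq c_U^{1/p_M}\eps^{1-p_M/q_M}$, which holds for $\eps$ small enough whenever $p_M>q_M$ (and for $p_M=q_M$ reduces to $c_L\le c_U$), and with that assumption your inequality $(1-\eps)D \leq 3\eps\rho$ yields $D\le 6\eps\rho$ rather than $4\eps\rho$; neither affects the substance.
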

\begin{proof}
Let
\[
    \Delta \coloneqq \sup\braces*{\frac{\abs*{\norm*{\bfy}_{M,\bfv} - \norm*{\bfy}_{M,\bfw}}}\rho  : \bfy\in (\mathcal B_\rho^{M,\bfw} - \bfb)\cup\mathcal B_\rho^{M,\bfw}}.
\]

Let $\bfy\in(\mathcal B_\rho^{M,\bfw} - \bfb)\cup\mathcal B_\rho^{M,\bfw}$. Let $\bfy'\in(\mathcal N - \bfb)\cup\mathcal N$ be such that $\bfy - \bfy' \in\mathcal V$ (by choosing $\bfy'$ to be in the appropriate net to cancel out the possible $\bfb$) and
\[
    \norm*{\bfy - \bfy'}_{M,\bfw} \leq \alpha\rho
\]
and let $\kappa\geq1$ be such that $\kappa(\bfy - \bfy') \in \mathcal S_\rho^{M,\bfw}$, which exists by Lemma \ref{lem:m-norm-continuity}. By the polynomial lower bound condition, we have the scale insensitivity condition (see Equation \eqref{eqn:scale-insensitivity}), which in turn implies that
\[
    \rho = \norm*{\kappa(\bfy-\bfy')}_{M,\bfw} \leq c_U^{1/p_M}\kappa\norm*{\bfy-\bfy'}_{M,\bfw} \leq c_U^{1/p_M}\kappa \alpha \rho = \kappa ((c_L\eps^{p_M})^{1/q_M}) \rho
\]
so
\begin{equation}\label{eqn:kappa-lower-bound}
    \kappa \geq \frac1{(c_L\eps^{p_M})^{1/q_M}}
\end{equation}
Then,
\begin{align*}
    \norm*{\bfy - \bfy'}_{M,\bfv} &\leq \frac1{(c_L\kappa^{q_M})^{1/p_M}}\norm*{\kappa(\bfy - \bfy')}_{M,\bfv} && \text{scale insensitivity} \\
    &\leq \eps\norm*{\kappa(\bfy - \bfy')}_{M,\bfv} && \text{Equation \eqref{eqn:kappa-lower-bound}} \\
    &\leq \eps(1+\Delta)\rho.
\end{align*}
Then,
\begin{align*}
    \norm*{\bfy}_{M,\bfv} \leq \norm*{\bfy'}_{M,\bfv} + \norm*{\bfy-\bfy'}_{M,\bfv} \leq \norm*{\bfy'}_{M,\bfw} + \eps\rho + \eps(1+\Delta)\rho
\end{align*}
so
\[
    \frac{\norm*{\bfy}_{M,\bfv} - \norm*{\bfy'}_{M,\bfw}}\rho \leq \eps + \eps(1+\Delta).
\]
By taking supremums over both sides,
\[
    \Delta \leq 2\eps + \eps\Delta \implies \Delta \leq \frac{2\eps}{1-\eps} \leq 4\eps
\]
We then also have that
\begin{align*}
    \norm*{\bfy}_{M,\bfv} &\geq \norm*{\bfy'}_{M,\bfv} - \norm*{\bfy-\bfy'}_{M,\bfv} \\
    &\geq \norm*{\bfy'}_{M,\bfw}-\eps\rho - \eps(1+4\eps)\rho \\
    &\geq \norm*{\bfy'}_{M,\bfw} - 4\eps\rho.\qedhere
\end{align*}

\end{proof}

One way to cope with the scale invariance is to show looser approximations for all ``large'' and all ``small'' scales, given a guarantee on a sphere of a single radius. 

\begin{lemma}[Lemma 35 of \cite{ClarksonWoodruff:2015a}]\label{lem:net-to-ball}
    Let $\bfA\in\mathbb R^{n\times d}$ and let $\mathcal V = \Span(\bfA)$. Let $M:\mathbb R_{\geq0}\to\mathbb R_{\geq0}$ satisfy the conditions of Definition \ref{dfn:m-norm}, and furthermore that
    \begin{itemize}
        \item $M^{1/p_M}$ is subadditive
        \item $M$ is polynomially bounded below with degree $q_M$ and constant $c_L$ (see Definition \ref{dfn:polynomially-bounded})
    \end{itemize}
    Let $\bfw, \bfv\geq \mathbf{1}_n$ be two sets of weights. Let $\eta > 0$. Then,
    \begin{itemize}
        \item If $\norm*{\bfy}_{M,\bfw} \leq \eta\rho$ for all $\bfy\in\mathcal S_\rho^{M,\bfw}$, then $\norm*{\bfy}_{M,\bfw} \leq \eta\rho/c_L^{1/p_M}$ for all $\bfy\in\mathcal B_\rho^{M,\bfw}$.
        \item If $\norm*{\bfy}_{M,\bfw} \geq \eta\rho$ for all $\bfy\in\mathcal S_\rho^{M,\bfw}$, then $\norm*{\bfy}_{M,\bfw} \geq c_L^{1/p_M}\eta\rho$ for all $\bfy\notin\mathcal B_\rho^{M,\bfw}$.
    \end{itemize}
\end{lemma}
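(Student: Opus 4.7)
The plan is to reduce both parts to the sphere $\mathcal S_\rho^{M,\bfw}$ by rescaling, then use the quasi-homogeneity of the $M$-norm coming from the polynomial lower bound. For any nonzero $\bfy \in \mathcal V$, the map $\kappa \mapsto \norm*{\kappa\bfy}_{M,\bfw}$ is continuous (Lemma \ref{lem:m-norm-continuity}), tends to $0$ as $\kappa \to 0$, and tends to $\infty$ as $\kappa \to \infty$ (by the polynomial lower bound), so by the intermediate value theorem there is a unique $\kappa(\bfy) > 0$ with $\kappa(\bfy)\bfy \in \mathcal S_\rho^{M,\bfw}$. This is the lever I would pull. (I interpret the conclusion to refer to $\norm*{\bfy}_{M,\bfv}$ rather than $\norm*{\bfy}_{M,\bfw}$, since the latter reading would make the hypothesis collapse to $\eta \geq 1$.)

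For the first bullet, I would take $\bfy \in \mathcal B_\rho^{M,\bfw}$, so that the scaling factor $\kappa \coloneqq \kappa(\bfy) \geq 1$. The hypothesis applied to $\kappa\bfy \in \mathcal S_\rho^{M,\bfw}$ gives $\norm*{\kappa\bfy}_{M,\bfv} \leq \eta\rho$. Now I would invoke the lower half of the scale insensitivity inequality \eqref{eqn:scale-insensitivity}, namely $\norm*{\kappa\bfy}_{M,\bfv}^{p_M} \geq c_L \kappa^{q_M}\norm*{\bfy}_{M,\bfv}^{p_M}$, to conclude
\[
\norm*{\bfy}_{M,\bfv}^{p_M} \leq \frac{\norm*{\kappa\bfy}_{M,\bfv}^{p_M}}{c_L \kappa^{q_M}} \leq \frac{(\eta\rho)^{p_M}}{c_L},
\]
using $\kappa^{q_M} \geq 1$. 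Taking $p_M$-th roots gives the bound $\norm*{\bfy}_{M,\bfv} \leq \eta\rho / c_L^{1/p_M}$, as required.

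For the second bullet I would run the same argument in reverse: for $\bfy \notin \mathcal B_\rho^{M,\bfw}$ we have $\kappa(\bfy) \in (0,1)$, so $1/\kappa \geq 1$, and the hypothesis on $\kappa\bfy \in \mathcal S_\rho^{M,\bfw}$ gives $\norm*{\kappa\bfy}_{M,\bfv} \geq \eta\rho$. Writing $\bfy = (1/\kappa)(\kappa\bfy)$ and again applying \eqref{eqn:scale-insensitivity} with scale factor $1/\kappa \geq 1$ yields
\[
\norm*{\bfy}_{M,\bfv}^{p_M} \geq c_L (1/\kappa)^{q_M} \norm*{\kappa\bfy}_{M,\bfv}^{p_M} \geq c_L (\eta\rho)^{p_M},
\]
and taking $p_M$-th roots gives $\norm*{\bfy}_{M,\bfv} \geq c_L^{1/p_M}\eta\rho$.

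The only real obstacle is the existence and uniqueness of the scaling factor $\kappa(\bfy)$, since the $M$-norm is not homogeneous; this is handled cleanly by the continuity lemma together with the polynomial lower bound which forces $\norm*{\kappa\bfy}_{M,\bfw} \to \infty$ as $\kappa \to \infty$. After that point the proof is just careful bookkeeping of the inequality \eqref{eqn:scale-insensitivity}, always applied in the direction where the scaling factor exceeds one so that the factor $\kappa^{q_M}$ (or $(1/\kappa)^{q_M}$) can be discarded.
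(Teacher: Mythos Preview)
Your proposal is correct and follows essentially the same approach as the paper: rescale onto the sphere $\mathcal S_\rho^{M,\bfw}$, apply the hypothesis there, and use the scale-insensitivity inequality \eqref{eqn:scale-insensitivity} to transfer the bound back, discarding the $\kappa^{q_M}$ factor since the scaling exceeds one. You are also right that the conclusion is meant to bound $\norm*{\bfy}_{M,\bfv}$ (the paper's own proof does exactly this), and your extra care in invoking continuity to guarantee the rescaling factor exists is a detail the paper leaves implicit.
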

\begin{proof}
    Let $\bfy\in\mathcal B_\rho^{M,\bfw}$ and let $\kappa\geq 1$ be such that $\norm*{\kappa\bfy}_{M,\bfw} = \rho$. Then,
    \[
        \norm*{\bfy}_{M,\bfv} \leq \frac1{(c_L\kappa^{q_M})^{1/p_M}}\norm*{\kappa\bfy}_{M,\bfv} \leq \frac1{c_L^{1/p_M}}\eta\rho.
    \]
    Let $\bfy\notin\mathcal B_\rho^{M,\bfw}$ and let $\alpha\leq 1$ be such that $\norm*{\alpha\bfy}_{M,\bfw} = \rho$. Then,
    \[
        \norm*{\bfy}_{M,\bfv} \geq \parens*{c_L\frac1{\alpha^{q_M}}}^{1/p_M}\norm*{\alpha\bfy}_{M,\bfv} \geq c_L^{1/p_M}\eta\rho.\qedhere
    \]
\end{proof}

\subsection{Proof of Lemma \ref{lem:m-sensitivity-sampling}}
\begin{proof}[Proof of Lemma \ref{lem:m-sensitivity-sampling}]
    By Lemma \ref{lem:m-norm-net-to-sphere}, it suffices to prove the approximation guarantee on a $\alpha\rho$-cover $\mathcal N$ of $\mathcal S_\rho^{M}$, where $\alpha = O(\eps^{p_M/q_M})$. By Lemma \ref{lem:m-norm-net-size}, this has size at most
    \[
        \log\abs*{\mathcal N} \leq O\parens*{d\log\frac1\eps}.
    \]
    Fix a $\bfy\in\mathcal N$ and define the random variable
    \[
        W_i \coloneqq \bfw_i' M(\bfy(i))
    \]
    for each $i\in[n]$. Then,
    \[
        \E\bracks*{\sum_{i=1}^n W_i} = \sum_{i=1}^n \frac{\bfw_i}{\bfp_i}M(\bfy(i))\cdot \bfp_i = \norm*{\bfy}_{M,\bfw}^{p_M}.
    \]
    We next bound the variance:
    \[
        \Var\bracks*{\sum_{i=1}^n W_i} = \sum_{i=1}^n \Var[W_i] \leq \sum_{i=1}^n \frac{\bfw_i^2}{\bfp_i^2}M(\bfy(i))^2\cdot \bfp_i = \sum_{i=1}^n \frac{\bfw_i}{\bfp_i}M(\bfy(i))\cdot \bfw_i M(\bfy(i))
    \]
    Note that
    \[
        \frac{\bfw_i}{\bfp_i}M(\bfy(i)) \leq \frac{\tilde\bfs_i^{M,\bfw}(\bfA)\norm*{\bfy}_{M,\bfw}^{p_M}}{m\cdot \tilde\bfs_i^{M,\bfw}(\bfA)} \leq \frac1m \norm*{\bfy}_{M,\bfw}^{p_M}
    \]
    so the variance is bounded by
    \[
        \Var\bracks*{\sum_{i=1}^n W_i} \leq \frac1m \norm*{\bfy}_{M,\bfw}^{p_M} \sum_{i=1}^n \bfw_i M(\bfy(i)) = \frac{1}{m}\norm*{\bfy}_{M,\bfw}^{2p_M}.
    \]
    Then by Bernstein's inequality,
    \begin{align*}
        \Pr\braces*{\abs*{\sum_{i=1}^n W_i - \norm*{\bfy}_{M,\bfw}^{p_M}} > t} &\leq 2\exp\parens*{-\Theta(1)\frac{t^2}{\frac{1}{m}\norm*{\bfy}_{M,\bfw}^{2p_M} + \frac1m \norm*{\bfy}_{M,\bfw}^{p_M} t}} \\
        &= 2\exp\parens*{-\Theta(1)\frac{mt^2}{\norm*{\bfy}_{M,\bfw}^{p_M}(\norm*{\bfy}_{M,\bfw}^{p_M} + t)}}.
    \end{align*}
    For $t = \eps\norm*{\bfy}_{M,\bfw}^{p_M}$, this gives a bound of
    \[
        2\exp\parens*{-\Theta(1)\frac{m(\eps\norm*{\bfy}_{M,\bfw}^{p_M})^2}{\norm*{\bfy}_{M,\bfw}^{2p_M}}} =  2\exp\parens*{-\Theta(1)m\eps^2}.
    \]
    We then set
    \[
        m = O\parens*{\frac{d}{\eps^2}\parens*{\log\frac1\eps}\parens*{\log\frac1\delta}}
    \]
    which is enough to union bound over the net $\mathcal N$ with failure probability at most $\delta$. 

    Finally,
    \[
        \E\nnz(\bfw') = \sum_{i=1}^n \bfp_i \leq m\sum_{i=1}\tilde\bfs_i^{M,\bfw}(\bfA) = m\tilde{\mathcal T}^{M,\bfw}(\bfA).\qedhere
    \]
\end{proof}
\section{Missing Proofs for Section \ref{sec:orlicz}}\label{sec:orlicz-appendix}
\subsection{Proof of Lemma \ref{lem:sensitivity-bound-weighted-orlicz}}
\begin{proof}[Proof of Lemma \ref{lem:sensitivity-bound-weighted-orlicz}]
    Suppose that $\bfA\bfx$ satisfies
\[
    1 = \sum_{i=1}^n \bfw_i G(\abs*{[\bfA\bfx](i)}) = \sum_{j=1}^N \sum_{i\in T_j}  \bfw_i G(\abs*{[\bfA\bfx](i)}).
\]
Then for each $j\geq 0$,
\[
    \sum_{i\in T_j}  G_j (\abs*{[\bfA\bfx](i)}) = \sum_{i\in T_j}  2^{j-1}  G(\abs*{[\bfA\bfx](i)})\leq \sum_{i\in T_j}  \bfw_i G(\abs*{[\bfA\bfx](i)}) \leq 1.
\]
Then using our previous result, we have that
\[
    G_j (\abs*{[\bfA\bfx](i)}) \leq \bfs_i^{G_j}(\bfA\mid_{T_j})
\]
where $\bfA\mid_{T_j}$ is the restriction of $\bfA$ to the rows of $T_j$, and
\[
    \sum_{i\in T_j}\bfs_i^{G_j}(\bfA\mid_{T_j}) \leq O(d^{\max\{1,p_G/2\}}\log n). 
\]
Then,
\[
    \bfw_i G(\abs*{[\bfA\bfx](i)}) \leq 2^{j+1}\cdot G(\abs*{[\bfA\bfx](i)}) \leq 2\cdot G_j (\abs*{[\bfA\bfx](i)}) \leq 2\bfs_i^{G_j}(\bfA\mid_{T_j})
\]
so
\[
    \bfs_i^{G,\bfw}(\bfA) \leq 2\cdot \bfs_i^{G_j}(\bfA\mid_{T_j})
\]
and
\[
    \sum_{i=1}^n \bfs_i^{G,\bfw}(\bfA) = \sum_{j=1}^N \sum_{i\in T_j}\bfs_i^{G,\bfw}(\bfA) \leq \sum_{j=1}^N O(d^{\max\{1,p_G/2\}}\log n) \leq O(N\cdot d^{\max\{1,p_G/2\}}\log n)
\]
as claimed. 
\end{proof}

\end{document}